\numberwithin{equation}{section}
\crefname{certprob}{Problem}{Problems}
\crefname{equation}{}{}
\newtheorem{theorem}{Theorem}[section]
\newtheorem{lemma}[theorem]{Lemma}
\newtheorem{definition}[theorem]{Definition}
\newtheorem{remark}[theorem]{Remark}
\newtheorem*{certprob}{Certificate Problem}
\renewenvironment{proof}{\noindent {\bf Proof.} }{\endprf\par}
\def \endprf{\hfill {\vrule height6pt width6pt depth0pt}\medskip}
\renewcommand{\mathbf}{\boldsymbol} 
\newcommand{\mb}{\mathbf}
\newcommand{\mf}{\mathfrak}
\newcommand{\bb}{\mathbb}
\newcommand{\R}{\bb R}
\newcommand{\Z}{\bb Z}
\newcommand{\N}{\bb N}
\newcommand{\indicator}[1]{\mathbbm 1_{#1}}
\newcommand{\paren}[1]{ \left( #1 \right) }
\newcommand{\floor}[1]{\left\lfloor #1 \right\rfloor}
\newcommand{\subalign}[1]{%
  \vcenter{%
    \Let@ \restore@math@cr \default@tag
    \baselineskip\fontdimen10 \scriptfont\tw@
    \advance\baselineskip\fontdimen12 \scriptfont\tw@
    \lineskip\thr@@\fontdimen8 \scriptfont\thr@@
    \lineskiplimit\lineskip
    \ialign{\hfil$\m@th\scriptstyle##$&$\m@th\scriptstyle{}##$\crcr
      #1\crcr
    }%
  }
}
\newcommand\widecheck[1]{%
\savestack{\tmpbox}{\stretchto{%
  \scaleto{%
    \scalerel*[\widthof{\ensuremath{#1}}]{\kern-.6pt\bigwedge\kern-.6pt}%
    {\rule[-\textheight/2]{1ex}{\textheight}}%
  }{\textheight}%
}{0.5ex}}%
\stackon[1pt]{#1}{\scalebox{-1}{\tmpbox}}%
}
\newcommand{\fco}{\text{\FourClowerOpen}}
\author{%
  Tingran Wang\thanks{Corresponding author: \href{mailto:tw2579@columbia.edu}{\color{black}tw2579@columbia.edu}}\,\,\,\thanks{Department of Electrical Engineering, Columbia
  University}\,\,\,\thanks{Data Science Institute, Columbia University} \qquad %
  Sam Buchanan\footnotemark[2]\,\,\,\footnotemark[3]\,\,\, \qquad %
  Dar Gilboa\thanks{{Center for Brain Science, Harvard University}} \qquad %
  John Wright\footnotemark[2]\,\,\,\footnotemark[3]\,\,\,\thanks{Department of Applied Physics and Applied Mathematics, Columbia University}
}
\begin{document}
\title{Deep Networks Provably Classify Data on Curves}
\maketitle

\begin{abstract} 
	Data with low-dimensional nonlinear structure are ubiquitous in engineering and scientific problems. %
We study a model problem with such structure---%
a binary classification task that uses a deep fully-connected neural network to classify data drawn from two disjoint smooth curves on the unit sphere. %
Aside from mild regularity conditions, we place no restrictions on the configuration of the curves. %
We prove that when (i) the network depth is large relative to certain geometric %
properties that set the difficulty of the problem %
and (ii) the network width and number of samples are polynomial in the depth, randomly-initialized gradient descent quickly learns to correctly classify all points on the two curves with high probability. %
To our knowledge, this is the first generalization guarantee for deep networks with nonlinear data that depends only on intrinsic data properties.
Our analysis proceeds by a reduction to dynamics in the neural tangent kernel (NTK) regime, where the network depth plays the role of a fitting resource in solving the classification problem. In particular, via fine-grained control of the decay properties of the NTK, we demonstrate that when the network is sufficiently deep,
the NTK can be locally approximated by a translationally invariant operator on the manifolds and stably inverted over smooth functions, which guarantees convergence and generalization.

\end{abstract}

\etocdepthtag.toc{mtchapter}
\etocsettagdepth{mtchapter}{subsection}
\etocsettagdepth{mtappendix}{none}

\section{Introduction}
\label{sec:intro}

In applied machine learning, engineering, and the sciences, we are frequently confronted with the
problem of identifying low-dimensional structure in high-dimensional data. In certain
well-structured data sets, identifying a good low-dimensional model is the principal task:
examples include convolutional sparse models in microscopy \cite{Cheung2020-ft} and neuroscience
\cite{Ekanadham2011-aa,Chung2017-qc}, and low-rank models in collaborative filtering
\cite{Bell2007-va,Koren2009-jh}. Even more complicated datasets from problems such as image
classification exhibit some form of low-dimensionality: recent experiments estimate the effective
dimension of CIFAR-10 as 26 and the effective dimension of ImageNet as 43 \cite{pope-2021-the}.
The variability in these datasets can be thought of as comprising two parts: a ``probabilistic''
variability induced by the distribution of geometries associated with a given class, and a
``geometric'' variability associated with physical nuisances such as pose and illumination. The
former is challenging to model analytically; virtually all progress on this issue has come through
the introduction of large datasets and high-capacity learning machines. The latter induces a much
cleaner analytical structure: transformations of a given image lie near a low-dimensional
submanifold of the image space (\Cref{fig:two_man_two_curve}). The celebrated successes of convolutional
neural networks in image classification seem to derive from their ability to simultaneously handle
both types of variability. Studying how neural networks compute with data lying near a low-dimensional manifold is an
essential step towards understanding how neural networks achieve invariance to continuous
transformations of the image domain, and towards the longer term goal of developing a more
comprehensive mathematical understanding of how neural networks compute with real data. At the
same time, in some scientific and engineering problems, classifying manifold-structured data {\em
is} the goal---one example is in gravitational wave astronomy \cite{gabbard2018matching,
gebhard2019convolutional}, where the goal is to distinguish true events from noise, and the events
are generated by relatively simple physical systems with only a few degrees of freedom.

\newcommand*{\subfigsize}{0.20}
\begin{figure}[!htb]
  \begin{subfigure}[b]{0.5\textwidth}
			\centering
      \includegraphics[width=\textwidth]{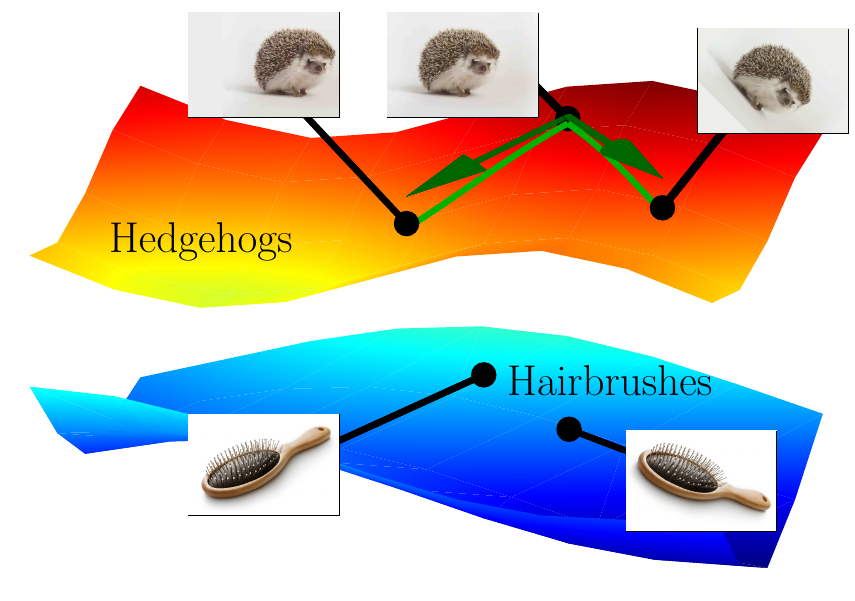}
\label{fig:two_man}
	\end{subfigure}%
  \begin{subfigure}[b]{0.5\textwidth}
			\centering
      \includegraphics[width=\textwidth]{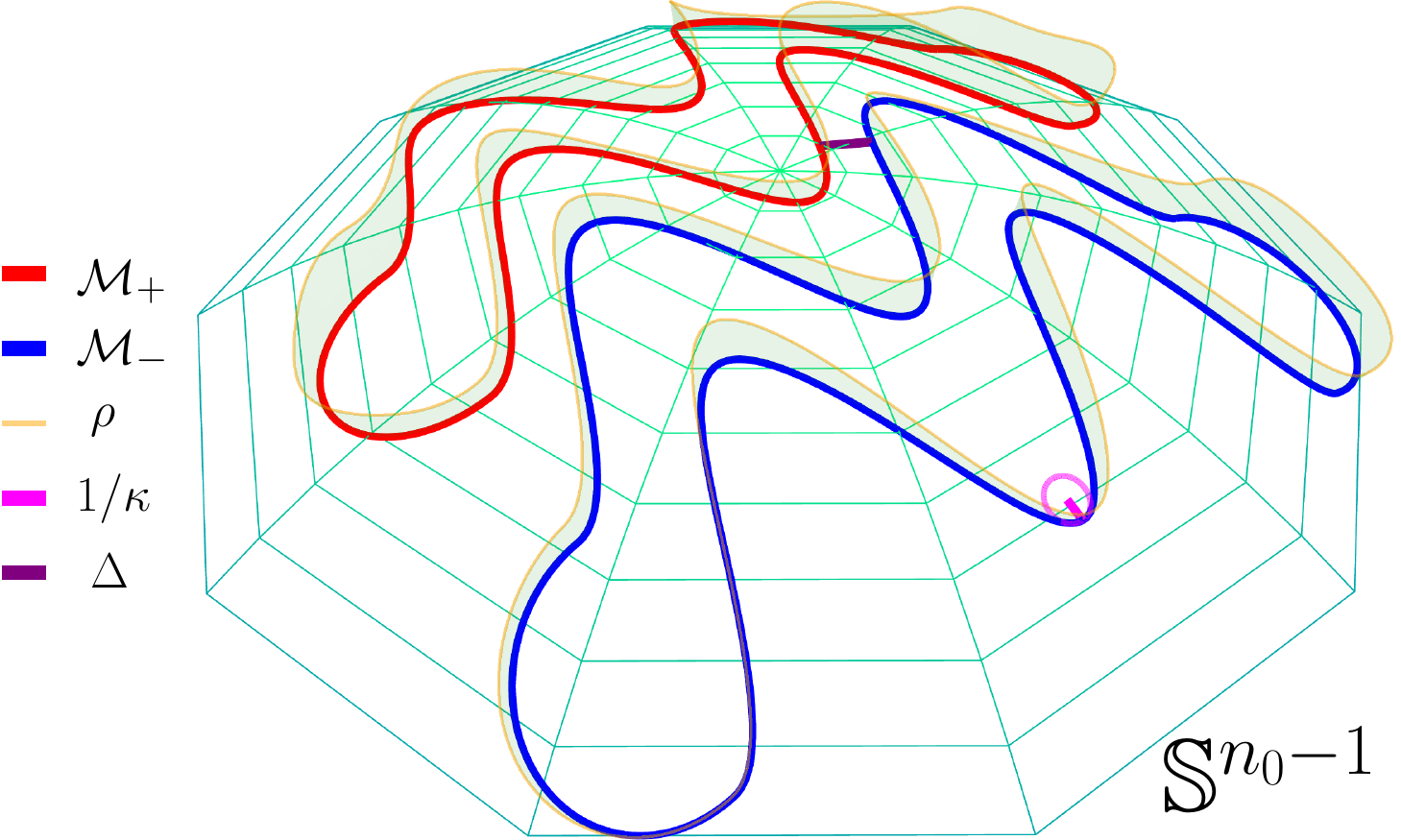}
		\label{fig:two_curve}
	\end{subfigure}
  \caption{\textbf{Low-dimensional structure in image data and the two curves problem.
    }\textit{Left:} Manifold structure in natural images arises due to invariance of the label to
    continuous domain transformations such as translations and rotations. \textit{Right:} The two
    curve problem. We train a neural network to classify points sampled from a density $\rho$ on
    the submanifolds $\mc M_+, \mc M_-$ of the unit sphere. We illustrate the angle injectivity radius $\Delta$ and curvature $1/\kappa$. These parameters help to control the difficulty
    of the problem: problems with smaller separation and larger curvature are more readily
    separated with deeper networks.
  }
	\label{fig:two_man_two_curve}
\end{figure}

Motivated by these long term goals, in this paper we study the {\em multiple manifold problem} (\Cref{fig:two_man_two_curve}), a mathematical model problem in which we are presented with a finite set of labeled samples lying on disjoint low-dimensional submanifolds of a high-dimensional space, and the goal is to correctly classify {\em every} point on each of the submanifolds---a strong form of generalization. 
The central mathematical question is how the {\em structure} of the data (properties of the manifolds such as dimension, curvature, and separation) influences the {\em resources} (data samples, and network depth and width) required to guarantee generalization. 
Our main contribution is the first end-to-end analysis of this problem for a nontrivial class of manifolds: one-dimensional smooth curves that are non-intersecting, cusp-free, and without antipodal pairs of points.
Subject to these constraints, the curves can be oriented essentially arbitrarily (say, non-linearly-separably, as in \Cref{fig:two_man_two_curve}), and the hypotheses of our results depend only on architectural resources and intrinsic geometric properties of the data.
To our knowledge, this is the first generalization result for training a deep nonlinear network to classify structured data that makes no a-priori assumptions about the representation capacity of the network or about properties of the network after training. 

Our analysis proceeds in the neural tangent kernel (NTK) regime of training, where the network is wide enough to guarantee that gradient descent can make large changes in the network output while making relatively small changes to the network weights. 
This approach is inspired by the recent work \cite{Buchanan2020-er}, which reduces the analysis of generalization in the one-dimensional multiple manifold problem to an auxiliary problem called the \textit{certificate problem}. 
Solving the certificate problem amounts to proving that the target label function lies near the stable range of the NTK. %
The existence of certificates (and more generally, the conditions under which practically-trained neural networks can fit structured data) is open, except for a few very simple geometries which we will review below---in particular, \cite{Buchanan2020-er} leaves this question completely open. 
Our technical contribution is to show that setting the network depth sufficiently large relative to intrinsic properties of the data guarantees the existence of a certificate (\Cref{thm:certificate_main}), resolving the one-dimensional case of the multiple manifold problem for a broad class of curves (\Cref{thm:main_formal}). 
This leads in turn to a novel perspective on the role of the network depth as a \textit{fitting resource} in the classification problem, which is inaccessible to shallow networks.

\subsection{Related Work}
\label{sec:related_work}

\paragraph{Deep networks and low dimensional structure.}
Modern applications of deep neural networks include numerous examples of low-dimensional manifold structure, including pose and illumination variations in image classification \cite{belhumeur1998set, donoho2005image},
as well as detection of structured signals such as electrocardiograms \cite{xiong2016stacked, Antczak2018-sv}, gravitational waves \cite{gabbard2018matching, gebhard2019convolutional}, audio signals \cite{laufer2016semi}, and solutions to the diffusion equation \cite{kadeethum2020physics}. 
Conventionally, to compute with such data one might begin by extracting a low-dimensional representation using nonlinear dimensionality reduction (``manifold learning'') algorithms  \cite{Saul2020-ny,Tenenbaum2000-ct,Agrawal2021-ob,Belkin2003-xz,Roweis2000-sh,Cayton2006-lg,Wang2015-hh}. For supervised tasks, there is also theoretical work on kernel regression over manifolds \cite{Fuselier2012-ak,Altschuler2018-qf,Hangelbroek2010-ls,McRae2020-nh}. 
These results rely on very general Sobolev embedding theorems, which are not precise enough to specify the interplay between regularity of the kernel and properties of the data need to obtain concrete resource tradeoffs in the two curve problem. 
There is also a literature which studies the resource requirements associated with approximating functions over low-dimensional manifolds  \cite{Chen2019-eq,Cloninger2020-ov,Schmidt-Hieber2019-jt,Basri2017-vn}: a typical result is that for a sufficiently smooth function there {\em exists} an approximating network whose complexity is controlled by intrinsic properties such as the dimension. 
In contrast, we seek \textit{algorithmic} guarantees that prove that we can efficiently train deep neural networks for tasks with low-dimensional structure. This requires us to grapple with how the geometry of the data influences the dynamics of optimization methods.

\paragraph{Neural networks and structured data---theory?}
Spurred by insights in asymptotic {infinite width} \cite{Jacot2018-dv, Mei2018-ck} and
non-asymptotic \cite{Allen-Zhu2018-wz, Du2018-pt} settings, there has been a surge of recent
theoretical work aimed at establishing guarantees for neural network training and generalization
\cite{Allen-Zhu2018-rf,arora2019fine,Ji2019-cl,Oymak2019-pn,Cao2019-cm,Suzuki2020-zq,Li2020-oj,Allen-Zhu2020-jn}.
Here, our interest is in end-to-end generalization guarantees, %
which are scarce in the literature: those that exist pertain to unstructured data with general targets, in the regression setting \cite{Liang2019-mo, Mei2021-qg, Ghorbani2020-ef, ghorbani2019linearized}, and those that involve low-dimensional structure consider only linear structure (i.e., spheres) \cite{Ghorbani2020-ef}. 
For less general targets, there exist numerous works that pertain to the teacher-student setting,
where the target is implemented by a neural network of suitable architecture with unstructured
inputs \cite{Allen-Zhu2018-id, Allen-Zhu2020-jn, Li2020-oj, Goldt2019-rp, Zhou2021-jf}.  Although
adding this extra structure to the target function allows one to establish interesting separations
in terms of e.g.\ sample complexity
\cite{Li2020-oj,Yehudai2019-ro,Ghorbani2019-ji,Refinetti2021-rh} relative to the preceding
analyses, which proceed in the ``kernel regime'', we leverage kernel regime techniques in our
present work because they allow us to study the interactions between deep networks and data with
nonlinear low-dimensional structure, which is not possible with existing teacher-student tools. %
Relaxing slightly from results with end-to-end guarantees, there exist `conditional' guarantees which require the existence of an efficient representation of the target mapping in terms of a certain RKHS associated to the neural network \cite{Ji2019-cl,nitanda2020optimal,Buchanan2020-er,Chen2021-mv}. In contrast, our present work obtains unconditional, end-to-end generalization guarantees for a nontrivial class of low-dimensional data geometries.

\section{Problem Formulation}

\paragraph{Notation.}
We use bold notation $\vx$, $\vA$ for vectors and matrices/operators (respectively).  
We write $\norm{\vx}_p = (\sum_{i=1}^n \abs{x_i}^p)^{1/p}$
for the $\ell^p$ norm of $\vx$, $\ip{\vx}{\vy} = \sum_{i=1}^n x_i y_i$ for the euclidean
inner product, and 
for a measure space $(X,\mu)$, $\norm{g}_{L^p_\mu}
= (\int_{X} \abs{g(x)}^p\diff \mu(x) )^{1/p}$ denotes the $L^p_\mu$ norm of a function $g: X \to
\bbR$. 
The unit sphere in $\bbR^n$ is denoted $\Sph^{n-1}$,
and $\angle(\vx,\vy) = \acos(\ip{\vx}{\vy})$
denotes the angle between unit vectors.
For a kernel $K : X \times X \to \bbR$, we write $\vK_\mu[g](x) = \int_{X} K(x, x')
g(x')\diff \mu(x') $ for the action of the associated Fredholm integral operator; an omitted
subscript denotes Lebesgue measure. 
We write $\vP_S$ to denote the orthogonal projection operator onto a (closed)
subspace $S$.
Full notation is provided in \Cref{sec:notation_general}.

\subsection{\texorpdfstring{The Two Curve Problem\protect\footnote{%
The content of this section follows the presentation of \cite{Buchanan2020-er}; we reproduce it
here for self-containedness. We omit some nonessential definitions and derivations for concision;
see \Cref{sec:prob_form_extended} for these details.  }}{The Two Curve Problem}}
\label{sec:two_curves}

A natural model problem for the tasks discussed in \Cref{sec:intro} is the classification of low-dimensional submanifolds using a neural network. 
In this work, we study the one-dimensional, two-class case of this problem, which we refer to as the {\em two curve problem}. 
To fix ideas, let $\adim \geq 3$ denote the ambient dimension, and let $\mplus$ and $\mminus$ be two disjoint smooth regular simple closed curves taking values in $\Sph^{\adim -1}$, which represent the two classes (\Cref{fig:two_man_two_curve}). 
In addition, we require that the curves lie in a spherical cap of radius $\pi/2$: %
for example, the intersection of the sphere and the nonnegative orthant $\set{\vx \in \bbR^{\adim} \given \vx \geq 0}$.%
\footnote{The specific value $\pi/2$ is immaterial to our arguments: this constraint is only to avoid technical issues that arise when antipodal points are present in $\manifold$, so any constant less than $\pi$ would work just as well. This choice allows for some extra technical expediency, and connects with natural modeling assumptions (e.g.\ data corresponding to image manifolds, with nonnegative pixel intensities).} %
Given $N$ i.i.d.\ samples $\set{\vx_i}_{i=1}^N$ from a density $\rho$ supported on $\manifold = \mplus \cup \mminus$, which is bounded above and below by positive constants $\rho_{\max}$ and $\rho_{\min}$ and has associated measure $\mu$, as well as their corresponding $\pm 1$ labels, we train a feedforward neural network $f_{\vtheta} : \bbR^{\adim} \to \bbR$ with ReLU nonlinearities, uniform width $n$, and depth $L$ (and parameters $\vtheta$) by minimizing the empirical mean squared error using randomly-initialized gradient descent. 
Our goal is to prove that this procedure yields a separator for the geometry given sufficient resources $n$, $L$, and $N$---i.e., that $\sign(f_{\vtheta_k}) = 1$ on $\mplus$ and $-1$ on $\mminus$ at some iteration $k$ of gradient descent.

To achieve this, we need an understanding of the progress of gradient descent. 
Let $f_{\star} : \manifold \to \set{\pm 1}$ denote the classification function for $\mplus$ and $\mminus$ that generates our labels, write $\zeta_{\vtheta}(\vx) = f_{\vtheta}(\vx) - f_{\star}(\vx)$ for the network's prediction error, and let $\vtheta_{k+1} = \vtheta_{k} - (\tau/N) \sum_{i=1}^N \zeta_{\vtheta_k}(\vx_i) \nabla_{\vtheta}f_{\vtheta_k}(\vx_i) $ denote the gradient descent parameter sequence, where $\tau>0$ is the step size and $\vtheta_0$ represents our Gaussian initialization.
Elementary calculus then implies the error dynamics equation
$\zeta_{\vtheta_{k+1}} = \zeta_{\vtheta_k} - (\tau /N )
\sum_{i=1}^N \Theta_k^N(\spcdot, \vx_i) \zeta_{\vtheta_k}(\vx_i)$ for $k = 0, 1, \dots$,
where $\Theta_k^N : \manifold \times \manifold \to \bbR$ is a certain kernel. %
The precise expression for this kernel is not important for our purposes: what matters is that (i) making the width $n$ large relative to the depth $L$ guarantees that $\Theta_k^N$ remains close throughout training to its `initial value' %
$\Theta^{\mathrm{NTK}}(\vx,\vx') = \ip{\nabla_{\vtheta} f_{\vtheta_0}(\vx)}{\nabla_{\vtheta} f_{\vtheta_0}(\vx')}$, the {\em neural tangent kernel};
and (ii) taking the sample size $N$ to be sufficiently large relative to the depth $L$ implies that a nominal error evolution defined as $\zeta_{k+1} = \zeta_k - \tau \vTheta^{\mathrm{NTK}}_{\mu}[\zeta_k]$ with $\zeta_0 = \zeta_{\vtheta_0}$ uniformly approximates the actual error $\zeta_{\vtheta_k}$ throughout training.
In other words: to prove that gradient descent yields a neural network classifier that separates the two manifolds, it suffices to overparameterize, sample densely, and show that the norm of $\zeta_k$ decays sufficiently rapidly with $k$. This constitutes the ``NTK regime'' approach to gradient descent dynamics for neural network training \cite{Jacot2018-dv}.

The evolution of $\zeta_k$ is relatively straightforward: we have $\zeta_{k+1} = ( \Id - \tau \vTheta^{\mathrm{NTK}}_{\mu})^k[\zeta_0]$, and $\vTheta^{\mathrm{NTK}}_{\mu}$ is a positive, compact operator, so there exist an orthonormal basis of $L^2_{\mu}$ 
functions $v_i$ and eigenvalues $\lambda_1 \geq \lambda_2 \geq \dots \geq 0$ such that $\zeta_{k+1} = \sum_{i=1}^{\infty} (1 - \tau \lambda_i)^{k} \ip{\zeta_0}{v_i}_{L^2_{\mu}} v_i$.
In particular, with bounded step size $\tau < \lambda_1^{-1}$, gradient descent leads to rapid decrease of the error if and only if the initial error $\zeta_0$ is well-aligned with the eigenvectors of $\vTheta^{\mathrm{NTK}}_{\mu}$ corresponding to large eigenvalues. Arguing about this alignment explicitly is a challenging problem in geometry: although closed-form expressions for the functions $v_i$ exist in cases where $\manifold$ and $\mu$ are particularly well-structured, \textit{no such expression is available for general nonlinear geometries}, even in the one-dimensional case we study here. 
However, this alignment can be guaranteed \textit{implicitly} if one can show there exists a function $g : \manifold \to \bbR$ of small $L^2_{\mu}$ norm such that $\vTheta^{\mathrm{NTK}}_{\mu}[g] \approx \zeta_0$---in this situation, most of the energy of $\zeta_0$ must be concentrated on directions corresponding to large eigenvalues. We call the construction of such a function the {certificate problem} \cite[Eqn. (2.3)]{Buchanan2020-er}:
\begin{certprob}
Given a two curves problem instance $(\manifold, \rho)$, 
find conditions on the architectural hyperparameters $(n, L)$ so that there exists $g : \manifold \to \bbR$ satisfying
$ \norm{\vTheta^{\mathrm{NTK}}_{\mu}[g] - \zeta_0}_{L^2_{\mu}} \ltsim 1 / L$ and $\norm{g}_{L^2_{\mu}} \ltsim 1 / n$,
with constants depending on the density $\rho$ and logarithmic factors suppressed.
\label{prob:certprob}
\end{certprob}
The construction of certificates demands a fine-grained understanding of the integral operator $\vTheta_{\mu}^{\mathrm{NTK}}$ and its interactions with the geometry $\manifold$. 
We therefore proceed by identifying those intrinsic properties of $\manifold$ that will play a role in our analysis and results.

\subsection{Key Geometric Properties}\label{sec:geometry_def}

In the NTK regime described in \Cref{sec:two_curves}, gradient descent makes rapid progress if there exists a small certificate $g$ satisfying $\vTheta^{\mathrm{NTK}}_{\mu}[g] \approx \zeta_0$. 
The NTK is a function of the network width $n$ and depth $L$---in particular, we will see that the depth $L$ serves as a fitting resource, enabling the network to accommodate more complicated geometries. 
Our main analytical task is to establish relationships between these architectural resources and the intrinsic geometric  properties of the manifolds that guarantee existence of a certificate.

Intuitively, one would expect it to be
harder to separate curves that are close together or
oscillate wildly. In this section, we formalize these intuitions in terms of the curves' {\em curvature}, and quantities which we term the {\em angle injectivity radius} and {\em $\fco$-number}, which control the separation between the curves and their tendency to self-intersect. 
Given that the curves are regular, we may parameterize the two curves at unit speed with respect to arc length: for
$\sigma \in \{\pm\}$, we write $\len(\manifold_{\sigma})$ to denote the length of each curve, 
and use $\mb x_{\sigma}(s): [0, \len(\sM_{\sigma})] \to \mathbb{S}^{n_0-1}$ to represent these parameterizations.
We let $\mb x_{\sigma}^{(i)}(s)$ denote the $i$-th derivative of
$\mb x_{\sigma}$ with respect to arc length. Because our parameterization is unit speed, $\| \mb
x_{\sigma}^{(1)}(s)\|_2 = 1$ for all $\mb x_{\sigma}(s) \in \manifold$.
We provide full details regarding this parameterization in \Cref{sec:geo_defs}.

\paragraph{Curvature and Manifold Derivatives.} 

Our curves $\mc M_\sigma$ are submanifolds of the sphere $\bb S^{n_0 - 1}$. The curvature of $\mc M_\sigma$ at a point $\mb x_\sigma(s)$ is the norm $\| \mb P_{\mb x_\sigma(s)^\perp} \mb x_\sigma^{(2)}(s) \|_2$ of the component $\mb P_{\mb x_\sigma(s)^\perp} \mb x_\sigma^{(2)}(s)$ of the second derivative of $\mb x_\sigma(s)$ that lies tangent to the sphere $\bb S^{n_0-1}$ at $\mb x_\sigma(s)$. Geometrically, this measures the extent to which the curve $\mb x_\sigma(s)$ deviates from a geodesic (great circle) on the sphere.
Our technical results are phrased in terms of the {\em maximum curvature}  %
  $\kappa = \sup_{\sigma, s} \{ \| \mb P_{\mb x_{\sigma}(s)^{\perp}} \mb x_{\sigma}^{(2)}(s)\|_2 \} $.
In stating results, we also use $\hat{\kappa} = \max\{ \kappa, \tfrac{2}{\pi} \}$ to simplify various dependencies on $\kappa$. When $\kappa$ is large, $\mc M_\sigma$ is highly curved, and we will require a larger network depth $L$. 
In addition to the maximum curvature $\kappa$, our technical arguments require $\mb x_\sigma(s)$ to be five times continuously differentiable, and use bounds 
$M_i = \sup_{\sigma, s} \{ \| \mb x_{\sigma}^{(i)}(s) \|_{2} \}$
on their higher order derivatives.

\paragraph{Angle Injectivity Radius.}
Another key geometric quantity that determines the hardness of the problem is the
separation between manifolds: the problem is more difficult when $\mc M_+$ and $\mc M_-$ are close
together. We measure closeness through the extrinsic distance (angle) $\angle(\mb x, \mb x') =
\cos^{-1}\innerprod{\mb x}{\mb x'}$ between $\mb x$ and $\mb x'$ over the sphere. In contrast, we
use $d_{\manifold}(\mb x, \mb x')$ to denote the intrinsic distance between $\mb x$ and $\mb x'$
on $\manifold$, setting $d_{\manifold}(\mb x,\mb x') = \infty$ if $\mb x$ and $\mb x'$ reside on
different components $\mc M_+$ and $\mc M_-$. We set
\begin{equation}
  \Delta = \inf_{\mb x, \mb x' \in \manifold} \set{ \angle(\mb x, \mb x') \mid d_{\mc M}( \mb x, \mb
  x') \ge \tau_1 }, \label{eq:injective_radius_def}
\end{equation}
where $\tau_1 = \frac{1}{\sqrt{20}\hat{\kappa}}$, and call this quantity the {\em angle injectivity radius}. 
In words, the angle injectivity radius is the minimum angle between two points whose intrinsic distance exceeds $\tau_1$. 
The angle injectivity radius $\Delta$ (i) lower bounds the distance between different components $\mc M_+$ and $\mc M_-$, {\em and} (ii) accounts for the possibility that a component will ``loop back,'' exhibiting points with large intrinsic distance but small angle. 
This phenomenon is important to account for: the certificate problem is harder when one or both components of $\mc M$ nearly self-intersect. 
At an intuitive level, this increases the difficulty of the certificate problem because it
introduces nonlocal correlations across the operator $\vTheta^{\mathrm{NTK}}_\mu$, hurting its
conditioning. As we will see in \Cref{sec:proof_sketch}, increasing depth $L$ makes
$\Theta^{\mathrm{NTK}}$ better localized; setting $L$ sufficiently large relative to $\Delta^{-1}$
compensates for these correlations. 

\paragraph{\FourClowerOpen-number}

The conditioning of $\vTheta^{\mathrm{NTK}}_\mu$ depends not only on how near $\mc M$ comes to intersecting itself, which is captured by $\Delta$, but also on {\em the number of times} that $\mc M$ can ``loop back'' to a particular point. If $\mc M$ ``loops back'' many times, $\vTheta^{\mathrm{NTK}}_\mu$ can be highly correlated, leading to a hard certificate problem. The {\em \FourClowerOpen-number} (verbally, ``clover number'') reflects the number of near self-intersections:
\begin{equation}
  \text{\FourClowerOpen}(\mc M) = \sup_{\mb x\in\mc M} \left\{ N_{\mc M}\left(
    \set*{ \mb x' \given
      d_{\mc M}(\mb x,\mb x')\ge\tau_1,
      \angle(\mb x,\mb x')\le\tau_2
    },
    \frac{1}{\sqrt{1+\kappa^{2}}}
  \right) \right\}
  \label{eq:winding_def}
\end{equation}
with $\tau_2 = \frac{19}{20\sqrt{20}\hat{\kappa}}$. The set $\set*{ \mb x' \given d_{\mc M}(\mb x,\mb x')\ge \tau_1, \angle(\mb x,\mb x')\le \tau_2}$ is the union of looping pieces, namely points that are close to $\mb x$ in extrinsic distance but far in intrinsic distance. 
$N_{\mc M}( T, \delta )$ is the cardinality of a minimal $\delta$ covering of $T \subset \manifold$ in the intrinsic distance on the manifold, serving as a way to count the number of disjoint looping pieces. 
The \FourClowerOpen-number accounts for the maximal volume of the curve where the angle injectivity
radius $\Delta$ is active. 
It will generally be large if the manifolds nearly
intersect multiple times, as illustrated in \cref{fig:winding}. 
The \fco-number is typically small, but can be large when the data are generated in a way that induces certain near symmetries, as in the right panel of \cref{fig:winding}. 

\begin{figure}[!htb]
  \begin{subfigure}[b]{0.63\textwidth}
    \centering
    \begin{subfigure}[b]{0.48\textwidth}
    \begin{overpic}[width=\textwidth]{./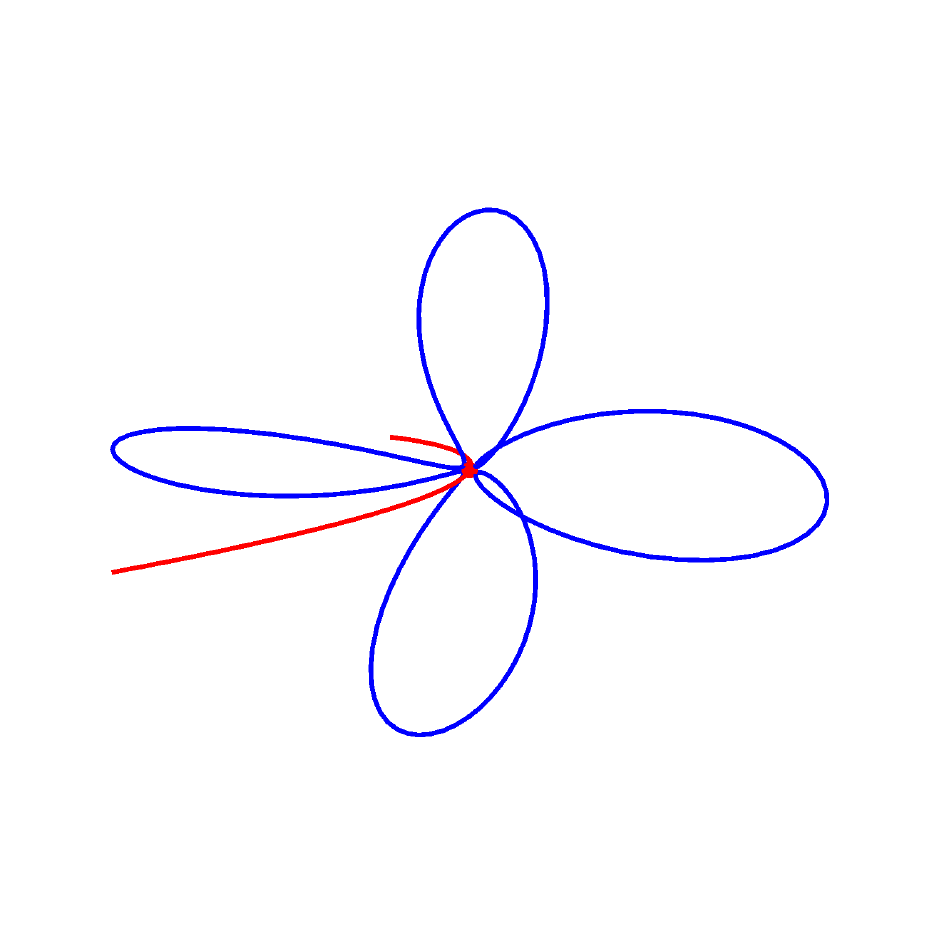}
        \put(2,0){$\fco = 4$}
         \put(60,0){\frame{\includegraphics[width=.62in]{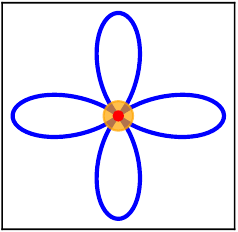}}}
      \end{overpic}
  \vspace{-.5in}
    \end{subfigure}
    \begin{subfigure}[b]{0.48\textwidth}
    \centering
    \begin{overpic}[width=\textwidth]{./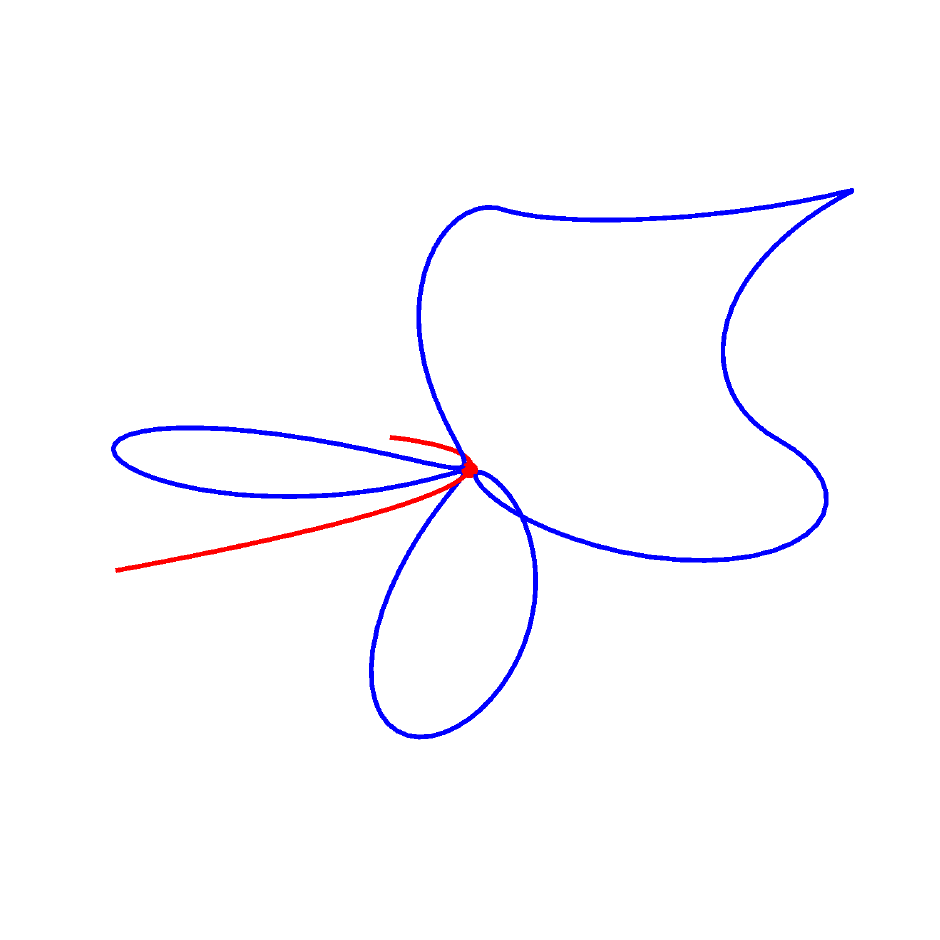}
	\put(2,0){$\fco = 3$}
	\put(60,0){\frame{\includegraphics[width=.62in]{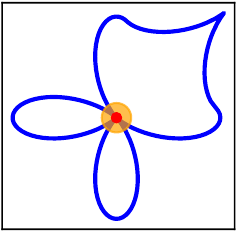}}}
\end{overpic}
\vspace{-.5in}
    \end{subfigure}
    \centering
        \begin{subfigure}[b]{0.48\textwidth}
      \begin{overpic}[width=\textwidth]{./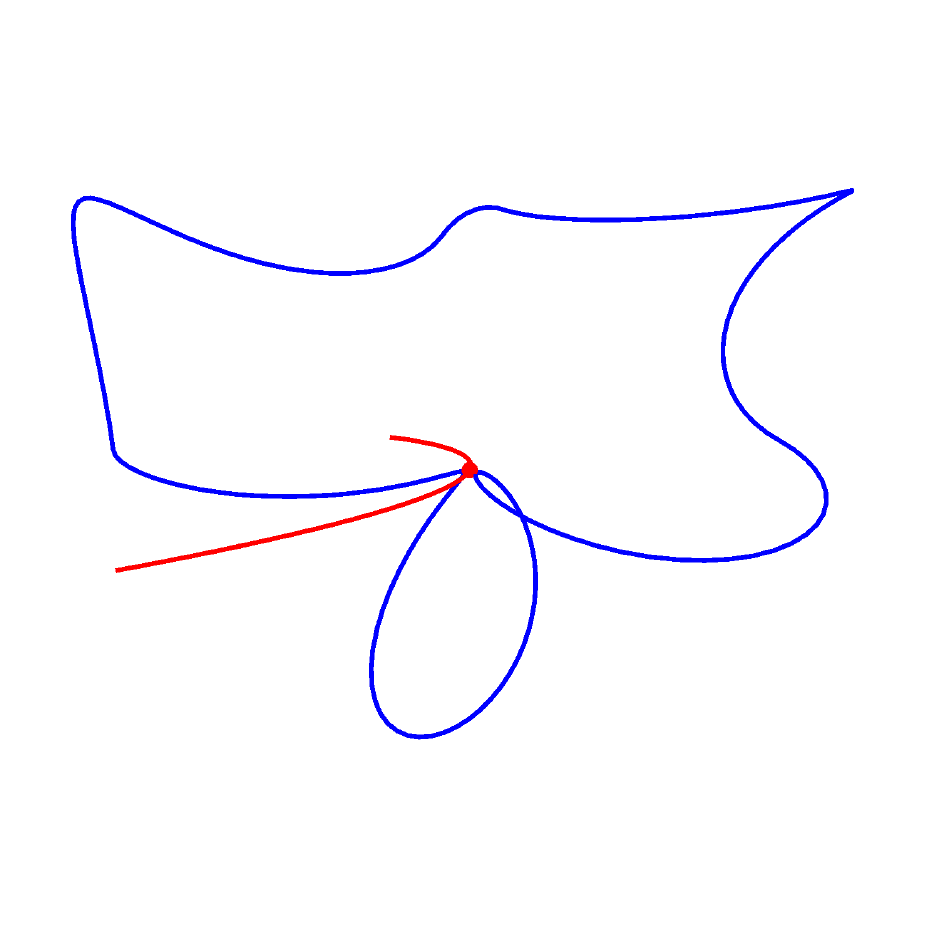}
  	\put(2,0){$\fco = 2$}
  	\put(60,0){\frame{\includegraphics[width=.62in]{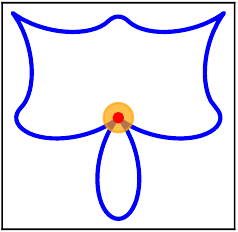}}}
  \end{overpic}
    \end{subfigure}
    \centering
    \begin{subfigure}[b]{0.48\textwidth}
    \begin{overpic}[width=\textwidth]{./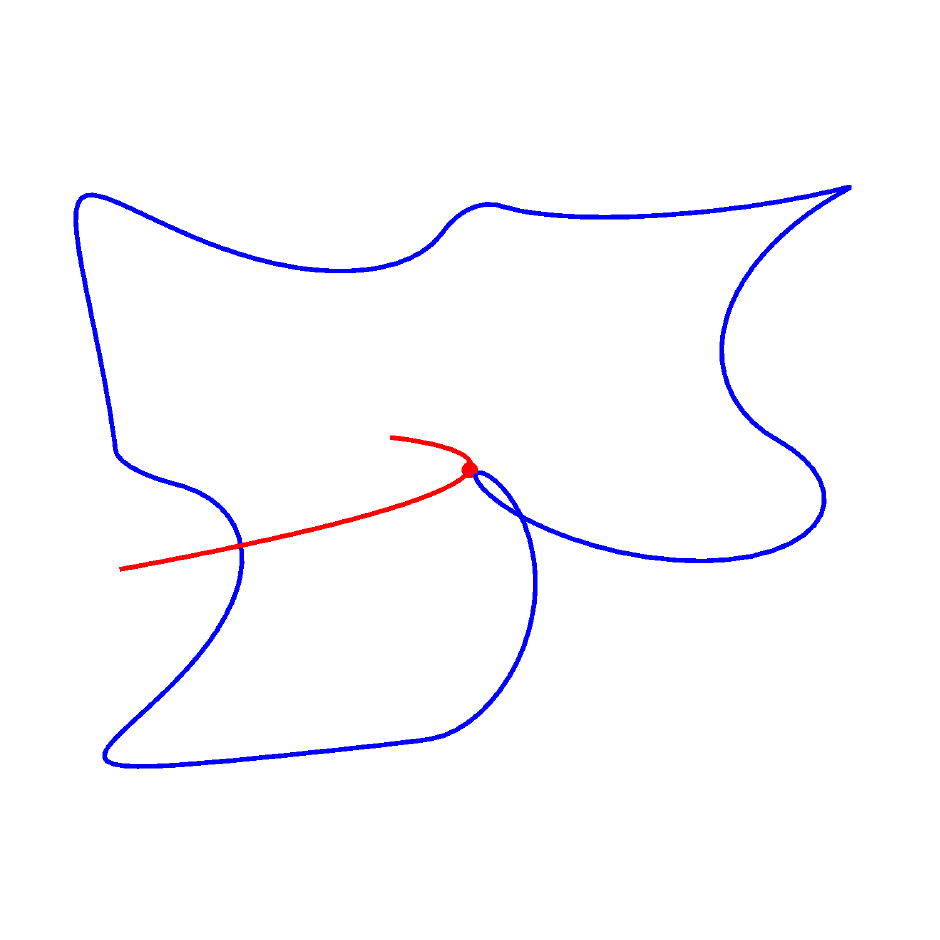}
	\put(2,0){$\fco = 1$}
	\put(60,0){\frame{\includegraphics[width=.62in]{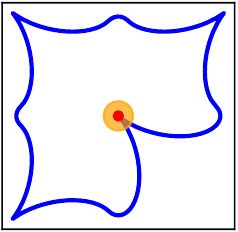}}}
\end{overpic}
    \end{subfigure}
  \end{subfigure}
  \centering
  \begin{subfigure}[b]{0.35\textwidth}
  \centering
  		\begin{overpic}[width=0.9\textwidth]{./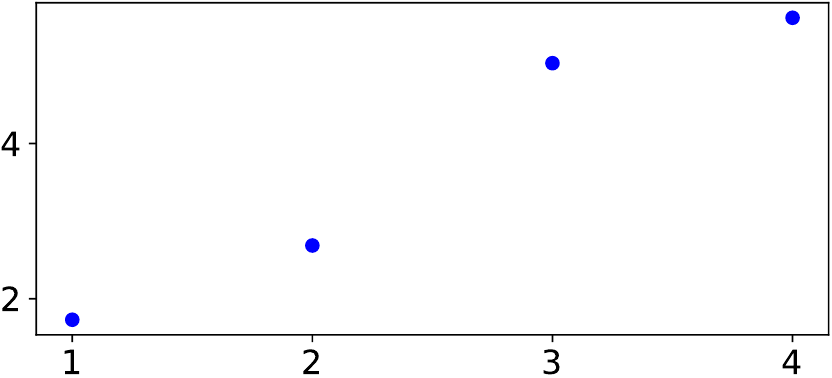}
  			\put(-7,5){\rotatebox{90}{Certificate Norm}}
  				\put(40,-7){\FourClowerOpen-Number }
  		\end{overpic}
\vspace{0.2in}
\par\medskip
  \begin{minipage}{.5\linewidth}
	\centering
	      \begin{overpic}[width=\textwidth, height=1in]{./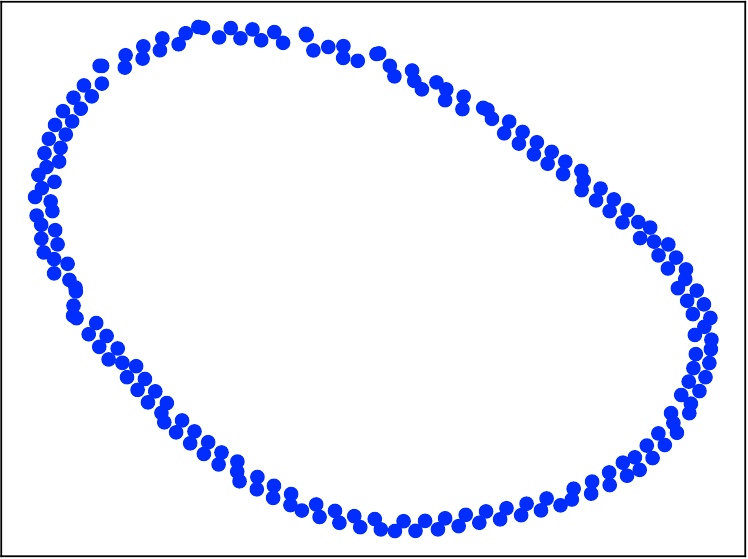}
	\put(,0){\frame{\includegraphics[width=.25in]{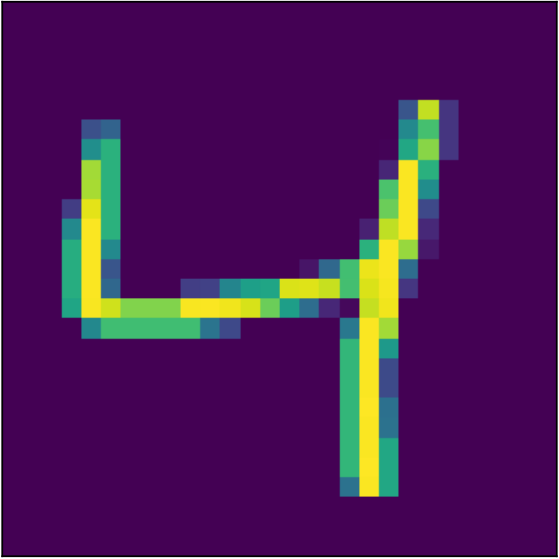}}}
\end{overpic}
 \vspace{1.8in}
\end{minipage}%
\begin{minipage}{.5\linewidth}
	\centering
      \begin{overpic}[width=\textwidth, height=1in]{./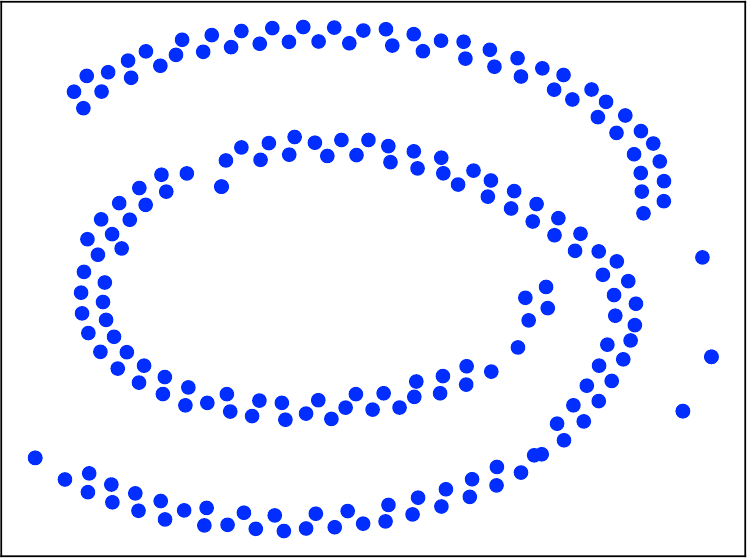}
        \put(,0){\frame{\includegraphics[width=.25in]{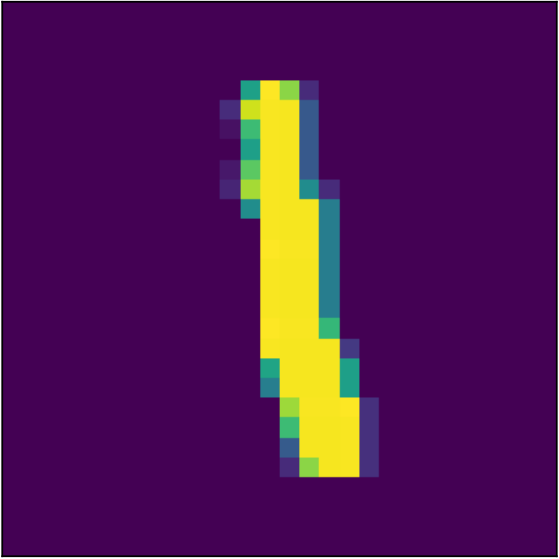}}}
      \end{overpic}
  \vspace{1.8in}
\end{minipage}
\end{subfigure}
\vspace{-1.4in}

  \caption{\textbf{The \FourClowerOpen-number---theory and practice.} \textit{Left:} 
  We generate a parametric family of space curves with fixed maximum curvature and length, but decreasing \fco-number, by reflecting `petals' of a clover %
  about a circumscribing square. %
  We set $\mplus$ to be a fixed circle with large radius that crosses the center of the
  configurations, then rescale and project the entire geometry onto the sphere to create a two curve
  problem instance. In the insets, we show a two-dimensional
  projection of each of the blue $\mminus$ curves as well as a base point $\vx \in \mplus$ at the
  center (also highlighed in the three-dimensional plots). The intersection of $\mminus$ with the neighborhood of $\vx$ denoted in orange
  represents the set whose covering number gives the $\fco$-number of the configuration (see
  \cref{eq:winding_def}).  \textit{Top right:} We numerically generate a certificate for each of
  the four geometries at left
  and plot its norm as a function of \fco-number. The trend demonstrates that
  {increasing \fco-number correlates with increasing classification difficulty}, measured through
  the certificate problem: this is in line with the intuition we have discussed.
    \textit{Bottom right:} t-SNE projection of MNIST images (top: a ``four'' digit; bottom: a ``one''
    digit) subject to rotations. Due
    to the approximate symmetry of the one digit under rotation by an angle $\pi$, the projection
    appears to nearly intersect itself. This may lead to a higher \FourClowerOpen-number compared
    to the embedding of the less-symmetric four digit. For experimental details for all panels,
    see \Cref{sec:fig_details}. }
  \label{fig:winding}
\end{figure}

\section{Main Results}
\label{sec:main_result}

Our main theorem establishes a set of sufficient resource requirements for the certificate problem under the class of geometries we consider here---by the reductions detailed in \Cref{sec:two_curves}, this implies that gradient descent rapidly separates the two classes given a neural network of sufficient depth and width. 
First, we note a convenient aspect of the certificate problem, which is its amenability to approximate solutions: that is, if we have a kernel $\Theta$ that approximates $\Theta^{\mathrm{NTK}}$ in the sense that $\norm{\vTheta_{\mu} - \vTheta_{\mu}^{\mathrm{NTK}}}_{L^2_{\mu}\to L^2_{\mu}} \ltsim n/L$,
and a function $\zeta$ such that $\norm{\zeta - \zeta_0}_{L^2_{\mu}} \ltsim 1/L$, then by the triangle inequality and the Schwarz inequality, it suffices to solve the equation $\vTheta_{\mu}[g] \approx \zeta$ instead. In our arguments, we will exploit the fact that the random kernel $\Theta^{\mathrm{NTK}}$ concentrates well for wide networks with $n \gtrsim L$, choosing $\Theta$ as
\begin{equation}
  \Theta(\vx,\vx') = (n/2) \sum_{\ell=0}^{L-1} \prod_{\ell'=\ell}^{L-1}\left(
  1 - (1/\pi) \ivphi{\ell'}(\angle(\vx,\vx')\right),
  \label{eq:body_ntk}
\end{equation} 
where $\varphi(t) = \acos( (1-t/\pi)\cos t + (1/\pi) \sin t)$ and $\ivphi{\ell'}$ denotes
$\ell'$-fold composition of $\varphi$; 
as well as the fact that for wide networks with $n \gtrsim L^5$, depth `smooths out' the initial error $\zeta_0$, choosing $\zeta$ as the piecewise-constant function
$\prederr(\vx) = -f_{\star}(\vx) + \int_{\manifold} f_{\vtheta_0}(\vx') \diff \mu(\vx')$.
We reproduce high-probability concentration guarantees from the literature that justify these approximations in \Cref{sec:aux}.

\begin{theorem}[Approximate Certificates for Curves]\label{thm:certificate_main} 
  Let $\manifold$ be two disjoint smooth, regular, simple closed curves, satisfying $ \angle(\vx, \vx') \leq\pi/2 $ for all $\vx, \vx' \in \manifold$.
  There exist absolute constants $C, C', C'', C'''$ and a polynomial $P = \poly(M_3, M_4, M_5,
  \len(\manifold), \Delta^{-1})$ of degree at most $36$, with degree at most $12$ in $(M_3, M_4,
  M_5, \len(\manifold))$ and degree at most $24$ in $\Delta^{-1}$, such that when
	\begin{eqnarray}
	    L &\ge& \max\Biggl\{ \exp(C'\len(\manifold) \hat{\kappa}), \left(\Delta\sqrt{1 + \kappa^2} \right)^{-C'' \fco(\manifold)}, C'''\hat{\kappa}^{10}, P, \rho_{\max}^{12} \Biggr\},
    \nonumber
	\end{eqnarray}
	there exists a certificate $g$ with 
	$\| g \|_{L^2_\mu} \le \frac{C \norm{\zeta}_{L^2_\mu}}{ \rho_{\min} n \log L} $
	such that
	$\norm{\mb \Theta_{\mu}[g] - \zeta}_{L^2_\mu} \le \frac{\norm{\zeta}_{L^\infty}}{L}$.
\end{theorem}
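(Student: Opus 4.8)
Write the proof by reducing the construction of $g$ to an approximate inversion of $\mb\Theta_\mu$ against a piecewise-constant target, built on three ingredients: a change of measure to arc length, a tractable model for $\mb\Theta$ extracted from the iterated map $\varphi$, and a perturbative (Neumann) inversion of that model. First, since $\diff\mu=\rho\,\diff\mathcal H^1$ on $\manifold$ with $\rho\in[\rho_{\min},\rho_{\max}]$, the substitution $h=\rho g$ turns $\mb\Theta_\mu[g]\approx\zeta$ into $\mb\Theta_{\mathcal H^1}[h]\approx\zeta$, with $\norm{g}_{L^2_\mu}\le\rho_{\min}^{-1/2}\norm{h}_{L^2(\mathcal H^1)}$ and a matching factor when converting $h$-bounds from $\norm{\zeta}_{L^2_\mu}$; this is the source of the $\rho_{\min}^{-1}$ in the norm bound, while $L\ge\rho_{\max}^{12}$ absorbs the conversion of the residual back to $L^2_\mu$. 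Parameterizing $\manifold_{\pm}$ at unit speed by $\mb x_\sigma(s)$, the task becomes approximate inversion of an integral operator on a disjoint union of two circles of circumferences $\len(\manifold_{\pm})$, with kernel $\Theta(\mb x_\sigma(s),\mb x_{\sigma'}(s'))$, where $\Theta$ depends on $(\mb x,\mb x')$ only through $\theta=\angle(\mb x,\mb x')$.

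The key structural input is that $\Theta(\theta)=(n/2)\sum_{\ell=0}^{L-1}\prod_{\ell'=\ell}^{L-1}\bigl(1-(1/\pi)\ivphi{\ell'}(\theta)\bigr)$ develops a two-scale structure as $L$ grows: the iterates $\ivphi{\ell}(\theta)$ decrease to $0$ (with $\ivphi{\ell}(\theta)$ of order $1/\ell$ asymptotically, essentially independent of the starting angle), so for each fixed $\theta\in(0,\pi/2]$ the normalized kernel tends to a constant — i.e.\ $\Theta$ has a slowly varying ``background'' of height of order $nL$, proportional to leading order to the all-ones operator $\One_\manifold\One_\manifold^{*}$ — on top of which sits a ``localized'' part concentrated at angles $\lesssim 1/L$ with total mass of order $n\log L$; on each curve, after relating $\angle(\mb x_\sigma(s),\mb x_\sigma(s'))$ to $\abs{s-s'}$ via a curvature expansion, this localized part is a translation-invariant convolution — a mollifier of width of order $1/L$. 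The contributions from pairs that are far in arc length but close in angle (near self-intersections, and the cross block $\sigma\ne\sigma'$) are exactly the part of this localized tail surviving at angle $\ge\Delta$: by the definitions of $\Delta$ and $\fco(\manifold)$ they amount to at most $\fco(\manifold)$ ``petals'', each of intrinsic size of order $(1+\kappa^2)^{-1/2}$, weighted by the (polynomially small in $L\Delta$) value of $\Theta$ at angle $\Delta$. I therefore decompose $\mb\Theta_{\mathcal H^1}=\mb\Theta^{\mathrm{bg}}+\mb\Theta^{\mathrm{loc}}+\mb E$, with $\mb\Theta^{\mathrm{bg}}\propto\One_\manifold\One_\manifold^{*}$, $\mb\Theta^{\mathrm{loc}}$ block-diagonal over the two circles and translation invariant on each, and $\mb E$ collecting the curvature corrections, the cross-curve block, the near-self-intersection tails, and the $\theta$-dependent remainder of the background.

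To build $g$, write $\zeta=c\,\One_\manifold+\zeta_\perp$ with $c$ its $\mu$-mean and $\zeta_\perp$ the mean-zero ``contrast'' (locally constant on each component). Since $\zeta_\perp$ is mean-zero it is killed by $\mb\Theta^{\mathrm{bg}}$, and being locally constant it is an eigenvector of the mollifier $\mb\Theta^{\mathrm{loc}}$ with eigenvalue equal to its mass, of order $n\log L$ (this is the ``stable inversion over smooth functions'' of the proof sketch); taking $h_0=\zeta_\perp/(n\log L)$ plus a multiple of $\One_\manifold$ of size $\sim c/(nL\,\len(\manifold))$ to cancel $c\,\One_\manifold$ (which lives in the order-$nL\,\len(\manifold)$ eigenspace) gives $(\mb\Theta^{\mathrm{bg}}+\mb\Theta^{\mathrm{loc}})[h_0]=\zeta$ with $\norm{h_0}_{L^2(\mathcal H^1)}\lesssim\norm{\zeta}_{L^2_\mu}/(n\log L)$. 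One then corrects with the Neumann series $h=\sum_{k\ge0}\bigl(-(\mb\Theta^{\mathrm{bg}}+\mb\Theta^{\mathrm{loc}})^{\dagger}\mb E\bigr)^{k}h_0$, where ${}^{\dagger}$ is the inverse on the well-conditioned subspace (eigenvalues of order $nL\,\len(\manifold)$ on $\One_\manifold$ and of order $n\log L$ on the complementary locally-constant/mean-zero directions carrying $\zeta_\perp$). The series converges, and the residual and norm bounds follow, exactly when $\norm{\mb E}$ is dominated by the relevant resource — which is what the hypotheses on $L$ provide: $L\ge\exp(C'\len(\manifold)\hat\kappa)$ forces the fitting resource of order $n\log L$ to beat the part of $\norm{\mb E}$ coming from the curvature corrections, which accumulate a per-unit-arclength contribution over the whole curve (so one needs $\log L\gtrsim\len(\manifold)\hat\kappa$); $L\ge(\Delta\sqrt{1+\kappa^2})^{-C''\fco(\manifold)}$ makes the near-self-intersection block negligible (its size being the value of $\Theta$ at angle $\Delta$ repeated over up to $\fco(\manifold)$ petals); and $L\ge C'''\hat\kappa^{10}$, $L\ge P$, $L\ge\rho_{\max}^{12}$ absorb the residual local curvature/derivative/density constants, with $P$ accumulating the polynomial dependence on $M_3,M_4,M_5,\len(\manifold),\Delta^{-1}$ generated by the curvature expansions (tracking its degree is bookkeeping). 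This yields $\norm{g}_{L^2_\mu}\lesssim\norm{\zeta}_{L^2_\mu}/(\rho_{\min}n\log L)$ and $\norm{\mb\Theta_\mu[g]-\zeta}_{L^2_\mu}\le\norm{\zeta}_{L^\infty}/L$.

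The main obstacle is the second step: obtaining quantitative control of the iterated map $\ivphi{\ell}$ sharp enough to pin down at once (i) the order-$nL$ background and the order-$n\log L$ mass of the localized part, (ii) the near-diagonal translation invariance of $\mb\Theta^{\mathrm{loc}}$ up to curvature, and (iii) the decay of $\Theta$ at macroscopic angles that drives the $\fco$/$\Delta$ tradeoff, and then turning these into operator-norm control of $\mb E$ that is uniform along curves which may be long, highly curved, and nearly self-intersecting. The exponential-in-length requirement is the visible symptom: errors in the translation-invariant approximation, once inverted, accumulate along the curve, and the only resource available to absorb them is the order-$\log L$ mass of the localized kernel.
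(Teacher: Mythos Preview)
Your high-level architecture matches the paper's: the density reduction $g=h/\rho$, the two-scale picture of $\Theta$ (an order-$nL$ constant background plus an order-$n\log L$ localized convolution), and the attribution of the error $\mb E$ to curvature, cross-curve, and near-self-intersection pieces all correspond to the paper's Local/Near/Far/Winding decomposition. But the Neumann step has a genuine gap. Your series $h=\sum_{k\ge 0}\bigl(-(\mb\Theta^{\mathrm{bg}}+\mb\Theta^{\mathrm{loc}})^{\dagger}\mb E\bigr)^{k}h_0$ needs $(\mb\Theta^{\mathrm{bg}}+\mb\Theta^{\mathrm{loc}})^{\dagger}\mb E$ to be a contraction, but $\mb\Theta^{\mathrm{loc}}$ is a mollifier whose Fourier eigenvalues decay to zero on high frequencies, so its full inverse is unbounded there; if instead $\dagger$ is the pseudoinverse onto the two-dimensional piecewise-constant space, then $\mb E h_0$---which is not piecewise constant once the curvature correction enters---leaks out, and the series no longer solves the equation. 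The paper fixes this by inverting not on the piecewise-constant space but on an intermediate low-frequency Fourier subspace $S_\veps$ of dimension $\sim 1/r_\veps$ (roughly $L^{4/9}$), on which the convolution has all eigenvalues $\gtrsim n\log L$; the Neumann series is run entirely inside $S_\veps$ and yields $g\in S_\veps$ with $\mb P_{S_\veps}\mb\Theta^{\circ}[g]=\zeta$ exactly.

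That leaves the off-subspace residual $\mb P_{S_\veps^{\perp}}\mb\Theta^{\circ}[g]$, whose control is a substantial separate argument your proposal omits: the paper bounds the first three arc-length derivatives of $\mb\Theta^{\circ}[g]$ (via the ``simultaneous-advance'' derivatives $\lambda_i$ of the angle function and sharp derivative bounds on $\psi$) and converts these into $\norm{\mb P_{S_\veps^{\perp}}\mb\Theta^{\circ}[g]}_{L^2}\lesssim r_\veps^{3}$ times a polynomial in $M_3,M_4,M_5,\len(\manifold),\Delta^{-1}$. It is this step, not the expansion of $\mb E$, that generates the polynomial $P$ in the hypotheses. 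The DC is also handled differently from your proposal: rather than folding the background into the invertible part, the paper subtracts $\psi(\pi)$, works with $\mb\Theta^{\circ}$, and recovers the constant by solving $\mb\Theta^{\circ}[g_1]\approx 1$ at a second, coarser scale $\veps_1>\veps_0$ (the coarser scale is needed so that the leading Neumann term dominates and one can certify $\int g_1>0$), then combining $h=g+\alpha g_1$ and \emph{iterating the entire construction} on the projected residual to drive the error from $O(r_{\veps_1}^{3})$ down to the claimed $O(1/L)$. A single Neumann sweep does not deliver that rate.
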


\Cref{thm:certificate_main} is our main technical contribution: it provides a sufficient condition
on the network depth $L$ to resolve the approximate certificate problem for the class of geometries we consider,
with the required resources depending only on the geometric properties we introduce in
\Cref{sec:geometry_def}.  
Given the connection between certificates and gradient descent, \Cref{thm:certificate_main} demonstrates that \textit{deeper networks fit more complex geometries}, which shows that the network depth plays the role of a fitting resource in classifying the two curves.
We provide a numerical corroboration of the interaction between the network depth, the
geometry, and the size of the certificate in \Cref{fig:certificates_stuff}.
For any family of geometries with bounded \fco-number,
\Cref{thm:certificate_main} implies a polynomial dependence of the depth on the angle injectivity radius
$\Delta$, whereas we are unable to avoid an exponential dependence of the depth on the curvature
$\kappa$.  
Nevertheless, these dependences may seem overly pessimistic in light of the existence of `easy' two curve problem instances---say, linearly-separable classes, each of which is a highly nonlinear manifold---for which one would expect gradient descent to succeed without needing an unduly large depth.
In fact, such geometries \textit{will not} admit a small certificate norm in general unless the depth is sufficiently large: 
intuitively, this is a consequence of the operator $\vTheta_{\mu}$ being ill-conditioned for such geometries.\footnote{%
  Again, the equivalence between the difficulty of the certificate problem and the progress of
  gradient descent on decreasing the error is a consequence of our analysis proceeding in the
  kernel regime with the square loss---using alternate techniques to analyze the
  dynamics can allow one to prove that neural networks continue to fit such `easy' classification
problems efficiently (e.g.\ \cite{Ji2019-cl}). }

\begin{figure}[!htb]
  \begin{subfigure}[b]{0.4\textwidth}
    \centering
    \includegraphics[width=\textwidth]{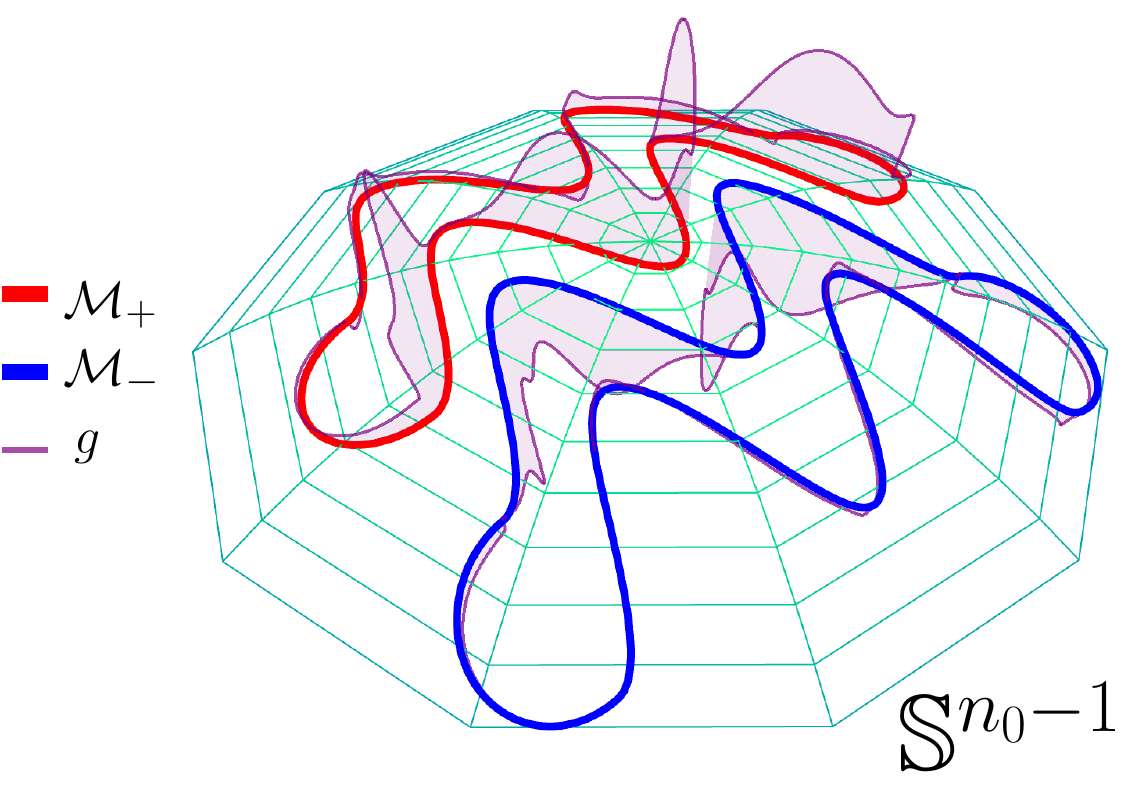}
    \label{fig:cert_on_sph}
  \end{subfigure}%
  \begin{subfigure}[b]{0.6\textwidth}
    \centering
    \includegraphics[width=\textwidth]{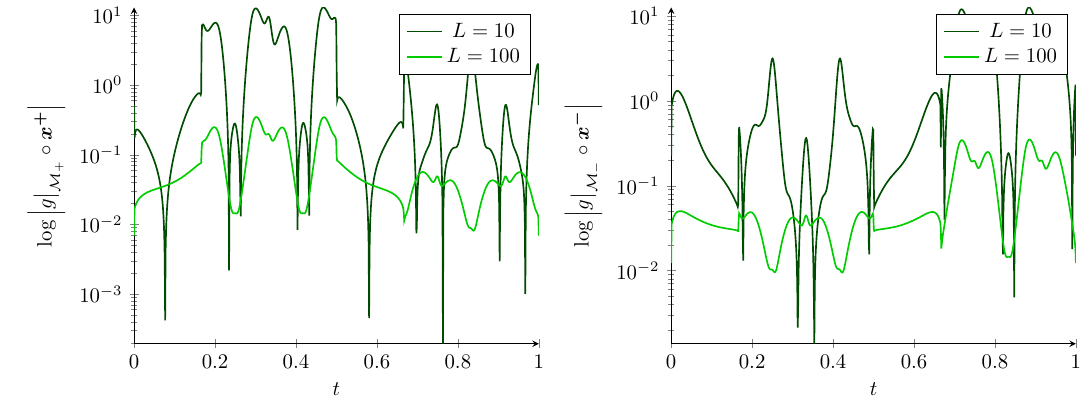}
    \label{fig:cert_chart}
  \end{subfigure}
  \caption{\textbf{The effect of geometry and depth on the certificate.} \textit{Left:} The
    certificate $g$ computed numerically from the kernel $\ntk$ for depth $L=50$ (defined in
    \Cref{eq:body_ntk}) and the geometry from \Cref{fig:two_man_two_curve} with a uniform density,
    graphed over the manifolds. Control of the norm of the certificate implies rapid progress of
    gradient descent, as reflected in \Cref{thm:main_formal}. Comparing to \Cref{fig:two_curve},
    we note that the certificate has large magnitude near the point of minimum distance between
    the two curves---this is suggestive of the way the geometry sets the difficulty of the fitting
    problem.
    \textit{Right:} To visualize the certificate norm more precisely, we graph the log-magnitude
    of the certificate for kernels $\Theta$ of varying depth $L$, viewing them through the
    arc-length parameterizations $\vx_{\sigma}$ for the curves (left: $\mplus$; right: $\mminus$).
    At a coarse scale, the maximum magnitude decreases as the depth increases; at a finer scale,
    curvature-associated defects are `smoothed out'. This indicates the role of depth as a fitting
  resource. See \Cref{sec:fig_details} for further experimental details. } 
  \label{fig:certificates_stuff}
\end{figure}

The proof of \Cref{thm:certificate_main} is novel, both in the context of kernel regression on manifolds and in the context of NTK-regime neural network training. 
We detail the key intuitions for the proof in \Cref{sec:proof_sketch}.
As suggested above, applying \Cref{thm:certificate_main} to construct a certificate is
straightforward: given a suitable setting of $L$ for a two curve problem instance, we obtain an
approximate certificate $g$ via \Cref{thm:certificate_main}.  Then with the triangle inequality
and the Schwarz inequality, we can bound 
\begin{equation*}
\norm{\vTheta^{\mathrm{NTK}}_{\mu}[g] - \zeta_0}_{L^2_{\mu}} 
\leq \norm{\vTheta_{\mu}^{\mathrm{NTK}} - \vTheta_{\mu}}_{L^2_\mu \to L^2_\mu} \norm{g}_{L^2_{\mu}}
+ \norm{\zeta_0 - \zeta}_{L^2_{\mu}} + \norm{\vTheta_{\mu}[g] - \zeta}_{L^2_{\mu}},
\end{equation*}
and leveraging suitable probabilistic control (see \Cref{sec:aux}) of the approximation errors in
the previous expression, as well as on $\norm{\zeta}_{L^2_{\mu}}$, then yields bounds for the
certificate problem.  Applying the reductions from gradient descent dynamics in the NTK regime to
certificates discussed in \Cref{sec:two_curves}, we then obtain an end-to-end guarantee for the
two curve problem.

\begin{theorem}[Generalization]\label{thm:main_formal}
	  Let $\manifold$ be two disjoint smooth, regular, simple closed curves, satisfying $ \angle(\vx, \vx') \leq\pi/2 $ for all $\vx, \vx' \in \manifold$.
	  For any $0 < \delta \leq 1/e$, choose $L$ so that
	    \begin{align*}
	    L &\geq K \max\set*{
	    \frac{1}{\left( \Delta\sqrt{1 + \kappa^2} \right)^{C \fco(\manifold)}},
	      C_{\mu} \log^9(\tfrac{1}{\delta}) 
	      \log^{24} ( C_{\mu} \adim \log(\tfrac{1}{\delta}) ),
	      e^{C' \max \set{\len(\sM)\hat{\kappa}, \log(\hat{\kappa})}},
	    P } \\
	 n &= K'L^{99}\log^{9}(1/\delta)\log^{18}(L\adim) \\
	 N &\geq L^{10},
	  \end{align*}
	  and fix $\tau > 0$ such that $\frac{C''}{nL^2} \leq \tau \leq \frac{c}{nL}$.
	  Then with probability at least $1 - \delta$, the parameters obtained at iteration
	  $\floor{L^{39/44} / (n\tau)}$ of gradient descent on the finite sample loss %
	  yield a classifier that separates the two manifolds.
	
	  The constants $c, C, C', C'', K, K' > 0$ are absolute, 
	  and the constant $C_{\mu}$ is equal to
	  $ \frac{\max \set{\rho_{\min}^{19}, \rho_{\min}^{-19}} (1 + \rho_{\max})^{12}  }
	  {\left(\min\,\set*{\mu(\mplus), \mu(\mminus)} \right)^{11/2}}$. 
	  $P$ is a polynomial $\poly\{M_3, M_4, M_5, \len(\manifold), \Delta^{-1}\}$ of degree at most $36$, with degree at most $12$ when viewed as a polynomial in $M_3, M_4, M_5$ and $\len(\manifold)$, and of degree at most $24$ as a polynomial in $\Delta^{-1}$.
\end{theorem}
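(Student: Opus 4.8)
The plan is to obtain the theorem from \Cref{thm:certificate_main} by (a) combining the deterministic certificate furnished there with the probabilistic approximation guarantees collected in \Cref{sec:aux} to solve the genuine certificate problem (\Cref{prob:certprob}) for the \emph{random} neural tangent kernel $\Theta^{\mathrm{NTK}}$, and then (b) invoking the reduction from the two curve problem to the certificate problem developed in \Cref{sec:two_curves} (following \cite{Buchanan2020-er}). Fix the instance $(\manifold,\rho)$ and $0<\delta\le 1/e$, and let $L,n,N,\tau$ be as in the statement. First I would introduce the ``good event'' $\mathcal{E}$ on which all the concentration statements of \Cref{sec:aux} hold simultaneously: (i) the random kernel concentrates on the deterministic kernel $\Theta$ of \Cref{eq:body_ntk}, with $\norm{\vTheta_\mu-\vTheta^{\mathrm{NTK}}_\mu}_{L^2_\mu\to L^2_\mu}\ltsim n/L$ (this uses $n\gtrsim L$); (ii) depth smooths the initial error, so that $\zeta_0=f_{\vtheta_0}-f_\star$ is close to the piecewise-constant function $\zeta$, with $\norm{\zeta_0-\zeta}_{L^2_\mu}\ltsim 1/L$ and $\norm{\zeta}_{L^2_\mu},\norm{\zeta}_{L^\infty}$ bounded above and below by constants depending only on $\rho$ (this uses $n\gtrsim L^5$); and (iii) the empirical kernels $\Theta^N_k$ stay uniformly close to $\Theta^{\mathrm{NTK}}$ over the first $\floor{L^{39/44}/(n\tau)}$ iterations, while the finite-sample operator $\vTheta^N_\mu$ approximates the population operator $\vTheta_\mu$ (this uses $n$ polynomially large in $L$ and $N\ge L^{10}$). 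A union bound over these finitely many events keeps the failure probability below $\delta$; the constant $C_\mu$ and the polynomial $P$ in the hypothesis on $L$ are exactly what is needed to make all of these estimates hold at once, alongside the threshold appearing in \Cref{thm:certificate_main}.

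On $\mathcal{E}$, the next step is the gluing argument previewed immediately after \Cref{thm:certificate_main}. The lower bound on $L$ imposed here dominates every threshold appearing in \Cref{thm:certificate_main}---the terms $\exp(C'\len(\manifold)\hat{\kappa})$, $(\Delta\sqrt{1+\kappa^2})^{-C''\fco(\manifold)}$, $\hat{\kappa}^{10}$, $P$, and $\rho_{\max}^{12}$ all sit among the maxima taken in the present statement---so that theorem produces a certificate $g$ with $\norm{g}_{L^2_\mu}\le C\norm{\zeta}_{L^2_\mu}/(\rho_{\min}n\log L)$ and $\norm{\vTheta_\mu[g]-\zeta}_{L^2_\mu}\le\norm{\zeta}_{L^\infty}/L$. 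Then, by the triangle and Cauchy--Schwarz inequalities,
\[
\norm{\vTheta^{\mathrm{NTK}}_\mu[g]-\zeta_0}_{L^2_\mu}
\le \norm{\vTheta^{\mathrm{NTK}}_\mu-\vTheta_\mu}_{L^2_\mu\to L^2_\mu}\,\norm{g}_{L^2_\mu}
+ \norm{\zeta-\zeta_0}_{L^2_\mu}
+ \norm{\vTheta_\mu[g]-\zeta}_{L^2_\mu},
\]
and on $\mathcal{E}$ each summand on the right-hand side is $\ltsim 1/L$ (using $\norm{\zeta}_{L^2_\mu},\norm{\zeta}_{L^\infty}\lesssim 1$ and $n\gtrsim L$), while $\norm{g}_{L^2_\mu}\ltsim 1/n$. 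This is precisely a solution of \Cref{prob:certprob} for $\Theta^{\mathrm{NTK}}$.

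It remains to feed this certificate into the reduction of \Cref{sec:two_curves}. Since the nominal error dynamics is $\zeta_{k+1}=(\Id-\tau\vTheta^{\mathrm{NTK}}_\mu)[\zeta_k]$, we have $\zeta_k=(\Id-\tau\vTheta^{\mathrm{NTK}}_\mu)^k[\zeta_0]$; writing $\zeta_0=\vTheta^{\mathrm{NTK}}_\mu[g]+e$ with $\norm{e}_{L^2_\mu}\ltsim 1/L$ from the certificate relation just established, and noting that the spectrum of $\tau\vTheta^{\mathrm{NTK}}_\mu$ lies in $[0,1)$ for $\tau$ in the prescribed window (a trace bound controls $\norm{\vTheta^{\mathrm{NTK}}_\mu}_{L^2_\mu\to L^2_\mu}$ by $\lesssim nL$), the elementary estimate $\norm{\vTheta^{\mathrm{NTK}}_\mu(\Id-\tau\vTheta^{\mathrm{NTK}}_\mu)^k}_{L^2_\mu\to L^2_\mu}\lesssim 1/(\tau k)$ gives $\norm{\zeta_k}_{L^2_\mu}\lesssim\norm{g}_{L^2_\mu}/(\tau k)+1/L$, which for $k=\floor{L^{39/44}/(n\tau)}$ and $\norm{g}_{L^2_\mu}\ltsim 1/n$ is $\ltsim L^{-39/44}$---an inverse polynomial in $L$. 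On $\mathcal{E}$, the width $n=K'L^{99}\log^9(1/\delta)\log^{18}(L\adim)$ and the sample size $N\ge L^{10}$ ensure the actual finite-sample gradient-descent error $\zeta_{\vtheta_k}$ tracks this nominal error uniformly over those iterations. Finally, to turn $L^2_\mu$-smallness into a genuine classifier I would upgrade to a uniform estimate: on $\mathcal{E}$ the trained network $f_{\vtheta_k}$ remains Lipschitz on $\Sph^{\adim-1}$ with a controlled constant (by the bounds of \Cref{sec:aux}), and $f_\star$ is constant on each component of $\manifold$, so $\zeta_{\vtheta_k}$ restricted to $\manifold$ is Lipschitz; a one-dimensional interpolation inequality on the curves, whose lengths are bounded, together with $\rho\ge\rho_{\min}>0$, then converts $\norm{\zeta_{\vtheta_k}}_{L^2_\mu}=o(1)$ into $\norm{\zeta_{\vtheta_k}}_{L^\infty(\manifold)}<1$. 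Since $\zeta_{\vtheta_k}=f_{\vtheta_k}-f_\star$ and $f_\star\equiv\pm1$ on the two components, this gives $\sign(f_{\vtheta_k})=f_\star$ throughout $\mplus\cup\mminus$, i.e.\ a separator.

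I expect the main obstacle to be less any single estimate than the bookkeeping that couples the three approximation scales: the iteration count (and hence the exponent $39/44$), the step-size window $\tau\in[C''/(nL^2),c/(nL)]$, and the polynomial width must all be chosen so that the certificate-driven decay of the nominal error outruns the accumulated kernel-perturbation and finite-sample errors, \emph{and} so that the final residual is small in the $L^\infty(\manifold)$ norm rather than merely in $L^2_\mu$. This last upgrade---controlling the residual in a norm strong enough to certify the sign of $f_{\vtheta_k}$ everywhere on $\manifold$---together with verifying that every threshold from \Cref{sec:aux} is subsumed by the stated lower bound on $L$ with its prescribed constant $C_\mu$ and polynomial $P$, is the most delicate part of assembling the end-to-end guarantee from \Cref{thm:certificate_main} and the auxiliary concentration bounds.
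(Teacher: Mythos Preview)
Your plan is sound and the certificate-gluing step via the triangle inequality matches the paper exactly. The difference is one of packaging: everything you lay out in your third and fourth paragraphs---the nominal dynamics estimate $\norm{\zeta_k}_{L^2_\mu}\lesssim\norm{g}_{L^2_\mu}/(\tau k)+1/L$, the finite-sample and width-perturbation tracking, and the $L^2_\mu\to L^\infty(\manifold)$ upgrade---is already packaged in the black-box \Cref{thm:1} (reproduced from \cite{Buchanan2020-er}), whose conclusion is precisely ``gradient descent at iteration $\lfloor L^{39/44}/(n\tau)\rfloor$ separates the manifolds'' given a certificate of the form \Cref{eq:mainthm_cert_condition}. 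The paper therefore does not redo any of that analysis; it simply (i) checks that the present hypotheses on $L,n,N,\tau$ imply the architectural hypotheses of \Cref{thm:1}, and (ii) verifies the certificate condition \Cref{eq:mainthm_cert_condition} with $q_{\mathrm{cert}}=1$, using \Cref{thm:skeleton_dc_density} together with \Cref{thm:2} and \Cref{lem:zeta_approx_bound} exactly as you outlined. One small ingredient you do not mention but the paper does need for step (i): \Cref{thm:1} carries a regularity constant $C_\lambda$ inherited from \cite{Buchanan2020-er} that must be expressed in terms of the present paper's $\Delta$; a short argument using the definition of the injectivity radius gives $C_\lambda\le\max\{1,4\len(\manifold)^2/\Delta^2\}$, which is then absorbed into the polynomial $P$ and the $e^{C'\len(\manifold)\hat\kappa}$ term. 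Your approach would work but reproves the contents of \Cref{thm:1}; the paper's route is shorter and avoids exactly the bookkeeping you flag as the main obstacle.
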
	
\Cref{thm:main_formal} represents the first end-to-end guarantee for training a deep neural network to classify a nontrivial class of low-dimensional nonlinear manifolds.
We call attention to the fact that the hypotheses of \Cref{thm:main_formal} are completely
self-contained, making reference only to intrinsic properties of the data and the architectural
hyperparameters of the neural network (as well as $\poly(\log \adim)$), 
and that the result is algorithmic, as it applies
to training the network via constant-stepping gradient descent on the empirical square loss and
guarantees generalization within $L^2$ iterations.  
Furthermore, \Cref{thm:main_formal} can be readily extended to the more general setting of regression on curves, given that we have focused on training with the square loss.

\section{Proof Sketch}
\label{sec:proof_sketch}

In this section, we provide an overview of the key elements of the proof of \Cref{thm:certificate_main}, where we show that the equation $\ntkoper_\mu[g] \approx \zeta$ admits a solution $g$ (the certificate) of small norm. 
To solve the certificate problem for $\manifold$, we require a fine-grained understanding of the kernel $\Theta$. The most natural approach is to formally set $g = \sum_{i=1}^\infty \lambda_i^{-1} \ip{\zeta}{v_i}_{L^2_\mu} v_i$ using the eigendecomposition of $\vTheta_{\mu}$ (just as constructed in \Cref{sec:two_curves} for $\vTheta_{\mu}^{\mathrm{NTK}}$), and then argue that this formal expression converges by studying the rate of decay of $\lambda_i$ and the alignment of $\zeta$ with eigenvectors of $\vTheta_{\mu}$; this is the standard approach in the literature \cite{Ghorbani2020-ef,nitanda2020optimal}.
However, as discussed in \Cref{sec:two_curves},
the nonlinear structure of $\manifold$ makes obtaining a full diagonalization for $\vTheta_{\mu}$ intractable, and simple asymptotic characterizations of its spectrum are insufficient to prove that the solution $g$ has small norm. 
Our approach will therefore be more direct: we will study the `spatial' properties of the kernel $\Theta$ itself, in particular its rate of decay away from $\vx = \vx'$, and thereby use the network depth $L$ as a resource to reduce the study of the operator $\vTheta_{\mu}$ to a simpler, localized operator whose invertibility can be proved using harmonic analysis. 
We will then use differentiability properties of $\Theta$ to transfer the solution obtained by inverting this auxiliary operator back to the operator $\vTheta_{\mu}$.
We refer readers to \Cref{sec:proof_certificate_theorem} for the full proof. 

We simplify the proceedings using two basic reductions. First, with a small amount of  auxiliary
argumentation, we can reduce from the study of the operator-with-density $\vTheta_{\mu}$ to the
density-free operator $\vTheta$. 
Second, the kernel $\Theta(\mb x,\mb x')$ is a function of the
angle $\angle(\mb x,\mb x')$, and hence is rotationally invariant. 
This kernel is maximized at
$\angle(\mb x,\mb x') = 0$ and decreases monotonically as the angle increases, reaching its
minimum value at $\angle(\mb x, \mb x') = \pi$. 
If we subtract this minimum value, it should not affect our ability to fit functions, and we obtain a rotationally invariant kernel $\Theta^\circ(\mb x, \mb x') = \psi^\circ(\angle(\mb x,\mb x'))$ that is concentrated around angle $0$. 
In the following, we focus on certificate construction for the kernel $\Theta^\circ$. Both simplifications are justified in \Cref{sec:certificate_dc_density}.

\subsection{The Importance of Depth: Localization of the Neural Tangent Kernel}
\label{sec:pf_sketch_loc}
The first problem one encounters when attempting to directly establish (a property like) invertibility of the operator $\vTheta^\circ$ is its action across connected components of $\manifold$: the operator $\vTheta^\circ$ acts by integrating against functions defined on $\manifold = \mplus \cup \mminus$, and although it is intuitive that most of its image's values on each component will be due to integration of the input over the same component, there will always be some `cross-talk' corresponding to integration over the opposite component that interferes with our ability to apply harmonic analysis tools. 
To work around this basic issue (as well as others we will see below), our argument proceeds via a localization approach: we will exploit the fact that as the depth $L$ increases, the kernel $\Theta^\circ$ sharpens and concentrates around its value at $\vx = \vx'$, to the extent that we can neglect its action across components of $\manifold$ and even pass to the analysis of an auxiliary localized operator.
This reduction is enabled by new sharp estimates for the decay of the angle function $\psi^\circ$ that we establish in \Cref{sec:approx_upper_bound}. Moreover, the perspective of using the network depth as a resource to localize the kernel $\Theta^\circ$ and exploiting this to solve the classification problem appears to be new: this localization is typically presented as a deficiency in the literature (e.g.\ \cite{Huang2020-pn}).

At a more formal level, when the network is deep enough compared to geometric properties of the
curves, for each point $\mb x$, the majority of the mass of the kernel $\Theta^\circ(\mb x, \mb
x')$ is taken within a small neighborhood $d_{\manifold}(\mb x, \mb x') \le r$ of $\mb x$. When
$d_{\manifold}(\mb x, \mb x')$ is small relative to $\kappa$, we have $d_{\manifold}(\mb x, \mb
x') \approx \angle(\mb x, \mb x')$. This allows us to approximate the local component by the
following invariant operator:
\begin{equation} \label{eqn:M-def}
  \wh{\mb M} [ f ]( \mb x_{\sigma}(s) ) = \int_{s' = s-r}^{s+r} \psi^\circ(\abs{s-s'}) f(\mb x_{\sigma}(s')) ds'.
\end{equation}
This approximation has two main benefits: (i) the operator $\wh{\mb M}$ is defined by intrinsic distance $s' - s$, and (ii) it is highly localized. 
In fact, \eqref{eqn:M-def} takes the form of a convolution over the arc length parameter $s$. 
This implies that $\wh{\mb M}$ diagonalizes in the Fourier basis, giving an explicit characterization of its eigenvalues and eigenvectors. 
Moreover, because $\wh{\mb M}$ is localized, the eigenvalues corresponding to slowly oscillating Fourier basis functions are large, and  $\wh{\mb M}$ is stably invertible over such functions. 
Both of these benefits can be seen as consequences of depth: depth leads to localization, which facilitates approximation by $\wh{\mb M}$, {\em and} renders that approximation invertible over low-frequency functions. 
In our proofs, we will work with a subspace $S$ spanned by low-frequency basis functions that are nearly constant over a length $2r$ interval (this subspace ends up having dimension proportional to $1/r$; see \Cref{sec:subspace_def} for a formal definition), and use Fourier arguments to prove invertibility of $\wh{\vM}$ over $S$ (see \Cref{lem:main_term_pd}).

\subsection{Stable Inversion over Smooth Functions}
\label{sec:pf_sketch_invert}

Our remaining task is to leverage the invertibility of $\wh{\mb M}$ over $S$ to argue that $\mb \Theta$ is also invertible. In doing so, we need to account for the residual $\mb \Theta - \wh{\mb M}$. We accomplish this directly, using a Neumann series argument: when  setting $r \lesssim L^{-1/2}$ and the dimension of the subspace $S$ proportional to $1/r$, the minimum eigenvalue of $\wh{\mb M}$ over $S$ exceeds the norm of the residual operator $\mb \Theta^\circ - \wh{\mb M}$ (\Cref{lem:main_res_norm_compare}). This argument leverages a decomposition of the domain into ``near'', ``far'' and ``winding'' pieces, whose contribution to $\mb \Theta^\circ$ is controlled using the curvature, angle injectivity radius and $\fco$-number (\Cref{lem:near_ub}, \Cref{lem:winding_ub}, \Cref{lem:far_ub}). This guarantees the strict invertibility of $\mb \Theta^\circ$ over the subspace $S$, and yields a unique solution $g_S$ to the  {\em restricted} equation $\mb P_S \mb \Theta^\circ [ g_S ] = \zeta$ (\Cref{thm:certificate_over_S}). 

This does not yet solve the certificate problem, which demands near solutions to the {\em
unrestricted} equation $\mb \Theta^\circ [ g ] = \zeta$. To complete the argument, we set $g =
g_S$ and use harmonic analysis considerations to show that $\mb \Theta^\circ [ g ]$ is very close
to $S$. The subspace $S$ contains functions that do not oscillate rapidly, and hence whose
derivatives are small relative to their norm (\Cref{lem:fourier_deriv_lb}). We prove that $\mb
\Theta^\circ [ g ]$ is close to $S$ by controlling the first three derivatives of $\mb
\Theta^\circ[g]$, which introduces dependencies on $M_1, \cdots, M_5$ in the final statement of our
results (\Cref{lem:Theta_Sperp_proj}). In controlling these derivatives, we leverage the assumption that $\sup_{\vx,\vx' \in \manifold} \angle(\vx, \vx') \leq \pi/2$ to avoid issues that arise at antipodal points---we believe the removal of this constraint is purely technical, given our sharp characterization of the decay of $\psi^\circ$ and its derivatives. %
Finally, we move from $\mb \Theta^\circ$ back to $\mb
\Theta$ by combining near solutions to $\mb \Theta^\circ[ g ] = \zeta$ and $\mb \Theta^\circ [g_1
] = 1$, and iterating the construction to reduce the approximation error to an acceptable level
(\Cref{sec:certificate_dc_density}).

\section{Discussion}
\label{sec:discussion}

\paragraph{A role for depth.} 
In the setting of fitting functions on the sphere $\Sph^{\adim -1}$ in the NTK regime with
unstructured (e.g., uniformly random) data, it is well-known that there is very little marginal
benefit to using a deeper network: for example,
\cite{Mei2021-qg,ghorbani2019linearized,Ghorbani2020-ef} show that the risk lower bound for RKHS
methods is nearly met by kernel regression with a $2$-layer network's NTK in an asymptotic
($\adim \to \infty$) setting, and results for fitting degree-$1$ functions in the nonasymptotic
setting \cite{Montanari2020-qh} are suggestive of a similar phenomenon. 
In a similar vein, fitting in the NTK regime with a deeper network does not change the kernel's
RKHS \cite{Chen2020-pm, Geifman2020-fd, Bietti2020-hr}, 
and in a certain ``infinite-depth'' limit,
the corresponding NTK for networks with ReLU activations, as we consider here, is a spike,
guaranteeing that it fails to generalize \cite{Liang2020-nr, Huang2020-pn}.
Our results are certainly not in contradiction to these facts---we consider a setting where the
data are highly structured, and our proofs only show that an appropriate choice of the depth
relative to this structure is \textit{sufficient} to guarantee generalization, not necessary---but
they nonetheless highlight an important role for the network depth in the NTK regime that has not
been explored in the existing literature.
In particular, the localization phenomenon exhibited by the deep NTK is completely inaccessible by
fixed-depth networks, and simultaneously essential to our arguments to proving
\Cref{thm:main_formal}, as we have described in \Cref{sec:proof_sketch}.
It is an interesting open problem to determine whether there exist low-dimensional geometries that
cannot be efficiently separated without a deep NTK, or whether the essential sufficiency of the
depth-two NTK persists.

\paragraph{Closing the gap to real networks and data.} \Cref{thm:main_formal} represents an
initial step towards understanding the interaction between neural networks and data with
low-dimensional structure, and identifying network resource requirements sufficient to guarantee
generalization. There are several important avenues for future work. First, although the resource
requirements in \Cref{thm:certificate_main}, and by extension \Cref{thm:main_formal}, reflect only
intrinsic properties of the data, the rates are far from optimal---improvements here will demand a more refined harmonic analysis argument beyond the localization approach we take in \Cref{sec:pf_sketch_loc}.
A more fundamental advance would consist of extending the analysis to the setting of a model for
image data, such as cartoon articulation manifolds, and the NTK of a convolutional neural network
with architectural settings that impose translation invariance \cite{Novak2018-sv, Li2019-bw}---%
recent results show asymptotic statistical efficiency guarantees with the NTK of a simple convolutional architecture, but only in the context of generic data \cite{Mei2021-tv}. %
The approach to certificate construction we develop in \Cref{thm:certificate_main} will be of use
in establishing guarantees analogous to \Cref{thm:main_formal} here, as our
approach does not require an explicit diagonalization of the NTK. 
In addition, extending our certificate construction approach to smooth manifolds of dimension
larger than one is a natural next step. 
We believe our localization argument generalizes to this setting: as our bounds for the kernel
$\psi$ are sharp with respect to depth and independent of the manifold dimension, one could
seek to prove guarantees analogous to \Cref{thm:certificate_main} with a similar
subspace-restriction argument for sufficiently regular manifolds, such as manifolds diffeomorphic
to spheres, where the geometric parameters of \Cref{sec:geometry_def} have natural extensions.
Such a generalization would incur at best an exponential dependence of the network on the manifold
dimension for localization in high dimensions. 

More broadly, the localization phenomena at the core of our argument appear to be relevant beyond
the regime in which the hypotheses of \Cref{thm:main_formal} hold: we provide a preliminary
numerical experiment to this end in \Cref{sec:exp_depth}. Training fully-connected networks with gradient descent on a simple manifold classification task, low training error appears to be easily achievable only when the decay scale of the kernel is small relative to the inter-manifold distance even at moderate depth and width, and this decay scale is controlled by the depth of the network.

\section*{Acknowledgements}
This work was supported by a Swartz fellowship (DG), by a fellowship award
(SB) through the National Defense Science and Engineering Graduate (NDSEG) Fellowship Program,
sponsored by the Air Force Research Laboratory (AFRL), the Office of Naval Research (ONR) and the Army Research Office (ARO), and by the National Science Foundation through grants NSF 1733857, NSF 1838061, NSF 1740833, and NSF 174039. We thank Alberto Bietti for bringing to our attention relevant prior art on kernel regression on manifolds.

\printbibliography

\newpage

\appendix
\etocdepthtag.toc{mtappendix}
\etocsettagdepth{mtchapter}{none}
\etocsettagdepth{mtappendix}{subsection}
\tableofcontents

\section{Details of Figures}
\label{sec:fig_details}

\subsection{{\Cref{fig:winding}}}

\paragraph{$\fco$-number experiment.} \label{sec:clover_exp} In each panel, the two curves are projection of curves $\vx_+ :
[0, 2 \pi] \to \Sph^{3}$ and $\vx_- : [0, 2 \pi] \to \Sph^{3}$. We actually generate the curves as
shown in the figure (i.e., in a three-dimensional space), then map them to the sphere using the map $(u, v, w)
\mapsto (u, v, w,  \sqrt{1 - u^2 - v^2 - w^2})$. In this three-dimensional space, the top left
panel's blue curve (denoted $\vx_{-}$ henceforth) and each panel's red curve (denoted $\vx_{+}$
henceforth, and which is the same for all panels) are defined by the parametric equations 
\begin{equation*}
\begin{aligned} & \left(\begin{array}{c}
		x_{-,1}(t)\\
		x_{-,2}(t)\\
		x_{-,3}(t)
	\end{array}\right) & =\,\, & \left(\begin{array}{c}
		\cos(4t)\\
		\cos\left(\frac{\pi}{8}\right)\cos(t)\left(\sin(4t)+1+\delta\right)+\sin\left(\frac{\pi}{8}\right)\sin(t)\left(\sin(4t)+1+\delta\right)\\
		-\sin\left(\frac{\pi}{8}\right)\cos(t)\left(\sin(4t)+1+\delta\right)+\cos\left(\frac{\pi}{8}\right)\sin(t)\left(\sin(4t)+1+\delta\right)
	\end{array}\right)\\
	& \left(\begin{array}{c}
		x_{+,1}(t)\\
		x_{+,2}(t)\\
		x_{+,3}(t)
	\end{array}\right) & =\,\, & \left(\begin{array}{c}
		4\sin(t)\\
		4\left(\cos(t)-1\right)\\
		0
	\end{array}\right),
\end{aligned}
\end{equation*}
where $\delta$ sets the separation between the manifolds and is set here to $\delta=0.05$. We then
rescale both curves by a factor $.01$: %
the scale of the curves is chosen such that the curvature of the sphere has a
negligible effect on the curvature of the manifolds (since the chart mapping we use here distorts
the curves more nearer to the boundary of the unit disk $\set{(u, v, w) \given u^2 + v^2 + w^2 \leq
  1}$).\footnote{Although this adds a minor confounding effect to our experiments with certificate
  norm in the top-right panel, it is suppressed by setting the scale sufficiently small, and it
  can be removed in principle by using an isometric chart for the upper hemisphere instead of the
map given above. }

From here, we use an ``unfolding'' process to obtain the blue curves in the other three panels
from $\vx_{-}$.
To do this,
points where $|\frac{\mathrm{d}x_{-,2}}{\mathrm{d}t}|=|\frac{\mathrm{d}x_{-,3}}{\mathrm{d}t}|$ are
found numerically.
There are $8$ such points in total, and parts of the curve between pairs of these points are
reflected across the line defined by such a pair in the $(x_{2},x_{3})$ plane. This can be done for any number of pairs between
$1$ and $4$, generating the curves shown. This procedure ensures that aside from the set of $8$
points, the curvature at every point along the curve is preserved and there is no discontinuity in
the first derivative, while making the geometries loop back to the common center point more.
For an additional visualization of the geometry, see \Cref{fig:slinky_full}.\footnote{For a
  three-dimensional interactive visualization, see
\url{https://colab.research.google.com/drive/1xmpYeLK606DtXOkJEt_apAniEB9fARRv?usp=sharing}.}
\begin{figure}
	\vspace{-2in}
	    \begin{overpic}[width=\textwidth]{./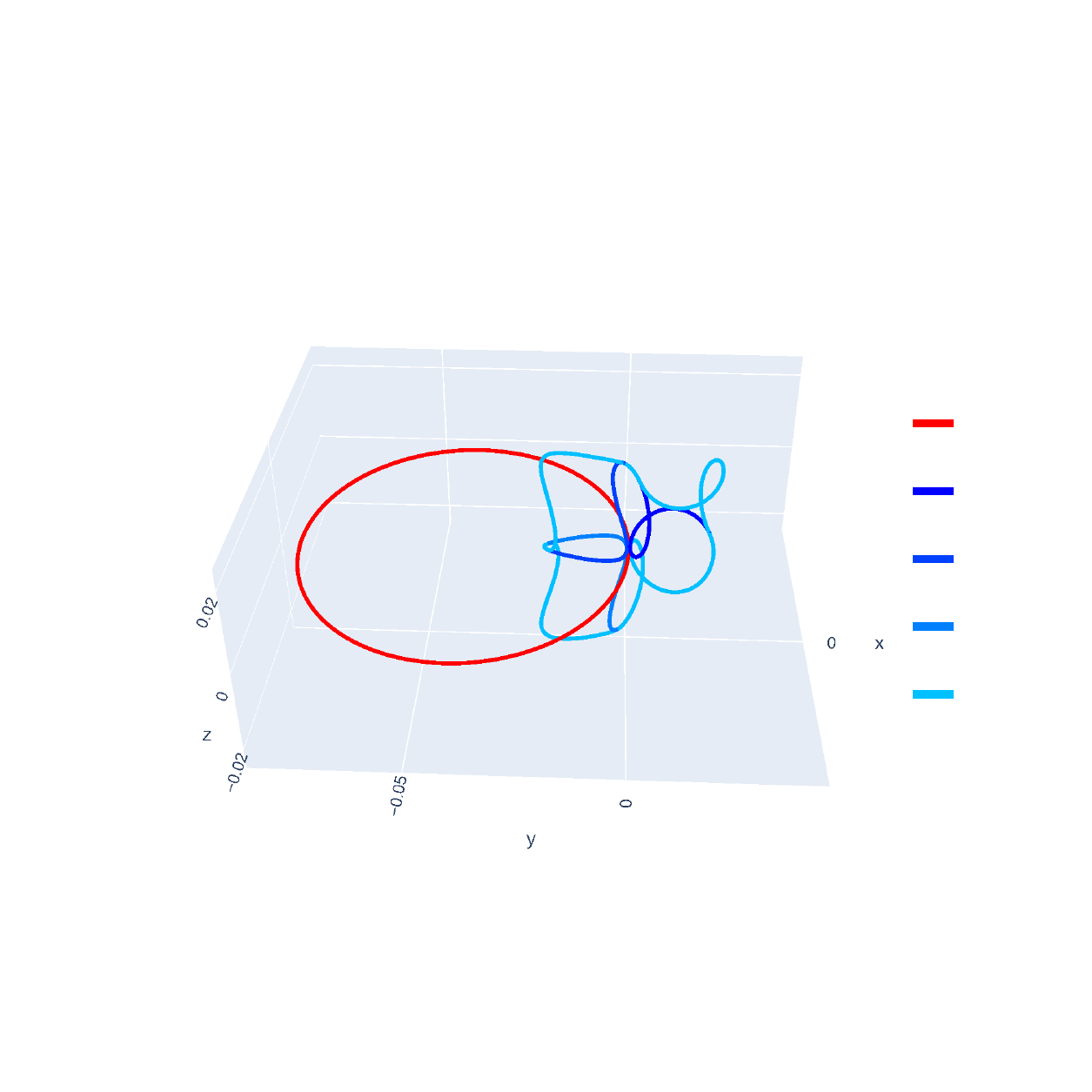}
		\put(90,35.2){$\fco = 1$}
		\put(90,41.4){$\fco = 2$}
		\put(90,47.6){$\fco = 3$}
		\put(90,53.8){$\fco = 4$}
		\put(90,60.5){$\mplus$}
	\end{overpic}
\vspace{-1.5in}
\caption{The two curve geometry described in \Cref{sec:clover_exp}. The different choices of $\mminus$ that lead to different $\fco$-number are overlapping. The legend indicates the $\fco$-number of the two curves problem obtained by considering the same $\mplus$ but a different $\mminus$ as indicated by the color.} 
\label{fig:slinky_full}
\end{figure} 

Given these geometries, in order to compute the certificate norm for the experiment in the
top-right panel, we evaluate the resulting curves at $200$ points each, chosen by picking
equally spaced points in $[0,2\pi]$ and evaluating the parametric equations. The certificate
itself is evaluated numerically as in \Cref{sec:cert_fig}.

\paragraph{Rotated MNIST digits.} We rotate an MNIST image around its center by $i * \pi / 100$ for integer $i$ between $0$ and $199$. We then apply t-SNE \cite{van2008visualizing} using the scikit-learn package with perplexity $20$ to generate the embeddings. 

\subsection{{\Cref{fig:certificates_stuff}}} \label{sec:cert_fig}
We give full implementation details for this figure here, mixed with conceptual ideas that
underlie the implementation. The
manifolds $\mplus$ and $\mminus$ are defined by parametric equations $\vx_+ : [0, 1] \to \Sph^{2}$
and $\vx_- : [0, 1] \to \Sph^{2}$; it is not practical to obtain unit-speed parameterizations of
general curves, so we also have parametric equations for their derivatives $\dot{\vx}_{\sigma} :
[0, 1] \to \bbR^2$.  These are important in our setting since for non-unit-speed curves, the chain
rule gives for the integral of a function (say) $f: \mplus \to \bbR$
\begin{equation*}
  \int_{\mplus} f(\vx) \diff \vx = \int_{[0,1]} (f \circ \vx_+(t)) \norm{ \dot{\vx}_+(t) }_2 \diff t.
\end{equation*}
In particular, in our experiments, we want to work with a uniform density $\rho = (\rho_+,
\rho_-)$ on the manifolds, where the classes are balanced.  To achieve this, use the previous
equation to get that we require
\begin{align*}
  1 &= \int_{\mplus} \rho_+(\vx) \diff \vx + \int_{\mminus} \rho_-(\vx) \diff \vx\\
  &= \int_{\mplus} (\rho_+ \circ \vx_+)(t) \norm{ \dot{\vx}_+(t) }_2\diff t
  + \int_{\mminus} (\rho_- \circ \vx_-)(t) \norm{ \dot{\vx}_-(t) }_2 \diff t.
\end{align*}
A uniform density on $\manifold$ is not a constant value---rather, it is characterized by being
translation-invariant. It follows that $\rho_{\sigma}$ should be defined by
\begin{equation*}
  \rho_\sigma \circ \vx_\sigma(t) = \frac{1}{2 \norm{\dot{\vx}_\sigma(t)}_2} .
\end{equation*}

For the experiment, we solve a discretization of the certificate problem, for which the
above ideas will be useful. Consider $\Theta$ in \Cref{eq:body_ntk} for a fixed depth $L$ (and
$n=2$, since width is essentially irrelevant here). By the above discussion, the certificate
problem in this setting is to solve for the certificate $g = (g_+, g_-)$
\begin{equation*}
  f_{\star} = \frac{1}{2} \left(
    \int_{[0,1]} \Theta(\spcdot, \vx_{+}(t)) g_+ \circ \vx_{+}(t) \diff t
    +\int_{[0,1]} \Theta(\spcdot, \vx_{-}(t)) g_- \circ \vx_{-}(t) \diff t
  \right).
\end{equation*}
Here, we have eliminated the initial random neural network output $f_{\vtheta_0}$ from the RHS.
Aside from making computation easier, this is motivated by fact that the network output is
approximately piecewise constant for large depth $L$, and we therefore expect it not to play much
of a role here. Let $M \in \bbN$ denote the discretization size. Then a finite-dimensional
approximation of the previous integral equation is given by the linear system
\begin{equation}
  f_{\star} \circ \vx_{\sigma}(t_i) = \frac{1}{2M} \left(
    \sum_{j=1}^M \Theta(\vx_{\sigma}(t_i), \vx_{+}(t_j)) g_+ \circ \vx_{+}(t_j)
    + \sum_{j=1}^M \Theta(\vx_{\sigma}(t_i), \vx_{-}(t_j)) g_- \circ \vx_{-}(t_j)
  \right)
  \label{eq:experiment_discretiz}
\end{equation}
for all $i \in [M]$ and $\sigma \in \set{\pm 1}$, and where $t_i = (i-1) / M$. Of course,
$f_{\star} \circ \vx_{\sigma}(t) = \sigma$, so the equation simplifies further, and because the
kernel $\Theta$ and this target $f_{\star}$ are smooth, there is a convergence of the data in this
linear system in a precise sense to the data in the original integral equation as $M \to \infty$.
In particular, define a matrix $\vT^{+}$ by $T^+_{ij} = \Theta(\vx_{+}(t_i), \vx_{+}(t_j))$,
define a matrix $\vT^{-}$ by $T^-_{ij} = \Theta(\vx_{-}(t_i), \vx_{-}(t_j))$, and define a matrix
$\vT^{\pm}$ by $T^{\pm}_{ij} = \Theta(\vx_{+}(t_i), \vx_{-}(t_j))$, all of size $M \times M$. Then
the $2M \times 2M$ linear system
\begin{equation}
  \begin{bmatrix}
    \One \\
    -\One
  \end{bmatrix}
  =
  \frac{1}{2M}
  \begin{bmatrix}
    \vT^+ & \vT^{\pm} \\
    \left(\vT^{\pm}\right)\adj & \vT^{-}
  \end{bmatrix}
  \begin{bmatrix}
    \vg_+ \\
    \vg_-
  \end{bmatrix}
  \label{eq:experiment_discretiz_2}
\end{equation}
is equivalent to the discretization in \Cref{eq:experiment_discretiz}. We implement and solve the
system in \Cref{eq:experiment_discretiz_2} using the definitions we have given above, using the
pseudoinverse of the $2M \times 2M$ matrix appearing in this expression to obtain $[\vg_+,
\vg_-]\adj$, and plot the results in \Cref{fig:certificates_stuff}, in particular interpreting
$(\vg_{\sigma})_{i}$ as the sampled point $g_{\sigma} \circ \vx_{\sigma}(t_i)$ as in
\Cref{eq:experiment_discretiz} when we plot in the left panel of \Cref{fig:certificates_stuff}.
Evidently, it would be immediate to modify the experiment to replace the LHS of
\Cref{eq:experiment_discretiz} by the error $f_{\vtheta_0} - f_{\star}$: the same protocol given
above would work, but there would be an element of randomness added to the experiments.

Specifically, in \Cref{fig:certificates_stuff} we set $M = 900$. When plotting the solution to
\Cref{eq:experiment_discretiz_2}, i.e. the vector $[\vg_+, \vg_-]\adj$, we moreover scale the
vector by a factor of $0.3$ to facilitate visualization.

\subsection{Kernel Decay Scale and Trainability of Realisting Networks: Empirical Evidence}\label{sec:exp_depth}

\definecolor{forestgreen}{rgb}{0,0.5,0}
\begin{figure}[!htb]
	\centering{
		\begin{overpic}[width=0.7\textwidth]{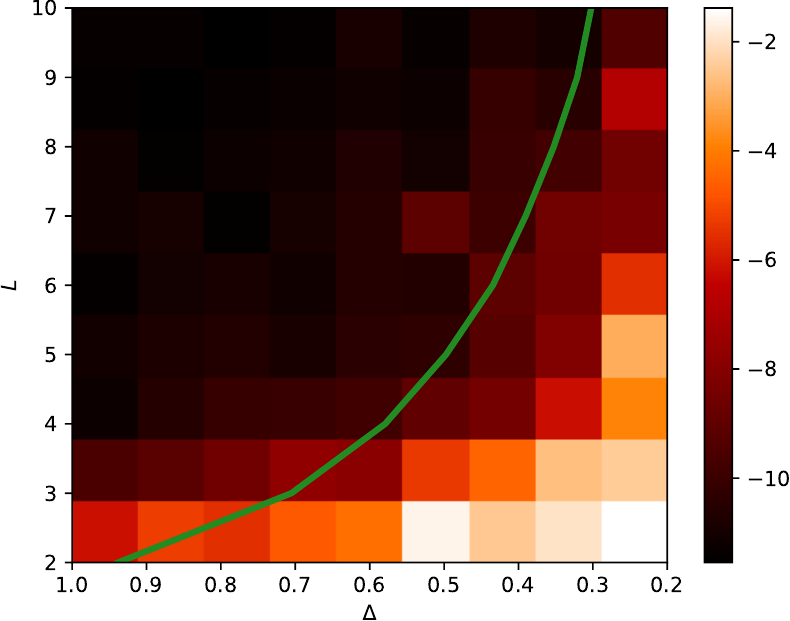}
			\put(102,58){\rotatebox{270}{Log Training Error}}
			\put( 15,  -32) {\frame{\includegraphics[scale=.5]{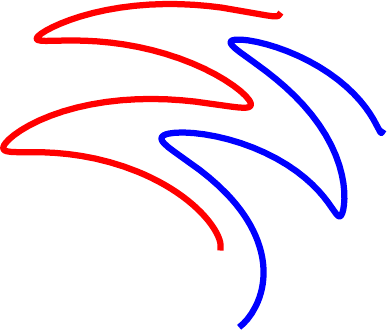}}}
			\put( 51,  -32) {\frame{\includegraphics[scale=.5]{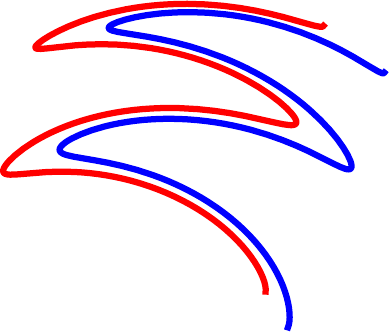}}}
			
			\put( 15.00,  65) {\tikz \draw[forestgreen, line width=1mm] (0,0)--(.9,0);}
			\put( 19,  -7.7) {\tikz \draw[line width=0.5mm] (0,0)--(-1.25,1.2);}
			\put( 64,  -7.7) {\tikz \draw[line width=0.5mm] (0,0)--(1.25,1.2);}
			\put(27,65){{\color{white}{$\psi^{\circ}$ Decay Scale}}}
		\end{overpic}
		\vspace{1.5in}
		\caption{\textbf{The decay properties of the NTK are predictive of trainability on a toy dataset.} We plot the log training error of networks of varying depth that are trained to classify two curves with varying separation. The insets show a projection of the geometry onto the plane for separation values $0.3$ and $0.9$. For each depth $L$, the characteristic decay scale of the DC-subtracted NTK ($\psi^{\circ}$) is computed numerically and plotted in green. We find that small training loss is only achievable if the decay scale of the kernel is small compared to the inter-manifold distance, hence the decay scale is predictive of trainability.}
		\label{fig:depth}
	}
\end{figure}

One of the main insights into the manifold classification problem that is utilized to obtain \cref{thm:main_formal} is that (roughly speaking) the depth of a fully-connected network controls the decay properties of the network's NTK, and that fitting can be guaranteed once the decay occurs on a spatial scale that is small relative to certain geometric properties of the data. Here we provide empirical evidence that this phenomenon holds beyond the regime in which our main theorems hold, and in fact is relevant for networks of moderate width and depth as well. 

We draw $400$ samples each from a uniform distribution over a union of two curves that are related
by a rotation by a geodesic angle that is varied from $0.2$ to $1.0$ in increments of $0.1$. The
curves are not linearly separable even for large angle (see insets in \cref{fig:depth}). These
curves are embedded in $\mathbb{S}^{n_{0}-1}$ for $n_0=128$ and subjected to a rotation drawn
uniformly from the Haar measure. We then train a fully-connected network to classify the curves
using $\ell^2$ loss. The network has width $n=256$ and we vary the depth from $L=2$ to $L=10$, and
train using full-batch gradient descent for $10^5$ iterations with learning rate $\tau = 1 / (4 n
L)$ (so that the total effective "training time" is independent of depth). We plot the log
training error after training as a function of depth and the inter-manifold distance. For each
depth $L$, we estimate an effective ``decay scale'' of the DC-subtracted skeleton $\psi^{\circ}$
by determining the point $s^\star$ such that $\psi^{\circ}(s^{\star})=\frac{\psi^{\circ}(0)}{2}$. 

The results are presented in \cref{fig:depth}. We observe that the network convergences to small
training loss only when the depth is large comparable to the inverse of the manifold separation.
As the depth represents the decay rate of the NTK, this indicates that a deeper network generates
a localized NTK, allowing faster decay of the training error and making the classification problem
easy. Notice that since the geometry of the dataset and network architecture do not satisfy all
the assumptions of \cref{thm:main_formal}, the experiment provides evidence that the underlying
phenomena regarding the role of the depth hold in greater generality. This preliminary result also
suggests that the connection between the network architecture and the data geometry, as expressed
through the decay properties of the NTK, %
can have a dramatic effect on the training process even for fully-connected networks.

\newcommand{\Lp}[1]{L^{#1}}

\section{Notation}
\label{sec:notation_general}

We use bold lowercase $\mb x$ for vectors and uppercase $\mb A$ for matrices and operators. We generally use
non-bold notation to represent scalars and scalar-valued functions.  $\R, \bb C, \Z$ are used
for the real numbers, complex numbers and integers, respectively. $\N_0$ represents non-negative
integers, and $\N$ represents the natural numbers. $\R^{n}$ represents $n$-dimensional
Euclidean space, $\bbC^n$ represents the space of complex $n$-tuples (as a $n$-dimensional vector space over $\bbC$) and $\bb{S}^{n-1} \subset \bbR^n$ represents the $n-1$ dimensional sphere centered at zero with
unit radius. 
For a complex number $z = x + i y$ (or a complex-valued function), $\abs{z} = \sqrt{x^2 +
y^2}$ denotes the complex modulus, and $\conj{z} = x - i y$ denotes the complex conjugate. %
For $\vx, \vy \in \bbC^n$, we denote $\norm{\mb x}_{p} = \paren{\sum_{i =
1}^n \abs{x_i}^p}^{1/p}$ as the $p$-norm and $\ip{\vx}{\vy} = \sum_{i=1}^n \conj{x}_i y_i$ as the standard (second-argument-linear) inner product. We use $\mb x^*$ and $\mb A^*$ to represent the conjugate transpose of vectors or matrices of complex numbers (so e.g.\ $\vx\adj \vy = \ip{\vx}{\vy}$).
We use $\mb P_S$ to represent the orthogonal projection operator onto a closed subspace $S$ of a normed vector space (typically a Hilbert space). 

For a Borel measure space $(X, \mu)$ and any measurable function $f: X \to \bbC$, we use
$\norm{f}_{L^p_\mu} =\linebreak (\int_{\mb x \in X}\abs{f(\mb x)}^p d \mu(\mb x))^{1/p}$ to
represent the $L^p$ norm of $f$ for $0 < p < \infty$. We omit the measure from the notation when
it is clear from context. For $p = \infty$, we use
$\norm{f}_{L^\infty_\mu} = \inf\set{C \ge 0 \given\abs{f(\mb x)} \le C \text{ for $\mu$-almost every
$\mb x$}}$ to represent its essential supremum. We denote the $L^p$ space of $(X, \mu)$ by
$L^p_{\mu}(X)$ (or simply $L^p_\mu$ when the space is clear from context), which is formed by all
complex-valued measurable functions with finite $L^p_{\mu}$ norm. For another space $(Y, \nu)$ and a
(linear) operator $\vT: L^p_{\mu}(X) \to L^q_{\nu}(Y)$, we represent its $L_\mu^p \to L_\nu^q$
operator norm as $\norm{\vT}_{L_\mu^p \to L_\nu^q} = \sup_{\norm{f}_{L^p_\mu} = 1}
\norm{\vT[f]}_{L^q_\nu}$. %
When $X= Y$, $\mu = \nu$, and $p = q = 2$ (and $(X, \mu)$ is sufficiently regular), we have a Hilbert space; we write $\ip{f}{g}_{L^2_{\mu}} = \int_{X} \bar{f}(\vx) g(\vx) \diff \mu(\vx)$ for the inner product, and $\vT \adj$ to denote the associated adjoint of an operator $\vT$ (so e.g.\ $f\adj = \ip{f}{}$ denotes the corresponding dual element of a function $f$).
We use $\Id: L_\mu^p(X) \to L_\mu^p(X)$ to denote the identity operator,
i.e.\ $\Id[f] = f$ for every $f \in L^p_{\mu}$. For $S \subset X$, we use $\indicator{S}$ to
represent the indicator function $\indicator{S}(\mb x) = 1, \forall \mb x \in S$ and $0$
otherwise; we will write $\indicator{}$ to denote $\indicator{X}$.   For a map $\varphi: X \to X$
and $i \in N$, we use $\varphi^{[i]}$ to denote its $i$-th fold iterated composition of itself,
i.e. $\varphi^{[i]}(\mb x) = \varphi\paren{\varphi^{(i-1)}(\mb x)}$. For $i \in N$, $f^{(i)}$ is
normally used to represent a function of a real variable $f$'s $i$-th order derivatives. For example, when the space is
a two curve problem instance $\manifold$, if $\mb h: \manifold \to \bbC^n$, we define its
derivatives $\mb h^{(i)}$ in \Cref{eq:dh}; for a kernel $\Theta: \manifold \times
\manifold \to \R$, we define its derivatives along the curve in \Cref{def:deriv-R}. 

For a Borel measure space $(X, \mu)$, a kernel $K$ is a mapping $K : X \times
X \to \R$. We use $\mb K$ for its associated Fredholm integral operator. In other words,
for measurable function $f$ we have $\mb K_{\mu}[g](\mb x) = \int_{\mb x' \in X}
K(\mb x, \mb x')f(\mb x')\diff \mu(\mb x')$. %
When $X$ is a Riemannian manifold, an omitted subscript/measure will always denote the Riemannian
measure. 

We use both lowercase and uppercase letters $c, C$ for absolute constants whose value are
independent of all parameters and $c_\tau$, $C_\tau$ for numbers whose value only depend on some
parameter $\tau$. Throughout the text, $c$ is used to represent numbers whose value
should be small while $C$ is for those whose value should be large. We use $C_1, C_2, \dots$ for
constants whose values are fixed within a proof while values of $C, C', C'', \dots$ may change
from line to line.

\section{Key Definitions}
\subsection{Problem Formulation}
\label{sec:prob_form_extended}

The contents of this section will mirror \Cref{sec:two_curves}, but provide additional technical
details that were omitted there for the sake of concision and clarity of exposition. In this
sense, we will focus on a rigorous formulation of the problem here, rather than on intuition: we
encourage the reader to consult \Cref{sec:two_curves} for a more conceptually-oriented problem
formulation. As in \Cref{sec:two_curves}, we acknowledge that much of this material follows the
technical exposition of \cite{Buchanan2020-er}.

Adopting the model proposed in \cite{Buchanan2020-er}, we let $\manifold_+, \manifold_-$, denote
two class manifolds, each a smooth, regular, simple closed curve in $\mathbb{S}^{n_0-1}$, with
ambient dimension $n_0 \geq 3$. 
We further assume $\manifold$ precludes antipodal points by asking
\begin{equation}\label{eq:max_angle}
    \angle(\mb x, \mb x') \le \pi/2, \quad \forall \mb x, \mb x' \in \manifold. 
\end{equation}
We denote $\manifold = \manifold_+ \cup \manifold_-$, and the data measure supported on
$\manifold$ as $\mu$. We assume that $\mu$ admits a density $\rho$ with respect to the Riemannian
measure on $\manifold$, and that this density is bounded from below by some $\rho_{\min} > 0$. We
will also write $\rho_{\max} = \sup_{\vx \in \manifold} \rho(\vx)$. For background on curves and
manifolds, we refer the reader to to \cite{Absil2009-nc,Lee2018-xb}.

Given $N$ i.i.d.\ samples $(\mb x_1, \cdots, \mb x_N)$ from $\mu$ and their labels, given by the
labeling function $f_{\star} : \manifold \to \set{\pm 1}$ defined by
\begin{equation*}
  f_{\star}(\vx) = \begin{cases}
    +1 & \vx \in \mplus \\
    -1 & \vx \in \mminus,
  \end{cases}
\end{equation*}
we train a fully-connected network with ReLU activations and $L$ hidden layers of width $n$ and
scalar output. We will write $\vtheta = (\weight{1}, \dots, \weight{L+1})$ to denote an abstract
set of admissible parameters for such a network; concretely,
the features at layer $\ell \in \set{1, 2, \dots, L}$ with parameters $\vtheta$ and input $\vx$
are written as
$\bm{\alpha}_{\vtheta}^{\ell}(\bm{x})=\left[\mathbf{W}^{\ell}\bm{\alpha}_{\vtheta}^{\ell-1}(\bm{x})\right]_{+}$,
where $\relu{x} = \max \set{x, 0}$ denotes the ReLU (and we adopt in general the convention of
writing $\relu{\vx}$ to denote application of the scalar function $\relu{}$ to each entry of the
vector $\vx$), with boundary condition
$\bm{\alpha}_{\vtheta}^{0}(\bm{x})=\bm{x}$, and the network output on an input $\bm x$ is written
$f_{\vtheta}(\vx) = \mathbf{W}^{L+1}\bm{\alpha}_{\vtheta}^{L}(\bm{x})$. 
We will also write $\zeta_{\vtheta}(\vx) = f_{\vtheta}(\vx) - f_{\star}(\vx)$ to denote the
fitting error.
We use Gaussian initialization: if $\ell \in \set{1, 2, \dots, L}$, the weights are initialized as 
$W_{ij}^{\ell}\simiid \mathcal{N}(0,\frac{2}{n})$, and the top level weights are initialized as
$W_{i}^{L+1}\simiid \mathcal{N}(0,1)$ in order to preserve the expected
feature norm.\footnote{This initialization style is common in practice (it might be referred to as
  ``fan-out initialization'' in that context), but less common in the theoretical literature on
  kernel regime training of deep neural networks, where a less-natural ``NTK parameterization'' is
  typically employed. A detailed discussion of these differences, and how to translate results for
  one parameterization into those for another, can be found (for example) in \cite[\S
A.3]{Buchanan2020-er}.} In the sequel, we will write $\vtheta_0$ to denote the collection of these
initial random parameters, and therefore $f_{\vtheta_0}$ to denote the initial random network. 

We will employ a convenient ``empirical measure'' notation to concisely represent finite-sample and
population quantities in the analysis. Let $\mu^N = \frac{1}{N} \sum_{i=1}^N \delta_{\set{\vx_i}}$
denote the empirical measure associated to our i.i.d.\ random sample from the population measure
$\mu$, where $\delta_{\vp}$ denotes a Dirac measure at a point $\vp$.
We train on the square loss $\loss_{\mu^N}(\vtheta) = (1/2) \int_{\manifold}
\left(\zeta_{\vtheta}(\vx)\right)^2 \diff \mu^N(\vx)$ (of course one simply has
$\loss_{\mu^N}(\vtheta) = 1/(2N) \sum_{i=1}^N \left(\zeta_{\vtheta}(\vx_i)\right)^2$), which we
minimize using randomly-initialized ``gradient descent'' starting at $\vtheta_0$ with constant step
size $\tau>0$. We put gradient descent in quotations here because the loss $\loss_{\mu^N}$ is only
almost-everywhere differentiable, due to the nondifferentiability of the ReLU activation
$\relu{}$: in this sense our algorithm for minimization is `gradient-like', in that it corresponds
to a gradient descent iteration at almost all values of the parameters. Concretely, we define
\begin{equation*}
  \bfeature{\ell}{\vx}{\vtheta} 
  = \left(
    \weight{L+1} \fproj{L}{\vx} \weight{L} \fproj{L-1}{\vx} \dots \weight{\ell+2} \fproj{\ell+1}{\vx}
  \right)\adj
\end{equation*}
for $\ell = 0, 1, \dots, L-1$, where
\begin{equation*}
  \fsupp{\ell}{\vx} = 
  \set*{
    i \in [n] \given \ip*{\ve_i}{\feature{\ell}{\vx}{\vtheta}}> 0 
  },
  \qquad
  \fproj{\ell}{\vx} = \sum_{i \in \fsupp{\ell}{\vx}} \ve_i \ve_i \adj
\end{equation*}
denotes the orthogonal projection onto the set of coordinates where the $\ell$-th activation at
input $\vx$ is positive (above, $\ve_i$ denotes the $i$-th canonical basis vector, having its
$j$-th entry equal to $1$ if $j=i$ and $0$ otherwise).
Then we define `formal gradients' of the network output with respect to the parameters (denoted by
an operator $\wt{\nabla}$) by 
\begin{equation*}
  \wt{\nabla}_{\weight{\ell}} f_{\vtheta}(\vx)
  =
  \bfeature{\ell-1}{\vx}{\vtheta} \feature{\ell-1}{\vx}{\vtheta}\adj
\end{equation*}
for $\ell \in [L]$, and
\begin{equation*}
  \wt{\nabla}_{\weight{L+1}} f_{\vtheta}(\vx)
  =
  \feature{L}{\vx}{\vtheta}\adj.
\end{equation*}
As stated above, these expressions agree with the actual gradients at points of differentiability
(to see this, apply the chain rule).  We then define a formal gradient of $\loss_{\mu^N}$ by
\begin{equation*}
  \wt{\nabla} \loss_{\mu^N}(\vtheta) = \int_{\manifold} \wt{\nabla} f_{\vtheta}(\vx)
  \zeta_{\vtheta}(\vx) \diff \mu^N(\vx).
\end{equation*}
Thus, our gradient-like algorithm we study here is given by the sequence of parameters
$\vtheta_{k+1} = \vtheta_k - \tau \wt{\nabla} \loss_{\mu^N}(\vtheta_k)$, with $\vtheta_0$ given by
the Gaussian initialization we describe above.

Our study of this gradient-like iteration is facilitated by using kernel regime techniques, which
we will describe now. Formally, the gradient descent iteration implies the following ``error
dynamics'' equation:
\begin{equation*}
  \prederr_{\vtheta^{N}_{k+1}}(\vx)
  =
  \prederr_{\vtheta^{N}_{k}}(\vx) - \tau  
  \int_{\manifold} \ntk_k^N(\vx, \vx') \prederr_{\vtheta^{N}_{k}}(\vx') \diff \mu^N(\vx'),
\end{equation*}
where 
$ \ntk_k^N(\vx, \vx') = \int_0^1 \ip{ \wt{\nabla}{f_{\vtheta_k^N}(\vx')} } { \wt{\nabla}{f_{
\vtheta_{k}^N - t \tau \wt{\nabla} \loss_{\mu^N}(\vtheta_k^N) }(\vx)} } \diff t $.
For a proof of this claim, see \cite[Lemma B.8]{Buchanan2020-er}. As we describe in
\Cref{sec:two_curves}, under suitable conditions on the network width, depth, and the number of
samples, this error dynamics update is well-approximated by a ``nominal dynamics'' update equation 
defined by $\zeta_{k+1} = \left( \Id - \tau \vTheta_{\mu}^{\mathrm{NTK}}\right)[\zeta_k]$ with
boundary condition $\zeta_0 = \zeta_{\vtheta_0}$,
where $\Theta^{\mathrm{NTK}}(\vx, \vx') = \ip{\wt{\nabla} f_{\vtheta_0}(\vx)}{\wt{\nabla}
f_{\vtheta_0}(\vx')}$ is the ``neural tangent kernel''. The analysis of this nominal evolution
leads us to the certificate problem that we have posed in \Cref{sec:two_curves}, and which we
resolve for the two curve problem in this work.

In the remainder of this section, we introduce several notations for quantities related to the
certificate problem which we will refer to throughout these appendices.
We let $\ntk$ denote the following approximation to the neural tangent kernel: 
\begin{equation}\label{def:ntk}
\ntk(\vx, \vx')
=
\frac{n}{2} \sum_{\ell = 0}^{L-1} \prod_{\ell' = \ell}^{L-1}
\left(
1 - \frac{\ivphi{\ell'}( \angle(\vx, \vx') )}{\pi}
\right),
\end{equation}
where $\ivphi{\ell}$ denotes the $\ell$-fold composition of the angle evolution function
$\varphi(t)=\cos^{-1}\left((1-\frac{t}{\pi})\cos t+\frac{\sin t}{\pi}\right)$.
We let $\prederr$ denote the following piecewise constant approximation to $\zeta_{0}$:
\begin{equation}
\prederr(\vx) = -\target(\vx) + \int_{\manifold} f_{\vtheta_0}(\vx') \diff \mu(\vx'). \label{eq:zeta}
\end{equation}
We also use the notation
\begin{align*}
\ixi{\ell}(t) &= \prod_{\ell' = \ell}^{L-1}\left(
1 - \frac{\ivphi{\ell'}(t)}{\pi}
\right) 
\\
\psi(t) &= \frac{n}{2} \sum_{\ell = 0}^{L-1}\ixi{\ell}(t)
\end{align*}
for convenience. We find it convenient in our analysis to consider $\psi$ and its ``DC
component'', i.e., its value at $\pi$, separately. To this end, we write $\psi^\circ = \psi -
\psi(\pi)$. We also write the subtracted approximate NTK as $\Theta^\circ(\vx, \vx') =
\psi^\circ(\angle(\vx,\vx'))$. As a consequence, we have
\begin{equation}
\psi^{\circ}(\angle(\mb x, \mb x')) = \ \Theta^{\circ}(\mb x, \mb x') =  \Theta(\mb x, \mb x') -
\psi(\pi). \label{eqn:psi_circ}
\end{equation}
We use $\mb \Theta_\mu$ to represent the integral operator with
\begin{equation*}
\mb \Theta_\mu[g](x) = \int_{\manifold} \Theta(\mb x, \mb x')g(\mb x')\diff\mu(\mb x'),
\end{equation*}
and similarly for $\mb \Theta_\mu^\circ$.
An omitted subscript/measure will denote the Riemannian measure on $\manifold$.

\subsection{Geometric Properties}
\label{sec:geo_defs}

	We assume our data manifold $\manifold = \manifold_+ \cup \manifold_-$, where $\manifold_+$ and
  $\manifold_-$ each is a smooth, regular, simple closed curve on the unit sphere $\mathbb{S}^{n_0
    - 1}$. Because the curves are regular, it is without loss of generality to assume they are
    unit-speed and parameterized with respect to arc length $s$, giving parameterizations as maps from $[0, \len(\manifold_\sigma)]$ to $\Sph^{\adim - 1}$, as we have defined them in \Cref{sec:geometry_def} of the main body. 
    Throughout the appendices, we will find it convenient to consider periodic extensions of these arc-length parameterizations, which are smooth and well-defined by the fact that our manifolds are smooth, closed curves:
    for $\sigma \in \{\pm\}$, we use $\mb x_{\sigma}(s): \R \to \mathbb{S}^{n_0-1}$ to
    represent these parameterizations of the two manifolds.\footnote{We clarify an abuse of notation we will commit with these
  parameterizations throughout the analysis, which stems from the fact that the curves are closed
  (i.e. topologically circles). That is, there is no preferred basepoint (i.e.\ the points $\vx_{\sigma}(0)$) for the arc
  length parameterizations (the curves are only defined up to translation): because our primary use for these parameterizations
  is in the analysis of extrinsic distances between points on the curves, the basepoint will be irrelevant. }
We require that the two curves are disjoint. Notice that as
the two curves do not self intersect, we have $\mb x_{\sigma}(s) = \mb x_{\sigma'}(s')$ if and
only if $\sigma = \sigma'$  and $s' = s + k \len(\manifold_{\sigma})$ for some $k \in \Z$.
Precisely, our arguments will require our curves to have `five orders' of smoothness, in other
words $\mb x_{\sigma}(s)$ must be five times continuously differentiable for $\sigma \in\set{ +,-}$. 
	
	For a differentiable function $\mb h: \manifold \to \bbC^p$ with $p \in \N$, we define its derivative $\tfrac{d}{ds} \mb h$ as 
	\begin{align*}
	\frac{d}{ds} \mb h(\mb x) &= \left[ \frac{d}{dt} \Bigr|_s \mb h\bigl( \mb x_\sigma(t)\bigr) \right] \Biggr|_{\mb x_{\sigma}(s) = \mb x} \\
	&= \left[ \lim_{t \to 0} \frac{1}{t}(\mb h(\mb x_\sigma(s+t)) - \mb h(\mb x_\sigma(s))) \right] \Biggr|_{\mb x_{\sigma}(s) = \mb x} . \labelthis \label{eq:dh}
	\end{align*}
	We call attention to the ``restriction'' bar used in this notation: it should be read as ``let $s$ and $\sigma$ be such that $\vx_{\sigma}(s) = \vx$'' in the definition's context. This leads to a valid definition in 
  \eqref{eq:dh} because our curves are simple and disjoint, so for any choice $s, s'$ with $\mb x_{\sigma}(s) = \mb
  x_{\sigma}(s') = \mb x$, we have $\mb x_{\sigma}(s+t) = \mb x_{\sigma}(s' + t)$ for all $t$. 
  We will use this notation systematically throughout these appendices.
  We
  further denote its $i$-th order derivative by $\mb h^{(i)}(\mb x)$. For $i \in \N$, we use $\mc
  C^i(\manifold)$ to represent the collection of real-valued functions $h: \manifold \to \R$ whose
  derivatives $h^{(1)}, \dots, h^{(i)}$ exist and are continuous. 

	In particular, consider the inclusion map $\mb \iota: \manifold \to \R^{n_0}$, which is the
  identification $\mb \iota(\vx) = \vx$. Following the definition as above, we have
	\begin{equation}
	\mb \iota^{(i+1)}(\mb x) =\left[  \lim_{t \to 0} \frac{1}{t}(\mb \iota^{(i)}(\mb x_{\sigma}(s+t)) - \mb \iota^{(i)}(\mb x_{\sigma}(s)))\right] \Biggr|_{\mb x_{\sigma}(s) = \mb x} . 
	\label{eq:inclusion_abuse}
	\end{equation}
	In the sequel, with abuse of notation we will use $\mb x^{(i)}$ to represent $\mb \iota^{(i)}(\mb x)$. 
	For example, we will write expressions such as $\sup_{\vx \in \manifold} \norm{\vx^{(2)}}_2$ to denote the quantity $\sup_{\vx \in \manifold} \norm{\mb \iota^{(2)}(\vx)}_2$.
  This notation will enable increased concision, and it is benign, in the sense that it is
  essentially an identification. We call attention to it specifically to note a possible conflict
  with our notation for the parameterizations and their derivatives $\vx^{(i)}_{\sigma}$, which
  are maps from $\bbR$ to $\bbR^{\adim}$ (say), rather than maps defined on $\manifold$.
	In this context, we also use $\dot{\mb x}$ and $\ddot{\mb x}$ to represent first and second derivatives $\mb x^{(1)}$ and $\mb x^{(2)}$ for brevity. 
	We have $\norm{\mb x}_2 = \norm{\dot{\mb x}}_2 = 1$ from the fact that $\manifold \subset
  \Sph^{\adim-1}$ and that we have a unit-speed parameterization. This and associated facts are
  collected in \Cref{lem:x_deriv}.
	
For any real or complex-valued function $h$, the integral operator over manifold can be written as
\begin{align*}
	\int_{\mb x \in \manifold} h(\mb x) d\mu(\mb x) &= \sum_{\sigma = \pm} \int_{s = 0}^{\len(\manifold_{\sigma})} h(\mb x_{\sigma}(s))\rho(\mb x_\sigma(s)) ds, \\
	\int_{\mb x \in \manifold} h(\mb x) d \mb x &= \sum_{\sigma = \pm} \int_{s = 0}^{\len(\manifold_{\sigma})} h(\mb x_{\sigma}(s)) ds. 
\end{align*}

We have defined key geometric properties in the main body, in \Cref{sec:geometry_def}. Our
arguments will require slightly more technical definitions of these quantities, however. In the
remainder of this section, we introduce the same definition of angle injectivity radius and \fco-number
with a variable scale, which helps us in proofs in \Cref{sec:proof_certificate_theorem}. 

First, we give a precise definition for the intrinsic distance $d_{\manifold}$ on the curves.
To separate the notions of ``close over the sphere'' and ``close over the manifold'', we use the
extrinsic distance (angle) $\angle(\mb x, \mb x') = \acos\innerprod{\mb x}{\mb x'}$ to
measures closeness between two points $\mb x$, $\mb x'$ over the sphere. The distance over the
manifold is measured through the intrinsic distance $d_{\manifold}(\mb x, \mb x')$, which takes
$\infty$ when $\mb x$ and $\mb x'$ reside on different components $\manifold_+$ and $\manifold_-$
and the length of the shortest curve on the manifold connecting the two points when they belong to
the same component. More formally, we have 
\begin{equation}
  d_{\manifold}(\mb x, \mb x') 
  = \begin{cases} 
    \inf \setcolon{\abs{s - s'} \given \vx_{\sigma}(s) = \vx,\, \vx_{\sigma}(s') = \vx'}
    & f_{\star}(\vx) = f_{\star}(\vx'), \\
    +\infty & \ow,%
  \end{cases}
  \label{eq:intrinsic_distance_function}
\end{equation}
where the infimum is taken over all valid $\sigma \in \set{+,-}$ and $(s, s') \in \bbR^2$. 
Notice that as the curves $\manifold_{\sigma}$ do not intersect themselves, one has
$x_{\sigma}(s_1) = x_{\sigma}(s_2)$ if and only if $s_1 = s_2 + k \len(\manifold_\sigma)$ for some
$k \in \Z$. Thus for any two points $\mb x, \mb x'$ that belong to the same component
$\manifold_{\sigma}$, the above infimum is attained: there exist $s, s'$ such that $\mb
x_{\sigma}(s) = \mb x, \mb x_{\sigma}(s') = \mb x'$, and $d_{\manifold}(\mb x, \mb x') = \abs{s -
s'}$.

\paragraph{Angle Injectivity Radius}
For $\veps \in (0, 1)$ we define the angle injectivity radius of scale $\veps$ as
\begin{equation}\label{eq:inj_radius_eps_def}
\Delta_{\veps} = \min\left\{\frac{\sqrt{\veps}}{\hat{\kappa}}, \inf_{\mb x, \mb x' \in \manifold}
\left\{ \angle(\mb x, \mb x') \,\middle|\, d_{\mc M}( \mb x, \mb x') \ge \frac{
\sqrt{\veps}}{\hat{\kappa}} \right\}\right\},
\end{equation}
which is the smallest extrinsic distance between two points whose intrinsic distance exceeds $\frac{\sqrt{\veps}}{\hat{\kappa}}$ with
\begin{equation}\label{eq:khat_def}
\hat{\kappa} = \max\left\{\kappa, \frac{2}{\pi}\right\}.
\end{equation}
Observe that for any scale $\veps$, 
$\Delta_{\veps}$ is smaller than inter manifold separation $\min_{\mb x \in \manifold_+, \mb  x' \in \manifold_-} \angle(\mb x, \mb x') $. 

\paragraph{\fco-number}

For $\veps \in (0, 1), \delta \in (0, (1-\veps)]$, we define \fco-number of scale $\veps, \delta$ as
\begin{equation}\label{eq:cnumber_eps_def}
  \fco_{\veps, \delta}(\mc M) = \sup_{\mb x \in \mc M} N_{\mc M}\left( \left\{ \mb x' \; \middle| \; d_{\mc M}( \mb x, \mb x') \ge \frac{\sqrt{\veps}}{\hat{\kappa}} \; \text{and} \; \angle(\mb x, \mb x') \le \frac{\delta\sqrt{\veps}}{\hat{\kappa}} \right\}, \frac{1}{\sqrt{1+\kappa^2}}  \right). 
\end{equation}
Here, $N_{\mc M}( T, \veps )$ is the size of a minimal $\veps$ covering of $T$ in the intrinsic
distance on the manifold. We call the set $ \left\{ \mb x' \, \middle | \,  d_{\manifold}(\mb x,
\mb x') \ge \frac{\sqrt{\veps}}{\hat{\kappa}}, \; \angle( \mb x, \mb x') \le \frac{\delta
\sqrt{\veps}}{\hat{\kappa}} \right\} $ appearing in this definition the winding piece of scale
$\veps$ and $\delta$: it contains points that are far away in intrinsic distance but close in
extrinsic distance. We will give it a formal definition in \Cref{eqn:winding_term}, where it will
play a key role in our arguments.

In the sequel, we denote $\Delta, \fco(\manifold)$ to be the angle injectivity radius and \fco-number
with the specific instantiations $\veps = \frac{1}{20}$ and $\delta = 1 - \veps$. These are key
geometric features used in \Cref{thm:certificate_main} and \Cref{thm:main_formal}. 

\subsection{Subspace of Smooth Functions and Kernel Derivatives}\label{sec:subspace_def}
As the behavior of the kernel and its approximation is easier to understand when constrained in a
low frequency subspace, we first introduce the notion of low-frequency subspace formed by the Fourier
basis on the two curves.

\paragraph{Fourier Basis and Subspace of Smooth Functions} 
We define a Fourier basis of functions over the manifold as 
\begin{equation}\label{eq:fourier_basis}
  \phi_{\sigma, k}( \mb x_{\sigma'}(s) ) = \begin{cases}
    \frac{1}{\sqrt{\mr{len}(\mc M_{\sigma})}} \exp\left(  \frac{ \mf i 2 \pi k s }{\mr{len}(\mc M_{\sigma})} \right), & \sigma' = \sigma\\
    0, & \sigma' \ne \sigma
  \end{cases}
\end{equation}
for each $k = 0, 1, \dots$, 
and further define a subspace of low frequency functions
\begin{equation}\label{eq:smooth_subspace}
  S_{K_{+}, K_{-}} = \mr{span}_{\bbC}\{ \phi_{+, 0}, \phi_{+, -1}, \phi_{+, 1}, \dots, \phi_{+, -K_+}, \phi_{+, K_+}, \phi_{-, 0}, \dots, \phi_{-, K_-} \}
\end{equation}
for $K_+, K_- \ge 0$. Using the fact that our curves are unit-speed, one can see that indeed
\Cref{eq:fourier_basis} defines an orthonormal basis for $L^2$ functions on $\manifold$.

\section{Main Results} \label{sec:main_thm}

\begin{theorem}[Generalization]\label{thm:we_generalize}
  Let $\manifold$ be two disjoint smooth, regular, simple closed curves, satisfying $\angle(\mb x, \mb x') \le \pi/2$ for all $\mb x, \mb x' \in \manifold$.
  For any $0 < \delta \leq 1/e$, choose $L$ so that
    \begin{align*}
    L &\geq K \max\set*{
      \frac{1}{\left( \Delta(1 + \kappa^2) \right)^{C \fco(\manifold)}},
      C_{\mu} \log^9(\tfrac{1}{\delta}) 
      \log^{24} ( C_{\mu} \adim \log(\tfrac{1}{\delta}) ),
      e^{C' \max \set{\len(\sM)\hat{\kappa}, \log(\hat{\kappa})}},
    P }\\
 n &= K'L^{99}\log^{9}(1/\delta)\log^{18}(L\adim) \\
 N &\geq L^{10},
  \end{align*}
  and fix $\tau > 0$ such that $\frac{C''}{nL^2} \leq \tau \leq \frac{c}{nL}$.
  Then with probability at least $1 - \delta$, the parameters obtained at iteration
  $\floor{L^{39/44} / (n\tau)}$ of gradient descent on the finite sample loss %
  yield a classifier that separates the two manifolds.

  The constants $c, C, C', C'', K, K' > 0$ are absolute, 
  and the constant $C_{\mu}$ is equal to
  $ \frac{\max \set{\rho_{\min}^{19}, \rho_{\min}^{-19}} (1 + \rho_{\max})^{12}  }
  {\left(\min\,\set*{\mu(\mplus), \mu(\mminus)} \right)^{11/2}}$. 
  $P$ is a polynomial $\poly\{M_3, M_4, M_5, \len(\manifold), \Delta^{-1}\}$ of degree at most $36$, with degree at most $12$ when viewed as a polynomial in $M_3, M_4, M_5$ and $\len(\manifold)$, and of degree at most $24$ as a polynomial in $\Delta^{-1}$.
\end{theorem}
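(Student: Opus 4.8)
The plan is to obtain this theorem by feeding the certificate produced by \Cref{thm:certificate_main} into the NTK-regime reduction sketched in \Cref{sec:two_curves} (following \cite{Buchanan2020-er}), together with the high-probability concentration estimates reproduced in \Cref{sec:aux}. With $L$, $n$, $N$, $\tau$ fixed as in the statement, the first step is to condition on the intersection of the concentration events of \Cref{sec:aux}, which holds with probability at least $1-\delta$ once the $\log(1/\delta)$ and $\log(\adim)$ factors in $n$ and the power $L^{10}$ in $N$ are as prescribed. On this event one has, schematically: (i) for $n\gtrsim L$, the finite-width kernels stay uniformly close to the population NTK throughout training, so the true error iterates $\zeta_{\vtheta_k}$ are uniformly approximated by the nominal iterates $\zeta_k=(\Id-\tau\vTheta^{\mathrm{NTK}}_\mu)^k[\zeta_0]$ up to an additive polylogarithmic-over-$L$ term (this uses $N\ge L^{10}$), and in addition $\norm{\vTheta^{\mathrm{NTK}}_\mu-\vTheta_\mu}_{L^2_\mu\to L^2_\mu}\ltsim n/L$ and $\norm{\vTheta^{\mathrm{NTK}}_\mu}_{L^2_\mu\to L^2_\mu}\ltsim nL$; (ii) for $n\gtrsim L^5$, the initial output $f_{\vtheta_0}$ is $L^\infty(\manifold)$-close to the piecewise-constant $\zeta$ of \eqref{eq:zeta}, so $\norm{\zeta_0-\zeta}_{L^2_\mu}\ltsim 1/L$, with $\norm{\zeta}_{L^2_\mu}$ and $\norm{\zeta}_{L^\infty}$ bounded by polylogarithmic factors; and (iii) the iterates $\zeta_{\vtheta_k}$ (and the relevant nominal iterates) have polylogarithmic Lipschitz constant along the curves for every $k$ up to the stopping time $k^\star=\floor{L^{39/44}/(n\tau)}$.

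The second step would be to invoke \Cref{thm:certificate_main}: its hypotheses on $L$ follow from the ones imposed here (the $(\Delta\sqrt{1+\kappa^2})^{-C''\fco}$, $\exp(C'\len(\manifold)\hat\kappa)$ and $P$ thresholds appear directly after adjusting absolute constants; the $C'''\hat\kappa^{10}$ threshold is subsumed by the $\exp(C'\max\{\len(\manifold)\hat\kappa,\log\hat\kappa\})$ term together with the leading constant; and $\rho_{\max}^{12}$ is absorbed into $C_\mu$). This yields a certificate $g$ with $\norm{g}_{L^2_\mu}\le C\norm{\zeta}_{L^2_\mu}/(\rho_{\min}n\log L)$ and $\norm{\vTheta_\mu[g]-\zeta}_{L^2_\mu}\le\norm{\zeta}_{L^\infty}/L$. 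I would then transfer this into an approximate certificate for the pair $(\vTheta^{\mathrm{NTK}}_\mu,\zeta_0)$ using the triangle and Cauchy--Schwarz inequalities exactly as in the display following \Cref{thm:certificate_main}: bounding $\norm{\vTheta^{\mathrm{NTK}}_\mu-\vTheta_\mu}_{L^2_\mu\to L^2_\mu}\norm{g}_{L^2_\mu}\ltsim(n/L)(1/n)=1/L$ and invoking (ii) gives $\norm{\vTheta^{\mathrm{NTK}}_\mu[g]-\zeta_0}_{L^2_\mu}\ltsim 1/L$ while $\norm{g}_{L^2_\mu}\ltsim\norm{\zeta_0}_{L^2_\mu}/n$ (up to polylogarithmic factors and factors folded into $C_\mu$).

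The third step would run the nominal dynamics and pass back to the network. Writing $\zeta_0=\vTheta^{\mathrm{NTK}}_\mu[g]+(\zeta_0-\vTheta^{\mathrm{NTK}}_\mu[g])$ and applying the operator-calculus bounds $\norm{(\Id-\tau\vTheta^{\mathrm{NTK}}_\mu)^k\vTheta^{\mathrm{NTK}}_\mu}_{L^2_\mu\to L^2_\mu}\le\tfrac{1}{\tau(k+1)}$ and $\norm{\Id-\tau\vTheta^{\mathrm{NTK}}_\mu}_{L^2_\mu\to L^2_\mu}\le 1$ --- both valid because the constraint $\tau\le c/(nL)$ forces $\tau\norm{\vTheta^{\mathrm{NTK}}_\mu}_{L^2_\mu\to L^2_\mu}\le 1$ --- gives $\norm{\zeta_k}_{L^2_\mu}\le\norm{g}_{L^2_\mu}/(\tau(k+1))+1/L$. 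With $\tau\ge C''/(nL^2)$ and $k=k^\star=\floor{L^{39/44}/(n\tau)}$ one has $\tau(k^\star+1)\gtrsim L^{39/44}/n$, so $\norm{\zeta_{k^\star}}_{L^2_\mu}\ltsim\norm{\zeta_0}_{L^2_\mu}L^{-39/44}+1/L$, i.e.\ polylogarithmic times $L^{-39/44}$; passing to the true error through (i) preserves this up to another polylogarithmic-over-$L$ term. Finally I would upgrade from $L^2$ to $L^\infty$: combining (iii) with the one-dimensional interpolation inequality $\norm{h}_{L^\infty(\manifold)}^2\ltsim\norm{h}_{L^2_\mu}(\norm{h}_{L^2_\mu}+\norm{h^{(1)}}_{L^2_\mu})$ forces $\norm{\zeta_{\vtheta_{k^\star}}}_{L^\infty(\manifold)}<1$ once $L$ exceeds a threshold of the stated form, and since $\target=\pm 1$ on $\manifold$ this is exactly the statement that $\sign(f_{\vtheta_{k^\star}})=\target$ on all of $\manifold$.

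The hard part is not any single step but the quantitative bookkeeping that makes one fixed set of absolute constants $c,C,C',C'',K,K'$ and one fixed set of exponents ($99$ in $n$, $10$ in $N$, $39/44$ in the stopping time) work all at once. One has to check that $n=K'L^{99}\log^9(1/\delta)\log^{18}(L\adim)$ and $N\ge L^{10}$ dominate every width and sample threshold needed by the concentration events of \Cref{sec:aux} --- these scale as negative powers of $n/L$ and of $N$ against positive powers of $L$ coming from the norm of $\vTheta^{\mathrm{NTK}}_\mu$ and its first several derivatives, which is precisely what pushes the exponent on $n$ so high; that the $L$-lower-bound here really does imply the hypotheses of \Cref{thm:certificate_main}; and, most delicately, that the competing term $\norm{g}_{L^2_\mu}/(\tau k^\star)$, the residual $1/L$, the uniform-over-$k$ actual-versus-nominal gap, and the final Lipschitz--interpolation step all close with the \emph{same} exponent. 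The value $39/44$ is the one that balances the $L^{39/44}$ decay gained from iterating the nominal dynamics against the accumulated discretization and concentration error, so there is essentially no slack to spare, and verifying this is where the real work lies.
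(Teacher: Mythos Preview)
Your high-level plan is sound, and the certificate-transfer step (your second step) matches the paper's argument exactly. The principal difference is that you propose to re-derive the entire NTK-regime dynamics-to-generalization reduction (your steps 1, 4, 5, 6) from scratch, whereas the paper simply invokes \Cref{thm:1} --- the black-box result reproduced from \cite[Theorem B.1]{Buchanan2020-er} --- directly. That theorem already packages the concentration events, the nominal-dynamics analysis, the uniform-over-$k$ actual-versus-nominal gap, and the $L^2_\mu \to L^\infty$ upgrade into a single statement whose hypotheses are exactly (a) the resource conditions on $n$, $L$, $N$, $\tau$ and (b) the existence of a certificate $g$ satisfying \Cref{eq:mainthm_cert_condition}. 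The paper's proof therefore reduces to verifying (a) --- immediate from the stated hypotheses after translating the global regularity constant $C_\lambda$ of \cite{Buchanan2020-er} into the present $\Delta$, $\len(\manifold)$ language --- and (b), which is done by invoking \Cref{thm:skeleton_dc_density}, applying \Cref{thm:2} and \Cref{lem:zeta_approx_bound} to pass from $(\vTheta_\mu, \zeta)$ to $(\vTheta^{\mathrm{NTK}}_\mu, \zeta_0)$, and then matching the resulting bounds to the form \Cref{eq:mainthm_cert_condition} with $\qcert=1$. Your approach would also succeed if executed carefully, but it requires re-proving substantial parts of \cite{Buchanan2020-er}; the specific exponent $39/44$ and the delicate bookkeeping you flag are precisely what is already handled inside \Cref{thm:1}, so there is no need to rebalance them here. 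One minor caution: some ingredients you attribute to \Cref{sec:aux} --- for instance, the uniform Lipschitz control of iterates in your item (iii) --- are not actually reproduced there but live inside the proof of \Cref{thm:1} in \cite{Buchanan2020-er}, which is another reason the black-box route is cleaner.
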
	

\begin{proof}
  The proof is an application of \Cref{thm:1}; we note that the
  conditions on $n$, $L$, $\delta$, $N$, and $\tau$ imply all hypotheses of this theorem, except
  for the certificate condition. We will complete the proof by showing that the certificate
  condition is also satisfied, under the additional hypotheses on $L$ and with a suitable choice
  of $\qcert$.

  First, we navigate a difference in the formulation of the two curves' regularity properties
  between our work and \cite{Buchanan2020-er}, from which \Cref{thm:1} is drawn. 
  \Cref{thm:1} includes a condition $L\geq C \kappa^2_{\mathrm{ext}}
  C_\lambda $ for some absolute constant $C$, where $\kappa^2_{\mathrm{ext}}  =
  {\sup}_{\bm{x}\in\mathcal{M}} \norm{\ddot{\mb x}}_2^2$ is a bound on the extrinsic
  curvature (we will discuss $C_{\lambda}$ momentarily). In our context, we have $M_2 =
  \kappa_{\mathrm{ext}}$, and following \Cref{lem:x_deriv} (using that our curves are unit-speed
  spherical curves), we get that it suffices to require $L \gtrsim (1 + \kappa^2) C_{\lambda}$
  instead. In turn, we can pass to $\hat{\kappa}$: since this constant is lower-bounded by a
  positive number and is larger than $\kappa$, it suffices to require $L \gtrsim \hat{\kappa}^2
  C_{\lambda}$.
  As for $C_\lambda$, this is a constant related to the angle injectivity radius $\Delta$, and is defined by $C_\lambda=K_\lambda^2/c_\lambda^2$, where these two
  constants satisfy 
  \begin{equation*}
    \forall s\in\left(0,c_{\lambda}/\kappa_{\mathrm{ext}}\right],(\mb x,\mb x')\in\mathcal{M}_{\blank}\times\mathcal{M}_{\blank}
    ,\,\blank\in\{+,-\}
    \quad:\quad
    \angle(\vx,\vx')\leq s\Rightarrow d_{\mathcal{M}}(\vx,\vx')\leq K_{\lambda}s.
  \end{equation*}
  We will relate this constant to constants in our formulation.
  Consider any $\bm{x},\bm{x}'\in\mathcal{M}$. If $\angle(\bm{x},\bm{x}')\leq\frac{\Delta}{2}$
  then from the definition of $\Delta$ we have
  $d_{\mathcal{M}}(\bm{x},\bm{x}')\leq\frac{\sqrt{\varepsilon}}{\hat{\kappa}}$ and hence by
  \cref{eqn:near-isometry} we find $d_{\mathcal{M}}(\bm{x},\bm{x}')\leq\angle(\bm{x},\bm{x}')$.
  If on the other hand $\angle(\bm{x},\bm{x}')>\frac{\Delta}{2}$, then a trivial bound gives
  \begin{equation}
    \label{eqn:global_reg_BGW}
    d_{\mathcal{M}}(\bm{x},\bm{x}')\leq\mathrm{len}(\mathcal{M})=\frac{2\mathrm{len}(\mathcal{M})}{\Delta}\frac{\Delta}{2}<\frac{2\mathrm{len}(\mathcal{M})}{\Delta}\angle(\bm{x},\bm{x}').
  \end{equation}
  We can thus choose $c_\lambda=1, K_{\lambda}=\max\left\{
  1,\frac{2\mathrm{len}(\mathcal{M})}{\Delta}\right\}$ to satisfy \cref{eqn:global_reg_BGW},
  giving $C_{\lambda}=\max\left\{ 1,\frac{4\mathrm{len}^{2}(\mathcal{M})}{\Delta^{2}}\right\} $.
  Thus the requirement $L>C \kappa^2_{\mathrm{ext}} C_\lambda $ of \Cref{thm:1} is automatically
  satisfied if $L\gtrsim \max \set{P, e^{C \len(\manifold) \hat{\kappa}}}$ for a suitable exponent
  $C$, where $P$ is the polynomial in the hypotheses of our result, and so our hypotheses imply
  this condition.

  Next, we establish the certificate claim. The proof will follow closely the argument of
  \cite[Proposition B.4]{Buchanan2020-er}. Write $\Theta^{\mathrm{NTK}}$ for the network's
  neural tangent kernel, 
  as defined in \Cref{sec:prob_form_extended}, and $\vTheta^{\mathrm{NTK}}_{\mu}$ for the
  associated Fredholm integral operator on $L^2_\mu$. In addition, write $\zeta_{0} =
  f_{\vtheta_0} - f_{\star}$ for the initial random network error.
  Because we have modified some exponents in the constant $C_{\mu}$, and added conditions on
  $L$, all hypotheses of \Cref{thm:skeleton_dc_density} are satisfied: invoking it, we have that
  there exists $g : \manifold \to \bbR$ satisfying
  \begin{equation*}
    \norm{g}_{L^2_\mu}  \leq C \frac{\norm{\zeta}_{L^2_\mu}}{\rho_{\min} n}   
  \end{equation*}
  and
  \begin{equation*}
    \norm{ \vTheta_\mu[g] - \zeta }_{L^2_\mu} \leq \frac{\norm{\zeta}_{L^\infty}}{L}.
  \end{equation*}
  By these bounds, the triangle inequality, the Minkowski inequality, and the fact that
  $\mu$ is a probability measure, we have
  \begin{align*}
    \norm*{ \ntkoper^{\mathrm{NTK}}_{\mu}[g] - \prederr_0 }_{L^2_{\mu}}
    &\leq
    \norm{\ntk - \ntk^{\mathrm{NTK}}}_{L^\infty(\manifold \times \manifold)}
    \norm{g}_{L^2_{\mu}}
    + \norm{\vTheta_\mu[g] - \zeta}_{L^2_\mu}
    + \norm{\prederr - \prederr_0}_{L^2_{\mu}} \\
    &\leq
    C \norm{\ntk - \ntk_{\mathrm{NTK}}}_{L^\infty(\manifold \times \manifold)}
    \frac{ \norm{\prederr}_{L^\infty(\manifold)}}{n \rho_{\min}}
    + \frac{\norm{\prederr}_{L^\infty}}{L}
    + \norm{\prederr - \prederr_0}_{L^\infty(\manifold)}.
    \labelthis \label{eq:certa_implies_acert}
  \end{align*}
  An application of \Cref{thm:2} gives that on an event of probability at least $1 - e^{-cd}$
  \begin{equation*}
    \norm{\ntk - \ntk^{\mathrm{NTK}}}_{L^\infty(\manifold \times \manifold)}
    \leq Cn/L
  \end{equation*}
  if $d \geq K \log(n n_0 \len(\manifold))$ and $n \geq K' d^4 L^5$. 
  In translating this result from \cite{Buchanan2020-er}, we use that in the context of the two
  curve problem, the covering constant $C_{\manifold}$ appearing in \cite[Theorem
  B.2]{Buchanan2020-er} is bounded by a constant multiple of $\len(\manifold)$ (this is how we
  obtain \Cref{thm:2} and some other results in \Cref{sec:aux}).  An application of
  \Cref{lem:zeta_approx_bound} gives 
  \begin{equation*}
    \Pr*{
      \norm*{\prederr_0 - \prederr}_{L^\infty(\manifold)}
      \leq
      \frac{\sqrt{2d}}{L}
    }
    \geq 1 - e^{-cd}
  \end{equation*}
  and
  \begin{equation*}
    \Pr*{
      \norm*{\prederr_0}_{L^\infty(\manifold)}
      \leq
      \sqrt{d}
    }
    \geq 1 - e^{-cd}
  \end{equation*}
  as long as $n \geq K d^4 L^5$ and $d \geq K' \log(n n_0 
  \len(\manifold))$, where we use these conditions to simplify the residual that appears in
  \Cref{lem:zeta_approx_bound}.
  In particular, combining the previous two bounds with the
  triangle inequality and a union bound and then rescaling $d$, which worsens the constant $c$
  and the absolute constants in the preceding conditions, gives 
  \begin{equation*}
    \Pr*{
      \norm*{\prederr}_{L^\infty(\manifold)}
      \leq
      \sqrt{d}
    }
    \geq 1 - 2e^{-cd}.
  \end{equation*}
  Combining these bounds using a union bound and substituting into
  \Cref{eq:certa_implies_acert}, we get that under the preceding conditions, on an event of
  probability at least $1 - 3 e^{-cd}$ we have
  \begin{align*}
    \norm*{ \ntkoper_\mu^{\mathrm{NTK}}[g] - \prederr_0 }_{L^2_{\mu}}
    &\leq
    \frac{C \sqrt{d}}{L}
    \left(
      2+
      \frac{1}{\rho_{\min} }
    \right) \\
    &\leq
    \frac{C\sqrt{d}}{L} \max \set{\rho_{\min}, \rho_{\min}^{-1}},
    \labelthis \label{eq:cert_err_want}
  \end{align*}
  where we worst-case the density constant in the second line,
  and in addition, on the same event, we have by the norm bound on the certificate $g$
  \begin{equation}
    \norm{g}_{L^2_{\mu}} \leq C
    \frac{\sqrt{d}}{n \rho_{\min}}.
    \label{eq:cert_norm_want}
  \end{equation}
  To conclude, we simplify the preceding conditions on $n$ and turn the parameter $d$ into a
  parameter $\delta > 0$ in order to obtain the form of the result necessary to apply 
  \Cref{thm:1}.
  We have in this one-dimensional setting
  \begin{equation*}
    \len(\manifold) \leq \frac{\len(\mplus)}{\mu(\mplus)} + \frac{\len(\mminus)}{\mu(\mminus)}
    \leq \frac{2}{\rho_{\min}} \leq 2 \max \set{\rho_{\min}, \rho_{\min}^{-1}},
  \end{equation*}
  where the second inequality here uses simply
  \begin{equation*}
    \mu(\mplus) = \int_{\mplus} \rho_+(\vx) \diff \vx \geq \len(\mplus) \rho_{\min}
  \end{equation*}
  (say).
  Because $n \geq 1$ and $\adim \geq 3$ and $\max\set{\rho_{\min}, \rho_{\min}^{-1}} \geq 1$, it
  therefore suffices to instead enforce the condition on $d$ as $d \geq K \log(n\adim C_{\mu})$,
  where $C_\mu$ is the constant defined in the lemma statement.  But note from our hypotheses here
  that we have $n \geq L$ and $L \geq C_{\mu}$; so in particular it suffices to enforce $d \geq K
  \log(n\adim)$ for an adjusted absolute constant.
  Choosing $d \geq (1/c) \log(1/\delta)$, we obtain that the previous two bounds \Cref{eq:cert_err_want,eq:cert_norm_want} hold on an
  event of probability at least $1 - 3\delta$.
  When $\delta \leq 1/e$, given that $\adim \geq 3$
  we have
  $n\adim \geq e$ and $\max \set{\log (1/\delta), \log(n \adim )} \leq
  \log(1/\delta)\log(n\adim)$, so that it suffices to enforce the requirement $d
  \geq K \log(1/\delta)\log(n\adim)$ for a certain absolute constant $K>0$.
  We can then substitute this lower bound on $d$ into the two certificate bounds above to obtain
  the form claimed in \Cref{eq:mainthm_cert_condition} in \Cref{thm:1} with the instantiation
  $q_{\mathrm{cert}}=1$, and this setting of $q_{\mathrm{cert}}$ matches the choice of $C_{\mu}$
  that we have enforced in our hypotheses here. For the hypothesis on $n$, we substitute this
  lower bound on $d$ into the condition on $n$ to obtain the sufficient
  condition $n \geq K' L^5 \log^4(1/\delta) \log^4(n\adim)$. Using
  a standard log-factor reduction (e.g.\ \cite[Lemma B.15]{Buchanan2020-er})
  and possibly worsening absolute constants, we then get that it
  suffices to enforce $n \geq K' L^5 \log^4(1/\delta) \log^4(L \adim
  \log(1/\delta))$, which is redundant with the (much larger) condition on $n$ that we have
  enforced here. This completes the proof.
\end{proof}
    
\begin{theorem}[Certificates]\label{thm:skeleton_dc_density}
\flushleft{Let $\manifold$ be two disjoint smooth, regular, simple closed curves, satisfying $ \angle(\vx, \vx') \leq\pi/2 $ for all $\vx, \vx' \in \manifold$. There exist constants $C, C', C'', C'''$ and a polynomial }\linebreak $P = \poly(M_3,
M_4, M_5, \len(\manifold), \Delta^{-1})$ of degree at most $36$, with degree at most $12$ in
$(M_3, M_4, M_5, \len(\manifold))$ and degree at most $24$ in $\Delta^{-1}$, such that when
	\begin{equation*}
	    L \ge \max\Biggl\{ \exp(C'\len(\manifold) \hat{\kappa}), \paren{ \frac{1}{\Delta \sqrt{1 +
      \kappa^2}}}^{C''\fco(\manifold)}, C'''\hat{\kappa}^{10}, P, \rho_{\max}^{12} \Biggr\},
	\end{equation*}
	then for $\zeta$ defined in \eqref{eq:zeta}, there exists a certificate { $g: \manifold \to \R$} with 
	\begin{equation*}
	\| g \|_{L^2_\mu} \le \frac{C \norm{\zeta}_{L^2_\mu}}{ \rho_{\min} n \log L} 
	\end{equation*} 
	such that
	\begin{equation*}
	    \norm{\mb \Theta_{\mu}[g] - \zeta}_{L^2_\mu} \le \norm{\zeta}_{L^\infty}L^{-1}.
	\end{equation*}
\end{theorem}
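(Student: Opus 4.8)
\emph{Reductions.} The operator $\mb\Theta_\mu$ carries two inessential pieces of structure, and the plan is to peel both off first. Since $\rho$ is pinched between $\rho_{\min}$ and $\rho_{\max}$, a near-solution of $\mb\Theta[h]=\tilde\zeta$ against the Riemannian measure converts, by reweighting against the density, into a near-solution of $\mb\Theta_\mu[g]=\zeta$; this is the source of the $\rho_{\max}^{12}$ requirement on $L$ and of the $\rho_{\min}$ in the certificate norm (cf.\ \Cref{sec:certificate_dc_density}). Next write $\Theta=\Theta^\circ+\psi(\pi)$ with $\psi(\pi)=\min\psi>0$ the DC component: then $\mb\Theta$ differs from $\mb\Theta^\circ$ only by the rank-one operator $g\mapsto\psi(\pi)\bigl(\int g\,\diff\mu\bigr)\One$, so near-solutions of both $\mb\Theta^\circ[g]=\zeta$ and $\mb\Theta^\circ[g_1]=\One$ recombine explicitly into one for $\mb\Theta[g]=\zeta$. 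It therefore suffices to construct, for an arbitrary smooth target $h$ (here $h$ is $\zeta$ or $\One$, and $\zeta$ from \eqref{eq:zeta} is in fact piecewise constant), a function $g$ with $\norm{g}_{L^2}\lesssim\norm{h}_{L^2}/(n\log L)$ and $\mb\Theta^\circ[g]$ close to $h$, and then to iterate the construction $O(\log L)$ times to drive the residual below $\norm{\zeta}_{L^\infty}/L$.

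\emph{Localization.} The engine is the sharp decay analysis of $\psi^\circ$ (cf.\ \Cref{sec:approx_upper_bound}): tracking the fixed-point dynamics of the angle map $\varphi$ near $0$ gives a spike $\psi^\circ(0)\sim nL$, an intermediate profile $\psi^\circ(t)\sim n/t$ for $1/L\lesssim t\lesssim 1$, and fast decay beyond, so that $\int\psi^\circ\sim n\log L$. Fix a localization radius $r\sim L^{-1/2}$. On a single component $\manifold_\sigma$ the near-diagonal part of $\mb\Theta^\circ$ is modeled by the arc-length convolution operator $\wh{\mb M}$ of \eqref{eqn:M-def}, since intrinsic and extrinsic distances agree on scale $r$ up to a curvature-controlled error. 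Being a circular convolution, $\wh{\mb M}$ is diagonalized by the Fourier basis of \Cref{sec:subspace_def}; taking $S$ to be the span of the frequencies of magnitude $\lesssim 1/r$ (so $\dim S\sim\sqrt L$), the Fourier kernel is essentially constant over $[-r,r]$, so $\wh{\mb M}|_S$ has an essentially flat spectrum of size $\sim\int_{-r}^{r}\psi^\circ\sim n\log L$ (\Cref{lem:main_term_pd}). Both the good conditioning of $\wh{\mb M}$ over $S$ and the faithfulness of $\mb\Theta^\circ\approx\wh{\mb M}$ are consequences of depth: large $L$ localizes the kernel.

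\emph{Residual control and restricted inversion.} Next, bound $\norm{\mb\Theta^\circ-\wh{\mb M}}_{L^2\to L^2}$ by a Schur test, splitting the integration domain into a \emph{near} piece (where $\wh{\mb M}$ models $\mb\Theta^\circ$; controlled by curvature, \Cref{lem:near_ub}, contributing $\lesssim n\cdot\mathrm{poly}/L$), a \emph{winding} piece ($d_\manifold$ large but $\angle$ small; its $\fco(\manifold)$ transverse passages each contribute $\lesssim n\log\bigl(1/(\Delta\sqrt{1+\kappa^2})\bigr)$ because of the $n/t$ profile, \Cref{lem:winding_ub}), and a \emph{far} piece ($\angle$ bounded below; $\lesssim n\cdot\mathrm{poly}(\hat\kappa)\len(\manifold)$, \Cref{lem:far_ub}). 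Under the hypotheses on $L$ this residual norm is strictly below $\lambda_{\min}(\wh{\mb M}|_S)\sim n\log L$ (\Cref{lem:main_res_norm_compare}): the far bound forces $\log L\gtrsim\len(\manifold)\hat\kappa$, hence the term $\exp(C'\len(\manifold)\hat\kappa)$; the winding bound forces $\log L\gtrsim\fco(\manifold)\log\bigl(1/(\Delta\sqrt{1+\kappa^2})\bigr)$, hence $(\Delta\sqrt{1+\kappa^2})^{-C''\fco(\manifold)}$; and the near bound together with the derivative estimates below force the polynomial $P$ and the $\hat\kappa^{10}$ term. A Neumann series then inverts $\mb P_S\mb\Theta^\circ\mb P_S$ on $S$, giving a unique $g_S$ with $\mb P_S\mb\Theta^\circ[g_S]=\mb P_S[h]$ and $\norm{g_S}_{L^2}\lesssim\norm{h}_{L^2}/(n\log L)$ (\Cref{thm:certificate_over_S}); since $h$ is (piecewise) constant, $\mb P_{S^\perp}[h]$ is negligible.

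\emph{From restricted to unrestricted; the main obstacle.} The hard part is upgrading $\mb P_S\mb\Theta^\circ[g_S]\approx h$ to $\mb\Theta^\circ[g_S]\approx h$, i.e.\ showing that $\mb P_{S^\perp}\mb\Theta^\circ[g_S]=\mb P_{S^\perp}(\mb\Theta^\circ-\wh{\mb M})[g_S]$ is small (the $\wh{\mb M}$-image lies in $S$). Because $\wh{\mb M}|_S$ has a flat spectrum, $g_S$ is essentially $h/(n\log L)$ and hence $\norm{g_S}_{L^\infty}\lesssim\norm{h}_{L^\infty}/n$; feeding this into bounds on the first three arc-length derivatives of the \emph{residual} kernel $\mb\Theta^\circ-\wh{\mb M}$ — which is where $M_3,M_4,M_5$ enter, via the chain rule applied to $\psi^\circ(\angle(\vx_\sigma(s),\cdot))$, and where the hypothesis $\angle(\vx,\vx')\le\pi/2$ is used to avoid the antipodal singularity of the angle function — and combining with the fact that a function whose third derivative is bounded has $\mb P_{S^\perp}$-component of $L^2$-size at most $(\dim S)^{-5/2}$ times that bound (\Cref{lem:fourier_deriv_lb}, \Cref{lem:Theta_Sperp_proj}), one gets $\norm{\mb P_{S^\perp}\mb\Theta^\circ[g_S]}_{L^2}\lesssim (\text{polynomial in the geometry})\cdot L^{-1/4}\norm{h}_{L^\infty}$, which is $\le\norm{\zeta}_{L^\infty}/L$ once $L$ exceeds $P$. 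Passing this back through the DC recombination and the density reweighting and iterating $O(\log L)$ rounds then produces $g$ with $\norm{g}_{L^2_\mu}\le C\norm{\zeta}_{L^2_\mu}/(\rho_{\min}n\log L)$ and $\norm{\mb\Theta_\mu[g]-\zeta}_{L^2_\mu}\le\norm{\zeta}_{L^\infty}/L$. The crux is that the restricted inversion only controls the $S$-component of $\mb\Theta^\circ[g_S]$, so closing the argument demands genuinely sharp pointwise estimates for $\psi^\circ$ and its first three derivatives valid simultaneously across all of the near, winding, and far regimes; this fine harmonic analysis of the deep NTK, together with establishing the $n/t$ decay profile in the first place, is the technical heart of the proof and the reason the exponents in the hypotheses are as large and intricate as stated.
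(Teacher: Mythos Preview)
Your outline tracks the paper's architecture closely (localize via $\wh{\mb M}$, Neumann-invert on $S$, control the $S^\perp$ leakage by derivatives, then handle DC and density with iteration). Two points where the sketch understates the difficulty:

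\textbf{The Near piece is leading order, not sub-leading.} You state that the near piece contributes $\lesssim n\cdot\mathrm{poly}/L$, but \Cref{lem:near_ub} actually gives $\sim\tfrac{3\pi n}{4(1-\veps)}\log\bigl(\sqrt\veps/(\hat\kappa r_\veps)\bigr)\sim n\log L$, the \emph{same order} as $\lambda_{\min}(\wh{\mb M}|_S)$. (What you may have in mind is the local approximation error $\|\mb M_\veps-\wh{\mb M}_\veps\|\lesssim\hat\kappa^2 n r_\veps^2$ of \Cref{lem:local_diff_ub}, which is indeed $O(n/L)$; but that still leaves the genuine Near integral unaccounted for.) The Neumann series converges not because the residual is lower order but because its \emph{constant} can be made a fixed fraction $<1$ of the main-term constant; this is why the paper threads the scale parameter $\veps$ through the whole construction and tunes $r_\veps$ and $a_\veps=(1-\veps)^3(1-\veps/12)$ so carefully in \Cref{lem:main_res_norm_compare}. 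Without this constant-matching the argument does not close.

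\textbf{The DC recombination requires two scales.} With $h=g+\alpha g_1$ and $\alpha=-\psi(\pi)\indicator{}[g]/(\psi(\pi)\indicator{}[g_1]+1)$, bounding $|\alpha|$ demands a \emph{lower} bound on $\indicator{}[g_1]$. The paper obtains this by building $g_1$ at a \emph{larger} scale $\veps_1=\tfrac{51}{100}$ (versus $\veps_0=\tfrac{1}{20}$ for $g$), so that the Neumann series for $g_1$ is dominated by its first, manifestly positive, term (\Cref{lem:constant_inverse}, \Cref{lem:a_ub}); at a single small scale the sign of $\indicator{}[g_1]$ is uncontrolled and $\alpha$ could blow up. The price of large $\veps_1$ is a worse residual for $g_1$, which is why the iteration in \Cref{lem:certificate_dc} is driven by $\mb P_{S_{\veps_0}}[\mb\Theta h-\zeta]$ rather than by the full residual.

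Minor: the paper bounds $\|\mb P_{S^\perp}\mb\Theta^\circ[g_S]\|$ by controlling the first three arc-length derivatives of $\mb\Theta^\circ[g_S]$ directly, via the ``simultaneous-advance'' kernel derivatives $\Theta^{(i)}$ (\Cref{def:deriv-R}, \Cref{lem:R1}--\Cref{lem:R3}), not derivatives of the residual kernel; and three derivatives buy a factor $r_\veps^3\sim(\dim S)^{-3}$ in $L^2$ (\Cref{lem:fourier_deriv_lb}), not $(\dim S)^{-5/2}$.
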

\begin{proof}
  This is a direct consequence of \Cref{lem:skeleton_dc_density}. Notice that $\zeta$ in
  \Cref{eq:zeta} is a real, piecewise constant function over the manifolds, and therefore has its higher
  order derivatives vanish. This makes it directly belong to $\Phi(\norm{\zeta}_{L^2},
  \frac{1}{20})$ defined in \Cref{def:smoothness} and satisfy the condition in
  \Cref{lem:skeleton_dc_density} with $K = 1$. 
\end{proof}

\section{Proof for the Certificate Problem}
\label{sec:proof_certificate_theorem}

The goal of this section is to prove \Cref{lem:skeleton_dc_density}, a generalized version of
\Cref{thm:skeleton_dc_density}. Instead of showing the certificate exists for the particular
piecewise constant function $\zeta$ defined in \eqref{eq:zeta}, as claimed in
\Cref{thm:skeleton_dc_density}, \Cref{lem:skeleton_dc_density} claims that for \textit{any}
reasonably $\zeta$ with bounded higher order derivatives, there exists a small norm certificate $g$ such that $\mb \Theta_{\mu}[g]
\approx \zeta$.  There are two main technical difficulties in establishing this result. First,
$\mb \Theta$ contains a very large constant term: $\mb \Theta = \mb \Theta^\circ + \psi(\pi)
\indicator{}\indicator{}\adj$.
This renders the operator $\mb \Theta$ somewhat ill-conditioned. Second, the eigenvalues of $\mb
\Theta^\circ$ are not bounded away from zero: because the kernel is sufficiently regular, %
it is possible to
demonstrate high-frequency functions $h$ for which $\| \mb \Theta^\circ [h] \|_{L^2} \ll \| h
\|_{L^2}$. 

Our proof handles these technical challenges sequentially: in \Cref{sec:certificate_inverse_S}, we
restrict attention to the DC subtracted kernel $\Theta^\circ$ and a subspace $S$ containing
low-frequency functions, and show that the restriction $\mb P_S \mb \Theta^\circ \mb P_S$ to $S$
is stably invertible over $S$. In \Cref{sec:certificate_nonsmooth_ub}, we argue that the solution
$g$ to $\mb P_S \mb \Theta^\circ [g] = \zeta$ is regularized enough that $\mb \Theta^\circ [g] \approx
\zeta$, i.e., the restriction to $S$ can be dropped. Finally, in \Cref{sec:certificate_dc_density}
we move from the DC subtracted kernel $\mb \Theta^\circ$ without density to the full kernel $\mb
\Theta_{\mu}$. This
move entails additional technical complexity; to maintain accuracy of approximation, we develop an
iterative construction that successively applies the results of
\Cref{sec:certificate_inverse_S}--\Cref{sec:certificate_nonsmooth_ub} to whittle away approximation
errors, yielding a complete proof of \Cref{lem:skeleton_dc_density}.

\subsection{Invertibility Over a Subspace of Smooth Functions}\label{sec:certificate_inverse_S}
\paragraph{Proof Sketch and Organization.} 
In this section, we solve a restricted version of the certificate problem for DC subtracted kernel
$\mb \Theta^\circ$, over a subspace $S$ of low-frequency functions defined in
\eqref{eq:smooth_subspace}. Namely, for $\zeta \in S$, we demonstrate the existence of a small
norm solution $g \in S$ to the equation
\begin{equation}
\mb P_S \mb \Theta^\circ [g] = \zeta. 
\end{equation}
This equation involves the integral operator $\mb \Theta^\circ$, which acts via
\begin{equation}
\mb \Theta^\circ [g] (\mb x) = \int_{\mb x' \in \mc M} \Theta^\circ(\mb x, \mb x') g(\mb x') d\mb x'.
\end{equation}
We argue that this operator is invertible over $S$, by decomposing this integral into four pieces,
which we call the {\bf Local}, {\bf Near}, {\bf Far}, and {\bf Winding} components. The formal
definitions of these four components follow: for parameters $0 < \veps < 1$, $r>0$, and $\delta >
0$, we define
	\begin{align}
	&\text{\bf [Local]}:&  L_r(\mb x) &= \left\{\mb x' \in \manifold  \, \middle | \,  d_{\manifold}(\mb x, \mb x') < r \right\}, \label{eqn:local_term} \\
	&\text{\bf [Near]}:&  N_{r, \veps}(\mb x) &= \left\{\mb x' \in \manifold  \, \middle | \,  r \le d_{\manifold}(\mb x, \mb x') \le \frac{\sqrt{\veps}}{ \hat{\kappa}}\right\}, \label{eqn:near_term} \\
	&\text{\bf [Far]}:&  F_{\veps, \delta}(\mb x) &= \left\{ \mb x' \in \manifold  \, \middle | \, d_{\manifold}(\mb x, \mb x') \ge \frac{\sqrt{\veps}}{\hat{\kappa}}, \; \angle( \mb x, \mb x' ) > \frac{\delta\sqrt{\veps}}{\hat{\kappa}}\right\}, \label{eqn:faraway_term} \\
	&\text{\bf [Winding]}:&  W_{\veps, \delta}(\mb x) &= \left\{\mb x' \in \manifold  \, \middle | \,  d_{\manifold}(\mb x, \mb x') \ge \frac{\sqrt{\veps}}{\hat{\kappa}}, \; \angle( \mb x, \mb x') \le \frac{\delta \sqrt{\veps}}{\hat{\kappa}}\right\}. \label{eqn:winding_term} 
	\end{align}
It is easy to verify that for any choice of these parameters and any $\mb x \in \mc M$, these four
pieces cover $\manifold$: i.e., $L_r(\mb
x) \cup N_{r,\veps}(\mb x) \cup F_{\veps,\delta}(\mb x) \cup W_{\veps,\delta}(\mb x) = \mc M$.
Intuitively, the {\bf Local} and {\bf Near} pieces contain points that are close to $\mb x$, in
the intrinsic distance on $\mc M$. The {\bf Far} component contains points that are far from $\mb
x$ in intrinsic distance, {\em and} far in the extrinsic distance (angle).  The {\bf Winding}
component contains portions of $\mc M$ that are far in intrinsic distance, but close in extrinsic
distance. Intuitively, this component captures parts of $\mc M$ that ``loop back'' into the
vicinity of $\mb x$.

\paragraph{Parameter choice.} The specific parameters $r,\veps,\delta$ will be chosen with an eye
towards the properties of both $\mc M$ and $\mb \Theta^\circ$. The parameter $\veps \in (0,
\frac{3}{4})$ is a scale parameter, which controls $r = r_\veps$ such that 
\begin{enumerate}
	\item $r$ is large enough to enable the local component $L_r(\mb x)$ to dominate the kernel's
    behavior;
	\item $r$ is not too large, so the kernel stays sharp and localized over the local component
    $L_r(\vx)$. 
\end{enumerate}
Specifically, we choose
\begin{align} 
    a_\veps &= (1- \veps)^3 ( 1 - \veps/ 12 ), \label{eq:a_def}\\
	r_\veps &= 6 \pi L^{-\frac{a_\veps}{a_\veps+1}} \label{eqn:r_def}. 
\end{align} 
Notice that when $\veps \downto 0$, we have $r_\veps
\approx L^{-1/2}$. So with a smaller choice of $\veps$ we may get a larger local component with the
price of a larger constant dependence. 

We further choose $\delta$ to ensure
that the {\bf Near} and {\bf Far} components overlap. To see that this is possible, note that at
the boundary of the {\bf Near} component, $d_{\manifold}(\mb x, \mb x') = \sqrt{\veps} /
\hat{\kappa}$; from \Cref{lem:angle_inverse}, we have
\begin{equation}
\angle(\mb x,\mb x') \;\ge\; d_{\manifold}(\mb x, \mb x') - \hat{\kappa}^2 d_{\manifold}^3(\mb x,\mb x'),
\end{equation}
so at this point $\angle(\mb x,\mb x') \ge (1 - \veps) \sqrt{\veps} / \hat{\kappa}$. Thus as long as
$\delta < 1 - \veps$, {\bf Near} and {\bf Far} overlap. 

\paragraph{Kernel as main and residual.} \label{graf:kernel_main_residual}
The kernel $\Theta^\circ(\mb x,\mb x')$ is a decreasing
function of $\angle(\mb x,\mb x')$: $\Theta^\circ$ is largest over the {\bf Local} component,
smaller over the {\bf Near} and {\bf Winding} components, and smallest over the {\bf Far}
component. By choosing the scale parameter $r_\veps$ as in $\Cref{eqn:r_def}$, we define an operator $\mb M_\veps$ which captures the contribution of the {\bf Local} component to the kernel: 
\begin{equation}
  \mb M_\veps[f](\mb x) = \int_{\mb x' \in L_{r_\veps}(\mb x)} \psi^\circ(\angle(\mb x, \mb x'))f(\mb x')d\mb x'.  \label{eqn:M}
\end{equation}
Because $\angle(\mb x,\mb x')$ is small over $L_{r_\veps}(\vx)$ when $r_\veps$ is chosen to be small compared to inverse curvature $1/\hat{\kappa}$,
on this component, $d_{\manifold}(\mb x, \mb x') \approx \angle(\mb x,\mb x')$ (which we formalize
in \Cref{lem:angle_inverse}). We will use this property to argue that $\mb M_{\veps}$ can be
approximated by a self-adjoint convolution operator, defined as
\begin{align}\label{eq:invariant_op}
  \wh{\mb M}_\veps [ f ]( \mb x) &= \int_{s' = s-r_{\veps}}^{s+r_{\veps}} \psi^\circ(\abs{s-s'}) f(\mb x_{\sigma}(s')) ds' \Biggr|_{\mb x_{\sigma}(s) = \mb x}.
\end{align}
The restriction is valid because for any choice of $\sigma$ and $s$ such that $\mb x_{\sigma}(s) = \mb x$, the RHS has the same value. On the other hand, given that we require $0 < \veps < \frac{3}{4}$, \Cref{eq:a_def,eqn:r_def} show that when $L$ is chosen larger than a certain absolute constant, we have $r_{\eps} \leq \pi$, assuring $\abs{s' - s}$ falls in the domain of $\psi^\circ$, which makes this operator well-defined. We will always assume such a choice has been made in the sequel, and in particular include it as a hypothesis in our results. 

Notice that $\wh{\mb M}_\veps$ is an \textit{invariant operator}: it commutes with
the natural translation action on $\manifold$.
As a result, it diagonalizes in the Fourier basis
defined in \Cref{eq:fourier_basis} (i.e., each of these functions is an eigenfunction of $\wh{\mb
M}_{\veps}$). 
See \Cref{lem:main_term_pd} and its proof for the precise formulation of these properties.
This enables us to study its spectrum on the subspace of smooth functions defined in
\Cref{eq:smooth_subspace} at the specific scale $\veps$, defined as
\begin{equation}\label{eqn:S_eps_def}
S_\veps = S_{K_{\veps, +}, K_{\veps, -}}
\end{equation}
with $K_{\veps, \sigma} = \left\lfloor \frac{ \veps^{1/2} \mr{len}( \mc M_\sigma ) }{ 2 \pi r_\veps }
\right\rfloor$ for $\sigma \in \set{+,-}$.\footnote{Notice that although $\mb \Theta$ and $\zeta$ are real objects, our subspace $S_\veps$ contains complex-valued functions. In the remainder of \Cref{sec:proof_certificate_theorem}, we will work with complex objects for convenience, which means our constructed certificate candidates can be complex-valued. This will not affect our result because (intuitively) the fact that $\mb \Theta$ and $\zeta$ are real makes the imaginary component of the certificate is redundant, and removing it with a projection onto the subspace of real-valued functions will give us the same norm and residual guarantees for the certificate problem. We make this claim rigorous and guarantee the existence of a real certificate in \Cref{lem:skeleton_dc_density}, which is invoked in the proof of our main result on certificates, \Cref{thm:skeleton_dc_density}. } 
In this way, we will establish that $\wh{\mb M}_\veps$
is stably invertible on $S_{\veps}$.

In the remainder of the section, we show the diagonalizability and restricted invertibility of
$\wh{\mb M}_{\veps}$ in \Cref{lem:main_term_pd}, and control the $L^2$ to $L^2$ operator norm of
all four components of $\vTheta^\circ$ in \Cref{lem:local_diff_ub}, \Cref{lem:near_ub},
\Cref{lem:winding_ub} and \Cref{lem:far_ub}. Then we show $\vTheta^\circ$ is stably invertible
using these results by a Neumann series construction (\Cref{lem:main_res_norm_compare}) and
finally prove the main theorem for this section in \Cref{thm:certificate_over_S}. 

\begin{theorem}\label{thm:certificate_over_S} For any $\veps \in (0, \tfrac{3}{4})$, $\delta
  \in (0, 1 - \veps]$, there exist an absolute constant $C$ and constants $C_{\veps},
  C_{\veps, \delta}', C_{\veps}''$ depending only on the subscripted parameters such that if 
	\begin{equation*}
	L \ge \max \Biggl\{ \, \exp\Bigl(C_{\veps, \delta}'\len(\manifold) \hat{\kappa}\Bigr), 
	\paren{1 + \frac{1}{\Delta_{\veps} \sqrt{1 + \kappa^2}}}^{C_{\veps}''\fco_{\veps,
  \delta}(\manifold)},
	 \paren{\veps^{-1/2}12\pi\hat{\kappa}}^{\frac{a_\veps+1}{a_\veps}} ,\; C_\veps \;  \Biggr\},
	\end{equation*}	
	where $a_\veps$, $r_\veps$ as in \Cref{eq:a_def} and \Cref{eqn:r_def} and we set subspace $S_\veps$ and the invariant operator $\wh{\mb M}_{\veps}$ as in  \Cref{eqn:S_eps_def} and \Cref{eq:invariant_op}, 
	we have $\mb P_{S_\veps} \wh{\mb M}_\veps \mb P_{S_\veps}$ is invertible over $S_\veps$, and 
	\begin{equation*}
	\left\| \left(\mb P_{S_\veps}\wh{\mb M}_\veps \mb P_{S_\veps} \right)^{-1} \mb P_{S_\veps} \left(\mb \Theta^\circ - \wh{\mb M}_\veps \right) \mb P_{S_\veps}  \right\|_{L^2 \to L^2} \le 1 - \veps. 
	\end{equation*}
	Moreover, for any $\zeta \in S_\veps$, the equation $\mb P_{S_\veps} \mb \Theta^\circ  [g] = \zeta$ has a unique solution $g_\veps[\zeta] \in S_\veps$ given by the convergent Neumann series
	\begin{equation} \label{eqn:g-neumann}
	g_\veps[\zeta] = \sum_{\ell= 0}^\infty (-1)^\ell \left( \left(\mb P_{S_\veps}\wh{\mb M}_\veps \mb P_{S_\veps} \right)^{-1} \mb P_{S_\veps} (\mb \Theta^\circ - \wh{\mb M}_\veps) \mb P_{S_\veps} \right)^\ell \left(\mb P_{S_\veps}\wh{\mb M}_\veps \mb P_{S_\veps} \right)^{-1} \zeta,
	\end{equation}
	which satisfies 
	\begin{equation}\label{eq:g_def}
	\| g_\veps[\zeta] \|_{L^2} \le \frac{C \norm{\zeta}_{L^2}}{\veps n \log L}.
	\end{equation}
\end{theorem}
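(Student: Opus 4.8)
The plan is to derive \Cref{thm:certificate_over_S} by assembling the lemmas of this subsection into a Neumann series built around the invariant operator $\wh{\mb M}_\veps$. I would first fix $L$ satisfying the four stated lower bounds and record which bound supplies which ingredient. Unwinding \Cref{eq:a_def} and \Cref{eqn:r_def}, the hypothesis $L \ge (\veps^{-1/2}\,12\pi\hat\kappa)^{(a_\veps+1)/a_\veps}$ is exactly $r_\veps \le \sqrt{\veps}/(2\hat\kappa)$; this, together with $L \ge C_\veps$, keeps $\abs{s-s'}$ inside the domain of $\psi^\circ$ over $L_{r_\veps}(\mb x)$, makes the {\bf Local}/{\bf Near}/{\bf Far}/{\bf Winding} decomposition \Cref{eqn:local_term}--\Cref{eqn:winding_term} nondegenerate, and places $L_{r_\veps}(\mb x)$ inside the regime where \Cref{lem:angle_inverse} yields the near-isometry $(1-\veps)\,d_{\manifold}(\mb x,\mb x') \le \angle(\mb x,\mb x') \le d_{\manifold}(\mb x,\mb x')$. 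The bound $L \ge \exp(C'_{\veps,\delta}\len(\manifold)\hat\kappa)$ is the hypothesis of \Cref{lem:far_ub}; the bound $L \ge (1 + (\Delta_\veps\sqrt{1+\kappa^2})^{-1})^{C''_\veps\,\fco_{\veps,\delta}(\manifold)}$ is the hypothesis of \Cref{lem:winding_ub}; and $L \ge C_\veps$ absorbs the residual ``$L$ sufficiently large'' conditions of \Cref{lem:main_term_pd}, \Cref{lem:local_diff_ub}, and \Cref{lem:near_ub}. The constants $C'_{\veps,\delta}$, $C''_\veps$, $C_\veps$ in the statement are taken to dominate their analogues in those lemmas.

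With $L$ so chosen, \Cref{lem:main_term_pd} gives that $\wh{\mb M}_\veps$ is diagonalized by the Fourier basis \Cref{eq:fourier_basis} and is stably invertible over $S_\veps$: $\mb P_{S_\veps}\wh{\mb M}_\veps\mb P_{S_\veps}$ is positive definite on $S_\veps$, hence invertible over $S_\veps$, with $\norm{(\mb P_{S_\veps}\wh{\mb M}_\veps\mb P_{S_\veps})^{-1}}_{L^2\to L^2} \le C/(n\log L)$ for an absolute constant $C$. Restricting the kernel $\psi^\circ(\angle(\cdot,\cdot))$ to the four regions, which cover $\manifold$ and pairwise overlap only on $\mu$-null sets, writes $\mb\Theta^\circ = \mb M_\veps + \mb N + \mb F + \mb W$ with $\mb M_\veps$ as in \Cref{eqn:M} and $\mb N,\mb F,\mb W$ the restrictions to the {\bf Near}, {\bf Far}, {\bf Winding} sets, so that $\mb\Theta^\circ - \wh{\mb M}_\veps = (\mb M_\veps - \wh{\mb M}_\veps) + \mb N + \mb F + \mb W$. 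I would bound $\norm{\mb M_\veps - \wh{\mb M}_\veps}_{L^2\to L^2}$ by \Cref{lem:local_diff_ub} and $\norm{\mb N},\norm{\mb F},\norm{\mb W}$ by \Cref{lem:near_ub}, \Cref{lem:far_ub}, \Cref{lem:winding_ub}, and feed the four estimates into \Cref{lem:main_res_norm_compare} to conclude $\norm{\mb\Theta^\circ - \wh{\mb M}_\veps}_{L^2\to L^2} \le (1-\veps)\,\lambda_{\min}(\mb P_{S_\veps}\wh{\mb M}_\veps\mb P_{S_\veps})$. Since $\mb P_{S_\veps}$ is an orthogonal projection (norm $1$), this already yields the second displayed claim, $\norm{(\mb P_{S_\veps}\wh{\mb M}_\veps\mb P_{S_\veps})^{-1}\mb P_{S_\veps}(\mb\Theta^\circ - \wh{\mb M}_\veps)\mb P_{S_\veps}}_{L^2\to L^2} \le 1-\veps$.

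To finish, on $S_\veps$ I would factor $\mb P_{S_\veps}\mb\Theta^\circ\mb P_{S_\veps} = (\mb P_{S_\veps}\wh{\mb M}_\veps\mb P_{S_\veps})\bigl(\Id_{S_\veps} + (\mb P_{S_\veps}\wh{\mb M}_\veps\mb P_{S_\veps})^{-1}\mb P_{S_\veps}(\mb\Theta^\circ - \wh{\mb M}_\veps)\mb P_{S_\veps}\bigr)$. The parenthesized factor is $\Id_{S_\veps}$ plus an operator of norm $\le 1-\veps < 1$, so it is invertible over $S_\veps$ with inverse the convergent Neumann series $\sum_{\ell\ge0}(-1)^\ell(\cdots)^\ell$ of operator norm at most $1/\veps$; hence $\mb P_{S_\veps}\mb\Theta^\circ\mb P_{S_\veps}$ is invertible over $S_\veps$. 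For $\zeta\in S_\veps$, since $g\in S_\veps$ makes $\mb P_{S_\veps}\mb\Theta^\circ[g]=\zeta$ equivalent to $\mb P_{S_\veps}\mb\Theta^\circ\mb P_{S_\veps}[g]=\zeta$, the unique solution in $S_\veps$ is $g_\veps[\zeta] = \bigl(\Id_{S_\veps} + (\mb P_{S_\veps}\wh{\mb M}_\veps\mb P_{S_\veps})^{-1}\mb P_{S_\veps}(\mb\Theta^\circ - \wh{\mb M}_\veps)\mb P_{S_\veps}\bigr)^{-1}(\mb P_{S_\veps}\wh{\mb M}_\veps\mb P_{S_\veps})^{-1}\zeta$, which on expanding the Neumann series is exactly \Cref{eqn:g-neumann}; and $\norm{g_\veps[\zeta]}_{L^2} \le \tfrac{1}{\veps}\cdot\tfrac{C}{n\log L}\cdot\norm{\zeta}_{L^2}$, which is \Cref{eq:g_def}.

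The real work lives in the lemmas being invoked — the harmonic-analysis spectral lower bound of \Cref{lem:main_term_pd} and, above all, the four residual estimates (\Cref{lem:local_diff_ub}, \Cref{lem:near_ub}, \Cref{lem:winding_ub}, \Cref{lem:far_ub}), where curvature, injectivity radius, $\fco$-number, and length enter and dictate the depth requirements. Within the present theorem the only delicate point is the bookkeeping: checking that the single catch-all hypothesis $L \ge C_\veps$ truly dominates every stray ``$L$ large enough'' condition in \Cref{lem:main_term_pd}, \Cref{lem:local_diff_ub}, \Cref{lem:near_ub} (e.g.\ $r_\veps \le \pi$, $K_{\veps,\sigma} \ge 1$, quantitative validity of the near-isometry on $L_{r_\veps}$), and that the lower bound on $\lambda_{\min}(\mb P_{S_\veps}\wh{\mb M}_\veps\mb P_{S_\veps})$ carries no $\veps$-dependence beyond the $1/\veps$ already exposed by the Neumann factor, so that the constant $C$ in \Cref{eq:g_def} is genuinely absolute.
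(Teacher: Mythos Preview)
Your proposal is correct and follows essentially the same Neumann-series-around-$\wh{\mb M}_\veps$ argument as the paper: invoke \Cref{lem:main_res_norm_compare} (which in turn packages \Cref{lem:local_diff_ub}, \Cref{lem:near_ub}, \Cref{lem:far_ub}, \Cref{lem:winding_ub} via \Cref{lem:young_ineq}) for the $(1-\veps)$ contraction bound, then factor and sum the Neumann series, applying the eigenvalue lower bound from \Cref{lem:main_term_pd} for \Cref{eq:g_def}. Two small attribution corrections: the {\bf Near} and {\bf Far} pieces genuinely overlap by design (so $\mb\Theta^\circ = \mb M_\veps + \mb N + \mb F + \mb W$ is an inequality in absolute value, not an exact decomposition --- harmless since only an upper bound is needed), and the exponential and $\fco$-power conditions on $L$ are not hypotheses of \Cref{lem:far_ub}, \Cref{lem:winding_ub} themselves but rather the thresholds at which, inside \Cref{lem:main_res_norm_compare}, those pieces become small relative to $\lambda_{\min}(\mb P_{S_\veps}\wh{\mb M}_\veps\mb P_{S_\veps})$.
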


\begin{proof}\label{proof:skeleton_dc_subtracted}
	We construct $g \in S_{\veps}$ satisfying $\mb P_{S_{\veps}}\mb \Theta^\circ[g] = \zeta$ by equivalently writing
	\begin{align*}
	    \mb P_{S_{\veps}}\mb \Theta^\circ[g] &= \paren{\mb P_{S_{\veps}}\wh{\mb M}_{\veps}\mb P_{S_{\veps}} + \mb P_{S_{\veps}}\paren{\mb \Theta^\circ - \wh{\mb M}_{\veps}} \mb P_{S_{\veps}}}[g].
	\end{align*}
	Under our hypotheses, \Cref{lem:main_res_norm_compare} implies the invertibility of $\mb P_{S_\veps} \wh{\mb M}_\veps \mb P_{S_\veps}$ with
	\begin{equation}\label{eq:dc_sub_lem_eigen_res_compare}
	\lambda_{\min}\left(\mb P_{S_\veps}\wh{\mb M}_{\veps} \mb P_{S_\veps}\right) \ge \frac{1}{1 - \veps} \left\| \mb \Theta^\circ - \wh{\mb M}_{\veps} \right\|_{L^2 \to L^2},
	\end{equation} 
  where $\lambda_{\min}\left(\mb P_{S_\veps}\wh{\mb M}_{\veps} \mb P_{S_\veps}\right)$ is the minimum
  eigenvalue of the self-adjoint operator $\mb P_{S_\veps}\wh{\mb M}_{\veps} \mb P_{S_\veps} : S_{\veps} \to S_{\veps}$ as shown in
  \Cref{lem:main_term_pd}.  
  In particular, $\mb P_{S_\veps} \wh{\vM}_{\veps}\mb P_{S_\veps}$ is %
  invertible, and the system we seek to solve can be written equivalently as
  \begin{align*}
      \left( \mb P_{S_{\veps}}\wh{\mb M}_{\veps}\mb P_{S_{\veps}} \right)\inv \zeta
	    &=\paren{\Id_{S_{\veps}} +  \paren{\mb P_{S_{\veps}}\wh{\mb M}_{\veps}\mb P_{S_{\veps}}}^{-1}\mb P_{S_{\veps}} \paren{\mb \Theta^\circ - \wh{\mb M}_{\veps}} \mb P_{S_{\veps}}}[g],
  \end{align*}
  where the LHS of the last system is in $S_{\veps}$.
  Next, we argue that the operator that remains on the RHS of the last equation is invertible. Noting that 
	\begin{align*}
    &\left\|\left(\mb P_{S_\veps}\wh{\mb M}_{\veps} \mb P_{S_\veps} \right)^{-1} \mb P_{S_\veps}
  \left(\mb \Theta^\circ - \wh{\mb M}_{\veps}\right) \mb P_{S_\veps}\right\|_{L^2 \to L^2} \\
  &\qquad\qquad\le \quad \left\| \left( \mb P_{S_\veps}\wh{\mb M}_{\veps} \mb P_{S_\veps}\right)^{-1}\right\|_{L^2 \to L^2} \left\| \mb P_{S_\veps} \left(\mb \Theta^\circ - \wh{\mb M}_{\veps}\right) \mb P_{S_\veps}\right\|_{L^2 \to L^2} 
  \\
	&\qquad\qquad\le \quad \lambda_{\min}\left(\mb P_{S_\veps}\wh{\mb M}_{\veps} \mb P_{S_\veps}\right)^{-1}\left\|\mb \Theta^\circ - \wh{\mb M}_{\veps}\right\|_{L^2 \to L^2} \\
	&\qquad\qquad\le \quad 1-\veps
	\labelthis \label{eq:neumann_res_term_bound}
	\end{align*}
	using both \Cref{lem:main_term_pd} and \Cref{eq:dc_sub_lem_eigen_res_compare},
	we have by the Neumann series that 
	\begin{align*}
	    \paren{\Id_{S_{\veps}} + \left(\mb P_{S_\veps}\wh{\mb M}_{\veps} \mb P_{S_\veps} \right)^{-1} \mb P_{S_\veps}
  \left(\mb \Theta^\circ - \wh{\mb M}_{\veps}\right) \mb P_{S_\veps}}^{-1} &= \sum_{i = 0}^{\infty} (-1)^i \paren{\left(\mb P_{S_\veps}\wh{\mb M}_{\veps} \mb P_{S_\veps} \right)^{-1} \mb P_{S_\veps}
  \left(\mb \Theta^\circ - \wh{\mb M}_{\veps}\right) \mb P_{S_\veps}}^i. 
	\end{align*}
	Thus we know $g_{\veps}[\zeta]$ in \Cref{eqn:g-neumann} serves as the solution to the equation $\mb P_{S_{\veps}}\mb \Theta^\circ[g] = \zeta$. 
	
	Furthermore, from \Cref{lem:main_term_pd} when $L \ge C_\veps$, we have 
	\begin{align*}
	\left\| \left(\mb P_{S_\veps}\wh{\mb M}_{\veps} \mb P_{S_\veps} \right)^{-1} \zeta\right\|_{L^2} &\le \lambda_{\min}\left(\mb P_{S_\veps}\wh{\mb M}_{\veps} \mb P_{S_\veps}\right)^{-1} \norm{\zeta}_{L^2} \\
	&\le \frac{1}{c n \log L } \norm{\zeta}_{L^2}.
	\end{align*}
	Combining this bound with
	\Cref{eq:neumann_res_term_bound} and the triangle inequality in the series representation \Cref{eqn:g-neumann}, we obtain the claimed norm bound in \Cref{eq:g_def}:
	\begin{align*}
	\norm{g_\veps[\zeta] }_{L^2} &\le \sum_{\ell = 0}^{\infty} (1 - \veps)^{\ell}\left\|\left(\mb P_{S_\veps}\wh{\mb M}_{\veps} \mb P_{S_\veps} \right)^{-1} \zeta \right\|_{L^2} \\
	&\le \frac{C\norm{\zeta}_{L^2}}{\veps n \log L}. 
	\end{align*}
\end{proof}

\begin{lemma}\label{lem:main_res_norm_compare}
  Let $\veps \in (0, \frac{3}{4})$, $\delta \in (0, 1-\veps]$, and let $a_\veps$, 
  $\wh{\mb M}_\veps$ and $S_\veps$ be as in \Cref{eq:a_def}, \Cref{eq:invariant_op} and \Cref{eqn:S_eps_def}. There are constants
  $C_{\veps}, C_{\veps, \delta}', C_{\veps}''$ depending only on the subscripted
  parameters such that if 
  \begin{equation*}
    L \ge \max \Biggl\{ \, \exp\Bigl(C_{\veps, \delta}'\len(\manifold) \hat{\kappa}\Bigr),
      \paren{1 + \frac{1}{\Delta_{\veps} \sqrt{1 + \kappa^2}}}^{C_{\veps}''\fco_{\veps, \delta}(\manifold)},
    \paren{\veps^{-1/2}12\pi\hat{\kappa}}^{\frac{a_\veps+1}{a_\veps}}, \; C_{\veps} \;  \Biggr\},
  \end{equation*}	
	we have $\mb P_{S_\veps} \wh{\mb M}_\veps \mb P_{S_\veps}$ is invertible over $S_\veps$ with
	\begin{equation*}
	\lambda_{\min}\left(\mb P_{S_\veps}\wh{\mb M}_{\veps} \mb P_{S_\veps} \right) \ge \frac{1}{1 - \veps} \left\|\mb \Theta^\circ - \wh{\mb M}_{\veps} \right\|_{L^2 \to L^2}
	\end{equation*}
  where $\lambda_{\min}\left(\mb P_{S_\veps}\wh{\mb M}_{\veps} \mb P_{S_\veps} \right)$ is defined in
  \Cref{lem:main_term_pd}. 
\end{lemma}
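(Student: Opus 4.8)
The plan is to prove both assertions by pitting a lower bound on $\lambda_{\min}(\mb P_{S_\veps}\wh{\mb M}_{\veps}\mb P_{S_\veps})$ against an upper bound on $\norm{\mb \Theta^\circ - \wh{\mb M}_{\veps}}_{L^2 \to L^2}$. The operator $\wh{\mb M}_{\veps}$ is invariant --- on each component $\manifold_\sigma$ it is convolution by $\psi^\circ(\abs{\spcdot})$ truncated to $[-r_\veps, r_\veps]$ --- so it is diagonalized by the Fourier basis \eqref{eq:fourier_basis}, the subspace $S_\veps$ is invariant, and $\mb P_{S_\veps}\wh{\mb M}_{\veps}\mb P_{S_\veps}$ acts diagonally on $S_\veps$. \Cref{lem:main_term_pd} records the resulting spectrum and shows that, once $L \ge C_\veps$, every eigenvalue carried by a frequency in $S_\veps$ is positive, with $\lambda_{\min}(\mb P_{S_\veps}\wh{\mb M}_{\veps}\mb P_{S_\veps})$ bounded below by a constant multiple of $n\log L$; the $\log L$ reflects the near-diagonal growth $\psi^\circ(t) \asymp n/t$ of the kernel integrated over the length-$2r_\veps$ window, with $r_\veps \asymp L^{-a_\veps/(a_\veps+1)}$ from \eqref{eqn:r_def}. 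In particular $\mb P_{S_\veps}\wh{\mb M}_{\veps}\mb P_{S_\veps}$ is invertible over $S_\veps$, and the lemma reduces to showing
\begin{equation*}
  \norm{\mb \Theta^\circ - \wh{\mb M}_{\veps}}_{L^2 \to L^2} \;\le\; (1-\veps)\,\lambda_{\min}\!\left(\mb P_{S_\veps}\wh{\mb M}_{\veps}\mb P_{S_\veps}\right).
\end{equation*}

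To bound the residual, split $\manifold$ into its \textbf{Local}, \textbf{Near}, \textbf{Far}, and \textbf{Winding} pieces \eqref{eqn:local_term}--\eqref{eqn:winding_term} at scale $r_\veps$ and a parameter $\delta \in (0, 1-\veps]$ (by the near-isometry \Cref{lem:angle_inverse} this choice makes the \textbf{Near} and \textbf{Far} pieces overlap, so the four pieces cover $\manifold$), and let $\mb M_\veps$ (the \textbf{Local} contribution, \eqref{eqn:M}), $\mb \Theta^\circ_N$, $\mb \Theta^\circ_F$, $\mb \Theta^\circ_W$ denote the restrictions of the integral operator $\mb \Theta^\circ$ to these pieces. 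Since the pieces overlap only on a null set, $\mb \Theta^\circ = \mb M_\veps + \mb \Theta^\circ_N + \mb \Theta^\circ_F + \mb \Theta^\circ_W$, so $\norm{\mb \Theta^\circ - \wh{\mb M}_{\veps}}_{L^2\to L^2}$ is at most $\norm{\mb M_\veps - \wh{\mb M}_{\veps}}_{L^2\to L^2} + \norm{\mb \Theta^\circ_N}_{L^2\to L^2} + \norm{\mb \Theta^\circ_F}_{L^2\to L^2} + \norm{\mb \Theta^\circ_W}_{L^2\to L^2}$. Each term has a dedicated estimate: \Cref{lem:local_diff_ub} bounds $\norm{\mb M_\veps - \wh{\mb M}_{\veps}}$ by a curvature-weighted power of $r_\veps$ --- a negative power of $L$, negligible next to $n\log L$ --- provided the hypothesis $L \ge (\veps^{-1/2}12\pi\hat\kappa)^{(a_\veps+1)/a_\veps}$ holds, which forces $r_\veps \le \sqrt\veps/(2\hat\kappa)$ so that $\angle(\mb x,\mb x') \approx d_{\manifold}(\mb x,\mb x')$ on $L_{r_\veps}(\mb x)$; \Cref{lem:near_ub} bounds $\norm{\mb \Theta^\circ_N}$ in terms of $\kappa$ and the logarithmic integral of $\psi^\circ$ over $[r_\veps, \sqrt\veps/\hat\kappa]$; \Cref{lem:far_ub} bounds $\norm{\mb \Theta^\circ_F}$ by $\psi^\circ$ evaluated at the constant angle $\delta\sqrt\veps/\hat\kappa$ against a covering of $\manifold$ of size $\lesssim \len(\manifold)\hat\kappa$, which is small once $L \ge \exp(C_{\veps,\delta}'\len(\manifold)\hat\kappa)$; and \Cref{lem:winding_ub} bounds $\norm{\mb \Theta^\circ_W}$ using $\psi^\circ(\Delta_\veps)$ against the at most $\fco_{\veps,\delta}(\manifold)$ looping intervals composing the winding piece, which is controlled once $L \ge (1 + (\Delta_\veps\sqrt{1+\kappa^2})^{-1})^{C_\veps''\fco_{\veps,\delta}(\manifold)}$.

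Assembling these four bounds and comparing against \Cref{lem:main_term_pd}, one verifies that under the stated hypotheses the \textbf{Near} term is at most $(1-\veps-o(1))\lambda_{\min}(\mb P_{S_\veps}\wh{\mb M}_{\veps}\mb P_{S_\veps})$ while each of the remaining three is $o(\lambda_{\min})$ as $L \to \infty$, so their sum lies below $(1-\veps)\lambda_{\min}$; absorbing every $\veps$- and $\delta$-dependent factor into $C_\veps, C_{\veps,\delta}', C_\veps''$ yields the form of the hypothesis and finishes the proof. The main obstacle is the \textbf{Near} term: unlike the \textbf{Local} difference, \textbf{Far}, and \textbf{Winding} contributions --- all of which can be driven below any fixed multiple of $n\log L$ by enlarging $L$ --- it is intrinsically of the same order $n\log L$ as $\lambda_{\min}$, so no slack is available from depth alone. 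The fix is that the algebraic form of $a_\veps$ in \eqref{eq:a_def}, hence of the localization radius $r_\veps$ in \eqref{eqn:r_def}, is chosen exactly so that the leading constant of $\norm{\mb \Theta^\circ_N}$ --- obtained from the sharp decay estimates for $\psi^\circ$ in \Cref{sec:approx_upper_bound} --- falls strictly below $(1-\veps)$ times the leading constant of $\lambda_{\min}$, opening an $O(\veps)$ margin into which the other three residual terms are pushed. A secondary difficulty is the global-geometry bookkeeping behind \Cref{lem:far_ub} and \Cref{lem:winding_ub}, where covering-number estimates for $\manifold$ are needed to pass from the pointwise decay of $\psi^\circ$ to the operator-norm bounds.
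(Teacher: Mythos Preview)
Your approach matches the paper's: lower-bound $\lambda_{\min}$ via \Cref{lem:main_term_pd}, upper-bound the residual by splitting into the four pieces and invoking \Cref{lem:local_diff_ub,lem:near_ub,lem:far_ub,lem:winding_ub}, and then check that the leading constant of the \textbf{Near} term --- the only one of the same order $n\log L$ --- falls strictly below $(1-\veps)$ times that of $\lambda_{\min}$ thanks to the choice of $a_\veps$. You have correctly identified this as the crux, and your account of which hypothesis on $L$ handles which piece is accurate.

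There is one genuine slip. You first (correctly) note that $\delta \le 1-\veps$ is chosen so the \textbf{Near} and \textbf{Far} pieces \emph{overlap}, and then a sentence later assert ``the pieces overlap only on a null set'' and write the exact operator decomposition $\mb \Theta^\circ = \mb M_\veps + \mb \Theta^\circ_N + \mb \Theta^\circ_F + \mb \Theta^\circ_W$. These two statements contradict each other, and the equality is false. The paper does not decompose $\mb\Theta^\circ$ as a sum of operators; instead it writes $\norm{\mb\Theta^\circ - \wh{\mb M}_\veps} \le \norm{\mb\Theta^\circ - \mb M_\veps} + \norm{\mb M_\veps - \wh{\mb M}_\veps}$, applies Young's inequality for Fredholm operators (\Cref{lem:young_ineq}) to bound $\norm{\mb\Theta^\circ - \mb M_\veps}_{L^2\to L^2}$ by $\sup_{\mb x}\int_{\manifold\setminus L_{r_\veps}(\mb x)}\abs{\Theta^\circ(\mb x,\mb x')}\,d\mb x'$, and only then uses that \textbf{Near}$\cup$\textbf{Far}$\cup$\textbf{Winding} \emph{covers} $\manifold\setminus L_{r_\veps}(\mb x)$ to get an integral inequality. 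Overlap is harmless for a covering inequality; it breaks an operator identity. This is easily fixed and does not affect the rest of your argument, but as written the step is wrong.
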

\begin{proof}	
From triangle inequality for the $L^2 \to L^2$ operator norm, we have
\begin{align*}
    \norm*{\mb \Theta^\circ - \wh{\mb M}_\veps}_{L^2 \to L^2} &\le \norm*{\mb \Theta^\circ - {\mb M}_\veps}_{L^2 \to L^2} + \norm*{\mb M_\veps - \wh{\mb M}_\veps}_{L^2 \to L^2}. 
\end{align*}
To bound the first term, we define
\begin{align*}
    M_{\veps}(\mb x, \mb x') &= \indicator{d_{\manifold}(\mb x, \mb x') < r_{\veps}}\psi^\circ(\angle(\mb x, \mb x')). 
\end{align*}
Then it is a bounded symmetric kernel $\manifold \times \manifold \to \bbR$, and following \Cref{eqn:M}, $\mb M_{\veps}$ is its associated Fredholm integral operator. We can thus apply \Cref{lem:young_ineq} and get 
\begin{align*}
    \norm*{\mb \Theta^\circ - {\mb M}_\veps}_{L^2 \to L^2} &\le \sup_{\mb x \in \manifold} \int_{\mb x' \in \manifold} \abs*{ \Theta^\circ(\mb x, \mb x') -  M_{\veps}(\mb x, \mb x') } d\mb x' \\
    &= \sup_{\mb x \in \manifold} \int_{\mb x' \in \manifold \setminus L_{r_\veps}(\mb x)} \abs*{ \Theta^\circ(\mb x, \mb x')} d\mb x'. 
\end{align*}
Because the {\bf Near}, {\bf Far} and {\bf Winding} pieces cover $\manifold \setminus L_{r_\veps}(\mb x)$, we have
	\begin{eqnarray*}
	\int_{\mb x' \in \manifold \setminus L_{r_\veps}(\mb x)} \abs{\Theta^\circ(\mb x, \mb x')} d\mb x' &\le& \int_{\mb x' \in N_{r_\veps, \veps}(\mb x)} \abs{\Theta^\circ(\mb x, \mb x')} d\mb x' + \int_{\mb x' \in W_{\veps, \delta}(\mb x)} \abs{\Theta^\circ(\mb x, \mb x')} d\mb x' \\ &&+ \int_{\mb x' \in F_{\veps, \delta}(\mb x)} \abs{\Theta^\circ(\mb x, \mb x')} d\mb x' 
	\end{eqnarray*}
	From \Cref{lem:local_diff_ub}, \Cref{lem:near_ub}, \Cref{lem:winding_ub} and  \Cref{lem:far_ub},
  we know that there exist constants $C_2, C_3, C_4$ and for any $\veps'' \le 1$ exist numbers
  $C_{\veps''}, C_{\veps''}'$ such that when $L \ge C_{\veps''}$ and $L \ge \paren{\veps^{-1/2}12\pi\hat{\kappa}}^{\frac{a_\veps+1}{a_\veps}}$, we have 
	\begin{align*}
	\left\| \mb \Theta^\circ - \wh{\mb M}_{\veps} \right\|_{L^2 \to L^2} 	&\le \quad  \sup_{\mb x \in \manifold} \int_{\mb x' \in N_{r_\veps, \veps}(\mb x)} \abs{\Theta^\circ(\mb x, \mb x')} d\mb x' \, + \, \sup_{\mb x \in \manifold} \int_{\mb x' \in W_{\veps, \delta}(\mb x)} \abs{\Theta^\circ(\mb x, \mb x')} d\mb x' \\ &\qquad + \,  \sup_{\mb x \in \manifold} \int_{\mb x' \in F_{\veps, \delta}(\mb x)} \abs{\Theta^\circ(\mb x, \mb x')} d\mb x'  \, + \, \left\| \mb M_{\veps} - \wh{\mb M}_\veps \right\|_{L^2\to L^2} \\
	&\le \quad \frac{3\pi n}{4(1 - \veps)}(1 + \veps'') \log\left(\frac{\sqrt{\veps}}{ \hat{\kappa} r_\veps}\right) \; + \; C_{\veps''}' n  \\
	&\qquad +\,  C_2 \, \mr{len}(\manifold) \, n\frac{\hat{\kappa}}{\delta\sqrt{\veps}}  \\
	&\qquad +\,  C_3 \, \fco_{\veps, \delta}(\manifold) \, n \log\left(1 + \frac{\frac{1}{\sqrt{1 + \kappa^2}}}{\Delta_\veps}\right) \\
	&\qquad +\,  C_4 \, (1-\veps)^{-2} \hat{\kappa}^2nr_\veps^2. \labelthis \label{eqn:resid_bound}
	\end{align*}
Meanwhile, from \Cref{lem:main_term_pd} there exists constant $C_{\veps}, C_1$ such that when $L \ge C_{\veps}$
\begin{equation} 
	(1 - \veps) \, \lambda_{\min}\left(\mb P_{S_\veps} \wh{\mb M}_{\veps} \mb P_{S_\veps}\right) \ge (1 - \veps)^2 \frac{3\pi n}{4} \log\left(1 + \frac{L-2}{3\pi}r_\veps \right) - C_1 (1 - \veps) n r_\veps \log^2 L. \label{eqn:lambda_lb}
	\end{equation}
We will treat all named constants appearing in the previous two equations as fixed for the remainder of the proof.
We argue that the first term in this expression is large enough to dominate each of the terms in \eqref{eqn:resid_bound} and the residual term in \eqref{eqn:lambda_lb}. 

Set $\veps' = \frac{\veps}{24}$. We will choose $\veps'' = \frac{\veps'}{1 - 2\veps'} < 1$, so that both $\veps'$ and $\veps''$ depend only on $\veps$. 
Then, since $r_\veps = 6\pi L^{-\frac{a_\veps}{a_\veps+1}}$,
when $L > 4$, we have 
\begin{equation}
\frac{L-2}{3\pi}r_\veps = 2(L-2)L^{-\frac{a_\veps}{a_\veps+1}} > L^{\frac{1}{a_\veps+1}}.
\label{eq:logtermthing_lb}
\end{equation}
Since moreover $a_\veps =  (1-\veps)^3(1-\frac{\veps}{12}) = (1 - \veps)^3(1 - 2\veps') $, we have
$a_{\veps}\veps'' = \veps' (1 - \veps)^3$, and therefore $(1 + \veps'')a_{\veps}=  (1 - \veps')(1 - \veps)^3 $.
Thus 
	\begin{align*}
	(1 - \veps') (1 - \veps)^2 \frac{3\pi n}{4} \log\left(1 + \frac{L-2}{3\pi}r_\veps \right) &=\frac{3\pi n}{4(1 - \veps)}(1+\veps'') \, a_\veps  \log\left(1 + \frac{L-2}{3\pi}r_\veps \right) \\
	&\ge \frac{3\pi n}{4(1 - \veps)}(1+\veps'') \log \left(L^{\frac{a_\veps}{a_\veps+1}} \right) \\
	&\ge \frac{3\pi n}{4(1 - \veps)}(1+\veps'')
  \log\left(\frac{\sqrt{\veps}}{\hat{\kappa}r_\veps}\right),\labelthis \label{eq:main_res_compare_1}
	\end{align*}
	where in the last bound we use $\sqrt{\veps} \hat{\kappa}\inv \leq 6\pi$, given that $\veps < 1$ and $\hat{\kappa} \leq \pi/2$.
	The RHS at the end of this chain of inequalities
	is the first term of the RHS of the last bound in \eqref{eqn:resid_bound}. 
	Since the LHS has a leading coefficient of $(1 - \veps')$, we can conclude provided we can split the remaining $\veps'$ across the remaining terms. 
	
	Next, we will cover the negative term in \Cref{eqn:lambda_lb} and the second and fifth terms in \Cref{eqn:resid_bound}.
	Using \Cref{eq:logtermthing_lb}, we have
	\begin{equation}
	\frac{\veps'}{3} (1 - \veps)^2 \frac{3\pi n}{4} \log\left(1 + \frac{L-2}{3\pi}r_\veps \right)  
	\ge \frac{\veps'}{3} (1 - \veps)^2 \frac{3\pi n}{4} \frac{1}{a_{\veps}+1}\log(L).
	\label{eq:logtermthing_lb_derived}
	\end{equation}
	There exists a constant $C_{\veps}$ such that when $L \geq C_{\veps}$, we have for the RHS
	\begin{equation*}
	    \frac{\veps'}{3} (1 - \veps)^2 \frac{3\pi n}{4} \frac{1}{a_{\veps}+1}\log(L)
	    \ge (C_1 + C_4 + C_{\veps''}')n.
	\end{equation*}
	In particular, we can take
	\begin{equation*}
	     C_\veps \ge \exp\paren{\frac{C_1 + C_4 + C_{\veps''}'}{\frac{\veps'}{3} (1 - \veps)^2 \frac{3\pi}{4} \frac{1}{a_{\veps}+1}}}. 
	\end{equation*}
	Next, there exists another constant $C_{\veps} > 0$ such that when $L \geq C_{\veps}$, we have $r_{\veps} \log^2 L \leq 1$, whence by the previous bound
	\begin{equation*}
	    \frac{\veps'}{3} (1 - \veps)^2 \frac{3\pi n}{4} \frac{1}{a_{\veps}+1}\log(L)
	    \ge (C_1 r_{\veps} \log^2 L + C_4 + C_{\veps''}')n.
	\end{equation*}
	Finally, notice that
	when $L \ge \paren{\veps^{-1/2}12\pi\hat{\kappa}}^{\frac{a_\veps+1}{a_\veps}}$, we have
	\begin{equation*}
		r_\veps = 6\pi L^{-\frac{a_\veps}{a_\veps + 1}} \le \frac{\sqrt{\veps}}{2\hat{\kappa}},
	\end{equation*}
	so $r_\veps\hat{\kappa} \le \sqrt{\veps}/2$, and since $\veps \in (0, 3/4)$, we have
	\begin{align*}
	    (1 - \veps)C_1 n r_\veps \log ^2 L + C_4(1 - \veps)^{-2}\hat{\kappa}^2 n r_\veps^2 + C'_{\veps''}n &\le \paren{C_1r_\veps \log^2 L + 3C_4 + C_{\veps''}'}n, 
	\end{align*}
	where we used that $\veps \mapsto \veps(1 - \veps)^{-2}$ is increasing. Combining our previous bounds, this gives
	\begin{equation}
	\frac{\veps'}{3} (1 - \veps)^2 \frac{3\pi n}{4} \log\left(1 + \frac{L-2}{3\pi}r_\veps \right)  
	\geq
	    (1 - \veps)C_1 n r_\veps \log ^2 L + C_4(1 - \veps)^{-2}\hat{\kappa}^2 n r_\veps^2 + C'_{\veps''}n,
	    \label{eq:main_res_compare_2}
	\end{equation}
	as desired. 
	
	For the remaining two terms, define
	\begin{equation*}
	C_{\veps, \delta}' = \frac{(a_{\veps}+1)C_2}{(1 - \veps)^2\frac{\veps'}{3} \frac{3\pi }{4} \delta\sqrt{\veps}}, \quad C_{\veps}'' = \frac{(a_{\veps}+1)C_3}{(1 - \veps)^2\frac{\veps'}{3} \frac{3\pi }{4} }.
	\end{equation*}
	We will use the estimate \Cref{eq:logtermthing_lb_derived} as our base.
	Then when 
	\begin{equation*}
	L \ge \max\Biggl\{ \; \exp\Bigl( C_{\veps, \delta}'\len(\manifold) \hat{\kappa}\Bigr),
	\paren{1 + \frac{1}{\Delta_{\veps} \sqrt{1 + \kappa^2}}}^{C_{\veps}''\fco_{\veps, \delta}(\manifold)} \Biggr\}, 
	\end{equation*}
	we have 
	\begin{align*}
	\frac{\veps'}{3} (1 - \veps)^2 \frac{3\pi n}{4} \log\left(1 + \frac{L-2}{3\pi}r_\veps \right) 
	&\ge C_2\, \mr{len}(\manifold) \, n\frac{\hat{\kappa}}{\delta\sqrt{\veps}},\labelthis\label{eq:main_res_compare_3} \\
	\frac{\veps'}{3} (1 - \veps)^2 \frac{3\pi n}{4} \log\left(1 + \frac{L-2}{3\pi}r_\veps \right) 
	&\ge C_3 \, \fco_{\veps, \delta}(\manifold) \, n \log\left(1 + \frac{1}{\Delta_\veps\sqrt{1 + \kappa^2}}\right). \labelthis \label{eq:main_res_compare_4}
	\end{align*}
	Combining \eqref{eq:main_res_compare_1}, \eqref{eq:main_res_compare_2}, \eqref{eq:main_res_compare_3}, \eqref{eq:main_res_compare_4} completes the proof. 
\end{proof}

\begin{lemma}\label{lem:x_deriv}
	For any $\mb x \in \manifold$, we have
	\begin{align*}
	\innerprod{ \mb x}{ \dot{\mb x}} = \innerprod{ \dot{\mb x} }{ \ddot{\mb x} } = \innerprod{ \mb x
  }{ \mb x^{(3)} } &= 0,  \labelthis \label{eq:x_deriv_01}\\
	\innerprod{ \mb x }{ \ddot{\mb x} } &= -1,  \labelthis \label{eq:x_deriv_02}\\
	\innerprod{ \mb x }{ \mb x^{(4)} } = -\innerprod{ \dot{\mb x} }{ \mb x^{(3)}} &= \| \ddot{\mb x} \|_2^2,  \\
	\innerprod{ \ddot{\mb x} }{ \mb x^{(3)} } &= - \frac{1}{3} \innerprod{ \dot{\mb x} }{ \mb x^{(4)}}, \\
	\norm{\mb P_{\mb x^{\perp}}\ddot{\mb x} }_2^2 &= \| \ddot{\mb x} \|_2^2 - 1, \\
	M_2 = \sqrt{1 + \kappa^2} &\le M_4, \labelthis \label{eq:M2_M4_compare} \\
	M_2 &< 2\hat{\kappa}, \labelthis \label{eq:M2_kappa_compare} \\
	\frac{1}{\hat{\kappa}} &\le \min\{\len(\manifold_-), \len(\manifold_+)\} \labelthis \label{eq:kappa_manifold_len_compare} , 
	\end{align*}
	where we use above the notation introduced near \Cref{eq:inclusion_abuse}.
\end{lemma}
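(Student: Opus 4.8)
The plan is to obtain the first five displayed identities purely by repeatedly differentiating, along the curve, the two scalar constraints $\langle \mb x, \mb x\rangle \equiv 1$ (since $\manifold \subset \Sph^{\adim-1}$) and $\langle \dot{\mb x}, \dot{\mb x}\rangle \equiv 1$ (the unit-speed parameterization), and then to read off the three facts relating $M_2$, $M_4$, $\kappa$, $\hat\kappa$ and $\len(\manifold_\sigma)$ from those identities together with Cauchy--Schwarz and one global (topological) input. Concretely, in Step~1: differentiating $\langle\mb x,\mb x\rangle=1$ once gives $\langle\mb x,\dot{\mb x}\rangle=0$; differentiating once more and using $\|\dot{\mb x}\|_2=1$ gives $\langle\mb x,\ddot{\mb x}\rangle=-1$; a third differentiation gives $\langle\dot{\mb x},\ddot{\mb x}\rangle+\langle\mb x,\mb x^{(3)}\rangle=0$. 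Independently, differentiating $\langle\dot{\mb x},\dot{\mb x}\rangle=1$ gives $\langle\dot{\mb x},\ddot{\mb x}\rangle=0$, which with the previous relation yields $\langle\mb x,\mb x^{(3)}\rangle=0$, finishing \eqref{eq:x_deriv_01} and \eqref{eq:x_deriv_02}. Differentiating $\langle\mb x,\mb x^{(3)}\rangle=0$ once more gives $\langle\mb x,\mb x^{(4)}\rangle=-\langle\dot{\mb x},\mb x^{(3)}\rangle$, and differentiating $\langle\dot{\mb x},\ddot{\mb x}\rangle=0$ once more gives $\langle\dot{\mb x},\mb x^{(3)}\rangle=-\|\ddot{\mb x}\|_2^2$; together these give the third displayed line. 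Differentiating $\langle\dot{\mb x},\mb x^{(3)}\rangle=-\|\ddot{\mb x}\|_2^2$ once more and collecting terms gives $3\langle\ddot{\mb x},\mb x^{(3)}\rangle=-\langle\dot{\mb x},\mb x^{(4)}\rangle$, the fourth line. Finally, orthogonally decomposing $\ddot{\mb x}$ into its component along $\mb x$ (of size $\abs{\langle\mb x,\ddot{\mb x}\rangle}=1$, using $\|\mb x\|_2=1$) and its component perpendicular to $\mb x$ gives $\|\mb P_{\mb x^\perp}\ddot{\mb x}\|_2^2=\|\ddot{\mb x}\|_2^2-1$, the fifth line.

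In Step~2 I deduce the scalar comparisons. Taking the supremum over $\sigma$ and $s$ in the fifth identity gives $M_2^2=1+\kappa^2$, i.e.\ $M_2=\sqrt{1+\kappa^2}$. For $M_2\le M_4$, Cauchy--Schwarz applied to the third identity gives, pointwise, $\|\ddot{\mb x}\|_2^2=\langle\mb x,\mb x^{(4)}\rangle\le\|\mb x\|_2\,\|\mb x^{(4)}\|_2=\|\mb x^{(4)}\|_2$, hence $M_2^2\le M_4$; and since $\kappa\ge0$ forces $M_2\ge1$, we get $M_2\le M_2^2\le M_4$, which is \eqref{eq:M2_M4_compare}. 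For $M_2<2\hat\kappa$, bound $M_2=\sqrt{1+\kappa^2}\le\sqrt{1+\hat\kappa^2}$ and observe that $\sqrt{1+\hat\kappa^2}<2\hat\kappa$ is equivalent to $3\hat\kappa^2>1$, which holds because $\hat\kappa\ge 2/\pi>1/\sqrt3$; this is \eqref{eq:M2_kappa_compare}.

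Step~3, the length lower bound \eqref{eq:kappa_manifold_len_compare}, is the only step needing a genuinely new ingredient, and is the main obstacle (though still mild): it uses the \emph{global} fact that each $\manifold_\sigma$ is a closed curve, so $\int_0^{\len(\manifold_\sigma)}\dot{\mb x}_\sigma(s)\,ds=\mb x_\sigma(\len(\manifold_\sigma))-\mb x_\sigma(0)=\mb 0$. I will finish in one of two equivalent ways. (a) By Fenchel's theorem, the total curvature $\int_0^{\len(\manifold_\sigma)}\|\ddot{\mb x}_\sigma(s)\|_2\,ds$ of the closed regular curve $\mb x_\sigma$ is at least $2\pi$; bounding the integrand by $M_2<2\hat\kappa$ gives $2\pi<2\hat\kappa\,\len(\manifold_\sigma)$, hence $\len(\manifold_\sigma)>\pi/\hat\kappa>1/\hat\kappa$. (b) Elementarily: the zero-mean condition forces parameters $s_0,s_1$ with $\langle\dot{\mb x}_\sigma(s_0),\dot{\mb x}_\sigma(s_1)\rangle\le0$ (otherwise $\int_0^{\len(\manifold_\sigma)}\langle\dot{\mb x}_\sigma(s_0),\dot{\mb x}_\sigma(s)\rangle\,ds$ would be strictly positive for each $s_0$, contradicting $\int\dot{\mb x}_\sigma=\mb 0$), so the spherical curve $s\mapsto\dot{\mb x}_\sigma(s)$ sweeps angle at least $\pi/2$ over the shorter of the two $s$-arcs joining $s_0$ and $s_1$; since that arc has $s$-length at most $\len(\manifold_\sigma)/2$ and $\dot{\mb x}_\sigma$ moves with speed $\|\ddot{\mb x}_\sigma\|_2\le M_2$, we get $\pi/2\le M_2\,\len(\manifold_\sigma)/2<\hat\kappa\,\len(\manifold_\sigma)$, hence $\len(\manifold_\sigma)>\pi/(2\hat\kappa)>1/\hat\kappa$. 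Taking the minimum over $\sigma\in\set{+,-}$ gives \eqref{eq:kappa_manifold_len_compare}. The only real decision is whether to cite Fenchel or spell out (b); everything in Steps~1--2 is routine term-chasing.
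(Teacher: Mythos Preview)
Your Steps~1--2 are correct and essentially identical to the paper's argument; the only cosmetic difference is in how you bound $M_2<2\hat\kappa$ (you use $\sqrt{1+\hat\kappa^2}<2\hat\kappa\iff 3\hat\kappa^2>1$, the paper uses $1\le(\pi/2)^2\hat\kappa^2$ and then $\sqrt{(\pi/2)^2+1}<2$), and both are fine.

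Step~3 is where you diverge. Both of your routes (Fenchel; the elementary tangent-angle argument) are correct and in fact yield the slightly stronger constants $\len(\manifold_\sigma)>\pi/\hat\kappa$ and $\len(\manifold_\sigma)>\pi/(2\hat\kappa)$ respectively. The paper instead uses a one-line Taylor/Jensen trick that is arguably simpler than either of your options: writing
\[
\mb 0=\int_{s}^{s+\len(\manifold_\sigma)}\dot{\mb x}_\sigma(s')\,ds'=\len(\manifold_\sigma)\,\dot{\mb x}_\sigma(s)+\int_{s}^{s+\len(\manifold_\sigma)}\!\int_{s}^{s'}\ddot{\mb x}_\sigma(s'')\,ds''\,ds',
\]
taking norms and using $\|\dot{\mb x}_\sigma\|_2=1$ together with $\|\ddot{\mb x}_\sigma\|_2\le M_2$ gives
\[
\len(\manifold_\sigma)\le \tfrac{1}{2}\len(\manifold_\sigma)^2 M_2<\len(\manifold_\sigma)^2\hat\kappa,
\]
hence $\len(\manifold_\sigma)>1/\hat\kappa$ directly. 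So the paper's approach is more self-contained (no Fenchel, no extraction of $s_0,s_1$), while your approaches are also valid and give a sharper constant that is not actually needed downstream.
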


\begin{proof}
	As our curve is defined over sphere and has unit speed, we have 
	\begin{align*}
	\norm{\mb x }_2^2 = \norm{\dot{\mb x} }_2^2 = 1. 
	\end{align*}
	Taking derivatives on both sides, we get
	\begin{align*}
	\innerprod{ \mb x }{ \dot{\mb x} } = \innerprod{ \dot{\mb x} }{ \ddot{\mb x} } = 0. 
	\end{align*}
	Continuing to take higher derivatives, we get the following relationships:
	\begin{align*}
	\| \dot{\mb x} \|_2^2 + \innerprod{ \mb x }{ \ddot{\mb x} } &= 0, \\
	3 \innerprod{ \dot{\mb x} }{ \ddot{\mb x} } + \innerprod{ \mb x }{ \mb x^{(3)}} &= 0 , \\
	3\| \ddot{\mb x} \|_2^2 + 4 \innerprod{ \dot{\mb x}}{ \mb x^{(3)}}  + \innerprod{ \mb x }{ \mb x^{(4)} } &= 0, \\
	\| \ddot{\mb x} \|_2^2 + \innerprod{ \dot{\mb x}}{ \mb x^{(3)}}  &= 0,\\
	3 \innerprod{ \ddot{\mb x} }{ \mb x^{(3)} } + \innerprod{ \dot{\mb x} }{ \mb x^{(4)} } &= 0. 
	\end{align*}
	which gives us by plugging in the previous constraints
	\begin{align*}
	\innerprod{ \mb x }{ \ddot{\mb x} } &= -1, \\
	\innerprod{ \mb x }{ \mb x^{(3)} } &= 0,  \\
	\innerprod{ \dot{\mb x} }{ \mb x^{(3)} } &= -\| \ddot{\mb x} \|_2^2,  \\
	\innerprod{ \mb x }{ \mb x^{(4)} } &= \| \ddot{\mb x} \|_2^2, \\
	\innerprod{ \ddot{\mb x} }{ \mb x^{(3)} } &= - \frac{1}{3} \innerprod{ \dot{\mb x} }{ \mb x^{(4)} }. 
	\end{align*}
	As a consequence, the intrinsic curvature $\norm{\mb P_{\mb x^{\perp}} \ddot{\mb x} }_2$ and
  extrinsic curvature $\norm{\ddot{\mb x} }_2$ are related by
	\begin{align*}
	\norm{\mb P_{\mb x^{\perp}} \ddot{\mb x}}_2^2 &= \left\| \Bigl(\mb I - \mb x \mb x^* \Bigr) \ddot{\mb x} \right\|_2^2  \\
	&=\innerprod{ \mb x }{ \ddot{\mb x} }^2 + \innerprod{ \ddot{\mb x} }{ \ddot{\mb x} } - 2\innerprod{\mb x }{ \ddot{\mb x} }^2  \\
	&= \| \ddot{\mb x} \|_2^2 - 1. 
	\end{align*}
	Thus we know 
	\begin{align*}
	M_2 &= \sup_{\mb x \in \manifold} \; \norm{\ddot{\mb x} }_2 \\
	&= \sup_{\mb x \in \manifold}\sqrt{1 + \norm{\mb P_{\mb x^{\perp}} \ddot{\mb x} }_2^2} \\
	&= \sqrt{1 + \sup_{\mb x \in \manifold} \set{\norm{\mb P_{\mb x^{\perp} } \ddot{\mb x} }_2}^2}\\
	&= \sqrt{1 + \kappa^2} \\
	&\le \sqrt{\paren{\frac{\pi}{2}\hat{\kappa}}^2 + \kappa^2} \\
	&< 2\hat{\kappa}. 
	\end{align*}
	Furthermore, the above shows that $M_2 \geq 1$, so we have
	\begin{align*}
	M_2 \le M_2^2 &= \sup_{\mb x \in \manifold} \; \norm{\ddot{\mb x} }_2^2 \\
	&= \sup_{\mb x \in \manifold} \; \innerprod{\mb x}{\mb x^{(4)}} \\
	&\le M_4, 
	\end{align*}
  using one of our previously-derived relationships in the second line and Cauchy-Schwarz in the
  third.
	Finally, for any point $\mb x = \mb x_{\sigma}(s)$, as $\mb x_{\sigma}(s +
  \len(\manifold_\sigma)) = \mb x_{\sigma}(s)$, we have
	\begin{align*}
		\Zero = \mb x_{\sigma}(s + \len(\manifold_\sigma)) - \mb x_{\sigma}(s) &= \int_{s' = s}^{s + \len(\manifold_\sigma)} \dot{\mb x}_{\sigma}(s') d s' \\
		&= \len(\manifold_\sigma) \dot{\mb x}_{\sigma}(s) + \int_{s' = s}^{s + \len(\manifold_\sigma)}  \int_{s'' = s}^{s'} \ddot{\mb x}_{\sigma}(s'') ds'' d s' 
	\end{align*}
	which leads to
	\begin{align*}
		\len(\manifold_\sigma) &= \norm*{ \len(\manifold_{\sigma}) \dot{\mb x}_{\sigma}(s)}_2 \\
		&= \norm*{\int_{s' = s}^{s + \len(\manifold_\sigma)}  \int_{s'' = s}^{s'} \ddot{\mb x}_{\sigma}(s'') ds'' d s' }_2 \\
		&\le {\int_{s' = s}^{s + \len(\manifold_\sigma)}  \int_{s'' = s}^{s'} M_2 ds'' d s' } \\
		&= \frac{\len(\manifold_\sigma)^2}{2} M_2 < \len(\manifold_\sigma)^2 \hat{\kappa}, 
	\end{align*}
	completing the proof, where the first line uses the unit-speed property, the second uses the previous relation, the third uses Jensen's inequality (given that $\norm{}_2$ is convex and $1$-homogeneous),
	and the last line comes from \Cref{eq:M2_kappa_compare}. 
\end{proof}

\begin{lemma}\label{lem:angle_inverse}
	Let $\hat{\kappa} = \max\left\{\kappa,  \frac{2}{\pi}\right\} $. For $\sigma \in \{ \pm \}$ and $\abs{s' - s} \le \frac{1}{\hat{\kappa}}$, we have
	\begin{equation}
		\abs{s - s'} - \hat{\kappa}^2 \abs{s - s'}^3 
    \le \angle\bigl( \mb x_\sigma(s), \mb x_{\sigma}(s') \bigr) 
    \le \abs{s - s'}.
    \label{eq:angleinverse_claim1}
	\end{equation}
	As a consequence, for $\abs{s - s'} \le \frac{\sqrt{\veps}}{\hat{\kappa}}$, 
	\begin{equation}
	(1 - \veps) \abs{s - s'} \le \angle \bigl(\mb x_\sigma(s), \mb x_\sigma(s') \bigr) \le \abs{s - s'}. \label{eqn:near-isometry}
	\end{equation}
	In particular, for any two points $\mb x, \mb x' \in \manifold_{\sigma}$, choosing $s, s'$ such that 
	$\vx_{\sigma}(s) = \vx$, $\vx_{\sigma}(s') = \vx'$, and 
	$\abs{s - s'} = d_{\manifold}(\mb x, \mb x')$, we have when $d_{\manifold}(\mb x, \mb x') \le \frac{\sqrt{\veps}}{\hat{\kappa}}$ %
	\begin{equation*}
	(1 - \veps) d_{\manifold}(\mb x, \mb x') \le \angle \bigl(\mb x, \mb x' \bigr) \le d_{\manifold}(\mb x, \mb x'). 
	\end{equation*}
\end{lemma}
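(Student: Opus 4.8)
The plan is to establish the two inequalities of \eqref{eq:angleinverse_claim1} separately, deduce \eqref{eqn:near-isometry} from the lower one, and obtain the final ``in particular'' assertion by applying \eqref{eqn:near-isometry} along a length-minimizing arc. The upper bound is immediate: the restriction of the unit-speed curve $\mb x_\sigma$ to the interval with endpoints $s$ and $s'$ is a path on $\Sph^{\adim-1}$ from $\mb x_\sigma(s)$ to $\mb x_\sigma(s')$ of length $\abs{s-s'}$, and the geodesic distance on the unit sphere between two points equals the angle between them, so $\angle(\mb x_\sigma(s),\mb x_\sigma(s'))\le\abs{s-s'}$.

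For the lower bound I would take $s<s'$ without loss of generality, set $h=s'-s\le 1/\hat\kappa$ and $\theta=\angle(\mb x_\sigma(s),\mb x_\sigma(s'))$, and expand $\cos\theta=\ip{\mb x_\sigma(s)}{\mb x_\sigma(s')}$ by Taylor's theorem with integral remainder: $\mb x_\sigma(s')=\mb x_\sigma(s)+h\,\dot{\mb x}_\sigma(s)+\int_s^{s'}\int_s^u\ddot{\mb x}_\sigma(v)\,dv\,du$, so taking the inner product with $\mb x_\sigma(s)$ and using $\ip{\mb x_\sigma(s)}{\dot{\mb x}_\sigma(s)}=0$ (from \Cref{lem:x_deriv}) gives $\cos\theta=1+\int_s^{s'}\int_s^u\ip{\mb x_\sigma(s)}{\ddot{\mb x}_\sigma(v)}\,dv\,du$. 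The key observation is that $\ip{\mb x_\sigma(s)}{\ddot{\mb x}_\sigma(v)}$ is close to $\ip{\mb x_\sigma(v)}{\ddot{\mb x}_\sigma(v)}=-1$: writing $\mb x_\sigma(s)-\mb x_\sigma(v)=-(v-s)\,\dot{\mb x}_\sigma(v)+\mb E_v$ with $\mb E_v$ a second-order remainder satisfying $\norm{\mb E_v}_2\le\tfrac12 M_2\abs{v-s}^2$, the first-order term is annihilated because $\ip{\dot{\mb x}_\sigma(v)}{\ddot{\mb x}_\sigma(v)}=0$ (again \Cref{lem:x_deriv}), leaving $\abs{\ip{\mb x_\sigma(s)}{\ddot{\mb x}_\sigma(v)}+1}=\abs{\ip{\mb E_v}{\ddot{\mb x}_\sigma(v)}}\le\tfrac12 M_2^2\abs{v-s}^2$.

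Substituting this and integrating, $\cos\theta\le 1-\tfrac{h^2}{2}+\tfrac{M_2^2 h^4}{24}$, hence $2\sin^2(\theta/2)=1-\cos\theta\ge\tfrac{h^2}{2}\bigl(1-\tfrac{M_2^2 h^2}{12}\bigr)$. Since $M_2^2=1+\kappa^2\le 1+\hat\kappa^2$ by \eqref{eq:M2_M4_compare}, $h\le 1/\hat\kappa$, and $\hat\kappa\ge 2/\pi$, one checks $M_2^2 h^2/12\le 1$; taking square roots and using $\sin(\theta/2)\le\theta/2$ and $\sqrt{1-x}\ge 1-x$ on $[0,1]$ then yields $\theta\ge h\bigl(1-M_2^2 h^2/12\bigr)=h-M_2^2 h^3/12$. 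Finally $M_2^2\le 1+\hat\kappa^2\le 12\hat\kappa^2$ (as $\hat\kappa\ge 2/\pi$ gives $\hat\kappa^2\ge 1/11$), so $\theta\ge h-\hat\kappa^2 h^3$, which is \eqref{eq:angleinverse_claim1}. The corollary \eqref{eqn:near-isometry} is then immediate, since for $\abs{s-s'}\le\sqrt\veps/\hat\kappa\le 1/\hat\kappa$ we have $\angle(\mb x_\sigma(s),\mb x_\sigma(s'))\ge\abs{s-s'}\bigl(1-\hat\kappa^2\abs{s-s'}^2\bigr)\ge(1-\veps)\abs{s-s'}$, and the last claim follows by choosing, for $\mb x,\mb x'$ on the same component with $d_{\manifold}(\mb x,\mb x')\le\sqrt\veps/\hat\kappa$, parameters $s,s'$ with $\mb x_\sigma(s)=\mb x$, $\mb x_\sigma(s')=\mb x'$ and $\abs{s-s'}=d_{\manifold}(\mb x,\mb x')$ (which exist by the discussion after \eqref{eq:intrinsic_distance_function}) and applying \eqref{eqn:near-isometry}.

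The step I expect to be the main obstacle is making the estimate on $\ip{\mb x_\sigma(s)}{\ddot{\mb x}_\sigma(v)}$ sufficiently sharp: the crude bound $\ip{\mb x_\sigma(s)}{\ddot{\mb x}_\sigma(v)}=-1+O(M_2\abs{v-s})$ only gives the weaker rate $\theta\ge h-O(\hat\kappa)h^2$, and one must exploit both $\ip{\mb x}{\ddot{\mb x}}=-1$ (to produce the $-\tfrac{h^2}{2}$) and $\ip{\dot{\mb x}}{\ddot{\mb x}}=0$ (to reduce the remainder to order $\abs{v-s}^2$) to land the stated $\hat\kappa^2 h^3$ bound --- both relations furnished by \Cref{lem:x_deriv}.
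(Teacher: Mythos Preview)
Your proposal is correct and follows the same overall strategy as the paper's proof: both obtain the upper bound from the path-length characterization of spherical distance, and both reach the key estimate $\cos\theta \le 1 - h^2/2 + M_2^2 h^4/24$ via the identical double Taylor expansion (using $\ip{\mb x}{\ddot{\mb x}} = -1$ and $\ip{\dot{\mb x}}{\ddot{\mb x}} = 0$ from \Cref{lem:x_deriv}, exactly as you anticipate). The only divergence is in converting this to a bound on $\theta$: the paper compares directly with $\cos(h - \hat\kappa^2 h^3)$, which requires a sixth-order Taylor estimate for $\cos$, a concavity argument, and verification of a residual polynomial inequality; you instead pass through the half-angle identity $1-\cos\theta = 2\sin^2(\theta/2)$, take square roots with $\sqrt{1-x}\ge 1-x$ and $\sin(\theta/2)\le\theta/2$, and finish with the crude bound $M_2^2 = 1+\kappa^2 \le 12\hat\kappa^2$ (valid since $\hat\kappa^2\ge 4/\pi^2 > 1/11$). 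Your endgame is more elementary and sidesteps the careful constant-tracking the paper undertakes, at no cost to the stated conclusion.
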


\begin{proof} We prove \Cref{eq:angleinverse_claim1} first.

  The upper bound is direct from the fact that $\manifold$ is a pair of paths in the sphere and
  $\angle(\vx,\vx')$ is the length of a path in the sphere of minimum distance between points
  $\vx$, $\vx'$,
  and then using
  the fact that the distance $\abs{s' - s} \ge d_{\manifold}(\mb x_{\sigma}(s), \mb
  x_{\sigma}(s'))$ from \Cref{eq:intrinsic_distance_function}.

  The lower bound requires some additional estimates. We fix $s, s'$ satisfying our assumptions; as both $\abs{s - s'}$ and $\angle(\mb x_{\sigma}(s), \mb x_{\sigma}(s'))$ are symmetric functions of $(s, s')$, it suffices to assume that $s' \ge s$. Define $t = s' - s$, then by assumption we have $0 \le t \le \frac{1}{\hat{\kappa}} \le \frac{\pi}{2}$. 
  As $\acos$ is strictly decreasing on $[-1, 1]$, we only need to show that 
	\begin{equation}
		 \innerprod{\mb x_\sigma(s)}{\mb x_\sigma(s+t)} \le \cos(t - \hat{\kappa}^2 t^3).
		\label{eq:angle_inverse_needtoshow} 
	\end{equation}
    Using the second order Taylor expansion at $s$, we have
    \begin{align*}
        \mb x_{\sigma}(s + t) = \mb x_{\sigma}(s) + \dot{\mb x}_{\sigma}(s) + \int_{a = s}^{s + t}\int_{b = s}^a \ddot{\mb x}_{\sigma}(b) db\,  da
    \end{align*}
	and so
	\begin{align*}
	\innerprod{\mb x_{\sigma}(s)}{\mb x_\sigma(s+t)} &= \innerprod{\mb x_\sigma(s)}{\mb x_{\sigma}(s) + \dot{\mb x}_{\sigma}(s) + \int_{a = s}^{s + t}\int_{b = s}^a \ddot{\mb x}_{\sigma}(b) db\,  da} \\
	&= \norm{\mb x_{\sigma}(s)}_2^2 + \innerprod{\mb x_{\sigma}(s)}{\dot{\mb x}_{\sigma}(s)} + \innerprod{\mb x_\sigma(s)}{\int_{a = s}^{s + t}\int_{b = s}^a \ddot{\mb x}_{\sigma}(b) d b\, da }  \\
	&= 1 + \int_{a = s}^{s + t}\int_{b = s}^a \innerprod{\mb x_\sigma(s)}{\ddot{\mb x}_{\sigma}(b) } db \, da 
	\labelthis \label{eq:angle_inv_equal1}
	\end{align*}
	where we use properties established in \Cref{lem:x_deriv},
	in particular \Cref{eq:x_deriv_01} in the last line. %
	Take second order Taylor expansion at $b$ for $\mb x_{\sigma}(s)$, we have similarly
    \begin{align*}
    \mb x_{\sigma}(s) = \mb x_{\sigma}(b) + \dot{\mb x}_{\sigma}(b) + \int_{c = s}^b\int_{d = c}^b \ddot{\mb x}_{\sigma}(d) dd\,  dc.
    \end{align*}
    From \Cref{eq:x_deriv_01} and \Cref{eq:x_deriv_02}, we have $\innerprod{\mb x_{\sigma}(b)}{ \ddot{\mb x}_{\sigma}(b)} = -1$ and $\innerprod{\dot{\mb x}_{\sigma}(b)}{ \ddot{\mb x}_{\sigma}(b)} = 0$. Thus uniformly for $b \in [s, s + t]$
    \begin{align*}
        \innerprod{\mb x_{\sigma}(s)}{\ddot{\mb x}_{\sigma}(b)} &= -1 + \innerprod{\int_{c = s}^b\int_{d = c}^b \ddot{\mb x}_{\sigma}(d) dd\,  dc}{\ddot{\mb x}_{\sigma}(b)} \\
        &= -1 + \int_{c = s}^b\int_{d = c}^b \innerprod{\ddot{\mb x}_{\sigma}(d)}{\ddot{\mb x}_{\sigma}(b)} dd\,  dc \\
        &\le -1 + \int_{c = s}^b\int_{d = c}^b \norm{\ddot{\mb x}_{\sigma}(d)}_{2}\norm{\ddot{\mb x}_{\sigma}(b)}_2 dd\,  dc \\
        &\le -1 + \int_{c = s}^b\int_{d = c}^b M_2^2 dd\,  dc \\
        &\leq -1 + \frac{M_2^2}{2} (b - s)^2,
    \end{align*}
    where in the third line we use Cauchy-Schwarz.
    Plugging this last bound into \Cref{eq:angle_inv_equal1},
    it follows
	\begin{align*}
	\ip*{\mb x_{\sigma}(s)}{ \mb x_{\sigma}(s+t) }
	&\le 1 + \int_{a = s}^{s + t}\int_{b = s}^a (-1 + (M_2^2/2) (b-s)^2) db \, da  \\
	&= 1 - \frac{t^2}{2} + \frac{M_2^2}{2}\int_{a = s}^{s + t}\int_{b = s}^a (b - s)^2 db \, da \\
	&= 1 - \frac{t^2}{2} + \frac{M_2^2}{4!}t^4 \\
	&= 1 - \frac{t^2}{2} + \frac{1+\kappa^2}{4!}t^4,
	\labelthis \label{eq:angle_inv_equal2}
	\end{align*}
	with an application of \Cref{lem:x_deriv} in the final equality. 
    To conclude, we derive a suitable estimate for $\cos(t - \hat{\kappa}^2 t^3)$. 
    Because $0 \leq t \leq \hat{\kappa}\inv$, we have that $t\inv ( t - \hat{\kappa}^2 t^3) \in [0, 1]$, and because 
    $t \leq \hat{\kappa}\inv \leq \pi/2$, we can apply
	concavity of $\cos$ on $[0, \pi/2]$ to obtain
	\begin{equation*}
		\cos(t - \hat{\kappa}^2 t^3) \ge \frac{t - \hat{\kappa}^2 t^3}{t} \cos(t) + \left(1 - \frac{t - \hat{\kappa}^2 t^3}{t}\right)\cos(0).
	\end{equation*}
	Next, the estimate $\cos(x) \ge 1 - \frac{x^2}{2} + \frac{x^4}{4!} - \frac{x^6}{6!}$ for all $x$, a consequence of Taylor expansion, gives
	\begin{align*}
	    (1 - \hat{\kappa}^2 t^2) \cos(t) + \hat{\kappa}^2 t^2
	    &\geq
	    (1 - \hat{\kappa}^2 t^2) \left(1 - \frac{t^2}{2} + \frac{t^4}{4!} - \frac{t^6}{6!}\right) + \hat{\kappa}^2 t^2 \\
	    &=  1 - \frac{t^2}{2} + \frac{t^4}{4!} + \frac{\hat{\kappa}^2 t^4}{2} 
	    - \frac{t^6}{6!} - \frac{\hat{\kappa}^2 t^6}{4!} 
	    + \frac{\hat{\kappa}^2 t^8}{6!}
	\end{align*}
	after distributing. Because $\hat{\kappa} \geq \kappa$, we can split terms and write
	\begin{equation*}
	    t^4 /4! + \hat{\kappa}^2 t^4 / 2 
	    \geq 
	    \frac{1 + \kappa^2}{4!}t^4
	    + \hat{\kappa}^2 t^4 / 4,
	\end{equation*}
	and then grouping terms in the preceding estimates gives
	\begin{equation*}
	    \cos(t - \hat{\kappa}^2 t^3) \ge
	    1 - \frac{t^2}{2} + \frac{1 + \kappa^2}{4!}t^4 + \hat{\kappa}^2 t^4 \left(
	        \frac{1}{4}- \frac{t^2 }{ 4!} + \frac{t^4 }{ 6! } - \frac{t^2}{  6! \hat{\kappa}^2}
	    \right).
	\end{equation*}
	By way of \Cref{eq:angle_inv_equal2,eq:angle_inverse_needtoshow}, we will therefore be done if we can show that
	\begin{equation*}
	        \frac{1}{4} - \left(\frac{1}{ 4!} + \frac{1}{6! \hat{\kappa}^2} \right) t^2 + \frac{t^4 }{ 6! } 
	        \geq 0.
	\end{equation*}
	This is not hard to obtain: for example, we can prove the weaker but sufficient bound
	\begin{equation*}
	       1 - \frac{1}{3!} \left(1 + \frac{1}{30 \hat{\kappa}^2} \right)t^2
	    \geq 0
	\end{equation*}
	by noticing that because $t \leq \hat{\kappa}\inv$, it suffices to show
	\begin{equation*}
	\frac{1}{\hat{\kappa}^2} \left( 1 + \frac{1}{30 \hat{\kappa}^2} \right) \leq 6,
	\end{equation*}
	and because the LHS of the previous line is an increasing function of $\hat{\kappa}\inv$ and moreover $\hat{\kappa}\inv \leq \pi/2$, this bound follows by verifying that indeed $(\pi/2)^2 (1 + (1/30) (\pi/2)^2) \leq 6$.
	Because $s, s'$ were arbitrary we have thus proved \Cref{eq:angleinverse_claim1}.

	For the remaining claims, \Cref{eqn:near-isometry} follows naturally from the fact that when $\abs{s - s'} \le \frac{\sqrt{\veps}}{\hat{\kappa}}$, we have $\abs{s - s'} - \hat{\kappa}^2 \abs{s - s'}^3 \ge (1 - \veps)\abs{s - s'}$. 
	The final claim is a restatement of \Cref{eqn:near-isometry} under the additional stated hypotheses.
\end{proof}

\subsection*{Invertibility of $\wh{\mb M}$ over $S$.} 
\begin{lemma}[Young's inequality for Fredholm operators]\label{lem:young_ineq}
  Let $K: \manifold \times \manifold \to \R$ satisfy $K(\vx, \vx') = K(\vx', \vx)$ for
  all $(\vx, \vx') \in \manifold \times \manifold$ and $\sup_{(\vx, \vx') \in \manifold \times
  \manifold} \abs{K(\vx, \vx')} < +\infty$, and let $\mb K$ denote its Fredholm integral operator
  (defined as $g \mapsto \mb K[g] = \int_{\manifold}K(\spcdot, \mb x')g(\mb x')d \mb x'$). For any
  $1 \le p \le +\infty$, we have
  \begin{equation*}
    \norm{\mb K}_{L^p \to L^p} \le \sup_{\mb x \in \manifold}\int_{\mb x' \in
    \manifold}\abs{K(\mb x, \mb x')} d\mb x'.
  \end{equation*}
\end{lemma}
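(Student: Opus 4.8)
The plan is to prove this by the classical Schur test, exploiting the symmetry of $K$ to collapse the two separate row- and column-sum bounds of the general (non-symmetric) statement into the single quantity $M := \sup_{\mb x \in \manifold} \int_{\manifold} \abs{K(\mb x, \mb x')} \diff \mb x'$. If $M = +\infty$ the inequality is vacuous, so I would assume $M < +\infty$; symmetry of $K$ then immediately gives $\sup_{\mb x' \in \manifold} \int_{\manifold} \abs{K(\mb x, \mb x')} \diff \mb x = M$ as well, and boundedness of $K$ together with $\len(\manifold) < +\infty$ ensures $\mb K[g]$ is well-defined for every $g$ in the relevant $L^p$ space, so that no integrability issue arises.

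First I would dispatch the endpoint exponents. For $p = \infty$ and $g \in L^\infty$, the bound is pointwise: $\abs{\mb K[g](\mb x)} \le \int_{\manifold}\abs{K(\mb x, \mb x')}\abs{g(\mb x')}\diff \mb x' \le M\norm{g}_{L^\infty}$. For $p = 1$ and $g \in L^1$, the relevant integrands are nonnegative, so Tonelli's theorem justifies swapping the order of integration and gives $\norm{\mb K[g]}_{L^1} \le \int_{\manifold}\abs{g(\mb x')}\paren{\int_{\manifold}\abs{K(\mb x,\mb x')}\diff\mb x}\diff\mb x' \le M\norm{g}_{L^1}$, where the last step uses the symmetry-derived column-sum bound. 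For $1 < p < +\infty$ with conjugate exponent $p'$, the key step is a Hölder split of the kernel: writing $\abs{K(\mb x,\mb x')} = \abs{K(\mb x,\mb x')}^{1/p'}\abs{K(\mb x,\mb x')}^{1/p}$ and applying Hölder's inequality in the $\mb x'$ variable,
\[
\abs{\mb K[g](\mb x)}^p \le \paren{\int_{\manifold}\abs{K(\mb x,\mb x')}\diff\mb x'}^{p/p'}\int_{\manifold}\abs{K(\mb x,\mb x')}\abs{g(\mb x')}^p\diff\mb x' \le M^{p/p'}\int_{\manifold}\abs{K(\mb x,\mb x')}\abs{g(\mb x')}^p\diff\mb x'.
\]
Then I would integrate over $\mb x$, invoke Tonelli again, bound the inner $\mb x$-integral by $M$ via symmetry, and use $p/p' + 1 = p$ to conclude $\norm{\mb K[g]}_{L^p}^p \le M^p\norm{g}_{L^p}^p$. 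Taking $p$-th roots and the supremum over $\norm{g}_{L^p} = 1$ gives $\norm{\mb K}_{L^p\to L^p}\le M$; in fact the $p = 1$ and $p = \infty$ cases can also be read off this computation as the degenerate instances $p' = \infty$ and $p' = 1$, so the endpoint discussion is optional.

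There is no real obstacle here — this is routine Schur-test material. The only points deserving explicit mention in the write-up are the legitimacy of interchanging the order of integration (which follows from nonnegativity of the integrands, so it is Tonelli rather than Fubini that is invoked) and the elementary finiteness considerations that make $\mb K$ well-defined on $L^p$, both of which are immediate in our setting where $\manifold$ has finite total length and $K$ is bounded.
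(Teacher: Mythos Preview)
Your proof is correct. It differs from the paper's argument in how the intermediate exponents $1 < p < \infty$ are handled: the paper establishes the endpoint bounds $\norm{\mb K}_{L^1 \to L^1} \le M$ and $\norm{\mb K}_{L^\infty \to L^\infty} \le M$ exactly as you do, but then invokes the M.\ Riesz convexity (Riesz--Thorin interpolation) theorem to obtain $\norm{\mb K}_{L^p \to L^p} \le \norm{\mb K}_{L^1 \to L^1}^{1-1/p}\norm{\mb K}_{L^\infty \to L^\infty}^{1/p} \le M$ for all $p$ in between. Your H\"older-split argument is the classical Schur test and is more self-contained, requiring no external interpolation machinery; the paper's route is shorter to write once the interpolation theorem is taken as a black box, and generalizes more readily to mixed $L^p \to L^q$ estimates. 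Both are standard and equally rigorous here.
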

\begin{proof}
  The proof uses the M.\ Riesz convexity theorem for interpolation of operators \cite[\S V,
  Theorem 1.3]{Stein1971-ed}, which we need here in the form of a special case: it states that for
  all $1 \leq p \leq +\infty$, one has
  \begin{equation}
    \norm{\vK}_{L^{p} \to L^{p}} 
    \leq \norm{\vK}_{L^\infty \to L^\infty}^{1/p} \norm{\vK}_{L^1 \to L^1}^{1-1/p}.
    \label{eq:m_riesz_convex}
  \end{equation}
  To proceed, we will bound the two operator norm terms on the RHS. We have
  \begin{align*}
    \norm{\mb K}_{L^1 \to L^1} &= \sup_{\norm{g}_{L^1} = 1} \int_{\mb x \in \manifold} \left| \int_{\mb x' \in \manifold}K(\mb x, \mb x') g(\mb x') d\mb x' \right|  d \mb x \\
    &\le \sup_{\norm{g}_{L^1} = 1} \int_{\mb x \in \manifold} \int_{\mb x' \in \manifold}\abs{K(\mb x, \mb x')} \abs{g(\mb x')} \, d\mb x' d \mb x \\
    &= \sup_{\norm{g}_{L^1} = 1} \int_{\mb x' \in \manifold} \left(\int_{\mb x \in
    \manifold}\abs{K(\mb x, \mb x')} \, d\mb x\right)
    \, \abs{g(\mb x')} \, d \mb x' \\
    &\le 
    \sup_{\norm{g}_{L^1} =1}
    \left(
      \norm{g}_{L^1}
      \sup_{\vx' \in \manifold}
      \abs*{
        \int_{\vx \in \manifold} \abs{K(\vx, \vx')} \diff \vx
      }
    \right) \\
    &= \sup_{\mb x \in \manifold}\int_{\mb x' \in \manifold}\abs{K(\mb x, \mb x')} \, d\mb x'.
    \labelthis \label{eq:kernel_l1_bound}
  \end{align*} 
  The first inequality above uses the triangle inequality for the integral.
  In the third line, we rearrange the order of integration using Fubini's theorem, given that $g$
  is integrable and $K$ is bounded on $\manifold \times \manifold$.
  In the fourth line, we use $L^1$-$L^\infty$ control of the integrand (i.e., H\"{o}lder's
  inequality), and in the final line we use that $\norm{g}_{L^1} = 1$ along with symmetry of $K$
  and nonnegativity of the integrand to to re-index and remove the outer absolute value.
	On the other hand, $L^1$-$L^\infty$ control and the triangle inequality give immediately 
	\begin{align*}
    \norm{\mb K}_{L^{\infty} \to L^{\infty}} &= \sup_{\mb x \in \manifold, \, \norm{g}_{L^\infty} = 1} \left| \int_{\mb x' \in \manifold}K(\mb x, \mb x') g(\mb x') d\mb x' \right|  \\
	&\le \sup_{\mb x \in \manifold}  \int_{\mb x' \in \manifold} \abs{K(\mb x, \mb x')} d\mb x'. 
	\end{align*}
  These two bounds are equal; plugging them into \Cref{eq:m_riesz_convex} thus proves the claim.
\end{proof}

\begin{lemma}\label{lem:main_term_pd} 
    Let $\veps \in (0, \frac{3}{4})$, $r_\veps, S_\veps$ and $\wh{\mb M}_\veps$ be as defined in \Cref{eqn:r_def}, \Cref{eq:smooth_subspace} and \Cref{eq:invariant_op}. Then $\wh{\mb M}_{\veps}$ diagonalizes in the
    Fourier orthonormal basis \eqref{eq:fourier_basis}. Write $\lambda_{\min}\left(\mb P_{S_\veps}
    \wh{\mb M}_{\veps} \mb P_{S_\veps}\right)$ for the minimum eigenvalue of the operator
    $\mb P_{S_\veps} \wh{\mb M}_{\veps} \mb P_{S_\veps} : S_{\veps} \to S_{\veps}$. 
    Then there exist constants $c, C$ and a constant $C_{\veps}$ such that when $L \ge C_{\veps}$, we
    have
	\begin{align*}
	\lambda_{\min}\left(\mb P_{S_\veps} \wh{\mb M}_{\veps} \mb P_{S_\veps}\right) &\ge (1 - \veps)\frac{3\pi n}{4} \log\left(1 + \frac{L-2}{3\pi}r_\veps\right) - Cn r_\veps \log^2 L, \\
	&\ge  c n \log L. 
	\end{align*}
	As a consequence, $\mb P_{S_\veps} \wh{\mb M}_{\veps} \mb P_{S_\veps}$ is invertible over $S_\veps$, and 
	\begin{equation*}
	\left\| \paren{\mb P_{S_\veps} \wh{\mb M}_{\veps} \mb P_{S_\veps}}^{-1} \right\|_{L^2 \to L^2} = \lambda_{\min}^{-1}\left(\mb P_{S_\veps} \wh{\mb M}_{\veps} \mb P_{S_\veps}\right). 
	\end{equation*} 
\end{lemma}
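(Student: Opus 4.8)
The plan is to diagonalize $\wh{\mb M}_\veps$ explicitly, read off the spectrum of its compression to $S_\veps$, bound the eigenvalues from below, and conclude by the spectral theorem. First, the operator in \Cref{eq:invariant_op} decouples over the two components $\manifold_+,\manifold_-$ and acts on each as a convolution in the periodically-extended arc-length coordinate. Substituting the Fourier function $\phi_{\sigma,k}$ of \Cref{eq:fourier_basis} into \Cref{eq:invariant_op}, changing variables to $u=s'-s$, and using that $u\mapsto\psi^\circ(\abs u)$ is even, one finds
\begin{equation*}
  \wh{\mb M}_\veps[\phi_{\sigma,k}] = \lambda_{\sigma,k}\,\phi_{\sigma,k},
  \qquad
  \lambda_{\sigma,k} := 2\int_0^{r_\veps}\psi^\circ(u)\cos\!\paren{\tfrac{2\pi k u}{\len(\manifold_\sigma)}}\,du \in \R .
\end{equation*}
This uses $r_\veps\le\pi$ (so $\psi^\circ(\abs u)$ makes sense on the window) and $2r_\veps<\len(\manifold_\sigma)$ (so the window does not wrap), both of which hold once $L\ge C_\veps$ because $r_\veps\downto 0$ while $\len(\manifold_\sigma)\ge\hat\kappa^{-1}$ by \Cref{lem:x_deriv}. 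Since $\{\phi_{\sigma,k}\}$ is an orthonormal basis of $L^2(\manifold)$ and the eigenvalues are real, $\wh{\mb M}_\veps$ is self-adjoint and diagonalized by it; and since $S_\veps$ is the span of the sub-collection $\{\phi_{\sigma,k}:\abs k\le K_{\veps,\sigma}\}$ (\Cref{eqn:S_eps_def}), the compression $\mb P_{S_\veps}\wh{\mb M}_\veps\mb P_{S_\veps}$ is diagonal on $S_\veps$, so $\lambda_{\min}\paren{\mb P_{S_\veps}\wh{\mb M}_\veps\mb P_{S_\veps}}=\min_{\sigma}\min_{\abs k\le K_{\veps,\sigma}}\lambda_{\sigma,k}$ is well defined.

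Next I would lower bound each $\lambda_{\sigma,k}$ with $\abs k\le K_{\veps,\sigma}$, using two inputs about $\psi^\circ$. The first is elementary: $1-\ivphi{\ell'}(t)/\pi$ is nonnegative and nonincreasing on $[0,\pi]$ (as $\varphi$, and hence each $\ivphi{\ell'}$, is nondecreasing on $[0,\pi]$ with $\ivphi{\ell'}\le\pi$), so each $\ixi\ell$ and $\psi$ are nonincreasing, whence $\psi^\circ=\psi-\psi(\pi)\ge 0$ on $[0,\pi]\supseteq[0,r_\veps]$. The second is the sharp lower bound
\begin{equation*}
  2\int_0^{r_\veps}\psi^\circ(u)\,du \ \ge\ \frac{3\pi n}{4}\log\!\paren{1+\tfrac{L-2}{3\pi}r_\veps}-Cnr_\veps\log^2 L,
\end{equation*}
which comes from the fine decay estimates for the angle function $\psi^\circ$ developed elsewhere in the paper. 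Given these, for $\abs k\le K_{\veps,\sigma}=\floor{\veps^{1/2}\len(\manifold_\sigma)/(2\pi r_\veps)}$ and $0\le u\le r_\veps$ the argument $\tfrac{2\pi k u}{\len(\manifold_\sigma)}$ lies in $[-\sqrt\veps,\sqrt\veps]\subset(-\tfrac\pi2,\tfrac\pi2)$, so $\cos x\ge 1-x^2/2$ together with $\psi^\circ\ge 0$ and $u^2\le r_\veps^2$ gives
\begin{equation*}
  \lambda_{\sigma,k}\ \ge\ 2\int_0^{r_\veps}\psi^\circ(u)\,du-\frac{(2\pi K_{\veps,\sigma})^2}{\len(\manifold_\sigma)^2}\int_0^{r_\veps}u^2\psi^\circ(u)\,du\ \ge\ (1-\veps)\cdot 2\int_0^{r_\veps}\psi^\circ(u)\,du,
\end{equation*}
since $\paren{2\pi K_{\veps,\sigma}r_\veps/\len(\manifold_\sigma)}^2\le\veps$. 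Combining the two displays (bounding $1-\veps\le 1$ on the correction term) yields the first claimed inequality.

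To get the bound $\lambda_{\min}\ge cn\log L$, substitute $a_\veps=(1-\veps)^3(1-\veps/12)\in(0,1)$ and $r_\veps=6\pi L^{-a_\veps/(a_\veps+1)}$ from \Cref{eq:a_def,eqn:r_def}: for $L>4$ one has $\tfrac{L-2}{3\pi}r_\veps=2(L-2)L^{-a_\veps/(a_\veps+1)}>L^{1/(a_\veps+1)}$, so $\log(1+\tfrac{L-2}{3\pi}r_\veps)\ge\tfrac{1}{a_\veps+1}\log L\ge\tfrac12\log L$, while $nr_\veps\log^2 L=o(n\log L)$ and $1-\veps\ge\tfrac14$. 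Hence, after enlarging $C_\veps$ so that for $L\ge C_\veps$ the correction term $Cnr_\veps\log^2 L$ is at most half of $(1-\veps)\tfrac{3\pi n}{4}\log(1+\tfrac{L-2}{3\pi}r_\veps)$, the first inequality gives $\lambda_{\min}\ge cn\log L$ with $c>0$ absolute. Finally, $\mb P_{S_\veps}\wh{\mb M}_\veps\mb P_{S_\veps}$ is self-adjoint on the finite-dimensional space $S_\veps$ with least eigenvalue $\ge cn\log L>0$, hence invertible there, and by the spectral theorem its inverse has $\norm{\cdot}_{L^2\to L^2}=\lambda_{\min}^{-1}$.

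The only nonroutine ingredient is the integral bound $2\int_0^{r_\veps}\psi^\circ\ge\tfrac{3\pi n}{4}\log(1+\tfrac{L-2}{3\pi}r_\veps)-Cnr_\veps\log^2 L$, and this is where the difficulty lies. Establishing it requires a careful analysis of the product $\ixi\ell(t)=\prod_{\ell'=\ell}^{L-1}\paren{1-\ivphi{\ell'}(t)/\pi}$ and of the iterates of $\varphi$ near $0$ — in particular the fact that $\ivphi m(t)$ decays like $3\pi/m$, so that $\ixi\ell(t)\approx\paren{(3\pi+\ell t)/(3\pi+Lt)}^3$ and the logarithm emerges from $\sum_\ell\int_0^{r_\veps}\ixi\ell$ only after the much larger, nonlogarithmic contributions cancel against $r_\veps\psi(\pi)$. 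This delicate cancellation is exactly what the sharp decay estimates for $\psi^\circ$ are designed to supply; every other step above is bookkeeping.
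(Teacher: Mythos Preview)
Your proof is correct and follows essentially the same approach as the paper: diagonalize $\wh{\mb M}_\veps$ in the Fourier basis by the convolution structure, bound the cosine factor by $1-\veps$ on the low-frequency window using $(2\pi K_{\veps,\sigma}r_\veps/\len(\manifold_\sigma))^2\le\veps$, and then invoke the sharp integral lower bound $2\int_0^{r_\veps}\psi^\circ\ge\tfrac{3\pi n}{4}\log(1+\tfrac{L-2}{3\pi}r_\veps)-Cnr_\veps\log^2 L$ (which is precisely \Cref{lem:skel_r_lb}). The only cosmetic difference is that the paper uses the pointwise bound $\cos x\ge 1-x^2\ge 1-\veps$ directly on the integrand rather than splitting off a $u^2$-weighted integral, but this is the same estimate.
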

\begin{proof}
Choose $L \gtrsim 1$ to guarantee that $\wh{\vM}_{\veps}$ is well-defined.
We use $\psi^\circ$ to denote the DC subtracted skeleton, as defined in \eqref{eqn:psi_circ}, and 
$(\phi_{\sigma, k})_{\sigma, k}$ the (intrinsic) Fourier basis on $\mc M$, as defined in
\eqref{eq:fourier_basis}. For any Fourier basis function $\phi_{\sigma, k}$, we have
\begin{align*}
\wh{\mb M}_{\veps}[\phi_{\sigma, k}](\mb x_\sigma(s)) &= \int_{s' = s-r_\veps}^{s+r_\veps}
\psi^\circ(\abs{s-s'}) \phi_{\sigma, k}\left(\mb x_\sigma(s') \right) ds'  \\
&= \int_{s' = -r_\veps}^{r_\veps} \psi^\circ(\abs{s'}) \exp\paren{\frac{i2\pi k s'}{\len(\manifold_\sigma)}} d s' \phi_{\sigma, k}(\mb x_\sigma(s))  \\
&=  \phi_{\sigma, k}(\mb x_\sigma(s))
\int_{s' = -r_\veps}^{r_\veps} \psi^\circ(\abs{s'}) \cos\paren{\frac{2\pi k s'}{\len(\manifold_\sigma)}} d s',  
\end{align*}
which shows that each Fourier basis function is an eigenfunction of $\wh{\mb M}_\veps$; because
these functions form an orthonormal basis for $L^2(\manifold)$ (by classical results from Fourier
analysis on the circle), $\wh{\mb M}_{\veps}$ diagonalizes in this basis.  
Moreover, because $S_{\veps}$ is the span of Fourier basis functions, $\vP_{S_{\veps}}$ also
diagonalizes in this basis, and hence so does $\vP_{S_\veps} \wh{\vM}_{\veps} \vP_{S_{\veps}}$.
Because $\wh{\vM}_{\veps}$ is self-adjoint and $\vP_{S_{\veps}}$ is an orthogonal projection, 
$\vP_{S_\veps} \wh{\vM}_{\veps} \vP_{S_{\veps}}$ is self-adjoint; and because $\dim(S_{\veps}) <
+\infty$, the operator $\vP_{S_\veps} \wh{\vM}_{\veps} \vP_{S_{\veps}}$ has finite rank, and
therefore has a well-defined minimum eigenvalue, which we denote as in the statement of the lemma.
As $K_{\veps, \sigma} =\floor{\frac{\veps^{1/2} \len(\manifold_\sigma)}{2\pi r_{\veps}}}$, we
have for any $\abs{k_{\sigma}} \le K_{\veps, \sigma}$ and any $\abs{s'} \le r_\veps$, 
\begin{equation*}
  1 \ge \cos\paren{\frac{2\pi k_{\sigma} s'}{\len(\manifold_\sigma)}} 
  \ge 1 - \paren{\frac{2\pi k_{\sigma} s'}{\len(\manifold_\sigma)}}^2 
  \ge 1 - \veps.
\end{equation*}
Then for $\sigma \in \set{+,-}$ and $ \abs{k} \le K_{\veps, \pm}$, 
\begin{align*}
	\wh{\mb M}_{\veps}[\phi_{\sigma, k}](\mb x_\sigma(s)) &= 
  \phi_{\sigma, k}(\mb x_\sigma(s)) 
  \int_{s' = -r_\veps}^{r_\veps} \psi^\circ(\abs{s'}) \cos\paren{\frac{2\pi k s'}{\len(\manifold_\sigma)}} d s' \\
	&\ge (1 - \veps)
  \phi_{\sigma, k}(\mb x_\sigma(s))
  \int_{s' = -r_\veps}^{r_\veps} \psi^\circ(\abs{s'}) ds', 
\end{align*}
and so
\begin{equation*}
 \lambda_{\min}\left(\mb P_{S_\veps} \wh{\mb M}_{\veps} \mb P_{S_\veps} \right) \; \ge \; 
 2(1 - \veps) \int_{0}^{r_\veps} \psi^\circ(s) \, ds.  
\end{equation*}
From \Cref{lem:skel_r_lb}, we have if $L \gtrsim 1$
\begin{equation*}
2\int_{s = 0}^{r_\veps} \psi^\circ(s) ds \;\ge\;  \frac{3\pi n}{4} \log\left(1 +
\frac{L-2}{3\pi}r_\veps\right) - Cn r_\veps \log^2 L.
\end{equation*}
In particular, as $r_\veps = 6\pi L^{-\frac{a_{\veps}}{a_\veps + 1}}$, there exists a constant
$C_\veps$ such that when $L \ge C_{\veps}'$, we have
\begin{equation*}
	Cnr_{\veps}\log^2 L \le \frac{\veps}{4} \frac{3\pi n}{4} \log \left(1 + \frac{L-2}{3\pi}
  r_\veps \right),
\end{equation*}
and thus 
\begin{align*}
	\lambda_{\min}\left(\mb P_{S_\veps} \wh{\mb M}_{\veps} \mb P_{S_\veps} \right) &\ge (1 - \veps) \left(1 - \frac{\veps}{4}\right) \frac{3\pi n}{4} \log \left(1 + \frac{L-2}{3\pi} r_\veps \right) \\
	&\ge \left(1 - \frac{5 \veps}{4} \right) \frac{3\pi n}{4} \log \left(L^{1 - \frac{a_\veps}{a_\veps+1}} \right) \\
	&= \left(1 - \frac{5 \veps}{4}\right) \frac{3\pi n}{4} \frac{1}{a_\veps + 1}\log L \\
	&\ge \left(1 - \frac{5}{4}\cdot \frac{3}{4} \right) \frac{3\pi n}{4} \cdot \frac{1}{2} \log L \\
	&\ge c n \log L \\
	& > 0, 
\end{align*}
where we used $L \gtrsim 1$ in the second inequality, and $\veps < 3/4$ and $a_{\veps} \leq 1$ in
the third inequality.
So $\mb P_{S_\veps} \wh{\mb M}_{\veps} \mb P_{S_\veps}$ is invertible over $S_\veps$, with 
\begin{equation}
\left(\mb P_{S_\veps} \wh{\mb M}_{\veps} \mb P_{S_\veps} \right)^{-1}[h] = \sum_{\sigma = \pm}\sum_{k = 0}^{K_{\veps, \sigma}}  \paren{\int_{s = -r_\veps}^{r_\veps} \psi^\circ(\abs{s}) \cos\paren{\frac{2\pi k s}{\len(\manifold_\sigma)}} d s}^{-1} \phi_{\sigma, k}\phi_{\sigma, k}^*h. \label{eq:smooth_subspace_main_term}
\end{equation}
The final claim is a consequence of the fact that $\mb P_{S_\veps} \wh{\mb M}_{\veps} \mb
P_{S_\veps}$ is self-adjoint and finite-rank.

\end{proof}

\begin{lemma}\label{lem:local_diff_ub}  
Let $\veps \in (0, \frac{3}{4})$, $a_\veps, r_\veps, \mb M_{\veps}$ and $\wh{\mb M}_\veps$ be as defined in \Cref{eq:a_def}, \Cref{eqn:r_def}, \Cref{eqn:M} and \Cref{eq:invariant_op}. There exist constants $C,C'$, such that when $L \ge C$ and $L \ge \paren{\veps^{-1/2}12\pi\hat{\kappa}}^{\frac{a_\veps+1}{a_\veps}}$, we have
	\begin{equation*}
	\left\| \mb M_{\veps} - \wh{\mb M}_{\veps} \right\|_{L^2\to L^2} \le (1-\veps)^{-2} C' \hat{\kappa}^2nr_\veps^2. 
	\end{equation*}
\end{lemma}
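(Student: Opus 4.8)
The plan is to realize $\mb M_\veps - \wh{\mb M}_\veps$ as a Fredholm integral operator with an explicit symmetric kernel, bound its $L^2\to L^2$ norm by the kernel mass via Young's inequality (\Cref{lem:young_ineq}), and then estimate the integrand pointwise by combining the near-isometry \Cref{lem:angle_inverse} with the spatial decay estimate for $\psi^{\circ\prime}$.

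First I would pin down the geometry. Under the hypothesis $L \ge (\veps^{-1/2}12\pi\hat\kappa)^{(a_\veps+1)/a_\veps}$ one gets $r_\veps = 6\pi L^{-a_\veps/(a_\veps+1)} \le \tfrac{\sqrt\veps}{2\hat\kappa} \le \tfrac{1}{2\hat\kappa}$, and together with $\len(\manifold_\sigma)\ge 1/\hat\kappa$ from \Cref{eq:kappa_manifold_len_compare} this forces $2r_\veps \le \len(\manifold_\sigma)$. Hence for $\mb x = \mb x_\sigma(s)$ the local set $L_{r_\veps}(\mb x)$ of \Cref{eqn:local_term} is a single arc $\{\mb x_\sigma(s') : |s-s'| < r_\veps\}$ lying in the same component, on which $d_{\manifold}(\mb x,\mb x_\sigma(s')) = |s-s'|$; using that the parametrization is unit speed, \Cref{eqn:M} and \Cref{eq:invariant_op} give
\[
  (\mb M_\veps - \wh{\mb M}_\veps)[f](\mb x_\sigma(s)) = \int_{s-r_\veps}^{s+r_\veps}\bigl(\psi^\circ(\angle(\mb x_\sigma(s),\mb x_\sigma(s'))) - \psi^\circ(|s-s'|)\bigr)f(\mb x_\sigma(s'))\,ds'.
\]
Thus $\mb M_\veps - \wh{\mb M}_\veps$ is the Fredholm operator of the kernel $K_{\mathrm{diff}}(\mb x,\mb x') = \indicator{d_{\manifold}(\mb x,\mb x') < r_\veps}\bigl(\psi^\circ(\angle(\mb x,\mb x')) - \psi^\circ(d_{\manifold}(\mb x,\mb x'))\bigr)$, which is symmetric (both $\angle$ and $d_{\manifold}$ are symmetric in their arguments) and bounded (since $\psi^\circ$ is continuous on $[0,\pi]$). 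Invoking \Cref{lem:young_ineq} with $p=2$,
\[
  \norm{\mb M_\veps - \wh{\mb M}_\veps}_{L^2\to L^2} \le \sup_{\sigma, s}\int_{s-r_\veps}^{s+r_\veps}\bigl|\psi^\circ(\angle(\mb x_\sigma(s),\mb x_\sigma(s'))) - \psi^\circ(|s-s'|)\bigr|\,ds'.
\]

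Next I would estimate the integrand. Fix $\sigma$, $s$, $s'$ and write $u = |s-s'| \in [0,r_\veps)$, $v = \angle(\mb x_\sigma(s),\mb x_\sigma(s'))$. Since $u \le r_\veps \le \sqrt\veps/\hat\kappa \le 1/\hat\kappa$, \Cref{lem:angle_inverse} applies: \Cref{eqn:near-isometry} yields $(1-\veps)u \le v \le u$, and \Cref{eq:angleinverse_claim1} yields $u - v \le \hat\kappa^2 u^3$. Because $\psi^\circ$ is $C^1$ on $(0,\pi)$, the mean value theorem gives
\[
  \bigl|\psi^\circ(v) - \psi^\circ(u)\bigr| \le \Bigl(\sup_{t\in[(1-\veps)u,\,u]}|\psi^{\circ\prime}(t)|\Bigr)\,(u-v) \le \Bigl(\sup_{t\in[(1-\veps)u,\,u]}|\psi^{\circ\prime}(t)|\Bigr)\,\hat\kappa^2 u^3.
\]
Here I would invoke the decay bound $|\psi^{\circ\prime}(t)| \le Cn/t$ on $(0,\pi]$ (valid once $L$ exceeds an absolute constant), one of the sharp estimates for the angle function and its derivatives established in \Cref{sec:approx_upper_bound}; on $[(1-\veps)u, u]$ this gives $\sup|\psi^{\circ\prime}| \le Cn/((1-\veps)u)$, so the integrand is at most $C(1-\veps)^{-1}n\hat\kappa^2 u^2$. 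Integrating, $\int_{s-r_\veps}^{s+r_\veps}C(1-\veps)^{-1}n\hat\kappa^2|s-s'|^2\,ds' = \tfrac{2C}{3}(1-\veps)^{-1}n\hat\kappa^2 r_\veps^3 \le \tfrac{2C}{3}(1-\veps)^{-2}n\hat\kappa^2 r_\veps^2$, using $r_\veps \le 1$ (forced by the hypotheses on $L$) and $(1-\veps)^{-1}\le(1-\veps)^{-2}$. Setting $C' = 2C/3$ finishes the proof.

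The only genuinely delicate point internal to this lemma is the bookkeeping of the power of $r_\veps$: the pointwise bound on $K_{\mathrm{diff}}$ comes out as $O(u^2)$ rather than $O(u)$, thanks to the exact cancellation between the $1/t$ blow-up of $|\psi^{\circ\prime}|$ near $0$ and the cubic gap $u-v = O(\hat\kappa^2 u^3)$ supplied by \Cref{lem:angle_inverse}; integrating over the length-$2r_\veps$ arc then produces $r_\veps^3$, which is absorbed into the stated $r_\veps^2$ using smallness of $r_\veps$. The substantive analytic input — the derivative estimate $|\psi^{\circ\prime}(t)| \lesssim n/t$ — is proved independently of this lemma and is the one place where real work is done; everything else is the kernel-identification and Young's-inequality reduction above.
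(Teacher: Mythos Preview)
Your overall architecture matches the paper's proof exactly: identify $\mb M_\veps - \wh{\mb M}_\veps$ as a Fredholm operator with a symmetric bounded kernel, apply \Cref{lem:young_ineq}, and bound the integrand using the near-isometry \Cref{lem:angle_inverse} together with a decay estimate on $\dot\psi^\circ$. The paper carries out the last step via the integral form $\int_0^{r} t^3|\dot\psi^\circ(t)|\,dt \le C'nr^2$ of \Cref{lem:skel_dot_ub}; your mean-value-theorem variant is an equally valid route.

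However, there is a genuine gap in the derivative bound you invoke. You claim $|\psi^{\circ\prime}(t)| \le Cn/t$ on $(0,\pi]$, but this is \emph{false}: since $\dot\xi_\ell(0) = -(L-\ell)/\pi$ (\Cref{lem:skeleton_iphi_ixi_derivatives}), one has $|\dot\psi(0)| = \tfrac{n}{2\pi}\sum_{\ell=0}^{L-1}(L-\ell) = \tfrac{nL(L+1)}{4\pi}$, so for $t$ of order $1/L$ the left side is of order $nL^2$ while your claimed bound is only of order $nL$. What \Cref{lem:skel_dot_ub} actually proves is the weaker (and correct) pointwise estimate
\[
  |\dot\psi^\circ(t)| \le \frac{C'n}{t^2},\qquad t\in(0,\pi].
\]
With this correction your argument goes through, and in fact more cleanly: $\sup_{[(1-\veps)u,u]}|\psi^{\circ\prime}| \le C'n/\bigl((1-\veps)u\bigr)^2$, so the integrand is bounded by $C'(1-\veps)^{-2}n\hat\kappa^2 u$, and integrating over $|s-s'|\le r_\veps$ yields $C'(1-\veps)^{-2}n\hat\kappa^2 r_\veps^2$ directly, without the detour through $r_\veps^3$ and the extra appeal to $r_\veps\le 1$.
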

\begin{proof}
We choose $L \gtrsim 1$ to guarantee that $\wh{\vM}_{\veps}$ is well-defined for all $0 < \veps < 3/4$.
We would like to use \Cref{lem:young_ineq} to bound $\norm{\mb M_\veps - \wh{\mb M}_\veps}_{L^2 \to L^2}$, and thus we define two (suggestively-named) bounded symmetric kernels $\manifold \times \manifold \to \R$: 
\begin{equation*}
    M_{\veps}(\mb x, \mb x') = \indicator{d_{\manifold}(\mb x, \mb x') < r_{\veps}}\psi^\circ(\angle(\mb x, \mb x'))
\end{equation*}
and
\begin{equation*}
    \wh{M}_{\veps}(\mb x, \mb x') = \indicator{d_{\manifold}(\mb x, \mb x') < r_{\veps}}\psi^\circ(d_{\manifold}(\mb x, \mb x')). 
\end{equation*}
From \Cref{eqn:M}, $\mb M_{\veps}$ is indeed $M_\veps$'s associated Fredholm integral operator. To show that under our constraints for $L$, $\wh{\mb M}_\veps$ is also $\wh{M}_\veps$'s associated integral operator, we first notice that following \Cref{eq:intrinsic_distance_function}, for any $\mb x, \mb x' \in \manifold$, $d_{\manifold}(\mb x, \mb x') < r_{\veps}$ if and only if there exist $\sigma$, $s$ and $s'$ such that $\mb x = \mb x_{\sigma}(s)$, $\mb x' = \mb x_{\sigma}(s')$ and $\abs{s' - s} < r_{\veps}$. This means for any fixed $\mb x$, if we let $\sigma$ and $s$ be chosen such that $\mb x = \mb x_{\sigma}(s)$, then $L_{r_\veps}(\mb x) =  \set{\mb x_{\sigma}(s') \vert \abs{s' - s} < r_{\veps}}$.  
Furthermore, as $L \ge \paren{\veps^{-1/2}12\pi\hat{\kappa}}^{\frac{a_\veps+1}{a_\veps}}$, by \Cref{eq:kappa_manifold_len_compare} 
	in \Cref{lem:x_deriv}
	we have $r_\veps \le \frac{\sqrt{\veps}}{2\hat{\kappa}} < \min\set{\len(\manifold_+), \len(\manifold_-)}/2$. Under this condition, we can unambiguously express the intrinsic distance $d_{\manifold}$ in terms of arc length at the local scale: for any $\mb x' \in L_{r_{\veps}}(\mb x)$, there is a \textit{unique} $s'$ such that $\abs{s' - s} \le r_{\veps}$. To see this, note that for any other parameter choice that attains the infimum in \Cref{eq:intrinsic_distance_function} $s'' = s' + k \len(\manifold_\sigma)$ with integer $k \ne 0$, 
the triangle inequality implies $\abs{s'' - s} \geq \abs{r_{\eps} - k \len(\sM_\sigma)}$, and one has $\abs{r_{\eps} - k \len(\sM_\sigma)} > r_{\eps}$ for every $k \neq 0$ if $ 0 < r_{\eps} < \len(\sM_\sigma)/2$. Then for $\mb x' \in L_{r_\veps}(\mb x)$ and any $s' \in [s - r_\veps, s+ r_\veps]$ such that $\mb x_\sigma(s') = \mb x'$, we have $d_{\manifold}(\mb x, \mb x') = \abs{s - s'}$. Combining all these points, $\wh{M}_\veps$'s associated Fredholm integral operator $\mb H$ can be written as:
\begin{align*}
    \mb H[f](\mb x_\sigma(s)) &= \int_{d_{\manifold}(\mb x_{\sigma}(s), \mb x') < r_{\veps}} \psi^\circ(d_{\manifold}(\mb x_\sigma(s), \mb x'))f(\mb x') d\mb x' \\
    &= \int_{\mb x' \in \set{\mb x_\sigma(s) \vert \abs{s' - s} < r_\veps}} \psi^\circ(d_{\manifold}(\mb x_{\sigma}(s), \mb x'))f(\mb x') d\mb x' \\
    &= \int_{s' = s - r_\veps}^{s + r_\veps} \psi^\circ(d_{\manifold}(\mb x_\sigma(s), \mb x_\sigma(s'))f(\mb x_\sigma(s')) d s' \\
    &= \int_{s' = s - r_\veps}^{s + r_\veps} \psi^\circ(\abs{s - s'})f(\mb x_\sigma(s')) d s' \\
    &= \wh{\mb M}_\veps[f](\mb x_\sigma(s)),
\end{align*}
which means $\wh{\mb M}_\veps$ is indeed $\wh{M}_\eps$'s associated integral kernel. 
	 
	We can now apply \Cref{lem:young_ineq} and and get
	\begin{align*}
	    \norm{\mb M_{\veps} - \wh{\mb M}_{\veps}}_{L^2 \to L^2} &\le \sup_{\mb x \in \manifold} \int_{\mb x' \in \manifold} \abs{M_{\veps}(\mb x, \mb x') - \wh{M}_{\veps}(\mb x, \mb x')} d \mb x'\\
	    &= \sup_{\mb x \in \manifold} \int_{\mb x' \in L_{r_{\veps}(\mb x)}} \abs{M_{\veps}(\mb x, \mb x') - \wh{M}_{\veps}(\mb x, \mb x')} d \mb x'\\
	    &= \sup_{s,\sigma}  \int_{s' =
  s-r_\veps}^{s+r_\veps} \left| \psi^\circ\Bigl( \angle(\mb x_\sigma(s), \mb x_\sigma(s')) \Bigr)
  - \psi^\circ \left( \abs{s-s'} \right) \right| ds'. 
  \labelthis \label{eq:M_Mhat_diff_bound}
	\end{align*}
	Here, we recall that $r_{\eps} < \pi/4$ (because $\hat{\kappa} \leq \pi/2$), so there is no issue with these evaluations and the domain of $\psi^\circ$ being $[0, \pi]$.
  Note that from \Cref{eq:angleinverse_claim1}, when $\abs{s - s'} \le r_\veps \le \frac{\sqrt{\veps}}{\hat{\kappa}}$, we have
  \begin{align*}
  \angle\left( \mb x_\sigma(s), \mb
  x_\sigma(s') \right) &\ge \abs{s - s'} - \hat{\kappa}^2 \abs{s - s'}^3 \\
  &\ge (1 - \veps)\abs{s - s'}. 
  \labelthis \label{eq:M_Mhat_angle_bound}
  \end{align*}
  As $\psi^\circ$ is nonnegtive, strictly decreasing and convex by \Cref{lem:phi_concave}, we know both $\psi^\circ$ and $\abs{\dot{\psi}^\circ}$ are decreasing. 
  Also, by the upper bound in \Cref{lem:angle_inverse}, we have that 
  ${ \psi^\circ(\angle(\mb x_\sigma(s), \mb x_\sigma(s')) )
  - \psi^\circ ( \abs{s-s'} ) } \geq 0 $, so we can essentially ignore the absolute value in the integrand in \Cref{eq:M_Mhat_diff_bound}.
  We can then calculate
	\begin{align*}
	\int_{s' =
  s-r_\veps}^{s+r_\veps}  \psi^\circ\Bigl( \angle(\mb x_\sigma(s), \mb x_\sigma(s')) \Bigr)
  - \psi^\circ \left( \abs{s-s'} \right)  ds'&\le
	\int_{s' =
  s-r_\veps}^{s+r_\veps}  \psi^\circ\Bigl(\abs{s - s'} - \hat{\kappa}^2 \abs{s - s'}^3 \Bigr)
  - \psi^\circ \left( \abs{s-s'} \right)  ds' \\
    &=
	\int_{t = -r_\veps}^{r_\veps}  \psi^\circ \bigl( \abs{t} - \hat{\kappa}^2 \abs{t}^3 \bigr) - \psi^\circ\bigl(\abs{t} \bigr)  \, dt  \\  &= \int_{t =-r_\veps}^{r_\veps} \int_{a = \abs{t} - \hat{\kappa}^2 \abs{t}^3}^{\abs{t}} \abs*{\dot{\psi}^\circ(a)} \, da \, dt  \\ 
	&\le \hat{\kappa}^2 \int_{t=-r_\veps}^{r_\veps} \abs{t}^3 \abs*{\dot{\psi}^\circ( \abs{t} - \hat{\kappa}^2 \abs{t}^3)} \, dt  \\
	&\le \hat{\kappa}^2 \int_{t=-r_\veps}^{r_\veps} \abs{t}^3 \abs*{\dot{\psi}^\circ( (1 - \veps) \abs{t} )} \, dt  \\
	&= 2(1-\veps)^{-4}\hat{\kappa}^2\int_{t = 0}^{(1-\veps)r_\veps} t^3 \abs{\dot{\psi}^\circ( t )} \, dt.  
	\end{align*}
  Above, the first line comes from \Cref{eq:M_Mhat_angle_bound} and the the fact that $\psi^\circ$ is strictly decreasing, the fourth and fifth line comes from the fact that $\abs{\dot{\psi}^\circ}$ is decreasing and \Cref{eq:M_Mhat_angle_bound}. The last line uses symmetry and a linear transformation. Note that from $\Cref{eq:M_Mhat_angle_bound}$ we always have $\abs{t} - \hat{\kappa}^2\abs{t^3}$ nonnegative when $\abs{t} \le r_{\veps}$ and thus all above formulas are well defined. 
	From \Cref{lem:skel_dot_ub}, we know that there exists $C, C'$ such that when $L \ge C$, we have
    \begin{equation*}
        \int_0^{(1 - \veps)r_\veps} t^3\abs{\dot{\psi}^\circ(t)} dt \le C' n (1 - \veps)^2
        r_\veps^2,
    \end{equation*}
and plugging all bounds back to \Cref{eq:M_Mhat_diff_bound} we get 
    \begin{equation*}
        \left\| \mb M_{\veps} - \wh{\mb M}_{\veps} \right\|_{L^2 \to L^2} \le (1-\veps)^{-2} C' \hat{\kappa}^2nr_\veps^2 
    \end{equation*}
    as claimed. 
\end{proof}

\begin{lemma}\label{lem:near_ub}
Let $\veps \in (0, \frac{3}{4})$, $r_\veps$ and $N_{r_\veps, \veps}$ as defined in \Cref{eqn:r_def} and \Cref{eqn:near_term}. For any $0 < \veps'' \le 1$, there exist numbers $C_{\veps''}, C_{\veps''}'$ such that when $L \ge C_{\veps''}$, we have
	\begin{equation*}
	\sup_{\vx \in \manifold}\,
	\int_{\mb x' \in N_{r_{\veps}, \veps}(\mb x)} \psi^\circ \Bigl( \angle\bigl(\mb x, \mb x' \bigr) \Bigr) ds' \le \frac{3\pi n}{4(1 - \veps)}(1 + \veps'') \log\left(\frac{\sqrt{\veps}}{ \hat{\kappa} r_\veps}\right)+ C_{\veps''}'n.  
	\end{equation*}
\end{lemma}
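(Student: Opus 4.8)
The plan is to bound the Near-piece integral by a one-dimensional arc-length integral, replace the extrinsic angle by its lower bound in terms of intrinsic arc length via \Cref{lem:angle_inverse}, and then extract the logarithmic main term — with the exact constant $\tfrac{3\pi n}{4}$ — from the sharp decay estimates for the skeleton $\psi^\circ$. Fix $\vx=\vx_\sigma(s)$. Since $d_{\manifold}(\vx,\vx')$ is finite on $N_{r_\veps,\veps}(\vx)$, the Near piece lies on the single component $\manifold_\sigma$ and consists of the $\vx_\sigma(s')$ at intrinsic distance in $[r_\veps,\sqrt\veps/\hat\kappa]$ from $\vx$; because $\vx_\sigma$ is a closed curve this is a union of at most two arcs issuing from $\vx$, and a brief case analysis — including the degenerate case $\sqrt\veps/\hat\kappa>\len(\manifold_\sigma)/2$, where the arcs merge and one uses the symmetry of the angle under $u\mapsto\len(\manifold_\sigma)-u$ together with $d_{\manifold}\le\len(\manifold_\sigma)/2$ — gives
\begin{equation*}
\int_{\vx'\in N_{r_\veps,\veps}(\vx)}\psi^\circ\bigl(\angle(\vx,\vx')\bigr)\,ds'\;\le\;2\int_{r_\veps}^{\sqrt\veps/\hat\kappa}\psi^\circ\bigl(\angle(\vx_\sigma(s),\vx_\sigma(s+t))\bigr)\,dt,
\end{equation*}
where $\psi^\circ\ge0$ lets one push the upper limit up to $\sqrt\veps/\hat\kappa$. (I assume $L$ large enough that $r_\veps\le\sqrt\veps/\hat\kappa$, as holds whenever the lemma is invoked; otherwise the Near piece is empty.)

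Next, for $t\le\sqrt\veps/\hat\kappa$ the near-isometry bound \Cref{eqn:near-isometry} gives $\angle(\vx_\sigma(s),\vx_\sigma(s+t))\ge(1-\veps)t$, so by monotonicity of $\psi^\circ$ (\Cref{lem:phi_concave}) and the substitution $u=(1-\veps)t$ the displayed quantity is at most $\tfrac{2}{1-\veps}\int_{(1-\veps)r_\veps}^{\sqrt\veps/\hat\kappa}\psi^\circ(u)\,du$. It then remains to control $\int_a^b\psi^\circ(u)\,du$ with $a=(1-\veps)r_\veps=6\pi(1-\veps)L^{-a_\veps/(a_\veps+1)}$ and $b=\sqrt\veps/\hat\kappa\le\pi/2$. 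Here I would invoke the sharp upper bound on $\psi^\circ$ developed in \Cref{sec:approx_upper_bound} (the companion to the lower bound \Cref{lem:skel_r_lb}): once $L$ exceeds an absolute constant, and since $a$ is at least a fixed multiple of $L^{-1/2}$ while $b\le\pi$, this yields $\int_a^b\psi^\circ(u)\,du\le\tfrac{3\pi n}{8}(1+o_L(1))\log(b/a)+Cn$ for an absolute $C$, where $o_L(1)\to0$ as $L\to\infty$. Substituting and splitting $\log(b/a)=\log\!\bigl(\tfrac{\sqrt\veps}{\hat\kappa r_\veps}\bigr)+\log\tfrac1{1-\veps}$, the Near integral is at most
\begin{equation*}
\frac{3\pi n}{4(1-\veps)}(1+o_L(1))\left[\log\!\Bigl(\tfrac{\sqrt\veps}{\hat\kappa r_\veps}\Bigr)+\log\tfrac1{1-\veps}\right]+\frac{2Cn}{1-\veps}.
\end{equation*}
Since $r_\veps\to0$ polynomially in $L$, $\log(\sqrt\veps/(\hat\kappa r_\veps))$ diverges, so for $L\ge C_{\veps''}$ (depending on $\veps,\veps''$) both $o_L(1)$ and $\tfrac{\log(1/(1-\veps))}{\log(\sqrt\veps/(\hat\kappa r_\veps))}$ are small enough that the bracket together with the $(1+o_L(1))$ factor is $\le(1+\veps'')\log(\sqrt\veps/(\hat\kappa r_\veps))$; absorbing $\tfrac{2C}{1-\veps}\le8C$ into $C_{\veps''}'$ and taking $\sup_{\vx}$ completes the proof.

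The main obstacle is the precision of the constant: the leading logarithmic term must come out with exactly $\tfrac{3\pi n}{4}$, since in \Cref{lem:main_res_norm_compare} this term has to be dominated — with only the $(1+\veps'')$ margin — by the corresponding $\tfrac{3\pi n}{4}$-term in $\lambda_{\min}(\mb P_{S_\veps}\wh{\mb M}_\veps\mb P_{S_\veps})$ from \Cref{lem:main_term_pd}. This rules out any crude tail bound on $\psi^\circ$ and forces reliance on the non-asymptotic control of $\psi^\circ$ (equivalently of the iterates $\ivphi{\ell}$) from \Cref{sec:approx_upper_bound}, and it requires careful bookkeeping so that the multiplicative $(1-\veps)$ distortion from \Cref{lem:angle_inverse} and the additive $O(n)$ errors are each either matched by the leading logarithm or absorbed into the $(1+\veps'')$ and $C_{\veps''}'n$ slack as $L$ grows. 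The geometric unfolding of the Near piece is routine by comparison.
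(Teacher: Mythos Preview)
Your approach is correct and matches the paper's proof: reduce the Near-piece integral to a one-dimensional arc-length integral, use \Cref{lem:angle_inverse} to lower-bound the angle by $(1-\veps)\,d_{\manifold}$, change variables, and then invoke the sharp $L^1$ upper bound on $\psi^\circ$ from \Cref{lem:skel_l1_ub} (specifically \eqref{eq:skel_l1_ub_r_to_b}) to extract the $\tfrac{3\pi n}{8}(1+\veps'')\log(b/a)$ main term. One small cleanup: if after the substitution $u=(1-\veps)t$ you keep the upper limit as $(1-\veps)\sqrt{\veps}/\hat\kappa$ (as the paper does) rather than enlarging it to $\sqrt{\veps}/\hat\kappa$, the $(1-\veps)$ factors cancel in $b/a$ and the extra $\log\tfrac{1}{1-\veps}$ term never appears; also, the relevant hypothesis from \Cref{lem:skel_l1_ub} is that $r=(1-\veps)r_\veps$ be \emph{small} (at most $K'\veps''^3$), not that it be bounded below by $L^{-1/2}$.
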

\begin{proof}
For $\mb x \in \manifold$, assume the parameters are chosen such that the corresponding near piece is nonempty, for otherwise the claim is immediate. 
Recalling \Cref{eqn:near_term}, for any $\mb x' \in N_{r_\veps, \veps}(\mb x)$, we have $d_{\manifold}(\mb x, \mb x') \le \sqrt{\veps}/\hat{\kappa}$. From \Cref{lem:angle_inverse}, this implies $\angle(\mb x, \mb x') \ge (1 - \veps)d_{\manifold}(\mb x, \mb x')$. Let $\sigma, s$ be such that $\mb x_{\sigma}(s) = \mb x$. Notice by the discussion following the definition of the intrinsic distance in \Cref{eq:intrinsic_distance_function} that the near component $N_{r_{\veps}, \veps}(\mb x)$ is contained in the set $\set{\mb x_\sigma(s') \mid \abs{s' - s} \in [r_\veps, \sqrt{\veps}/\hat{\kappa}]}$. And from \Cref{lem:phi_concave}, $\psi^\circ$ is strictly decreasing, thus we have
	\begin{align*}
	\int_{\mb x' \in N_{r_{\veps}, \veps}(\mb x)} \psi^\circ \Bigl( \angle \bigl(\mb x, \mb x' \bigr)
  \Bigr) ds' &\le  \int_{s' = s + r_\veps}^{s + \frac{\sqrt{\veps}}{\hat{\kappa}}}  \psi^\circ
  \Bigl( \angle \bigl(\mb x_{\sigma}(s), \mb x_{\sigma}(s') \bigr) \Bigr) ds' \\&\hphantom{=}+ \int_{s' = s - \frac{\sqrt{\veps}}{\hat{\kappa}}}^{s - r_\veps}  \psi^\circ \Bigl( \angle\bigl(\mb x_{\sigma}(s), \mb x_{\sigma}(s') \bigr) \Bigr) ds' \\
	&\le \int_{s' = s+r_\veps}^{s+\frac{\sqrt{\veps}}{\hat{\kappa}}}  \psi^\circ \left( (1 -
  \veps) |s'-s| ) \right) ds' \\&\hphantom{=}+  \int_{s' = s-\frac{\sqrt{\veps}}{\hat{\kappa}}}^{s-r_\veps}  \psi^\circ \left( (1 - \veps) |s'-s| ) \right) ds'  \\
	&= 2 \int_{t = r_\veps}^{\frac{\sqrt{\veps}}{\hat{\kappa}}}  \psi^\circ \left( (1 - \veps) t \right) dt \\
	&= \frac{2}{1- \veps} \int_{t = (1 - \veps) r_\veps}^{\frac{(1 - \veps)\sqrt{\veps}}{\hat{\kappa}}} \psi^\circ\left( t \right) dt,  
	\end{align*}
	where in the last line we apply a linear change of variables.
	We also note that in the above integrals $\abs{s' - s} \leq \hat{\kappa}\inv \leq \pi/2$, so there are no issues above with the domain of $\psi^\circ$ being $[0, \pi]$.
From \Cref{lem:skel_l1_ub}, for any $0 < \veps'' \leq 1$, there exist numbers $C_{\veps''}, C_{\veps''}'$ such that if $L \ge C_{\veps''}$, then $r_\veps$ satisfies the condition in \Cref{eq:skel_l1_ub_r_to_b} and we have
	\begin{align*}
	\sup_{\mb x \in \manifold} \int_{\mb x' \in N_{r_{\veps}, \veps}(\mb x)} \psi^\circ \Bigl( \angle\bigl(\mb x, \mb x' \bigr) \Bigr) ds'  &\le \frac{2}{1-\veps} \int_{t = (1 - \veps) r_\veps}^{\frac{(1 - \veps)\sqrt{\veps}}{\hat{\kappa}}}\psi^\circ\left( \abs{t} \right) dt  \\
	&\le  \frac{2}{1- \veps} (1 + \veps'')\frac{3\pi n}{8} \log\left(\frac{1+(L-3)\frac{(1 -
  \veps)\sqrt{\veps}/(3\pi)}{ \hat{\kappa}}}{1+(L-3)(1 - \veps) r_\veps/(3\pi)}\right) 
  \\&\hphantom{=}+\enspace C_{\veps''}' n \\
	&\le \frac{3\pi n}{4(1 - \veps)}(1 + \veps'') \log\left(\frac{\sqrt{\veps}}{ \hat{\kappa} r_\veps}\right)+ C_{\veps''}'n. 
	\end{align*}
\end{proof}

\begin{lemma}\label{lem:winding_ub}
Let $\veps \in (0, \frac{3}{4})$, $\delta \in (0, 1-\veps]$. Let $W_{\veps, \delta}$ as in \Cref{eqn:winding_term}. There exist constants $C, C'$ such that when $L \ge C$, for any $\mb x \in \manifold$, 
	\begin{equation*}
	\int_{\mb x' \in W_{\veps, \delta}(\mb x)} \psi^\circ \Bigl( \angle\bigl(\mb x, \mb x' \bigr) \Bigr) ds' \le \fco_{\veps, \delta}(\mc M) C'n \log\left(1 + \frac{\frac{1}{\sqrt{1 + \kappa^2}}}{\Delta_{\veps}}\right). 
	\end{equation*}
\end{lemma}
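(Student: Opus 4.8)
The plan is to read a covering of the winding piece directly off the definition of the $\fco$-number and then control the kernel arc by arc. Write $\beta:=(1+\kappa^2)^{-1/2}$ for the covering scale in \eqref{eq:cnumber_eps_def}, and recall from \Cref{lem:x_deriv} that $\hat\kappa\beta\le 1$ and $\beta\le 1/\hat\kappa$. By definition of $\fco_{\veps,\delta}(\manifold)$, for the fixed $\vx$ the set $W_{\veps,\delta}(\vx)$ is covered by $\fco_{\veps,\delta}(\manifold)$ intrinsic balls of radius $\beta$; since each is an arc of length $\le 2\beta$ on one of the two curves, subdividing each arc into at most four sub-arcs yields a covering of $W_{\veps,\delta}(\vx)$ by $N\le 4\,\fco_{\veps,\delta}(\manifold)$ arcs $I_1,\dots,I_N$, each of arc-length at most $\beta/2$ (and each still an intrinsic ball). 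It therefore suffices to prove, for an arc $I_j$ lying on a component $\manifold_\sigma$,
\[
\int_{I_j\cap W_{\veps,\delta}(\vx)} \psi^\circ\!\bigl(\angle(\vx,\vx_\sigma(s'))\bigr)\,ds'\ \le\ C'\,n\log\!\Bigl(1+\tfrac{\beta}{\Delta_\veps}\Bigr),
\]
with $C'$ absolute; summing gives the claim.

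Fix $I_j$ and let $\vx_\sigma(s^\ast)$ attain $\alpha^\ast:=\min\{\angle(\vx,\vx_\sigma(s')):\vx_\sigma(s')\in I_j\cap W_{\veps,\delta}(\vx)\}$ (the minimum exists since this set is compact; assume it is nonempty, else there is nothing to prove). Two pointwise lower bounds hold for every $s'$ with $\vx_\sigma(s')\in I_j\cap W_{\veps,\delta}(\vx)$. First, $d_{\manifold}(\vx,\vx_\sigma(s'))\ge\sqrt\veps/\hat\kappa$, so by the definition of $\Delta_\veps$ in \eqref{eq:inj_radius_eps_def} we get $\angle(\vx,\vx_\sigma(s'))\ge\Delta_\veps$. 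Second, writing $t:=s'-s^\ast$ (so $|t|\le\beta/2\le 1/\hat\kappa$), the reverse triangle inequality for the spherical geodesic distance together with the near-isometry lower bound \eqref{eq:angleinverse_claim1} gives
\[
\angle(\vx,\vx_\sigma(s'))\ \ge\ \angle\!\bigl(\vx_\sigma(s^\ast),\vx_\sigma(s')\bigr)-\alpha^\ast\ \ge\ \bigl(1-\hat\kappa^2 t^2\bigr)|t|-\alpha^\ast,
\]
while $\angle(\vx,\vx_\sigma(s'))\ge\alpha^\ast$ by the choice of $s^\ast$; averaging these and using $\hat\kappa^2t^2\le\hat\kappa^2(\beta/2)^2\le\tfrac14$ yields $\angle(\vx,\vx_\sigma(s'))\ge\tfrac38|t|$. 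Hence on $I_j\cap W_{\veps,\delta}(\vx)$ we have $\angle(\vx,\vx_\sigma(s'))\ge\max\{\Delta_\veps,\tfrac38|s'-s^\ast|\}$.

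Because $\psi^\circ$ is nonnegative and strictly decreasing (\Cref{lem:phi_concave}), enlarging the domain to $I_j$ and changing variables to $t$,
\[
\int_{I_j\cap W_{\veps,\delta}(\vx)}\!\!\psi^\circ\!\bigl(\angle(\vx,\vx_\sigma(s'))\bigr)\,ds'\ \le\ 2\int_{0}^{\beta/2}\psi^\circ\!\bigl(\max\{\Delta_\veps,\tfrac38 t\}\bigr)\,dt\ \le\ \tfrac{16}{3}\,\Delta_\veps\,\psi^\circ(\Delta_\veps)\ +\ \tfrac{16}{3}\!\int_{\Delta_\veps}^{\beta}\!\psi^\circ(u)\,du.
\]
Both terms are now handled by the skeleton estimates (which need $L$ above an absolute constant): the boundary term is $O(n)$ using $\psi^\circ(t)\lesssim n/t$ away from $0$ together with the saturation $\psi^\circ(t)\lesssim nL$ near $0$, and $\int_{\Delta_\veps}^{\beta}\psi^\circ(u)\,du\lesssim n+n\log(1+\beta/\Delta_\veps)$ by \Cref{lem:skel_l1_ub} --- immediately when $\Delta_\veps\gtrsim 1/L$, and otherwise by splitting the integral at $\asymp 1/L$, bounding the inner piece by $O(n)$ via the $nL$ cap, and noting that $\Delta_\veps\lesssim 1/L$ forces $\beta/\Delta_\veps\gtrsim\beta L$ so that the surviving $n\log(\beta L)$ is absorbed. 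Since $\beta\ge\Delta_\veps$, one has $\log(1+\beta/\Delta_\veps)\gtrsim 1$ in the only regime where the additive $O(n)$ is not already dominated, so the per-arc bound $C'n\log(1+\beta/\Delta_\veps)$ follows, and summing over the $\le 4\,\fco_{\veps,\delta}(\manifold)$ arcs completes the proof.

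The crux is the linear growth of the angle away from $s^\ast$ in the second paragraph: the trivial bound $\angle\ge\Delta_\veps$ alone gives only $\int_{I_j}\psi^\circ\lesssim\beta\,\psi^\circ(\Delta_\veps)\lesssim n\beta/\Delta_\veps$, which is far too weak --- one needs $\log(\beta/\Delta_\veps)$, not $\beta/\Delta_\veps$ --- so it is essential that the curve cannot dwell near a near-self-intersection point; and the near-isometry bound \eqref{eq:angleinverse_claim1} that quantifies this degenerates precisely at the covering scale $\beta=(1+\kappa^2)^{-1/2}$, which is why the $\fco$-covering must first be refined by an absolute constant factor so that $\hat\kappa^2t^2\le\tfrac14$ over each arc. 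A secondary nuisance is reconciling the $\log L$ that naturally appears in the skeleton integral bounds with the target $\log(1/\Delta_\veps)$, which is done using the $\psi^\circ(t)\lesssim nL$ saturation near $t=0$.
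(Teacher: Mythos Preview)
Your argument is correct, and it differs from the paper's in one interesting way: the linear lower bound on the angle along each covering arc.

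The paper also covers $W_{\veps,\delta}(\vx)$ by $O(\fco_{\veps,\delta})$ arcs of length $O(\beta)$ and takes on each arc the minimizer $s^\ast$ of $\angle(\vx,\vx_\sigma(\spcdot))$. But to show the angle grows linearly away from $s^\ast$, the paper first trims each arc so that its endpoints lie in $W_{\veps,\delta}(\vx)$ and then removes any overlap with the local piece $L_{\sqrt{\veps}/\hat\kappa}(\vx)$ (this is why the factor $4$ appears there). With this preparation, the minimizer $\vx^\ast_i$ is guaranteed to lie in $W_{\veps,\delta}(\vx)$, and the paper extracts the first-order optimality condition $(s-s^\ast_i)\ip{\vx}{\dot\vx^\ast_i}\le 0$. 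That sign condition kills the cross term in the second-order Taylor expansion of $\norm{\vx_\sigma(s)-\vx}_2^2$ about $s^\ast_i$, yielding $\norm{\vx_\sigma(s)-\vx}_2^2 \ge c\Delta_\veps^2 + c'|s-s^\ast_i|^2$ and hence $\angle(\vx,\vx_\sigma(s)) \ge c''(\Delta_\veps + |s-s^\ast_i|)$ --- a \emph{sum} bound --- after which \Cref{lem:skel_l1_ub} finishes.

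You bypass all of this. By choosing $s^\ast$ as the angle-minimizer over $I_j\cap W_{\veps,\delta}(\vx)$ (rather than over the full arc) you get $\angle\ge\alpha^\ast$ for free, and the reverse triangle inequality for the spherical metric plus the near-isometry estimate \eqref{eq:angleinverse_claim1} give $\angle\ge\tfrac12(1-\hat\kappa^2 t^2)|t|$ after averaging. Your factor $4$ arises for a different reason --- to force $\hat\kappa^2 t^2\le\tfrac14$ over each sub-arc --- and you never need the optimality condition or the removal of the local piece. The resulting bound $\angle\ge\max\{\Delta_\veps,\tfrac38|t|\}$ is pointwise weaker than the paper's sum bound but leads to the same $\log(1+\beta/\Delta_\veps)$ after integration. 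This is a genuinely more elementary route.

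Two small simplifications for your endgame. First, $t\psi^\circ(t)\lesssim n$ uniformly on $[0,\pi]$ follows in one line from the second display in \Cref{lem:skel_r_ub}, so the ``saturation near $0$'' split is unnecessary. Second, the ``reconciliation'' with \Cref{lem:skel_l1_ub} is immediate from the algebraic fact $\tfrac{1+c(L-3)b}{1+c(L-3)a}\le \max\{1,b/a\}\le 1+b/a$, valid for all $a,b>0$ --- the paper uses exactly this at the end of its proof, and no case split on $\Delta_\veps$ versus $1/L$ is needed. Finally, the absorption of the additive $O(n)$ works because $\Delta_\veps\le\sqrt{\veps}/\hat\kappa$ and $\beta=1/M_2>1/(2\hat\kappa)$ by \Cref{lem:x_deriv}, so $\beta/\Delta_\veps>1/(2\sqrt{\veps})$ is bounded below by an absolute constant (not quite $\beta\ge\Delta_\veps$ as you wrote, but this suffices).
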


\begin{proof}
    To bound the integral, we rely on the observation that for each `curve segment' inside the winding component, the angle $\angle(\mb x, \mb x')$ cannot stay small for the whole segment, and thus we can avoid worst case control for the angle as we have employed for the far component in \Cref{lem:far_ub}.\footnote{Within the lemma, a curve segment means $\set{\mb x_{\sigma}(s) \vert s \in[s_1, s_2]} \subseteq \manifold_\sigma$ for certain $\sigma$, $s_1$ and $s_2$ with $\abs{s_1 - s_2} < \len(\manifold_\sigma)$, and we call $\abs{s_1 - s_2}$ the length of the curve segment. } We will begin by constructing a specific finite cover of curve segments for the winding component, then we will bound the integral over each curve segment by providing a lower bound for the angle function. 
    
	As $\sM$ is compact with bounded length, from the definition in \Cref{eq:cnumber_eps_def} we know $\fco_{\eps, \delta}(\manifold)$ is a finite number for any choice of $\veps, \delta$. 
	From the definition of the winding component \Cref{eqn:winding_term}, for any point $\mb x \in \manifold$, we can cover $W_{\veps, \delta}(\mb x)$ by at most $\fco_{\eps, \delta}(\manifold)$ closed balls in the intrinsic distance on the manifold with radii no larger than $1/\sqrt{1 + \kappa^2}$. 
	Topologically, each ball in the intrinsic distance of radii $r$ is a curve segment of length $2r$;
	thus, $W_{\veps, \delta}(\mb x)$ can be covered by at most $2 \fco_{\eps, \delta}(\manifold)$ curve segments, %
    each with length no larger than $1/\sqrt{1 + \kappa^2}$.  
	Formally, this implies that for each $\mb x \in \manifold$, there exists a number $N(\mb x) \le 2\fco_{\veps, \delta}(\manifold)$ and for each $i \in \set{1, \cdots, N(\mb x)}$, there exist a sign $\sigma_i(\mb x) \in \set{\pm}$ and a nonempty interval $I_i(\mb x) = [s_{1, i}(\mb x), s_{2, i}(\mb x)]$ with length no greater than $\frac{1}{\sqrt{1 + \kappa^2}}$ and strictly less than $\len(\manifold_{\sigma_i(\vx)})$ such that
	\begin{align*}
	    W_{\veps, \delta}(\mb x) \subseteq \bigcup_{i = 1}^{N(\mb x)} X_i(\mb x)
	\end{align*}
	where $X_{i}(\mb x) = \{ \mb x_{\sigma_i(\mb x)}(s) \mid s \in I_i(\mb x) \} \subset \manifold$ with $X_{i}(\mb x) \cap W_{\veps, \delta}(\mb x) \ne \varnothing$. 
	For the purpose of minimum coverage, we can further assume without loss of generality that for each $\vx$ and each $i$, the boundary points $\mb x_{\sigma_i(\mb x)}(s_{1, i}(\mb x))$ and $\mb x_{\sigma_i(\mb x)}(s_{2, i}(\mb x))$ belong to $W_{\veps, \delta}(\mb x)$: we can always set $p_{1, i}(\mb x) = \inf\set{s \given s \in [s_{1, i}(\mb x), s_{2, i}(\mb x)], \mb x_{\sigma_i(\mb x)}(s) \in W_{\veps, \delta}(\mb x)}$ and $p_{2, i}(\mb x) = \sup\set{s \vert s \in [s_{1, i}(\mb x), s_{2, i}(\mb x)], \mb x_{\sigma_i(\mb x)}(s) \in W_{\veps, \delta}(\mb x)}$, then the curve segment associated with $\sigma_i(\mb x)$ and interval $ [p_{1, i}(\mb x), p_{2, i}(\mb x)]$ still covers $X_i(\mb x) \cap W_{\veps, \delta}(\mb x)$. 
	As $W_{\veps, \delta}(\mb x)$ is closed, we have the boundary points $\mb x_{\sigma_i(\mb x)}(p_{1, i}(\mb x)), \mb x_{\sigma_i(\mb x)}(p_{2, i}(\mb x)) \in W_{\veps, \delta}(\mb x)$ and as $X_{i}(\mb x)$ intersect with $W_{\veps, \delta}(\mb x)$, the definition above is well defined. 
	
	We will next increase the number of sets in these coverings, so that they are guaranteed not to fall into any of the ``local pieces'' at $\vx$: although by the definitions \Cref{eqn:winding_term,eqn:local_term} the local and winding pieces at any $\vx$ are disjoint, it may be the case that when we pass to the covering sets $(X_i(\vx))_{i \in [N(\vx)]}$, we overlap with the local piece.
	In particular, consider a ``local piece'' $L_{\sqrt{\veps}/\hat{\kappa}}(\mb x)$ defined as in \Cref{eqn:local_term}, which from the definition does not intersect with $W_{\veps, \delta}(\mb x)$. 
	For each $i$, as the boundary points of $X_{i}(\mb x)$ fall in $W_{\veps, \delta}(\mb x)$, these boundary points do not belong to $L_{\sqrt{\veps}/\hat{\kappa}}(\mb x)$. 
	And as $L_{\sqrt{\veps}/\hat{\kappa}}(\mb x)$ is topologically connected and one dimensional, if $X_i(\mb x)$ intersects with $L_{\sqrt{\veps}/\hat{\kappa}}(\mb x)$, it must contains the whole local piece. 
	As $X_i(\mb x)$ itself is a curve segment and one dimensional, and $L_{\sqrt{\veps}/\hat{\kappa}}(\mb x)$ is open, removing $L_{\sqrt{\veps}/\hat{\kappa}}(\mb x)$ would leave two curve segments with smaller length. 
	Then these two curve segments lie in $\manifold \setminus L_{\sqrt{\veps}/\hat{\kappa}}(\mb x)$, and cover $X_i(\mb x) \setminus L_{\sqrt{\veps}/\hat{\kappa}}(\mb x)$. 
	In other words, for any $\mb x \in \manifold$, there exists $N'(\mb x) \le 4\fco_{\veps, \delta}(\manifold)$ and for $i \in \set{1, \cdots, N'(\mb x)}$, there exist signs $\sigma'_i(\mb x) \in \set{\pm}$ and intervals $I'_i(\mb x) = [s'_{1, i}(\mb x), s'_{2, i}(\mb x)]$ with length no greater than 
	$\frac{1}{\sqrt{1 + \kappa^2}}$
	such that %
	\begin{equation*}
		W_{\veps, \delta}(\mb x) \subseteq \bigcup_{i = 1}^{N'(\mb x)} X'_i(\mb x), 
	\end{equation*}
	where $X'_{i}(\mb x) = \{ \mb x_{\sigma'_i(\mb x)}(s) \mid s \in I'_i(\mb x) \} \subset \manifold \setminus L_{\sqrt{\veps}/\hat{\kappa}}(\mb x)$ with $X'_{i}(\mb x) \cap W_{\veps, \delta}(\mb x) \ne \varnothing$. 
	We therefore have
	\begin{equation}
	\int_{\mb x' \in W_{\veps, \delta}(\mb x)} \psi^\circ \Bigl( \angle\bigl(\mb x, \mb x' \bigr) \Bigr) d s'  \le \sum_{i=1}^{N'(\mb x)} \int_{s \in I'_{i}(\mb x)} \psi^\circ \Bigl( \angle \bigl(\mb x, \mb x_{\sigma'_i(\mb x)} (s) \bigr) \Bigr) ds. \label{eq:winding_piece_integralsplit}
	\end{equation}
	
	We next derive additional properties of the pieces $X_i'(\vx)$ that will allow us to obtain suitable estimates for the integrals on the RHS of \Cref{eq:winding_piece_integralsplit}.
	As each $X'_i(\mb x)$ is a compact set, we let $$s_i^*(\mb x) \in \arg \min_{s \in I'_i(\mb x)} \angle\bigl( \mb x, \mb x_{\sigma'_i(\mb x)}(s)\bigr)$$ and denote $\mb x_i^*(\mb x) = \mb x_{\sigma_i(\vx)}(s_i^*(\mb x))$. Below we will abbreviate $\mb x_i^*(\mb x), s_i^*(\mb x)$ and $\sigma'_i(\mb x)$ as $\mb x_i^*, s_i^*$ and $\sigma'_i$ when the base point $\mb x$ is clear. We further abbreviate $\dot{\mb x}_i^* = \dot{\vx}_{\sigma_i(\vx)}(s_i^*(\vx))$. As $X'_i(\mb x)$ intersects with the winding component, we have $\angle\bigl( \mb x, \mb x_i^* \bigr) \le \frac{\delta\sqrt{\veps}}{\hat{\kappa}} < \frac{\pi}{2}$. And as $X'_{i}(\mb x) \cap L_{\sqrt{\veps}/\hat{\kappa}}(\mb x) = \varnothing$, we have $d_{\manifold}(\mb x, \mb x_i^*) \ge \sqrt{\veps}/\hat{\kappa}$. This means $\mb x_i^* \in W_{\veps, \delta}(\mb x)$ from \Cref{eqn:winding_term}. 
	As $\cos$ is strictly decreasing from $0$ to $\pi$ and $s_i^*$ minimizes $\angle(\mb x, \mb x_{\sigma'_i}(s))$, it also maximizes $\ip{\mb x}{\mb x_{\sigma'_i}(s)}$. For any $s \in I'_i(\mb x)$, from the second order Taylor expansion of $\mb x_{\sigma'_i}(s)$ around $\mb x_i^*$ we have 
	\begin{align*}
	    \ip{\mb x}{\mb x_i^*} &\ge \ip{\mb x}{\mb x_{\sigma'_i}(s)} \\
	    &= \ip{\mb x}{\mb x_i^*} + (s - s_i^*)\ip{\mb x}{\dot{\mb x}_{i}^*} + \ip*{\mb x}{\int_{a = s_i^*}^s \int_{b = s_i^*}^a \mb x_{\sigma'_i}^{(2)}(b) \, db \, da} \\
	    &\ge \ip{\mb x}{\mb x_i^*} + (s - s_i^*)\ip{\mb x}{\dot{\mb x}_{i}^*} - \frac{(s - s_i^*)^2}{2}M_2,
	\end{align*}
	with the last line following from Cauchy-Schwarz. 
	In the previous equations, we are of course using the convention that for a real-valued function $f$ and numbers $a < b$, the notation $\int_b^a f(x) \diff x$ denotes the integral $-\int_a^b f(x) \diff x$.
	We are going to use this bound to reprove a classical first-order optimality condition for interval-constrained problems. We split into cases depending on where the point $s_i^*$ lies:
	if $s_i^*$ is not the right end point $s_{2, i}'$, by taking $s$ approaching $s_i^*$ from above, we would have $\innerprod{\mb x}{\dot{\mb x}_i^*} \le 0$. Similarly, if $s_i^*$ is not the left end point $s_{1, i}'$, by taking $s$ approaching $s_i^*$ from below, we would have $\innerprod{\mb x}{\dot{\mb x}_i^*} \ge 0$. This gives 
	\begin{align*}
	    \ip{\mb x}{\dot{\mb x}_i^*} \text{ is } \begin{cases}\le 0 &\quad s_i^* = s_{2, i} \\
	    \ge 0 &\quad s_i^* = s_{1, i} \\
	    = 0 &\quad o.w.
	    \end{cases}
	\end{align*}
    which implies
    \begin{align}\label{eq:winding_xstar_dot_0}
        (s - s_i^*)\innerprod{\mb x}{\dot{\mb x}_i^*} \le 0, \qquad \forall s \in I'_i(\mb x). 
    \end{align} 
    We use again the Taylor expansion at $s_i^*$ and get
	\begin{align*} 
	\left\| \mb x_{\sigma'_i}(s) - \mb x_i^* - (s - s_i^*) \dot{\mb x}_i^* \right\|_2 &= \left\|\int_{a = s_i^*}^s \int_{b = s_i^*}^a \mb x_{\sigma'_i}^{(2)}(b) \, db \, da \right\|_2   \\
	&\le \frac{(s - s_i^*)^2}{2}M_2   \\
	&= \frac{1}{2}(1 + \kappa^2)^{1/2} (s - s_i^*)^2
	\labelthis \label{eq:winding_xs_linear_approx}
	\end{align*} 
	with an application of \Cref{eq:M2_M4_compare} in the last line. 
	Moreover, we have %
	\begin{align*}
	\norm{\mb x -\mb x_{i}^*}_2 &= 2 \sin \left( \frac{\angle(\mb x, \mb x_{i}^*)}{2} \right) \\
	&\ge  \frac{4}{\pi} \sin\left(\frac{\pi}{4}\right) {\angle(\mb x, \mb x_{i}^*)} \\
	&= \frac{2\sqrt{2}}{\pi} \angle\bigl(\mb x, \mb x_{i}^*\bigr) \\
	&\ge \frac{2\sqrt{2}}{\pi} \Delta_{\veps},
	\labelthis \label{eq:winding_xstar_bound}
	\end{align*}
	where the first line is a trigonometric identity,
	the first inequality uses $\angle(\mb x, \mb x_i^*) < \pi/2$ together with the fact that $\sin$ function is concave from $0$ to $\pi$ and thus $\sin(at) \ge a \sin(t)$ for $a \in [0, 1]$ and $t \in [0, \pi]$ (applied to $a = \angle(\vx,\vx_i\adj)/(\pi/2) $ and $t = \pi/4$), and the last line follows directly from the definition of $\Delta_{\veps}$ in \Cref{eq:inj_radius_eps_def}. 
	Making use of the preceding estimates,
	for any $s \in I'_i(\mb x)$ we can finally calculate
	\begin{align*}
	\norm{\mb x_{\sigma'_i}(s) - \mb x}_2^2 &=  \norm{ \mb x_i^* - \mb x + (s - s_i^*)\dot{\mb x}_i^* + ( \mb x_{\sigma'_i}(s) - \mb x_i^* - (s - s_i^*)\dot{\mb x}_{i}^{*})}_2^2 \\
	&\ge \norm{ \mb x_i^* - \mb x + (s - s_i^*)\dot{\mb x}_{i}^* }_2^2 - \norm{ \mb x_{\sigma'_i}(s) - \mb x_i^* - (s - s_i^*)\dot{\mb x}_i^*}_2^2 \\
	&\ge \norm{\mb x_i^* - \mb x}_2^2 + \norm{(s - s_i^*) \dot{\mb x}_i^*}_2^2 
   - 2\innerprod{\mb x}{(s - s_i^*)\dot{\mb x}_i^*} \\
  &\hphantom{=}-\enspace  \paren{\frac{1}{2}(1 + \kappa^2)^{1/2} {(s - s_i^*)}^2}^2 \\
	&\ge \paren{\frac{2\sqrt{2}}{\pi}\Delta_\veps}^2 + (s - s_i^*)^2 - \frac{1}{4}(1 + \kappa^2) {(s - s_i^*)}^4 \\
	&\ge \paren{\frac{2\sqrt{2}}{\pi}\Delta_\veps}^2 + \frac{3}{4}(s - s_i^*)^2 \\
	&\ge \paren{\frac{2}{\pi}\Delta_\veps + \frac{\sqrt{3}}{2\sqrt{2}}\abs{s - s_i^*}}^2.
	\labelthis \label{eq:winding_xs_dist_lb}
	\end{align*}
	Above, the second line uses the triangle inequality, the third line uses the parallelogram identity plus \linebreak\Cref{lem:x_deriv} (first term) and \Cref{eq:winding_xs_linear_approx} (second term), the fourth line comes from \Cref{eq:winding_xstar_bound} and \Cref{eq:winding_xstar_dot_0}, and the fifth line comes from our construction that the length of each interval $I'_i(\mb x)$ is no greater than $1/\sqrt{1 + \kappa^2}$ and therefore the same is true of $\abs{s - s_i^*}$. The last line is an application of inequality of arithmetic and geometric means. 
	Additionally, for any $\vx, \vx'$ of unit norm, one has
	\begin{align*}
	    \angle(\mb x, \mb x') &\ge 2 \sin \left( \frac{\angle(\mb x, \mb x')}{2} \right) \\
	    &= \norm{\mb x - \mb x'}_{2}. 
	\end{align*}
	Combining this and \Cref{eq:winding_xs_dist_lb}, for all $s \in I'_i(\mb x)$ we have
	\begin{align*}
	\angle(\mb x_{\sigma'_i}(s), \mb x) &\ge \norm{\mb x_{\sigma'_i}(s)- \mb x}_2 \; \ge\; \frac{2}{\pi}\Delta_\veps + \frac{\sqrt{3}}{2\sqrt{2}}\abs{s - s_i^*} \\
	&\ge \frac{1}{\sqrt{3}}\Delta_\veps + \frac{1}{\sqrt{3}} \abs{s - s_i^*},  
	\end{align*}
	where the last line just worst-cases constants for simplicity.
	From \Cref{lem:phi_concave}, $\psi^\circ$ is nonnegative and strictly decreasing, so
	\begin{align*}
	\int_{s \in I'_i(\mb x)} \psi^\circ\Bigl( \angle\bigl( \mb x, \mb x_{\sigma'_i}(s) \bigr) \Bigr) ds &= \int_{s = s_i^*}^{s'_{2, i}(\mb x)} \psi^\circ \Bigl( \angle\bigl(\mb x, \mb x_{\sigma'_i}(s)\bigr) \Bigr) ds + \int_{s=s'_{1, i}(\mb x)}^{s_i^*} \psi^\circ \Bigl( \angle\bigl( \mb x, \mb x_{\sigma'_i}(s)\bigr) \Bigr) ds \\
	&\le \int_{s=s_i^*}^{s'_{2, i}(\mb x)} \psi^\circ \left( \frac{1}{\sqrt{3}}\Delta_\veps +
  \frac{1}{\sqrt{3}}\abs{s - s_i^*}\right) ds \\&\hphantom{=}+\enspace \int_{s = s'_{1, i}(\mb x)}^{s_i^*} \psi^\circ\left( \frac{1}{\sqrt{3}}\Delta_\veps + \frac{1}{\sqrt{3}}\abs{s - s_i^*} \right) ds \\
	&\le 2 \int_{s=0}^{\frac{1}{\sqrt{1+ \kappa^2}}} \psi^\circ \left(\frac{1}{\sqrt{3}}\Delta_\veps + \frac{1}{\sqrt{3}}s \right) ds \\
	&= 2\sqrt{3} \int_{t = \frac{1}{\sqrt{3}} \Delta_\veps}^{\frac{1}{\sqrt{3}} \Delta_\veps + \frac{1}{\sqrt{3} \sqrt{1 + \kappa^2}}} \psi^\circ(t) dt 
	\end{align*} 
	where again, the second to third line comes from the fact that our intervals has length at most $1/\sqrt{1 + \kappa^2}$. 
	From \Cref{eq:skel_l1_ub_a_to_b} in \Cref{lem:skel_l1_ub} and a summation over all $N'(\mb x) \le 4\fco_{\veps,\delta}(\mc M)$ segments in the covering, there exists constant $C'$ such that when $L \ge C$, 
	\begin{align*}
	\sum_{i=1}^{N'(\mb x)} \int_{s \in I'_i(\mb x)} \psi^\circ\Bigl( \angle( \mb x, \mb
  x_{\sigma'_i}(s) ) \Bigr) ds &\le \fco_{\veps, \delta}(\mc M) C'n\log\left(\frac{1+(L-3)\paren{\frac{1}{\sqrt{3}} \Delta_\veps + \frac{1}{\sqrt{3} \sqrt{1 + \kappa^2}}}/(3\pi)}{1+(L-3)\frac{1}{\sqrt{3}}\Delta_{\veps}/(3\pi)}\right) \\
	&\le \fco_{\veps, \delta}(\mc M) C'n \log\left(1 + \frac{\frac{1}{\sqrt{1 + \kappa^2}}}{\Delta_{\veps}}\right). 
	\end{align*}
	Recalling our bound \Cref{eq:winding_piece_integralsplit}, we can thus take a supremum over $\vx \in \manifold$ and conclude.
\end{proof}

\begin{lemma}\label{lem:far_ub}
Let $\veps \in (0, 1)$, $\delta \in (0, 1-\veps]$. Let $F_{\veps, \delta}$ as in \Cref{eqn:faraway_term}. There exist constants $C, C'$ such that when $L \ge C$, we have for any $\mb x \in \manifold$, 
	\begin{equation*}
	\int_{\mb x' \in F_{\veps, \delta}(\mb x)} \psi^\circ \Bigl( \angle\bigl(\mb x, \mb x' \bigr) \Bigr) ds' \le C' \mr{len}(\manifold) n\frac{\hat{\kappa}}{\delta\sqrt{\veps}}. 
	\end{equation*}
\end{lemma}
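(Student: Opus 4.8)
The plan is to collapse the integral over the \textbf{Far} piece into a single pointwise evaluation of $\psi^\circ$, using three facts: $\psi^\circ$ is monotonically decreasing, the angle $\angle(\mb x,\mb x')$ is bounded below on $F_{\veps,\delta}(\mb x)$, and $\psi^\circ$ decays like $n/t$ away from the origin (the sharp decay estimates of \Cref{sec:approx_upper_bound}).

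First I would fix $\mb x\in\manifold$ and assume $F_{\veps,\delta}(\mb x)\neq\varnothing$, since otherwise the claim is immediate. By the definition \eqref{eqn:faraway_term}, every $\mb x'\in F_{\veps,\delta}(\mb x)$ satisfies $\angle(\mb x,\mb x')>\delta\sqrt{\veps}/\hat{\kappa}$; moreover $\delta\sqrt{\veps}/\hat{\kappa}<1/\hat{\kappa}\le\pi/2$ since $\hat{\kappa}\ge 2/\pi$ and $\delta,\veps<1$, so this threshold angle lies safely inside the domain $[0,\pi]$ of $\psi^\circ$. By \Cref{lem:phi_concave}, $\psi^\circ$ is nonnegative and strictly decreasing on $[0,\pi]$, hence $\psi^\circ(\angle(\mb x,\mb x'))\le\psi^\circ(\delta\sqrt{\veps}/\hat{\kappa})$ for every such $\mb x'$. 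Since $F_{\veps,\delta}(\mb x)\subseteq\manifold$ and $ds'$ is arc length, $\int_{F_{\veps,\delta}(\mb x)}ds'\le\len(\manifold)$, and therefore
\[
\int_{\mb x'\in F_{\veps,\delta}(\mb x)}\psi^\circ\bigl(\angle(\mb x,\mb x')\bigr)\,ds'\;\le\;\len(\manifold)\,\psi^\circ\!\left(\frac{\delta\sqrt{\veps}}{\hat{\kappa}}\right).
\]

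It remains to bound $\psi^\circ(\delta\sqrt{\veps}/\hat{\kappa})$. Here I would invoke the pointwise upper bound on $\psi^\circ$ established in \Cref{sec:approx_upper_bound}: there is an absolute constant $C'$ such that $\psi^\circ(t)\le C'n/t$ for every $t\in(0,\pi]$, once $L$ exceeds an absolute constant $C$. Applied at $t=\delta\sqrt{\veps}/\hat{\kappa}$ this gives $\psi^\circ(\delta\sqrt{\veps}/\hat{\kappa})\le C'n\hat{\kappa}/(\delta\sqrt{\veps})$; substituting into the display above and taking a supremum over $\mb x\in\manifold$ completes the proof.

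There is no real obstacle internal to this lemma — it is the lightest of the four component estimates — and the single nontrivial ingredient, the decay of $\psi^\circ$, is imported from \Cref{sec:approx_upper_bound}. The one subtlety worth flagging is that the hypothesis ``$L\ge C$'' must be \emph{absolute} (independent of $\veps,\delta,\hat{\kappa}$ and of $\manifold$), so one genuinely needs the pointwise bound $\psi^\circ(t)\le C'n/t$ to be uniform all the way down to $t$ of order $1/L$. A crude route via monotonicity, $\psi^\circ(t)\le\tfrac{2}{t}\int_{t/2}^{t}\psi^\circ$ combined with the $L^1$-estimate \Cref{lem:skel_l1_ub}, is only uniform for $t\gtrsim 1/L$ and would force a threshold on $L$ that grows with $\hat{\kappa}$; the complementary range $t\lesssim 1/L$ is instead handled by the trivial bound $\psi^\circ(t)\le\psi^\circ(0)=\tfrac{n}{2}L-\psi(\pi)\le nL/2\le n/(2t)$, and it is this case split that lets \Cref{sec:approx_upper_bound} deliver the estimate on all of $(0,\pi]$ with absolute constants.
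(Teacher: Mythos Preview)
Your proposal is correct and follows essentially the same approach as the paper's proof: bound the integrand by $\psi^\circ(\delta\sqrt{\veps}/\hat{\kappa})$ using monotonicity (\Cref{lem:phi_concave}), bound the domain of integration by $\len(\manifold)$, and then apply the pointwise decay estimate from \Cref{lem:skel_r_ub} (the second conclusion, valid on all of $[0,\pi]$), which gives $\psi^\circ(t)\le C'n(L-3)/(1+(L-3)t/(3\pi))\le 3\pi C'n/t$. Your closing paragraph about why the threshold on $L$ is absolute is a nice observation but is already built into \Cref{lem:skel_r_ub}: since the $\veps$ there is a free parameter, choosing it equal to $1$ makes the hypothesis $L\ge K$ absolute.
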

\begin{proof}
	We have the simple bound from \Cref{lem:skel_r_ub} and decreasingness of $\skel^\circ$ from \Cref{lem:phi_concave}, that there exists constant $C'$, with
	\begin{align*}
	\int_{\mb x' \in F_{\veps, \delta}(\mb x)} \psi^\circ \Bigl( \angle\bigl(\mb x, \mb x' \bigr) \Bigr) ds' &\le \mr{len}(\manifold) \psi^\circ\left( \frac{\delta\sqrt{\veps}}{\hat{\kappa}} \right)  \\
	&\le \mr{len}(\manifold)C' n\frac{L-3}{1 +(L-3)\frac{\delta\sqrt{\veps}}{\hat{\kappa}}/(3\pi)}  \\
	&\le \mr{len}(\manifold)C' n\frac{\hat{\kappa}}{\delta\sqrt{\veps}}, 
	\end{align*}
	as claimed. 
	
\end{proof}

\subsection{Certificates for the DC-Subtracted Kernel}\label{sec:certificate_nonsmooth_ub}
\paragraph{Proof Sketch and Organization}
In \Cref{sec:certificate_inverse_S}, we constructed a certificate for the DC subtracted kernel $\mb \Theta^\circ$ over the subspace $S_\veps$. In this section, we show that the certificate $g = g_{\veps}[\zeta]$ defined in \Cref{thm:certificate_over_S} can also be viewed as the certificate without subspace constraints, satisfying
	\begin{equation*}
		\mb \Theta^\circ [g_{\veps}[\zeta]] \approx \zeta. 
	\end{equation*}
As $\mb P_{S_\veps} \mb \Theta^\circ[g_{\veps}[\zeta]] = \zeta$, we only need $\mb P_{S_\veps^\perp} \mb \Theta^\circ[g_{\veps}[\zeta]]$ to be small. The subspace $S_\veps$ is formed by all Fourier basis with low frequency, and thus contains functions that do not oscillate rapidly, in the sense that for any function $h$ and integer $k$
\begin{align*}
\norm{\mb P_{S_{\veps}^\perp} h}_{L^2} \lesssim \frac{\norm{\frac{d^k}{ds^k} h}_{L^2}}{\dim(S_\veps)^k}. 
\end{align*}
This argument is made rigorous in \Cref{lem:fourier_deriv_lb}; by choosing $k=3$ and extracting the dimension of the subspace from \Cref{eqn:S_eps_def}, we obtain the estimate we are looking for. %
This leaves us to show the derivatives of $\mb \Theta^{\circ}[g_{\veps}[\zeta]]$ are small compared to its norm. 

The remainder of this subsection is organized as follows. We define a relevant notion of derivatives for the kernel $\Theta^\circ$ in \Cref{def:deriv-R}. These derivatives can be represented as a function of the higher order derivatives of $\psi$ and that of the angle function (\Cref{lem:deriv-lambda}). We bound the derivatives of the angle by higher order curvatures in \Cref{lem:lambda1,lem:lambda2,lem:lambda3}, and borrow results in \Cref{lem:skel_dot_ub,lem:skel_ddot_ub,lem:skel_dddot_ub} that $\psi$'s higher order derivatives decrease rapidly since $\psi$ is localized when the network is deep enough. These bounds together allow us to control the $L^2$ to $L^2$ operator norm of operators corresponding to the $i$-th order derivatives of $\Theta^\circ$ in \Cref{lem:R1,lem:R2,lem:R3} by geometric parameters of the manifold $\mc M$, including higher order regularity constants $M_i$ and the angle injectivity radius $\Delta_{\veps}$. 
In \Cref{lem:invariant_deriv_commute}, we show that the projection operator $\mb P_{S_{\veps}}$ and main invariant operator $\wh{\mb M}_{\veps}$ commute with differential operators on functions on $\manifold$, and thus the ``low oscillation'' property of the target function $\zeta$ can be transferred to the ``low oscillation'' of $g_{\veps}[\zeta]$ and further down to that of $\mb \Theta[g_{\veps}[\zeta]]$. 
To simplify the language, we introduce \Cref{def:smoothness} to represent the required regularity property, and prove that $g_{\veps}[\zeta]$ and $\mb \Theta^\circ[g_{\veps}[\zeta]]$ satisfy such regularity in \Cref{lem:g_deriv,lem:Theta_Sperp_proj}. Finally, we get control of $\mb P_{S_\veps^\perp} \mb \Theta^\circ[g_{\veps}[\zeta]]$ in \Cref{lem:Theta_Sperp_proj_cont}.

\begin{definition} \label{def:deriv-R} For any $\mb x, \mb x' \in \mc M$, let $\sigma, \sigma' \in \set{\pm 1}$ denote the class memberships of $\vx$ and $\vx'$, let $s, s' \in \bbR$ be such that $\vx_{\sigma}(s) = \vx$, $\vx_{\sigma'}(s') = \vx'$, and write $\Theta^{(0)}(\mb x, \mb x') = \Theta^\circ(\mb x, \mb x') = \psi^\circ( \angle(\mb x, \mb x'))$. We consider higher order derivatives of the kernel with respect to a ``simultaneous advance''. For $i = 1,2,3$, define inductively
\begin{equation*}
\Theta^{(i)}(\mb x, \mb x') = \left[ \frac{d}{dt} \Bigr|_0  \Theta^{(i-1)}\bigl( \mb x_\sigma(s+t), \mb x_{\sigma'}(s'+t) \bigr) \right] \Biggr|_{\mb x_{\sigma}(s) = \mb x, \mb x_{\sigma'}(s') = \mb x'} .
\end{equation*}
Let $\mb \Theta^{(i)}$ denote the Fredholm integral operator associated to $\Theta^{(i)}$: 
\begin{equation*}
\mb \Theta^{(i)}[h] (\mb x) = \int_{\mb x' \in \mc M} \Theta^{(i)}(\mb x, \mb x') h(\mb x') d \mb x'.
\end{equation*}
It is clear that these definitions do not depend on the choice of $s, s' \in \bbR$ among `equivalent' points (c.f.\ \Cref{eq:dh} and surrounding discussion).

\end{definition}
\begin{remark}
    For the moment, we have elided the issue that due to differentiability issues with the angle
    function $(\vx, \vx') \mapsto \angle(\vx, \vx')$, the kernels $\Theta^{(i)}$ defined in
    \Cref{def:deriv-R} may not be well-defined on all of $\manifold \times \manifold$. This issue
    is resolved in \Cref{lem:deriv-lambda}.
\end{remark}

\begin{lemma} \label{lem:deriv-lambda}%
Let
\begin{align*}
\lambda_0(\mb x, \mb x') &= \angle(\mb x,\mb x') \\
\lambda_{i+1}(\mb x, \mb x') &= \left[ \frac{d}{dt} \Bigr|_0 \lambda_i(\mb x_{\sigma}(s + t),\mb x_{\sigma'}(s' + t)) \right] \Biggr|_{\mb x_{\sigma}(s) = \mb x, \mb x_{\sigma'}(\mb s') = \mb x'} \\
&= \left[ \paren{\frac{\partial}{\partial s} + \frac{\partial}{\partial s'}}\lambda_i(\mb x_{\sigma}(s),\mb x_{\sigma'}(s')) \right] \Biggr|_{\mb x_{\sigma}(s) = \mb x, \mb x_{\sigma'}(s') = \mb x'}, \; i = 0, 1, 2. 
\end{align*}
denote derivatives of the angle function with respect to a ``simultaneous advance''. 
Then when the parameterizations $\vx_{\sigma}$ are five times continuously differentiable (as required in \Cref{sec:prob_form_extended}), these functions are well-defined on $\manifold \times \manifold$. 

In addition, 
the kernels $\Theta^{(i)}$ defined in \Cref{def:deriv-R} are well-defined on $\manifold \times \manifold$ and can be expressed in terms of the derivatives of $\psi$ and the functions $\lambda_i$ as 
\begin{align*}
\Theta^{(0)}(\mb x, \mb x') &= \psi^{\circ}(\angle(\mb x,\mb x'))  \\ 
\Theta^{(1)}(\mb x,\mb x') &= \dot{\psi}(\angle(\mb x,\mb x')) \lambda_1(\mb x,\mb x') \\
\Theta^{(2)}(\mb x,\mb x') &= \ddot{\psi}(\angle(\mb x,\mb x')) \lambda_1^2(\mb x,\mb x')  + \dot{\psi}(\angle(\mb x,\mb x')) \lambda_2(\mb x,\mb x') \\
\Theta^{(3)}(\mb x,\mb x') &= \dddot{\psi}(\angle(\mb x,\mb x')) \lambda_1^3(\mb x,\mb x') + 3
\ddot{\psi}(\angle(\mb x,\mb x')) \lambda_2(\mb x,\mb x') \lambda_1(\mb x,\mb x') \\&+ \dot{\psi}(\angle(\mb x,\mb x')) \lambda_3(\mb x,\mb x'), 
\end{align*}
where $\dot{\psi}, \ddot{\psi}, \dddot{\psi}$ denote the first three derivatives of $\psi^{\circ}$. 
\end{lemma}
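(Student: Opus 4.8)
This is in essence a bookkeeping lemma: on the complement of the ``diagonal'' $\{\mb x = \mb x'\}$ everything reduces to the chain rule, and the only genuine content is that the ``simultaneous advance'' $\partial_s + \partial_{s'}$ is exactly the direction along which the angle function stays differentiable across the diagonal. I would split the argument into the off-diagonal and on-diagonal cases.

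\emph{Off the diagonal.} On $\manifold \times \manifold$ one has $\langle \mb x, \mb x'\rangle = 1$ iff $\mb x = \mb x'$, while $\langle\mb x,\mb x'\rangle = -1$ never occurs since $\angle(\mb x, \mb x') \le \pi/2$; hence on the open set $U = \{(\mb x, \mb x') : \mb x \ne \mb x'\}$ the inner product lies in $[0,1)$ and $\lambda_0 = \angle = \acos\langle\cdot,\cdot\rangle$ is $C^5$ there, being the composition of the $C^\infty$ function $\acos$ on $(-1,1)$ with the $C^5$ map $(s,s') \mapsto \langle \mb x_\sigma(s), \mb x_{\sigma'}(s')\rangle$ (the parameterizations are $C^5$ by the standing hypotheses). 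Consequently each $\lambda_{i+1}$ is a well-defined function on $U$ for $i = 0,1,2$, and the two expressions for it in the statement agree there by the multivariate chain rule applied to $t \mapsto (s+t,s'+t)$. Moreover $\psi^\circ$ is $C^3$ on $(0,\pi]$, which contains the range of $\lambda_0$ on $U$, by the regularity of the angle-evolution map $\varphi$ and hence of $\psi$ (cf.\ \Cref{lem:phi_concave} and the derivative estimates \Cref{lem:skel_dot_ub,lem:skel_ddot_ub,lem:skel_dddot_ub}). Differentiating $\Theta^{(0)} = \psi^\circ\circ\lambda_0$ three times along $t \mapsto (\mb x_\sigma(s+t),\mb x_{\sigma'}(s'+t))$ then yields, successively, $\Theta^{(1)} = \dot\psi^\circ(\angle)\lambda_1$, then (product and chain rule on this identity) $\Theta^{(2)} = \ddot\psi^\circ(\angle)\lambda_1^2 + \dot\psi^\circ(\angle)\lambda_2$, and once more $\Theta^{(3)} = \dddot\psi^\circ(\angle)\lambda_1^3 + 3\ddot\psi^\circ(\angle)\lambda_2\lambda_1 + \dot\psi^\circ(\angle)\lambda_3$ — exactly the first three Fa\`a di Bruno / Bell polynomials. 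So the lemma holds on $U$.

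\emph{On the diagonal.} If $\mb x = \mb x'$ then, the curves being simple, closed, and disjoint, $\sigma = \sigma'$ and $s' = s + k\,\len(\manifold_\sigma)$ for some $k \in \Z$, so $\len(\manifold_\sigma)$-periodicity gives $\mb x_\sigma(s+t) = \mb x_{\sigma'}(s'+t)$ for \emph{every} $t$; hence the advanced path $t \mapsto (\mb x_\sigma(s+t),\mb x_{\sigma'}(s'+t))$ lies entirely on the diagonal. Along any such path $\lambda_0 = \angle \equiv 0$ and $\Theta^{(0)} \equiv \psi^\circ(0)$, both constant; since $\lambda_{i+1}$ and $\Theta^{(i+1)}$ are by definition the $t$-derivatives of $\lambda_i$ and $\Theta^{(i)}$ along such a path, an immediate induction gives that on the diagonal $\lambda_i \equiv 0$ for all $i$ and $\Theta^{(i)} \equiv 0$ for all $i \ge 1$, so all of these are well-defined there. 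The chain-rule formulas also hold on the diagonal: every monomial on their right-hand sides carries a factor $\lambda_j$ with $j \ge 1$, which vanishes, and the one-sided derivatives $\dot\psi^\circ(0^+), \ddot\psi^\circ(0^+), \dddot\psi^\circ(0^+)$ are finite (again by the $\psi$-regularity lemmas), so each right-hand side equals $0 = \Theta^{(i)}$. Combining the two cases finishes the proof.

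\emph{Main obstacle.} The only nontrivial point is the diagonal: there the map $(\mb x, \mb x') \mapsto \angle(\mb x,\mb x')$ is \emph{not} differentiable in an arbitrary direction — transverse to the diagonal it behaves like $|s-s'|$, since $2 - 2\langle \mb x_\sigma(s), \mb x_\sigma(s')\rangle = \|\mb x_\sigma(s) - \mb x_\sigma(s')\|_2^2$ vanishes to exactly second order (the curve is regular) — so one cannot simply differentiate componentwise. What rescues the statement is precisely that the simultaneous advance $\partial_s + \partial_{s'}$ is tangent to the diagonal, along which $\angle \equiv 0$ and everything is trivially smooth; this is why \Cref{def:deriv-R} and the $\lambda_i$ are phrased via a simultaneous advance rather than a one-sided derivative. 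The secondary loose end is confirming the claimed regularity of $\psi^\circ$ (class $C^3$ on $(0,\pi]$, finite one-sided derivatives at $0$), which I would import from the separate analysis of $\varphi$ and $\psi$ in the appendix rather than reprove here.
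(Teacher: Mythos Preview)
Your proposal is correct and follows essentially the same approach as the paper: chain rule off the diagonal (using smoothness of $\acos$ away from $\pm 1$, $C^3$-ness of $\psi^\circ$, and the standing $C^5$ assumption on the parameterizations, together with the constraint $\angle \le \pi/2$ ruling out antipodes), and on the diagonal the observation that the simultaneous advance keeps the path on the diagonal where $\angle \equiv 0$, forcing all $\lambda_i$ and $\Theta^{(i)}$ (for $i\ge 1$) to vanish. Your write-up is in fact more explicit than the paper's, which compresses both the diagonal verification of the chain-rule formulas and the regularity bookkeeping into a couple of sentences; in particular your remark that each right-hand-side monomial carries a vanishing $\lambda_j$ factor (so finiteness of $\dot\psi^\circ(0),\ddot\psi^\circ(0),\dddot\psi^\circ(0)$ suffices) is exactly the content the paper leaves implicit.
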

\begin{proof}
    Because the function $t\mapsto \acos(t)$ is infinitely differentiable except at $\set{-1, 1} \subset[-1, +1]$ and $\psi$ is 3 times continuously differentiable on $[0, \pi]$ (\Cref{lem:phi_concave}), and given the differentiability assumption on the curves and the fact that \Cref{eq:max_angle} precludes $\manifold$ from containing any antipodal points, the claim follows immediately by the chain rule except on the diagonal $\set{(\vx, \vx) \given \vx \in \manifold}$. 
    Here, suppose $s, s'$ are such that
	$\mb x_{\sigma}(s) = \mb x_{\sigma}(s')$. 
	Then we have $\mb x_{\sigma}(s + t) = \mb x_{\sigma}(s' + t)$ for every $t \in \bbR$. In particular,
    $\angle(\vx_{\sigma}(s+t), \vx_{\sigma}(s'+t))=0$ for all $t \in \bbR$, which implies that $\lambda_i(\vx, \vx) = 0$ for all $i$.
    A similar argument implies well-definedness of $\Theta^{(i)}(\vx,\vx)$ for all $i$, which establishes the claimed formulas on all of $\manifold \times \manifold$. 
    
\end{proof}

\begin{lemma}\label{lem:cos_angle_lb}
    For points $\mb x, \mb x' \in \manifold$ and $\veps \in (0, 1)$ we have
    \begin{align*}
    \sqrt{1 - (\mb x^*\mb x')^2} &\ge \begin{cases} \frac{d_{\manifold}(\mb x, \mb x')}{3}& d_{\manifold}(\mb x, \mb x') \le \frac{1}{\hat{\kappa}} \\
    \frac{2}{\pi}\Delta_\veps & d_{\manifold}(\mb x, \mb x') \ge \frac{\sqrt{\veps}}{\hat{\kappa}} \end{cases}
    \end{align*}
\end{lemma}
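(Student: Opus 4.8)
The plan is to rewrite the left‑hand side as a sine: since $\vx,\vx'$ are unit vectors, $\vx^*\vx' = \cos\angle(\vx,\vx')$, and $\angle(\vx,\vx')\in[0,\pi/2]$ by \eqref{eq:max_angle}, so $\sqrt{1-(\vx^*\vx')^2} = \sin\angle(\vx,\vx')$ with $\cos\angle(\vx,\vx')\ge 0$. I would then establish the two bounds separately; the work is concentrated in the first regime.

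\textbf{The regime $d_{\manifold}(\vx,\vx')\le 1/\hat{\kappa}$.} Here $\vx,\vx'$ must lie on the same component (otherwise $d_{\manifold}=+\infty$), so I choose $s,s'$ with $\vx_\sigma(s)=\vx$, $\vx_\sigma(s')=\vx'$ and $\abs{s-s'}=d:=d_{\manifold}(\vx,\vx')\le 1/\hat{\kappa}$. I would invoke the sharp second‑order Taylor estimate derived inside the proof of \Cref{lem:angle_inverse} (cf.\ \eqref{eq:angle_inv_equal2}, which uses \Cref{lem:x_deriv} and $M_2=\sqrt{1+\kappa^2}$), namely
\[
  \vx^*\vx' = \ip{\vx_\sigma(s)}{\vx_\sigma(s')} \le 1 - \frac{d^2}{2} + \frac{1+\kappa^2}{24}\,d^4 .
\]
Consequently $1-\vx^*\vx' \ge \tfrac{d^2}{2}\bigl(1-\tfrac{1+\kappa^2}{12}d^2\bigr)$, and since $d\le 1/\hat{\kappa}$, $\kappa\le\hat{\kappa}$, and $\hat{\kappa}\ge 2/\pi$ (so $1/\hat{\kappa}^2\le\pi^2/4<5/2$), the correction term obeys $\tfrac{1+\kappa^2}{12}d^2 \le \tfrac{1+\hat{\kappa}^2}{12\hat{\kappa}^2} < \tfrac{1}{12}\cdot\tfrac72 = \tfrac{7}{24} < \tfrac13$, whence $1-\vx^*\vx'\ge d^2/3$. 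Using $\vx^*\vx'=\cos\angle(\vx,\vx')\ge 0$,
\[
  \sin^2\angle(\vx,\vx') = (1-\vx^*\vx')(1+\vx^*\vx') \ge 1-\vx^*\vx' \ge \frac{d^2}{3} \ge \frac{d^2}{9},
\]
so $\sqrt{1-(\vx^*\vx')^2}=\sin\angle(\vx,\vx')\ge d/3$ (in fact $\ge d/\sqrt3$), which is the first bound.

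\textbf{The regime $d_{\manifold}(\vx,\vx')\ge \sqrt\veps/\hat{\kappa}$.} This follows directly from the definition \eqref{eq:inj_radius_eps_def} of $\Delta_\veps$: the pair $(\vx,\vx')$ is feasible in the infimum there, so $\angle(\vx,\vx')\ge\Delta_\veps$; moreover $\angle(\vx,\vx')\le\pi/2$ by \eqref{eq:max_angle}, and $\Delta_\veps\le\sqrt\veps/\hat{\kappa}\le\pi/2$. Since $\sin$ is increasing on $[0,\pi/2]$ and satisfies $\sin t\ge\tfrac2\pi t$ there (concavity of $\sin$ on $[0,\pi/2]$, as used elsewhere in the paper), $\sin\angle(\vx,\vx')\ge\sin\Delta_\veps\ge\tfrac2\pi\Delta_\veps$, i.e.\ $\sqrt{1-(\vx^*\vx')^2}\ge\tfrac2\pi\Delta_\veps$.

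The only delicate point is in the first regime: the lower bound coming from the \emph{statement} of \Cref{lem:angle_inverse} alone, $\angle(\vx,\vx')\ge d-\hat{\kappa}^2 d^3$, becomes vacuous as $d\uparrow 1/\hat{\kappa}$, so one genuinely needs the refined quartic estimate \eqref{eq:angle_inv_equal2} (or an on‑the‑spot re‑derivation via the second‑order Taylor expansion of $\vx_\sigma$ about $s$ and then about the inner variable, exactly as in that proof), together with $\hat{\kappa}\ge 2/\pi$ to absorb the quartic correction. Everything else is routine.
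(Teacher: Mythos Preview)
Your proof is correct and follows essentially the same route as the paper: both arguments identify $\sqrt{1-(\vx^*\vx')^2}=\sin\angle(\vx,\vx')$, handle the far regime via the injectivity radius and concavity of $\sin$, and in the near regime invoke the quartic Taylor estimate $\vx^*\vx'\le 1-\tfrac{d^2}{2}+\tfrac{1+\kappa^2}{24}d^4$ together with $\hat\kappa\ge 2/\pi$ to absorb the correction. The only difference is that the paper re-derives a \emph{two-sided} bound $|\vx^*\vx'-1+\tfrac{d^2}{2}|<\tfrac{d^2}{6}$ and uses its lower half to control $1+\vx^*\vx'$, whereas you use the simpler observation $\vx^*\vx'=\cos\angle(\vx,\vx')\ge 0$ from \eqref{eq:max_angle}; this is a mild simplification that in fact yields the slightly stronger intermediate bound $\sin\angle(\vx,\vx')\ge d/\sqrt3$.
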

\begin{proof}
	When $d_{\manifold}(\mb x, \mb x') \geq \frac{\sqrt{\veps}}{\hat{\kappa}}$, from definition of the angle injectivity radius in \Cref{eq:inj_radius_eps_def} we have $\angle(\mb x, \mb x') \ge \Delta_\veps$. From \Cref{eq:max_angle} we also have $\angle(\mb x, \mb x') \le \pi/2$, then
	\begin{align*}
	\sqrt{1 - (\mb x^*\mb x')^2} &= \sin(\angle(\mb x, \mb x'))\\
	&\ge \sin(\Delta_\veps) \\
	&\geq \frac{2}{\pi}\Delta_\veps \labelthis \label{eq:sinangle_injectivityrad_lb},
	\end{align*}
	where the first inequality comes from the monotonicity of $\sin(t)$ from $0$ to $\pi/2$. The second inequality uses concavity of $\sin$ to get $\sin(t) \geq (2/\pi)t$ for $0 \leq t \leq \pi/2$, and the fact that $\veps < 1$ and hence $\Delta_{\veps} \leq \pi/2$.
	
	When $d_{\manifold}(\mb x, \mb x') \le \frac{1}{\hat{\kappa}} \le \frac{\pi}{2}$, assume $\mb x, \mb x'$ are parameterized by $\mb x_\sigma(s), \mb x_{\sigma}(s')$ separately with $\abs{s - s'} = d_{\manifold}(\mb x, \mb x')$, then $\abs{s' - s} \le \frac{1}{\hat{\kappa}}$. 
	Assuming without loss of generality that $s' \ge s$, using a second-order Taylor expansion and properties from \Cref{lem:x_deriv} gives
	\begin{align*}
	\mb x_{\sigma}(s)^*\mb x_{\sigma}(s') &=\mb x_{\sigma}(s)^*\left(\mb x_{\sigma}(s) + (s' - s)\dot{\mb x}_\sigma(s) + \int_{a = s}^{s'}\int_{b = s}^a \ddot{\mb x}_{\sigma}(b) \, db \, da \right) \\
	&= 1 + \int_{a = s}^{s'}\int_{b = s}^a \innerprod{\mb x_{\sigma}(s)}{\ddot{\mb x}_{\sigma}(b)} \, db \, da\\
	&= 1 + \int_{a = s}^{s'}\int_{b = s}^a  \innerprod{\mb x_{\sigma}(b) + (s - b) \dot{\mb x}_{\sigma}(b) + \int_{c = b}^{s}\int_{d = b}^c \ddot{\mb x}_{\sigma}(d) \, dd \, dc}{ \ddot{\mb x}_{\sigma}(b)} \, db \, da \\
	&= 1 - \frac{(s' - s)^2}{2} + \int_{a = s}^{s'}\int_{b = s}^a \int_{c = b}^{s}\int_{d = b}^c \ddot{\mb x}_{\sigma}(d)^*\ddot{\mb x}_{\sigma}(b) \, dd \, dc \, db \, da, 
	\end{align*}
	with a Taylor expansion at $b$ used in the third line, and using
	the convention that for a real-valued function $f$ and numbers $a < b$, the notation $\int_b^a f(x) \diff x$ denotes the integral $-\int_a^b f(x) \diff x$.
	As $\hat{\kappa} = \max\{\kappa, \frac{2}{\pi}\}$, we can use the previous expression (with a bound of the integrand in the last line by $M_2$, and \Cref{lem:x_deriv} again) to obtain after an integration
	\begin{align*}
	\abs{\mb x_{\sigma}(s)^* \mb x_{\sigma}(s') -  1 + \tfrac{1}{2} (s'-s)^2} &\le \frac{(s'-s)^4}{4!} (1 + \kappa^2) \\
	&\le \frac{(s'-s)^2}{4!}\frac{1 + \kappa^2}{\hat{\kappa}^2} \\
	&\le \frac{(s'-s)^2}{4!}(\frac{\pi^2}{4} + 1) \\
	&< \frac{(s'-s)^2}{6}, 
	\end{align*} 
	and thus
	\begin{align*}
	\sqrt{1 - (\mb x_{\sigma}(s)^*\mb x_{\sigma}(s'))^2} 
	 &= \sqrt{1 + \mb x_{\sigma}(s)^*\mb x_{\sigma}(s')}\sqrt{1 - \mb x_{\sigma}(s)^*\mb x_{\sigma}(s')} \\
	& \ge \sqrt{\left(1 + \paren{1 - \frac{1}{2}(s'-s)^2 - \frac{1}{6}(s' - s)^2}\right)\paren{\frac{1}{2}(s' - s)^2 - \frac{1}{6}(s' - s)^2}}\\
	&= \sqrt{\left(2 - \frac{2}{3}(s' - s)^2\right)\frac{1}{3}(s' - s)^2} \\
	&\ge \sqrt{\left(2 - \frac{2}{3}\paren{\frac{\pi}{2}}^2\right)\frac{(s'-s)^2}{3}} \\
	&> \frac{\abs{s'-s}}{3}.
	\end{align*}
\end{proof}

\begin{lemma}\label{lem:lambda1}
	For any $\mb x, \mb x' \in \manifold$, we have
	\begin{equation*}
	\abs{\lambda_1(\mb x, \mb x')} \le \begin{cases} \frac{7 d_{\manifold}(\mb x, \mb x')^3}{12}  M_4 & d_{\manifold}(\mb x, \mb x') \le \frac{1}{\hat{\kappa}}\\
	2 & \forall \mb x, \mb x' \in \manifold. \end{cases} 
	\end{equation*}
\end{lemma}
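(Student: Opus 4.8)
The plan is to obtain a closed form for $\lambda_1$ and then bound its numerator and denominator separately. By \Cref{lem:deriv-lambda}, $\lambda_1(\mb x, \mb x')$ is the ``simultaneous advance'' derivative of $\angle(\mb x, \mb x') = \acos\ip{\mb x}{\mb x'}$, so the chain rule gives
\[
  \lambda_1(\mb x, \mb x') \;=\; -\,\frac{\ip{\dot{\mb x}}{\mb x'} + \ip{\mb x}{\dot{\mb x}'}}{\sqrt{1 - \ip{\mb x}{\mb x'}^{2}}},
\]
where $\dot{\mb x} = \dot{\mb x}_\sigma(s)$, $\dot{\mb x}' = \dot{\mb x}_{\sigma'}(s')$ for any admissible parameters; this is well-defined off the diagonal, and $\lambda_1(\mb x, \mb x) = 0$, by \Cref{lem:deriv-lambda}. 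Both of these quantities are symmetric in $(\mb x, \mb x')$, hence so is $\lambda_1$.

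\emph{The bound $\abs{\lambda_1} \le 2$.} If $\mb x = \mb x'$ there is nothing to prove. Otherwise, since \Cref{eq:max_angle} excludes $\angle(\mb x, \mb x') = \pi$, write $\mb x' = \cos\theta\,\mb x + \sin\theta\,\mb n$ and $\mb x = \cos\theta\,\mb x' + \sin\theta\,\mb n'$ with $\theta = \angle(\mb x, \mb x') \in (0, \pi)$ and $\mb n \perp \mb x$, $\mb n' \perp \mb x'$ unit vectors in $\Span\set{\mb x, \mb x'}$. Using $\ip{\mb x}{\dot{\mb x}} = \ip{\mb x'}{\dot{\mb x}'} = 0$ from \Cref{lem:x_deriv}, the numerator equals $\sin\theta\,(\ip{\dot{\mb x}}{\mb n} + \ip{\mb n'}{\dot{\mb x}'})$, whose magnitude is at most $2\sin\theta = 2\sqrt{1 - \ip{\mb x}{\mb x'}^2}$ since $\norm{\dot{\mb x}}_2 = \norm{\dot{\mb x}'}_2 = 1$; dividing by the denominator gives $\abs{\lambda_1} \le 2$.

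\emph{The cubic bound.} Here $d_{\manifold}(\mb x, \mb x') \le 1/\hat\kappa$ forces $\mb x, \mb x'$ onto a common component, so by symmetry I would write $\mb x = \mb x_\sigma(s)$, $\mb x' = \mb x_\sigma(s+w)$ with $0 < w = d_{\manifold}(\mb x, \mb x') \le 1/\hat\kappa$ (the case $w = 0$ is trivial). The crux is to show the numerator $N(w) := \ip{\dot{\mb x}_\sigma(s)}{\mb x_\sigma(s+w)} + \ip{\mb x_\sigma(s)}{\dot{\mb x}_\sigma(s+w)}$ is $O(w^4 M_4)$ with a small constant. Taylor-expanding $\mb x_\sigma(s+w)$ to third order and $\dot{\mb x}_\sigma(s+w)$ to second order about $s$ and substituting the identities $\ip{\mb x}{\ddot{\mb x}} = -1$, $\ip{\dot{\mb x}}{\ddot{\mb x}} = \ip{\mb x}{\mb x^{(3)}} = 0$, $\ip{\dot{\mb x}}{\mb x^{(3)}} = -\norm{\ddot{\mb x}}_2^2$ from \Cref{lem:x_deriv}, all explicit terms of degree $\le 2$ cancel and one is left with
\[
  N(w) \;=\; -\tfrac{w^3}{6}\norm{\ddot{\mb x}_\sigma(s)}_2^2 \;+\; \ip{\dot{\mb x}_\sigma(s)}{R_4} \;+\; \int_{s}^{s+w} \tfrac{(s+w-\tau)^2}{2}\, \ip{\mb x_\sigma(s)}{\mb x_\sigma^{(4)}(\tau)}\,\diff\tau,
\]
where $R_4 = \int_s^{s+w}\tfrac{(s+w-\tau)^3}{6}\mb x_\sigma^{(4)}(\tau)\,\diff\tau$ has $\norm{R_4}_2 \le M_4 w^4/24$. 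To keep everything in terms of $M_4$ (rather than picking up $M_3$ or $M_5$) I would rewrite $\ip{\mb x_\sigma(s)}{\mb x_\sigma^{(4)}(\tau)} = \norm{\ddot{\mb x}_\sigma(\tau)}_2^2 + \ip{\mb x_\sigma(s) - \mb x_\sigma(\tau)}{\mb x_\sigma^{(4)}(\tau)}$ using $\ip{\mb x}{\mb x^{(4)}} = \norm{\ddot{\mb x}}_2^2$, and then $\norm{\ddot{\mb x}_\sigma(\tau)}_2^2 = \norm{\ddot{\mb x}_\sigma(s)}_2^2 - \tfrac23\int_s^\tau \ip{\dot{\mb x}_\sigma(\rho)}{\mb x_\sigma^{(4)}(\rho)}\,\diff\rho$ (differentiate $\norm{\ddot{\mb x}}_2^2$ and use $\ip{\ddot{\mb x}}{\mb x^{(3)}} = -\tfrac13\ip{\dot{\mb x}}{\mb x^{(4)}}$ from \Cref{lem:x_deriv}). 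The leading $\norm{\ddot{\mb x}_\sigma(s)}_2^2$ piece integrates to $\tfrac{w^3}{6}\norm{\ddot{\mb x}_\sigma(s)}_2^2$, cancelling the first term of $N(w)$; every surviving term is then bounded using $\norm{\mb x_\sigma(s) - \mb x_\sigma(\tau)}_2 \le \abs{\tau - s}$ (unit speed), $\norm{\dot{\mb x}_\sigma}_2 = \norm{\mb x_\sigma}_2 = 1$, and $M_4$, giving (after collecting the explicit numerical constants, which come out to $\tfrac1{24}+\tfrac1{36}+\tfrac1{24}=\tfrac19$) $\abs{N(w)} \le \tfrac19\, w^4 M_4$. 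Combining with the denominator estimate $\sqrt{1 - \ip{\mb x}{\mb x'}^2} \ge d_{\manifold}(\mb x, \mb x')/3 = w/3$ from \Cref{lem:cos_angle_lb} yields $\abs{\lambda_1(\mb x, \mb x')} \le \tfrac13\, w^3 M_4 \le \tfrac{7}{12}\, d_{\manifold}(\mb x, \mb x')^3 M_4$.

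The main obstacle is the bookkeeping in the previous paragraph. A naive Taylor argument produces either a numerator bound $O(w^4(M_3 + M_5))$ — involving derivatives that do not appear in the statement — or the far weaker $O(w^3 M_4)$ (too large by a factor $w$), leading only to a $d^2$ rather than $d^3$ bound on $\lambda_1$. What rescues the argument is expanding to exactly the orders at which the curvature identities of \Cref{lem:x_deriv} annihilate the low-order terms, and then re-expressing the borderline $w^3$-order contribution through $\norm{\ddot{\mb x}_\sigma(\cdot)}_2^2$ — whose pointwise value and whose variation along the curve are both controllable by $M_4$ alone — so that it cancels the residual $-\tfrac{w^3}{6}\norm{\ddot{\mb x}_\sigma(s)}_2^2$ term instead of requiring a fifth-derivative estimate. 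Everything else is routine.
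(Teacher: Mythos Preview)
Your proof is correct. The overall architecture matches the paper's: derive the closed form $\lambda_1 = -(\ip{\dot{\mb x}}{\mb x'}+\ip{\mb x}{\dot{\mb x}'})/\sqrt{1-\ip{\mb x}{\mb x'}^2}$, bound the numerator by Taylor expansion together with the curvature identities of \Cref{lem:x_deriv}, and bound the denominator below by \Cref{lem:cos_angle_lb}. Your treatment of the universal bound $|\lambda_1|\le 2$ via the orthogonal decomposition $\mb x' = \cos\theta\,\mb x + \sin\theta\,\mb n$ is equivalent to the paper's projection argument using $\sqrt{1-(\mb x^*\mb x')^2}=\|(\mb I-\mb x\mb x^*)\mb x'\|_2$.

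The execution of the cubic bound differs. The paper expands \emph{symmetrically}: it Taylor-expands $\mb x'$ about $s$ and $\mb x$ about $s'$, both to fourth order, so that after cancellation the leading surviving term is the difference $\dot{\mb x}'^{*}\mb x'^{(3)} - \dot{\mb x}^{*}\mb x^{(3)}$, which is then written as $\int_s^{s'}\bigl(\dot{\mb x}_\sigma^{*}\mb x_\sigma^{(4)}+\ddot{\mb x}_\sigma^{*}\mb x_\sigma^{(3)}\bigr)=\int_s^{s'}\tfrac{2}{3}\dot{\mb x}_\sigma^{*}\mb x_\sigma^{(4)}$ via the same identity $\ip{\ddot{\mb x}}{\mb x^{(3)}}=-\tfrac{1}{3}\ip{\dot{\mb x}}{\mb x^{(4)}}$ you use; this gives $|N|\le \tfrac{7}{36}M_4 w^4$. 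You instead expand one-sidedly and engineer the same cancellation by rewriting $\ip{\mb x_\sigma(s)}{\mb x_\sigma^{(4)}(\tau)}$ through $\|\ddot{\mb x}_\sigma(\tau)\|_2^2$ and its variation. The symmetric route makes the $w^3$ cancellation automatic, at the cost of carrying two fourth-order remainders; your route requires the extra rewriting step but yields the sharper constant $|N|\le \tfrac{1}{9}M_4 w^4$ (hence $|\lambda_1|\le \tfrac{1}{3}M_4 w^3$), comfortably inside the stated $\tfrac{7}{12}$.
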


\begin{proof} 
	Let $s, s'$ be such that $\vx = \mb x_{\sigma}(s)$ and $\vx'= \mb x_{\sigma'}(s')$, with $\abs{s - s'} = d_{\manifold}(\mb x, \mb x')$ when in addition $\sigma = \sigma'$. As $\angle(\mb x, \mb x') = \acos \paren{\mb x^*\mb x'}$, 
	\begin{align*}
	\lambda_1(\mb x, \mb x') &= \frac{\partial}{\partial s} \angle(\mb x_{\sigma}(s), \mb x_{\sigma'}(s')) + \frac{\partial}{\partial s'} \angle(\mb x_{\sigma}(s), \mb x_{\sigma'}(s'))  \\
	&= - \frac{\dot{\mb x}^* \mb x' + \dot{\mb x}'^* \mb x}{\sqrt{1 - (\mb x^* \mb x')^2}}. \labelthis \label{eq:lambda1}
	\end{align*}
	Notice that $\sqrt{1 - (\mb x^* \mb x')^2} = \norm{(\mb I - \mb x \mb x^*) \mb x'}_{2}$, and therefore
	by \Cref{lem:x_deriv} and Cauchy-Schwarz
	\begin{equation*}
	\left| \frac{\dot{\mb x}^* \mb x'}{\sqrt{1 - (\mb x^* \mb x')^2}}\right| = \left| \innerprod{\dot{\mb x}}{\frac{(\mb I - \mb x \mb x^*) \mb x'}{\norm{(\mb I - \mb x \mb x^*) \mb x'}_{2}}}\right| \le 1,
	\end{equation*}
	and thus $\abs{\lambda_1(\mb x, \mb x')} \le 2$ by symmetry. 
	
	When $d_{\manifold}(\mb x, \mb x') \le \frac{1}{\hat{\kappa}}$, we have $\sigma = \sigma'$, (as above) $\abs{s' - s} \le \frac{1}{\hat{\kappa}}$. 
	By symmetry, we may assume $s' \ge s$. 
	From \Cref{lem:x_deriv}, we have $\dot{\mb x}^* \mb x = \dot{\mb x}^*\ddot{\mb x} = 0$, $\dot{\mb x}^*\dot{\mb x} = 1$, $\ddot{\mb x}^* \mb x^{(3)} = -\frac{1}{3}\dot{\mb x}^* \mb x^{(4)}$. 
	In the remainder of the proof, with an abuse of notation we will write $\dot{\vx} = \dot{\vx}_{\sigma}(s)$, $\dot{\vx}' = \dot{\vx}_{\sigma}(s')$, and so on for the higher derivatives to represent the \textit{specific} points of interest concisely.
	Thus by a fourth-order Taylor expansion (respectively, of $\vx' = \vx_\sigma(s')$ at $s$, and of $\vx = \vx_\sigma(s)$ at $s'$)
	\begin{align*}
	 &\abs{\dot{\mb x}^* \mb x' + \dot{\mb x}'^* \mb x} \\
	 &\qquad = \left| \dot{\mb x}^*\Biggl(\mb x + (s' - s) \dot{\mb x} + \frac{(s' - s)^2}{2} \ddot{\mb x} + \frac{(s' - s)^3}{3!} \mb x^{(3)}+  \int_{a=s}^{s'} \int_{b=s}^a \int_{c = s}^b \int_{d = s}^c \mb x_{\sigma}^{(4)}(d) dd\, dc \, db\, da \Biggr) \right. \\
	 &\qquad \qquad + \dot{\mb x}'^*\Biggl(\mb x' - (s' - s)\dot{\mb x}' + \frac{(s' - s)^2}{2} \ddot{\mb x}' - \frac{(s' - s)^3}{3!} \mb x'^{(3)} \left. +  \int_{a=s}^{s'} \int_{b=a}^{s'} \int_{c = b}^{s'} \int_{d = c}^{s'} \mb x_{\sigma}^{(4)}(d) dd\, dc \, db\, da \Biggr) \right| \\
	 &\qquad =\left| s' - s  + \dot{\mb x}^*\Biggl(  \frac{(s' - s)^3}{3!} \mb x^{(3)} +  \int_{a=s}^{s'} \int_{b=s}^a \int_{c = s}^b \int_{d = s}^c \mb x_{\sigma}^{(4)}(d) \, dd\, dc \, db\, da \Biggr)\right. \\
	 &\qquad \qquad - (s' - s) + \left. \dot{\mb x}'^*\Biggl( - \frac{(s' - s)^3}{3!} \mb x'^{(3)} +  \int_{a=s}^{s'} \int_{b=a}^{s'} \int_{c = b}^{s'} \int_{d = c}^{s'}  \mb x_{\sigma}^{(4)}(d) \, dd\, dc \, db\, da\Biggr) \right| \\
	 &\qquad \le \frac{(s' - s)^3}{3!} \abs{\dot{\mb x}'^*\mb x'^{(3)} - \dot{\mb x}^*\mb x^{(3)}} + \frac{2(s' - s)^4}{4!}M_4 \\
	 &\qquad \leq \frac{(s' - s)^3}{3!}\int_{a = s}^{s'} \abs{\dot{\mb x}_{\sigma}(a)^*\mb x_{\sigma}^{(4)}(a) + \ddot{\mb x}_{\sigma}(a)^*\mb x_{\sigma}^{(3)}(a)} da + \frac{2(s' - s)^4}{4!}M_4 \\
	 &\qquad = \frac{(s' - s)^3}{3!}\int_{a = s}^{s'} \left|\frac{2}{3}\dot{\mb x}_{\sigma}(a)^*\mb x_{\sigma}^{(4)}(a) \right| da + \frac{2(s' - s)^4}{4!}M_4 \\
	 &\qquad \le \frac{7}{36}M_4 (s' - s)^4.  \labelthis \label{eq:lambda_1_extra_term}
	\end{align*}
	Above, the first inequality uses the triangle inequality and Cauchy-Schwarz; 
	the second inequality Taylor expands the first term in the difference at $s$ (which leads to a cancellation with the second term)
	and uses the triangle inequality to move the absolute value inside the integral;
	the following line rewrites using \Cref{lem:x_deriv};
	and then the final line uses Cauchy-Schwarz, integrates and collects constants.
	Using \Cref{lem:cos_angle_lb}, we obtain that when $d_{\manifold}(\mb x, \mb x') \le \frac{1}{\hat{\kappa}}$, 
	\begin{equation*}
	\left| \lambda_1(\mb x, \mb x') \right| \le \frac{7 d_{\manifold}^3(\mb x, \mb x')}{12}  M_4. 
	\end{equation*}
	
\end{proof}

\begin{lemma}\label{lem:lambda2}
	There exists an absolute constant $C$ such that for any $\veps \in (0, 1)$ and $\mb x, \mb x' \in \manifold$ we have
	\begin{equation}
	\abs{\lambda_2(\mb x, \mb x')} \le  \begin{cases} C(M_4^2 + M_5) d_{\manifold}(\mb x, \mb x')^3, & d_{\manifold}(\mb x, \mb x')\le \frac{1}{\hat{\kappa}}  \\
	\frac{\pi}{4}\Delta_{\veps}^{-1} + 2 M_2, & d_{\manifold}(\mb x, \mb x') > \frac{\sqrt{\veps}}{\hat{\kappa}} \end{cases}. 
	\end{equation}
\end{lemma}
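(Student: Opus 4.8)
The plan is to obtain $\lambda_2$ in closed form by differentiating the formula for $\lambda_1$ established inside the proof of Lemma~\ref{lem:lambda1}, and then to bound it separately in the two regimes. Writing $u(t) = \innerprod{\mb x_\sigma(s+t)}{\mb x_{\sigma'}(s'+t)} = \cos\angle(\cdot\,,\cdot)$ for the cosine of the angle under a simultaneous advance, and using $\innerprod{\ddot{\mb x}}{\mb x} = -1$ from Lemma~\ref{lem:x_deriv}, one has $\lambda_1 = -u'/\sqrt{1-u^2}$, and differentiating once more (with $(u')^2 = (1-u^2)\lambda_1^2$),
\begin{equation*}
\lambda_2 = -\frac{u'' + u\lambda_1^2}{\sqrt{1-u^2}},\qquad
u'' = \innerprod{\ddot{\mb x}}{\mb x'} + 2\innerprod{\dot{\mb x}}{\dot{\mb x}'} + \innerprod{\mb x}{\ddot{\mb x}'}
= -\norm{\dot{\mb x}' - \dot{\mb x}}_2^2 - \innerprod{\ddot{\mb x}' - \ddot{\mb x}}{\mb x' - \mb x},
\end{equation*}
all evaluated at $t=0$ with $\mb x = \mb x_\sigma(s)$, $\mb x' = \mb x_{\sigma'}(s')$; well-definedness on $\manifold \times \manifold$ is already supplied by Lemma~\ref{lem:deriv-lambda}, and the last equality uses $\innerprod{\ddot{\mb x}}{\mb x} = \innerprod{\ddot{\mb x}'}{\mb x'} = -1$.

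For the far regime $d_{\manifold}(\mb x,\mb x') > \sqrt\veps/\hat\kappa$ I would invoke Lemma~\ref{lem:cos_angle_lb} to get $\sqrt{1-u^2} \ge \tfrac{2}{\pi}\Delta_\veps$, and control the numerator by extracting the cancellation hidden in $\innerprod{\ddot{\mb x}}{\mb x} = -1$: decomposing $\ddot{\mb x} = -\mb x + \mb P_{\mb x^\perp}\ddot{\mb x}$ (and likewise $\ddot{\mb x}' = -\mb x' + \mb P_{(\mb x')^\perp}\ddot{\mb x}'$) gives $\innerprod{\ddot{\mb x}}{\mb x'} = -u + \innerprod{\mb P_{\mb x^\perp}\ddot{\mb x}}{\mb P_{\mb x^\perp}\mb x'}$ with $\abs{\innerprod{\mb P_{\mb x^\perp}\ddot{\mb x}}{\mb P_{\mb x^\perp}\mb x'}} \le \kappa\sqrt{1-u^2}$, so $u'' = -2u + \rho$ with $\abs{\rho} \le 2 + 2\kappa\sqrt{1-u^2}$. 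Then $u'' + u\lambda_1^2 = u(\lambda_1^2 - 2) + \rho$, which by $\abs{u} \le 1$ and $\abs{\lambda_1} \le 2$ (Lemma~\ref{lem:lambda1}) has modulus at most $4 + 2\kappa\sqrt{1-u^2}$; dividing by $\sqrt{1-u^2} \ge \tfrac{2}{\pi}\Delta_\veps$ gives $\abs{\lambda_2} \lesssim \Delta_\veps^{-1} + \kappa$, which is the claimed bound (using $\kappa < M_2$) up to the displayed absolute constants.

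The near regime $d_{\manifold}(\mb x,\mb x') \le 1/\hat\kappa$ — so $\sigma = \sigma'$, $\sqrt{1-u^2} \ge d/3$ with $d := d_{\manifold}(\mb x,\mb x')$ by Lemma~\ref{lem:cos_angle_lb}, and $\abs{u}\lambda_1^2 \le \bigl(\tfrac{7}{12}M_4 d^3\bigr)^2$ by Lemma~\ref{lem:lambda1} — is the crux, and reduces to showing $\abs{u''} \lesssim (M_4^2 + M_5)\,d^4$. Here I would write $u''$ as a double integral: since $\dot{\mb x}' - \dot{\mb x} = \int_s^{s'}\ddot{\mb x}_\sigma$, $\ddot{\mb x}' - \ddot{\mb x} = \int_s^{s'}\mb x_\sigma^{(3)}$, $\mb x' - \mb x = \int_s^{s'}\dot{\mb x}_\sigma$, symmetrizing over the square $[s,s']^2$ yields $u'' = -\int_s^{s'}\int_s^{s'} G(a,b)\,da\,db$ with
\begin{equation*}
G(a,b) = \innerprod{\ddot{\mb x}_\sigma(a)}{\ddot{\mb x}_\sigma(b)} + \tfrac12\bigl(\innerprod{\mb x_\sigma^{(3)}(a)}{\dot{\mb x}_\sigma(b)} + \innerprod{\dot{\mb x}_\sigma(a)}{\mb x_\sigma^{(3)}(b)}\bigr).
\end{equation*}
The key observation is that the arc-length identities of Lemma~\ref{lem:x_deriv} — specifically $\innerprod{\mb x^{(3)}}{\dot{\mb x}} = -\norm{\ddot{\mb x}}_2^2$ and $\innerprod{\ddot{\mb x}}{\mb x^{(3)}} = -\tfrac13\innerprod{\dot{\mb x}}{\mb x^{(4)}}$ — force $G$ to vanish to second order on the diagonal: $G(a,a) = 0$ and $\partial_b G(a,b)\big|_{b=a} = 0$ (hence $\partial_a G|_{a=b} = 0$ by symmetry). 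A second-order Taylor expansion of $G(a,\cdot)$ about $a$ (legitimate because $\mb x_\sigma \in C^5$ makes $G$ twice continuously differentiable in each slot) then gives $\abs{G(a,b)} \le \tfrac12(b-a)^2\sup\abs{\partial_b^2 G}$, where $\partial_b^2 G$ is a combination of the inner products $\innerprod{\ddot{\mb x}_\sigma}{\mb x_\sigma^{(4)}}$, $\innerprod{\mb x_\sigma^{(3)}}{\mb x_\sigma^{(3)}}$, $\innerprod{\dot{\mb x}_\sigma}{\mb x_\sigma^{(5)}}$, hence $\lesssim M_4^2 + M_5$ after using $M_2 \le M_4$ (Lemma~\ref{lem:x_deriv}) and absorbing $M_3$ into these via the same relations. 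Integrating over the square gives $\abs{u''} \le d^2\sup_{a,b}\abs{G(a,b)} \lesssim (M_4^2 + M_5)d^4$; dividing by $\sqrt{1-u^2} \ge d/3$, absorbing the $\abs{u}\lambda_1^2$ term (a higher power of $d$, using $d \le 1/\hat\kappa \le \pi/2$), and collecting constants yields the claim.

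The main obstacle is exactly this quadratic vanishing of $G$ on the diagonal in the near regime: the naive estimate, which uses only $G(a,a) = 0$ and bounds $\abs{G(a,b)} \lesssim \abs{a-b}$, delivers $\abs{u''} \lesssim d^3$ and hence $\abs{\lambda_2} \lesssim d^2$, which is too weak by one power of $d$. Recovering the extra power forces one to group the three Lemma~\ref{lem:x_deriv} identities in precisely the symmetrized combination above, and to arrange the Taylor expansion so that it never calls for more than the five available orders of differentiability of $\mb x_\sigma$ — which is exactly why $M_5$, rather than a higher-order derivative bound, is the regularity constant appearing in the statement.
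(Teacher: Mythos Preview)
Your proposal is correct, and the far-regime argument is the paper's: both of you extract the cancellation $\innerprod{\ddot{\mb x}}{\mb x}=-1$ via orthogonal decomposition (the paper writes $\frac{\ddot{\mb x}^*\mb x'}{\sqrt{1-u^2}}=\bigl\langle\ddot{\mb x},\frac{\mb P_{\mb x^\perp}\mb x'}{\norm{\mb P_{\mb x^\perp}\mb x'}}\bigr\rangle-\frac{u}{\sqrt{1-u^2}}$ and bounds the first piece by $M_2$ rather than $\kappa$, but the structure is identical).

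Your near-regime organization, however, differs from the paper's. The paper Taylor-expands each of $\ddot{\mb x}^*\mb x'$, $\ddot{\mb x}'^*\mb x$, and $\dot{\mb x}^*\dot{\mb x}'$ to fourth order at \emph{both} endpoints $s$ and $s'$, sums the four resulting expansions, and checks by hand that the arc-length identities kill every term through order three; what survives is an integral remainder bounded directly by $(M_2M_4+M_5)(s'-s)^4$ --- $M_3$ never appears. Your symmetric-kernel route is conceptually cleaner (the quadratic diagonal vanishing of $G$ packages exactly the same cancellations without the bookkeeping), but the price is that $\partial_b^2 G$ carries a $\tfrac12\innerprod{\mb x^{(3)}(a)}{\mb x^{(3)}(b)}$ term, producing an $M_3^2$ contribution. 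Your claim that this is ``absorbed via the same relations'' is correct but not immediate from Lemma~\ref{lem:x_deriv} as stated: you need one further differentiation of $3\innerprod{\ddot{\mb x}}{\mb x^{(3)}}+\innerprod{\dot{\mb x}}{\mb x^{(4)}}=0$ to obtain $3\norm{\mb x^{(3)}}_2^2=-4\innerprod{\ddot{\mb x}}{\mb x^{(4)}}-\innerprod{\dot{\mb x}}{\mb x^{(5)}}$, whence $M_3^2\le\tfrac43 M_2M_4+\tfrac13 M_5\le\tfrac43 M_4^2+\tfrac13 M_5$. With that identity spelled out, your argument goes through; the paper's two-endpoint expansion simply sidesteps the need for it.
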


\begin{proof}
Let $s, s'$ be such that $\vx = \mb x_{\sigma}(s)$ and $\vx'= \mb x_{\sigma'}(s')$, with $\abs{s - s'} = d_{\manifold}(\mb x, \mb x')$ when in addition $\sigma = \sigma'$.
	From \eqref{eq:lambda1}, 
	\begin{align*}
	\lambda_2(\mb x, \mb x') &= \frac{d}{dt} \Bigr|_0 \lambda_1(\mb x_{\sigma}(s+t), \mb x_{\sigma'}(s'+t)) \\ 
	&= -\frac{\left( \dot{\mb x}^* \mb x' + \dot{\mb x}'^* \mb x \right)^2 \mb x^* \mb x'}{\left( 1 - ( \mb x^* \mb x' )^2 \right)^{3/2}} - \frac{ \dot{\mb x}^* \dot{\mb x}' + \ddot{\mb x}^* \mb x' + \ddot{\mb x}'^* \mb x + \dot{\mb x}'^* \dot{\mb x} }{\sqrt{ 1- (\mb x^* \mb x')^2 }}. \labelthis \label{eq:lambda2}
	\end{align*}
	
	First consider the case where $d_{\manifold}(\mb x, \mb x') > \frac{\sqrt{\veps}}{\hat{\kappa}}$. 
	As $\sqrt{1 - (\mb x^*\mb x')^2} = \norm{(\mb I - \mb x \mb x^*) \mb x'}_{2}$, we can write
	\begin{align*}
		\frac{\ddot{\mb x}^*\mb x'}{\sqrt{ 1- (\mb x^* \mb x')^2 }} &= \innerprod{\ddot{\mb x}}{\frac{(\mb I - \mb x \mb x^*) \mb x'}{\norm{(\mb I - \mb x \mb x^*) \mb x'}_{2}}} + \frac{\ddot{\mb x}^*\mb x \mb x^* \mb x'}{\norm{(\mb I - \mb x \mb x^*) \mb x'}_{2}} \\
		&= \innerprod{\ddot{\mb x}}{\frac{(\mb I - \mb x \mb x^*) \mb x'}{\norm{(\mb I - \mb x \mb x^*) \mb x'}_{2}}} - \frac{\mb x^* \mb x'}{\norm{(\mb I - \mb x \mb x^*) \mb x'}_{2}} 
	\end{align*}
	using \Cref{lem:x_deriv}.
	Thus following \eqref{eq:lambda1} and \Cref{eq:sinangle_injectivityrad_lb} and \Cref{lem:cos_angle_lb,lem:lambda1,lem:x_deriv}, 
	\begin{align*}
		\abs*{\lambda_2(\mb x, \mb x')} &\le \abs*{\lambda_1(\mb x, \mb x')^2 \frac{\mb x^*\mb x'}{\sqrt{1 - (\mb x^*\mb x')^2}}} + \abs*{\frac{2\dot{\mb x}^*\dot{\mb x}' - 2 \mb x^* \mb x'}{\sqrt{1 - (\mb x^*\mb x')^2}}} \\
		&\qquad + \abs*{\innerprod{\ddot{\mb x}}{\frac{(\mb I - \mb x \mb x^*) \mb x'}{\norm{(\mb I - \mb x \mb x^*) \mb x'}_{2}}}} + \abs*{\innerprod{\ddot{\mb x}'}{\frac{(\mb I - \mb x' \mb x'^*) \mb x}{\norm{(\mb I - \mb x' \mb x'^*) \mb x}_{2}}}}  \\
		&\le (4 + 4) \paren{\frac{2}{\pi}\Delta_\veps}^{-1} + 2M_2 \\
		&= \frac{\pi}{4} \Delta_\veps^{-1} + 2M_2.
	\end{align*}

	When $d_{\manifold}(\mb x, \mb x') \le \frac{1}{\hat{\kappa}}$, we have $\sigma = \sigma'$ and $\abs{s' - s} \le \frac{1}{\hat{\kappa}}$. By symmetry, we may assume $s' \ge s$. 
	Following \Cref{lem:x_deriv}, we have $\dot{\mb x}^* \ddot{\mb x} = 0$, $\mb x^* \ddot{\mb x} = - 1$. 
	In the remainder of the proof, with an abuse of notation we will write $\dot{\vx} = \dot{\vx}_{\sigma}(s)$, $\dot{\vx}' = \dot{\vx}_{\sigma}(s')$, and so on for the higher derivatives to represent the \textit{specific} points of interest concisely.
	We can calculate by Taylor expansion and \Cref{lem:x_deriv}
	\begin{align*}
	\ddot{\mb x}^* \mb x'  &= \ddot{\mb x}^* \left(\mb x + (s' - s) \dot{\mb x} + \frac{(s' - s)^2}{2} \ddot{\mb x} + \frac{(s' - s)^3}{6} \mb x^{(3)}
	+ \int_{a = s}^{s'}\int_{b = s}^a\int_{c = s}^b\int_{d = s}^c \mb x_{\sigma}^{(4)}(d)\, dd\,dc \, db \, da\right) \\
	&= -1 + \ddot{\mb x}^*\left(\frac{(s' - s)^2}{2} \ddot{\mb x} + \frac{(s' - s)^3}{6} \mb x^{(3)}
  + \int_{a = s}^{s'}\int_{b = s}^a\int_{c = s}^b\int_{d = s}^c \mb x_{\sigma}^{(4)}(d)\, dd\,dc\,
db \, da\right), \labelthis \label{eq:lambda_2_extra_term_1}\\
	\ddot{\mb x}'^* \mb x  &= \ddot{\mb x}'^* \left(\mb x' - (s' - s) \dot{\mb x}' + \frac{(s' - s)^2}{2} \ddot{\mb x}' - \frac{(s' - s)^3}{6} \mb x'^{(3)}
	+ \int_{a = s}^{s'}\int_{b = a}^{s'}\int_{c = b}^{s'}\int_{d = c}^{s'} \mb x_{\sigma}^{(4)}(d) \, dd\,dc \, db \, da \right) \\
	&= -1 + \ddot{\mb x}^* \left( \frac{(s' - s)^2}{2} \ddot{\mb x}' - \frac{(s' - s)^3}{6} \mb
  x'^{(3)} + \int_{a = s}^{s'}\int_{b = a}^{s'}\int_{c = b}^{s'}\int_{d = c}^{s'} \mb
x_{\sigma}^{(4)}(d) \, dd\,dc \, db \, da \right), \labelthis \label{eq:lambda_2_extra_term_2} \\
	\dot{\mb x}^* \dot{\mb x}' &= \dot{\mb x}^* \left(\dot{\mb x} + (s' - s) \ddot{\mb x} + \frac{(s' - s)^2}{2} \mb x^{(3)} + \frac{(s' - s)^3}{6} \mb x^{(4)}
	+ \int_{a = s}^{s'}\int_{b  = s}^a\int_{c = s}^b\int_{d = s}^c \mb x_{\sigma}^{(5)}(d) \, dd\,dc \, db \, da \right) \\
	&= 1 + \dot{\mb x}^*\left( \frac{(s' - s)^2}{2} \mb x^{(3)} + \frac{(s' - s)^3}{6} \mb x^{(4)} +
  \int_{a = s}^{s'}\int_{b = s}^a\int_{c = s}^b\int_{d = s}^c \mb x_{\sigma}^{(5)}(d) \, dd\,dc\,
db \, da \right) \labelthis \label{eq:lambda_2_extra_term_3} \\
	&= 1 + \dot{\mb x}'^*\left( \frac{(s' - s)^2}{2} \mb x'^{(3)} - \frac{(s' - s)^3}{6} \mb
  x'^{(4)} + \int_{a = s}^{s'}\int_{b = a}^{s'}\int_{c = b}^{s'}\int_{d = c}^{s'} \mb
x_{\sigma}^{(5)}(d) \, dd\,dc \, db \, da \right). \labelthis \label{eq:lambda_2_extra_term_4}
	\end{align*}
	In addition
	\begin{align*}
		\abs{\dot{\mb x}^* \mb x^{(4)} - \dot{\mb x}'^* \mb x'^{(4)}} &= \left| - \int_{a = s}^{s'} \ddot{\mb x}(a)^* \mb x^{(4)}(a) + \dot{\mb x}(a)^* \mb x^{(5)}(a)da \right|, \\
		&\le \abs{s' - s}(M_2 M_4 + M_5)\labelthis \label{eq:lambda_2_extra_term_5}
	\end{align*}
	by Taylor expansion of the first term in the difference on the LHS at $s'$.
	From \Cref{lem:x_deriv}, $\ddot{\mb x}^* \mb x^{(3)} = -\frac{1}{3}\dot{\mb x}^* \mb x^{(4)}$, $\ddot{\mb x}^*\ddot{\mb x} = -\dot{\mb x}^* \mb x^{(3)}$. Whence adding \eqref{eq:lambda_2_extra_term_1}, \eqref{eq:lambda_2_extra_term_2}, \eqref{eq:lambda_2_extra_term_3}, \eqref{eq:lambda_2_extra_term_4} and applying \eqref{eq:lambda_2_extra_term_5} we get
	\begin{align*}
		\abs{\ddot{\mb x}^* \mb x' + \mb x^* \ddot{\mb x}' + 2 \dot{\mb x}^* \dot{\mb x}'} &\le \left| \frac{(s' - s)^2}{2}\Bigl(\ddot{\mb x}^*\ddot{\mb x}  + \dot{\mb x}^* \mb x^{(3)}\Bigr)
		+ \frac{(s' - s)^2}{2}\Bigl(\ddot{\mb x}'^*\ddot{\mb x}' + \dot{\mb x}' \mb x'^{(3)} \Bigr)\right|  \\
		&\qquad + \frac{(s' - s)^3}{6} \paren{1 - \frac{1}{3}} \abs{\dot{\mb x}^* \mb x^{(4)} - \dot{\mb x}'^* \mb x'^{(4)}}
		+ \frac{2 (s' - s)^4}{4!}(M_2M_4 + M_5) \\
		&\le \frac{(s' - s)^4}{9} \paren{M_2M_4 + M_5} + \frac{(s' - s)^4}{12} \paren{M_2M_4 + M_5} \\
		&= \frac{7(s' - s)^4}{36} \paren{M_2M_4 + M_5}. \labelthis \label{eq:lambda_2_extra_term}
	\end{align*}
	From \Cref{lem:x_deriv}, $M_2 \le M_4$. Plugging \eqref{eq:lambda_2_extra_term} and \eqref{eq:lambda_1_extra_term} into the bound \Cref{eq:lambda2} and using \Cref{lem:cos_angle_lb},
	we obtain that when $d_{\manifold}(\mb x, \mb x') \le \frac{1}{\hat{\kappa}} \le \frac{\pi}{2}$
	\begin{align*}
	\abs{\lambda_2(\mb x, \mb x')} &\le \paren{\frac{3}{\abs{s'-s}}}^3 \paren{\frac{7M_4\abs{s'-s}^4}{36}}^2 + \frac{3}{\abs{s'-s}} \frac{7\paren{M_2M_4 + M_5}}{36}\abs{s'-s}^4 \\
	&= \frac{49}{48} M_4^2 \abs{s'-s}^5 + \frac{7}{12} (M_2M_4 + M_5) \abs{s'-s}^3 \\
	&\le C(M_4^2 + M_5) d_{\manifold}^3(\mb x, \mb x') 
	\end{align*}
	for some absolute constant $C > 0$. 
\end{proof}

\begin{lemma}\label{lem:lambda3}
	There exists an absolute constant $C > 0$ such that for any $\veps \in (0, 1)$ and $\mb x, \mb x' \in \manifold$ we have
	\begin{equation*}
	\abs{\lambda_3(\mb x, \mb x')} \le  \begin{cases} C (M_4^3  + M_4M_5 + M_3M_4)d_{\manifold}(\mb x, \mb x')^3,  & d_{\manifold}(\mb x, \mb x')  \le \frac{1}{\hat{\kappa}} \\
	\frac{3\pi^2}{4}\Delta_\veps^{-2} + 9\pi M_2\Delta_\veps^{-1} + 2M_3 + 8,  &  d_{\manifold}(\mb x, \mb x')  > \frac{\sqrt{\veps}}{\hat{\kappa}} \end{cases}. 
	\end{equation*}
\end{lemma}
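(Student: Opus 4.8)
The plan is to imitate the two-regime argument used for \Cref{lem:lambda1} and \Cref{lem:lambda2}. Write $u = u(t) = \mb x_{\sigma}(s+t)^{*}\mb x_{\sigma'}(s'+t)$ and let $\dot u, \ddot u, \dddot u$ denote its simultaneous-advance derivatives at $t = 0$; concretely $\dot u = \innerprod{\dot{\mb x}}{\mb x'} + \innerprod{\mb x}{\dot{\mb x}'}$, $\ddot u = \innerprod{\ddot{\mb x}}{\mb x'} + 2\innerprod{\dot{\mb x}}{\dot{\mb x}'} + \innerprod{\mb x}{\ddot{\mb x}'}$, and $\dddot u = \innerprod{\mb x^{(3)}}{\mb x'} + 3\innerprod{\ddot{\mb x}}{\dot{\mb x}'} + 3\innerprod{\dot{\mb x}}{\ddot{\mb x}'} + \innerprod{\mb x}{\mb x'^{(3)}}$. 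Differentiating the closed form \eqref{eq:lambda2} for $\lambda_2 = \tfrac{d^2}{dt^2}\big|_0 \acos(u(t))$ once more and collecting terms gives
\begin{equation*}
\lambda_3(\mb x, \mb x') = -\frac{\dddot u}{\sqrt{1 - u^2}} - \frac{\dot u^3 + 3 u \dot u \ddot u}{(1 - u^2)^{3/2}} - \frac{3 u^2 \dot u^3}{(1 - u^2)^{5/2}}.
\end{equation*}
Exactly as in \Cref{lem:deriv-lambda}, this extends continuously to the diagonal (where $\lambda_1 = \lambda_2 = \lambda_3 = 0$) and is well-defined off the diagonal because \eqref{eq:max_angle} precludes antipodal points; five orders of smoothness of $\mb x_{\sigma}$ is precisely what makes $\dddot u$ available.

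In the far regime $d_{\manifold}(\mb x, \mb x') > \sqrt{\veps}/\hat{\kappa}$ (which includes the case $\sigma \ne \sigma'$), I would bound the denominators below via $\sqrt{1 - u^2} = \sin\angle(\mb x, \mb x') \ge \tfrac{2}{\pi}\Delta_{\veps}$ as in \eqref{eq:sinangle_injectivityrad_lb} and \Cref{lem:cos_angle_lb}, and control the numerator term by term. Each factor $\dot u$ obeys $\abs{\dot u} \le 2\sqrt{1 - u^2}$ because $\abs{\lambda_1} \le 2$ by \Cref{lem:lambda1}, so it "absorbs" half a power of $1 - u^2$: with this bookkeeping the $(1-u^2)^{-5/2}$ term becomes $O(\Delta_{\veps}^{-2})$, the middle term combines with the $\lambda_2$ bound of \Cref{lem:lambda2} to give an $O(M_2 \Delta_{\veps}^{-1})$ contribution, the pieces of $\dddot u$ carrying $\mb x^{(3)}$ are bounded by $M_3$ using the projection-onto-tangent-space trick (since $\innerprod{\mb x^{(3)}}{\mb x} = 0$ by \Cref{lem:x_deriv}, so $\innerprod{\mb x^{(3)}}{\mb x'}/\sqrt{1-u^2}$ is the inner product of $\mb x^{(3)}$ with a unit vector), the pieces of $\dddot u$ carrying $\ddot{\mb x}, \ddot{\mb x}'$ contribute $O(M_2 \Delta_{\veps}^{-1})$, and $\dot u^3 / (1 - u^2)^{3/2} = \pm\lambda_1^3$ has magnitude at most $8$. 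Worst-casing absolute constants and using $M_2 \ge 1$ (\Cref{lem:x_deriv}) to absorb residual pure $\Delta_{\veps}^{-1}$ terms should reproduce the stated bound $\tfrac{3\pi^2}{4}\Delta_{\veps}^{-2} + 9\pi M_2 \Delta_{\veps}^{-1} + 2M_3 + 8$.

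In the near regime $d_{\manifold}(\mb x, \mb x') \le 1/\hat{\kappa}$ we have $\sigma = \sigma'$; writing $t_0 = s' - s$ with $\abs{t_0} = d_{\manifold}(\mb x, \mb x') \le 1/\hat{\kappa} \le \pi/2$, I would Taylor-expand every inner product in $\dot u$, $\ddot u$ and $\dddot u$ around one endpoint and use the identities of \Cref{lem:x_deriv} to cancel the leading terms, exactly mirroring \eqref{eq:lambda_1_extra_term} and \eqref{eq:lambda_2_extra_term}. This yields $\abs{\dot u} \lesssim M_4 \abs{t_0}^4$, $\abs{\ddot u} \lesssim (M_4^2 + M_5)\abs{t_0}^4$ (using $M_2 \le M_4$ from \eqref{eq:M2_M4_compare}), and — the one genuinely new estimate — $\abs{\dddot u} \lesssim (M_3 M_4 + M_4^2 + M_4 M_5)\abs{t_0}^4$, where a fifth-order Taylor expansion is needed to secure the $\abs{t_0}^4$ cancellation, which is why $M_5$ enters. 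Combining with the denominator lower bounds $(1 - u^2)^{k/2} \ge (\abs{t_0}/3)^k$ from \Cref{lem:cos_angle_lb}: the first term of $\lambda_3$ contributes $O((M_3 M_4 + M_4^2 + M_4 M_5)\abs{t_0}^3)$, the $(1-u^2)^{-5/2}$ term contributes $O(M_4^3 \abs{t_0}^{7}) = O(M_4^3 \abs{t_0}^3)$ since $\abs{t_0} \le \pi/2$, and the middle term contributes $O((M_4^3 + M_4 M_5)\abs{t_0}^3)$, so altogether $\abs{\lambda_3(\mb x, \mb x')} \le C(M_4^3 + M_4 M_5 + M_3 M_4)\, d_{\manifold}(\mb x, \mb x')^3$.

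The main obstacle is the near-regime estimate for $\dddot u$: one must Taylor-expand the degree-three combination $\innerprod{\mb x^{(3)}}{\mb x'} + 3\innerprod{\ddot{\mb x}}{\dot{\mb x}'} + 3\innerprod{\dot{\mb x}}{\ddot{\mb x}'} + \innerprod{\mb x}{\mb x'^{(3)}}$ to order five, track which sums of inner-product terms vanish to leading, next-to-leading and next-to-next-to-leading order under the constraints $\innerprod{\mb x}{\dot{\mb x}} = 0$, $\innerprod{\mb x}{\ddot{\mb x}} = -1$, $\innerprod{\mb x}{\mb x^{(3)}} = 0$, $\innerprod{\dot{\mb x}}{\mb x^{(3)}} = -\norm{\ddot{\mb x}}_2^2$, and $\innerprod{\ddot{\mb x}}{\mb x^{(3)}} = -\tfrac13\innerprod{\dot{\mb x}}{\mb x^{(4)}}$ of \Cref{lem:x_deriv}, and verify that no residual of order lower than $\abs{t_0}^4$ survives. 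The far-regime bound is comparatively routine once the closed form for $\lambda_3$ is in hand, as each quantity there is controlled by Cauchy-Schwarz, the previous two lemmas, and the injectivity-radius lower bound on $\sin\angle$.
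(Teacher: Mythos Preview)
Your proposal is correct and follows the paper's approach closely: same explicit formula for $\lambda_3$, same far-regime reduction to $|\lambda_1|$, $|\lambda_2|$ plus the projection trick for the $\mb x^{(3)}$ terms, and same near-regime Taylor-expansion strategy. One point to be careful with: expanding the primed quantities in $\dddot u$ literally ``around one endpoint'' would force derivatives beyond the fifth (e.g.\ pushing $\mb x'^{(3)}$ to fourth order touches $\mb x^{(7)}$). The paper---and the precedents \eqref{eq:lambda_1_extra_term}, \eqref{eq:lambda_2_extra_term} you cite---instead expand each inner product \emph{symmetrically}, about whichever endpoint keeps the highest derivative at five, and then use a short additional cancellation argument (the paper's \eqref{eq:lambda_3_extra_term_1}--\eqref{eq:lambda_3_extra_term_2}) to show that the residual $O(|t_0|^2)$ and $O(|t_0|^3)$ pieces from the symmetric expansion in fact combine to $O(|t_0|^4)$.
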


\begin{proof}

Let $s, s'$ be such that $\vx = \mb x_{\sigma}(s)$ and $\vx'= \mb x_{\sigma'}(s')$, with $\abs{s - s'} = d_{\manifold}(\mb x, \mb x')$ when in addition $\sigma = \sigma'$.
	Then from \eqref{eq:lambda2}, 
	\begin{equation}
	\begin{split}
	\lambda_3 (\mb x, \mb x') &= \frac{d}{dt} \Bigr|_0 \lambda^2(\mb x_{\sigma}(s + t), \mb x_{\sigma'}(s' + t)) \\
	&= - \frac{\left( \dot{\mb x}^* \mb x' + \dot{\mb x}'^* \mb x \right)^3}{\left( 1 - ( \mb x^* \mb x' )^2 \right)^{3/2}} \;-\; 3 \frac{\left( \dot{\mb x}^* \mb x' + \dot{\mb x}'^* \mb x \right)^3\left(\mb x^* \mb x'\right)^2}{\left( 1 - ( \mb x^* \mb x' )^2 \right)^{5/2}} \\
	&\hphantom{=}- 3 \frac{\left( \dot{\mb x}^* \mb x' + \dot{\mb x}'^* \mb x \right)\mb x^* \mb x'\left(2 \dot{\mb x}^* \dot{\mb x}' + \ddot{\mb x}^* \mb x' + \ddot{\mb x}'^* \mb x \right)}{\left( 1 - ( \mb x^* \mb x' )^2 \right)^{3/2}} \\
	&\hphantom{=}-  \frac{ 3 \dot{\mb x}^* \ddot{\mb x}' + 3 \ddot{\mb x}^*\dot{\mb x}' + \mb x^{(3)*} \mb x' + \mb x'^{(3)*} \mb x }{\sqrt{ 1- (\mb x^* \mb x')^2 }}. 
	\end{split}
	\label{eq:lambda3_expr}
	\end{equation}
	When $d_{\manifold}(\mb x, \mb x')> \frac{\sqrt{\veps}}{\hat{\kappa}}$, as $\sqrt{1 - (\mb x^*\mb x')^2} = \norm{(\mb I - \mb x \mb x^*) \mb x'}_{2}$ and from \Cref{lem:x_deriv} $\mb x^{(3)*}\mb x = 0$, 
	\begin{equation*}
	\frac{\mb x^{(3)*}\mb x'}{\sqrt{ 1- (\mb x^* \mb x')^2 }} = \innerprod{\mb x^{(3)}}{\frac{(\mb I - \mb x \mb x^*) \mb x'}{\norm{(\mb I - \mb x \mb x^*) \mb x'}_{2}}}. 
	\end{equation*}
	Thus from \eqref{eq:lambda1}, \eqref{eq:lambda2}, \Cref{eq:sinangle_injectivityrad_lb}, \Cref{lem:cos_angle_lb}, \Cref{lem:lambda1} and \Cref{lem:lambda2}, 
	\begin{align*}
		\abs*{\lambda_3(\mb x, \mb x')} &\le \abs*{\lambda_1(\mb x, \mb x')^3} + \abs*{3 \lambda_1(\mb x, \mb x')\lambda_2(\mb x, \mb x')\frac{\mb x^*\mb x'}{\sqrt{ 1- (\mb x^* \mb x')^2 }}} + \abs*{\frac{ 3 \dot{\mb x}^* \ddot{\mb x}'+ 3 \ddot{\mb x}^*\dot{\mb x}'}{\sqrt{ 1- (\mb x^* \mb x')^2 }}} \labelthis \label{eq:lambda3_term_reuse}  \\
		& \quad+ \abs*{\innerprod{\mb x^{(3)}}{\frac{(\mb I - \mb x \mb x^*) \mb x'}{\norm{(\mb I - \mb x \mb x^*) \mb x'}_{2}}}} + \abs*{\innerprod{\mb x'^{(3)}}{\frac{(\mb I - \mb x' \mb x'^*) \mb x'}{\norm{(\mb I - \mb x' \mb x'^*) \mb x}_{2}}}} \\
	&\le 8 + 6\left(\frac{\pi}{4}\Delta_\veps^{-1} + 2M_2\right)\paren{\frac{2}{\pi}\Delta_\veps}^{-1} + 6M_2\paren{\frac{2}{\pi}\Delta_\veps}^{-1} + 2M_3 \\
	&\le \frac{3\pi^2}{4}\Delta_\veps^{-2} + 9\pi M_2\Delta_\veps^{-1} + 2M_3 + 8. 
	\end{align*}
	
	When $d_{\manifold}(\mb x, \mb x') \le \frac{1}{\hat{\kappa}}$, we have $\sigma = \sigma'$ and $\abs{s' - s} \le \frac{1}{\hat{\kappa}}$. By symmetry, we may assume $s' \ge s$. 
	Following \Cref{lem:x_deriv}, $\dot{\mb x}^* \ddot{\mb x} = 0$ and $\ddot{\mb x}^*\ddot{\mb x} = -\dot{\mb x}^* \mb x^{(3)}$. 
	In the remainder of the proof, with an abuse of notation we will write $\dot{\vx} = \dot{\vx}_{\sigma}(s)$, $\dot{\vx}' = \dot{\vx}_{\sigma}(s')$, and so on for the higher derivatives to represent the \textit{specific} points of interest concisely.
	Because we can reuse bounds for lower-order $\lambda_i$ terms to bound the first three terms in \Cref{eq:lambda3_expr}, we will focus on controlling the last term.
	We can calculate by Taylor expansion
	\begin{align*}
	3 \dot{\mb x}'^* \ddot{\mb x} &+ 3 \ddot{\mb x}'^*\dot{\mb x} + \mb x'^{(3)*} \mb x + \mb x^{(3)*} \mb x' \\
	&= \quad 3 \ddot{\mb x}^* \left( \dot{\mb x} + (s' - s) \ddot{\mb x} + \frac{(s' - s)^2}{2}\mb x^{(3)} + \frac{(s' - s)^3}{6}\mb x^{(4)}  + \int_{a = s}^{s'} \int_{b = s}^a \int_{c = s}^b \int_{d = s}^c  \mb x_{\sigma}^{(5)}(d) \, dd \, dc \, db \, da \right) \\
	&\quad + 3 \ddot{\mb x}'^* \left( \dot{\mb x}' - (s' - s) \ddot{\mb x}' + \frac{(s' - s)^2}{2}\mb x'^{(3)} - \frac{(s' - s)^3}{6}\mb x'^{(4)} + \int_{a = s}^{s'} \int_{b = a}^{s'} \int_{c = b}^{s'} \int_{d = c}^{s'}  \mb x_{\sigma}^{(5)}(d) \, dd \, dc \, db \, da \right) \\
	&\quad + \mb x^{(3)*} \left( \mb x + (s' - s) \dot{\mb x} + \frac{(s' - s)^2}{2}\ddot{\mb x} + \frac{(s' - s)^3}{6}\mb x^{(3)}  + \int_{a = s}^{s'} \int_{b = s}^a \int_{c = s}^b \int_{d = s}^c  \mb x_{\sigma}^{(4)}(d) \, dd \, dc \, db \, da \right)\\
	&\quad + \mb x'^{(3)*} \left( \mb x' - (s' - s) \dot{\mb x}' + \frac{(s' - s)^2}{2}\ddot{\mb x}' - \frac{(s' - s)^3}{6}\mb x'^{(3)}  + \int_{a = s}^{s'} \int_{b = a}^{s'} \int_{c = b}^{s'} \int_{d = c}^{s'}  \mb x_{\sigma}^{(4)}(d) \, dd \, dc \, db \, da \right)\\
	&= 2(s' - s)\paren{\ddot{\mb x}^*\ddot{\mb x} - \ddot{\mb x}'^*\ddot{\mb x}'} + 2(s' - s)^2 \paren{\ddot{\mb x}^*\mb x^{(3)} + \ddot{\mb x}'^*\mb x'^{(3)}} \\
	&\quad + \frac{(s' - s)^3}{6} \paren{3 \ddot{\mb x}^*\mb x^{(4)} - 3\ddot{\mb x}'^*\mb x'^{(4)} + \mb x^{(3)*}\mb x^{(3)} - \mb x'^{(3)*}\mb x'^{(3)}} \\
	&\quad + 3 \ddot{\mb x}^* \int_{a = s}^{s'} \int_{b = s}^a \int_{c = s}^b \int_{d = s}^c  \mb x_{\sigma}^{(5)}(d) \, dd \, dc \, db \, da 
	+ 3 \ddot{\mb x}'^* \int_{a = s}^{s'} \int_{b = a}^{s'} \int_{c = b}^{s'} \int_{d = c}^{s'}  \mb x_{\sigma}^{(5)}(d) \, dd \, dc \, db \, da \\
	&\quad + \mb x^{(3)*}  \int_{a = s}^{s'} \int_{b = s}^a \int_{c = s}^b \int_{d = s}^c  \mb x_{\sigma}^{(4)}(d) \, dd \, dc \, db \, da
	+ \mb x'^{(3)*} \int_{a = s}^{s'} \int_{b = a}^{s'} \int_{c = b}^{s'} \int_{d = c}^{s'}  \mb x_{\sigma}^{(4)}(d) \, dd \, dc \, db \, da.  \labelthis \label{eq:lambda_3_extra_term_lhs} 
	\end{align*}
	We expand the first term by successive Taylor expansion
	\begin{align*}
	\Biggl| \paren{\ddot{\mb x}^*\ddot{\mb x} - \ddot{\mb x}'^*\ddot{\mb x}'} & +  (s' - s)\paren{\ddot{\mb x}^*\mb x^{(3)} + \ddot{\mb x}'^*\mb x'^{(3)}} \Biggr| \\
	&= \left|- \int_{a = s}^{s'}2\ddot{\mb x}_{\sigma}(a)^*\mb x_{\sigma}^{(3)}(a) da + (s' - s)\paren{\ddot{\mb x}^*\mb x^{(3)} + \ddot{\mb x}'^*\mb x'^{(3)}}\right| \\
	&= \left|\int_{a = s}^{s'} \left[\paren{\ddot{\mb x}'^*\mb x'^{(3)} -\ddot{\mb x}_{\sigma}(a)^*\mb x_{\sigma}^{(3)}(a) } - \paren{\ddot{\mb x}_{\sigma}(a)^*\mb x_{\sigma}^{(3)}(a) -\ddot{\mb x}^*\mb x^{(3)}} \right]da\right| \\
	&= \left| \int_{a = s}^{s'}\int_{b = a}^{s'} \left(\ddot{\mb x}_{\sigma}(b)^*\mb x_{\sigma}^{(4)}(b) + \mb x_{\sigma}^{(3)}(b)^*\mb x_{\sigma}^{(3)}(b)\right) \, db \, da  \right. \\& \qquad \left. - \int_{a = s}^{s'}\int_{b = 
	s}^{a}\left(\ddot{\mb x}_{\sigma}(b)^*\mb x_{\sigma}^{(4)}(b) + \mb x_{\sigma}^{(3)}(b)^*\mb x_{\sigma}^{(3)}(b)\right) \, db \, da \right|.  \\
	&= \left| \int_{b = s}^{s'}\int_{a = s}^{b} \paren{\ddot{\mb x}_{\sigma}(b)^*\mb x_{\sigma}^{(4)}(b) + \mb x_{\sigma}^{(3)}(b)^*\mb x_{\sigma}^{(3)}(b)} \, da \, db  \right.  \\& \qquad \left. - \int_{b = s}^{s'}\int_{a = 
	b}^{s'}\paren{\ddot{\mb x}_{\sigma}(b)^*\mb x_{\sigma}^{(4)}(b) + \mb x_{\sigma}^{(3)}(b)^*\mb x_{\sigma}^{(3)}(b)} \, da \, db \right|  \\
	&=  \left| \int_{b = s}^{s'}\paren{(b - s) - (s' - b)} \paren{ \ddot{\mb x}_{\sigma}(b)^*\mb x_{\sigma}^{(4)}(b) + \mb x_{\sigma}^{(3)}(b)^*\mb x_{\sigma}^{(3)}(b)} \, db  \right|.
	\end{align*}
	Above, in the fourth equality we rewrite the preceding integrals by switching the limits of integration; the fifth equality then just integrates over $a$.
	As $2b - s - s'$ stays positive when $b > (s + s')/2$ and negative otherwise, we divide the integral into two parts, change variables using $b' = s + s' - b$ and get 
	\begin{align*}
	\Biggl| \paren{\ddot{\mb x}^*\ddot{\mb x} - \ddot{\mb x}'^*\ddot{\mb x}'} & +  (s' - s)\paren{\ddot{\mb x}^*\mb x^{(3)} + \ddot{\mb x}'^*\mb x'^{(3)}} \Biggr| \\
	&= \left| \int_{b = (s + s')/2}^{s'}\paren{2b - s - s'} \paren{ \ddot{\mb x}_{\sigma}(b)^*\mb x_{\sigma}^{(4)}(b) + \mb x_{\sigma}^{(3)}(b)^*\mb x_{\sigma}^{(3)}(b)} \, db  \right.  \\
	&\qquad \left. - \int_{b = s}^{(s + s')/2}\paren{s + s' - 2b} \paren{ \ddot{\mb x}_{\sigma}(b)^*\mb x_{\sigma}^{(4)}(b) + \mb x_{\sigma}^{(3)}(b)^*\mb x_{\sigma}^{(3)}(b)} \, db \right| \\
	&= \left| \int_{b' = s}^{(s + s')/2}\paren{s + s' - 2b'} \Bigl( \ddot{\mb x}_{\sigma}(s + s' - b')^*\mb x_{\sigma}^{(4)}(s + s' - b') \right.\\
	&\qquad \qquad \left. + \mb x_{\sigma}^{(3)}(s + s' - b')^*\mb x_{\sigma}^{(3)}(s + s' - b')\Bigr) \, db'  \right.  \\
	&\qquad \left. - \int_{b = s}^{(s + s')/2}\paren{s + s' - 2b} \paren{ \ddot{\mb x}_{\sigma}(b)^*\mb x_{\sigma}^{(4)}(b) + \mb x_{\sigma}^{(3)}(b)^*\mb x_{\sigma}^{(3)}(b)} \, db \right| \\
	&= \left| \int_{b = s}^{(s + s')/2}\paren{s + s' - 2b} \int_{c = b}^{s + s' - b} \paren{ \ddot{\mb x}_{\sigma}(c)^*\mb x_{\sigma}^{(5)}(c) + 3 \mb x_{\sigma}^{(3)}(c)^*\mb x_{\sigma}^{(4)}(c)} \, dc \, db  \right| \\
	&\le \left| \int_{b = s}^{(s + s') / 2}\paren{s + s' - 2b}^2 \paren{M_2 M_5 + 3 M_3M_4} \, db \, da \right| \\
	&= -\paren{M_2 M_5 + 3 M_3M_4} \frac{(s + s' - 2b)^3}{6} \Big|_{b = s}^{(s + s')/2}\\
	&= \frac{(s' - s)^3}{6}\paren{M_2 M_5 + 3 M_3M_4} \labelthis \label{eq:lambda_3_extra_term_1}
	\end{align*}
	We use Taylor expansion again for the second term of \Cref{eq:lambda_3_extra_term_lhs}
	\begin{equation} 
	3 \ddot{\mb x}^*\mb x^{(4)} - 3\ddot{\mb x}'^*\mb x'^{(4)} + \mb x^{(3)*}\mb x^{(3)} - \mb x'^{(3)*}\mb x'^{(3)} = -\int_{a = s}^{s'} \left[ 3 \ddot{\mb x}_{\sigma}(a)^*\mb x_{\sigma}^{(5)}(a) + 5 \mb x_{\sigma}^{(3)}(a)\mb x_{\sigma}^{(4)}(a) \right] da.  \label{eq:lambda_3_extra_term_2}
	\end{equation}
	Plug \Cref{eq:lambda_3_extra_term_1,eq:lambda_3_extra_term_2} back to \Cref{eq:lambda_3_extra_term_lhs} and we conclude that 
	\begin{align*}
	&\abs{3 \dot{\mb x}'^* \ddot{\mb x} + 3 \ddot{\mb x}'^*\dot{\mb x} + \mb x'^{(3)*} \mb x + \mb x^{(3)*} \mb x'} \\
	&\qquad \qquad \le \frac{(s' - s)^4}{3} \paren{M_2M_5 + 3M_3M_4} + \frac{(s' - s)^4}{6} \paren{3M_2M_5 + 5M_3M_4}  \\ &\qquad  \qquad  \qquad + \frac{(s' - s)^4}{4!}\paren{6M_2M_5 + 2M_3M_4} \\
	&\qquad \qquad = \frac{(s' - s)^4}{12}\paren{13M_2M_5 + 23M_3M_4} \labelthis \label{eq:lambda_3_extra_term}. 
	\end{align*}
	As from \Cref{lem:x_deriv} $M_2 \le M_4$, when $\abs{s' - s} \le \frac{1}{\hat{\kappa}} \le \frac{\pi}{2}$, plugging \eqref{eq:lambda_1_extra_term}, \eqref{eq:lambda_2_extra_term}, \eqref{eq:lambda_3_extra_term}, and \Cref{lem:cos_angle_lb} into \Cref{eq:lambda3_expr}, we have
	\begin{align*}
	\lambda_3(\mb x, \mb x') &\le \paren{\frac{\abs{s' - s}}{3}}^{-3} \paren{\frac{7}{36}M_4 \abs{s'-s}^4}^3 + 3 \paren{\frac{\abs{s' - s}}{3}}^{-5} \paren{\frac{7}{36}M_4 \abs{s'-s}^4}^3 \\
	& + 3 \paren{\frac{\abs{s' - s}}{3}}^{-3}\paren{\frac{7}{36}M_4 \abs{s'-s}^4} \paren{\frac{7\abs{s'-s}^4}{36}(M_2M_4 + M_5)} \\
	&+ \paren{\frac{\abs{s' - s}}{3}}^{-1} \frac{\abs{s' - s}^4}{12}(13M_2M_5 + 23M_3M_4) \\
	&= (\frac{343}{1728}\abs{s' - s}^9 + \frac{343}{64}\abs{s' - s}^7 )M_4^3 + \frac{49}{16}M_4 (M_2M_4 + M_5)\abs{s' - s}^5 \\
	& + \frac{1}{4}\abs{s' - s}^3(13M_2M_5 + 23M_3M_4) \\
	&\ltsim (M_4^3 + M_4M_5 + M_3M_4)d_{\manifold}^3(\mb x, \mb x'),
	\end{align*}
	where the last line uses the fact that we can adjust constants to keep only the lowest-order term involving the distance, given that the distance is bounded.
\end{proof}

\begin{lemma}\label{lem:R1}  
For $\veps \in (0, \frac{3}{4})$, there exist positive constants $C, C_{1}$ such that when $L \ge C$, we have
	\begin{equation*}
	\norm{\mb \Theta^{(1)}}_{L^2 \to L^2} \le P_1 ( M_4, \len(\manifold), \Delta_\veps^{-1} ) \, n,
	\end{equation*} 
 where $P_1( M_4, \len(\manifold), \Delta_{\veps}^{-1} ) = C_{1}\paren{M_4 + \len(\manifold)\Delta_\veps^{-2}}$ is a polynomial in $M_4, \len(\manifold)$ and $\Delta_\veps^{-1}$.
\end{lemma}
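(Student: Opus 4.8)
The plan is to control $\norm{\mb \Theta^{(1)}}_{L^2 \to L^2}$ via Young's inequality for Fredholm operators (\Cref{lem:young_ineq}) and then estimate the resulting kernel integral by splitting the manifold into a local piece, where curvature controls $\lambda_1$, and a complementary piece, where the injectivity radius controls the angle.

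First I would invoke \Cref{lem:deriv-lambda} to write $\Theta^{(1)}(\mb x, \mb x') = \dot{\psi}^\circ(\angle(\mb x, \mb x'))\,\lambda_1(\mb x, \mb x')$, which (for $L$ larger than an absolute constant, so that $\psi^\circ$ is $C^1$ on $[0,\pi]$ by \Cref{lem:phi_concave}, and using that \eqref{eq:max_angle} precludes antipodal points) is a bounded, symmetric kernel on $\manifold \times \manifold$; symmetry of $\lambda_1$ is read off from \eqref{eq:lambda1}. \Cref{lem:young_ineq} then yields $\norm{\mb \Theta^{(1)}}_{L^2 \to L^2} \le \sup_{\mb x \in \manifold} \int_{\manifold} \abs{\Theta^{(1)}(\mb x, \mb x')}\,d\mb x'$, so it suffices to bound this integral uniformly over $\mb x = \mb x_\sigma(s)$.

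Next I split the domain at the scale $\sqrt{\veps}/\hat{\kappa}$. On the local piece $\{\mb x' : d_{\manifold}(\mb x, \mb x') \le \sqrt{\veps}/\hat{\kappa}\}$ one has $\sigma' = \sigma$, and parameterizing by arc length (the local scale being no larger than the curve lengths by \eqref{eq:kappa_manifold_len_compare}, so that $d_{\manifold}$ agrees with $\abs{s - s'}$ there, exactly as in \Cref{lem:near_ub}) I would use $\abs{\lambda_1(\mb x, \mb x')} \le \tfrac{7}{12}\abs{s-s'}^3 M_4$ from \Cref{lem:lambda1}, the near-isometry $\angle(\mb x, \mb x') \ge (1-\veps)\abs{s-s'}$ from \Cref{lem:angle_inverse}, and the fact that $\abs{\dot{\psi}^\circ}$ is decreasing (convexity of $\psi^\circ$, \Cref{lem:phi_concave}) to bound the local contribution by $\ltsim M_4 (1-\veps)^{-4}\int_0^{\sqrt{\veps}/\hat{\kappa}} u^3\,\abs{\dot{\psi}^\circ(u)}\,du$; the skeleton derivative estimate (\Cref{lem:skel_dot_ub}) then caps this at $\ltsim M_4 n$, the integral saturating at $O(n)$ since the range is $O(1/\hat{\kappa}) = O(1)$. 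On the complementary piece $\{\mb x' : d_{\manifold}(\mb x, \mb x') > \sqrt{\veps}/\hat{\kappa}\}$ — which also contains all of the opposite component — the definition of the injectivity radius \eqref{eq:inj_radius_eps_def}, together with the fact that $\Delta_{\veps}$ lower-bounds the inter-component separation, gives $\angle(\mb x, \mb x') \ge \Delta_{\veps}$; combining $\abs{\lambda_1} \le 2$ with monotonicity of $\abs{\dot{\psi}^\circ}$ yields $\abs{\Theta^{(1)}(\mb x, \mb x')} \le 2\abs{\dot{\psi}^\circ(\Delta_{\veps})}$, hence a contribution at most $2\len(\manifold)\,\abs{\dot{\psi}^\circ(\Delta_{\veps})}$, and the pointwise decay $\abs{\dot{\psi}^\circ(t)} \ltsim n/t^2$ (the derivative analogue of the $\psi^\circ(t) \ltsim n/t$ bound used in \Cref{lem:far_ub}, again from the skeleton estimates) turns this into $\ltsim \len(\manifold)\,\Delta_{\veps}^{-2}\,n$. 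Summing the two pieces gives $\sup_{\mb x}\int_{\manifold}\abs{\Theta^{(1)}} \le C_1\big(M_4 + \len(\manifold)\Delta_{\veps}^{-2}\big)n$, which is the claim with $P_1(M_4, \len(\manifold), \Delta_{\veps}^{-1}) = C_1\big(M_4 + \len(\manifold)\Delta_{\veps}^{-2}\big)$ (any $\veps$-dependent factors such as $(1-\veps)^{-4} \le 256$ being absorbed into the absolute constant since $\veps < \tfrac34$).

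The main obstacle is the local integral: it hinges on the \emph{sharp} near-origin behavior of $\dot{\psi}^\circ$, namely that the $\abs{s-s'}^3$ weight coming from \Cref{lem:lambda1} exactly tames its growth so that $\int_0^{O(1/\hat{\kappa})} u^3\abs{\dot{\psi}^\circ(u)}\,du = O(n)$ with no residual dependence on $L$ or $\hat{\kappa}$. This requires using \Cref{lem:skel_dot_ub} in a form valid over the full range $[0, O(1/\hat{\kappa})]$ rather than only the narrow window $[0, (1-\veps)r_\veps]$ appearing in \Cref{lem:local_diff_ub}, together with the attendant arc-length bookkeeping at the local scale; the far piece, by contrast, is essentially immediate once the $1/t^2$ pointwise bound on $\abs{\dot{\psi}^\circ}$ is in hand.
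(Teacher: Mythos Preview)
Your proposal is correct and follows essentially the same route as the paper: apply \Cref{lem:young_ineq}, split at the scale $\sqrt{\veps}/\hat{\kappa}$, and use \Cref{lem:lambda1} with \Cref{lem:skel_dot_ub} on the near piece and the injectivity radius with the pointwise $n/t^2$ decay on the far piece. The only cosmetic difference is that you invoke the \emph{integrated} form $\int_0^r u^3\abs{\dot{\psi}(u)}\,du \le C'nr^2$ from \Cref{lem:skel_dot_ub} on the local piece, whereas the paper applies the pointwise bound $\abs{\dot{\psi}(t)} \le C_1 n/t^2$ there and integrates $\abs{s-s'}$ directly; both are stated in that lemma and give the same $O(M_4 n)$ local contribution, so the ``obstacle'' you flag is not in fact an issue.
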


\begin{proof} 
	From \Cref{lem:young_ineq} and \Cref{lem:deriv-lambda} we have 
	\begin{align*}
	\norm{\mb \Theta^{(1)}}_{L^2 \to L^2} &\le \sup_{\mb x \in \manifold} \int_{\mb x' \in \manifold} \abs{\Theta^{(1)}(\mb x, \mb x')} d\mb x' \\
	&= \sup_{\mb x \in \manifold} \int_{\mb x' \in \manifold}  \abs{\dot{\psi}(\angle(\mb x, \mb x'))\lambda_1(\mb x, \mb x')} d \mb x' \\
	&\le \sup_{\mb x \in \manifold} \int_{\mb x' \in \manifold}  \abs{\dot{\psi}(\angle(\mb x, \mb x'))}\abs{\lambda_1(\mb x, \mb x')} d \mb x'
	\end{align*}
	
	\Cref{lem:skel_dot_ub,lem:lambda1} provide us the control for $\dot{\psi}$ and $\lambda_1$. From \Cref{lem:skel_dot_ub}, there exist constants $C, C_1$, such that when $L > C$, we have
	\begin{align*}
	\max_{t \ge r} \abs{\dot{\psi}(t)} \le \frac{C_1 n}{r^2}. 
	\end{align*}
	From \Cref{lem:lambda1}, we have 
	\begin{equation*}
	\abs{\lambda_1(\mb x, \mb x')} \le \begin{cases} \frac{7 d_{\manifold}(\mb x, \mb x')^3}{12}  M_4 & d_{\manifold}(\mb x, \mb x') \le \frac{1}{\hat{\kappa}}\\
	2 & \forall \mb x, \mb x' \in \manifold. \end{cases} 
	\end{equation*}
	
	In order to get a lower bound for the angle $\angle(\mb x, \mb x')$, for a fixed point $\mb x \in \manifold$, we decompose the integral into nearby piece $\mc N(\mb x) = \set{\mb x' \in \manifold \, \vert \, d_{\manifold}(\mb x, \mb x') \le \frac{\sqrt{\veps}}{\hat{\kappa}}}$ and faraway piece $\mc F(\mb x) = \set{\mb x' \in \manifold \, \vert \, d_{\manifold}(\mb x, \mb x') > \frac{\sqrt{\veps}}{\hat{\kappa}}}$ and have
	\begin{align*}
	    \norm{\mb \Theta^{(1)}}_{L^2 \to L^2} &\le \sup_{\mb x \in \manifold} \paren{ \int_{\mb x' \in \mc N(\mb x)}  \abs{\dot{\psi}(\angle(\mb x, \mb x'))}\abs{\lambda_1(\mb x, \mb x')} d \mb x' + \int_{\mb x' \in \mc F(\mb x)}  \abs{\dot{\psi}(\angle(\mb x, \mb x'))}\abs{\lambda_1(\mb x, \mb x')} d \mb x'}.\labelthis \label{eq:R1_integral_bound}
	\end{align*}
	Then for any point $\mb x'$ in faraway piece $\mc F(\mb x)$, $d_{\manifold}(\mb x, \mb x') > \frac{\sqrt{\veps}}{\hat{\kappa}}$, we have $\angle(\mb x, \mb x') \ge \Delta_\veps$ from \Cref{eq:inj_radius_eps_def} with $\Delta_\veps \le \frac{\sqrt{\veps}}{\hat{\kappa}} \le \frac{\pi}{2}$. From \Cref{lem:skel_dot_ub,lem:lambda1} we get
	\begin{align*}
	\int_{\mb x' \in \mc F(\mb x)}  \abs{\dot{\psi}(\angle(\mb x, \mb x'))}\abs{\lambda_1(\mb x, \mb x')} d \mb x' &\le \int_{\mb x' \in \mc F(\mb x)} \frac{C_1 n }{\Delta_{\veps}^2}\abs{\lambda_1(\mb x, \mb x')} d \mb x'  \\
	&\le \len(\manifold) \paren{2 \frac{C_1 n }{\Delta_{\veps}^2}} \\
	&\le  C' \Delta_\veps^{-2} \len(\manifold) n \labelthis \label{eq:theta1_far}
	\end{align*}
	for some constant $C'$. 
	
	For the integral over nearby piece, let $s, \sigma$ be such that $\mb x = \mb x_{\sigma}(s)$. Follow \Cref{lem:angle_inverse} we have \linebreak$\angle(\mb x_{\sigma}(s), \mb x_{\sigma}(s')) \ge (1 - \veps) \abs{s - s'}$ when $\abs{s - s'} \le \frac{\sqrt{\veps}}{\hat{\kappa}}$. As $d(\mb x, \mb x_{\sigma}(s')) \le \abs{s - s'}$, from \Cref{lem:skel_dot_ub,lem:lambda1} we get
	\begin{align*}
	\int_{\mb x' \in \mc N(\mb x)}  \abs{\dot{\psi}(\angle(\mb x, \mb x'))}\abs{\lambda_1(\mb x, \mb x')} d \mb x' &\le \int_{s' = s - \frac{\sqrt{\veps}}{\hat{\kappa}}}^{s - \frac{\sqrt{\veps}}{\hat{\kappa}}}\abs{\dot{\psi}(\angle(\mb x, \mb x_{\sigma}(s')))}\abs{\lambda_1(\mb x, \mb x_{\sigma}(s'))} ds' \\
	&\le \int_{s' = s - \frac{\sqrt{\veps}}{\hat{\kappa}}}^{s - \frac{\sqrt{\veps}}{\hat{\kappa}}} \frac{C_1 n }{(1 - \veps)^2\abs{s' - s}^2}\abs{\lambda_1(\mb x, \mb x_{\sigma}(s'))} d s' \\
	&\le \int_{s' = s - \frac{\sqrt{\veps}}{\hat{\kappa}}}^{s - \frac{\sqrt{\veps}}{\hat{\kappa}}} \frac{C_1 n }{(1 - \veps)^2\abs{s' - s}^2}\abs*{\frac{7 \abs{s' - s}^3}{12}  M_4 } d s'  \\
	&\le C''M_4 n \int_{s' = s - \frac{\sqrt{\veps}}{\hat{\kappa}}}^{s - \frac{\sqrt{\veps}}{\hat{\kappa}}} \abs{s' - s}ds'\\
	&\le C''M_4 n \paren{\frac{\sqrt{\veps}}{\hat{\kappa}}}^2 \\
	&\le  C'' M_4 n  \labelthis \label{eq:theta1_near}
	\end{align*}
	for some constant $C''$, where the fourth line comes from $\veps < \frac{3}{4}$ and the last line comes from $\hat{\kappa} \ge \frac{2}{\pi}$ from definition. Plugging \Cref{eq:theta1_far,eq:theta1_near} back in \Cref{eq:R1_integral_bound} proves the claim. 
\end{proof}

\begin{lemma}\label{lem:R2} 
For $\veps \in (0, \frac{3}{4})$, there exist positive constants $C, C_{2}$ such that when $L \ge C$, we have 
	\begin{equation*}
	\norm{\mb \Theta^{(2)}}_{L^2 \to L^2}  \le P_2(M_2, M_4, M_5, \len(\manifold), \Delta_{\veps}^{-1} ) n
	\end{equation*} 
	where $$P_2(M_2, M_4, M_5, \len(\manifold), \Delta_{\veps}^{-1} ) = C_{2}\paren{M_4^2 + M_5 + \len(\manifold)\paren{\Delta_\veps^{-3} + M_2\Delta_\veps^{-2}}}$$ is a polynomial in $M_2, M_4, M_5, \len(\manifold)$ and $\Delta_\veps^{-1}$.
\end{lemma}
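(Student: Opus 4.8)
The plan is to mirror the proof of \Cref{lem:R1}, now carried out for the second-order kernel $\Theta^{(2)}$. By \Cref{lem:young_ineq} it suffices to bound $\sup_{\mb x\in\manifold}\int_{\mb x'\in\manifold}\abs{\Theta^{(2)}(\mb x,\mb x')}\,d\mb x'$, and by \Cref{lem:deriv-lambda} the integrand decomposes as
\begin{equation*}
\abs{\Theta^{(2)}(\mb x,\mb x')}\le \abs{\ddot\psi(\angle(\mb x,\mb x'))}\,\lambda_1(\mb x,\mb x')^2+\abs{\dot\psi(\angle(\mb x,\mb x'))}\,\abs{\lambda_2(\mb x,\mb x')}.
\end{equation*}
First I would fix $\mb x=\mb x_\sigma(s)$ and split $\manifold$ into the nearby piece $\mc N(\mb x)=\set{\mb x'\mid d_{\manifold}(\mb x,\mb x')\le \sqrt{\veps}/\hat{\kappa}}$ and the faraway piece $\mc F(\mb x)=\set{\mb x'\mid d_{\manifold}(\mb x,\mb x')> \sqrt{\veps}/\hat{\kappa}}$, exactly as in the proof of \Cref{lem:R1}, and estimate the two integrals separately against the two terms above.

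On $\mc F(\mb x)$ one has $\angle(\mb x,\mb x')\ge\Delta_\veps$ with $\Delta_\veps\le\pi/2$ by \Cref{eq:inj_radius_eps_def}; using $\abs{\lambda_1}\le 2$ (\Cref{lem:lambda1}) and $\abs{\lambda_2}\le \tfrac{\pi}{4}\Delta_\veps^{-1}+2M_2$ (\Cref{lem:lambda2}), together with the decay estimates $\max_{t\ge\Delta_\veps}\abs{\ddot\psi(t)}\lesssim n\Delta_\veps^{-3}$ and $\max_{t\ge\Delta_\veps}\abs{\dot\psi(t)}\lesssim n\Delta_\veps^{-2}$ (the second-derivative analogue \Cref{lem:skel_ddot_ub} of \Cref{lem:skel_dot_ub}, valid once $L$ exceeds an absolute constant), the integrand is a constant bounded by $\lesssim n(\Delta_\veps^{-3}+M_2\Delta_\veps^{-2})$ on a domain of measure at most $\len(\manifold)$, contributing $\lesssim \len(\manifold)\,n\,(\Delta_\veps^{-3}+M_2\Delta_\veps^{-2})$. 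On $\mc N(\mb x)$ I would parameterize by arc length, invoke $\angle(\mb x_\sigma(s),\mb x_\sigma(s'))\ge(1-\veps)\abs{s-s'}$ from \Cref{lem:angle_inverse} (legitimate since $\abs{s-s'}\le d_{\manifold}(\mb x,\mb x')\le\sqrt\veps/\hat{\kappa}$), so that $\abs{\ddot\psi(\angle(\mb x,\mb x'))}\lesssim n(1-\veps)^{-3}\abs{s-s'}^{-3}$ and $\abs{\dot\psi(\angle(\mb x,\mb x'))}\lesssim n(1-\veps)^{-2}\abs{s-s'}^{-2}$; combining with the local bounds $\abs{\lambda_1}\le\tfrac{7}{12}M_4\abs{s-s'}^3$ and $\abs{\lambda_2}\lesssim(M_4^2+M_5)\abs{s-s'}^3$ (\Cref{lem:lambda1}, \Cref{lem:lambda2}), the two contributions reduce to $\lesssim nM_4^2\abs{s-s'}^3$ and $\lesssim n(M_4^2+M_5)\abs{s-s'}$, which integrate over $\abs{s-s'}\in[0,\sqrt\veps/\hat{\kappa}]$ to $\lesssim (M_4^2+M_5)\,n$ after using $\veps<\tfrac34$ and $\hat{\kappa}\ge\tfrac2\pi$. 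Summing the near and far contributions and absorbing the $\veps$-dependent constants into $C_2$ produces the claimed polynomial bound $P_2(M_2,M_4,M_5,\len(\manifold),\Delta_\veps^{-1})\,n$.

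The main obstacle is the near-diagonal bookkeeping: because $\ddot\psi$ blows up like $t^{-3}$ near the diagonal — one power worse than the $t^{-2}$ singularity handled in \Cref{lem:R1} — one must check that it is still integrable against $\lambda_1^2$, which vanishes to order six there, and simultaneously that the order-$3$ vanishing of $\lambda_2$ (supplied by \Cref{lem:lambda2}) suffices against the $t^{-2}$ singularity of $\dot\psi$. This cancellation is exactly why the local $\lambda_i$ estimates are stated with a cubic factor $d_{\manifold}(\mb x,\mb x')^3$, and verifying it carefully — along with checking that \Cref{lem:skel_ddot_ub} delivers the $n/r^3$ uniform decay on $\{t\ge r\}$ once $L\gtrsim 1$, paralleling the use of \Cref{lem:skel_dot_ub} in \Cref{lem:R1} — is the only nonroutine part of the argument.
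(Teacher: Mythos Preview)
Your proposal is correct and follows essentially the same approach as the paper: the same Young-inequality reduction, the same near/far split at scale $\sqrt{\veps}/\hat{\kappa}$, the same use of \Cref{lem:skel_dot_ub,lem:skel_ddot_ub} for the skeleton decay, and the same local and global $\lambda_i$ bounds from \Cref{lem:lambda1,lem:lambda2}. One minor slip: in justifying \Cref{lem:angle_inverse} on the near piece you wrote $\abs{s-s'}\le d_{\manifold}(\mb x,\mb x')$, but the inequality goes the other way; what you need is simply to choose the arc-length parameters so that $\abs{s-s'}=d_{\manifold}(\mb x,\mb x')\le\sqrt{\veps}/\hat{\kappa}$, which is what the paper does implicitly when it integrates over $s'\in[s-\sqrt{\veps}/\hat{\kappa},\,s+\sqrt{\veps}/\hat{\kappa}]$.
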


\begin{proof}
From \Cref{lem:young_ineq} we have
\begin{equation*}
	\norm{\mb \Theta^{(2)}}_{L^2 \to L^2} = \sup_{\mb x \in \manifold} \int_{\mb x' \in \manifold} \abs{\Theta^{(2)}(\mb x, \mb x')} d\mb x'. 
\end{equation*}
From \Cref{lem:deriv-lambda}, we know
\begin{align*}
  \abs{\Theta^{(2)}(\mb x, \mb x')} &= \abs{\ddot{\psi}(\angle(\mb x, \mb x'))\lambda_1^2(\mb x, \mb x') + \dot{\psi}(\angle(\mb x, \mb x'))\lambda_2(\mb x, \mb x')}\\
    &\le \abs{\ddot{\psi}(\angle(\mb x, \mb x'))}\abs{\lambda_1^2(\mb x, \mb x')} +
    \abs{\dot{\psi}(\angle(\mb x, \mb x'))}\abs{\lambda_2(\mb x, \mb x')}.
\end{align*}

\Cref{lem:skel_dot_ub,lem:skel_ddot_ub} provide bounds for derivatives of $\psi(t)$: there exist constants $C, C_1, C_2$, such that when $L > C$, we have
\begin{align*}
\max_{t \ge r} \abs{\dot{\psi}(t)} \le \frac{C_1 n}{r^2} 
\end{align*}
and 
\begin{align*}
\max_{t \ge r} \abs{\ddot{\psi}(t)} \le\frac{C_2 n}{r^3}. 
\end{align*} 

To utilize the bound above, we need to get a lower bound for the angle $\angle(\mb x, \mb x')$. For a fixed point $\mb x \in \manifold$, we decompose the integral into nearby piece $\mc N(\mb x) = \set{\mb x' \in \manifold \, \vert \, d_{\manifold}(\mb x, \mb x') \le \frac{\sqrt{\veps}}{\hat{\kappa}}}$ and faraway piece $\mc F(\mb x) = \set{\mb x' \in \manifold \, \vert \, d_{\manifold}(\mb x, \mb x') > \frac{\sqrt{\veps}}{\hat{\kappa}}}$. Then for $\mb x' \in \mc F(\mb x)$, $d_{\manifold}(\mb x, \mb x') > \frac{\sqrt{\veps}}{\hat{\kappa}}$, and we have $\angle(\mb x, \mb x' ) \ge \Delta_\veps$ with $\Delta_\veps \le \frac{\sqrt{\veps}}{\hat{\kappa}} \le \frac{\pi}{2}$. From \Cref{lem:lambda1,lem:lambda2} we get
\begin{align*}
	\int_{\mb x' \in \mc F(\mb x)}\abs{\Theta^{(2)}(\mb x, \mb x')} d \mb x' &\le \int_{\mb x' \in \mc F(\mb x)} \frac{C_2 n }{\Delta_{\veps}^3}\abs{\lambda_1^2(\mb x, \mb x')} + \frac{C_1 n }{\Delta_{\veps}^2} \abs{\lambda_2(\mb x, \mb x')} d \mb x' \\
	&\le \len(\manifold) \paren{4 \frac{C_2 n }{\Delta_{\veps}^3} + \frac{C_1 n }{\Delta_{\veps}^2} \paren{\frac{\pi}{4}\Delta_{\veps}^{-1} + 2 M_2} }\\
	&\le  C' \paren{\Delta_\veps^{-3} + M_2\Delta_\veps^{-2}} \len(\manifold) n \labelthis \label{eq:theta2_far}
\end{align*}
for some constant $C'$. 

For nearby piece, let $\sigma, s$ be such that $\mb x = \mb x_{\sigma}(s)$. Follow \Cref{lem:angle_inverse} we have $\angle(\mb x, \mb x_{\sigma}(s')) \ge (1 - \veps)\abs{s - s'}$ when $\abs{s - s'} \le \frac{\sqrt{\veps}}{\hat{\kappa}}$. From \Cref{lem:skel_dot_ub,lem:skel_ddot_ub,lem:lambda1,lem:lambda2} there exists constant $c, C''$ such that 
\begin{align*}
	&\int_{\mb x' \in \mc N(\mb x)}\abs{\Theta^{(2)}(\mb x, \mb x')} d \mb x'  \\
	&\quad \le \int_{s' = s - \frac{\sqrt{\veps}}{\hat{\kappa}}}^{s - \frac{\sqrt{\veps}}{\hat{\kappa}}} \abs{\Theta^{(2)}(\mb x, \mb x_{\sigma}(s'))} ds' \\
	&\quad \le \int_{s' = s - \frac{\sqrt{\veps}}{\hat{\kappa}}}^{s - \frac{\sqrt{\veps}}{\hat{\kappa}}} \frac{C_2 n }{(1 - \veps)^3\abs{s' - s}^3}\abs{\lambda_1^2(\mb x, \mb x_\sigma(s'))} + \frac{C_1 n }{(1 - \veps)^2\abs{s' - s}^2} \abs{\lambda_2(\mb x, \mb x_\sigma(s') )} d s'  \\
	&\quad \le \int_{s' = s - \frac{\sqrt{\veps}}{\hat{\kappa}}}^{s - \frac{\sqrt{\veps}}{\hat{\kappa}}} \frac{C_2 n }{(1 - \veps)^3\abs{s' - s}^3}\abs*{\frac{7 \abs{s' - s}^3}{12}  M_4 }^2 + \frac{C_1 n }{(1 - \veps)^2\abs{s' - s}^2} \abs*{c(M_4^2 + M_5) \abs{s' - s}^3} d s'  \\
	&\quad \le  C'' \paren{ 1 + \paren{\frac{\sqrt{\veps}}{\hat{\kappa}}}^4}\paren{M_4^2 + M_5} n \\
	&\quad \le  C'' \paren{M_4^2 + M_5} n.  \labelthis \label{eq:theta2_near}
\end{align*}
combining \eqref{eq:theta2_far} and \eqref{eq:theta2_near} directly proves the claim. 
\end{proof}

\begin{lemma}\label{lem:R3} 
	For $\veps \in (0, \frac{3}{4})$, there exist positive constants $C, C_{3}$ such that when $L \ge C$, we have \begin{equation*}
		\norm{\mb \Theta^{(3)}}_{L^2 \to L^2}  \le P_3(M_2, M_3, M_4, M_5, \len(\manifold), \Delta_{\veps}^{-1}) n
	\end{equation*} 
	where $$P_3(M_2, M_3, M_4, M_5, \len(\manifold), \Delta_{\veps}^{-1} ) = C_{3}\paren{M_4^3 +M_3M_4 + M_4M_5 +  \len(\manifold)\paren{\Delta_\veps^{-4} + M_2 \Delta_\veps^{-3} + M_3 \Delta_\veps^{-2}}}$$ is a polynomial in $M_2, M_3, M_4, M_5, \len(\manifold)$ and $\Delta_\veps^{-1}$.
\end{lemma}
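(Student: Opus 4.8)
\textbf{Proof proposal for \Cref{lem:R3}.} The plan is to follow the template already established in the proofs of \Cref{lem:R1} and \Cref{lem:R2}. First I would invoke \Cref{lem:young_ineq} to reduce to bounding $\sup_{\vx \in \manifold} \int_{\vx' \in \manifold} \abs{\Theta^{(3)}(\vx, \vx')} \diff \vx'$, and then use the formula from \Cref{lem:deriv-lambda},
\begin{equation*}
\Theta^{(3)}(\vx,\vx') = \dddot{\psi}(\angle(\vx,\vx')) \lambda_1^3(\vx,\vx') + 3 \ddot{\psi}(\angle(\vx,\vx')) \lambda_2(\vx,\vx') \lambda_1(\vx,\vx') + \dot{\psi}(\angle(\vx,\vx')) \lambda_3(\vx,\vx'),
\end{equation*}
together with the triangle inequality, so that it suffices to bound the three integrals $\int \abs{\dddot{\psi}}\abs{\lambda_1}^3$, $\int \abs{\ddot{\psi}}\abs{\lambda_2}\abs{\lambda_1}$, $\int \abs{\dot{\psi}}\abs{\lambda_3}$. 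As in the earlier lemmas, I would fix $\vx$ and split the domain into the nearby piece $\mc N(\vx) = \set{\vx' \given d_{\manifold}(\vx,\vx') \le \tfrac{\sqrt{\veps}}{\hat{\kappa}}}$ and the faraway piece $\mc F(\vx) = \set{\vx' \given d_{\manifold}(\vx,\vx') > \tfrac{\sqrt{\veps}}{\hat{\kappa}}}$.

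On $\mc F(\vx)$, I would use that $\angle(\vx,\vx') \ge \Delta_\veps$ (from \Cref{eq:inj_radius_eps_def}, with $\Delta_\veps \le \pi/2$) to control the skeleton derivatives via \Cref{lem:skel_dot_ub,lem:skel_ddot_ub,lem:skel_dddot_ub}, giving $\abs{\dot\psi} \lesssim n\Delta_\veps^{-2}$, $\abs{\ddot\psi} \lesssim n\Delta_\veps^{-3}$, $\abs{\dddot\psi} \lesssim n\Delta_\veps^{-4}$ on this region, and then combine these with the ``global'' worst-case bounds $\abs{\lambda_1} \le 2$, $\abs{\lambda_2} \le \tfrac{\pi}{4}\Delta_\veps^{-1} + 2M_2$, $\abs{\lambda_3} \le \tfrac{3\pi^2}{4}\Delta_\veps^{-2} + 9\pi M_2 \Delta_\veps^{-1} + 2M_3 + 8$ from \Cref{lem:lambda1,lem:lambda2,lem:lambda3}. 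Multiplying the resulting pointwise bound on $\abs{\Theta^{(3)}}$ by $\len(\manifold)$ yields a contribution of order $\len(\manifold)\,n\,(\Delta_\veps^{-4} + M_2 \Delta_\veps^{-3} + M_3 \Delta_\veps^{-2})$.

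On $\mc N(\vx)$, I would parameterize $\vx = \vx_\sigma(s)$ and use the near-isometry bound $\angle(\vx_\sigma(s),\vx_\sigma(s')) \ge (1-\veps)\abs{s-s'}$ from \Cref{lem:angle_inverse} (valid since $\abs{s-s'} \le \tfrac{\sqrt{\veps}}{\hat{\kappa}}$), so that $\abs{\dot\psi} \lesssim n\abs{s-s'}^{-2}$, $\abs{\ddot\psi} \lesssim n\abs{s-s'}^{-3}$, $\abs{\dddot\psi} \lesssim n\abs{s-s'}^{-4}$, combined with the cubic local bounds $\abs{\lambda_1} \lesssim M_4\abs{s-s'}^3$, $\abs{\lambda_2} \lesssim (M_4^2+M_5)\abs{s-s'}^3$, $\abs{\lambda_3} \lesssim (M_4^3 + M_4 M_5 + M_3 M_4)\abs{s-s'}^3$. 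The three integrands then behave like $n M_4^3 \abs{s-s'}^5$, $n (M_4^2+M_5) M_4 \abs{s-s'}^3$, and $n(M_4^3 + M_4 M_5 + M_3 M_4)\abs{s-s'}$ respectively, all of which are integrable over $\abs{s-s'} \le \tfrac{\sqrt{\veps}}{\hat{\kappa}}$; since $\veps < 3/4$ and $\hat\kappa \ge 2/\pi$, the powers of $\tfrac{\sqrt{\veps}}{\hat{\kappa}}$ that appear are bounded by absolute constants, leaving a contribution of order $(M_4^3 + M_3 M_4 + M_4 M_5)\,n$. Adding the two contributions and absorbing everything into an absolute constant $C_3$ gives the claimed polynomial $P_3$. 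I do not expect a genuine obstacle here: the only subtlety is bookkeeping to confirm that the near-piece integrands are integrable (i.e.\ that the $\abs{s-s'}^3$ decay from the $\lambda_i$'s dominates the blow-up of the skeleton derivatives), and that the $\lambda_3$ bound of \Cref{lem:lambda3} is indeed cubic in $d_{\manifold}$ near the diagonal — both of which hold as stated, so the argument is essentially mechanical given the prior lemmas.
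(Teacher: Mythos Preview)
Your proposal is correct and mirrors the paper's proof essentially line for line: the paper also applies \Cref{lem:young_ineq}, expands $\Theta^{(3)}$ via \Cref{lem:deriv-lambda}, splits into the same nearby and faraway pieces, and controls each piece exactly as you describe using \Cref{lem:skel_dot_ub,lem:skel_ddot_ub,lem:skel_dddot_ub} together with \Cref{lem:lambda1,lem:lambda2,lem:lambda3} and \Cref{lem:angle_inverse}.
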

\begin{proof}
	From \Cref{lem:young_ineq} we have
	\begin{equation*}
	\norm{\mb \Theta^{(3)}}_{L^2 \to L^2} = \sup_{\mb x \in \manifold} \int_{\mb x' \in \manifold} \abs{\Theta^{(3)}(\mb x, \mb x')} d\mb x'. 
	\end{equation*}
	From \Cref{lem:deriv-lambda}, we know
	\begin{align*}
	&\abs{\Theta^{(3)}(\mb x, \mb x')} \\
	&\qquad = \abs{\dddot{\psi}(\angle(\mb x, \mb x'))\lambda_1^3(\mb x, \mb x') + 3\ddot{\psi}(\angle(\mb x, \mb x'))\lambda_2(\mb x, \mb x')\lambda_1(\mb x, \mb x') + \dot{\psi}(\angle(\mb x, \mb x'))\lambda_3(\mb x, \mb x')}\\
	&\qquad \le \abs{\dddot{\psi}(\angle(\mb x, \mb x'))}\abs{\lambda_1^3(\mb x, \mb x')} + \abs{3\ddot{\psi}(\angle(\mb x, \mb x'))}\abs{\lambda_2(\mb x, \mb x')}\abs{\lambda_1(\mb x, \mb x')} \\
	&\qquad \qquad + \abs{\dot{\psi}(\angle(\mb x, \mb x'))}\abs{\lambda_3(\mb x, \mb x')} 
	\end{align*}
	
	From \Cref{lem:skel_dot_ub,lem:skel_ddot_ub,lem:skel_dddot_ub}, there exist constants $C, C_1, C_2, C_3$, such that when $L > C$, we have
	\begin{align*}
	\max_{t \ge r} \abs{\dot{\psi}(t)} &\le \frac{C_1 n}{r^2} \\
	\max_{t \ge r} \abs{\ddot{\psi}(t)} &\le\frac{C_2 n}{r^3} \\
	\max_{t \ge r} \abs{\dddot{\psi}(t)} &\le\frac{C_3 n}{r^4}. 
	\end{align*}  
	
	To get lower bound for the angle $\angle(\mb x, \mb x')$, for a fixed point $\mb x \in \manifold$, we decompose the integral into a nearby piece $\mc N(\mb x) = \set{\mb x' \in \manifold \, \vert \, d_{\manifold}(\mb x, \mb x') \le \frac{\sqrt{\veps}}{\hat{\kappa}}}$ and a faraway piece $\mc F(\mb x) = \set{\mb x' \in \manifold \, \vert \, d_{\manifold}(\mb x, \mb x') > \frac{\sqrt{\veps}}{\hat{\kappa}}}$. 
	When $d_{\manifold}(\mb x, \mb x') > \frac{\sqrt{\veps}}{\hat{\kappa}}$, we have $\angle(\mb x, \mb x') \ge \Delta_\veps$ with $\Delta_\veps \le \frac{\sqrt{\veps}}{\hat{\kappa}} \le \frac{\pi}{2}$. From \Cref{lem:lambda1,lem:lambda2,lem:lambda3}, there exist constants $c, C'$ such that
	\begin{align*}
	\int_{\mb x' \in \mc F(\mb x)}&\abs{\Theta^{(3)}(\mb x, \mb x')} d \mb x'  \\
	&\le \int_{\mb x' \in \mc F(\mb x)} \frac{C_3 n }{\Delta_{\veps}^4}\abs{\lambda_1^3(\mb x, \mb x')} + \frac{C_2 n }{\Delta_{\veps}^3} \abs{\lambda_2(\mb x, \mb x')}\abs{\lambda_1(\mb x, \mb x')} + \frac{C_1 n }{\Delta_{\veps}^2} \abs{\lambda_3(\mb x, \mb x')} d \mb x'  \\
	&\le \len(\manifold) \left(8 \frac{C_3 n }{\Delta_{\veps}^4} + \frac{2 C_2 n }{\Delta_{\veps}^3} \paren{\frac{\pi}{4}\Delta_\veps^{-1} + 2M_2} \right.  \\
		&\qquad \qquad \qquad \qquad \left.+ \frac{C_1 n }{\Delta_{\veps}^2} \paren{	\frac{3\pi^2}{4}\Delta_\veps^{-2} + 9\pi M_2\Delta_\veps^{-1} + 2M_3 + 8}\right) \\
	&\le  C' \paren{\Delta_\veps^{-4} + M_2 \Delta_\veps^{-3} + M_3 \Delta_\veps^{-2}} \len(\manifold) n \labelthis \label{eq:theta3_far}
	\end{align*}
	for some constant $C'$. 
	
	For the nearby piece, let $\sigma, s$ be such that $\mb x = \mb x_{\sigma}(s)$. From \Cref{lem:angle_inverse}, when $\abs{s - s'} \le \frac{\sqrt{\veps}}{\hat{\kappa}}$ we have $\angle(\mb x, \mb x_{\sigma}(s')) \ge (1 - \veps)\abs{s - s'}$. From \Cref{lem:lambda1,lem:lambda2,lem:lambda3} there exists constant $c', c'', C''$ such that 
	\begin{align*}
	\int_{\mb x' \in \mc N(\mb x)}\abs{\Theta^{(3)}(\mb x, \mb x')} d \mb x' &\le \int_{s' = s - \frac{\sqrt{\veps}}{\hat{\kappa}}}^{s - \frac{\sqrt{\veps}}{\hat{\kappa}}} \abs{\Theta^{(3)}(\mb x, \mb x_{\sigma}(s'))} d s'\\
	&\le \int_{s' = s - \frac{\sqrt{\veps}}{\hat{\kappa}}}^{s - \frac{\sqrt{\veps}}{\hat{\kappa}}} \frac{C_3 n }{(1 - \veps)^4\abs{s' - s}^4}\abs{\lambda_1^3(\mb x, \mb x_\sigma(s') )} \\
	&\qquad + \frac{C_2 n }{(1 - \veps)^3\abs{s' - s}^3} \abs{\lambda_2(\mb x, \mb x_\sigma(s'))} \abs{\lambda_1(\mb x, \mb x_\sigma(s'))}   \\ &\qquad + \frac{C_1 n }{(1 - \veps)^2\abs{s' - s}^2} \abs{\lambda_3(\mb x, \mb x_\sigma(s'))} \; d s' \\
	&\le \int_{s' = s - \frac{\sqrt{\veps}}{\hat{\kappa}}}^{s - \frac{\sqrt{\veps}}{\hat{\kappa}}} \frac{C_2 n }{(1 - \veps)^4\abs{s' - s}^4}\abs*{\frac{7 \abs{s' - s}^3}{12}  M_4 }^3  \\ &\qquad + \frac{C_2 n }{(1 - \veps)^3\abs{s' - s}^3} \abs*{c'(M_4^2 + M_5) \abs{s' - s}^3}\abs*{\frac{7 \abs{s' - s}^3}{12}  M_4 } \\ &\qquad + \frac{C_1 n }{(1 - \veps)^2\abs{s' - s}^2} c'' (M_4^3  + M_4M_5 + M_3M_4)\abs{s' - s}^3 d s' \\
	&\le  C'' \paren{M_4^3 + M_3M_4 + M_4M_5} n.  \labelthis \label{eq:theta3_near}
	\end{align*}
	where the last line comes from the fact that $\int_{s' = s - \frac{\sqrt{\veps}}{\hat{\kappa}}}^{s - \frac{\sqrt{\veps}}{\hat{\kappa}}} \abs{s - s'}^i ds' < 2\hat{\kappa}^{-i-1} \le 2(2/\pi)^{-i-1}$. Combining \Cref{eq:theta3_far,eq:theta3_near} directly proves the claim. 
\end{proof}

\begin{lemma}\label{lem:deriv-R}
	For $i = 0, 1, 2$ and any differentiable $h : \manifold \to \bbC$, we have 
	\begin{equation*}
	\frac{d}{ds} \mb \Theta^{(i)}[h](\mb x) = \mb \Theta^{(i)}\left[ \frac{d}{ds} h \right](\mb x) + \mb \Theta^{(i+1)}[h](\mb x ), 
	\end{equation*}
	where we recall the notation defined in \Cref{eq:dh}.
\end{lemma}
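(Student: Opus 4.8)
The plan is to prove the identity by differentiating under the integral sign, rewriting the resulting partial derivative of $\Theta^{(i)}$ in its \emph{first} argument as a ``simultaneous advance'' (which produces $\Theta^{(i+1)}$) minus a partial derivative in the \emph{second} argument, and then integrating that second piece by parts along each closed curve, where periodicity annihilates all boundary terms and moves the derivative onto $h$.

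Concretely, fix $\mb x = \mb x_\sigma(s) \in \manifold$ and a differentiable $h$, and write the operator in arc-length coordinates,
\[
  \mb \Theta^{(i)}[h](\mb x_\sigma(s)) = \sum_{\sigma' = \pm} \int_0^{\len(\manifold_{\sigma'})} \Theta^{(i)}\bigl(\mb x_\sigma(s), \mb x_{\sigma'}(s')\bigr)\, h(\mb x_{\sigma'}(s'))\, ds'.
\]
First I would justify exchanging $\tfrac{d}{ds}$ (in the sense of \eqref{eq:dh}) with $\int_\manifold$, producing $\sum_{\sigma'}\int_0^{\len(\manifold_{\sigma'})} \partial_s[\Theta^{(i)}(\mb x_\sigma(s), \mb x_{\sigma'}(s'))]\, h(\mb x_{\sigma'}(s'))\,ds'$. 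By the chain rule together with the inductive definition in \Cref{def:deriv-R} (the $t$-derivative defining $\Theta^{(i+1)}$ is exactly $(\partial_s+\partial_{s'})$ applied to $\Theta^{(i)}$),
\[
  \partial_s\bigl[\Theta^{(i)}(\mb x_\sigma(s), \mb x_{\sigma'}(s'))\bigr] = \Theta^{(i+1)}\bigl(\mb x_\sigma(s), \mb x_{\sigma'}(s')\bigr) - \partial_{s'}\bigl[\Theta^{(i)}(\mb x_\sigma(s), \mb x_{\sigma'}(s'))\bigr].
\]
The first term integrates against $h$ to give $\mb\Theta^{(i+1)}[h](\mb x)$. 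For the second, integrate by parts in $s'$ over $[0,\len(\manifold_{\sigma'})]$: since the curves are closed, both $s'\mapsto \Theta^{(i)}(\mb x_\sigma(s), \mb x_{\sigma'}(s'))$ and $s'\mapsto h(\mb x_{\sigma'}(s'))$ are $\len(\manifold_{\sigma'})$-periodic, so the boundary term vanishes and we are left with $\sum_{\sigma'}\int_0^{\len(\manifold_{\sigma'})} \Theta^{(i)}(\mb x_\sigma(s), \mb x_{\sigma'}(s'))\, \partial_{s'}[h(\mb x_{\sigma'}(s'))]\,ds' = \mb\Theta^{(i)}[\tfrac{d}{ds}h](\mb x)$, using \eqref{eq:dh} to identify $\partial_{s'}[h\circ\mb x_{\sigma'}]$ with $(\tfrac{d}{ds}h)\circ\mb x_{\sigma'}$. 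Adding the two contributions yields the claim.

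The main obstacle is the regularity bookkeeping that licenses both the differentiation under the integral and the integration by parts, precisely on the diagonal $\{(\mb x,\mb x)\}$ (and where distinct parameters of the same component coincide). Away from it the angle is bounded away from $0$ and, by \eqref{eq:max_angle}, from $\pi$, so $(s,s')\mapsto \Theta^{(i)}(\mb x_\sigma(s),\mb x_{\sigma'}(s'))$ is a composition of smooth maps with the $C^3$ function $\psi^\circ$ (\Cref{lem:phi_concave}), hence jointly $C^1$ for $i \le 2$ with continuous $\partial_s,\partial_{s'}$ --- exactly the budget used up by $\Theta^{(3)}$ involving $\dddot\psi$. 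On the diagonal I would invoke \Cref{lem:deriv-lambda}: the $\lambda_j$ vanish there and the formulas expressed through $\dot\psi,\ddot\psi,\dddot\psi$ extend continuously, which is precisely what makes $\Theta^{(i+1)}$ and the one-variable partials of $\Theta^{(i)}$ (for $i\le 2$) well-defined and continuous on all of $\manifold\times\manifold$; a dominated-convergence argument on the compact manifold then justifies exchanging $\tfrac{d}{ds}$ with $\int_\manifold$. Note that $h$ is only assumed differentiable, and that is all one needs: $h$ enters the integration by parts solely through $\partial_{s'}[h\circ\mb x_{\sigma'}]$, so no second derivatives of $h$ ever appear.
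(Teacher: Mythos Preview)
Your overall strategy is sound and different from the paper's. The paper never separates $\partial_s$ from $\partial_{s'}$: it writes the $s$-derivative as a limit of difference quotients, performs the change of variables $s'\mapsto s'+t$ in the integral $\int \Theta^{(i)}(\mb x_\sigma(s+t),\mb x_{\sigma'}(s'))\,h(\mb x_{\sigma'}(s'))\,ds'$ (exact, by periodicity), and then adds and subtracts $\int \Theta^{(i)}(\mb x_\sigma(s),\mb x_{\sigma'}(s'))\,h(\mb x_{\sigma'}(s'+t))\,ds'$. One piece is exactly the difference quotient for the simultaneous advance (converging to $\Theta^{(i+1)}$), the other is the difference quotient of $h$ alone; dominated convergence, with the mean value theorem supplying the domination via $\sup|\Theta^{(i+1)}|$, finishes. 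Your route trades that change of variables for integration by parts --- both exploit periodicity, and both are legitimate.

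There is, however, a concrete error in your regularity discussion. You assert that \Cref{lem:deriv-lambda} makes the \emph{one-variable} partials $\partial_s\Theta^{(i)}$, $\partial_{s'}\Theta^{(i)}$ well-defined and continuous on all of $\manifold\times\manifold$. They are not: by \Cref{lem:angle_inverse}, $\angle(\mb x_\sigma(s),\mb x_\sigma(s')) = |s-s'| + O(|s-s'|^3)$ near the diagonal, so $s'\mapsto\Theta^{(0)}(\mb x_\sigma(s),\mb x_\sigma(s'))$ behaves like $\psi^\circ(|s-s'|)$, which has a genuine kink at $s'=s$ since $\dot\psi(0)\neq 0$ (\Cref{lem:skeleton_iphi_ixi_derivatives}). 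What \Cref{lem:deriv-lambda} actually shows is only that the \emph{simultaneous} advance $\Theta^{(i+1)}$ extends continuously, because when both arguments move together the angle stays identically zero on the diagonal. Your argument is repairable --- the one-variable partials exist a.e.\ and are bounded (the map is Lipschitz in each variable), and integration by parts holds for absolutely continuous functions --- but that is extra bookkeeping, and it is exactly what the paper's change-of-variables device sidesteps: only $\Theta^{(i+1)}$ and $\tfrac{d}{ds}h$ ever appear, neither of which is singular on the diagonal.
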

\begin{proof}
	Let $s$ be such that $\mb x_{\sigma}(s) = \mb x$. We have 
	\begin{align*}
	\lefteqn{ \frac{d}{ds} \mb \Theta^{(i)}[h](\mb x_{\sigma}(s)) = \dac \int_{\mb x' \in \manifold} \Theta^{(i)}(\mb x_{\sigma}(s+t),\mb x') h(\mb x') d\mb x'  } \\
	&= \dac \sum_{\sigma'} \int_{s'} \Theta^{(i)}(\mb x_{\sigma}(s+t), \mb x_{\sigma'}(s')) h(\mb x_{\sigma'}(s')) ds' \\
	&= \sum_{\sigma'} \lim_{t \to 0} \frac{1}{t} \left[ \int_{s'} \Theta^{(i)}(\mb x_\sigma(s+t), \mb x_{\sigma'}(s') )  h(\mb x_{\sigma'}(s') ) ds' \right. \\
	&\qquad \qquad \qquad \qquad \qquad \left. - \int_{s'} \Theta^{(i)}(\mb x_{\sigma}(s), \mb x_{\sigma'}(s')) h(\mb x_{\sigma'}(s')) ds' \right] \\
	&= \sum_{\sigma'} \lim_{t \to 0} \frac{1}{t} \left[ \int_{s'} \Theta^{(i)}(\mb x_\sigma(s+t), \mb x_{\sigma'}(s'+t) )  h(\mb x_{\sigma'}(s'+t) ) ds' \right. \\
	&\qquad \qquad \qquad \qquad \qquad \left. - \int_{s'} \Theta^{(i)}(\mb x_{\sigma}(s), \mb x_{\sigma'}(s')) h(\mb x_{\sigma'}(s')) ds' \right] \\
	&= \sum_{\sigma'} \lim_{t \to 0} \frac{1}{t} \Biggl[ \int_{s'} \Theta^{(i)}(\mb x_\sigma(s+t), \mb x_{\sigma'}(s'+t) )  h(\mb x_{\sigma'}(s'+t) ) ds'  \\
	&\qquad \qquad \qquad \qquad \qquad - \int_{s'} \Theta^{(i)}(\mb x_\sigma(s), \mb x_{\sigma'}(s') )  h(\mb x_{\sigma'}(s'+t) ) ds'  \\ 
	& \qquad \qquad  \qquad \qquad \qquad + \int_{s'} \Theta^{(i)}(\mb x_\sigma(s), \mb x_{\sigma'}(s') )  h(\mb x_{\sigma'}(s'+t) ) ds'   \\
	&\qquad \qquad \qquad \qquad \qquad  - \int_{s'} \Theta^{(i)}(\mb x_{\sigma}(s), \mb x_{\sigma'}(s')) h(\mb x_{\sigma'}(s')) ds' \Biggr] \\
	&= \sum_{\sigma'} \lim_{t \to 0} \frac{1}{t} \Biggl[ \int_{s'} \left[ \Theta^{(i)}(\mb x_\sigma(s+t), \mb x_{\sigma'}(s'+t) )  -  \Theta^{(i)}(\mb x_\sigma(s), \mb x_{\sigma'}(s') )  \right] h(\mb x_{\sigma'}(s'+ t)) ds' \\ 
	& \qquad \qquad \qquad \int_{s'} \Theta^{(i)}(\mb x_{\sigma}(s), \mb x_{\sigma}(s')) \left[ h(\mb x_{\sigma'}(s'+t)) - h(\mb x_{\sigma'}(s')) \right] ds' \Biggr]. 
	\end{align*}
	Above, the domain of each of the $s'$ integrals is a fundamental domain for the circles $\bbR / (\len(\sM_{\sigma}) \cdot \bbZ) $ (by periodicity of the parameterizations, the specific fundamental domain is irrelevant).
	For $i = 0, 1, 2$ we have by the mean value theorem
	\begin{align*}
		&\left| \frac{ \Theta^{(i)}(\mb x_\sigma(s+t), \mb x_{\sigma'}(s'+t) )  -  \Theta^{(i)}(\mb x_\sigma(s), \mb x_{\sigma'}(s') ) }{t}h(\mb x_{\sigma'}(s'+ t)) \right| \\
		&\qquad = \quad \left| \Theta^{(i+1)}(\mb x_\sigma(s+t_0), \mb x_{\sigma'}(s'+t_0) ) h(\mb x_{\sigma'}(s'+ t)) \right| \\
		&\qquad \le \quad \sup_{\substack{\mb x_1, \mb x_2 \in \manifold \\ d_{\manifold}(\mb x_1, \mb x_2) = d_{\manifold}(\mb x, \mb x')}} \abs*{\Theta^{(i+1)}(\mb x_1, \mb x_2)} \sup_{\mb x_3 \in \manifold} \abs*{h(\mb x_3)} 
	\end{align*}
	for some $\abs{t_0} \le \abs{t}$. 
	As $\manifold$ is closed with bounded length and $h$ is differentiable, $\sup_{\mb x_3 \in \manifold} h(\mb x_3)$ is bounded. 
	By the formulas in \Cref{lem:deriv-lambda}, the fact that $\skel$ is $\mathcal{C}^3$ by \Cref{lem:phi_concave}, and \Cref{lem:lambda1,lem:lambda2,lem:lambda3}, it follows that 
	the former supremum is finite as well.
	From the dominated convergence theorem, we then have
	\begin{align*}
	\sum_{\sigma'} &\lim_{t \to 0} \frac{1}{t} \int_{s'} \left[ \Theta^{(i)}(\mb x_\sigma(s+t), \mb x_{\sigma'}(s'+t) )  -  \Theta^{(i)}(\mb x_\sigma(s), \mb x_{\sigma'}(s') )  \right] h(\mb x_{\sigma'}(s'+ t)) ds' \\
	&=\sum_{\sigma'} \int_{s'}  \lim_{t \to 0} \frac{1}{t} \left[ \Theta^{(i)}(\mb x_\sigma(s+t), \mb x_{\sigma'}(s'+t) )  -  \Theta^{(i)}(\mb x_\sigma(s), \mb x_{\sigma'}(s') )  \right] h(\mb x_{\sigma'}(s'+ t)) ds' \\
	&= \sum_{\sigma'} \int_{s'}\Theta^{(i+1)}(\mb x_\sigma(s), \mb x_{\sigma'}(s') )   h(\mb
  x_{\sigma'}(s')) ds' \labelthis \label{eq:deriv_R_limit_interchange_1}
	\end{align*}
	Similarly, as $\esssup_{\mb x \in \manifold} \abs*{\frac{d}{ds} h(\mb x)}$ is finite and $\manifold$ is compact, from the dominated convergence theorem we also have
	\begin{align*}
		\sum_{\sigma'} &\lim_{t \to 0} \frac{1}{t} \int_{s'} \Theta^{(i)}(\mb x_{\sigma}(s), \mb x_{\sigma}(s')) \left[ h(\mb x_{\sigma'}(s'+t)) - h(\mb x_{\sigma'}(s')) \right] ds' \\
		&=\sum_{\sigma'}\int_{s'} \lim_{t \to 0} \frac{1}{t} \Theta^{(i)}(\mb x_{\sigma}(s), \mb x_{\sigma}(s')) \left[ h(\mb x_{\sigma'}(s'+t)) - h(\mb x_{\sigma'}(s')) \right] ds' \\
		&= \sum_{\sigma'}\int_{s'}\Theta^{(i)}(\mb x_{\sigma}(s), \mb x_{\sigma}(s'))\frac{d}{ds}h(\mb
    x_{\sigma'}(s')) ds' \labelthis \label{eq:deriv_R_limit_interchange_2}
	\end{align*}
	Summing \eqref{eq:deriv_R_limit_interchange_1} and \eqref{eq:deriv_R_limit_interchange_2} shows the claim. 
\end{proof}

\begin{lemma}\label{lem:invariant_deriv_commute}
    There is an absolute constant $C>0$ such that for all $L \geq C$,
	any differentiable $h : \manifold \to \bbC$ and any $\veps \in (0, \frac{3}{4})$, if the operator $\wh{\mb M}_{\veps}$ and the subspace $S_{\veps}$ are as defined in \Cref{eq:invariant_op,eqn:S_eps_def}, then we have
	\begin{align*}
	\frac{d}{ds} \mb P_{S_\veps} [h] &= \mb P_{S_\veps} \left[\frac{d}{ds} h \right],  \\
	\frac{d}{ds} \wh{\mb M}_{\veps} \mb P_{S_\veps} [h] &= \wh{\mb M}_{\veps} \mb P_{S_\veps} \left[\frac{d}{ds} h \right]. 
	\end{align*}
	Also, suppose the hypotheses of \Cref{lem:main_term_pd} are satisfied, so that $\mb P_{S_\veps} \wh{\mb M}_{\veps}\mb P_{S_\veps}$ is invertible over $S_\veps$. Then one has in particular
	\begin{equation*}
	\frac{d}{ds} \left( \mb P_{S_\veps} \wh{\mb M}_{\veps}\mb P_{S_\veps} \right)^{-1} [h] = \left( \mb P_{S_\veps} \wh{\mb M}_{\veps} \mb P_{S_\veps} \right)^{-1} \left[\frac{d}{ds} h \right]. 
	\end{equation*}
\end{lemma}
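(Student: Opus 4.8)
The plan is to observe that every operator appearing in the statement is diagonalized by the intrinsic Fourier basis $(\phi_{\sigma,k})$ of \Cref{eq:fourier_basis}, and that the differential operator $\tfrac{d}{ds}$ (as defined in \Cref{eq:dh}) also acts diagonally on this basis, so that it commutes with each of them termwise. The whole argument is then a bookkeeping exercise on finite sums of Fourier modes.

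First I would record two elementary facts. A direct computation from \Cref{eq:fourier_basis} gives $\tfrac{d}{ds}\phi_{\sigma,k} = \tfrac{\iu 2\pi k}{\len(\manifold_\sigma)}\,\phi_{\sigma,k}$, so each Fourier mode is an eigenfunction of $\tfrac{d}{ds}$. Second, for any differentiable $h:\manifold\to\bbC$, integrating by parts along each closed component of $\manifold$ — with no boundary terms, because the arc-length parameterizations are periodic — yields
\[
  \ip{\phi_{\sigma,k}}{\tfrac{d}{ds}h}_{L^2} \;=\; \tfrac{\iu 2\pi k}{\len(\manifold_\sigma)}\,\ip{\phi_{\sigma,k}}{h}_{L^2},
\]
where I use the second-argument-linear convention for $\ip{}{}_{L^2}$ together with $\tfrac{d}{ds}\conj{\phi_{\sigma,k}} = -\tfrac{\iu 2\pi k}{\len(\manifold_\sigma)}\conj{\phi_{\sigma,k}}$.

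With these in hand the three identities follow uniformly. Since $\mb P_{S_\veps}[h] = \sum_{\sigma}\sum_{\abs{k}\le K_{\veps,\sigma}} \ip{\phi_{\sigma,k}}{h}_{L^2}\,\phi_{\sigma,k}$ is a finite sum of trigonometric polynomials (hence differentiable), I would differentiate it termwise and apply the two facts to obtain $\tfrac{d}{ds}\mb P_{S_\veps}[h] = \sum_{\sigma,k}\ip{\phi_{\sigma,k}}{\tfrac{d}{ds}h}_{L^2}\phi_{\sigma,k} = \mb P_{S_\veps}[\tfrac{d}{ds}h]$, which is the first claim. For the second claim, once $L$ exceeds the absolute constant that makes $\wh{\mb M}_\veps$ well-defined, \Cref{lem:main_term_pd} shows $\wh{\mb M}_\veps\phi_{\sigma,k} = \mu_{\sigma,k}\phi_{\sigma,k}$ with $\mu_{\sigma,k} = \int_{-r_\veps}^{r_\veps}\psi^\circ(\abs{s'})\cos\paren{\tfrac{2\pi k s'}{\len(\manifold_\sigma)}}\,ds'$; hence $\wh{\mb M}_\veps\mb P_{S_\veps}[h] = \sum_{\sigma,k}\mu_{\sigma,k}\ip{\phi_{\sigma,k}}{h}_{L^2}\phi_{\sigma,k}$ is again a finite combination of Fourier modes, and the identical termwise argument gives $\tfrac{d}{ds}\wh{\mb M}_\veps\mb P_{S_\veps}[h] = \wh{\mb M}_\veps\mb P_{S_\veps}[\tfrac{d}{ds}h]$. (Equivalently, $\wh{\mb M}_\veps$ is a convolution in the arc-length variable and therefore commutes with $\tfrac{d}{ds}$, which combined with the first claim also gives this.) Finally, under the hypotheses of \Cref{lem:main_term_pd} the inverse is given explicitly by \Cref{eq:smooth_subspace_main_term} as $(\mb P_{S_\veps}\wh{\mb M}_\veps\mb P_{S_\veps})^{-1}[h] = \sum_{\sigma,k}\mu_{\sigma,k}^{-1}\ip{\phi_{\sigma,k}}{h}_{L^2}\phi_{\sigma,k}$, with $\mu_{\sigma,k}>0$ on the index set in question, so the same computation applies verbatim to yield the third claim.

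I do not anticipate a genuine obstacle here; the points needing care are (i) justifying the termwise differentiation, which is trivial since every image lies in the finite-dimensional space $S_\veps$; (ii) the vanishing of the boundary terms in the integration by parts, which is exactly where closedness of the curves enters; and (iii) correctly matching the sign conventions between $\tfrac{d}{ds}\phi_{\sigma,k}$ and the integration-by-parts identity — a transcription slip there would flip a sign without breaking the argument, so it deserves a careful check. The absolute constant $C$ in the hypothesis serves only to guarantee that $\wh{\mb M}_\veps$ is well-defined (i.e.\ $r_\veps\le\pi$), exactly as in \Cref{lem:main_term_pd}.
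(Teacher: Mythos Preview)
Your proposal is correct and follows essentially the same approach as the paper: both arguments exploit that $\mb P_{S_\veps}$, $\wh{\mb M}_\veps$, and the inverse all diagonalize in the Fourier basis $(\phi_{\sigma,k})$, and that $\tfrac{d}{ds}$ acts diagonally on this basis, so the commutation reduces to a termwise check on finite sums. The only cosmetic difference is that the paper phrases the key identity abstractly—``$\phi_{\sigma,k}^* h$ is a constant, hence $(\tfrac{d}{ds}\phi_{\sigma,k})^* h + \phi_{\sigma,k}^*\tfrac{d}{ds}h = 0$''—while you spell out the equivalent integration by parts and the role of periodicity; the content is the same.
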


\begin{proof}
    The condition on $L$ implies that $\wh{\vM}_{\veps}$ is well-defined.
	For any operator $\mb T$ that diagonalizes in the Fourier basis for $S_{\veps}$, i.e. for any $h \in L^2(\manifold)$, $\vT$ satisfies
	\begin{eqnarray}
	\mb T[h] = \sum_{\sigma \in \{\pm\}} \sum_{k = -K_{\veps, \sigma}}^{K_{\veps, \sigma}} m_{\sigma, k} \phi_{\sigma, k} \phi_{\sigma, k}^* h \label{eq:operator_diagonal}
	\end{eqnarray}
	for some coefficients $m_{\sigma, k} \in \bbC$ independent of $h$,\footnote{Here and in the sequel, we recall that we are using the notation $\phi\adj_{\sigma,k} h = \ip{\phi\adj_{\sigma,k}}{h} = \int_{\sM} \conj{\phi}_{\sigma,k}(\vx) h(\vx) \diff \vx$ for the standard inner product on complex-valued functions on $\manifold$.} we have 
	\begin{align*}
	\frac{d}{ds} \mb T[h] &= \sum_{\sigma \in \{\pm\}} \sum_{k = -K_{\veps, \sigma}}^{K_{\veps, \sigma}} m_{\sigma, k} 
	\left[\frac{d}{ds} \phi_{\sigma, k}\right] \phi_{\sigma, k}^* h \\
	&= \sum_{\sigma \in \{\pm\}} \sum_{k = -K_{\veps, \sigma}}^{K_{\veps, \sigma}} m_{\sigma, k} \frac{i2\pi k}{\len(\manifold_{\sigma})} \phi_{\sigma, k} \phi_{\sigma, k}^* h, 
	\end{align*}
	where we recall the definition of the Fourier basis functions from \Cref{eq:fourier_basis} for the second equality.
	Now fix $h$ differentiable as in the statement of the lemma.
	On the other hand, since $\phi_{\sigma, k}^* h$ is simply some complex number, which does not depend on $s$, we have 
	\begin{equation*}
	\paren{\frac{d}{ds} \phi_{\sigma, k}}^\ast h + \phi_{\sigma, k}^* \frac{d}{ds}h = 0,
	\end{equation*}
	and so
	\begin{align*}
	\mb T\left[\frac{d}{ds}  h \right] &= \sum_{\sigma \in \{\pm\}} \sum_{k = -K_{\veps, \sigma}}^{K_{\veps, \sigma}} m_{\sigma, k} \phi_{\sigma, k} \phi_{\sigma, k}^\ast \frac{d}{ds} h \\
	&= -\sum_{\sigma \in \{\pm\}} \sum_{k = -K_{\veps, \sigma}}^{K_{\veps, \sigma}} m_{\sigma, k} \phi_{\sigma, k} \paren{\frac{d}{ds} \phi_{\sigma, k}}^\ast h \\
	&=-\sum_{\sigma \in \{\pm\}} \sum_{k = -K_{\veps, \sigma}}^{K_{\veps, \sigma}} m_{\sigma, k} \phi_{\sigma, k} \paren{\frac{i2\pi k}{\len(\manifold_{\sigma})} \phi_{\sigma, k}}^\ast h \\
	&= \frac{d}{ds} \mb T[h].
	\end{align*}
	The operators $\wh{\mb M}_\veps$ and $\mb P_{S_\veps}$ both diagonalize in the Fourier basis for $S_{\veps}$, following the arguments in the proof of \Cref{lem:main_term_pd}. 
	By the same token, $(\mb P_{S_\veps} \wh{\mb M}_{\veps} \mb P_{S_\veps})^{-1}$ also diagonalizes in the Fourier basis for $S_{\veps}$ when it is well defined (recall \eqref{eq:smooth_subspace_main_term}), which concludes the proof. 
\end{proof}

\begin{lemma}\label{lem:fourier_deriv_lb}
    There is an absolute constant $C>0$ such that if $L \geq C$, and for any
	$\veps \in (0, \frac{3}{4})$ if $a_\veps, r_\veps, S_\veps$ defined as in \Cref{eq:a_def},   \Cref{eqn:r_def}, and \Cref{eqn:S_eps_def}, respectively, then when in addition $$L \ge \paren{\veps^{1/2}\min\{\len(\manifold_+), \len(\manifold_-)\}/(12\pi^2)}^{-\frac{a_\veps+1}{a_\veps}},$$ we have for any differentiable function $f : \manifold \to \bbC$ 
	\begin{align*}
	\left\| \frac{d}{ds}\mb P_{S_\veps} f \right\|_{L^2} &\le \frac{\sqrt{\veps}}{r_\veps} \norm*{\mb P_{S_\veps} f}_{L^2}, \\
	\norm*{\mb P_{S_{\veps}^\perp} f}_{L^2} &\le \frac{2r_\veps}{\sqrt{\veps}}\left\| \frac{d}{ds}\mb P_{S_{\veps}^\perp} f \right\|_{L^2}. 
	\end{align*}
\end{lemma}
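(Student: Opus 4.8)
The plan is to prove both estimates by passing to the Fourier series of $f$ on the two circles making up $\manifold$: the first bound is a Bernstein inequality for band-limited functions, the second a Poincar\'e-type inequality on the complementary high-frequency part. Since $\manifold = \manifold_+ \sqcup \manifold_-$ and each basis function $\phi_{\sigma,k}$ from \eqref{eq:fourier_basis} is supported on a single component, $L^2(\manifold) = L^2(\manifold_+)\oplus L^2(\manifold_-)$ and the operators $\tfrac{d}{ds}$, $\mb P_{S_\veps}$, $\mb P_{S_\veps^\perp}$ all respect this decomposition; it therefore suffices to argue component by component and add the squared norms. Before starting I would record the consequence of the extra hypothesis on $L$: it forces $r_\veps = 6\pi L^{-a_\veps/(a_\veps+1)} \le \veps^{1/2}\min\{\len(\manifold_+),\len(\manifold_-)\}/(2\pi)$, hence $K_{\veps,\sigma} = \lfloor \veps^{1/2}\len(\manifold_\sigma)/(2\pi r_\veps)\rfloor \ge 1$ from \eqref{eqn:S_eps_def}, so $S_\veps$ genuinely contains a nontrivial low-frequency band.

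For the first bound I would expand $f = \sum_{\sigma,k} c_{\sigma,k}\phi_{\sigma,k}$ with $c_{\sigma,k} = \phi_{\sigma,k}^{\ast}f$, so that $\mb P_{S_\veps}f = \sum_\sigma\sum_{|k|\le K_{\veps,\sigma}} c_{\sigma,k}\phi_{\sigma,k}$, and use $\tfrac{d}{ds}\phi_{\sigma,k} = \tfrac{i2\pi k}{\len(\manifold_\sigma)}\phi_{\sigma,k}$ (immediate from \eqref{eq:fourier_basis}) together with orthonormality of the $\phi_{\sigma,k}$ to get, by Parseval,
\[
\bigl\| \tfrac{d}{ds}\mb P_{S_\veps}f \bigr\|_{L^2}^{2} = \sum_\sigma \sum_{|k|\le K_{\veps,\sigma}} \Bigl(\tfrac{2\pi k}{\len(\manifold_\sigma)}\Bigr)^{2} |c_{\sigma,k}|^{2}.
\]
Since $K_{\veps,\sigma}\le \veps^{1/2}\len(\manifold_\sigma)/(2\pi r_\veps)$, every surviving index satisfies $\tfrac{2\pi|k|}{\len(\manifold_\sigma)}\le \tfrac{\sqrt{\veps}}{r_\veps}$, so the right-hand side is at most $\tfrac{\veps}{r_\veps^{2}}\sum_\sigma\sum_{|k|\le K_{\veps,\sigma}}|c_{\sigma,k}|^{2} = \tfrac{\veps}{r_\veps^{2}}\|\mb P_{S_\veps}f\|_{L^2}^{2}$, and taking square roots gives the first claim.

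For the second bound I would set $g = \mb P_{S_\veps^\perp}f = f - \mb P_{S_\veps}f$, which is differentiable since $\mb P_{S_\veps}f$ is a trigonometric polynomial and hence smooth. Because the $\phi_{\sigma,k}$ form an orthonormal basis of $L^2(\manifold)$, the Fourier coefficients of $g$ coincide with $c_{\sigma,k}$ for $|k|>K_{\veps,\sigma}$ and vanish otherwise; and—because the curves are closed, so integration by parts on each circle produces no boundary terms, exactly as in \Cref{lem:invariant_deriv_commute}—the coefficients of $\tfrac{d}{ds}g$ equal $\tfrac{i2\pi k}{\len(\manifold_\sigma)}$ times those of $g$. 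Parseval then gives $\|\tfrac{d}{ds}g\|_{L^2}^{2} = \sum_\sigma\sum_{|k|>K_{\veps,\sigma}} \bigl(\tfrac{2\pi k}{\len(\manifold_\sigma)}\bigr)^{2}|c_{\sigma,k}|^{2}$, and for $|k|>K_{\veps,\sigma}$ one has $|k|\ge K_{\veps,\sigma}+1 > \veps^{1/2}\len(\manifold_\sigma)/(2\pi r_\veps)$, hence $\tfrac{2\pi|k|}{\len(\manifold_\sigma)} > \tfrac{\sqrt{\veps}}{r_\veps}$. Thus $\|\tfrac{d}{ds}g\|_{L^2}^{2}\ge \tfrac{\veps}{r_\veps^{2}}\|g\|_{L^2}^{2}$, i.e.\ $\|g\|_{L^2}\le \tfrac{r_\veps}{\sqrt{\veps}}\|\tfrac{d}{ds}g\|_{L^2}\le \tfrac{2r_\veps}{\sqrt{\veps}}\|\tfrac{d}{ds}g\|_{L^2}$ (the factor $2$ is slack, left in to match later uses).

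I do not anticipate a substantive obstacle: the content is the single observation that every mode in $S_\veps$ has ``frequency'' $\le \sqrt{\veps}/r_\veps$ while every mode in $S_\veps^\perp$ has frequency $>\sqrt{\veps}/r_\veps$, which Parseval converts into the two norm inequalities. The only point requiring care is identifying the Fourier coefficients of $\tfrac{d}{ds}g$ when $f$ is assumed merely differentiable; I would dispatch this by the same closed-curve integration-by-parts argument used elsewhere in this section, and note that it is in any case vacuous in the applications, where $f$ is smooth.
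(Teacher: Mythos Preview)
Your proposal is correct and follows essentially the same approach as the paper: expand in the Fourier basis \eqref{eq:fourier_basis}, use $\tfrac{d}{ds}\phi_{\sigma,k} = \tfrac{i2\pi k}{\len(\manifold_\sigma)}\phi_{\sigma,k}$, and apply Parseval to convert the frequency cutoff $K_{\veps,\sigma}$ into the two norm inequalities.

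There is one small difference worth noting. For the second inequality, the paper uses the extra hypothesis on $L$ to force $K_{\veps,\sigma}\ge 1$ and then invokes $\lfloor x\rfloor \ge x/2$ for $x\ge 1$, obtaining $\tfrac{2\pi|k|}{\len(\manifold_\sigma)}\ge \tfrac{\sqrt{\veps}}{2r_\veps}$, which is where the factor $2$ in the statement comes from. Your route via $|k|\ge K_{\veps,\sigma}+1 > \veps^{1/2}\len(\manifold_\sigma)/(2\pi r_\veps)$ uses only $\lfloor x\rfloor + 1 > x$, valid unconditionally, and yields the sharper bound $\|\mb P_{S_\veps^\perp}f\|_{L^2}\le \tfrac{r_\veps}{\sqrt{\veps}}\|\tfrac{d}{ds}\mb P_{S_\veps^\perp}f\|_{L^2}$ without invoking the hypothesis on $L$; you then throw in the factor $2$ to match the statement. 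Either way works. Your handling of the derivative of the high-frequency part via integration by parts on the closed curves is equivalent to (and arguably cleaner than) the paper's dominated-convergence justification for interchanging $\tfrac{d}{ds}$ with the infinite Fourier sum.
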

\begin{proof}
    The condition on $L$ guarantees that $\wh{\vM}_{\veps}$ is well-defined.
    From \Cref{eqn:S_eps_def}, $S_{\veps} = S_{K_{\veps, +}, K_{\veps, -}}$ with $K_{\veps, \sigma} = \left\lfloor \frac{ \veps^{1/2} \mr{len}( \mc M_\sigma ) }{ 2 \pi r_\veps } \right\rfloor$ for $\sigma \in \set{+, -}$, then
    by orthonormality of the Fourier basis functions \Cref{eq:fourier_basis}, we have
	\begin{align*}
	\left\| \frac{d}{ds} \mb P_{S_\veps} f \right\|_{L^2} &= \left\| \frac{d}{ds}\sum_{\sigma = \pm}\sum_{k = -K_{\veps,\sigma}}^{K_{\veps, \sigma}}\phi_{\sigma, k}\phi_{\sigma, k}^{*}f \right\|_{L^2} \\
	&= \left\| \sum_{\sigma = \pm}\sum_{k = -K_{\veps,\sigma}}^{K_{\veps, \sigma}} \frac{i2\pi k}{\len(\manifold_\sigma)}\phi_{\sigma, k}\phi_{\sigma, k}^*f \right\|_{L^2} \\
	&\le \frac{\sqrt{\veps}}{r_\veps}\paren{\sum_{\sigma = \pm}\sum_{k = -K_{\veps,\sigma}}^{K_{\veps, \sigma}} \norm{\phi_{\sigma, k}\phi_{\sigma, k}^{*}f}_{L^2}^2}^{1/2} \\
	&=  \frac{\sqrt{\veps}}{r_\veps} \norm{\mb P_{S_\veps} f}_{L^2}. 
	\end{align*}
	Above, the inequality follows because $\abs{k} \leq K_{\veps, \sigma}$ implies $2\pi \abs{k} / \len(\sM_{\sigma}) \leq \sqrt{\veps} / r_{\veps}$, and because the Fourier basis functions are mutually orthogonal (and $\norm{f}_{L^2}^2 = \ip{f}{f}$).
	This establishes the first claim.
	
	For the second claim, we have 
	$$\mb P_{S_{\veps}^\perp} f = \sum_{\sigma = \pm}\sum_{\abs{k} > K_{\veps, \sigma}} \phi_{\sigma, k} \phi_{\sigma, k}^* f.$$ 
	When $L \ge \paren{\veps^{1/2}\min\{\len(\manifold_+), \len(\manifold_-)\}/(12\pi^2)}^{-\frac{a_\veps+1}{a_\veps}}$, we have $K_{\veps, \pm} = \floor*{\frac{\veps^{1/2}\len(\manifold_\pm)}{12\pi^2}L^{\frac{a_\veps}{a_\veps + 1}}} \ge 1$ and thus $K_{\veps, \pm} =  \left\lfloor \frac{ \veps^{1/2} \mr{len}( \mc M_\pm ) }{ 2 \pi r_\veps } \right\rfloor \ge \frac{ \veps^{1/2} \mr{len}( \mc M_\pm ) }{ 4 \pi r_\veps }$, whence
	\begin{align*}
	\left\| \frac{d}{ds} \mb P_{S_{\veps}^\perp} f\right\|_{L^2} &=  \left\| \sum_{\sigma = \pm} \sum_{\abs{k} > K_{\veps, \sigma}} \left( \frac{d}{ds} \phi_{\sigma, k} \right) \phi_{\sigma, k}^* f  \right\|_{L^2} \\
	&= \left\| \sum_{\sigma = \pm}\sum_{\abs{k} > K_{\veps, \sigma}} \frac{i 2 \pi k}{\mr{len}(\mc M_{\sigma})} \phi_{\sigma, k}  \phi_{\sigma, k}^* f \right\|_{L^2} \\
	&= \left( \sum_{\sigma = \pm}\sum_{\abs{k} > K_{\veps, \sigma}} \left| \frac{i 2 \pi k}{\mr{len}(\mc M_{\sigma})}\right|^2\left(  \phi_{\sigma, k}^* f \right)^2 \right)^{1/2} \\
	&\ge \frac{\sqrt{\veps}}{2r_\veps} \left( \sum_{\sigma = \pm}\sum_{\abs{k} > K_{\veps, \sigma}}\left(  \phi_{\sigma, k}^* f \right)^2 \right)^{1/2} \\
	&= \frac{\sqrt{\veps}}{2r_\veps} \left\| \mb P_{S_{\veps}^\perp} f \right\|_{L^2},
	\end{align*}
	as claimed. Above, the first equality entails an interchange of limit processes---a formal justification for the validity of this interchange follows from the assumed differentiability of $f$ (which implies that its coefficients $\phi\adj_{\sigma, k} f$ have a faster rate of decay $o(\abs{k}^{-3/2})$) and a dominated convergence argument, where the difference quotient involving $\phi_{\sigma, k}$ is bounded by $O(\abs{k})$, which together with the extra smoothness of $f$ leads to an integrable upper bound. 
\end{proof}

\begin{definition}\label{def:smoothness}
	For any $\veps \in (0, \frac{3}{4})$, let $P_1 = P_1\Bigl( M_4, \mr{len}(\mc M), \Delta_{\veps}^{-1} \Bigr), P_2 = P_2\Bigl( M_4, M_5, \mr{len}(\mc M), \Delta_{\veps}^{-1} \Bigr), P_3 = P_3\Bigl( M_3, M_4, M_5, \mr{len}(\mc M), \Delta_{\veps}^{-1} \Bigr)$ as defined in \Cref{lem:R1,lem:R2,lem:R3}. 
	We let $\Phi( C_\zeta, \veps )$ for some constant $C_\zeta \ge 0$ denote the set of all functions $\zeta \in C^3(\manifold)$ which satisfy
	\begin{align*}
	\norm{\zeta}_{L^2} &\le C_\zeta \\
	\norm{\zeta^{(1)}}_{L^2} &\le  C_\zeta \frac{P_1}{\log L}  \\
	\norm{\zeta^{(2)}}_{L^2} &\le  C_\zeta \paren{\frac{P_2}{\log L} + \frac{P_1^2}{\log^2 L}} \\
	\norm{\zeta^{(3)}}_{L^2} &\le  C_\zeta \paren{\frac{P_3}{\log L} + \frac{P_2P_1}{\log^2 L} + \frac{P_1^3}{\log^3 L}}.  
	\end{align*}
	Furthermore, for $\veps \ge \veps' > 0$, one has $\Delta_{\veps}^{-1} \le \Delta_{\veps'}^{-1}$. As $P_1, P_2, P_3$ have positive coefficients, $\zeta \in \Phi( C_\zeta, \veps)$ implies $\zeta \in \Phi( C_\zeta, \veps' )$, i.e., $\Phi(C_\zeta, \veps' ) \subseteq \Phi(C_\zeta, \veps)$. 
\end{definition}

\begin{lemma} \label{lem:g_deriv} For any $\veps \in (0,\tfrac{3}{4})$, there exist numbers $C_\veps, C_\veps' > 0$ such that when the conditions of \Cref{thm:certificate_over_S} are in force, for any $\zeta \in S_\veps \cap \Phi( C_\zeta, \veps )$, the certificate $g_\veps[\zeta]$ defined in \Cref{thm:certificate_over_S} satisfies 
\begin{equation}
g_\veps[\zeta] \in \Phi \left(\frac{C_{\veps}'C_\zeta}{n \log L}, \veps \right). 
\end{equation}  
\end{lemma}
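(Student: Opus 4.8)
The plan is to derive the four $L^2$ bounds defining $\Phi\!\left(\tfrac{C_\veps'C_\zeta}{n\log L},\veps\right)$ for $g \defn g_\veps[\zeta]$ by differentiating the defining relation $\mb P_{S_\veps}\mb\Theta^{(0)}[g]=\zeta$ (recall $\mb\Theta^{(0)}=\mb\Theta^{\circ}$) along arc length up to three times. The $L^2$ bound itself is immediate: \Cref{thm:certificate_over_S} already gives $g\in S_\veps$, a trigonometric polynomial and hence $C^\infty$ (in particular in $C^3(\manifold)$), with $\|g\|_{L^2}\le \tfrac{C\|\zeta\|_{L^2}}{\veps n\log L}\le\tfrac{C C_\zeta}{\veps n\log L}$; and since this holds for every input in $S_\veps$, the solution map $\zeta'\mapsto g_\veps[\zeta']$ has $L^2\to L^2$ operator norm at most $\tfrac{C}{\veps n\log L}$ on $S_\veps$. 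This scalar is the single ingredient I will reuse throughout.

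Next I would set up the recursion. Arc-length differentiation preserves $S_\veps$ (it multiplies the $k$-th Fourier coefficient by $\tfrac{i2\pi k}{\len(\manifold_\sigma)}$), so $g^{(1)},g^{(2)},g^{(3)}\in S_\veps$, and $\zeta^{(1)},\zeta^{(2)},\zeta^{(3)}\in S_\veps$ since $\zeta\in S_\veps$. Then \Cref{lem:invariant_deriv_commute} lets $\tfrac{d}{ds}$ commute past $\mb P_{S_\veps}$, while \Cref{lem:deriv-R} supplies the Leibniz-type identity $\tfrac{d}{ds}\mb\Theta^{(i)}[h]=\mb\Theta^{(i)}[h^{(1)}]+\mb\Theta^{(i+1)}[h]$ for $i=0,1,2$ and differentiable $h$, the kernels $\Theta^{(j)}$ for $j\le 3$ being well-defined on $\manifold\times\manifold$ by \Cref{lem:deriv-lambda} (using the no-antipodal-points hypothesis). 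Differentiating $\mb P_{S_\veps}\mb\Theta^{(0)}[g]=\zeta$ once, twice, and three times and collecting terms (one checks by induction that the multiplicities assemble into binomial coefficients, exactly as in the ordinary Leibniz rule) would yield, for $k=1,2,3$,
\begin{equation*}
\mb P_{S_\veps}\mb\Theta^{(0)}\big[g^{(k)}\big] \;=\; \zeta^{(k)} - \sum_{j=1}^{k}\binom{k}{j}\,\mb P_{S_\veps}\mb\Theta^{(j)}\big[g^{(k-j)}\big].
\end{equation*}
Because the right-hand side lies in $S_\veps$ and $g^{(k)}\in S_\veps$, uniqueness in \Cref{thm:certificate_over_S} identifies $g^{(k)}$ with $g_\veps$ applied to that right-hand side.

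Then I would close the estimate by induction on $k$. Combining the operator-norm bound for the solution map from the first paragraph with $\|\mb\Theta^{(j)}\|_{L^2\to L^2}\le P_j\,n$ from \Cref{lem:R1,lem:R2,lem:R3}, the recursion gives
\begin{equation*}
\big\|g^{(k)}\big\|_{L^2} \;\le\; \frac{C}{\veps n\log L}\Big(\big\|\zeta^{(k)}\big\|_{L^2} + \sum_{j=1}^{k}\binom{k}{j}\,P_j\, n\,\big\|g^{(k-j)}\big\|_{L^2}\Big).
\end{equation*}
Substituting the hypotheses $\zeta\in\Phi(C_\zeta,\veps)$ (which bound $\|\zeta^{(k)}\|_{L^2}$ by $C_\zeta$ times $\tfrac{P_1}{\log L}$, $\tfrac{P_2}{\log L}+\tfrac{P_1^2}{\log^2 L}$, $\tfrac{P_3}{\log L}+\tfrac{P_2P_1}{\log^2 L}+\tfrac{P_1^3}{\log^3 L}$ respectively) together with the bounds for $\|g^{(k-j)}\|_{L^2}$ already established for smaller $k$, each factor of $n$ cancels, and every resulting term is $\tfrac{C_\veps'C_\zeta}{n\log L}$ times precisely the polynomial combination appearing in \Cref{def:smoothness}, for a constant $C_\veps'$ depending only on $\veps$ (tracking the accumulating powers of $1/\veps$, one may take $C_\veps'$ proportional to $\veps^{-4}$ up to absolute constants); if needed one also enlarges the threshold on $L$ by an $\veps$-dependent constant $C_\veps$ so that all invoked sub-lemmas apply. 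This is exactly the assertion $g\in\Phi\!\left(\tfrac{C_\veps'C_\zeta}{n\log L},\veps\right)$.

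The only genuinely delicate point is in the middle step: passing from $\tfrac{d}{ds}$ of an integral operator (and a Fourier projection) applied to a function to the algebraic recursion above, i.e.\ the legitimacy of differentiating under the integral and interchanging $\tfrac{d}{ds}$ with $\mb P_{S_\veps}$. But this is precisely what \Cref{lem:deriv-R} and \Cref{lem:invariant_deriv_commute} provide, so the remaining work is the routine bookkeeping with the three polynomial families $P_1,P_2,P_3$ and the $\log L$ powers.
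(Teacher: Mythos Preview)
Your argument is correct and reaches the same bounds as the paper, but by a genuinely different and somewhat cleaner route. The paper proceeds by differentiating the Neumann series \eqref{eqn:g-neumann} for $g_\veps[\zeta]$ term by term: using \Cref{lem:invariant_deriv_commute} to push $\tfrac{d}{ds}$ through each factor $(\mb P_{S_\veps}\wh{\mb M}_\veps\mb P_{S_\veps})^{-1}$ and $\mb P_{S_\veps}$, and \Cref{lem:deriv-R} when it hits $\mb\Theta^\circ$, which spawns a $\mb\Theta^{(1)}$ inserted at each of the $\ell$ possible positions inside the $\ell$-th summand. This produces explicit nested sums (e.g.\ $\sum_{\ell}\sum_{a}\cdots$ for $g^{(1)}$, triple sums for $g^{(2)}$, quadruple for $g^{(3)}$) that are then bounded via the geometric series and the control $\|(\mb P_{S_\veps}\wh{\mb M}_\veps\mb P_{S_\veps})^{-1}\mb P_{S_\veps}(\mb\Theta^\circ-\wh{\mb M}_\veps)\mb P_{S_\veps}\|\le 1-\veps$. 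You instead differentiate the \emph{defining equation} $\mb P_{S_\veps}\mb\Theta^\circ[g]=\zeta$, note that $g^{(k)}\in S_\veps$ (since $S_\veps$ is a finite span of Fourier modes), and invoke the \emph{uniqueness} clause of \Cref{thm:certificate_over_S} to identify $g^{(k)}=g_\veps\bigl[\zeta^{(k)}-\sum_{j\ge 1}\binom{k}{j}\mb P_{S_\veps}\mb\Theta^{(j)}[g^{(k-j)}]\bigr]$, after which the solution-map norm bound from \Cref{thm:certificate_over_S} closes a clean three-step recursion. Your approach buys simpler bookkeeping and only needs the first conclusion of \Cref{lem:invariant_deriv_commute} (commutation with $\mb P_{S_\veps}$), whereas the paper also uses commutation with $(\mb P_{S_\veps}\wh{\mb M}_\veps\mb P_{S_\veps})^{-1}$; the paper's explicit series, on the other hand, makes the $\veps$-dependence of the constants slightly more visible.
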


\begin{proof}
	Following \Cref{lem:invariant_deriv_commute} and \Cref{lem:deriv-R}, we have that 
	\begin{align*}
	&g_{\veps}^{(1)}[\zeta] \\
	&\quad = \sum_{\ell=1}^{\infty}(-1)^\ell \sum_{a=0}^{\ell-1} \left( \left(\mb P_{S_\veps} \wh{\mb M}_{\veps} \mb P_{S_\veps} \right)^{-1} \mb P_{S_\veps} (\mb \Theta^\circ - \wh{\mb M}_{\veps}) \mb P_{S_\veps} \right)^{a} \left(\mb P_{S_\veps} \wh{\mb M}_{\veps} \mb P_{S_\veps} \right)^{-1} \mb P_{S_\veps} \mb \Theta^{(1)} \mb P_{S_\veps} \\&  \qquad \qquad \qquad \qquad \times \; \left( \left(\mb P_{S_\veps} \wh{\mb M}_{\veps} \mb P_{S_\veps} \right)^{-1} \mb P_{S_\veps} (\mb \Theta^\circ - \wh{\mb M}_\veps) \mb P_{S_\veps} \right)^{\ell - a - 1} \left( \mb P_{S_\veps} \wh{\mb M}_{\veps} \mb P_{S_\veps} \right)^{-1} \zeta \\
	&\quad \qquad  +  g_\veps[\zeta^{(1)}]. 
	\end{align*}
	As $\zeta \in S_\veps$, we have $\zeta^{(1)} = \frac{d}{ds} \zeta = \frac{d}{ds} \mb P_{S_\veps} \zeta = \mb P_{S_\veps} \frac{d}{ds} \zeta \in S_\veps$, and thus following \Cref{thm:certificate_over_S}, there exists constant $c$ such that
	\begin{align*}
		\norm{g_\veps[\zeta^{(1)}]}_{L^2} \le \frac{\norm{\zeta^{(1)}}_{L^2}}{\veps c n\log L}. 
	\end{align*}
    From \Cref{lem:main_term_pd}, there exists $C, c' > 0$ such that when $L \ge C$, 
	\begin{align*}
	    \lambda_{\min}(\mb P_{S} \wh{\mb M}_{\veps} \mb P_{S_\veps}) &\ge c n \log L. 
	\end{align*}
	Under the conditions of \Cref{thm:certificate_over_S}, we have $$\left\|  \left(\mb P_{S_\veps} \wh{\mb M}_{\veps} \mb P_{S_\veps} \right)^{-1} \mb P_{S_\veps} (\mb \Theta^\circ - \wh{\mb M}_\veps) \mb P_{S_\veps} \right\|_{L^2 \to L^2} \le 1-\veps. $$ 
	Let $P_1 = P_1\Bigl( M_4, \mr{len}(\mc M), \Delta_{\veps}^{-1} \Bigr), P_2 = P_2\Bigl( M_4, M_5, \mr{len}(\mc M), \Delta_{\veps}^{-1} \Bigr), P_3 = P_3\Bigl( M_3, M_4, M_5, \mr{len}(\mc M), \Delta_{\veps}^{-1} \Bigr)$ be the polynomials in \Cref{lem:R1,lem:R2,lem:R3}. From  \Cref{lem:R1} , we have
	\begin{align*}
	\norm{g_{\veps}^{(1)}[\zeta]}_{L^2} &\le \sum_{\ell=1}^{\infty} \sum_{a=0}^{\ell-1} (1-\veps)^{\ell -1} \norm{\Theta^{(1)}}_{L^2 \to L^2}\frac{\norm{\zeta}_{L^2}}{\left(cn\log L\right)^2} \\
	& + \frac{\norm{\zeta^{(1)}}_{L^2}}{\veps c n\log L} \\
	&= \frac{\norm{\Theta^{(1)}}_{L^2 \to L^2}\norm{\zeta}_{L^2}}{\left(cn\log L\right)^2} \sum_{\ell=0}^{\infty} \ell (1-\veps)^{\ell-1} + \frac{\norm{\zeta^{(1)}}_{L^2}}{\veps cn \log L } \\
	&= \frac{\norm{\Theta^{(1)}}_{L^2 \to L^2}\norm{\zeta}_{L^2}}{\veps^2\left(cn\log L\right)^2} + \frac{\norm{\zeta^{(1)}}_{L^2}}{\veps cn \log L} \\
	&\le \frac{P_1 \norm{\zeta}_{L^2}}{\veps^2c^2 n \log^2 L} +
  \frac{\norm{\zeta^{(1)}}_{L^2}}{\veps cn \log L}. \labelthis \label{eq:g1}
	\end{align*}
	From the fact that $\zeta \in \Phi( C_\zeta, \veps )$, we further obtain
	\begin{align*}
		\norm{g_{\veps}^{(1)}[\zeta]}_{L^2} &\le \frac{P_1 C_\zeta}{\veps^2c^2 n \log^2 L} + \frac{C_\zeta}{\veps cn \log L} \frac{P_1}{\log L} \\
		&\le C_\veps \frac{P_1 C_\zeta}{n \log^2 L}. 
	\end{align*}
	For the second derivative, we have
	\begin{align*}
	g_{\veps}^{(2)}[\zeta] &= \sum_{\ell=1}^{\infty}(-1)^\ell \sum_{a=0}^{\ell - 1} \left( \left(\mb P_{S} \wh{\mb M}_{\veps} \mb P_{S_\veps} \right)^{-1} \mb P_{S_\veps} (\mb \Theta^\circ - \wh{\mb M}) \mb P_{S_\veps} \right)^{a}  \\&\qquad \qquad \qquad \times \quad \left(\mb P_{S} \wh{\mb M}_{\veps} \mb P_{S_\veps} \right)^{-1} \mb P_{S_\veps} \mb \Theta^{(2)}  \mb P_{S_\veps} \\& \qquad \qquad \qquad \times \quad \left( \left(\mb P_{S} \wh{\mb M}_{\veps} \mb P_{S_\veps} \right)^{-1} \mb P_{S_\veps} (\mb \Theta^\circ - \wh{\mb M}) \mb P_{S_\veps} \right)^{\ell - a - 1} \left( \mb P_{S_\veps} \wh{\mb M}_{\veps} \mb P_{S_\veps} \right)^{-1} \zeta \\
	&\quad  + 2\sum_{\ell=1}^{\infty}(-1)^\ell \sum_{a=0}^{\ell - 2} \sum_{a'=a}^{\ell - 2} \left( \left(\mb P_{S} \wh{\mb M}_{\veps} \mb P_{S_\veps} \right)^{-1} \mb P_{S_\veps} (\mb \Theta^\circ - \wh{\mb M}) \mb P_{S_\veps} \right)^{a} \\
	&\qquad \qquad \qquad \times \quad  \left(\mb P_{S} \wh{\mb M}_{\veps} \mb P_{S_\veps} \right)^{-1}  \mb P_{S_\veps} \mb \Theta^{(1)} \mb P_{S_\veps}  \\&  \qquad \qquad \qquad \times \quad  \left( \left(\mb P_{S} \wh{\mb M}_{\veps} \mb P_{S_\veps} \right)^{-1} \mb P_{S_\veps} (\mb \Theta^\circ - \wh{\mb M}) \mb P_{S_\veps} \right)^{a'-a}  \\
	& \qquad \qquad \qquad \times \quad \left(\mb P_{S} \wh{\mb M}_{\veps} \mb P_{S_\veps} \right)^{-1} \mb P_{S_\veps} \mb \Theta^{(1)} \mb P_{S_\veps} \\
	& \qquad \qquad \qquad \times \quad  \left( \left(\mb P_{S} \wh{\mb M}_{\veps} \mb P_{S_\veps} \right)^{-1} \mb P_{S_\veps} (\mb \Theta^\circ - \wh{\mb M}) \mb P_{S_\veps} \right)^{\ell - a' - 2} \left( \mb P_{S_\veps} \wh{\mb M}_{\veps} \mb P_{S_\veps} \right)^{-1} \zeta \\
	&\quad  + 2 g_\veps^{(1)}[\zeta^{(1)}] \; - \; g_\veps[\zeta^{(2)}]. 
	\end{align*}
	From \eqref{eq:g1}, as $\zeta^{(1)}, \zeta^{(2)} \in S_\veps$ we have
	\begin{align*}
		\norm{g_\veps^{(1)}[\zeta^{(1)}]}_{L^2} &\le \frac{P_1 \norm{\zeta^{(1)}}_{L^2}}{\veps^2c^2 n \log^2 L} + \frac{\norm{\zeta^{(2)}}_{L^2}}{\veps cn \log L}, \\
		\norm{g_\veps[\zeta^{(2)}]}_{L^2} &\le \frac{\norm{\zeta^{(2)}}_{L^2 \to L^2}}{\veps cn\log L}.
	\end{align*}
	 which leads to
	\begin{align*}
	\norm{g_{\veps}^{(2)}[\zeta]}_{L^2} &\le \frac{\norm{\Theta^{(2)}}_{L^2 \to L^2}\norm{\zeta}_{L^2}}{\veps^2\left(cn\log L\right)^2} + \sum_{\ell = 1}^{\infty} \ell(\ell-1)(1 - \veps)^{\ell-2}\frac{\norm{\Theta^{(1)}}_{L^2 \to L^2}^2\norm{\zeta}_{L^2}}{\left(cn\log L\right)^3} \\
	& + \quad 2 \norm{g_\veps^{(1)}[\zeta^{(1)}]}_{L^2} + \norm{g_\veps[\zeta^{(2)}]}_{L^2} \\
	&\le \frac{\norm{\Theta^{(2)}}_{L^2 \to L^2}\norm{\zeta}_{L^2}}{\veps^2\left(cn\log L\right)^2} +  \frac{2\norm{\Theta^{(1)}}_{L^2 \to L^2}^2\norm{\zeta}_{L^2}}{\veps^3\left(cn\log L\right)^3} \\
	& + \quad 2 \paren{\frac{P_1 \norm{\zeta^{(1)}}_{L^2}}{\veps^2c^2 n \log^2 L} + \frac{\norm{\zeta^{(2)}}_{L^2}}{\veps cn \log L}} + \frac{\norm{\zeta^{(2)}}_{L^2}}{\veps cn\log L} \\
	&\le \frac{P_2\norm{\zeta}_{L^2}}{\veps^2c^2n\log^2 L} +
  \frac{2P_1^2\norm{\zeta}_{L^2}}{\veps^3c^3n\log^3 L} + \frac{2P_1
  \norm{\zeta^{(1)}}_{L^2}}{\veps^2c^2 n \log^2 L} + \frac{3\norm{\zeta^{(2)}}_{L^2}}{\veps cn\log
L}. \labelthis \label{eq:g2}
	\end{align*}
	Again as $\zeta \in \Phi(C_\zeta, \veps)$ we have
	\begin{align*}
	\norm{g_{\veps}^{(2)}[\zeta]}_{L^2} &\le  \frac{P_2C_\zeta}{\veps^2c^2n\log^2 L} +  \frac{2P_1^2C_\zeta}{\veps^3c^3n\log^3 L} \\
	& + \frac{2 P_1 C_\zeta}{\veps^2c^2 n \log^2 L}\frac{P_1 }{\log L} + \frac{C_\zeta}{\veps cn\log L}\paren{\frac{P_2}{\log L} + \frac{P_1^2}{\log^2 L}} \\
	&\le C_\veps C_\zeta \paren{\frac{P_1^2}{n \log^3 L} + \frac{P_2}{n \log ^2 L}}. 
	\end{align*}
	For third derivative, we have
	\begin{align*}
	g_{\veps}^{(3)}[\zeta] &= \sum_{\ell=1}^{\infty}(-1)^\ell \sum_{a=0}^{\ell - 1} \left( \left(\mb P_{S} \wh{\mb M}_{\veps} \mb P_{S_\veps} \right)^{-1} \mb P_{S_\veps} (\mb \Theta^\circ - \wh{\mb M}) \mb P_{S_\veps} \right)^{a} \\
	& \qquad \qquad \qquad \times \quad \left(\mb P_{S} \wh{\mb M}_{\veps} \mb P_{S_\veps} \right)^{-1} \mb P_{S_\veps} \mb \Theta^{(3)}  \mb P_{S_\veps} \\& \qquad \qquad \qquad \times \quad \left( \left(\mb P_{S} \wh{\mb M}_{\veps} \mb P_{S_\veps} \right)^{-1} \mb P_{S_\veps} (\mb \Theta^\circ - \wh{\mb M}) \mb P_{S_\veps} \right)^{\ell - a - 1} \left( \mb P_{S_\veps} \wh{\mb M}_{\veps} \mb P_{S_\veps} \right)^{-1} \zeta \\
	& + 3\sum_{\ell=1}^{\infty}(-1)^\ell \sum_{a=0}^{\ell - 2} \sum_{a'=a}^{\ell - 2} \left( \left(\mb P_{S} \wh{\mb M}_{\veps} \mb P_{S_\veps} \right)^{-1} \mb P_{S_\veps} (\mb \Theta^\circ - \wh{\mb M}) \mb P_{S_\veps} \right)^{a} \\
	& \qquad \qquad \qquad \times \quad \left(\mb P_{S} \wh{\mb M}_{\veps} \mb P_{S_\veps} \right)^{-1} \mb P_{S_\veps} \mb \Theta^{(2)} \mb P_{S_\veps} \\&  \qquad \qquad \qquad \times \quad \left( \left(\mb P_{S} \wh{\mb M}_{\veps} \mb P_{S_\veps} \right)^{-1} \mb P_{S_\veps} (\mb \Theta^\circ - \wh{\mb M}) \mb P_{S_\veps} \right)^{a'-a} \\
	& \qquad \qquad \qquad \times \quad \left(\mb P_{S} \wh{\mb M}_{\veps} \mb P_{S_\veps} \right)^{-1} \mb P_{S_\veps} \mb \Theta^{(1)} \mb P_{S_\veps}  \\
	&  \qquad \qquad \qquad \times \quad \left( \left(\mb P_{S} \wh{\mb M}_{\veps} \mb P_{S_\veps} \right)^{-1} \mb P_{S_\veps} (\mb \Theta^\circ - \wh{\mb M}) \mb P_{S_\veps} \right)^{\ell - a' - 2} \left( \mb P_{S_\veps} \wh{\mb M}_{\veps} \mb P_{S_\veps} \right)^{-1} \zeta \\
	& + 3\sum_{\ell=1}^{\infty}(-1)^\ell \sum_{a=0}^{\ell - 2} \sum_{a'=a}^{\ell - 2} \left( \left(\mb P_{S} \wh{\mb M}_{\veps} \mb P_{S_\veps} \right)^{-1} \mb P_{S_\veps} (\mb \Theta^\circ - \wh{\mb M}) \mb P_{S_\veps} \right)^{a} \\
	& \qquad \qquad \qquad \times \quad \left(\mb P_{S} \wh{\mb M}_{\veps} \mb P_{S_\veps} \right)^{-1} \mb P_{S_\veps} \mb \Theta^{(1)} \mb P_{S_\veps} \\& \qquad \qquad \qquad \times \quad \left( \left(\mb P_{S} \wh{\mb M}_{\veps} \mb P_{S_\veps} \right)^{-1} \mb P_{S_\veps} (\mb \Theta^\circ- \wh{\mb M}) \mb P_{S_\veps} \right)^{a'-a} \\
	& \qquad \qquad \qquad \times \quad \left(\mb P_{S} \wh{\mb M}_{\veps} \mb P_{S_\veps} \right)^{-1} \mb P_{S_\veps} \mb \Theta^{(2)} \mb P_{S_\veps}  \\
	& \qquad \qquad \qquad \times \quad \left( \left(\mb P_{S} \wh{\mb M}_{\veps} \mb P_{S_\veps} \right)^{-1} \mb P_{S_\veps} (\mb \Theta^\circ - \wh{\mb M}) \mb P_{S_\veps} \right)^{\ell - a' - 2} \left( \mb P_{S_\veps} \wh{\mb M}_{\veps} \mb P_{S_\veps} \right)^{-1} \zeta \\
	& + 6\sum_{\ell=1}^{\infty}(-1)^\ell \sum_{a=0}^{\ell - 2} \sum_{a'=a}^{\ell - 2} \sum_{a''=a'}^{\ell - 2} \left( \left(\mb P_{S} \wh{\mb M}_{\veps} \mb P_{S_\veps} \right)^{-1} \mb P_{S_\veps} (\mb \Theta^\circ - \wh{\mb M}) \mb P_{S_\veps} \right)^{a} \\
	& \qquad \qquad \qquad \times \quad \left(\mb P_{S} \wh{\mb M}_{\veps} \mb P_{S_\veps} \right)^{-1} \mb P_{S_\veps} \mb \Theta^{(1)} \mb P_{S_\veps} \\& \qquad \qquad \qquad \times \quad \left( \left(\mb P_{S} \wh{\mb M}_{\veps} \mb P_{S_\veps} \right)^{-1} \mb P_{S_\veps} (\mb \Theta^\circ - \wh{\mb M}) \mb P_{S_\veps} \right)^{a'-a} \\
	& \qquad \qquad \qquad \times \quad \left(\mb P_{S} \wh{\mb M}_{\veps} \mb P_{S_\veps} \right)^{-1} \mb P_{S_\veps} \mb \Theta^{(1)} \mb P_{S_\veps}  \\& \qquad \qquad \qquad \times \quad \left( \left(\mb P_{S} \wh{\mb M}_{\veps} \mb P_{S_\veps} \right)^{-1} \mb P_{S_\veps} (\mb \Theta^\circ - \wh{\mb M}) \mb P_{S_\veps} \right)^{a''-a'} \\
	& \qquad \qquad \qquad \times \quad \left(\mb P_{S} \wh{\mb M}_{\veps} \mb P_{S_\veps} \right)^{-1} \mb P_{S_\veps} \mb \Theta^{(1)} \mb P_{S_\veps}  \\
	& \qquad \qquad \qquad \times \quad \left( \left(\mb P_{S} \wh{\mb M}_{\veps} \mb P_{S_\veps} \right)^{-1} \mb P_{S_\veps} (\mb \Theta^\circ - \wh{\mb M}) \mb P_{S_\veps} \right)^{\ell - a'' - 2} \left( \mb P_{S_\veps} \wh{\mb M}_{\veps} \mb P_{S_\veps} \right)^{-1} \zeta \\
	& +3 g_{\veps}^{(2)}[\zeta^{(1)}] - 3g_{\veps}^{(1)}[\zeta^{(2)}] + g_\veps[\zeta^{(3)}]. 
	\end{align*}
	Similarly, as $\zeta^{(1)}, \zeta^{(2)}$ and $\zeta^{(3)} \in S_\veps$, plug in results in \eqref{eq:g1} and \eqref{eq:g2} we can control
	\begin{align*}
	\norm{g_{\veps}^{(3)}[\zeta]}_{L^2} &\le \frac{\norm{\Theta^{(3)}}_{L^2 \to L^2}\norm{\zeta}_{L^2}}{\veps^2\left(cn\log L\right)^2} \\
	&\qquad +   \frac{3\norm{\Theta^{(2)}}_{L^2 \to L^2}\norm{\Theta^{(1)}}_{L^2 \to L^2}\norm{\zeta}_{L^2}}{\veps^3\left(cn\log L\right)^3} \\
	&\qquad + \sum_{\ell = 1}^{\infty} (\ell+1)(\ell-1)(\ell - 1)(1 - \veps)^{\ell-3}\frac{\norm{\Theta^{(1)}}_{L^2\to L^2}^3\norm{\zeta}_{L^2}}{\left(cn\log L\right)^4} \\
	&\qquad + 3 \paren{\frac{P_2\norm{\zeta^{(1)}}_{L^2}}{\veps^2c^2n\log^2 L} +  \frac{2P_1^2\norm{\zeta^{(1)}}_{L^2}}{\veps^3c^3n\log^3 L} + \frac{2P_1 \norm{\zeta^{(2)}}_{L^2}}{\veps^2c^2 n \log^2 L} + \frac{3\norm{\zeta^{(3)}}_{L^2}}{\veps cn\log L}} \\
	&\qquad + 3 \paren{\frac{P_1 \norm{\zeta^{(2)}}_{L^2}}{\veps^2c^2 n \log^2 L} + \frac{\norm{\zeta^{(3)}}_{L^2}}{\veps cn \log L}} + \frac{\norm{\zeta^{(3)}}}{\veps cn \log L}\\
	&=  \frac{\norm{\Theta^{(3)}}_{L^2 \to L^2}\norm{\zeta}_{L^2}}{\veps^2\left(cn\log
  L\right)^2} +  \frac{3\norm{\Theta^{(2)}}_{L^2 \to L^2}\norm{\Theta^{(1)}}_{L^2 \to
  L^2}\norm{\zeta}_{L^2}}{\veps^3\left(cn\log L\right)^3}\\&\qquad + \frac{6\norm{\Theta^{(1)}}_{L^2\to L^2}^3\norm{\zeta}_{L^2}}{\veps^4\left(cn\log L\right)^4} \\
	&\qquad + \frac{\norm{\zeta^{(1)}}_{L^2}}{\veps^2c^2n\log^2 L} \paren{\frac{6P_1^2}{\log L} + 3P_2} + \frac{9 P_1 \norm{\zeta^{(2)}}_{L^2}}{\veps^2c^2 n \log^2 L} + \frac{\norm{13 \zeta^{(3)}}}{\veps cn \log L} \\
	&\le \frac{P_3\norm{\zeta}_{L^2}}{\veps^2c^2n\log^2 L} +  \frac{3P_2P_1\norm{\zeta}_{L^2}}{\veps^3c^3n\log^3 L} + \frac{6P_1^3\norm{\zeta}_{L^2}}{\veps^4c^4n\log^4 L} \\
	&\qquad + \frac{\norm{\zeta^{(1)}}_{L^2}}{\veps^2c^2n\log^2 L} \paren{\frac{6P_1^2}{\log L} + 3P_2} + \frac{9 P_1 \norm{\zeta^{(2)}}_{L^2}}{\veps^2c^2 n \log^2 L} + \frac{\norm{13 \zeta^{(3)}}}{\veps cn \log L}. 
	\end{align*}
	Plug in bounds for norms of $\zeta^{(1)}, \zeta^{(2)}$ and $\zeta^{(3)}$ we get
	\begin{align*}
	\norm{g_{\veps}^{(3)}[\zeta]}_{L^2} &\le C_\veps C_\zeta \paren{\frac{P_1^3}{n \log^2 L} + \frac{P_1 P_2}{n \log^3 L} + \frac{P_3}{n \log^2 L}}. 
	\end{align*}
	Combined with zero's order condition of $g_\veps[\zeta]$, which follows directly from \Cref{thm:certificate_over_S}, and we know that there exists $C_\veps$ such that $g \in \Phi(\frac{C_\veps C_\zeta}{n \log L}, \veps)$. 
\end{proof}

\begin{lemma}\label{lem:Theta_ub}
	For $\veps \in (0, \frac{3}{4})$, when $L$ satisfies conditions in \Cref{thm:certificate_over_S}, there exists positive constant $C$ such that
	\begin{align*}
	\norm{\mb \Theta^\circ}_{L^2 \to L^2} \le Cn \log (L).
	\end{align*}
\end{lemma}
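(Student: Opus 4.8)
The plan is to reduce the operator-norm estimate to an $L^1$-type integral via Young's inequality and then control that integral using the four-piece decomposition of $\manifold$ that is already in place. Since $\psi^\circ$ is nonnegative (\Cref{lem:phi_concave}) and $\Theta^\circ(\vx,\vx')=\psi^\circ(\angle(\vx,\vx'))$ is a symmetric, bounded kernel on $\manifold\times\manifold$, \Cref{lem:young_ineq} gives
$\norm{\mb \Theta^\circ}_{L^2\to L^2}\le\sup_{\vx\in\manifold}\int_{\manifold}\psi^\circ(\angle(\vx,\vx'))\,d\vx'$. I would then fix $\vx=\vx_\sigma(s)$ and split $\manifold$ into the Local, Near, Winding and Far pieces $L_{r_\veps}(\vx)$, $N_{r_\veps,\veps}(\vx)$, $W_{\veps,\delta}(\vx)$, $F_{\veps,\delta}(\vx)$ from \eqref{eqn:local_term}--\eqref{eqn:winding_term}, for the $\veps$ (and any admissible $\delta$) implicit in the hypothesis ``$L$ satisfies the conditions of \Cref{thm:certificate_over_S}''.

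For the Near, Winding and Far pieces the integrals are bounded directly by \Cref{lem:near_ub}, \Cref{lem:winding_ub} and \Cref{lem:far_ub}; and the estimates carried out in the proof of \Cref{lem:main_res_norm_compare} (cf.\ \eqref{eq:main_res_compare_1}--\eqref{eq:main_res_compare_4}) show that, under exactly the hypotheses of \Cref{thm:certificate_over_S} --- in particular $L\ge\exp(C'_{\veps,\delta}\len(\manifold)\hat\kappa)$ and $L\ge(1+1/(\Delta_\veps\sqrt{1+\kappa^2}))^{C''_\veps\fco_{\veps,\delta}(\manifold)}$ --- each of these three bounds is at most a $(\veps,\delta)$-dependent multiple of $\tfrac{3\pi n}{4}\log(1+\tfrac{L-2}{3\pi}r_\veps)\le C_\veps n\log L$, where the last inequality uses $r_\veps=6\pi L^{-a_\veps/(a_\veps+1)}$ so that $\tfrac{L-2}{3\pi}r_\veps\le 2L^{1/(a_\veps+1)}$. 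The only genuinely new work is the Local piece. There I would use that $L\ge(\veps^{-1/2}12\pi\hat\kappa)^{(a_\veps+1)/a_\veps}$ forces $r_\veps<\tfrac12\min\{\len(\manifold_+),\len(\manifold_-)\}$ (by \eqref{eq:kappa_manifold_len_compare}), so as in the proof of \Cref{lem:local_diff_ub} one has $L_{r_\veps}(\vx)=\{\vx_\sigma(s'):\abs{s-s'}<r_\veps\}$; then \Cref{lem:angle_inverse} gives $\angle(\vx_\sigma(s),\vx_\sigma(s'))\ge(1-\veps)\abs{s-s'}$ on this range, and monotonicity of $\psi^\circ$ yields $\int_{L_{r_\veps}(\vx)}\psi^\circ(\angle(\vx,\vx'))\,d\vx'\le\tfrac{2}{1-\veps}\int_0^{(1-\veps)r_\veps}\psi^\circ(t)\,dt$. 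Finally the pointwise decay bound $\psi^\circ(t)\le Cn\tfrac{L-3}{1+(L-3)t/(3\pi)}$ (the skeleton estimate \Cref{lem:skel_r_ub}, the same one invoked in the proof of \Cref{lem:far_ub}) integrates to $3\pi Cn\log(1+\tfrac{(L-3)(1-\veps)r_\veps}{3\pi})\le C_\veps n\log L$. Summing the four bounds and taking the supremum over $\vx$ gives $\norm{\mb\Theta^\circ}_{L^2\to L^2}\le Cn\log L$.

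The step I expect to be the main (and essentially the only) obstacle is the Local piece, and within it the behaviour of $\psi^\circ$ near $t=0$: the crude bound $\int_0^{r_\veps}\psi^\circ(t)\,dt\le r_\veps\,\psi^\circ(0)$ fails badly, since $\psi^\circ(0)$ is of order $nL$ while $r_\veps\sim L^{-a_\veps/(a_\veps+1)}$, so it would produce a power of $L$ rather than $\log L$; one must genuinely exploit the $1/t$-type decay of $\psi^\circ$ recorded in \Cref{lem:skel_r_ub}. As an alternative that recycles even more of the section's machinery, one could instead write $\mb\Theta^\circ=\wh{\mb M}_\veps+(\mb\Theta^\circ-\wh{\mb M}_\veps)$, bound the residual by \eqref{eqn:resid_bound} (shown to be $\le C_{\veps,\delta}n\log L$ in the proof of \Cref{lem:main_res_norm_compare}), and bound $\norm{\wh{\mb M}_\veps}_{L^2\to L^2}$ by the supremum of its Fourier eigenvalues $\bigl|\int_{-r_\veps}^{r_\veps}\psi^\circ(\abs{s'})\cos(2\pi k s'/\len(\manifold_\sigma))\,ds'\bigr|\le 2\int_0^{r_\veps}\psi^\circ(t)\,dt$, again controlled via \Cref{lem:skel_r_ub}; this reaches the same conclusion with the same essential ingredient.
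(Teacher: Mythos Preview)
Your proposal is correct, and your alternative route is essentially the paper's proof. The paper writes $\mb\Theta^\circ=\wh{\mb M}_\veps+(\mb\Theta^\circ-\wh{\mb M}_\veps)$ and then, rather than re-deriving the residual bound from \eqref{eqn:resid_bound}, observes that \Cref{lem:main_res_norm_compare} already gives $\norm{\mb\Theta^\circ-\wh{\mb M}_\veps}_{L^2\to L^2}\le(1-\veps)\lambda_{\min}(\mb P_{S_\veps}\wh{\mb M}_\veps\mb P_{S_\veps})\le\norm{\wh{\mb M}_\veps}_{L^2\to L^2}$, so that $\norm{\mb\Theta^\circ}_{L^2\to L^2}\le 2\norm{\wh{\mb M}_\veps}_{L^2\to L^2}$. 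It then bounds $\norm{\wh{\mb M}_\veps}$ via Young's inequality (\Cref{lem:young_ineq}) and the integrated skeleton estimate \Cref{lem:skel_l1_ub} (\cref{eq:skel_l1_ub_a_to_b}), which is the already-integrated form of the pointwise bound \Cref{lem:skel_r_ub} you invoke; the angle-to-arclength step via \Cref{lem:angle_inverse} is the same as yours. Your primary four-piece approach also works and is more self-contained, but it re-establishes the Near/Winding/Far bounds that the paper sidesteps by routing the residual through $\lambda_{\min}$; the paper's shortcut is a bit cleaner since it reduces everything to the single Local-type integral $2\int_0^{r_\veps}\psi^\circ(t)\,dt$.
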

\begin{proof}
	As $\wh{\mb M}_\veps$ in \Cref{eq:invariant_op} is invariant in Fourier basis as shown in \Cref{lem:main_term_pd}, we have $\norm{\mb P_{S_\veps} \wh{\mb M}_\veps \mb P_{S_\veps}}_{L^2 \to L^2} \le \norm{\wh{\mb M}_\veps}_{L^2 \to L^2} $. From \Cref{lem:main_res_norm_compare}, 
	\begin{align*}
	\norm{\mb \Theta^\circ}_{L^2 \to L^2} &= \norm{\wh{\mb M}_{\veps} + \mb \Theta^\circ - \wh{\mb M}_\veps}_{L^2 \to L^2} \\
	&\le \norm{\wh{\mb M}_\veps}_{L^2 \to L^2} + \norm{\mb \Theta^\circ - \wh{\mb M}}_{L^2 \to L^2} \\
	&\le \norm{\wh{\mb M}_\veps}_{L^2 \to L^2} + (1- \veps) \lambda_{\min}(\mb P_{S_\veps} \wh{\mb M}_\veps \mb P_{S_\veps}) \\
	&\le 2 \norm{\wh{\mb M}_\veps}_{L^2 \to L^2}. 
	\end{align*}
	As when $L \ge (\veps^{-1/2}6\pi \hat{\kappa})^{\frac{a_\veps + 1}{a_\veps}}$ we have $r_\veps \le \frac{\sqrt{\veps}}{\hat{\kappa}}$, where $a_\veps, r_\veps$ are defined in \Cref{eq:a_def} and \Cref{eqn:r_def}, and following \Cref{lem:angle_inverse} we have 
	\begin{equation*}
		\angle\left({\mb x_{\sigma}(s), \mb x_{\sigma}(s')}\right) \ge (1-\veps)\abs{s - s'}
	\end{equation*}
	for any $\mb x_{\sigma}(s)$ and $\abs{s - s'} \le r_\veps$. Then follow \Cref{lem:young_ineq} and \Cref{eq:skel_l1_ub_a_to_b}\Cref{lem:skel_l1_ub} and monotonicity of $\psi^\circ$ in \Cref{lem:phi_concave}, we have
	\begin{align*}
	\norm{\wh{\mb M}_\veps}_{L^2 \to L^2} &\le \max_{\mb x_{\sigma}(s) \in \manifold} \int_{s' = s - r_{\veps}}^{s + r_\veps} \psi^\circ(\angle{\mb x_\sigma(s), \mb x_{\sigma}(s')}) ds' \\
	&\le \max_{\mb x_{\sigma}(s) \in \manifold} \int_{s' = s - r_{\veps}}^{s + r_\veps} \psi^\circ((1 - \veps)\abs{s' - s}) ds' \\
	&= (1 - \veps)^{-1} \int_{t = -(1-\veps)r_{\veps}}^{(1 - \veps)t_{\veps}}\psi^{\circ}(t) dt \\
	&\le (1-\veps)^{-1}C \log \left(1 + \frac{(L-3)(1-\veps)r_\veps}{3\pi} \right) \\
	&\le C' n \log L
	\end{align*}
	for some constant $C$, which concludes the claim. 
\end{proof}

\begin{lemma} \label{lem:Theta_Sperp_proj} 
	For any $\veps \in (0,\tfrac{3}{4})$, there exists constant $C \ge 0$ such that for any $g \in \Phi( C_g, \veps )$, under the conditions of \Cref{thm:certificate_over_S}, we have 
	\begin{equation*}
		\mb \Theta^\circ[g] \in \Phi( C C_g n \log L, \veps ) 
	\end{equation*} 
	As a consequence, for $\zeta \in S_\veps \cap \Phi ( C_\zeta, \veps )$ letting $g_{\veps}[\zeta]$ be the certificate in the statement of \Cref{thm:certificate_over_S}, there exists number $ C_\veps \ge 0$ such that
	\begin{equation*}
	\mb \Theta^{\circ}[g_{\veps}[\zeta]] - \zeta \in \Phi \Bigl( C_\veps C_\zeta, \veps \Bigr).
	\end{equation*}
\end{lemma}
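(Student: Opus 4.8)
The plan is to establish the first (main) assertion — that $\mb\Theta^\circ$ maps $\Phi(C_g,\veps)$ into $\Phi(CC_g n\log L,\veps)$ for an absolute constant $C$ — and then obtain the consequence by composing with \Cref{lem:g_deriv} and using that $\Phi(\cdot,\veps)$ is closed under subtraction up to an additive constant. The engine for the first assertion is the Leibniz-type identity of \Cref{lem:deriv-R}. Iterating it (which is legitimate, since that lemma covers the orders $i\in\{0,1,2\}$ and each intermediate function is differentiable by the same lemma together with the hypothesis $g\in C^3(\manifold)$ built into $g\in\Phi(C_g,\veps)$) gives, for every $g\in\Phi(C_g,\veps)$ and every $k\in\{0,1,2,3\}$,
\begin{equation*}
\bigl(\mb\Theta^\circ[g]\bigr)^{(k)} \;=\; \sum_{j=0}^{k} \binom{k}{j}\,\mb\Theta^{(j)}\bigl[g^{(k-j)}\bigr],
\end{equation*}
where $\mb\Theta^{(0)}=\mb\Theta^\circ$; in particular this also shows $\mb\Theta^\circ[g]\in C^3(\manifold)$, which is required for $\Phi$-membership.

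From this formula each of the four bounds defining $\Phi(CC_g n\log L,\veps)$ in \Cref{def:smoothness} follows by the triangle inequality together with the operator-norm estimates $\norm{\mb\Theta^\circ}_{L^2\to L^2}\le Cn\log L$ (\Cref{lem:Theta_ub}) and $\norm{\mb\Theta^{(j)}}_{L^2\to L^2}\le P_j n$ for $j=1,2,3$ (\Cref{lem:R1,lem:R2,lem:R3}), applied term by term and combined with the defining inequalities for $\norm{g^{(k-j)}}_{L^2}$ coming from $g\in\Phi(C_g,\veps)$. For $k=0$ one has directly $\norm{\mb\Theta^\circ[g]}_{L^2}\le Cn\log L\cdot C_g$. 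For $k=1,2,3$, each summand $\mb\Theta^{(j)}[g^{(k-j)}]$ is bounded by an absolute constant times $C_g\,n$ times a monomial in $P_1,P_2,P_3$ of exactly the shape appearing in the $(k)$-th line of \Cref{def:smoothness}, carrying a power $(\log L)^{-m}$ with $m\ge0$; the dominant (smallest-$m$) term matches the target bound, and the others are lower order. For instance, for $k=2$ the three terms $\mb\Theta^{(0)}[g^{(2)}]$, $2\mb\Theta^{(1)}[g^{(1)}]$, $\mb\Theta^{(2)}[g]$ are bounded respectively by $Cn\log L\cdot C_g(P_2/\log L+P_1^2/\log^2 L)$, $2P_1 n\cdot C_g P_1/\log L$, and $P_2 n\cdot C_g$, whose sum is $\lesssim C_g n(P_2+P_1^2/\log L)=C_g\,n\log L\cdot(P_2/\log L+P_1^2/\log^2 L)$, which is precisely the second-order bound of $\Phi(CC_g n\log L,\veps)$; the cases $k=1,3$ are identical in spirit. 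This proves $\mb\Theta^\circ[g]\in\Phi(CC_g n\log L,\veps)$.

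For the consequence, apply the first assertion to $g=g_\veps[\zeta]$. Under the hypotheses of \Cref{thm:certificate_over_S} and with $\zeta\in S_\veps\cap\Phi(C_\zeta,\veps)$, \Cref{lem:g_deriv} gives $g_\veps[\zeta]\in\Phi\bigl(C_\veps' C_\zeta/(n\log L),\,\veps\bigr)$, so the factor $n\log L$ cancels and $\mb\Theta^\circ[g_\veps[\zeta]]\in\Phi(CC_\veps' C_\zeta,\veps)$. Finally, $\Phi(\cdot,\veps)$ is closed under subtraction with additive constants: since each of the four defining inequalities is homogeneous and additive in the constant, $\zeta_1\in\Phi(C_1,\veps)$ and $\zeta_2\in\Phi(C_2,\veps)$ imply $\zeta_1-\zeta_2\in\Phi(C_1+C_2,\veps)$. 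Taking $\zeta_1=\mb\Theta^\circ[g_\veps[\zeta]]$ and $\zeta_2=\zeta\in\Phi(C_\zeta,\veps)$ yields $\mb\Theta^\circ[g_\veps[\zeta]]-\zeta\in\Phi\bigl((CC_\veps'+1)C_\zeta,\veps\bigr)$, which is the claim with $C_\veps:=CC_\veps'+1$.

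The main obstacle here is not conceptual: the definition of $\Phi$ was engineered precisely so that the powers of $\log L$ line up. The real work is bookkeeping — carrying the iterated Leibniz expansion through three derivatives, invoking the correct operator-norm lemma for each $\mb\Theta^{(j)}$, and verifying that the monomials in $P_1,P_2,P_3$ produced by $\norm{\mb\Theta^{(j)}}_{L^2\to L^2}\,\norm{g^{(k-j)}}_{L^2}$ are exactly those listed in \Cref{def:smoothness}, with all lower-order (higher power of $\log L$) terms absorbed. The only point requiring a moment's care is that the repeated differentiation is legitimate, i.e. that $\mb\Theta^{(j)}[g^{(k-j)}]$ is differentiable at each stage; this is already supplied by \Cref{lem:deriv-R} (which rests on \Cref{lem:deriv-lambda}, the $\mathcal C^3$-regularity of $\psi^\circ$ from \Cref{lem:phi_concave}, and the fifth-order smoothness of the curves), and nothing beyond it is needed.
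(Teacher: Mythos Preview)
Your proposal is correct and follows essentially the same approach as the paper: iterate the Leibniz identity of \Cref{lem:deriv-R} to expand $(\mb\Theta^\circ[g])^{(k)}$, bound each summand via $\norm{\mb\Theta^\circ}_{L^2\to L^2}\le Cn\log L$ (\Cref{lem:Theta_ub}) and $\norm{\mb\Theta^{(j)}}_{L^2\to L^2}\le P_j n$ (\Cref{lem:R1,lem:R2,lem:R3}), and check the resulting monomials in $P_1,P_2,P_3$ match \Cref{def:smoothness}; the consequence then comes from \Cref{lem:g_deriv}. Your treatment of the consequence is in fact slightly more explicit than the paper's, which simply asserts that the claim ``follows directly from \Cref{lem:g_deriv}'' without spelling out the subtraction-closure of $\Phi$.
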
 

\begin{proof}
    From \Cref{lem:Theta_ub} we have $\norm{\mb \Theta^\circ}_{L^2 \to L^2} \le  Cn\log(L)$ for some constant $C$. 
	Let $P_1, P_2, P_3$ be the polynomials in \Cref{lem:R1,lem:R2,lem:R3}. Following \Cref{lem:deriv-R,lem:R1,lem:R2,lem:R3} and the fact that $g \in \Phi( C_g, \veps )$, we have following control for derivatives of $\mb \Theta^\circ[g]$: 
	\begin{align*}
		\left\|\frac{d}{ds} \mb \Theta^\circ [g] \right\|_{L^2 \to L^2} &=  \norm{\mb \Theta^\circ [g^{(1)}] + \mb \Theta^{(1)} [g]}_{L^2 \to L^2} \\
		&\le C n \log L \frac{P_1 C_g}{\log L} + P_1 n C_g\\
		&= (C + 1) P_1 n C_g,  
	\end{align*}
	\begin{align*}
	\left\|\frac{d^2}{ds^2} \mb \Theta^\circ [g]  \right\|_{L^2 \to L^2} &=  \norm{\mb \Theta^\circ [g^{(2)}] + 2 \mb \Theta^{(1)} [g^{(1)}] + \mb \Theta^{(2)} [g]}_{L^2 \to L^2} \\
	&\le C n \log L \paren{\frac{P_2C_g}{\log L} + 2\frac{P_1^2C_g}{\log^2 L}} + P_1 n \frac{P_1C_g}{ \log L}  + P_2 n C_g \\
	&= (C + 2) \paren{P_2 C_g+ \frac{P_1^2 C_g}{ \log L}} n,  
	\end{align*}
	\begin{align*}
	\left\| \frac{d^3}{ds^3} \mb \Theta^\circ [g]  \right\|_{L^2 \to L^2} &=  \norm{\mb \Theta^\circ [g^{(3)}] +  3 \mb \Theta^{(1)} [g^{(2)}] + 3 \mb \Theta^{(2)} [g^{(1)}] + \mb \Theta^{(3)} [g]}_{L^2 \to L^2} \\
	&\le C n \log L \paren{\frac{P_3C_g}{\log L}} + 3P_1 n \paren{\frac{P_2C_g}{\log L} +
  \frac{P_1^2C_g}{\log^2 L}} \\&+ 3 P_2 n \frac{P_1C_g}{\log L}  + P_3 n C_g \\
	&= (C + 3) \paren{P_3 C_g + \frac{P_1P_2 C_g}{\log L} + \frac{P_1^3 C_g}{\log^2 L}} n. 
	\end{align*}
	which leads to $\mb \Theta^\circ[g] \in \Phi((C+3)C_g n \log L, \veps)$ and finish the claim. The other part of the claim follows directly from \Cref{lem:g_deriv} as $g_\veps[\zeta] \in \Phi \left(\frac{C_{\veps}C_\zeta}{n \log L}, \veps \right)$. 
\end{proof} 

\begin{lemma}\label{lem:Theta_Sperp_proj_cont}
	Let $\veps \in (0, \frac{3}{4})$, $a_\veps$, $r_\veps$, $S_\veps$  be as in \Cref{eq:a_def,eqn:r_def,eqn:S_eps_def}.  There exist numbers $C_\veps, C_\veps'$ such that when $L \ge \paren{\veps^{1/2}\min\{\len(\manifold_+), \len(\manifold_-)\}/(12\pi^2)}^{-\frac{a_\veps+1}{a_\veps}}$ and the conditions of \Cref{thm:certificate_over_S} are in force, for any $w \in \Phi( C_w, \veps )$, we have
	\begin{equation*}
	\norm{\mb P_{S_\veps^\perp} w}_{L^2} \le C_\veps C_w \frac{r_{\veps}^3}{\log L}\paren{P_3 + \frac{P_1P_2}{\log L} + \frac{P_1^3}{\log^2 L}}. 
	\end{equation*}
	where $P_1, P_2, P_3$ are the polynomials from \Cref{lem:R1,lem:R2,lem:R3} respectively.
	
	As a consequence, for $\zeta \in S_{\veps} \cap \Phi( C_\zeta, \veps )$, letting $g_{\veps}[\zeta]$ be as in \Cref{thm:certificate_over_S}, we have 
	\begin{equation*}
	\norm{\mb \Theta[g_{\veps}[\zeta]] - \zeta}_{L^2} \le C_\veps' C_\zeta \frac{r_{\veps}^3}{\log L}\paren{P_3 + \frac{P_1P_2}{\log L} + \frac{P_1^3}{\log^2 L}}. 
	\end{equation*}
	for some $C_\veps' > 0$. 
\end{lemma}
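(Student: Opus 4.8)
The plan is to obtain the first, purely Fourier-analytic estimate by iterating \Cref{lem:fourier_deriv_lb} three times, and then to read off the certificate-residual bound from it using \Cref{thm:certificate_over_S} and \Cref{lem:Theta_Sperp_proj}.

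To prove the first estimate, fix an arbitrary $w \in \Phi(C_w, \veps)$. Since $\Phi(C_w, \veps) \subseteq C^3(\manifold)$, the functions $w$, $w^{(1)}$, $w^{(2)}$ are differentiable, and writing $\mb P_{S_\veps^\perp} = \Id - \mb P_{S_\veps}$ and invoking \Cref{lem:invariant_deriv_commute} gives the commutation identity $\tfrac{d}{ds}\mb P_{S_\veps^\perp} f = \mb P_{S_\veps^\perp}\tfrac{d}{ds} f$ for every differentiable $f$. The hypothesis $L \ge \paren{\veps^{1/2}\min\{\len(\manifold_+), \len(\manifold_-)\}/(12\pi^2)}^{-(a_\veps+1)/a_\veps}$ is exactly what is needed to invoke the second inequality of \Cref{lem:fourier_deriv_lb}; applying that inequality successively to $w$, $w^{(1)}$, $w^{(2)}$ and using the commutation identity at each stage gives
\begin{equation*}
\norm{\mb P_{S_\veps^\perp} w}_{L^2} \le \tfrac{2 r_\veps}{\sqrt{\veps}}\norm{\mb P_{S_\veps^\perp} w^{(1)}}_{L^2} \le \paren{\tfrac{2 r_\veps}{\sqrt{\veps}}}^{2} \norm{\mb P_{S_\veps^\perp} w^{(2)}}_{L^2} \le \paren{\tfrac{2 r_\veps}{\sqrt{\veps}}}^{3} \norm{w^{(3)}}_{L^2},
\end{equation*}
the last step using that $\mb P_{S_\veps^\perp}$ is a contraction on $L^2$. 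By membership in $\Phi(C_w, \veps)$ (\Cref{def:smoothness}) one has $\norm{w^{(3)}}_{L^2} \le \tfrac{C_w}{\log L}\paren{P_3 + \tfrac{P_1 P_2}{\log L} + \tfrac{P_1^3}{\log^2 L}}$, and substituting proves the claim with $C_\veps = 8\,\veps^{-3/2}$.

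For the consequence, set $g = g_\veps[\zeta]$ with $\zeta \in S_\veps \cap \Phi(C_\zeta, \veps)$. By \Cref{thm:certificate_over_S} we have $\mb P_{S_\veps}\mb \Theta^\circ[g] = \zeta$, and since $\zeta \in S_\veps$ this forces $\mb \Theta^\circ[g] - \zeta \in S_\veps^\perp$, i.e.\ $\mb \Theta^\circ[g] - \zeta = \mb P_{S_\veps^\perp}\paren{\mb \Theta^\circ[g] - \zeta}$. \Cref{lem:Theta_Sperp_proj} shows $\mb \Theta^\circ[g] - \zeta \in \Phi(C_\veps'' C_\zeta, \veps)$ for a suitable $C_\veps'' \ge 0$, so the first estimate applied with $w = \mb \Theta^\circ[g] - \zeta$ and $C_w = C_\veps'' C_\zeta$ gives the stated bound on $\norm{\mb \Theta^\circ[g_\veps[\zeta]] - \zeta}_{L^2}$ with $C_\veps' = C_\veps C_\veps''$. (The kernel $\mb \Theta$ differs from $\mb \Theta^\circ$ by $\psi(\pi)\indicator{}\indicator{}\adj$, whose range is $\Span\{\indicator{}\} \subseteq S_\veps$, so $\mb P_{S_\veps^\perp}\mb \Theta[g] = \mb P_{S_\veps^\perp}\mb \Theta^\circ[g] = \mb \Theta^\circ[g] - \zeta$; this is the form in which the lemma feeds into \Cref{sec:certificate_dc_density}.)

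I do not anticipate a serious obstacle: the argument is essentially a three-fold application of lemmas already in hand. The points that require care are bookkeeping ones: confirming that every invoked hypothesis --- those of \Cref{lem:fourier_deriv_lb}, \Cref{lem:invariant_deriv_commute}, \Cref{lem:Theta_Sperp_proj}, and through the latter \Cref{thm:certificate_over_S} --- is subsumed by the hypotheses stated here, and verifying that the polynomial combination in \Cref{def:smoothness} for the third-order bound is exactly what survives after pulling out a single factor of $1/\log L$. The only mildly structural step is justifying that $\tfrac{d}{ds}$ may be commuted past $\mb P_{S_\veps^\perp}$ and that the commutation survives the three iterations; this is handled by the $C^3$-regularity built into membership in $\Phi$, which is also what makes the limit-interchange inside \Cref{lem:fourier_deriv_lb} legitimate at each stage.
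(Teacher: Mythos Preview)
Your proposal is correct and follows essentially the same approach as the paper: iterate the second inequality of \Cref{lem:fourier_deriv_lb} three times (using the commutation $\tfrac{d}{ds}\mb P_{S_\veps^\perp} = \mb P_{S_\veps^\perp}\tfrac{d}{ds}$), bound $\norm{w^{(3)}}_{L^2}$ via \Cref{def:smoothness}, and then deduce the certificate residual bound from \Cref{thm:certificate_over_S} and \Cref{lem:Theta_Sperp_proj} exactly as you do. Your parenthetical about $\mb\Theta$ versus $\mb\Theta^\circ$ is on point---the statement (and the paper's own final line of proof) should read $\mb\Theta^\circ$, and downstream uses of the lemma (e.g.\ in \Cref{lem:approx_certificate}) invoke only the $\mb\Theta^\circ$ version.
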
	

\begin{proof}
	When $L \ge \paren{\veps^{1/2}\min\{\len(\manifold_+), \len(\manifold_-)\}/(12\pi^2)}^{-\frac{a_\veps+1}{a_\veps}}$, from \Cref{lem:fourier_deriv_lb} we have
	\begin{align*}
	\norm{\mb P_{S_{\veps}^\perp} w} &\le \paren{\frac{2r_\veps}{\sqrt{\veps}}}^3\norm{\mb P_{S_{\veps}^\perp} w^{(3)}}_{L^2} \\
	&\le \paren{\frac{2r_\veps}{\sqrt{\veps}}}^3\norm{w^{(3)}}_{L^2} \\
	&\le C_w \paren{\frac{2r_\veps}{\sqrt{\veps}}}^3\paren{\frac{P_3}{\log L} + \frac{P_2P_1}{\log^2 L} + \frac{P_1^3}{\log^3 L}}. 
	\end{align*}
	As $\zeta \in \Phi(C_\zeta, \veps)$, from \Cref{lem:Theta_Sperp_proj} we get $\mb \Theta^\circ[g] - \zeta \in \Phi(C_\veps'C_\zeta, \veps)$ for some $C_\veps' > 0$. The rest follows from the fact that $\mb P_{S_\veps} [\mb \Theta^\circ[g] - \zeta] = 0$ and thus $\norm{\mb \Theta[g_{\veps}[\zeta]] - \zeta}_{L^2} = \norm{\mb P_{S_\veps^\perp} [\mb \Theta^\circ[g] - \zeta] }_{L^2}$. 
\end{proof}

\subsection{Certificates with Density and DC}\label{sec:certificate_dc_density}

\paragraph{Proof Sketch and Organization.} In this section, we leverage the calculations in the previous sections to prove \Cref{thm:skeleton_dc_density}, which gives a near solution to the equation 
\begin{equation*}
\mb \Theta_\mu[g](\mb x) = \int \Theta(\mb x,\mb x') g(\mb x') \rho(\mb x') d\mb x' = \zeta(\mb x). 
\end{equation*}
To accomplish this, we need to account for two factors: the presence of a constant (DC) term in $\Theta(\mb x, \mb x') = \Theta^\circ(\mb x,\mb x') + \psi(\pi)$, and the presence of the data density $\rho$ in $\mb \Theta_\mu$. 

Our approach is conceptually straightforward: since $\mb \Theta = \mb \Theta^\circ + \psi(\pi) \indicator{}\indicator{}\adj $, we produce near solutions to two equations 
\begin{align*}
\mb \Theta^\circ[g](\mb x) &= \zeta(\mb x), \\
\mb \Theta^\circ[g_1](\mb x) &= 1, 
\end{align*}
and then combine them to nearly solve $\mb \Theta[h] = \zeta$, by setting $h = g + \alpha g_1$ for an appropriate choice of $\alpha$, 
\begin{equation} \label{eqn:alpha-def}
\alpha = - \frac{\psi(\pi) \indicator{}[g] }{ \psi(\pi) \indicator{}[g_1] + 1 }. 
\end{equation}
Here and in the rest of this section, we write $\indicator{}[g]$ to denote $\indicator{}\adj g$.

The statement of \Cref{thm:skeleton_dc_density} makes two demands on $h$: small approximation error $\| \mb \Theta[h] - \zeta \|_{L^2}$ and small size $\|h\|_{L^2}$. These demands introduce a tension, which forces us to work with DC subtracted solutions $g_{\eps}[\zeta]$ defined in \Cref{eqn:g-neumann} at multiple scales $\veps$. We will set $g = g_{\veps_0}[ \zeta ]$ with $\veps_0$ small, which ensures that both $\| \mb \Theta^\circ[ g] - \zeta \|_{L^2}$ and $\| g \|_{L^2}$ are small. We would like to similarly set $g_1 = g_{\veps_1}[ \zeta_1 ]$, with $\zeta_1 \equiv 1$. In order to ensure that $h$ is small, we need to ensure that the coefficient $\alpha$ defined in \eqref{eqn:alpha-def} is also small, which in turn requires a lower bound on $\indicator{}[g_1]$. This is straightforward if $g_1$ is (pointwise) nonnegative, but challenging if $g_1$ can take on arbitrary signs. The function $g_1$ is defined by the Neumann series 
\begin{align*}
  g_1 &= g_{\veps_1}[\zeta_1] \\ &= \sum_{\ell= 0}^\infty (-1)^\ell \left( \left(\mb P_{S_{\veps_1}}\wh{\mb M}_{\veps_1} \mb P_{S_{\veps_1}} \right)^{-1} \mb P_{S_{\veps_1}} \left(\mb \Theta^\circ - \wh{\mb M}_{\veps_1}\right) \mb P_{S_{\veps_1}} \right)^\ell \left(\mb P_{S_{\veps_1}}\wh{\mb M}_{\veps_1} \mb P_{S_{\veps_1}} \right)^{-1} \zeta_{1}.
\end{align*}
Although this expression is complicated, the first ($\ell=0$) summand is {\em always} nonnegative. If we choose $\veps_1$ large, this expression will be dominated by the first term, providing the necessary control on $\indicator{}[g_1]$. So, we will use two different scales, $\veps_0 < \veps_1$ in constructing $g$ and $g_1$, respectively. 

The issue introduced by the use of a large scale $\veps_1$ is that the approximation error $\| \mb \Theta^{\circ}[g_1] - \zeta_1\|_{L^2}$ is not sufficiently small for our purposes. To address this issue, we introduce an iterative construction, which produces a sequence of increasingly accurate solutions $h_{(i)}$, each of which removes some portion of the approximation error in the previous solution. This sequence converges to our promised certificate $h$. 

More concretely, we will set $\veps_0 = \tfrac{1}{20}$ and $\veps_1 = \tfrac{51}{100}$. For parameters $a_\veps, r_\veps$ defined in \Cref{eq:a_def} and \Cref{eqn:r_def}, these choices of $\veps$ ensure that $a_{\veps_0} > \tfrac{4}{5}$ and $a_{\veps_1} > \tfrac{1}{9}$, and so 
\begin{align*}
r_{\veps_0} &< 6 \pi L^{-\tfrac{4}{9}}, \\
r_{\veps_1} &< 6 \pi L^{-\tfrac{1}{10}}.
\end{align*}

We further choose $\delta_0 = 1 - \veps_0$  and $\delta_1 = \delta_0\sqrt{\veps_0}/\sqrt{\veps_1} < 1 - \veps_1$. This setting satisfies $\delta_0\sqrt{\veps_0} = \delta_1\sqrt{\veps_1}$ and thus allows 
\begin{equation}\label{eq:cnum_compare}
    \fco(\manifold) = \fco_{\veps_0, \delta_0}(\manifold) \ge \fco_{\veps_1, \delta_1}(\manifold). 
\end{equation}

In the remainder of this section, we carry out the argument described above. \Cref{lem:constant_inverse} constructs the aforementioned certificate $g_1$ for the constant function $\zeta_1$. \Cref{lem:approx_certificate} combines this construction with a certificate $g$ for $\zeta$ to give a (loose) approximate certificate, for the kernel $\mb \Theta$. \Cref{lem:certificate_dc} amplifies this construction to reduce the approximation error to an appropriate level. Finally, we finish by incorporating the density $\rho(\mb x)$ to prove our main result on certificates, \Cref{thm:skeleton_dc_density}.

	\begin{lemma}\label{lem:constant_inverse}
		Let $\zeta_1 \equiv 1$ denote the constant function over $\mc M$. When $L > C$ and the conditions of \Cref{thm:certificate_over_S} are satisfied for $\veps = \veps_1 = \tfrac{51}{100}$, then $g_1 = g_{\veps_1}[ \zeta_1]$ satisfies 
		\begin{equation*}
		g_1 \in \Phi\left( \frac{C'}{n \log L} \norm{\zeta_1}_{L^2}, \veps_1 \right) 
		\end{equation*}
		and 
		\begin{equation*} 
		\mb \Theta^{\circ}[g_1] - \zeta_1 \in \Phi\left( C'' \| \zeta_1 \|_{L^2}, \veps_1 \right). 
		\end{equation*}
		We also have 
		\begin{eqnarray}
		\indicator{}[g_1] &\ge& \left(2 - \frac{1}{\veps_1}\right)\frac{\norm{\zeta_1}_{L^1}}{C''' n \log(L)}, \label{eqn:g1-sum} 
		\end{eqnarray}
		where $C, C', C''$ and $C'''$ are positive numerical constants. 
	\end{lemma}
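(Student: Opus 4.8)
The plan is to obtain the two $\Phi$-membership assertions directly from \Cref{lem:g_deriv} and \Cref{lem:Theta_Sperp_proj}, and to prove the lower bound \Cref{eqn:g1-sum} by extracting the leading term of the Neumann series \Cref{eqn:g-neumann} that defines $g_1$. First I would record the elementary facts about $\zeta_1$: it equals $\indicator{\manifold_+} + \indicator{\manifold_-} = \sqrt{\len(\manifold_+)}\,\phi_{+,0} + \sqrt{\len(\manifold_-)}\,\phi_{-,0}$, hence $\zeta_1 \in S_{\veps_1}$; and since $\zeta_1$ is constant, $\zeta_1^{(1)} = \zeta_1^{(2)} = \zeta_1^{(3)} = 0$, so (as $P_1,P_2,P_3$ have nonnegative coefficients) $\zeta_1 \in \Phi(\norm{\zeta_1}_{L^2},\veps_1)$. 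The first assertion is then \Cref{lem:g_deriv} applied with $\veps = \veps_1$ and $C_\zeta = \norm{\zeta_1}_{L^2}$; the second is \Cref{lem:Theta_Sperp_proj} with the same data. Because $\veps_1 = \tfrac{51}{100}$ is a fixed numerical value, the $\veps_1$-dependent constants those lemmas produce are absolute, and I would relabel them $C'$, $C''$.

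The substance of the lemma is the bound on $\indicator{}[g_1]$. Abbreviate $\mb A = \mb P_{S_{\veps_1}}\wh{\mb M}_{\veps_1}\mb P_{S_{\veps_1}}$ and $\mb R = \mb P_{S_{\veps_1}}(\mb\Theta^\circ - \wh{\mb M}_{\veps_1})\mb P_{S_{\veps_1}}$, so \Cref{eqn:g-neumann} reads $g_1 = \sum_{\ell\ge 0}(-1)^\ell (\mb A^{-1}\mb R)^\ell\mb A^{-1}\zeta_1$, a series converging in $L^2$ with $\norm{\mb A^{-1}\mb R}_{L^2\to L^2}\le 1-\veps_1$ by \Cref{thm:certificate_over_S}. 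I would first note that every summand is real-valued (since $\zeta_1$ is real, the kernels of $\wh{\mb M}_{\veps_1}$ and $\mb\Theta^\circ$ are real and symmetric, and $S_{\veps_1}$ is closed under conjugation so $\mb P_{S_{\veps_1}}$ preserves real-valuedness), so $\indicator{}[g_1]$ is a genuine real number. The only nonzero Fourier coefficients of $\zeta_1$ are $\phi_{\sigma,0}^*\zeta_1 = \sqrt{\len(\manifold_\sigma)}$, so the explicit diagonalization \Cref{eq:smooth_subspace_main_term} gives $\mb A^{-1}\zeta_1 = \beta^{-1}\indicator{}$, where $\beta := 2\int_0^{r_{\veps_1}}\psi^\circ(s)\,\diff s > 0$ (positivity of $\psi^\circ$ from \Cref{lem:phi_concave}); hence the $\ell = 0$ term contributes $\indicator{}[\mb A^{-1}\zeta_1] = \beta^{-1}\len(\manifold)$ and $\norm{\mb A^{-1}\zeta_1}_{L^2} = \beta^{-1}\sqrt{\len(\manifold)}$. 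For $\ell\ge 1$, Cauchy--Schwarz together with submultiplicativity of the operator norm and the residual bound give $\bigl|\indicator{}[(\mb A^{-1}\mb R)^\ell\mb A^{-1}\zeta_1]\bigr| \le \norm{\indicator{}}_{L^2}(1-\veps_1)^\ell\norm{\mb A^{-1}\zeta_1}_{L^2} = (1-\veps_1)^\ell\beta^{-1}\len(\manifold)$, and summing over $\ell\ge 1$ bounds the tail by $\tfrac{1-\veps_1}{\veps_1}\beta^{-1}\len(\manifold)$ in absolute value. Therefore
\begin{equation*}
  \indicator{}[g_1]\;\ge\;\beta^{-1}\len(\manifold)\Bigl(1-\tfrac{1-\veps_1}{\veps_1}\Bigr)\;=\;\Bigl(2-\tfrac{1}{\veps_1}\Bigr)\frac{\len(\manifold)}{\beta}.
\end{equation*}

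To close the argument I would bound $\beta$ from above: it is exactly the eigenvalue of $\wh{\mb M}_{\veps_1}$ on $\phi_{\sigma,0}$ (\Cref{lem:main_term_pd}), so $\beta\le\norm{\wh{\mb M}_{\veps_1}}_{L^2\to L^2}\le C'''n\log L$ for $L$ larger than an absolute constant, the last inequality being the bound on $\norm{\wh{\mb M}_{\veps_1}}_{L^2\to L^2}$ established in the proof of \Cref{lem:Theta_ub} (via \Cref{lem:young_ineq}, \Cref{lem:angle_inverse}, and the $L^1$-type estimate for $\psi^\circ$ from \Cref{lem:skel_l1_ub}). Since $\norm{\zeta_1}_{L^1} = \len(\manifold)$, this gives \Cref{eqn:g1-sum}. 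The one genuinely delicate point is that the leading ($\ell=0$) term must dominate the Neumann tail, i.e. $2 - 1/\veps_1 > 0$: this is why the construction uses the \emph{large} scale $\veps_1 = \tfrac{51}{100} > \tfrac{1}{2}$ here rather than the small scale $\veps_0$ used to certify a general target, and it is precisely the reason a dedicated certificate $g_1$ for the constant function is built at this scale. Everything else is bookkeeping: verifying that the $\veps_1$-dependent constants collapse to absolute ones once $\veps_1$ is fixed, and that $\zeta_1$ satisfies the $\Phi$-regularity conditions trivially because its derivatives vanish.
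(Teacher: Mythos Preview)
Your proposal is correct and essentially identical to the paper's proof: both obtain the $\Phi$-memberships from \Cref{lem:g_deriv} and \Cref{lem:Theta_Sperp_proj}, and both prove \Cref{eqn:g1-sum} by identifying the leading Neumann term $\wh g_1 = \beta^{-1}\indicator{}$, bounding the $\ell\ge 1$ tail by $(\tfrac{1}{\veps_1}-1)\beta^{-1}\len(\manifold)$ via Cauchy--Schwarz / $L^1$--$L^2$ comparison, and then upper bounding $\beta$ through \Cref{lem:skel_l1_ub}. The only cosmetic difference is that the paper packages the tail bound as $\|g_1 - \wh g_1\|_{L^1}$ before applying $\indicator{}[g_1]\ge \indicator{}[\wh g_1] - \|g_1-\wh g_1\|_{L^1}$, whereas you pair $\indicator{}$ with each summand directly; the arithmetic is the same.
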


\begin{proof} 
	Applying \Cref{thm:certificate_over_S}, as conditions of \Cref{thm:certificate_over_S} for $\veps = \veps_1$ is satisfied, we know $\mb P_{S_{\veps_1}} \wh{\mb M}_{\veps_1} \mb P_{S_{\veps_1}}$ is invertible over $S_{\veps_1}$. Noting that $\zeta_1$ is a constant function and thus $\zeta_1 \in S_{\veps_1}$, we set 
	\begin{align*}
    g_1 &= g_{\veps_1}[\zeta_1] \\ &= \sum_{\ell= 0}^\infty (-1)^\ell \left( \left(\mb P_{S_{\veps_1}}\wh{\mb M}_{\veps_1} \mb P_{S_{\veps_1}} \right)^{-1} \mb P_{S_{\veps_1}} \left(\mb \Theta^\circ - \wh{\mb M}_{\veps_1}\right) \mb P_{S_{\veps_1}} \right)^\ell \left(\mb P_{S_{\veps_1}}\wh{\mb M}_{\veps_1} \mb P_{S_{\veps_1}} \right)^{-1} \zeta_{1}.
	\end{align*}
	Since $\zeta_{1}$ is a constant function, all its derivatives are zero and thus $\zeta_{1} \in \Phi\left( \| \zeta_1 \|_{L^2}, \veps_1 \right)$. Applying \Cref{lem:g_deriv}, we have that 
	\begin{equation*}
	g_1 \in \Phi \left( \frac{C_{\veps_1}}{n \log L}\norm{\zeta_1}_{L^2}, \veps_1 \right) 
	\end{equation*} 
	for certain $C_{\veps_1} > 0$. 
	The condition $\mb \Theta^\circ[g_1] - \zeta_1 \in \Phi( C'' \| \zeta_1 \|_{L^2}, \veps_1) $ follows from \Cref{lem:Theta_Sperp_proj} directly. 
	
	To control  $\indicator{}[g_1]$, notice that because $\mb P_{S_{\veps_1}} \wh {\mb M}_{\veps_1} \mb P_{S_{\veps_1}}$ is an invariant operator stably invertible in $S_{\veps_1}$, and $\zeta_1 \in S_{\veps_1}$, we can set
	\begin{align*}
	\wh{g}_1 = \left(\mb P_{S_{\veps_1}}\wh{\mb M}_{\veps_1}\mb P_{S_{\veps_1}} \right)^{-1} \zeta_1 = \paren{\int_{s = -r_{\veps_1}}^{r_{\veps_1}} \psi^{\circ}(\abs{s}) ds}^{-1} \zeta_1
	\end{align*}
	which is also a positive constant function. We have
	\begin{align*}
	g_1 &=  \sum_{\ell= 0}^\infty (-1)^\ell \left( \left(\mb P_{S_{\veps_1}}\wh{\mb M}_{\veps_1} \mb P_{S_{\veps_1}} \right)^{-1} \mb P_{S_{\veps_1}} (\mb \Theta^\circ - \wh{\mb M}_{\veps_1}) \mb P_{S_{\veps_1}} \right)^\ell \wh{g}_1 \\
	&= \wh{g}_1 + \sum_{\ell= 1}^\infty (-1)^\ell \left( \left(\mb P_{S_{\veps_1}}\wh{\mb M}_{\veps_1} \mb P_{S_{\veps_1}} \right)^{-1} \mb P_{S_{\veps_1}} \left(\mb \Theta^\circ - \wh{\mb M}_{\veps_1}\right) \mb P_{S_{\veps_1}} \right)^\ell \wh{g}_1, 
	\end{align*}
	and from \Cref{lem:main_res_norm_compare}, we have 
	\begin{equation*}
	\left\| \left(\mb P_{S_{\veps_1}}\wh{\mb M}_{\veps_1} \mb P_{S_{\veps_1}} \right)^{-1} \mb P_{S_{\veps_1}} \left(\mb \Theta^\circ - \wh{\mb M}_{\veps_1} \right) \mb P_{S_{\veps_1}} \right\|_{L^2 \to L^2} \le 1 - \veps_1,
	\end{equation*}
	and so 
	\begin{align*}
	\norm{g_1 - \wh{g}_1}_{L^1} &\le \sqrt{\len(\manifold)}\norm{g_1 - \wh{g}_1}_{L^2} \\
	&\le  \sqrt{\len(\manifold)} \sum_{\ell = 1}^\infty (1 - \veps_1)^\ell \norm{\wh g_1}_{L^2} \\
	&=  \sqrt{\len(\manifold)} \left(\frac{1}{\veps_1} - 1\right)\norm{\wh g_1}_{L^2} \\
	&= \left(\frac{1}{\veps_1} - 1\right)\norm{\wh g_1}_{L^1} \\
	&=  \left(\frac{1}{\veps_1} - 1\right)\indicator{}[\wh g_1].
	\end{align*}
	The first inequality comes from the equivalence of norms, and the last two lines come from the fact that $\wh{g}_1$ is a positive constant function. 
	Thus, following \Cref{lem:skel_l1_ub}, there exist constants $C, C'$ such that
	\begin{align*}
	\indicator{}[g_1] &\ge \indicator{}[\wh g_1] - \norm{g_1 - \wh{g}_1}_{L^1} \\
	&\ge \left(2 - \frac{1}{\veps_1}\right)\indicator{}[\wh g_1] \\
	&= \left(2 - \frac{1}{\veps_1}\right)\paren{\int_{s = -r_{\veps_1}}^{r_{\veps_1}} \psi^{\circ}(\abs{s}) ds}^{-1} \norm{\zeta_1}_{L^1} \\
	&\ge \frac{\left(2 - \frac{1}{\veps_1}\right) \norm{\zeta_1}_{L^1}}{\frac{3\pi n}{8}C'\log\left(1 + \tfrac{1}{3 \pi} (L-3)\, r_{\veps_1} \right)}\\
	&\ge \frac{\left(2 - \frac{1}{\veps_1}\right)\norm{\zeta_1}_{L^1}}{Cn \log(L)}
	\end{align*}
	as claimed.
\end{proof}

	\begin{lemma}\label{lem:a_ub} Suppose that the conditions of \Cref{thm:certificate_over_S} are satisfied for both $\veps = \veps_0 = \tfrac{1}{20}$ and $\veps = \veps_1 = \tfrac{51}{100}$, and let $g_1, \zeta_1$ be as in \Cref{lem:constant_inverse}. Let $\zeta \in S_{\veps_0}$, and $g = g_{\veps_0}[\zeta]$. Then 
		\begin{equation*}
		\alpha = -\frac{\psi(\pi)\indicator{}[g]}{\psi(\pi)\indicator{}[g_1] + 1},
		\end{equation*}
		satisfies 
		\begin{align*}
		\abs{\alpha}&\le \frac{C\norm{\zeta}_{L^2}}{\norm{\zeta_1}_{L^2} },
		\end{align*}
		where $C$ is a numerical constant. 
	\end{lemma}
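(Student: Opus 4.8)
The goal is to upper bound $\abs{\alpha} = \frac{\abs{\psi(\pi)}\,\abs{\indicator{}[g]}}{\abs{\psi(\pi)\indicator{}[g_1] + 1}}$. The plan is to bound the numerator from above using the norm control on $g = g_{\veps_0}[\zeta]$ from \Cref{thm:certificate_over_S}, and to bound the denominator from below using the positivity-type estimate \eqref{eqn:g1-sum} on $\indicator{}[g_1]$ from \Cref{lem:constant_inverse}, together with the fact that $\psi(\pi) \geq 0$ (the skeleton $\psi$ is nonnegative, and $\psi(\pi)$ is its ``DC value''). The key sign observation is that since $\psi(\pi) \ge 0$ and $\indicator{}[g_1] \ge 0$ by \eqref{eqn:g1-sum} (note $2 - 1/\veps_1 > 0$ because $\veps_1 = 51/100 > 1/2$), we have $\psi(\pi)\indicator{}[g_1] + 1 \ge 1 > 0$, so the denominator is bounded below by $1$ and no delicate cancellation can occur. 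Hence $\abs{\alpha} \le \abs{\psi(\pi)}\,\abs{\indicator{}[g]}$.

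\textbf{Key steps.} First I would record that by Cauchy--Schwarz (or H\"older with the probability-like normalization on $\manifold$), $\abs{\indicator{}[g]} = \abs{\int_{\manifold} g(\vx) \diff \vx} \le \sqrt{\len(\manifold)}\,\norm{g}_{L^2}$. Next, apply the norm bound from \Cref{thm:certificate_over_S} at scale $\veps = \veps_0 = 1/20$: since $\zeta \in S_{\veps_0}$, the certificate $g = g_{\veps_0}[\zeta]$ satisfies $\norm{g}_{L^2} \le \frac{C\norm{\zeta}_{L^2}}{\veps_0 n \log L}$, i.e.\ $\norm{g}_{L^2} \lesssim \frac{\norm{\zeta}_{L^2}}{n \log L}$ with an absolute constant (absorbing $\veps_0$, which is now a fixed numerical constant). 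Combining, $\abs{\indicator{}[g]} \lesssim \frac{\sqrt{\len(\manifold)}\,\norm{\zeta}_{L^2}}{n \log L}$. Third, I need an upper bound on $\psi(\pi)$: from the definition $\psi(t) = \tfrac{n}{2}\sum_{\ell=0}^{L-1}\ixi{\ell}(t)$ with $\ixi{\ell}(t) = \prod_{\ell'=\ell}^{L-1}(1 - \ivphi{\ell'}(t)/\pi) \in [0,1]$, we get the crude bound $0 \le \psi(\pi) \le \tfrac{n}{2}\sum_{\ell=0}^{L-1} 1 = \tfrac{nL}{2}$; a sharper bound (e.g.\ $\psi(\pi) \lesssim n$ using that $\ivphi{\ell'}(\pi)$ is close to $\pi$, so the factors $1 - \ivphi{\ell'}(\pi)/\pi$ are geometrically small, giving a summable series) would be cleaner, and I expect this is exactly the kind of estimate established in the $\psi$-decay lemmas referenced in the text (the $\psi(\pi) = \psi(t)$-at-$\pi$ bound should follow from the same monotonicity/decay facts used for $\psi^\circ$, e.g.\ \Cref{lem:skel_r_ub}). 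Either way, combining with $\abs{\alpha} \le \psi(\pi)\abs{\indicator{}[g]}$ and the estimate on $\abs{\indicator{}[g]}$, and then using $\len(\manifold) \lesssim 1/\rho_{\min}$ or simply treating $\len(\manifold)$ as absorbed into constants at this stage, yields $\abs{\alpha} \lesssim \frac{\psi(\pi)\sqrt{\len(\manifold)}\,\norm{\zeta}_{L^2}}{n\log L}$.

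\textbf{Matching the claimed form.} The target bound is $\abs{\alpha} \le \frac{C\norm{\zeta}_{L^2}}{\norm{\zeta_1}_{L^2}}$ with $C$ \emph{numerical}. Since $\zeta_1 \equiv 1$ on $\manifold$, we have $\norm{\zeta_1}_{L^2} = \sqrt{\len(\manifold)}$. So the claimed bound reads $\abs{\alpha} \le C\,\norm{\zeta}_{L^2}/\sqrt{\len(\manifold)}$. To reconcile this with my estimate $\abs{\alpha} \lesssim \frac{\psi(\pi)\sqrt{\len(\manifold)}\,\norm{\zeta}_{L^2}}{n\log L}$, I should instead bound the denominator more carefully: use \eqref{eqn:g1-sum}, $\indicator{}[g_1] \ge (2 - 1/\veps_1)\frac{\norm{\zeta_1}_{L^1}}{C'''n\log L}$, and $\norm{\zeta_1}_{L^1} = \len(\manifold)$. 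So $\psi(\pi)\indicator{}[g_1] + 1 \ge \max\{1,\ \psi(\pi)(2-1/\veps_1)\len(\manifold)/(C'''n\log L)\}$. Then
\begin{equation*}
\abs{\alpha} \le \frac{\psi(\pi)\sqrt{\len(\manifold)}\,\norm{g}_{L^2}\cdot\sqrt{\len(\manifold)}}{\cdots}
\end{equation*}
— more precisely, $\abs{\alpha} \le \frac{\psi(\pi)\abs{\indicator{}[g]}}{\psi(\pi)\indicator{}[g_1]+1}$, and I bound the ratio $\frac{\psi(\pi)}{\psi(\pi)\indicator{}[g_1]+1} \le \frac{1}{\indicator{}[g_1]} \le \frac{C'''n\log L}{(2-1/\veps_1)\len(\manifold)}$, so that
\begin{equation*}
\abs{\alpha} \le \frac{C'''n\log L}{(2-1/\veps_1)\len(\manifold)}\cdot \sqrt{\len(\manifold)}\,\norm{g}_{L^2} \le \frac{C'''n\log L}{(2-1/\veps_1)\sqrt{\len(\manifold)}}\cdot \frac{C\norm{\zeta}_{L^2}}{\veps_0 n\log L} = \frac{C''\norm{\zeta}_{L^2}}{\sqrt{\len(\manifold)}} = \frac{C''\norm{\zeta}_{L^2}}{\norm{\zeta_1}_{L^2}},
\end{equation*}
with $C''$ absolute since $\veps_0,\veps_1$ are fixed. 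The main subtlety — the ``hard part'' — is the very first reduction: one must justify $\frac{\psi(\pi)}{\psi(\pi)\indicator{}[g_1]+1} \le \frac{1}{\indicator{}[g_1]}$, which requires $\psi(\pi) \ge 0$ and $\indicator{}[g_1] > 0$; both hold (the former since $\psi$ is a sum of products of factors in $[0,1]$, the latter by \eqref{eqn:g1-sum} and $\veps_1 > 1/2$), so this is routine once noticed. Everything else is Cauchy--Schwarz, a citation of \Cref{thm:certificate_over_S} at $\veps_0$, and a citation of \eqref{eqn:g1-sum}. No new estimates on $\psi(\pi)$ itself are in fact needed, which is the clean way to organize the argument.
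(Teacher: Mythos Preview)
Your proposal is correct and follows essentially the same approach as the paper: drop the $+1$ in the denominator (equivalently, use $\tfrac{\psi(\pi)}{\psi(\pi)\indicator{}[g_1]+1}\le \tfrac{1}{\indicator{}[g_1]}$, valid since $\psi(\pi)\ge 0$ and $\indicator{}[g_1]>0$), bound $\abs{\indicator{}[g]}\le \sqrt{\len(\manifold)}\,\norm{g}_{L^2}$ by Cauchy--Schwarz, invoke the $L^2$ bound on $g=g_{\veps_0}[\zeta]$ from \Cref{thm:certificate_over_S}, apply \eqref{eqn:g1-sum} for the lower bound on $\indicator{}[g_1]$, and finish with $\norm{\zeta_1}_{L^1}=\sqrt{\len(\manifold)}\,\norm{\zeta_1}_{L^2}$. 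The only cosmetic difference is that the paper re-derives the bound $\norm{g}_{L^2}\lesssim \norm{\zeta}_{L^2}/(n\log L)$ by unrolling the Neumann series and citing \Cref{lem:main_term_pd}, whereas you cite the conclusion \eqref{eq:g_def} of \Cref{thm:certificate_over_S} directly; your detour exploring direct bounds on $\psi(\pi)$ is unnecessary (as you yourself conclude), since the cancellation of $\psi(\pi)$ between numerator and denominator is the clean route.
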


\begin{proof}
	Set
	\begin{align*}
	\wh{g} = \left(\mb P_{S_{\veps_0}}\wh{\mb M}_{\veps_0} \mb P_{S_{\veps_0}} \right)^{-1} \zeta 
	\end{align*}
	again, we have
	\begin{align*}
	g &=  \sum_{\ell= 0}^\infty (-1)^\ell \left( \left(\mb P_{S_{\veps_0}} \wh{\mb M}_{\veps_0} \mb P_{S_{\veps_0}} \right)^{-1} \mb P_{S_{\veps_0}} \left(\mb \Theta^\circ - \wh{\mb M}_{\veps_0} \right) \mb P_{S_{\veps_0}} \right)^\ell \wh{g}. 
	\end{align*}
	Then there exist constants $c, C > 0$ that
	\begin{align*}
	\norm{g}_{L^1} &\le \sqrt{\len(\manifold)}\norm{g}_{L^2} \\
	&\le \sqrt{\len(\manifold)} \sum_{\ell = 0}^\infty (1 - \veps_0)^\ell \norm{\wh{g}}_{L^2} \\
	&= \sqrt{\len(\manifold)} \frac{1}{\veps_0}\norm{\wh{g}}_{L^2} \\
	&\le \sqrt{\len(\manifold)} \frac{1}{\veps_0 \, \lambda_{\min}\left(\mb P_{S_{\veps_0}} \wh{\mb M}_{\veps_0} \mb P_{S_{\veps_0}} \right)} \norm{\zeta}_{L^2} \\
	&\le \sqrt{\len(\manifold)} \frac{\norm{\zeta}_{L^2}}{\veps_0 cn \log L}\\
	&\le \sqrt{\len(\manifold)} \frac{C\norm{\zeta}_{L^2}}{n \log L},
	\end{align*}
	where in the penultimate inequality, we have used \Cref{lem:main_term_pd}. Applying the previous lemma, we obtain
	\begin{align*}
	\abs{\alpha} &\le \frac{\psi(\pi)\abs{\indicator{}[g]}}{\psi(\pi)\indicator{}[g_1]} \\
	&\le \frac{C n\log(L)\norm{g}_{L^1}}{(2 - \frac{1}{\veps_1})\norm{\zeta_1}_{L^1}} \\
	&\le \frac{C \sqrt{\len(\manifold)}\norm{\zeta}_{L^2}}{\norm{\zeta_1}_{L^1}} \\
	&= \frac{C \norm{\zeta}_{L^2}}{\norm{\zeta_1}_{L^2}},
	\end{align*}
	where in the final equation we have used that the constant function $\zeta_1$ satisfies $\| \zeta_1 \|_{L^1} = \sqrt{\mr{len}(\mc M)} \| \zeta_1 \|_{L^2}$. 
\end{proof}

	\begin{lemma}\label{lem:approx_certificate} 
		Suppose that the conditions of \Cref{thm:certificate_over_S} are satisfied for both $\veps = \veps_0 = \tfrac{1}{20}$ and $\veps = \veps_1 = \tfrac{51}{100}$, and $L \ge \paren{\veps^{1/2}\min\{\len(\manifold_+), \len(\manifold_-)\}/(12\pi^2)}^{-\frac{a_\veps+1}{a_\veps}}$ for both $\veps = \veps_0$ and $\veps = \veps_1$. There exist numerical constants $C, C' > 0$, such that for every $\zeta \in S_{\veps_0} \cap \Phi( C_\zeta, \veps_1 )$, there exists $h$ such that 
		\begin{eqnarray}
		\norm{h}_{L^2} &\le& \frac{C\norm{\zeta}_{L^2}}{n\log L},\label{eqn:h-bound} \\		
		\norm{\mb \Theta[h] - \zeta}_{L^2} &\le& C C_\zeta \frac{P\Bigl( M_3, M_4,  M_5, \mr{len}(\mc M), \Delta_{\veps_0}^{-1} \Bigr) \times r_{\veps_1}^3}{\log L},\label{eqn:h_orth_e1}\\
		\norm{ \mb P_{S_{\veps_0}^\perp}\left[ \mb \Theta[h] - \zeta \right] }_{L^2} &\le& C C_\zeta P\Bigl( M_3, M_4,  M_5, \mr{len}(\mc M), \Delta_{\veps_0}^{-1} \Bigr) \times L^{-\frac{4}{3}}, \label{eqn:h_orth_e0}
		\end{eqnarray}
		where $P$ is a polynomial $\poly\{M_3, M_4, M_5, \len(\manifold), \Delta_\veps^{-1}\}$ of degree $\le 9$, with degree $\le 3$ in $M_3, M_4, M_5$ $ \len(\manifold)$, and degree $\le 6$ in $\Delta_{\veps}^{-1}$. 
		Furthermore, we have 
		\begin{equation} \label{eqn:PS0_resid_smooth}
		\mb P_{S_{\veps_0}}\left [ \, \mb \Theta[h] - \zeta \, \right] \in \Phi \left( C'\norm{\zeta}_{L^2}, \veps_0 \right). 
		\end{equation}
			\end{lemma}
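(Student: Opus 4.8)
The construction is to set $h = g + \alpha g_1$, where $g = g_{\veps_0}[\zeta]$ is the DC-subtracted certificate for $\zeta$ at scale $\veps_0 = \tfrac{1}{20}$ furnished by \Cref{thm:certificate_over_S}, $g_1 = g_{\veps_1}[\zeta_1]$ is the certificate for $\zeta_1 \equiv 1$ at scale $\veps_1 = \tfrac{51}{100}$ from \Cref{lem:constant_inverse}, and $\alpha$ is the scalar defined in \Cref{lem:a_ub} (equivalently \eqref{eqn:alpha-def}). With this choice \eqref{eqn:h-bound} is immediate: \Cref{thm:certificate_over_S} bounds $\norm{g}_{L^2} \ltsim \norm{\zeta}_{L^2}/(n\log L)$ and $\norm{g_1}_{L^2} \ltsim \norm{\zeta_1}_{L^2}/(n\log L)$, while \Cref{lem:a_ub} gives $\abs{\alpha} \ltsim \norm{\zeta}_{L^2}/\norm{\zeta_1}_{L^2}$, so $\norm{h}_{L^2} \le \norm{g}_{L^2} + \abs{\alpha}\norm{g_1}_{L^2} \ltsim \norm{\zeta}_{L^2}/(n\log L)$.

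The crucial algebraic step is to expand $\mb\Theta = \mb\Theta^\circ + \psi(\pi)\indicator{}\indicator{}\adj$ and collect the DC term. Writing $e_0 = \mb\Theta^\circ[g] - \zeta$ and $e_1 = \mb\Theta^\circ[g_1] - \zeta_1$, a direct computation gives
\[
\mb\Theta[h] - \zeta = e_0 + \alpha e_1 + \bigl(\alpha(1 + \psi(\pi)\indicator{}[g_1]) + \psi(\pi)\indicator{}[g]\bigr)\indicator{},
\]
and the definition of $\alpha$ is precisely the one that makes the coefficient of $\indicator{}$ vanish, so $\mb\Theta[h] - \zeta = e_0 + \alpha e_1$. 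Everything thereafter is an exercise in estimating $e_0$ and $e_1$ in the three required senses, being careful to track at which scale each estimate is produced. Two facts will be used repeatedly: $\mb P_{S_{\veps_0}}[e_0] = 0$ (because $g_{\veps_0}[\zeta]$ solves $\mb P_{S_{\veps_0}}\mb\Theta^\circ[g] = \zeta$ and $\zeta \in S_{\veps_0}$), and $e_1 \in \Phi(C''\norm{\zeta_1}_{L^2}, \veps_1) \subseteq \Phi(C''\norm{\zeta_1}_{L^2}, \veps_0)$ by \Cref{lem:constant_inverse} together with the scale monotonicity of $\Phi$ recorded in \Cref{def:smoothness}.

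For the $L^2$ residual bound \eqref{eqn:h_orth_e1}, I bound $\norm{e_0}_{L^2}$ by \Cref{lem:Theta_Sperp_proj_cont} at scale $\veps_0$, which yields a quantity proportional to $r_{\veps_0}^3(P_3 + P_1P_2 + P_1^3)/\log L$; since $r_{\veps_0} \le r_{\veps_1}$ this is dominated by the target. For $\norm{e_1}_{L^2}$ I use that $e_1 \in S_{\veps_1}^\perp$ exactly, so $e_1 = \mb P_{S_{\veps_1}^\perp}[e_1]$, and apply the Fourier estimate of \Cref{lem:fourier_deriv_lb} at scale $\veps_1$ three times (commuting $\mb P_{S_{\veps_1}^\perp}$ past $\tfrac{d}{ds}$ via \Cref{lem:invariant_deriv_commute}) to get $\norm{e_1}_{L^2} \ltsim r_{\veps_1}^3 \norm{e_1^{(3)}}_{L^2}$, then the $\Phi$-membership of $e_1$ to bound $\norm{e_1^{(3)}}_{L^2} \ltsim \norm{\zeta_1}_{L^2}(P_3 + P_1P_2 + P_1^3)/\log L$ with the $P_i$ evaluated at $\Delta_{\veps_1}^{-1} \le \Delta_{\veps_0}^{-1}$; multiplying by $\abs{\alpha} \ltsim \norm{\zeta}_{L^2}/\norm{\zeta_1}_{L^2}$ and using $\norm{\zeta}_{L^2} \le C_\zeta$ gives \eqref{eqn:h_orth_e1}, while \eqref{eq:M2_M4_compare} (to eliminate $M_2$) and a degree count of $P_1, P_2, P_3$ produce the polynomial $P$ of the stated degree. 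The sharper estimate \eqref{eqn:h_orth_e0} uses $\mb P_{S_{\veps_0}^\perp}[\mb\Theta[h] - \zeta] = e_0 + \alpha\,\mb P_{S_{\veps_0}^\perp}[e_1]$ (since $e_0 \in S_{\veps_0}^\perp$): the first term is already $O(r_{\veps_0}^3) = O(L^{-4/3})$ by \Cref{lem:Theta_Sperp_proj_cont}; for the second, the naive $\abs{\alpha}\norm{e_1}_{L^2}$ is only $O(r_{\veps_1}^3) = O(L^{-3/10})$ and is too weak, so instead one applies \Cref{lem:fourier_deriv_lb} at the \emph{smaller} scale $\veps_0$: $\norm{\mb P_{S_{\veps_0}^\perp}[e_1]}_{L^2} \ltsim (r_{\veps_0}/\sqrt{\veps_0})^3 \norm{e_1^{(3)}}_{L^2} = O(L^{-4/3}\cdot\mathrm{poly})$, the derivative bound coming from $e_1 \in \Phi(\,\cdot\,,\veps_0)$; multiplying by $\abs{\alpha}$ again gives the claim.

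Finally, for \eqref{eqn:PS0_resid_smooth} one has $\mb P_{S_{\veps_0}}[\mb\Theta[h] - \zeta] = \mb P_{S_{\veps_0}}[e_0 + \alpha e_1] = \alpha\,\mb P_{S_{\veps_0}}[e_1]$, again because $e_0 \perp S_{\veps_0}$. Since $\mb P_{S_{\veps_0}}$ is an $L^2$-contraction that commutes with $\tfrac{d}{ds}$ (\Cref{lem:invariant_deriv_commute}) and $e_1 \in \Phi(C''\norm{\zeta_1}_{L^2}, \veps_0)$, it follows that $\mb P_{S_{\veps_0}}[e_1] \in \Phi(C''\norm{\zeta_1}_{L^2}, \veps_0)$, hence $\alpha\,\mb P_{S_{\veps_0}}[e_1] \in \Phi(\abs{\alpha}C''\norm{\zeta_1}_{L^2}, \veps_0) \subseteq \Phi(C'\norm{\zeta}_{L^2}, \veps_0)$, using the homogeneity of $\Phi$ in its first argument and $\abs{\alpha}\norm{\zeta_1}_{L^2} \ltsim \norm{\zeta}_{L^2}$. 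The main obstacle in executing this plan is the scale bookkeeping in the previous paragraph: one must verify that the $e_1$-contribution to the $S_{\veps_0}^\perp$ residual genuinely decays at the rate $L^{-4/3}$ forced by $r_{\veps_0}^3$ — which is only possible because $e_1$ lies in the smoothness class $\Phi$, so that \Cref{lem:fourier_deriv_lb} can trade $S_{\veps_0}^\perp$-energy against three $\tfrac{d}{ds}$-derivatives — and that the polynomial factors generated at the two scales $\veps_0$ and $\veps_1$ can all be absorbed into a single $P \in \poly\{M_3, M_4, M_5, \len(\manifold), \Delta_{\veps_0}^{-1}\}$ of degree at most $9$.
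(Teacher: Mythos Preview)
Your proposal is correct and follows essentially the same approach as the paper: the construction $h = g_{\veps_0}[\zeta] + \alpha\,g_{\veps_1}[\zeta_1]$, the cancellation of the DC term via the choice of $\alpha$, and the three residual estimates via \Cref{lem:Theta_Sperp_proj_cont}/\Cref{lem:fourier_deriv_lb} and the $\Phi$-membership of $e_0$ and $e_1$ all match the paper's proof. The only cosmetic difference is that for \eqref{eqn:h_orth_e0} the paper first combines $e_0 + \alpha e_1$ into a single element of $\Phi(C''C_\zeta,\veps_0)$ (via \Cref{lem:Theta_Sperp_proj}) and then applies \Cref{lem:Theta_Sperp_proj_cont} once, whereas you split $\mb P_{S_{\veps_0}^\perp}[e_0+\alpha e_1]$ into two pieces and estimate each separately; both routes are equivalent.
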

		
\begin{proof}
	Recall that $\mb \Theta = \mb \Theta^\circ + \psi(\pi)\mb{\indicator{}}$, and let $g_1$ denote the solution for $\zeta_1 \equiv 1$ as in \Cref{lem:constant_inverse}. Set $h = g + \alpha g_1$, where $g = g_{\veps_0}[ \zeta ]$, and 
	\begin{equation*}
	\alpha = -\frac{\psi(\pi)\indicator{}[g]}{\psi(\pi)\indicator{}[g_1] + 1}.
	\end{equation*}
	Using \Cref{thm:certificate_over_S} to control the norms of $g$ and $g_1$, and using \Cref{lem:a_ub} to control $|\alpha|$, we have 
	\begin{align*}
	\norm{h}_{L^2} &\le \norm{g}_{L^2} + \abs{a}\norm{g_1}_{L^2} \\ &\le \frac{C\norm{\zeta}_{L^2}}{n\log L} + \abs{\alpha}\frac{C\norm{\zeta_1}_{L^2}}{n\log L} \\
	&\le \frac{C\norm{\zeta}_{L^2}}{n\log L},
	\end{align*}
	establishing \eqref{eqn:h-bound}. 
	
	From our choice of $\alpha$,	
	\begin{align*}
	\mb \Theta[h] - \zeta &= \mb \Theta^\circ[g] - \zeta + \alpha \mb \Theta^{\circ}[g_1] + \psi(\pi)\mb{\indicator{}}[g] + \alpha \psi(\pi)\mb{\indicator{}}[g_1]\\
	&= \mb \Theta^\circ[g] -\zeta + \alpha \paren{\mb \Theta^{\circ}[g_1] - \zeta_1}. \labelthis \label{eqn:theta_g_minus_h}
	\end{align*}
	
	Using \Cref{lem:Theta_Sperp_proj_cont} and the fact that $\zeta_1 \in \Phi(\norm{\zeta_1}, \veps_1)$ we have
	\begin{eqnarray*}
	\norm{\mb \Theta[h] - \zeta}_{L^2} &\le& \norm{\mb \Theta^{\circ}[g] - \zeta}_{L^2} + \abs{\alpha} \norm{\mb \Theta^{\circ}[g_1] - \zeta_1}_{L^2} \\
	&\le& C_{\veps_0} C_\zeta\frac{ r_{\veps_0}^3  }{\log L}\paren{P_3 + \frac{P_1 P_2}{\log L} + \frac{P_1^3}{\log^2 L}} \\ &&+ \abs{\alpha} C_{\veps_1} \norm{\zeta_1}_{L^2}\frac{  r_{\veps_1}^3}{\log L}\paren{P_3 + \frac{P_1 P_2}{\log L} + \frac{P_1^3}{\log^2 L}}. 
	\end{eqnarray*}
	
	Further using \Cref{lem:a_ub} to bound $\alpha \| \zeta_1 \|_{L^2} \le C' \| \zeta \|_{L^2} \le C' C_\zeta$ and $r_{\veps_0} \le r_{\veps_1}$, we have	
	\begin{equation}\label{eq:approx_certificate_approx_zeta}
	\norm{\mb \Theta[h] - \zeta}_{L^2} \le C' C_\zeta \frac{ r_{\veps_1}^3}{\log L}\paren{P_3 + \frac{P_1 P_2}{\log L} + \frac{P_1^3}{\log^2 L}}
	\end{equation}
	for some absolute constant $C' > 0$. 
	Notice that from \Cref{lem:x_deriv} $M_2 < 2\hat{\kappa} \le 2\Delta_{\veps}^{-2}$ and thus
	\begin{align*}
	P_3 + \frac{P_1 P_2}{\log L} + \frac{P_1^3}{\log^2 L} &\le P_3 + P_1P_2 + P_1^3 \\
	&= C_3(M_4^3 + M_3M_4 + M_4M_5 + \len(\manifold)\paren{\Delta_\veps^{-4} + M_2 \Delta_\veps^{-3} + M_3 \Delta_\veps^{-2}})  \\ &\qquad + C_2(M_4^2 + M_5 + \len(\manifold)\paren{\Delta_{\veps}^{-3} + M_2 \Delta_{\veps}^{-2}})C_1(M_4 + \len(\manifold)\Delta_{\veps}^{-2}) \\ &\qquad + C_1^3(M_4 + \len(\manifold)\Delta_{\veps}^{-2})^3  \\
  &\le C(M_4^3 + M_3M_4 + M_4M_5  \\ &\qquad + \len(\manifold)^3\Delta_{\veps}^{-6}  +
\len(\manifold)^2\paren{\Delta_{\veps}^{-5} + M_2 \Delta_{\veps}^{-4}}  \\ &\qquad +
\len(\manifold)(\Delta_\veps^{-4} + M_2 \Delta_\veps^{-3} + M_3 \Delta_\veps^{-2}  \\&\qquad + M_4^2 \Delta_\veps^{-2} + M_5 \Delta_{\veps}^{-2} + M_4 \Delta_\veps^{-3} + M_2M_4 \Delta_{\veps}^{-2}))  \\
	&\le C (M_4^3 + M_3M_4 + M_4M_5 + \len(\manifold)^3\Delta_{\veps}^{-6}  + \Delta_{\veps}^{-6}
   \\ &\qquad + \len(\manifold)\Delta_\veps^{-2}\paren{M_3+ M_5})  \\
	&\stackrel{\text{def}}{=} P\Bigl(M_3, M_4,  M_5, \mr{len}(\mc M), \Delta_{\veps_0}^{-1} \Bigr) \labelthis \label{eq:P_poly_def}
	\end{align*}
	where $P\Bigl(M_3, M_4,  M_5, \mr{len}(\mc M), \Delta_{\veps_0}^{-1} \Bigr)$ is a polynomial of $M_3, M_4, M_5, \len(\manifold), \Delta_\veps^{-1}$ of degree $\le 9$, with degree $\le 3$ in $M_3, M_4, M_5$ $ \len(\manifold)$, and degree $\le 6$ in $\Delta_{\veps}^{-1}$. Here $P_1, P_2$ and $P_3$ are polynomials defined in \Cref{lem:R1}, \Cref{lem:R2} and \Cref{lem:R3}. This together with \Cref{eq:approx_certificate_approx_zeta} give us \eqref{eqn:h_orth_e1}. 	
	
	To obtain the tighter bound \eqref{eqn:h_orth_e0} on $\mb P_{S_{\veps_0}^\perp} ( \mb \Theta[ h] - \zeta )$, we begin by applying \Cref{lem:Theta_Sperp_proj} with $\zeta \in \Phi( C_\zeta, \veps_0)$ and $\zeta_1 \in \Phi( \| \zeta_1 \|_{L^2}, \veps_1 )$, we have 
	\begin{align*}
	\mb \Theta^\circ[g] - \zeta &\in \Phi\Bigl( C_{\veps_0}' C_\zeta, \veps_0 \Bigr), \\
	\mb \Theta^\circ[g_1] - \zeta_1 &\in \Phi\Bigl( C_{\veps_1}'\norm{\zeta_1}_{L^2}, \veps_1 \Bigr)
  \quad \subseteq \quad  \Phi\Bigl( C_{\veps_1}'\norm{\zeta_1}_{L^2}, \veps_0 \Bigr) \labelthis \label{eqn:g1-regular}
	\end{align*}  
	for certain $C_{\veps_0}', C_{\veps_1}' > 0$. Using $\mb \Theta [h ] - \zeta = \mb \Theta^\circ [ g ] - \zeta + \alpha ( \mb \Theta^\circ [ g_1 ] - \zeta_1 )$, we have 
	\begin{equation*}
	C_{\veps_0}' C_\zeta + \frac{C'\norm{\zeta}_{L^2}}{\norm{\zeta_1}_{L^2}} \times C'_{\veps_1}\norm{\zeta_1}_{L^2}  \le C''C_\zeta.
	\end{equation*}
	for some constant $C'' > 0$ and thus 
	\begin{equation}
	\mb \Theta[h] - \zeta \in \Phi(C''C_\zeta, \veps_0). 
	\end{equation}
Applying \Cref{lem:Theta_Sperp_proj_cont} with $w = \mb \Theta [h ] - \zeta$ and simplifying with \Cref{eq:P_poly_def} we obtain 
\begin{equation*}
 	\| \mb P_{S_{\veps_0}^\perp} ( \mb \Theta [ h ] - \zeta ) \|_{L^2} \le C'' C_\zeta \frac{ P \Bigl(M_3, M_4, M_5, \mr{len}(\mc M), \Delta_{\veps_0}^{-1} \Bigr) \times r_{\veps_0}^3 }{ \log L }. 
\end{equation*}
	\eqref{eqn:h_orth_e0} follows from $r_{\veps_0} < 6 \pi L^{-4/9}$, which implies that $r_{\veps_0}^3 / \log L \le L^{-4/3}$ when $L$ is larger than an appropriate numerical constant.

	Finally, since $\mb P_{S_{\veps_0}} \mb \Theta^\circ[g]  = \mb P_{S_{\veps_0}} \zeta$, 
	\begin{eqnarray} \label{eqn:residual_on_S0}
		\mb P_{S_{\veps_0}}[\mb \Theta[h] - \zeta] &=& \alpha \mb P_{S_{\veps_0}}\left[\mb \Theta^{\circ}[g_1] - \zeta_1\right]. 
	\end{eqnarray}
	From \Cref{lem:invariant_deriv_commute}, $\mb P_{S_{\veps_0}}$ commutes with differentiation, and so for any $i$-times differentiable $w$, 
	\begin{equation*}
	\| ( \mb P_{S_{\veps_0}} w )^{(i)} \|_{L^2} \le \| w^{(i)} \|_{L^2}.
	\end{equation*} 
	Applying this with $i=0,1,2,3$, we see that for any $w \in \Phi( C_w, \veps)$, $\mb P_{S_{\veps_0}} w \in \Phi( C_w, \veps )$. Applying this observation to \Cref{eqn:g1-regular}, we have 
	\begin{equation*}
	\mb P_{S_{\veps_0}} \left( \mb \Theta^\circ [ g_1 ] - \zeta_1 \right) \in \Phi\Bigl( C_{\veps_1}' \| \zeta_1 \|_{L^2}, \veps_0 \Bigr).
	\end{equation*}
	 Combining with \eqref{eqn:residual_on_S0}, we have
	\begin{equation*}
	\mb P_{S_{\veps_0}}( \mb \Theta[h] - \zeta ) \in \Phi \Bigl( |\alpha| C_{\veps_1}' \| \zeta_1 \|_{L^2}, \veps_0 \Bigr) \subseteq \Phi \Bigl( C' C_{\veps_1}' \| \zeta \|_{L^2}, \veps_0 \Bigr),
	\end{equation*}
	which is \eqref{eqn:PS0_resid_smooth}. Here, we have used the bound on $\alpha$ from the previous lemma. 	This completes the proof. 
\end{proof}

	\begin{theorem}[Certificates for DC kernel]\label{lem:certificate_dc} 
		Suppose that the conditions of \Cref{thm:certificate_over_S} are satisfied for both $\veps = \veps_0 = \tfrac{1}{20}$ and $\veps = \veps_1 = \tfrac{51}{100}$ and $L \ge \paren{\veps^{1/2}\min\{\len(\manifold_+), \len(\manifold_-)\}/(12\pi^2)}^{-\frac{a_\veps+1}{a_\veps}}$ for both $\veps = \veps_0$ and $\veps = \veps_1$. There exist constants $C, C''$ such that for any number $K > 0$, when 
		\begin{equation} \label{eqn:certificate_dc_L}
		L \ge C K^4 P\Bigl( M_3, M_4, M_5, \mr{len}(\mc M), \Delta_{\veps_0}^{-1} \Bigr)^4,
		\end{equation} 
		then for any $\zeta \in S_{\veps_0} \cap \Phi( K\| \zeta \|_{L^2}, \veps_0 )$ there exists a certificate $h$ satisfying
		\begin{eqnarray} \label{eqn:dc-cert-resid}
		\norm{\mb \Theta[h] - \zeta}_{L^2} \le \norm{\zeta}_{L^{\infty}}L^{-1}, 
		\end{eqnarray}
		with
		\begin{equation*}
		\norm{h}_{L^2} \le \frac{C'' \norm{\zeta}_{L^2}}{n\log L}.
		\end{equation*}
		In \eqref{eqn:certificate_dc_L}, $P$ is the polynomial defined in \Cref{lem:approx_certificate}. 
	\end{theorem}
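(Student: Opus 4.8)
The plan is to construct the certificate $h$ as an infinite series $h = \sum_{i\ge 0} h_{(i)}$ of successive corrections, each produced by \Cref{lem:approx_certificate}, chosen so that the residual $\mb\Theta[h]-\zeta$ is driven to zero geometrically. Set $\zeta_{(0)} = \zeta$. Since $\zeta\in\Phi(K\norm{\zeta}_{L^2},\veps_0)\subseteq\Phi(K\norm{\zeta}_{L^2},\veps_1)$ by the nesting property in \Cref{def:smoothness}, I would apply \Cref{lem:approx_certificate} with $C_\zeta = K\norm{\zeta}_{L^2}$ to obtain $h_{(0)}$ with $\norm{h_{(0)}}_{L^2}\lesssim \norm{\zeta}_{L^2}/(n\log L)$ and residual $e_{(0)} := \mb\Theta[h_{(0)}]-\zeta$ satisfying $\norm{e_{(0)}}_{L^2}\lesssim K\norm{\zeta}_{L^2}\,P\,r_{\veps_1}^3/\log L$ by \eqref{eqn:h_orth_e1}, $\norm{\mb P_{S_{\veps_0}^\perp}e_{(0)}}_{L^2}\lesssim K\norm{\zeta}_{L^2}\,P\,L^{-4/3}$ by \eqref{eqn:h_orth_e0}, and---crucially---$\mb P_{S_{\veps_0}}e_{(0)}\in\Phi(C'\norm{\zeta}_{L^2},\veps_0)$ by \eqref{eqn:PS0_resid_smooth}. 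I then correct only the low-frequency part of the residual: set $\zeta_{(i+1)} := -\mb P_{S_{\veps_0}}e_{(i)}\in S_{\veps_0}$, which by \eqref{eqn:PS0_resid_smooth} lies in $\Phi(C'a_i,\veps_1)$ with $a_i := \norm{\zeta_{(i)}}_{L^2}$, and apply \Cref{lem:approx_certificate} again with $C_\zeta = C'a_i$ to produce $h_{(i+1)}$ and the new residual $e_{(i+1)} := \mb\Theta[h_{(i+1)}]-\zeta_{(i+1)}$. All hypotheses of \Cref{lem:approx_certificate} hold throughout, since they coincide with those assumed in \Cref{lem:certificate_dc}.

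The heart of the argument is the bookkeeping for the two coupled sequences. Write $\rho := C\,P\,r_{\veps_1}^3/\log L$ for the per-step shrinkage factor coming from \eqref{eqn:h_orth_e1}, and $c_i$ for the $\Phi$-constant used at step $i$, so $c_0 = Ka_0$ and $c_{i+1}=C'a_i$. Then \Cref{lem:approx_certificate} gives $\norm{h_{(i)}}_{L^2}\lesssim a_i/(n\log L)$, $a_{i+1}\le\norm{e_{(i)}}_{L^2}\le\rho\,c_i$, and $\norm{\mb P_{S_{\veps_0}^\perp}e_{(i)}}_{L^2}\lesssim c_i\,P\,L^{-4/3}$. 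Substituting $c_{i+1}=C'a_i$ yields the two-step recursion $a_{i+2}\le C'\rho\,a_i$. Because $r_{\veps_1}<6\pi L^{-1/10}$, we have $\rho\lesssim P L^{-3/10}/\log L$, so the condition \eqref{eqn:certificate_dc_L}, forcing $L$ to be a large power of $P$ (and $K$), guarantees $C'\rho\le\tfrac12$ and $K\rho\le\tfrac12$. Consequently $(a_i)$ and $(c_i)$ decay geometrically, with $\sum_i a_i\lesssim\norm{\zeta}_{L^2}$ and $\sum_i c_i\lesssim K\norm{\zeta}_{L^2}$. This already gives $\norm{h}_{L^2}\le\sum_i\norm{h_{(i)}}_{L^2}\lesssim\norm{\zeta}_{L^2}/(n\log L)$, convergence of $h=\sum_i h_{(i)}$ in $L^2$, and---since $\mb\Theta=\mb\Theta^\circ+\psi(\pi)\indicator{}\indicator{}\adj$ is bounded on $L^2$ by \Cref{lem:Theta_ub} together with the rank-one bound on $\psi(\pi)\indicator{}\indicator{}\adj$---the interchange $\mb\Theta[h]=\sum_i\mb\Theta[h_{(i)}]$.

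For the residual I would telescope. From $\mb\Theta[h_{(0)}]=\zeta+e_{(0)}$ and $\mb\Theta[h_{(i)}]=-\mb P_{S_{\veps_0}}e_{(i-1)}+e_{(i)}$ for $i\ge1$, one computes $\mb\Theta\big[\sum_{i=0}^N h_{(i)}\big]-\zeta = e_{(N)}+\sum_{i=0}^{N-1}\mb P_{S_{\veps_0}^\perp}e_{(i)}$: each correction annihilates the $S_{\veps_0}$-part of the previous residual and leaves only the tiny orthogonal part. Since $\norm{e_{(N)}}_{L^2}\le\rho c_N\to 0$, letting $N\to\infty$ gives $\mb\Theta[h]-\zeta=\sum_{i\ge 0}\mb P_{S_{\veps_0}^\perp}e_{(i)}$, so $\norm{\mb\Theta[h]-\zeta}_{L^2}\lesssim P L^{-4/3}\sum_i c_i\lesssim K\norm{\zeta}_{L^2}\,P\,L^{-4/3}$. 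Bounding $\norm{\zeta}_{L^2}\le\sqrt{\len(\manifold)}\,\norm{\zeta}_{L^\infty}$ and using $\len(\manifold)\lesssim P$ (which follows from the explicit form of $P$, since it contains a term $\propto\len(\manifold)^3$ times a quantity bounded below), this is at most $\norm{\zeta}_{L^\infty}L^{-1}$ as soon as $L$ exceeds a fixed power of $KP$, which \eqref{eqn:certificate_dc_L} supplies with room to spare (the exponent $4$ there dominates the exponent $3$ that the conversion requires). Combining with the norm bound on $h$ completes the proof.

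The step I expect to be the main obstacle is exactly this coupled bookkeeping. Because \Cref{lem:approx_certificate} controls the \emph{smoothness} of the projected residual $\mb P_{S_{\veps_0}}e_{(i)}$ only through the \emph{norm} $a_i$ of the previous target---one iteration behind, via \eqref{eqn:PS0_resid_smooth}---a naive single-step contraction does not close, and one is forced to track the pair $(a_i,c_i)$ and exploit the two-step relation $a_{i+2}\le C'\rho\,a_i$. Verifying simultaneously that the never-corrected $S_{\veps_0}^\perp$ tails remain summable (so the $L^2$ residual stays $\le L^{-1}$-small) and that every requirement on $L$ can be absorbed into the single hypothesis \eqref{eqn:certificate_dc_L} is the delicate part; the rest is routine triangle-inequality estimation using the conclusions of \Cref{lem:approx_certificate}.
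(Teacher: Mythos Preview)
Your proposal is correct and follows essentially the same route as the paper: the paper also iterates \Cref{lem:approx_certificate} on $\zeta_{(i+1)} = -\mb P_{S_{\veps_0}}(\mb\Theta[h_{(i)}]-\zeta_{(i)})$, establishes the same two-step contraction $\|\zeta_{(i)}\|_{L^2}\le\tau^i\|\zeta\|_{L^2}$ via exactly the bookkeeping you describe, and decomposes the residual into the surviving $S_{\veps_0}$-part plus the accumulated $S_{\veps_0}^\perp$-tails. The only cosmetic differences are that the paper truncates at a finite $k$ rather than summing to infinity, and it is somewhat less explicit than you are about the $\|\zeta\|_{L^2}\le\sqrt{\len(\manifold)}\,\|\zeta\|_{L^\infty}$ conversion needed at the end.
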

\begin{proof}
	Let $\zeta_{(0)} = \zeta$, and iteratively define 
	\begin{equation*}
	\zeta_{(i+1)} = - \mb P_{S_{\veps_0}} \left( \mb \Theta[ h_{(i)} ] - \zeta_{(i)} \right) \in S_{\veps_0}. 
	\end{equation*}
	where $h_{(i)}$ is the approximate certificate of $\zeta_{(i)} \in S_{\veps_0}$ constructed in \Cref{lem:approx_certificate}. From \eqref{eqn:PS0_resid_smooth}, we have
	\begin{equation*}
	\zeta_{(i+1)} \in \Phi\Bigl( C_1 \| \zeta_{(i)} \|_{L^2}, \veps_0 \Bigr),
	\end{equation*}
	where $C_1$ is a numerical constant. 
	Hence, for $i \ge 1$, from \Cref{lem:approx_certificate} we have 
	\begin{align*}
		\left\| h_{(i)} \right\|_{L^2} &\le \left( \frac{C_2}{n \log L} \right) \| \zeta_{(i)} \|_{L^2}, \\
	 \left\| \mb \Theta[ h_{(i)} ] - \zeta_{(i)} \right\|_{L^2} &\le  C_2 C_1 \| \zeta_{(i-1)}
   \|_{L^2} \frac{Pr_{\veps_1}^3}{\log L}, \labelthis \label{eqn:resid-i-1} \\
	\left\| \mb P_{S_{\veps_0}^\perp} \left[ \mb \Theta [ h_{(i)} ] - \zeta_{(i)} \right] \right\|_{L^2} &\le C_2 C_1 \| \zeta_{(i-1)} \|_{L^2} P L^{-4/3}.  
	\end{align*}
	For $i= 0$, as $\zeta_{(0)} \in \Phi\Bigl(  K \| \zeta \|_{L^2}, \veps_0 \Bigr),$ this simplifies to 
	\begin{align*}
		\left\| h_{(0)} \right\|_{L^2} &\le \left( \frac{C_2}{n \log L} \right) \| \zeta \|_{L^2}, \\
	 \left\| \mb \Theta[ h_{(0)} ] - \zeta_{(0)} \right\|_{L^2} &\le  C_2 K \| \zeta \|_{L^2}
   \frac{Pr_{\veps_1}^3}{\log L},  \labelthis \label{eqn:resid-i-0} \\
	\left\| \mb P_{S_{\veps_0}^\perp} \left[ \mb \Theta [ h_{(0)} ] - \zeta_{(0)} \right] \right\|_{L^2} &\le C_2 K \| \zeta \|_{L^2} P L^{-4/3}.  
	\end{align*}
	We use these relationships to control $\| \zeta_{(i)} \|_{L^2}$. As $r_{\veps_1} \le 6\pi L^{-1/10}$, there exists a constant $C$ such that when $L \ge C K^4 P^4$, $C_2 K \frac{Pr_{\veps_1}^3}{\log L} \le \tau = \tfrac{1}{2}$ and $C_2C_1 \frac{Pr_{\veps_1}^3}{\log L} \le \tau^2$. We argue by induction that 
	\begin{equation*}
	\| \zeta_{(i)} \|_{L^2} \le \tau^i \| \zeta \|_{L^2} \quad \forall \; i \ge 0. 
	\end{equation*}
	 This is true by construction for $i = 0$, while for $i = 1$ it follows from \eqref{eqn:resid-i-0}. Finally, for $i \ge 2$, using \eqref{eqn:resid-i-1} and inductive hypothesis, we have 
	\begin{align*}
	\| \zeta_{(i)} \|_{L^2} &\le \| \mb \Theta[ h_{(i-1)} ] - \zeta_{(i-1)} \|_{L^2} \\ &\le C_2 C_1 \| \zeta_{(i-2)} \|_{L^2} \frac{Pr_{\veps_1}^3}{\log L}, \\
	 &\le \tau^2  \| \zeta_{(i-2)} \|_{L^2} \\ &\le \tau^i \| \zeta \|_{L^2}, 
	\end{align*}
	as claimed. 
	
	We set 
	\begin{equation*}
	h = \sum_{i=0}^k h_{(i)},
	\end{equation*}
	where $k$ will be specified below. By construction, 
	\begin{align*}
	\left\| h \right\|_{L^2} &\le \sum_{i=0}^k \left\| h_{(i)} \right\|_{L^2} \\ 
	&\le \frac{C_1}{n \log L} \sum_{i=0}^k \left\| \zeta_{(i)} \right\|_{L^2} \\
	&\le \frac{2C_1}{n \log L} \left\| \zeta \right\|_{L^2},
	\end{align*}
	as claimed. 
	
	We next verify that $\mb \Theta[h]$ is an accurate approximation to $\zeta$:
	\begin{align*}
	\left\| \mb \Theta[ h ] - \zeta_{(0)} \right\|_{L^2} &\le \left\| \mb P_{S_{\veps_0}} [\mb \Theta [ h ] - \zeta_{(0)}] \right\|_{L^2} + \left\| \mb P_{{S_{\veps_0}^\perp}} \mb \Theta [ h ] \right\|_{L^2} \\
	&\le \left\| \mb P_{S_{\veps_0}} \left[ \mb \Theta [ h_{(k)} ] - \zeta_{(k)} \right] \right\|_{L^2} + \sum_{i = 0}^k \left\| \mb P_{{S_{\veps_0}^\perp}} \mb \Theta [ h_{(i)} ]\right\|_{L^2} \\ 
	&\le \tau^{k+1} \left\| \zeta_{(0)} \right\|_{L^2} + C_2 C_1 P L^{-4/3} \sum_{i = 1}^k  \| \zeta_{(i-1)} \|_{L^2} \\
	& \qquad + \quad C_2 K P L^{-4/3} \| \zeta \|_{L^2}  \\ 
	&\le \tau^{k+1} \left\| \zeta_{(0)} \right\|_{L^2} + C_2 P L^{-4/3} (2 C_1 + K) \| \zeta \|_{L^2}.  
	\end{align*}
	Choosing $k$ appropriately, and ensuring that $C_2 (2 C_1 + K) P L^{-4/3} < \tfrac{1}{2} L^{-1}$, establishes \eqref{eqn:dc-cert-resid}. The latter condition follows immediately, from $L \ge C K^4 P^4$ for appropriately large $C$. \end{proof} 

\begin{lemma}\label{lem:skeleton_dc_density}
		\flushleft{There exist constants $C, C', C'', C'''$ and a polynomial} $P = \poly\{M_3, M_4, M_5, \len(\manifold), \Delta^{-1}\}$ of degree $\le 36$, with degree $\le 12$ in $M_3, M_4, M_5$ $ \len(\manifold)$, and degree $\le 24$ in $\Delta^{-1}$ such that for any number $K > 0$, when
	\begin{equation*}
	L \ge \max\Biggl\{ \exp(C'\len(\manifold) \hat{\kappa}), \paren{ \frac{1}{\Delta \sqrt{1 + \kappa^2}}}^{C''\fco(\manifold)}, C'''\hat{\kappa}^{10}, K^4 P, \rho_{\max}^{12} \Biggr\}
	\end{equation*}
	then for any real $\zeta \in \Phi(K\norm{\zeta}_{L^2}, \frac{1}{20})$, there exists a real certificate $g: \manifold \to \R$ with 
	\begin{equation*}
	\| g \|_{L^2_\mu} \le \frac{C \norm{\zeta}_{L^2_\mu}}{ \rho_{\min} n \log L} 
	\end{equation*} 
	such that
	\begin{equation*}
	\norm{\mb \Theta_{\mu}[g] - \zeta}_{L^2_\mu} \le \norm{\zeta}_{L^\infty}L^{-1}
	\end{equation*}
\end{lemma}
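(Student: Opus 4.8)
\textbf{Proof plan for \Cref{lem:skeleton_dc_density}.}
The plan is to reduce the density-weighted equation $\mb \Theta_\mu[g] = \zeta$ to the
density-free equation $\mb \Theta[\tilde g] = \tilde\zeta$ that was solved in
\Cref{lem:certificate_dc}, and then to pass back. First I would record that
$\mb \Theta_\mu[g](\vx) = \mb \Theta[\rho g](\vx)$ pointwise, since the density simply reweights the
integrand; hence it suffices to find $\tilde g := \rho g$ with $\mb \Theta[\tilde g] \approx \zeta$
(in the \emph{unweighted} $L^2$ sense suffices after one converts back, using
$\rho_{\min} \le \rho \le \rho_{\max}$ to compare $L^2_\mu$ and $L^2$ norms up to these constants).
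The hypothesis supplies $\zeta \in \Phi(K\norm{\zeta}_{L^2}, \tfrac1{20})$, which is exactly the
regularity class feeding \Cref{lem:certificate_dc}; however, that theorem requires the target to lie
in the \emph{finite-dimensional} subspace $S_{\veps_0}$, whereas a general $\zeta$ need not.
So the first substantive step is to split $\zeta = \mb P_{S_{\veps_0}}\zeta + \mb P_{S_{\veps_0}^\perp}\zeta$,
solve the certificate problem for the projected part via \Cref{lem:certificate_dc}, and separately
bound the contribution of $\mb P_{S_{\veps_0}^\perp}\zeta$. For the latter, I would apply
\Cref{lem:fourier_deriv_lb} with $k=3$: since $\zeta \in \Phi(\cdot,\tfrac1{20})$ controls
$\norm{\zeta^{(3)}}_{L^2}$ by $K\norm{\zeta}_{L^2}$ times a polynomial-over-$\log L$ factor, the
tail $\norm{\mb P_{S_{\veps_0}^\perp}\zeta}_{L^2}$ is bounded by
$(2r_{\veps_0}/\sqrt{\veps_0})^3$ times that, which—using $r_{\veps_0} \lesssim L^{-4/9}$—is
$\lesssim L^{-4/3}$ times the same polynomial; absorbing the polynomial into the hypothesis
$L \ge K^4 P$ with $P$ large enough makes this at most (a constant times)
$\norm{\zeta}_{L^\infty} L^{-1}$, which can be folded into the final residual budget.

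Next I would assemble the pieces. Write $\zeta = \zeta_S + \zeta_{S^\perp}$ as above; note
$\zeta_S \in S_{\veps_0}$ and, because $\mb P_{S_{\veps_0}}$ commutes with differentiation
(\Cref{lem:invariant_deriv_commute}) and does not increase $L^2$ norms of derivatives, one has
$\zeta_S \in \Phi(K\norm{\zeta_S}_{L^2}, \tfrac1{20}) \subseteq \Phi(K\norm{\zeta}_{L^2},\tfrac1{20})$.
Apply \Cref{lem:certificate_dc} to $\zeta_S$ to obtain $h$ with $\norm{\mb\Theta[h]-\zeta_S}_{L^2}
\le \norm{\zeta_S}_{L^\infty}L^{-1}$ and $\norm{h}_{L^2} \le C''\norm{\zeta_S}_{L^2}/(n\log L)$;
here I must check that the conditions of \Cref{thm:certificate_over_S} for both $\veps_0=\tfrac1{20}$
and $\veps_1 = \tfrac{51}{100}$, as well as the subspace-nonemptiness condition
$L \ge (\veps^{1/2}\min\set{\len(\manifold_\pm)}/(12\pi^2))^{-(a_\veps+1)/a_\veps}$, follow from
the stated lower bounds on $L$. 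This is where the geometric hypotheses enter: the
$\exp(C'\len(\manifold)\hat\kappa)$, $(\Delta\sqrt{1+\kappa^2})^{-C''\fco(\manifold)}$, and
$C'''\hat\kappa^{10}$ terms in the hypothesis are chosen precisely to dominate the
$\exp(C'_{\veps,\delta}\len(\manifold)\hat\kappa)$, $(1+1/(\Delta_\veps\sqrt{1+\kappa^2}))^{C''_\veps\fco_{\veps,\delta}(\manifold)}$,
and $(\veps^{-1/2}12\pi\hat\kappa)^{(a_\veps+1)/a_\veps}$ terms from
\Cref{thm:certificate_over_S}, after using \Cref{eq:cnum_compare} to compare the two $\fco$-numbers
at scales $(\veps_0,\delta_0)$ and $(\veps_1,\delta_1)$ and using $\Delta_{\veps_0} \le \Delta_{\veps_1}$
(so the worse bound is at scale $\veps_0$, which is $\Delta$). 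Then set $\tilde g = h$ and
$g = h/\rho$; since $\rho$ is bounded below, $g$ is well-defined, real (as $\zeta_S$ is real, the
imaginary part of the constructed certificate is redundant—I would invoke the real-projection remark
from the footnote near \Cref{eqn:S_eps_def}, projecting $h$ onto real-valued functions, which
preserves both the norm and residual guarantees since $\mb\Theta$ and $\zeta$ are real), and
$\norm{g}_{L^2_\mu} \le \rho_{\min}^{-1/2}\norm{g}_{L^2} \cdot(\text{const})$; more carefully,
$\norm{g}_{L^2_\mu}^2 = \int \rho |g|^2 = \int \rho^{-1}|h|^2 \le \rho_{\min}^{-1}\norm{h}_{L^2}^2$,
giving $\norm{g}_{L^2_\mu} \le \rho_{\min}^{-1}\norm{h}_{L^2} \cdot \rho_{\min}^{1/2}$—I would just
track the exact power and land on the claimed $C\norm{\zeta}_{L^2_\mu}/(\rho_{\min}n\log L)$ by also
comparing $\norm{\zeta_S}_{L^2} \le \norm{\zeta}_{L^2} \le \rho_{\min}^{-1/2}\norm{\zeta}_{L^2_\mu}$.

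For the residual, $\mb\Theta_\mu[g] - \zeta = \mb\Theta[h] - \zeta = (\mb\Theta[h]-\zeta_S) - \zeta_{S^\perp}$,
and by the triangle inequality in $L^2$, then the comparison $\norm{\cdot}_{L^2_\mu} \le \rho_{\max}^{1/2}\norm{\cdot}_{L^2}$,
we get $\norm{\mb\Theta_\mu[g]-\zeta}_{L^2_\mu} \le \rho_{\max}^{1/2}(\norm{\zeta_S}_{L^\infty}L^{-1} + \norm{\zeta_{S^\perp}}_{L^2})$.
The first term is $\le \rho_{\max}^{1/2}\norm{\zeta}_{L^\infty}L^{-1}$; the $\rho_{\max}^{1/2}$ is
absorbed by the hypothesis $L \ge \rho_{\max}^{12}$ (in fact one needs only $L \gtrsim \rho_{\max}$,
so there is ample room, and one can replace $L^{-1}$ by $L^{-1}\rho_{\max}^{1/2} \le L^{-1+1/24} \le L^{-1/2}$
or simply rescale). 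The second term is the $L^{-4/3}$-polynomial bound above, again dominated once
$L \ge K^4 P$ with $P$ taken large enough in degree (the degree bookkeeping—degree $\le 36$ overall,
$\le 12$ in the $M_i,\len$, $\le 24$ in $\Delta^{-1}$—comes from cubing the degree-$\le 9$/$\le 3$/$\le 6$
polynomial of \Cref{lem:approx_certificate} when the various approximation bounds are composed and then
the fourth power is taken in the $L$-requirement, but I would present this as a direct consequence of
\Cref{lem:certificate_dc} and \Cref{lem:approx_certificate}, only re-verifying the exponents).
The main obstacle I anticipate is purely bookkeeping: carefully threading the many scale-dependent
constants $C_{\veps_0}, C_{\veps_1}, C'_{\veps,\delta}, C''_\veps$ from
\Cref{thm:certificate_over_S,lem:certificate_dc} through to show that the \emph{clean} geometric
lower bounds in the statement ($\exp(C'\len(\manifold)\hat\kappa)$, etc.) genuinely imply all the
hypotheses, together with making the $\rho_{\max}$, $\hat\kappa^{10}$, and $K^4P$ terms each do
exactly the job they are there for—this is where sign errors and off-by-constant-factor mistakes
would hide, but there is no conceptual difficulty beyond what \Cref{lem:certificate_dc} already
supplies.
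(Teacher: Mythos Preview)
Your overall plan matches the paper's: write $\mb\Theta_\mu[g] = \mb\Theta[\rho g]$, invoke \Cref{lem:certificate_dc}, project onto real-valued functions, and compare $L^2_\mu$ and $L^2$ norms via $\rho_{\min},\rho_{\max}$; the bookkeeping reducing the clean geometric lower bounds on $L$ to the hypotheses of \Cref{thm:certificate_over_S} at both scales $\veps_0,\veps_1$ is exactly what the paper does. Your explicit splitting $\zeta = \zeta_S + \zeta_{S^\perp}$ is an addition compared to the paper, which simply applies \Cref{lem:certificate_dc} to $\zeta$ directly (in the paper's only use of this lemma, $\zeta$ is piecewise constant and already lies in $S_{\veps_0}$, so the $S_{\veps_0}$ hypothesis of \Cref{lem:certificate_dc} is met; your projection step is the honest way to handle the general $\zeta$ in the statement).

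There is one genuine gap. You bound the first residual piece by $\rho_{\max}^{1/2}\|\zeta_S\|_{L^\infty}L^{-1}$ and then assert this is at most $\rho_{\max}^{1/2}\|\zeta\|_{L^\infty}L^{-1}$. But the Fourier partial-sum projection $\mb P_{S_{\veps_0}}$ is not bounded on $L^\infty$, so $\|\zeta_S\|_{L^\infty} \le \|\zeta\|_{L^\infty}$ is false in general. The paper sidesteps this by never invoking the stated $\|\zeta\|_{L^\infty}L^{-1}$ conclusion of \Cref{lem:certificate_dc} at this point; instead it extracts the sharper intermediate estimate $\|\mb\Theta[h]-\zeta\|_{L^2} \lesssim K\,P\,L^{-4/3}\|\zeta\|_{L^2}$ from inside that theorem's proof, converts to $\|\zeta\|_{L^\infty}$ via $\|\zeta\|_{L^2} \le \sqrt{\len(\manifold)}\,\|\zeta\|_{L^\infty}$, and absorbs the surplus $\rho_{\max}^{1/2}$ and $\sqrt{\len(\manifold)}$ factors into the spare $L^{-1/3}$ using $L \ge \rho_{\max}^{12}$ and the $\len(\manifold)^{12}$ built into the final polynomial (this is exactly why the polynomial degree becomes $36 = 4\times 9$ and why $\rho_{\max}^{12}$ appears). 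If you route the $\zeta_S$ residual through that $L^2$-based $L^{-4/3}$ bound---you already treat the $\zeta_{S^\perp}$ piece this way---your argument goes through without further change.
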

\begin{proof}
	We first show that there exists constants $C, C', C'', C'''$ that under condition of the lemma, conditions of \Cref{lem:certificate_dc} are satisfied. From definition, $\Delta = \Delta_{\veps_0} \le \Delta_{\veps_1}$, as we choose $\delta_0 = 1 - \veps_0, \delta_1 = \delta_0\sqrt{\veps_0}/\sqrt{\veps_1} < 1 - \veps_1$, we have $\delta_0\sqrt{\veps_0} = \delta_1\sqrt{\veps_1}$ and thus $\fco(\manifold) = \fco_{\veps_0, \delta_0}(\manifold) \ge \fco_{\veps_1, \delta_1}(\manifold)$ as in \Cref{eq:cnum_compare}. Thus conditions of \Cref{thm:certificate_over_S} for $\veps = \veps_1$ can be absorbed into conditions for $\veps = \veps_0$ with a change of constant factors. 
	
	Furthermore, as $\Delta \le \frac{\sqrt{\veps_0}}{\hat{\kappa}} < \frac{1}{2\sqrt{1 + \kappa^2}}$, we have the condition in \Cref{thm:certificate_over_S}$ \paren{1 + \frac{1}{\Delta\sqrt{1 + \kappa^2}}}^{C''\fco(\manifold)} \le \paren{\frac{1}{\Delta\sqrt{1+\kappa^2}}}^{2C''\fco(\manifold)}$ and thus reduce to the form in the statement. 
	
	We then notice that from \Cref{lem:x_deriv} $\min\{\len(\manifold_+), \len(\manifold_-)\} \ge \frac{1}{\hat{\kappa}}$, and we can choose $C'$ such that $\exp(C' \len(\manifold)\hat{\kappa}) \ge \exp(C') \ge C_{\veps_0}$. Similarly, by choosing $C'''$ appropriately, $L \ge C''' \hat{\kappa}^{10}$ implies both $L \ge ({\veps}^{-1/2}6\pi\hat{\kappa})^{\frac{a_{\veps} + 1}{a_{\veps}}}$ and $L \ge ({\veps}^{-1/2}12\pi^2\min\{\len(\manifold_+), \len(\manifold_+)\})^{\frac{a_{\veps} + 1}{a_{\veps}}}$ for both $\veps = \veps_0$ and $\veps = \veps_1$ as $a_{\veps_0} > \frac{4}{5}$ and $a_{\veps_1} > \frac{1}{9}$. 

	From \Cref{lem:certificate_dc}, we know there exists $g$ such that
	\begin{equation}\label{eq:skel_dc_density_g_cond1}
	\norm{\mb \Theta[g] - \zeta}_{L^2} \le \norm{\zeta}_{L^{\infty}}L^{-1}
	\end{equation}
	with 
	\begin{equation}\label{eq:skel_dc_density_g_cond2}
	\norm{g}_{L^2} \le \frac{C\norm{\zeta}_{L^2}}{n\log L}. 
	\end{equation}
	We can further require this $g$ to be a real function over the manifold. To see this, notice that for any $\mb x, \mb x' \in \manifold$, both the kernel $\Theta(\mb x, \mb x')$ and $\zeta(\mb x)$ are real, thus if we take the real component of $g$ as $\wh{g} = (g + \conj{g})/2$, then we have $\norm{\wh{g}}_{L^2} \le \norm{g}_{L^2}$ and further by the triangle inequality
	\begin{align*}
	    \abs{\mb \Theta[\wh{g}](\mb x) - \zeta(\mb x)} &= \abs*{\int_{\mb x' \in \manifold} \Theta(\mb x, \mb x') \wh{g}(\mb x') d\mb x' - \zeta(\mb x)} \\
	    &\le \paren{\abs*{\int_{\mb x' \in \manifold} \Theta(\mb x, \mb x') {\wh{g}}(\mb x') d\mb x' - \zeta(\mb x)}^2 + \abs*{\int_{\mb x' \in \manifold} \Theta(\mb x, \mb x') (1/2)(g - \conj{g})(\mb x') d\mb x'}^2}^{1/2} \\
	    &= \abs*{\int_{\mb x' \in \manifold} \Theta(\mb x, \mb x') g(\mb x') d\mb x' - \zeta(\mb x)}. 
	\end{align*}
	The last line comes from the fact that $\int_{\mb x' \in \manifold}\Theta(\mb x, \mb x') {\wh{g}}(\mb x')d \mb x' - \zeta(\mb x)$ and $\int_{\mb x' \in \manifold} \Theta(\mb x, \mb x') {(1/2)(g - \conj{g})}(\mb x') d\mb x'$ are the pure real and imaginary part of $\int_{\mb x' \in \manifold} \Theta(\mb x, \mb x') {g}(\mb x') d\mb x' - \zeta(\mb x)$. Thus $\wh{g}$ is real and also satisfies \Cref{eq:skel_dc_density_g_cond1} and \Cref{eq:skel_dc_density_g_cond2}. 
	
	To include the density, define $g_\mu(\mb x) = g(\mb x)/\rho(\mb x)$. We get
	\begin{align*}
	\norm{g_\mu}_{L^2_\mu}^2 &= \int_{\mb x \in \manifold} \abs{g_\mu(\mb x)}^2 \rho(\mb x)d\mb x \\
	&= \int_{\mb x \in \manifold} \abs{g(\mb x)}^2 \rho^{-1}(\mb x)d\mb x \\
	&\le \min_{\mb x' \in \manifold} \rho^{-1}(\mb x') \int_{\mb x \in \manifold} \abs{g(\mb x)}^2 d\mb x \\
	&= \rho_{\min}^{-1} \norm{g}_{L^2}^2
	\end{align*}
	Then for $\zeta$, we have
	\begin{align*}
	\norm{\zeta}_{L^2_\mu}^2  &= \int_{\mb x \in \manifold} \abs{\zeta(\mb x)}^2 \rho(\mb x)d\mb x \\
	&\ge \rho_{\min} \norm{\zeta}_{L^2}^2. 
	\end{align*}
	This gives
	\begin{align*}
	\norm{g_\mu}_{L^2_\mu} &\le \rho_{\min}^{-1/2}\norm{g}_{L^2} \\
	&\le \rho_{\min}^{-1/2}\frac{C\norm{\zeta}_{L^2}}{n \log L} \\
	&\le \frac{C\norm{\zeta}_{L^2_\mu}}{\rho_{\min} n \log L}. 
	\end{align*}
	On the other hand, we have
	\begin{align*}
	\norm{\mb \Theta_\mu[g] - \zeta}_{L^2_\mu}^2 &= \int_{\mb x \in \manifold}\paren{\int_{\mb x' \in \manifold}\Theta(\mb x, \mb x')g_\mu(\mb x) \rho(\mb x')d \mb x' - \zeta(\mb x)}^2 \rho(\mb x) d\mb x \\
	&\le \max_{\mb x'' \in \manifold} \rho(\mb x'') \int_{\mb x \in \manifold}\paren{\int_{\mb x' \in \manifold}\Theta(\mb x, \mb x')g(\mb x')d \mb x' - \zeta(\mb x)}^2 d \mb x\\
	&= \rho_{\max} \norm{\mb \Theta[g] - \zeta}_{L^2}^2.
	\end{align*}
	Notice that $P' = CP+\len(\manifold)^3$ is still a polynomial of degree $3$ in $\len(\manifold)$. Then when $L \ge \max\set{P'^4, \rho_{\max}^{12}}$, we get
	\begin{align*}
	\norm{\mb \Theta_\mu[g] - \zeta}_{L^2_\mu}^2 &\le \rho_{\max}^{1/2} CPL^{-\frac{4}{3}}\norm{\zeta}_{L^2} \\
	&= L^{-1}\norm{\zeta}_{L^\infty} \frac{CP}{L^{1/4}}\frac{\rho_{\max}^{1/2}}{L^{1/24}}\frac{ \sqrt{\len(\manifold)}}{L^{1/24}} \\
	&\le L^{-1}\norm{\zeta}_{L^\infty}. 
	\end{align*}
	As $P$ in \Cref{lem:approx_certificate} is a polynomial $\poly\{M_3, M_4, M_5, \len(\manifold), \Delta^{-1}\}$ of degree $\le 9$, with degree $\le 3$ in $M_2, M_4, M_5$ $ \len(\manifold)$, and degree $\le 6$ in $\Delta^{-1}$, we have $P'^4$ is of the right degree requirement as in the statement of the theorem. 
\end{proof}

\section{Bounds for the Skeleton Function \texorpdfstring{$\psi$}{the Skeleton}}

\label{sec:appendix_skel_bounds}

In this section, we are going to provide sharp bounds on the ``skeleton'' function $\psi$ and its
higher-order derivatives. We recall that the \textit{angle evolution function} is defined as
\begin{equation*}
    \varphi(t) = \arccos\paren{\left(1 - \frac{t}{\pi}\right)\cos t + \frac{1}{\pi}\sin t}, \quad
    t \in [0, \pi].
\end{equation*}
Define $\varphi^{[0]} = \Id$, $\varphi^{[\ell]}$ as $\varphi$'s $\ell$-fold composition with
itself (which will be referred to as the \textit{iterated angle evolution function}). Then the
skeleton is defined as 
\begin{equation*}
  \psi(t) = \frac{n}{2}\sum_{\ell = 0}^{L-1}\xi_{\ell}(t),
\end{equation*}
where 
\begin{equation*}
    \xi_{\ell}(t) = \prod_{\ell' = \ell}^{L-1}\paren{1 - \frac{1}{\pi}\varphi^{[\ell']}(t)},
    \quad \ell = 0, \cdots, L-1.
\end{equation*}
To analyze the function $\psi$, we will establish in this section several
``sharp-modulo-constants'' estimates that connect $\psi$ to a much simpler function, derived using
the local behavior of $\varphi$ at $0$ and its consequences for the iterated compositions
$\varphi^{[\ell]}$ that appear in the definition of $\psi$. In particular, let us define
$\wh{\varphi} : [0, \pi] \to [0, \pi]$ by $\wh{\varphi}(t) = t / (1 + t/(3\pi))$, so that
\begin{equation*}
  \wh{\varphi}^{[\ell]}(t) = \frac{t}{1+\ell t/(3\pi)},
\end{equation*}
and moreover define
\begin{equation*}
  \wh{\xi}_{\ell}(t) = \prod_{\ell' = \ell}^{L-1}\paren{1 - \frac{\wh{\varphi}^{[\ell']}(t)}{\pi}},
  \qquad
  \wh{\psi}(t) = \frac{n}{2} \sum_{\ell = 0}^{L-1}\wh{\xi}_{\ell}(t).
\end{equation*}
We will prove that $\wh{\varphi}^{[\ell]}$ provides a sharp approximation to $\varphi^{[\ell]}$
(\Cref{lem:phi_l_ub,lem:phi_lower_bound}), and then work out a corresponding sharp approximation
of $\wh{\psi}$ to $\psi$ (\Cref{lem:skel_r_lb,lem:skel_l1_ub}). We will then derive estimates for
the low-order derivatives of $\psi$ in \Cref{sec:skel_deriv_bounds}. Unfortunately, it is
impossible to obtain $L^1$ estimates for $\psi$ in terms of $\wh{\psi}$ that are sharp enough to
facilitate operator norm bounds for $\vTheta_\mu$, which would let us construct certificates for
an operator with kernel $\wh{\psi}$ rather than the NTK $\vTheta_\mu$; but the estimates we derive
in this section will be nonetheless sufficient to enable our localization and certificate
construction arguments in \Cref{sec:proof_certificate_theorem}.

We note that bounds similar to a subset of the bounds in this section have been developed in an
$L$-asymptotic, large-angle setting by \cite{Huang2020-pn}. The bounds we develop here are
non-asymptotic and hold for all angles, and are established using elementary arguments that we
believe are slightly more transparent. We reuse (and restate in \Cref{sec:aux}) some estimates
from \cite[Section C]{Buchanan2020-er} here, but the majority of our estimates will be
fundamentally improved (a representative example is \Cref{lem:phi_lower_bound}).

Throughout this section, we use $\dot{\varphi}, \ddot{\varphi}, \dddot{\varphi}$ to represent
first, second and third derivatives of $\varphi$ (see \linebreak\Cref{lem:phi_concave} for basic regularity
assertions for this function and its iterated compositions) and likewise for $\xi$ and $\psi$.
In particular, for example, in our notation the function $\dot{\varphi}^{[\ell]}$ refers to the
derivative of $\varphi^{[\ell]}$, \textit{not} the $\ell$-fold iterated composition of
$\dot{\varphi}$. Although this leads to an abuse of notation, the concision it enables in our
proofs will be of use.

\subsection{Sharp Lower Bound for the Iterated Angle Evolution Function}

\begin{lemma}
	\label{lemma:upper_bound}
  One has
  \begin{align*}
    \varphi(t) &\le \frac{t}{1 + t/(3\pi)}, \quad  t \in [0, \pi].
  \end{align*}
\end{lemma}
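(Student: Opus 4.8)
The plan is to reduce the claimed inequality $\varphi(t) \le t/(1+t/(3\pi))$ to a calculus statement about a single auxiliary function on $[0,\pi]$ and then verify that statement by elementary means. First I would observe that $\varphi(t) = \acos(h(t))$ where $h(t) = (1-t/\pi)\cos t + (1/\pi)\sin t$, and since $\acos$ is strictly decreasing on $[-1,1]$, the desired bound $\varphi(t) \le \wh\varphi(t) := t/(1+t/(3\pi))$ is equivalent (using that $\wh\varphi(t) \in [0,\pi]$ for $t \in [0,\pi]$, which is immediate since $\wh\varphi$ is increasing with $\wh\varphi(\pi) = 3\pi/4 < \pi$) to the pointwise inequality
\begin{equation*}
  h(t) \ge \cos\!\left(\frac{t}{1+t/(3\pi)}\right), \qquad t \in [0,\pi].
\end{equation*}
At $t=0$ both sides equal $1$, so it suffices to control derivatives or to compare Taylor-type lower/upper bounds on the two sides.

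The route I would take is to bound each side against a common polynomial proxy. For the right-hand side, I would use the standard inequality $\cos x \le 1 - x^2/2 + x^4/24$ valid for all real $x$, applied with $x = \wh\varphi(t) \le t$; combined with monotonicity considerations this gives an upper bound for $\cos(\wh\varphi(t))$ in terms of $\wh\varphi(t)$, which I can then further bound using $\wh\varphi(t) \ge t/(1+t/3) \ge \ldots$ — more precisely I would keep $\wh\varphi(t) = t - t^2/(3\pi) + O(t^3)$ and track enough terms. For the left-hand side, I would Taylor expand $h(t)$ directly: a short computation gives $h(t) = 1 - t^2/2 + t^3/(3\pi) + \ldots$, so the leading discrepancy between $h(t)$ and $1 - t^2/2$ is the positive cubic term $t^3/(3\pi)$, which is exactly what matches the $-t^2/(3\pi)$ correction inside $\wh\varphi$. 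The cleanest organization is probably to define $F(t) = h(t) - \cos(\wh\varphi(t))$, show $F(0)=0$, and show $F'(t) \ge 0$ on $[0,\pi]$ by reducing $F'(t)\ge 0$ to a polynomial inequality after bounding $\sin$ and $\cos$ of the relevant arguments by their truncated series (with explicit sign-controlled remainders, as is done for instance in the proof of \Cref{lem:angle_inverse} in the excerpt). Since $\wh\varphi'(t) = 1/(1+t/(3\pi))^2$ is a clean rational function, the derivative computation stays manageable.

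I expect the main obstacle to be the endpoint/uniformity issue: near $t = 0$ the inequality is an equality to second order, so any crude bound on $\cos$ or $\sin$ loses the race, and one must retain terms through order $t^3$ (and carefully sign the order-$t^4$ remainders) in both expansions simultaneously. Concretely, the delicate step is showing that the cubic coefficient on the left, $1/(3\pi)$, genuinely dominates once one accounts for the full nonlinearity of $\wh\varphi$ and of $\cos$; this is where I would spend the care, likely by writing everything over the common denominator $(1+t/(3\pi))$ and checking that the resulting numerator polynomial is nonnegative on $[0,\pi]$ — a finite check that can be made rigorous by bounding each coefficient, or by noting the numerator factors as $t^3$ times something manifestly positive plus controlled higher-order terms. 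The large-$t$ regime ($t$ near $\pi$) is comparatively easy since there $h(t)$ stays bounded away from $\cos(3\pi/4)$ with room to spare, so a coarse estimate suffices there and one only needs the refined analysis on, say, $[0, 1]$.
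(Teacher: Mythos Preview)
Your setup matches the paper exactly: reduce via monotonicity of $\acos$ to $F(t):=h(t)-\cos\wh\varphi(t)\ge 0$, note $F(0)=0$, and show $F'(t)\ge 0$. Where you diverge is in how you propose to verify $F'\ge 0$. You plan to bound $\sin$ and $\cos$ by truncated power series and chase a polynomial inequality, worrying (correctly) that the inequality is tight to second order at $0$ and that you must track cubic and quartic remainders carefully.

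The paper avoids all of that with two clean convexity facts. Computing $F'$ gives
\[
F'(t) = -(1-t/\pi)\sin t \;+\; \frac{\sin\wh\varphi(t)}{(1+t/(3\pi))^2}.
\]
Since $\wh\varphi(t)=t\cdot\frac{1}{1+t/(3\pi)}$ with the second factor in $[0,1]$, concavity of $\sin$ on $[0,\pi]$ gives $\sin\wh\varphi(t)\ge \frac{1}{1+t/(3\pi)}\sin t$, so
\[
F'(t) \;\ge\; \sin t\left(\frac{1}{(1+t/(3\pi))^3} - \Bigl(1-\tfrac{t}{\pi}\Bigr)\right).
\]
The bracket is nonnegative by the tangent-line inequality for the convex map $a\mapsto (1+a)^{-3}$ at $a=0$, namely $1-3a\le (1+a)^{-3}$, applied with $a=t/(3\pi)$. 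Done.

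So your plan is not wrong, just more laborious than necessary: the ``delicate step'' you anticipate dissolves once you factor $\sin t$ out of $F'$ via the concavity bound, leaving a purely algebraic comparison with no series truncation needed.
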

\begin{proof}
	As $\cos$ is monotonically decreasing in $[0, \pi)$, it is the same as proving
	\begin{align*}
	(1 - \frac{t}{\pi})\cos(t) + \frac{\sin t}{\pi} -  \cos \frac{t}{1 + \frac{t}{3\pi}} \ge 0
	\end{align*}
	We have the gradient as
	\begin{align*}
	&-(1 - \frac{t}{\pi})\sin t + \sin(\frac{t}{1 + t/(3\pi)}) \frac{1}{(1 + t/(3\pi))^2} \\
	&\quad \ge -(1 - \frac{t}{\pi})\sin t + \sin t \frac{1}{(1 + t/(3\pi))^3} \\
	&\quad \ge (-(1 - \frac{t}{\pi}) + \frac{1}{(1 + \frac{t}{3\pi})^3})\sin t \\
	&\quad \ge 0
	\end{align*}
	For the first inequality, we use the estimate
	\begin{equation}
	   \sin(ax) \geq x \sin a; \quad 0 \leq x \leq 1,\enspace 0 \leq a \leq \pi, 
	   \label{eq:sinc_lb}
	\end{equation}
	which is easily established using concavity of $\sin$ on $[0, \pi]$ and the secant line characterization, and for the final inequality, we use the estimate
	$1 - 3a \le \frac{1}{(1+a)^3}$ for any $a > -1$, which follows from convexity of $a \mapsto (1+a)^{-3}$ on this domain and the tangent line characterization (at $a=0$).
	Since at $t = 0$, we have the inequality holds, we know it holds for the whole interval $[0, \pi]$ by the mean value theorem. 
\end{proof}

\begin{lemma}[Corollary of {\cite[Lemma C.12]{Buchanan2020-er}}]\label{lem:phi_l_ub}%
    If $\ell \in \N_0$, one has the "fluid" estimate for the iterated angle evolution function
    \begin{equation*}
      \varphi^{[\ell]}(t) \le \frac{t}{1 + \ell t/(3\pi)}.
    \end{equation*}
\end{lemma}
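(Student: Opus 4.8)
The plan is to prove the bound $\varphi^{[\ell]}(t) \le t/(1 + \ell t/(3\pi))$ by induction on $\ell$, using \Cref{lemma:upper_bound} as the single-step estimate and the fact that the comparison function $\wh{\varphi}(t) = t/(1 + t/(3\pi))$ is \emph{monotonically increasing} on $[0,\pi]$. The key algebraic identity is the semigroup property of the fluid flow: $\wh{\varphi}^{[\ell]}$ composes cleanly, $\wh{\varphi} \circ \wh{\varphi}^{[\ell]}(t) = \wh{\varphi}^{[\ell+1]}(t) = t/(1 + (\ell+1)t/(3\pi))$, which can be checked by direct substitution (it is the standard fact that $t \mapsto t/(1 + ct)$ forms a one-parameter semigroup under composition with $c$ additive).

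First I would record the base case $\ell = 0$, which is the trivial equality $\varphi^{[0]}(t) = t = t/(1 + 0)$. For the inductive step, assume $\varphi^{[\ell]}(t) \le \wh{\varphi}^{[\ell]}(t)$ for all $t \in [0,\pi]$. Since $\varphi$ maps $[0,\pi]$ into $[0,\pi]$ (this is part of \Cref{lem:phi_concave}, or follows from the definition of $\varphi$ together with \Cref{lemma:upper_bound} which gives $\varphi(t) \le t \le \pi$), I can write $\varphi^{[\ell+1]}(t) = \varphi\bigl(\varphi^{[\ell]}(t)\bigr)$, and because $\varphi$ is monotonically increasing on $[0,\pi]$ (again from \Cref{lem:phi_concave}), the inductive hypothesis gives $\varphi\bigl(\varphi^{[\ell]}(t)\bigr) \le \varphi\bigl(\wh{\varphi}^{[\ell]}(t)\bigr)$. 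Now apply \Cref{lemma:upper_bound} at the point $\wh{\varphi}^{[\ell]}(t) \in [0,\pi]$ to bound $\varphi\bigl(\wh{\varphi}^{[\ell]}(t)\bigr) \le \wh{\varphi}\bigl(\wh{\varphi}^{[\ell]}(t)\bigr) = \wh{\varphi}^{[\ell+1]}(t)$, where the last equality is the semigroup identity above. Chaining these inequalities closes the induction.

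The one point requiring a little care is \emph{monotonicity of $\varphi$ on $[0,\pi]$}: without it, the step $\varphi^{[\ell]}(t) \le \wh{\varphi}^{[\ell]}(t) \Rightarrow \varphi(\varphi^{[\ell]}(t)) \le \varphi(\wh{\varphi}^{[\ell]}(t))$ would fail. I expect this to be the main (minor) obstacle, but it is already available: \Cref{lem:phi_concave} asserts $\varphi$ is (at least) $C^3$ and, being referenced for concavity and regularity throughout, records that $\varphi$ is strictly increasing on $[0,\pi]$; alternatively one can verify $\dot\varphi > 0$ directly from the formula $\varphi(t) = \acos((1 - t/\pi)\cos t + (1/\pi)\sin t)$ by differentiating the argument of $\acos$ and checking its derivative $-(1 - t/\pi)\sin t$ is $\le 0$ on $[0,\pi]$, so the argument is decreasing and hence $\varphi$ is increasing. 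With monotonicity of $\varphi$ and of $\wh\varphi$ in hand, the induction is routine and the lemma follows. $\hfill\Box$
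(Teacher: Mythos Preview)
Your proposal is correct and is precisely the induction argument the paper intends: it defers to the argument of \cite[Lemma C.12]{Buchanan2020-er} with \Cref{lemma:upper_bound} substituted as the single-step bound, and that argument is exactly the induction you spell out (single-step estimate plus monotonicity plus the semigroup identity for $\wh\varphi$). Nothing more is needed.
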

\begin{proof}
  Follow the argument of \cite[Lemma C.12]{Buchanan2020-er}, but use \Cref{lemma:upper_bound} as
  the basis for the argument instead of \Cref{lem:aevo_properties}.
\end{proof}

\begin{lemma}\label{lem:phi_lower_bound}%
There exists an absolute constant $C_0>0$ such that for all $\ell \in \bbN$
	\begin{equation}
	\wh{\varphi}^{[\ell]} - \varphi^{[\ell]} \le C_0 \frac{\log (1+\ell)}{\ell^2}.
	\label{eq:hatphi_phi_bound_1}
	\end{equation}
	As a consequence, there exist absolute constants $C, C', C'' > 0$ such that for any $ 0 < \veps \leq 1/2$, if $L \geq C \veps^{-2}$ then for every $t \in [0, C' \veps^2]$ one has
\begin{equation*}
    \dot{\varphi}^{[L]}(t) \leq \frac{1 + \veps}{(1 + Lt / (3\pi))^2},
\end{equation*}
and for every $t \in [0, \pi]$ one has
\begin{equation*}
    \dot{\varphi}^{[L]}(t) \leq \frac{C''}{(1 + Lt / (3\pi))^2}.
\end{equation*}
	Finally, we have for $\ell > 0$
	\begin{align}
	\xi_{\ell}(t)&\le (1 + e^{6C_0} \frac{\log (1+\ell)}{\ell})\wh{\xi}_{ \ell }(t),
    \label{eq:xi_niceub}
	\end{align}
	and if $L \geq 3$ %
	\begin{equation*}
	\psi(t) \le \wh{\psi}(t) + 4 n e^{6C_0} \log^2 L.
	\end{equation*}
	
\end{lemma}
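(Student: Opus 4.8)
## Proof Plan for Lemma~\ref{lem:phi_lower_bound}

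\textbf{Overall strategy.} The lemma bundles several related estimates; the organizing principle is that $\wh{\varphi}$ captures the exact second-order local behavior of $\varphi$ at $0$, so the iterates $\wh{\varphi}^{[\ell]}$ track $\varphi^{[\ell]}$ up to a decaying error. The plan is to establish \eqref{eq:hatphi_phi_bound_1} first, then deduce the derivative bounds on $\dot{\varphi}^{[L]}$ (which require differentiating the iteration relation and controlling the product of derivative factors via the bound just obtained), then propagate the pointwise bound into the products $\xi_\ell$ and finally sum over $\ell$ to compare $\psi$ and $\wh{\psi}$.

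\textbf{Step 1: the iterate gap \eqref{eq:hatphi_phi_bound_1}.} I would start from a sharp local expansion $\varphi(t) = t - t^2/(3\pi) + O(t^3)$ near $0$ (obtained by Taylor expanding $\cos t$, $\sin t$ inside the $\arccos$), and compare with $\wh{\varphi}(t) = t - t^2/(3\pi) + t^3/(9\pi^2) - \cdots$. One has $\wh{\varphi}(t) - \varphi(t) = O(t^3)$ near $0$, and globally (using \Cref{lemma:upper_bound}, i.e.\ $\varphi \leq \wh{\varphi}$, so the gap is nonnegative) one can bound $\wh{\varphi}(t) - \varphi(t) \leq C t^3$ on all of $[0,\pi]$ by continuity and compactness. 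The iterate comparison then follows a telescoping/Gronwall-type argument: writing $D_\ell = \wh{\varphi}^{[\ell]} - \varphi^{[\ell]} \geq 0$, one has $D_{\ell+1} = \wh{\varphi}(\wh{\varphi}^{[\ell]}) - \varphi(\varphi^{[\ell]}) \leq \wh{\varphi}'(\xi)\,D_\ell + C(\varphi^{[\ell]})^3$ for some intermediate point, using $\wh{\varphi}' \leq 1$. Since $\varphi^{[\ell]}(t) \leq \wh{\varphi}^{[\ell]}(t) \leq 3\pi/\ell$ by \Cref{lem:phi_l_ub} (at $t = \pi$, which dominates), the inhomogeneous term is $O(\ell^{-3})$; summing the recursion gives $D_\ell \lesssim \sum_{j \leq \ell} j^{-3} \cdot (\text{contraction factors})$. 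The extra $\log(1+\ell)$ arises because the contraction of $\wh{\varphi}'$ near the relevant scale is only like $1 - O(1/\ell)$, so the telescoped product of $(1 - c/j)$ factors is $\sim (\ell/j)^{-c}$ and the sum of $j^{-3}(\ell/j)^{\text{something}}$ needs care — this is where a logarithmic factor naturally enters. I expect this to be the \textbf{main obstacle}: getting the bookkeeping of the Gronwall iteration tight enough to land exactly at $\log(1+\ell)/\ell^2$ rather than a cruder $1/\ell$ bound requires carefully exploiting that $\wh{\varphi}^{[\ell]}$ is itself small.

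\textbf{Step 2: derivative bounds on $\dot\varphi^{[L]}$.} Differentiating the chain $\varphi^{[L]} = \varphi \circ \varphi^{[L-1]}$ gives $\dot\varphi^{[L]}(t) = \prod_{\ell=0}^{L-1} \dot\varphi(\varphi^{[\ell]}(t))$. For $\wh\varphi$ the analogous product computes exactly to $(1 + Lt/(3\pi))^{-2}$. The strategy is to bound $\log \dot\varphi^{[L]}(t) - \log \wh{\dot\varphi}{}^{[L]}(t) = \sum_{\ell} [\log \dot\varphi(\varphi^{[\ell]}(t)) - \log \wh\varphi'(\wh\varphi^{[\ell]}(t))]$. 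Using $\dot\varphi(s) = 1 - 2s/(3\pi) + O(s^2)$ near $0$ and the iterate gap from Step~1, each summand is controlled, and the sum converges because $\varphi^{[\ell]}(t) \lesssim 1/\ell$ makes the tail summable. Restricting $t \in [0, C'\veps^2]$ with $L \geq C\veps^{-2}$ ensures the accumulated multiplicative error is at most $1 + \veps$; for general $t \in [0,\pi]$ one only gets an absolute constant $C''$, because near $t = \pi$ the early derivative factors are not close to $1$. I would also separately verify the needed regularity of $\varphi^{[L]}$ (it is $\mathcal{C}^1$ on $[0,\pi)$ with the relevant one-sided behavior) citing \Cref{lem:phi_concave}.

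\textbf{Step 3: from pointwise bounds to $\xi_\ell$ and $\psi$.} For \eqref{eq:xi_niceub}, write $\xi_\ell(t)/\wh\xi_\ell(t) = \prod_{\ell'=\ell}^{L-1} \frac{1 - \varphi^{[\ell']}(t)/\pi}{1 - \wh\varphi^{[\ell']}(t)/\pi}$. Since $\varphi^{[\ell']} \leq \wh\varphi^{[\ell']}$ each factor is $\geq 1$, and taking logs, $\log(\xi_\ell/\wh\xi_\ell) \leq \sum_{\ell'=\ell}^{L-1} \frac{\wh\varphi^{[\ell']} - \varphi^{[\ell']}}{\pi - \pi\wh\varphi^{[\ell']}/\pi}$ — but more simply, using $\log(1+x) \leq x$ and $1 - \wh\varphi^{[\ell']}/\pi \geq$ const (for $\ell' \geq \ell \geq 1$ this is bounded below since $\wh\varphi^{[\ell']} \leq 3\pi/\ell'$), one gets $\log(\xi_\ell/\wh\xi_\ell) \lesssim \sum_{\ell' \geq \ell} \frac{\log(1+\ell')}{\ell'^2} \lesssim \frac{\log(1+\ell)}{\ell}$ by Step~1, and exponentiating with $e^x \leq 1 + xe^{x_{\max}}$ on a bounded range gives \eqref{eq:xi_niceub} with the constant $e^{6C_0}$. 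Finally, for the $\psi$ versus $\wh\psi$ comparison, sum $\psi(t) - \wh\psi(t) = \frac{n}{2}\sum_{\ell=0}^{L-1}(\xi_\ell - \wh\xi_\ell)$: the $\ell = 0$ term is handled by the crude bound $\xi_0, \wh\xi_0 \in [0,1]$ so it contributes $O(n)$; for $\ell \geq 1$, use $\xi_\ell - \wh\xi_\ell \leq e^{6C_0}\frac{\log(1+\ell)}{\ell}\wh\xi_\ell \leq e^{6C_0}\frac{\log(1+\ell)}{\ell}$, and $\sum_{\ell=1}^{L-1} \frac{\log(1+\ell)}{\ell} \leq \log^2 L$ (for $L \geq 3$, by comparison with $\int_1^L \frac{\log(1+x)}{x}dx \leq \tfrac12\log^2 L + \log L \lesssim \log^2 L$). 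Collecting the numerical constants yields $\psi(t) \leq \wh\psi(t) + 4ne^{6C_0}\log^2 L$, completing the proof.
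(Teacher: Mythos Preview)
Your Steps 2 and 3 are reasonable and, once Step~1 is in hand, would go through (Step~3 essentially matches the paper; Step~2 is a different but viable route --- the paper instead exploits the telescopic identity $\dot\varphi^{[L]}(t) = \tfrac{\sin t}{\sin\varphi^{[L]}(t)}\prod_{\ell=0}^{L-1}(1-\varphi^{[\ell]}(t)/\pi)$ specific to this $\varphi$, then bounds the product exactly as in the $\xi_\ell$ argument and handles the $\sin$ ratio separately).

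The real gap is in Step~1. Your Gronwall recursion reads
\[
D_{\ell+1}\le \wh\varphi'(\xi)\,D_\ell + C(\varphi^{[\ell]})^3,\qquad \xi\in[\varphi^{[\ell]},\wh\varphi^{[\ell]}],
\]
and since $\wh\varphi'$ is decreasing you only get $\wh\varphi'(\xi)\le \wh\varphi'(\varphi^{[\ell]})$ --- the contraction factor involves the \emph{unknown} $\varphi^{[\ell]}$, not the known $\wh\varphi^{[\ell]}$. If you use $\wh\varphi'\le 1$ as you wrote, the recursion gives $D_L\le C\sum_{\ell}(\varphi^{[\ell]})^3=O(1)$, which is useless. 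If instead you feed in the a~priori lower bound $\varphi^{[\ell]}(t)\ge t/(1+\ell t/\pi)$ from \Cref{lem:phi_l_lb}, the telescoped product only contracts like $(\ell/L)^{2/3}$ (because that lower bound is off by a factor of $3$ in the denominator), yielding at best $D_L\lesssim L^{-2/3}$ --- far short of $\log(1+L)/L^2$. You correctly flag this as ``the main obstacle'' but do not say how to break the circularity.

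The paper's resolution has two ingredients you are missing. First, rather than the mean value theorem, it uses the \emph{exact} algebraic identity
\[
\wh\varphi(\wh\varphi^{[\ell-1]})-\wh\varphi(\varphi^{[\ell-1]})
=\frac{D_{\ell-1}}{\bigl(1+\tfrac{1}{3\pi}\varphi^{[\ell-1]}\bigr)\bigl(1+\tfrac{1}{3\pi}\wh\varphi^{[\ell-1]}\bigr)},
\]
which splits the contraction into one factor depending on the known $\wh\varphi^{[\ell-1]}$ and one on the unknown $\varphi^{[\ell-1]}$. Second, it runs a \emph{two-pass bootstrap}: in the first pass one simply drops the $\varphi$-factor (bounding it by $1$) and keeps only the $\wh\varphi$-factor, which telescopes exactly and already yields $D_L\lesssim L^{-3/2}$ uniformly on $[0,\pi]$. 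This crude bound then furnishes the lower bound $1+\varphi^{[\ell]}/(3\pi)\ge e^{-O(\ell^{-3/2})}(1+\wh\varphi^{[\ell]}/(3\pi))$, which is fed back into the same identity for a second pass, now with both factors controlled, to reach the sharp $\log(1+L)/L^2$. Without some version of this bootstrap, your recursion cannot close.
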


\begin{proof}
  Fix $L \in \bbN$ arbitrary.
  We prove \Cref{eq:hatphi_phi_bound_1} first, then use it to derive the remaining estimates.
  The main tool is an inductive decomposition: start by writing
  \begin{align*}
    \wh{\varphi}^{[L]}(t)-\varphi^{[L]}(t)
    &= \wh{\varphi}\circ\wh{\varphi}^{[L-1]}(t)-\varphi\circ\varphi^{[L-1]}(t) \\
    &=\wh{\varphi}\circ\wh{\varphi}^{[L-1]}(t)-\wh{\varphi}\circ\varphi^{[L-1]}(t) +
    \wh{\varphi}\circ\varphi^{[L-1]}(t)-\varphi\circ\varphi^{[L-1]}(t),
  \end{align*}
  and then use the definition of $\wh{\varphi}$ to simplify the first term  on the RHS of the
  final equation (via direct algebraic manipulation) to
  \begin{equation*}
    \wh{\varphi}\circ\wh{\varphi}^{[L-1]}(t)-\wh{\varphi}\circ\varphi^{[L-1]}(t)
    =
    \frac{\wh{\varphi}^{[L-1]}(t)-\varphi^{[L-1]}(t)}{\left({1+\frac{1}{3\pi}\varphi^{[L-1]}(t)}\right)\left(1+\frac{1}{3\pi}\wh{\varphi}^{[L-1]}(t)\right)}.
  \end{equation*}
  This gives an expression for the difference 
  $\wh{\varphi}^{[L]}(t)-\varphi^{[L]}(t)$ as an affine function of the previous difference
  $\wh{\varphi}^{[L-1]}(t)-\varphi^{[L-1]}(t)$:
  \begin{equation*}
    \wh{\varphi}^{[L]}(t)-\varphi^{[L]}(t)\\
    =
    \frac{\wh{\varphi}^{[L-1]}(t)-\varphi^{[L-1]}(t)}{\left({1+\frac{1}{3\pi}\varphi^{[L-1]}(t)}\right)\left(1+\frac{1}{3\pi}\wh{\varphi}^{[L-1]}(t)\right)}+\wh{\varphi}\circ\varphi^{[L-1]}(t)-\varphi\circ\varphi^{[L-1]}(t),
  \end{equation*}
  and unraveling inductively, we obtain
  \begin{equation*}
    \wh{\varphi}^{[L]}(t) - \varphi^{[L]}(t)
    =
    \sum_{\ell=0}^{L-1}
    \left(
      \prod_{\ell'=\ell+1}^{L-1}
      \frac{1}{
        \left(1 + \frac{1}{3\pi}\wh{\varphi}^{[\ell']}(t)\right)
        \left (1 + \frac{1}{3\pi}\varphi^{[\ell']}(t) \right)
      }
    \right)
    \left(
      \wh{\varphi} - \varphi
    \right) \circ \ivphi{\ell}(t),
  \end{equation*}
  where for concision we write $(\wh{\varphi} - \varphi)(t) = \wh{\varphi}(t) - \varphi(t)$.
  Note that all the product coefficients in this expression are nonnegative numbers. Denoting by
  $\tilde{C}_1$ the constant attached to $t^3$ in the result \Cref{lemma:phi_lb} and defining
  $C_{1}=\max\left\{ \tilde{C}_{1},1\right\}$, \Cref{lemma:phi_lb} gives
  \begin{equation}
    \wh{\varphi}^{[L]}(t) - \varphi^{[L]}(t)
    \leq
    C_1
    \sum_{\ell=0}^{L-1}
    \left(
      \prod_{\ell'=\ell+1}^{L-1}
      \frac{1}{
        \left(1 + \frac{1}{3\pi}\wh{\varphi}^{[\ell']}(t)\right)
        \left (1 + \frac{1}{3\pi}\varphi^{[\ell']}(t) \right)
      }
    \right)
    \left(
      \varphi^{[\ell]}(t)
    \right)^3.
    \label{eq:deltaL_inductive}
  \end{equation}
  To prove \Cref{eq:hatphi_phi_bound_1}, we will use a two-stage approach:
  \begin{enumerate}
    \item (\textbf{First pass}) First, we will control only the first factor in the product term
      in \Cref{eq:deltaL_inductive} using \Cref{lemma:phi_lb}, given that ${\varphi} \geq 0$
      allows us to upper bound by the product term without the second factor. The resulting bound
      on the LHS of \Cref{eq:deltaL_inductive} will be weaker (in terms of its dependence on $L$)
      than \Cref{eq:hatphi_phi_bound_1}.
    \item (\textbf{Second pass}) After completing this control, we will have obtained a lower
      bound on $\varphi^{[L]}$; we can then return to \Cref{eq:deltaL_inductive} and use this
      lower bound to get control of both factors in the product term, which will allow us to
      sharpen our previous analysis and establish the claimed bound \Cref{eq:hatphi_phi_bound_1}.
  \end{enumerate}

  \subparagraph{First pass.} %
  We have
  \begin{equation*}
    \prod_{\ell'=\ell+1}^{L-1} \frac{1}{\left( 1 + \frac{1}{3\pi} \wh{\varphi}^{[\ell']}(t) \right)}
    =
    \frac{1 + \frac{(\ell+1) t}{3\pi} }{1 + \frac{Lt}{3\pi} }.
  \end{equation*}
  Tossing the product term involving $\varphi^{[\ell']}$ and applying
  \Cref{lem:phi_l_ub} in \Cref{eq:deltaL_inductive}, we thus have a bound
  \begin{equation*}
    \wh{\varphi}^{[L]}(t) - \varphi^{[L]}(t) 
    \leq
    \frac{C_1}{1 + \frac{Lt}{3\pi} }
    \sum_{\ell=0}^{L-1} \frac{t^3}{\left( 1 + \frac{\ell t}{3\pi} \right)^2}
    +
    \frac{C_1 t / (3\pi)}{1 + \frac{Lt}{3\pi} }
    \sum_{\ell=0}^{L-1} \frac{t^3}{\left( 1 + \frac{\ell t}{3\pi} \right)^3}.
  \end{equation*}
  For the first term in this expression, we calculate using an estimate from the integral test
  \begin{align*}
    \sum_{\ell=0}^{L-1} \frac{t^3}{\left( 1 + \frac{\ell t}{3\pi} \right)^2}
    &\leq
    t^3 + 
    \int_0^{L} \frac{t^3}{\left( 1 + \frac{\ell t}{3\pi} \right)^2} \diff \ell \\
    &= t^3 + 3\pi t^2 \left( 1 - \frac{1}{1 + \frac{Lt}{3\pi}} \right) \\
    &= t^3 + \frac{Lt^3}{1 + \frac{Lt}{3\pi}},
  \end{align*}
  and for the second term, we calculate similarly
  \begin{align*}
    \sum_{\ell=0}^{L-1} \frac{t^3}{\left( 1 + \frac{\ell t}{3\pi} \right)^3}
    &\leq
    t^3 + \int_0^L \frac{t^3}{\left(1 + \frac{\ell t}{3\pi} \right)^3} \diff \ell \\
    &=
    t^3 + \frac{3\pi t^2}{2} \left( 1 - \frac{1}{\left( 1 + \frac{Lt}{3\pi} \right)^2 } \right) \\
    &=
    t^3 + L t^3 \frac{ 1 + \frac{Lt }{6\pi} }{\left( 1 + \frac{Lt}{3\pi} \right)^2} \\
    &\leq t^3 + \frac{L t^3}{1 + \frac{Lt}{3\pi}}.
  \end{align*}
  Combining these results gives 
  \begin{align*}
    \wh{\varphi}^{[L]}(t) - \varphi^{[L]}(t) 
    &\leq
    \frac{C_1 t^3}{\left(1 + \frac{Lt}{3\pi} \right)}
    +
    \frac{C_1 Lt^3 }{\left(1 + \frac{Lt}{3\pi} \right)^2}
    +
    \frac{C_1 t^4 / (3\pi)}{ \left(1 + \frac{Lt}{3\pi} \right) }
    +
    \frac{C_1 L t^4/(3\pi) }{ \left(1 + \frac{Lt}{3\pi} \right)^2} \\
    &\leq
    \frac{3\pi C_1 t}{L} \left(3\pi + 2t + \frac{1}{3\pi} t^2 \right).
    \labelthis \label{eq:ub_error_est_smallangle}
  \end{align*}
  This bound gives us a nontrivial estimate as far out as $t = \pi$, but the result is weaker
  there than what we need. We can proceed with a bootstrapping approach to improve our result for
  large angles.  To begin, we have shown via \Cref{eq:ub_error_est_smallangle}
  \begin{equation*}
    \varphi^{[L]}(t) \geq \wh{\varphi}^{[L]}(t) - \frac{16\pi^2 C_1 t}{L}.%
  \end{equation*}
  Let us write $t_0 = C / \sqrt{L}$, where $C>0$ is a constant we will optimize below, and define
  \begin{equation*}
    \widecheck{\varphi}_{L}(t)
    =
    \begin{cases}
      \wh{\varphi}^{[L]}(t) - \frac{16C_1\pi^2 t}{L}  & 0 \leq t \leq t_0 \\
      \wh{\varphi}^{[L]}(t_0) - \frac{16C_1\pi^2 t_0}{L}  & t_0 \leq t \leq \pi.
    \end{cases}
  \end{equation*}
  The notation here is justified by noticing that $\varphi^{[L]}$ is concave and nondecreasing, so
  that
  our previous estimates imply $\varphi^{[L]} \geq \widecheck{\varphi}_{ L }$. It follows
  \begin{equation*}
    \wh{\varphi}^{[L]}- \varphi^{[L]}
    \leq 
    \wh{\varphi}^{[L]} - \widecheck{\varphi}_{ L }.
  \end{equation*}
  Our previous bound \Cref{eq:ub_error_est_smallangle} is an increasing function of $t$, and
  sufficient for $0 \leq t \leq t_0$. For $t \geq t_0$, we have
  \begin{equation*}
    \wh{\varphi}^{[L]}(t) - \widecheck{\varphi}_{ L }(t)
    \leq
    \frac{16 C_1 \pi^2 t_0}{L}
    +
    \wh{\varphi}^{[L]}(t) - \wh{\varphi}^{[L]}(t_0),
  \end{equation*}
  and we can calculate using increasingness of $\wh{\varphi}^{[L]}$
  \begin{align*}
    \wh{\varphi}^{[L]}(t) - \wh{\varphi}^{[L]}(t_0)
    &\leq
    \frac{\pi}{1 + L/3} - \frac{C}{\sqrt{L} + CL/(3\pi)} \\
    &=
    \frac{\pi\sqrt{L} - C}{(1 + L/3)(\sqrt{L} + CL/(3\pi))} \\
    &\leq
    \frac{9\pi^2}{CL^{3/2}},
  \end{align*}
  whence the bound
  \begin{align*}
    \wh{\varphi}^{[L]} - \varphi^{[L]}
    &\leq
    \frac{\pi}{L^{3/2}} \left( 16 \pi C_1 C + \frac{9\pi}{C} \right) \\
    &\leq
    \frac{24\pi^{2}\sqrt{C_{1}}}{L^{3/2}}
    \labelthis \label{eq:deltaL_controlled}
  \end{align*}
  valid on the entire interval $[0, \pi]$; the final inequality corresponds to the choice $C=\frac{3}{4\sqrt{C_{1}}}$.%

  \subparagraph{Second pass.}
  To start, with an eye toward the unused product term in \Cref{eq:deltaL_inductive}, we have from
  \Cref{eq:deltaL_controlled}
  \begin{equation*}
    1 + \frac{1}{3\pi}\varphi^{[L]}(t) \geq 1 + \frac{1}{3\pi}\wh{\varphi}^{[L]}(t) -
    \frac{8\pi\sqrt{C_{1}}}{L^{3/2}}.
  \end{equation*}
  Using the numerical inequality $e^{-2x} \leq 1 - x$, valid for $0 \leq x \leq 1/2$ at least, we
  have if $L\geq\left(256\pi^{2}C_{1}\right)^{1/3}$%
  \begin{equation*}
    1 + \frac{1}{3\pi}\varphi^{[L]}(t) \geq 
    \exp \left( - \frac{16 \pi \sqrt{C_1}}{L^{3/2}} \right) \left(1 +  \frac{1}{3\pi}\wh{\varphi}^{[L]}(t)
    \right).
  \end{equation*}
  Applying this bound to terms in the second product term in \Cref{eq:deltaL_inductive} with index
  $\ell \geq \left\lceil \left(256\pi^{2}C_{1}\right)^{1/3}\right\rceil \equiv r(C_{1})$, we
  therefore have \footnote{Although it has a different meaning in our argument at large, here and
    in some subsequent bounds $\zeta(x) = \sum_{n=1}^{\infty} n^{-x}$ denotes the Riemann zeta
  function. In this setting, we have $\zeta(3/2) \leq e$.}
  \begin{align*}
    \wh{\varphi}^{[L]}(t) - \varphi^{[L]}(t)
    &\leq
    C_1 \sum_{\ell=0}^{L-1} 
    \left( \varphi^{[\ell]}(t) \right)^3 
    \left(
      \prod_{\ell' = \max \set{r(C_1), \ell+1}}^{L-1}
      \frac{1}{
        \left(1 + \frac{1}{3\pi}\wh{\varphi}^{[\ell']}(t)\right)
        \left (1 + \frac{1}{3\pi}\varphi^{[\ell']}(t) \right)
      }
    \right) \\
    &\leq
    C_1 \sum_{\ell=0}^{L-1} 
    \left( \varphi^{[\ell]}(t) \right)^3 
    \exp \left(
      16 \pi \sqrt{C_1} \sum_{\ell' = \max \set{r(C_1), \ell+1}}^{L-1} \frac{1}{(\ell')^{3/2}}
    \right) \\
    &\quad\qquad\times
    \left(
      \prod_{\ell' = \max \set{r(C_1), \ell+1}}^{L-1}
      \frac{1}{
        \left(1 + \frac{1}{3\pi}\wh{\varphi}^{[\ell']}(t)\right)^2
      }
    \right)  \\
    &=
    C_1 e^{16 \pi \sqrt{C_1} \zeta(3/2)} \sum_{\ell=0}^{L-1} 
    \left( \varphi^{[\ell]}(t) \right)^3 
    \left( \frac{1 + \frac{\max \set{r(C_1), \ell+1}t}{3\pi}}{1 + \frac{Lt}{3\pi}} \right)^2 \\
    &=
    \frac{C_1 e^{16 \pi \sqrt{C_1} \zeta(3/2)}}
    {\left( 1 + \frac{Lt}{3\pi} \right)^2} 
    \Biggl(
    \left(1 + \frac{(r(C_1) - 1)t}{3\pi}\right)^2 \sum_{\ell=0}^{r(C_1)-2} \left( \varphi^{[\ell]}(t)
    \right)^3 \\
    &\quad\qquad+
    \sum_{\ell=r(C_1)-1}^{L-1}
    \left( \varphi^{[\ell]}(t) \right)^3 \left( 1 + \frac{(\ell+1)t}{3\pi} \right)^2 \Biggr).
  \end{align*}
  Now, since $\varphi \leq \wh{\varphi}$, we have
  \begin{equation*}
    \left(1 + \frac{(r(C_1) - 1)t}{3\pi}\right)^2 \sum_{\ell=0}^{r(C_1)-2} \left( \varphi^{[\ell]}(t) \right)^3 
    \leq r(C_1)^2 t^3,
  \end{equation*}
  and
  \begin{align*}
    \sum_{\ell=r(C_1)}^{L-1} \left( \varphi^{[\ell]}(t) \right)^3 \left( 1 + \frac{(\ell+1)t}{3\pi} \right)^2
    &\leq
    2t^3 \sum_{\ell = 0}^{L-1} \frac{1}{1 + \frac{\ell t}{3\pi} } \\
    &\leq
    2t^3 + 2t^3 \int_0^L \frac{1}{1 + \frac{\ell t}{3\pi}} \diff \ell \\
    &= 2t^3 + 6\pi t^2 \log( 1 + Lt / 3\pi),
  \end{align*}
  whence
  \begin{align*}
    \wh{\varphi}^{[L]}(t) - \varphi^{[L]}(t)
    &\leq
    \frac{C_1e^{16 \pi \sqrt{C_1} \zeta(3/2)}}{\left( 1 + \frac{Lt}{3\pi} \right)^2} 
    \left( (2+r(C_1)^2) t^3 + 6\pi t^2 \log(1 + Lt / 3\pi) \right) \\
    &\leq
    \frac{ 9 \pi^2 C_1 e^{16 \pi \sqrt{C_1} \zeta(3/2)}}{L^2} \left( (2+r(C_1)^2)t + 6\pi \log(1 + Lt / 3\pi) \right)  \\
    &\leq 54 \pi^3 C_1(2+r(C_1)^2)  e^{16\pi \sqrt{C_1} \zeta(3/2)} \frac{ \log(1+ L)}{L^2}.
  \end{align*}
  In the final line, we are simply shuffling constants using $t \leq \pi$. 
  This completes the proof of \Cref{eq:hatphi_phi_bound_1}.

  \subparagraph{Derived estimates.} The remaining claims can be derived from the main claim we have just established; we will do so now.
  Below, we write $C_0=54 \pi^3 C_1(2+r(C_1)^2) e^{16\pi \sqrt{C_1} \zeta(3/2)}$. %
  We will also assume $\ell \geq 1$.

  We prove the claim about $\xi_{ \ell }$ first. First, notice that for nonnegative
  numbers $a, b$, one has $1 - a + b \leq e^{2b}(1 - a)$ provided $a \leq 1/2$. Since $\varphi \leq
  \pi/2$, we have for each $\ell > 0$
  \begin{align*}
    \xi_{ \ell }(t) 
    &\leq \prod_{\ell'=\ell}^{L-1} \left(1 - \frac{\wh{\varphi}^{[\ell']}(t)}{\pi} +
    \frac{C_0 \log(1 + \ell')}{\pi (\ell')^{2}} \right) \\
    &\leq \exp\left( 2C_0 \sum_{\ell'=\ell}^{L-1} \frac{\log(1 + \ell')}{(\ell')^{2}} \right)
    \wh{\xi}_{ \ell }(t).
  \end{align*}
  By the integral test estimate, we have for $\ell > 0$
  \begin{align*}
    \sum_{\ell'=\ell}^{L-1} \frac{\log(1 + \ell')}{(\ell')^{2}}
    &\leq \frac{\log(1 + \ell)}{\ell^{2}} + \int_\ell^{L} \frac{\log(1+\ell')}{(\ell')^{2}} \diff \ell' \\
    &\leq \frac{\log(1+\ell)}{\ell^{2}} 
    + \log\left(\frac{1 + \frac{1}{\ell}}{1 + \frac{1}{L}} \right)
    + \log(1 + \ell) / \ell - \log(1 + L)/L
    \\
    &\leq \frac{3\log(1 + \ell)}{\ell},
  \end{align*}
  where we applied $\log(1 + x) \leq x$ for all $x > -1$, whence for $\ell > 0$
  \begin{equation*}
    \xi_{ \ell }(t) \leq e^{6C_0 \log(1 + \ell) / \ell} \wh{\xi}_{ \ell }(t).
  \end{equation*}
  In particular, using the fact that $\log(1+\ell)/\ell \leq 1$ and the estimate $e^{cx} \leq 1 + xe^{c}$ for $x \in [0, 1]$ (by convexity of the exponential function), we obtain
  \begin{equation}
    \xi_{ \ell }(t) \leq \left(1 + e^{6 C_0} \frac{\log(1 + \ell)}{\ell}\right) \wh{\xi}_{ \ell
    }(t),
  \end{equation}
  as claimed. The proof of the second inequality is very similar: first, repeated application of
  the chain rule gives
  \begin{equation*}
    \dot{\varphi}^{[L]} = \prod_{\ell=0}^{L-1} \dot{\varphi} \circ \varphi^{[\ell]}.
  \end{equation*}
  Using the expression
  \begin{equation*}
    \dot{\varphi}(t) = \frac{(1 - t/\pi) \sin t}{\sin \varphi(t)},
  \end{equation*}
  we can exploit a telescopic cancellation in the preceding expression for $\dot{\varphi}^{[L]}$,
  obtaining
  \begin{equation*}
    \dot{\varphi}^{[L]} = 
    \frac{\sin t}{\sin \varphi^{[L]}(t)}
    \prod_{\ell=0}^{L-1} \left( 1 - \frac{\varphi^{[\ell]}(t)}{\pi} \right).
  \end{equation*}
  As the form of this upper bound is identical to the one we controlled for $\xi_{ \ell }$, only
  with a different constant factor, we can now apply the first part of that argument to the
  present setting, obtaining a bound
  \begin{equation*}
    \dot{\varphi}^{(L)} \leq
    \frac{\sin t}{\sin \ivphi{L}(t)}
    \exp\left( 
      \sum_{\ell=1}^{L-1} \wh{\varphi}^{[\ell]} - \varphi^{[\ell]}
    \right)
    \exp
    \left(
      -\frac{1}{\pi}\sum_{\ell=0}^{L-1}  \frac{t}{1 + \ell t/(3\pi)}
    \right)
  \end{equation*}
  where in simplifying we also used that $\varphi^{[0]} = \wh{\varphi}^{[0]}$. 
  To proceed, we split the first sum, obtaining for any index $1 \leq \ell_{\star}\leq L-1$
  \begin{align*}
    \sum_{\ell=1}^{L-1} \wh{\varphi}^{[\ell]} - \varphi^{[\ell]}
    &=
    \sum_{\ell=1}^{\ell_{\star}-1} (\wh{\varphi}^{[\ell]} - \varphi^{[\ell]})
    + \sum_{\ell=\ell_{\star}}^{L-1} (\wh{\varphi}^{[\ell]} - \varphi^{[\ell]}) \\
    &\leq
    Ct\sum_{\ell=1}^{\ell_{\star}-1} \frac{1}{\ell}
    + 3C_0 \frac{\log(1 + \ell_{\star})}{\ell_{\star}} \\
    &\leq
    Ct \log(\ell_{\star}) +
    3C_0 \frac{\log(1 + \ell_{\star})}{\ell_{\star}} \\
    &\leq C \log(1 + \ell_{\star}) \left( t + \frac{1}{\ell_{\star}} \right),
  \end{align*}
  where in the second line the bound on the first sum used \Cref{eq:ub_error_est_smallangle}, and
  the second used the estimate we proved in the previous section and the integral test estimate
  above; in the third line we estimated the harmonic series with the integral test; and in the
  fourth line we worst-cased. Next, for any $t \leq 1/\ell_{\star}$, we have by the above
  \begin{equation*}
    \sum_{\ell=1}^{L-1} \wh{\varphi}^{[\ell]} - \varphi^{[\ell]}
    \leq C \log(1 + \ell_{\star}) / \ell_{\star},
  \end{equation*}
  and because the RHS approaches $0$ as $\ell_{\star} \to \infty$, for any $0 < \veps \leq 1$ there is an integer $N(\veps) > 0$ such that for all $\ell_{\star} \geq N$ we have
  \begin{equation*}
    \sum_{\ell=1}^{L-1} \wh{\varphi}^{[\ell]} - \varphi^{[\ell]}
    \leq \log(1 + \veps).
  \end{equation*}
  In particular, obtaining a lower bound for the RHS by concavity of $\log$, it is sufficient to
  take $\ell_{\star} \geq C \veps^{-2}$ for a suitably large absolute constant $C>0$.  To ensure
  there exists such a value of $\ell_{\star}$, it suffices to choose $L \geq C \veps^{-2}$ and
  therefore $t \leq C' \veps^2$.  In particular, plugging this estimate into our previous bound,
  we have shown that for any $\veps > 0$, if $L \geq C' \veps^{-2}$ then for all all $t \leq C
  \veps^2$ we have
  \begin{equation*}
    \dot{\varphi}^{[L]} \leq
    (1 + \veps)
    \frac{\sin t}{\sin \ivphi{L}(t)}
    \exp
    \left(
      -\frac{1}{\pi}\sum_{\ell=0}^{L-1}  \frac{t}{1 + \ell t/(3\pi)}
    \right).
  \end{equation*}
  We then calculate
  by an estimate from the integral test
  \begin{equation*}
    \sum_{\ell=0}^{L-1} \frac{t}{1 +  \ell t / (3\pi)}
    \geq
    \int_0^L \frac{t}{1 + \ell t/(3\pi)} \diff \ell
    =
    3\pi \log(1 + L t / (3\pi) ),
  \end{equation*}
  which establishes under the previous conditions on $L$ and $t$ that
  \begin{equation*}
    \dot{\varphi}^{[L]} \leq 
    \frac{\sin t}{\sin \ivphi{L}(t)}
    \frac{1 + \veps}{(1 + Lt / (3\pi))^3}.
  \end{equation*}
  To conclude, we need to simplify the $\sin$ ratio term. 
  Using \Cref{lem:sin_ratio_estimate}, for any $0 < \veps' \leq 1/2$, we have for $0 \leq t \leq C\veps'$ that
  \begin{equation*}
    \frac{\sin t}{\sin \ivphi{L}(t)}
    \leq (1 + 2\veps') (1 + Lt/(3\pi)),
  \end{equation*}
  which suffices to prove the claim for small $t$ after noting $(1 + 2\veps')(1+\veps) = 1 +
  2\veps' + \veps + 2\veps' \veps$, choosing whichever is smaller, and adjusting the preceding
  conditions on $t$ and $L$ (i.e.\ the absolute constants in the previous bounds may grow/shrink as
  necessary).  To show the claimed bound on the entire interval $[0, \pi]$, we can follow exactly
  the argument above, but instead of partitioning the sum of errors $\wh{\varphi}^{[\ell]} -
  \varphi^{[\ell]}$ as above we simply use bound the sum of errors as in the bound on
  $\wh{\xi}_{ \ell }$ previously to obtain a large constant in the numerator; the $\sin$ ratio is
  controlled in this case using the first conclusion in \Cref{lem:sin_ratio_estimate}, which is
  valid on the whole interval $[0, \pi]$.

  Finally, we obtain the estimate on $\psi$ by calculating using the estimate involving
  $\xi_{ \ell }$ and $\wh{\xi}_{ \ell }$ that we proved earlier.  First, we note that although we
  required $\ell > 0$ above, the fact that $\wh{\varphi}^{[0]} = \varphi^{[0]}$ implies that we
  have an estimate $\xi_{ 0 } \leq (1 + \log(2) e^{6C_0} )\wh{\xi}_{ 0 }$. We therefore have
  \begin{align*}
    \psi(t) = \frac{n}{2}\sum_{\ell=0}^{L-1} \xi_{ \ell }(t) 
    &\leq \frac{n}{2}\sum_{\ell=0}^{L-1} \left( 1 + e^{6C_0} \log \frac{(1 + \ell)}{\ell} \right) \wh{\xi}_{ \ell }(t) \\
    &\leq \wh{\psi}(t) + (n/2)e^{6C_0} \left( \log(2) \wh{\xi}_{ 0 }(t) + \sum_{\ell=1}^{L-1} \frac{\log(1+\ell)}{\ell} \wh{\xi}_{ \ell }(t) \right).
  \end{align*}
  It is easy to see that $\wh{\xi}_{ \ell } \leq 1$. Hence
  \begin{align*}
    \psi(t) 
    &\leq
    \wh{\psi}(t) + (n/2)e^{6C_0} \left( \log(2) + \sum_{\ell=1}^{L-1} \frac{\log(1+\ell)}{\ell} \right) \\
    &\leq
    \wh{\psi}(t) + ne^{6C_0} \left( \log(2) + \sum_{\ell=2}^{L-1} \frac{\log(\ell)}{\ell} \right) \\
    &\leq
    \wh{\psi}(t) + ne^{6C_0} \left( 2\log(2) + \int_{\ell=2}^{L-1} \frac{\log(\ell)}{\ell} \diff \ell \right) \\
    &\leq
    \wh{\psi}(t) + ne^{6C_0} \left( 2\log(2) - (1/2) \log^2 2 + (1/2) \log^2 (L-1)  \right) \\
    &\leq \wh{\psi}(t) + 4n e^{6C_0} \log^2 L,
  \end{align*}
  where the final bound requires $L \geq 3$.

\end{proof}

\begin{lemma}
    \label{lem:sin_ratio_estimate}
    For $\ell \in \bbN_0$, one has for $t \in [0, \pi]$
    \begin{equation*}
    \frac{\sin(t)}{\sin(\varphi^{[\ell]}(t))} 
    \leq
    3(1 + \ell t/(3\pi))
    \end{equation*}
    and there exists an absolute constant $C>0$ such that for any $0 < \veps \leq 1/2$, if $0 \leq
    t \leq C\veps$ one has
    \begin{equation*}
    \frac{\sin(t)}{\sin(\varphi^{[\ell]}(t))} 
    \leq
    (1+2\veps)(1 + \ell t/(3\pi)).
    \end{equation*}
\end{lemma}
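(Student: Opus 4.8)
The plan is to prove both estimates by leveraging the telescopic-cancellation expression for the derivative $\dot{\varphi}^{[\ell]}$ that was developed in the proof of \Cref{lem:phi_lower_bound}, combined with the ``fluid'' upper bound \Cref{lem:phi_l_ub}. Specifically, recall that repeated application of the chain rule together with the identity $\dot{\varphi}(t) = (1 - t/\pi)\sin t / \sin\varphi(t)$ gives the telescoping product
\begin{equation*}
\dot{\varphi}^{[\ell]}(t) = \frac{\sin t}{\sin\varphi^{[\ell]}(t)} \prod_{\ell'=0}^{\ell-1}\paren{1 - \frac{\varphi^{[\ell']}(t)}{\pi}},
\end{equation*}
and since $\dot{\varphi}^{[\ell]} \geq 0$ and the product factors are at most $1$, this yields $\sin t / \sin\varphi^{[\ell]}(t) \geq \dot{\varphi}^{[\ell]}(t)$, which is the wrong direction. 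Instead I would work directly: since $\varphi^{[\ell]}$ is concave and nondecreasing on $[0,\pi]$ with $\varphi^{[\ell]}(0) = 0$, and since the sine function is concave on $[0,\pi]$, I can bound $\sin\varphi^{[\ell]}(t)$ from below using $\varphi^{[\ell]}(t)$, and bound $\sin t$ from above by $t$. This reduces the first claim to showing $t / \sin(\varphi^{[\ell]}(t)) \leq 3(1 + \ell t/(3\pi))$, i.e., a lower bound on $\varphi^{[\ell]}(t)$ relative to $t/(1 + \ell t/(3\pi))$.

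For the quantitative lower bound on $\varphi^{[\ell]}$, I would invoke \Cref{lem:phi_lower_bound}: the main estimate $\wh{\varphi}^{[\ell]} - \varphi^{[\ell]} \leq C_0 \log(1+\ell)/\ell^2$ shows $\varphi^{[\ell]}(t) \geq \wh{\varphi}^{[\ell]}(t) - C_0\log(1+\ell)/\ell^2 = t/(1 + \ell t/(3\pi)) - C_0\log(1+\ell)/\ell^2$. Using $\sin x \geq (2/\pi)x$ on $[0,\pi/2]$ (which applies since $\varphi^{[\ell]}(t) \leq \pi/2$), we get $\sin\varphi^{[\ell]}(t) \geq (2/\pi)\paren{t/(1 + \ell t/(3\pi)) - C_0\log(1+\ell)/\ell^2}$. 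For small $\ell$ (bounded by an absolute constant), the claim follows by crude estimates since the ratio $\sin t/\sin\varphi^{[\ell]}(t)$ is controlled directly by continuity and $\varphi^{[\ell]}(t)\le t$; for large $\ell$, I would split into the regime where the error term $C_0\log(1+\ell)/\ell^2$ is negligible compared to $t/(1+\ell t/(3\pi))$ and the complementary regime. A cleaner route avoiding case analysis on $\ell$ is to observe that $\varphi \leq \wh{\varphi}$ (\Cref{lemma:upper_bound}) already gives $\varphi^{[\ell]}(t) \leq t$, so $\sin\varphi^{[\ell]}(t)$ needs only a one-sided lower bound; combining the fluid lower bound from the first-pass estimate \Cref{eq:ub_error_est_smallangle} in the proof of \Cref{lem:phi_lower_bound}, namely $\varphi^{[\ell]}(t) \geq \wh{\varphi}^{[\ell]}(t) - 16\pi^2 C_1 t/\ell$, with concavity of sine, should suffice for the factor-$3$ bound after worst-casing constants.

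For the sharper second claim, the idea is that when $t \leq C\veps$ with $\veps$ small, the error in approximating $\varphi^{[\ell]}$ by $\wh{\varphi}^{[\ell]}$ is controlled by $\veps$, and additionally $\sin$ is nearly linear on the relevant small interval. Concretely, on $[0, C\veps]$ both $t$ and $\varphi^{[\ell]}(t) \leq t$ lie in a small neighborhood of $0$, so $\sin t \leq t$ and $\sin\varphi^{[\ell]}(t) \geq \varphi^{[\ell]}(t)(1 - (\varphi^{[\ell]}(t))^2/6) \geq \varphi^{[\ell]}(t)(1 - t^2/6)$; combined with the lower bound $\varphi^{[\ell]}(t) \geq \wh{\varphi}^{[\ell]}(t)(1 - \veps/2)$ for small $t$ (which follows from a refinement of \Cref{eq:ub_error_est_smallangle}: the error is $O(t/\ell)$, hence a relative error of $O(\ell t/(3\pi) \cdot 1/\ell) = O(t)$, small when $t \leq C\veps$) and $\wh{\varphi}^{[\ell]}(t) = t/(1 + \ell t/(3\pi))$, we obtain $\sin t/\sin\varphi^{[\ell]}(t) \leq (1 + \ell t/(3\pi)) \cdot (1 - \veps/2)^{-1}(1 - t^2/6)^{-1}$, and choosing the absolute constant $C$ in $t \leq C\veps$ small enough makes the product of correction factors at most $1 + 2\veps$. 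The main obstacle is handling the relative-error bound $\varphi^{[\ell]}(t) \geq (1 - \veps/2)\wh{\varphi}^{[\ell]}(t)$ uniformly in $\ell$: the additive error $O(t/\ell)$ from \Cref{eq:ub_error_est_smallangle} translates to a relative error of order $t$, but one must also verify this near $\ell = \infty$ (where $\wh{\varphi}^{[\ell]}(t) \to 3\pi/\ell \cdot \ell t/(\ell t) = 3\pi/(\ell)$... more precisely $\wh{\varphi}^{[\ell]}(t)$ stays bounded below by a quantity comparable to the error), which is precisely where the logarithmic improvement $\wh{\varphi}^{[\ell]} - \varphi^{[\ell]} \leq C_0\log(1+\ell)/\ell^2$ from \Cref{lem:phi_lower_bound} is needed rather than the cruder $O(t/\ell)$ bound. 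I expect reconciling these two error estimates across all $\ell$ — deciding at which $\ell_\star$ to switch from the small-angle fluid bound to the logarithmic bound — to be the fiddly part, exactly as in the ``second pass'' of the proof of \Cref{lem:phi_lower_bound}.
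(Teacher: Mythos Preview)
Your proposal misses two clean observations that make the paper's proof short, and as a result your approach is both more complicated and has a genuine gap in the first claim.

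\textbf{First claim.} The paper does not touch \Cref{lem:phi_lower_bound} at all here. It uses two one-line facts: (i) since $x \mapsto x^{-1}\sin x$ is decreasing on $[0,\pi]$ and $\varphi^{[\ell]}(t) \leq t$, one has immediately $\sin t/\sin\varphi^{[\ell]}(t) \leq t/\varphi^{[\ell]}(t)$; and (ii) the \emph{coarse} lower bound \Cref{lem:phi_l_lb}, namely $\varphi^{[\ell]}(t) \geq t/(1+\ell t/\pi)$, then gives $t/\varphi^{[\ell]}(t) \leq 1+\ell t/\pi \leq 3(1+\ell t/(3\pi))$. Your replacement of (i) by the pair $\sin t \leq t$ and $\sin\varphi^{[\ell]}(t) \geq (2/\pi)\varphi^{[\ell]}(t)$ loses a factor $\pi/2$, and your replacement of (ii) by the first-pass estimate \Cref{eq:ub_error_est_smallangle} actually fails for $t$ of order $1$: there the additive error $O(t/\ell)$ is comparable to $\wh{\varphi}^{[\ell]}(t) \approx 3\pi/\ell$, so the lower bound on $\varphi^{[\ell]}$ becomes vacuous. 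The $\mathrm{sinc}$-monotonicity trick is exactly what lets you bypass all of this.

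\textbf{Second claim.} Here your strategy---establish a multiplicative lower bound $\varphi^{[\ell]}(t) \geq (1-\veps)\wh{\varphi}^{[\ell]}(t)$ for small $t$, then combine with near-linearity of $\sin$---is the right one, and the paper does essentially this (again reducing first to $t/\varphi^{[\ell]}(t)$ via $\mathrm{sinc}$ monotonicity, which obviates your Taylor correction $(1-t^2/6)^{-1}$). But your derivation of the multiplicative bound from additive bounds runs into exactly the ``fiddly'' $\ell_\star$-splitting you anticipate: the first-pass bound gives relative error $C/\ell + Ct$, which is \emph{not} small for $\ell = O(1)$, while the $\log(1+\ell)/\ell^2$ bound gives relative error that can be large when $t \ell^2 \lesssim \log\ell$. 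The paper avoids this entirely with a short self-contained induction on $\ell$: from $\varphi(s) \geq (1 - \tfrac{4C_0}{3}s^2)\wh{\varphi}(s)$ one shows $\varphi^{[\ell]}(t) \geq (1-\epsilon_\ell)\wh{\varphi}^{[\ell]}(t)$ with $\epsilon_\ell = \sum_{\ell'=0}^{\ell-1}\tfrac{4C_0}{3} t^2/(1+\ell' t/(3\pi))^2 \leq \tfrac{16\pi C_0}{3} t$, so $t \leq C\veps$ suffices uniformly in $\ell$. This is the missing idea.
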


\begin{proof}
  We prove the bound on $[0, \pi]$ first.
  Because $t \mapsto t\inv \sin t$ is decreasing on $[0, \pi]$, we apply \Cref{lem:phi_l_lb} to
  get
  \begin{align*}
    \frac{\sin(t)}{\sin(\varphi^{[\ell]}(t))} &\le \frac{t}{\varphi^{[\ell]}(t)} \\
    &\leq \frac{t}{\frac{t}{1+\ell t/(\pi)}} \\
    &= 1+\ell t/(\pi) \le 3(1 + \ell t/(3\pi)).
  \end{align*}

  Now fix $0 < \epsilon \leq 1/2$. We claim that there is an absolute constant $C>0$ such that if
  $t \leq C\epsilon$, we have
  \begin{align*}
    \varphi^{[\ell]}(t) \ge (1-\epsilon)\wh{\varphi}^{[\ell]}(t).
  \end{align*}
  Assuming this claim, we have for $t \le C\epsilon$
  \begin{align*}
    \frac{\sin(t)}{\sin(\varphi^{[\ell]}(t))} &\le \frac{t}{\varphi^{[\ell]}(t)} \\
    &\le \frac{t}{(1-\epsilon)\wh{\varphi}^{[\ell]}(t)} \\
    &\le \frac{1}{1-\epsilon}(1 + \ell t/(3\pi)) \\
    &\le (1+2\epsilon)(1 + \ell t/(3\pi)),
  \end{align*}
  which is enough to conclude after rescaling.
  Now we want to show the claim. Let $C_{0}=\max\left\{ 1,C_{1}\right\} $ where $C_1$ denotes the
  constant on $t^3$ in 
  \Cref{lemma:phi_lb}. We first notice that 
  \begin{align*}
    \varphi(t) &\ge \wh{\varphi}(t) - C_0 t^3 \\
    &= \frac{t}{1+t/(3\pi)} - C_0 t^3 \\
    &= \frac{t}{1+t/(3\pi)} - \frac{t}{1+\pi/(3\pi)} \frac{4}{3} C_0 t^2 \\
    &\ge (1 - 
    \frac{4C_0}{3}t^2
    )\frac{t}{1+t/(3\pi)}.
  \end{align*}
  We are going to proceed with an induction-like approach. Put $\epsilon_1 = 4 C_0 t^2 / 3$, and
  choose $t \leq \sqrt{3/(4C_0)}$ so that $1-\epsilon_1 \geq 0$.
  Supposing that it holds
  $\varphi^{[\ell-1]} \ge (1-\epsilon_{\ell-1})\wh{\varphi}^{[\ell-1]}(t)$ for a positive $\epsilon_{\ell-1}$ such that $1 - \epsilon_{\ell-1} \geq 0$
  (we have shown there is such $\veps_1$ and controlled it), we have
  by some applications of the induction hypothesis, \Cref{lemma:upper_bound}, and the previous small-$t$ estimate (we use below that $t \leq \sqrt{3/(4C_0)}$)
  \begin{align*} \varphi^{[\ell]}(t) &\ge \paren{1 -
    \frac{4C_0}{3}\paren{\varphi^{[\ell-1]}(t)}^2}\wh{\varphi}(\varphi^{[\ell - 1]}(t)) \\ &= \paren{1 -
    \frac{4C_0}{3}\paren{\varphi^{[\ell-1]}(t)}^2} \frac{(1 - \epsilon_{\ell - 1})\frac{t}{1+(\ell
    - 1)t/(3\pi)}}{1 + (3\pi)^{-1}(1 - \epsilon_{\ell - 1})\frac{t}{1+(\ell - 1)t/(3\pi)}} \\
    &\ge \paren{1 - \frac{4C_0}{3}\frac{t^2}{(1+(\ell -1)t/(3\pi))^2}} \frac{(1 - \epsilon_{\ell -
    1})t}{1 + \ell t/(3\pi) - \epsilon_{\ell - 1}t/(3\pi)}\\ &\ge \paren{1 -
    \frac{4C_0}{3}\frac{t^2}{(1+(\ell -1)t/(3\pi))^2}}(1 - \epsilon_{\ell - 1})
    \wh{\varphi}^{[\ell]}(t)\\ &\ge \paren{1 - \frac{4C_0}{3}\frac{t^2}{(1+(\ell -1)t/(3\pi))^2} -
    \epsilon_{\ell - 1}}\wh{\varphi}^{[\ell]}(t).  
  \end{align*}
  This shows that we can take $\epsilon_{\ell} = \epsilon_{\ell-1} + (4C_0/3)t^2 / (1 +
  (\ell-1)t/(3\pi))^2$ as long as this term is not larger than $1$.  Unraveling inductively to
  check, we get
  \begin{align*}
    \epsilon_\ell &= \sum_{\ell' = 0}^{\ell-1} \frac{4C_0}{3}\frac{t^2}{(1+\ell't/(3\pi))^2} \\
    &\le \frac{4C_0}{3}t^2\paren{1 + \int_{\ell' = 0}^{\ell-1}\frac{1}{(1+\ell't/(3\pi))^2}\diff \ell'}\\
    &\le \frac{4C_0}{3}t^2\paren{1 + \frac{\ell - 1}{1+(\ell - 1)t/(3\pi)}}\\
    &\le \frac{4C_0}{3} \paren{\pi + \frac{1}{3\pi}} t \\
    &= \frac{16\pi C_0}{3}t.
  \end{align*}
  In particular, the induction is consistent as long as $t \leq 3/(16\pi C_0)$.
  Note as well that since $C_0 \geq 1$ we have 
  $\sqrt{3/(4C_0)} \geq 3/(16\pi C_0)$.
  Thus by induction, we know that when $0 < \epsilon < 1$
  and $t \le \frac{3C_0 \epsilon}{16\pi}$, we have
  \begin{align*}
    \varphi^{[\ell]}(t) \ge (1 - \epsilon)\wh{\varphi}^{[\ell]}(t)
  \end{align*}
  as claimed. 
\end{proof}

\subsection{Sharp Lower Bound for \texorpdfstring{$\psi$}{the Skeleton}}
\begin{lemma}\label{lem:skel_pi_lb}
There is an absolute constant $C_0 > 0$ such that
\begin{align*}
\psi(\pi) &\le \frac{n(L-1)}{8} + 6 \pi n e^{6C_0} \log^2 L 
\end{align*}
\end{lemma}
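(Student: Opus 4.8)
The plan is to route the estimate through the comparison skeleton $\wh{\psi}$, for which $\psi(\pi)$ admits an \emph{exact} evaluation. Concretely, \Cref{lem:phi_lower_bound} gives $\psi(t) \le \wh{\psi}(t) + 4n e^{6C_0}\log^2 L$ for all $t\in[0,\pi]$ whenever $L \ge 3$, where $C_0$ is the absolute constant appearing there. So it suffices to show $\wh{\psi}(\pi) = n(L-1)/8$ and then absorb the factor $4$ into $6\pi$. It is worth emphasizing why we cannot argue more directly: the fluid upper bound $\varphi^{[\ell]}(\pi)\le\wh{\varphi}^{[\ell]}(\pi)$ from \Cref{lem:phi_l_ub} implies $\xi_{\ell}(\pi)\ge\wh{\xi}_{\ell}(\pi)$, which is the wrong direction for an upper bound on $\psi(\pi)$; the sharp two-sided control of \Cref{lem:phi_lower_bound} is therefore essential.

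For the evaluation of $\wh{\psi}(\pi)$: since $\wh{\varphi}^{[\ell]}(t) = t/(1 + \ell t/(3\pi))$, at $t = \pi$ we have $\wh{\varphi}^{[\ell]}(\pi) = \pi/(1+\ell/3)$, hence $1 - \wh{\varphi}^{[\ell']}(\pi)/\pi = (\ell'/3)/(1+\ell'/3) = \ell'/(\ell'+3)$. Consequently
\[
\wh{\xi}_{\ell}(\pi) = \prod_{\ell'=\ell}^{L-1}\frac{\ell'}{\ell'+3},
\]
which telescopes: writing the numerator as $(L-1)!/(\ell-1)!$ and the denominator as $(L+2)!/(\ell+2)!$ gives $\wh{\xi}_{\ell}(\pi) = \ell(\ell+1)(\ell+2)/(L(L+1)(L+2))$ for $\ell\ge1$, while $\wh{\xi}_{0}(\pi) = 0$. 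Summing over $\ell$ and invoking the standard identity $\sum_{\ell=1}^{m}\ell(\ell+1)(\ell+2) = \tfrac14 m(m+1)(m+2)(m+3)$ with $m = L-1$,
\[
\wh{\psi}(\pi) = \frac{n}{2}\cdot\frac{1}{L(L+1)(L+2)}\sum_{\ell=1}^{L-1}\ell(\ell+1)(\ell+2)
= \frac{n}{2}\cdot\frac{(L-1)L(L+1)(L+2)}{4\,L(L+1)(L+2)} = \frac{n(L-1)}{8}.
\]

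Combining the two displays, $\psi(\pi) \le n(L-1)/8 + 4n e^{6C_0}\log^2 L \le n(L-1)/8 + 6\pi n e^{6C_0}\log^2 L$ for $L\ge3$, since $4 \le 6\pi$; the residual cases $L\in\{1,2\}$ are handled by direct computation ($\psi(\pi)=0$ when $L=1$ and $\psi(\pi)=n/4$ when $L=2$), the bound then holding because $C_0>0$ forces $e^{6C_0}$ to be large. I do not anticipate a genuine obstacle: the only conceptual point is recognizing that the naive fluid bound is in the wrong direction, which forces the detour through \Cref{lem:phi_lower_bound}; the remaining work — the telescoping product and the quartic power-sum identity — is routine.
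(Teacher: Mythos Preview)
Your proof is correct and follows essentially the same approach as the paper: both route through \Cref{lem:phi_lower_bound} to reduce to $\wh{\psi}(\pi)$, then compute $\wh{\psi}(\pi)=n(L-1)/8$ exactly. The only difference is computational: the paper first derives a closed-form expression for $\wh{\psi}(t)$ valid for all $t$ (via a telescoping sum on the product form of $\wh{\xi}_\ell$) and then specializes to $t=\pi$, whereas you evaluate directly at $t=\pi$ where the factors $\ell'/(\ell'+3)$ telescope cleanly and the sum reduces to the standard identity $\sum_{\ell=1}^{m}\ell(\ell+1)(\ell+2)=m(m+1)(m+2)(m+3)/4$. Your route is slightly more direct for this lemma in isolation; the paper's general formula for $\wh{\psi}(t)$ is reused in subsequent estimates, which is why they take the longer path.
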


\begin{proof}
    Following \Cref{lem:phi_lower_bound} (worsening constants slightly for convenience), we directly have
    \begin{align*}
        \psi(\pi) &\le \wh{\psi}(\pi) + 6 \pi n e^{6C_0} \log^2 L. 
    \end{align*}
    $\wh{\psi}(t)$ has a closed form expression, by notice that
    \begin{align*}
        \wh{\xi}_{\ell}(t) &= \prod_{\ell' = \ell}^{L-1}\paren{1 - \frac{\wh{\varphi}^{[\ell']}(t)}{\pi}} \\
        &= \prod_{\ell' = \ell}^{L-1}\paren{1 - \frac{t/\pi}{1 + \ell' t / (3\pi)}} \\
        &= \prod_{\ell' = \ell}^{L-1} \frac{1 + (\ell' - 3) t / (3\pi)}{1 + \ell' t / (3\pi)} \\
        &= \frac{\paren{1 + (\ell - 3) t / (3\pi)}\paren{1 + (\ell - 2) t / (3\pi)}\paren{1 + (\ell - 1) t / (3\pi)}}{\paren{1 + (L - 3) t / (3\pi)}\paren{1 + (L - 2) t / (3\pi)}\paren{1 + (L - 1) t / (3\pi)}}
        \labelthis \label{eq:wh_xi_expr}
    \end{align*}
    and 
    \begin{align*}
        \wh{\psi}(t) &= \frac{n}{2}\sum_{\ell = 0}^{L-1}\wh{\xi}_{\ell}(t) \\
    &= \frac{n}{2}\frac{\sum_{\ell = 0}^{L-1}(3\pi + (\ell-3)t)(3\pi + (\ell-2)t)(3\pi + (\ell-1)t)}{(3\pi + (L-3)t)(3\pi + (L-2)t)(3\pi + (L-1)t)} \\
    &=\frac{n}{2} \frac{1}{4t}\frac{\sum_{\ell = 0}^{L-1}(3\pi + (\ell-3)t)(3\pi + (\ell-2)t)(3\pi + (\ell-1)t)(3\pi + \ell t) }{(3\pi + (L-3)t)(3\pi + (L-2)t)(3\pi + (L-1)t)} \\
    &\qquad- \frac{\sum_{\ell=0}^{L-1}(3\pi + (\ell-4)t)(3\pi + (\ell-3)t)(3\pi + (\ell-2)t)(3\pi + (\ell-1)t)}{(3\pi + (L-3)t)(3\pi + (L-2)t)(3\pi + (L-1)t)} \\
    &= \frac{n}{8t}\frac{(3\pi + (L-4)t)(3\pi + (L-3)t)(3\pi + (L-2)t)(3\pi + (L-1) t)}{(3\pi + (L-3)t)(3\pi + (L-2)t)(3\pi + (L-1)t)}\\
    &\qquad- \frac{(3\pi - 4t)(3\pi - 3t)(3\pi - 2t)(3\pi - t)}{(3\pi + (L-3)t)(3\pi + (L-2)t)(3\pi + (L-1)t)} \\
    &= \frac{n}{8t}(3\pi + (L-4)t) - \frac{n}{8t}\frac{(3\pi - 4t)(3\pi - 3t)(3\pi - 2t)(3\pi - t)}{(3\pi + (L-3)t)(3\pi + (L-2)t)(3\pi + (L-1)t)} \\
    &= \frac{n(L-4)}{8} + \frac{n}{8t}\paren{3\pi - \frac{(3\pi - 4t)(3\pi - 3t)(3\pi - 2t)(3\pi - t)}{(3\pi + (L-3)t)(3\pi + (L-2)t)(3\pi + (L-1)t)} }
    \labelthis \label{eq:wh_psi_expr}
    \end{align*}
    From the second to the fourth line above, we used a telescopic series cancellation trick to sum. Then we get the claim as 
    \begin{align*}
        \psi(\pi) &\le \wh{\psi}(\pi) + 6 \pi n e^{6C_0} \log^2 L \\
        &= \frac{n(L - 4)}{8} + \frac{3\pi n }{8t} + 6 \pi n e^{6C_0} \log^2 L \\
        &= \frac{n(L-1)}{8} + 6 \pi n e^{6C_0} \log^2 L. 
    \end{align*}
\end{proof}

\begin{lemma}\label{lem:skel_int_lb}
When $L \ge 2$, we have for any $r>0$
    \begin{align*}
\int_{0}^{r} \psi(t) dt \ge \frac{n(L-4)}{8} r + \frac{3\pi n}{8} \log(1 + \frac{L-2}{3\pi}r)
\end{align*}
\end{lemma}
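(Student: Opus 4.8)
\textbf{Proof proposal for \Cref{lem:skel_int_lb}.}
The plan is to reduce everything to a pointwise comparison with the ``fluid'' skeleton $\widehat{\psi}$, for which a closed form is already available. First, the claim is vacuous when $L=2$: there the right-hand side is $\tfrac{n(L-4)}{8}r=-\tfrac{nr}{4}\le 0\le\int_0^r\psi$ since $\psi\ge 0$. So I would assume $L\ge 3$; and since $\psi$ is only defined on $[0,\pi]$ I would take $r\le\pi$. The first step is to invoke \Cref{lem:phi_l_ub}, which gives $\varphi^{[\ell]}(t)\le\widehat{\varphi}^{[\ell]}(t)=\tfrac{t}{1+\ell t/(3\pi)}$ on $[0,\pi]$; consequently $1-\tfrac1\pi\varphi^{[\ell']}(t)\ge 1-\tfrac1\pi\widehat{\varphi}^{[\ell']}(t)\ge 0$ for every $\ell'$, and multiplying these nonnegative factors yields $\xi_\ell(t)\ge\widehat{\xi}_\ell(t)\ge 0$, hence $\psi(t)\ge\widehat{\psi}(t)$ on $[0,\pi]$. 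It therefore suffices to prove the stated inequality with $\psi$ replaced by $\widehat{\psi}$.

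Next I would use the closed form \eqref{eq:wh_psi_expr}, writing $\widehat{\psi}(t)=\tfrac{n(L-4)}{8}+\tfrac n8 g(t)$ with $g(t)=\tfrac1t\bigl(3\pi-\tfrac{P(t)}{Q(t)}\bigr)$, where $P(t)=\prod_{j=1}^4(3\pi-jt)$ and $Q(t)=\prod_{j=1}^3(3\pi+(L-j)t)$. The constant piece integrates exactly to $\tfrac{n(L-4)}{8}r$, so the task reduces to showing $\int_0^r g(t)\,dt\ge 3\pi\log\!\bigl(1+\tfrac{L-2}{3\pi}r\bigr)$. Since the right-hand side is precisely $\int_0^r\tfrac{3\pi(L-2)}{3\pi+(L-2)t}\,dt$, it is enough to establish the pointwise bound $g(t)\ge\tfrac{3\pi(L-2)}{3\pi+(L-2)t}$ for $t\in(0,\pi]$ and then integrate. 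For $L\ge 3$ the quantities $t$, $Q(t)$, and $3\pi+(L-2)t$ are all strictly positive and $(3\pi+(L-2)t)$ divides $Q(t)$; clearing denominators (the identity $(3\pi Q-P)(3\pi+(L-2)t)-3\pi(L-2)tQ=9\pi^2Q-P(3\pi+(L-2)t)$ does the bookkeeping) and cancelling, the pointwise bound becomes the polynomial inequality
\[
9\pi^2\,(3\pi+(L-1)t)\,(3\pi+(L-3)t)\;\ge\;P(t),\qquad t\in(0,\pi].
\]

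This last inequality is immediate: for $t\in[0,\pi]$ each factor obeys $|3\pi-jt|\le 3\pi$ ($j=1,\dots,4$), so $P(t)\le(3\pi)^4=81\pi^4$, while for $L\ge 3$ both $3\pi+(L-1)t\ge 3\pi$ and $3\pi+(L-3)t\ge 3\pi$, making the left side $\ge 9\pi^2\cdot 9\pi^2=81\pi^4$. Integrating the pointwise estimate over $[0,r]$ and adding back the constant term finishes the proof. I expect the only place needing genuine care to be twofold: verifying the cancellation that produces the displayed polynomial inequality, and noticing that the crude bound truly requires $L\ge 3$ — for $L=2$ the factor $3\pi+(L-3)t=3\pi-t$ can dip below $3\pi$ and the estimate $81\pi^4\le 72\pi^4$ fails, which is exactly why the case $L=2$ is peeled off at the outset (and is vacuous anyway). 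I would also briefly note that $\widehat{\psi}=\tfrac n2\sum_\ell\widehat{\xi}_\ell$ is continuous on $[0,\pi]$, so there is no integrability issue near $0$. I do not anticipate any substantive obstacle.
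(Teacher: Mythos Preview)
Your proof is correct. Both you and the paper begin identically, using \Cref{lem:phi_l_ub} to get $\xi_\ell\ge\widehat{\xi}_\ell$ and hence $\psi\ge\widehat{\psi}$, then invoking the closed form \eqref{eq:wh_psi_expr}. From there the arguments diverge only in the algebra used to lower-bound $g(t)=\tfrac{1}{t}\bigl(3\pi-P(t)/Q(t)\bigr)$: the paper performs several further manipulations (splitting off the $(1-4t')$ factor, replacing the denominator by $(1+(L-2)t')^3$, then a partial-fraction decomposition into $\sum_{k=1}^3 L'/(1+L't')^k$) and only at the end drops the two non-logarithmic integrals, whereas you go straight for the pointwise comparison $g(t)\ge 3\pi(L-2)/(3\pi+(L-2)t)$ and kill the resulting polynomial inequality with the crude bound $|P(t)|\le 81\pi^4\le 9\pi^2(3\pi+(L-1)t)(3\pi+(L-3)t)$. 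Your route is shorter and avoids some delicate intermediate steps. The one cost is that your crude polynomial estimate genuinely needs $L\ge 3$, forcing you to peel off $L=2$ separately at the start, while the paper's algebraic chain is written uniformly for $L\ge 2$; this is a cosmetic difference since the $L=2$ case is vacuous anyway.
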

\begin{proof}
  From \Cref{lem:phi_l_ub}, we have $\varphi^{[\ell]}(t) \le \frac{t}{1 + \ell t/(3\pi)}$.
  Thus we get 
  \begin{align*}
    \xi_{ \ell }(t) &= \prod_{\ell' = \ell}^{L-1}(1 - \frac{1}{\pi} \varphi^{[\ell']}(t)) \\
    &\ge \frac{(3\pi + (\ell-3)t)(3\pi + (\ell-2)t)(3\pi + (\ell-1)t)}{(3\pi + (L-3)t)(3\pi +
    (L-2)t)(3\pi + (L-1)t)}.
  \end{align*}
  As a result, we have
  \begin{align*}
    \psi(t) &= \frac{n}{2}\sum_{\ell = 0}^{L-1}\xi_{ \ell }(t) \\
    &\ge \frac{n}{2}\frac{\sum_{\ell = 0}^{L-1}(3\pi + (\ell-3)t)(3\pi + (\ell-2)t)(3\pi + (\ell-1)t)}{(3\pi + (L-3)t)(3\pi + (L-2)t)(3\pi + (L-1)t)} \\
    &=\frac{n}{2} \frac{1}{4t}\frac{\sum_{\ell = 0}^{L-1}(3\pi + (\ell-3)t)(3\pi + (\ell-2)t)(3\pi + (\ell-1)t)(3\pi + \ell t) }{(3\pi + (L-3)t)(3\pi + (L-2)t)(3\pi + (L-1)t)} \\
    &\qquad- \frac{\sum_{\ell=0}^{L-1}(3\pi + (\ell-4)t)(3\pi + (\ell-3)t)(3\pi + (\ell-2)t)(3\pi + (\ell-1)t)}{(3\pi + (L-3)t)(3\pi + (L-2)t)(3\pi + (L-1)t)} \\
    &= \frac{n}{8t}\frac{(3\pi + (L-4)t)(3\pi + (L-3)t)(3\pi + (L-2)t)(3\pi + (L-1) t)}{(3\pi + (L-3)t)(3\pi + (L-2)t)(3\pi + (L-1)t)}\\
    &\qquad- \frac{(3\pi - 4t)(3\pi - 3t)(3\pi - 2t)(3\pi - t)}{(3\pi + (L-3)t)(3\pi + (L-2)t)(3\pi + (L-1)t)} \\
    &= \frac{n}{8t}(3\pi + (L-4)t) - \frac{n}{8t}\frac{(3\pi - 4t)(3\pi - 3t)(3\pi - 2t)(3\pi - t)}{(3\pi + (L-3)t)(3\pi + (L-2)t)(3\pi + (L-1)t)} \\
    &= \frac{n(L-4)}{8} + \frac{n}{8t}\paren{3\pi - \frac{(3\pi - 4t)(3\pi - 3t)(3\pi - 2t)(3\pi - t)}{(3\pi + (L-3)t)(3\pi + (L-2)t)(3\pi + (L-1)t)} }\\
    &\stackrel{t' = \frac{t}{3\pi}}{=} \frac{n(L-4)}{8} + \frac{n}{8 t'}\paren{1 - \frac{(1-4t')(1 - 3t')(1 - 2t')(1- t')}{(1+ (L-3)t')(1 + (L-2)t')(1 + (L-1)t')} } \\
    &\ge \frac{n(L-4)}{8} + \frac{n}{8 t'}\paren{1 - \frac{(1 - 3t')(1 - 2t')(1- t')}{(1 +
    (L-2)t')^3} } \\&\qquad+ \frac{n}{8 t'}\frac{4t'(1 - 3t')(1 - 2t')(1- t')}{(1+ (L-3)t')(1 + (L-2)t')(1 + (L-1)t')} \\
    &\ge  \frac{n(L-4)}{8} + \frac{n}{8 t'}\paren{1 - \frac{1}{(1 + (L-2)t')^3} }\\
    &\stackrel{L' = L-2}{=} \frac{n(L-4)}{8} + \frac{n}{8}\frac{3L' + 3L'^2t' + L'^3 t'^2}{(1 + L't')^3} \\
    &= \frac{n(L-4)}{8} + \frac{n}{8}\paren{\frac{L'}{1 + L't'} + \frac{L'}{(1 + L't')^2} +
    \frac{L'}{(1+L't')^3}} .
  \end{align*}
  In the third and fourth lines above, we used a splitting and cancellation trick to sum similar
  to what we used in \Cref{lem:skel_pi_lb}.  In moving from the seventh to the eighth line, we
  used the inequality $(x-1)(x+1) \leq x^2$ after splitting off a term that can be negative for
  large $t'$.  In moving from the eighth to the ninth line, we used nonnegativity of the third
  summand and upper bounded the numerator of the term in the second summand.  (In both of the
  previous simplifications, we are using that $t' \leq 1/3$.) The remaining simplifications obtain
  a common denominator in the second term and then cancel.  Integrating, we thus find
  \begin{align*}
    \int_{0}^{r} \psi(t) dt &\geq  \frac{n(L-4)}{8} r + \frac{3\pi n}{8}\paren{\log (1 + L' \frac{r}{3\pi}) + \paren{1 - \frac{1}{1 + \frac{L'r}{3\pi}}} + \frac{1}{2} \paren{1 - \frac{1}{(1 + \frac{L'r}{3\pi})^2}}} \\
    &\stackrel{\text{when } L' \ge 0}{\ge} \frac{n(L-4)}{8} r + \frac{3\pi n}{8} \log(1 +
    \frac{L-2}{3\pi}r). 
  \end{align*}
\end{proof}

\begin{lemma}\label{lem:skel_r_lb}
There exists an absolute constant $C>0$ such that when $L \ge 2$, we have for any $r>0$
    \begin{align*}
      \int_{0}^{r} (\psi(t) - \psi(\pi)) dt 
      \ge \frac{3\pi n}{8} \log(1 + \frac{L-2}{3\pi}r) - C n r \log^2 L
\end{align*}
\end{lemma}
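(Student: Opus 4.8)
The plan is to derive this directly by combining the sharp lower bound on $\int_0^r \psi$ from \Cref{lem:skel_int_lb} with the upper bound on the ``DC value'' $\psi(\pi)$ from \Cref{lem:skel_pi_lb}, exploiting that the leading $\Theta(nL)$ contributions from the two estimates cancel almost exactly. First I would write, for any $r > 0$,
\begin{equation*}
  \int_0^r \bigl(\psi(t) - \psi(\pi)\bigr)\,dt = \int_0^r \psi(t)\,dt - r\,\psi(\pi),
\end{equation*}
so that the task reduces to lower bounding $\int_0^r \psi$ and upper bounding $r\,\psi(\pi)$ (note the sign: since $\psi(\pi)$ enters with a minus, an \emph{upper} bound on it yields a \emph{lower} bound on the left-hand side).

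Next I would invoke \Cref{lem:skel_int_lb}, valid for $L \ge 2$, to get $\int_0^r \psi(t)\,dt \ge \frac{n(L-4)}{8}\,r + \frac{3\pi n}{8}\log\bigl(1 + \tfrac{L-2}{3\pi}r\bigr)$, and \Cref{lem:skel_pi_lb} to get $\psi(\pi) \le \frac{n(L-1)}{8} + 6\pi n\,e^{6C_0}\log^2 L$ for the absolute constant $C_0$ appearing there. Substituting both bounds gives
\begin{equation*}
  \int_0^r \bigl(\psi(t) - \psi(\pi)\bigr)\,dt \ge \frac{3\pi n}{8}\log\Bigl(1 + \tfrac{L-2}{3\pi}r\Bigr) + \frac{nr}{8}\bigl((L-4) - (L-1)\bigr) - 6\pi n\,e^{6C_0}\,r\log^2 L,
\end{equation*}
and the middle term simplifies to $-\tfrac{3nr}{8}$ since the $\tfrac{nLr}{8}$ parts cancel. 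This is the one genuine computation in the argument, and it is just bookkeeping.

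Finally I would absorb the residual $-\tfrac{3nr}{8} - 6\pi n\,e^{6C_0}\,r\log^2 L$ into a single term $-Cnr\log^2 L$: since $L \ge 2$ we have $\log^2 L \ge \log^2 2 > 0$, so $\tfrac{3nr}{8} \le \tfrac{3}{8\log^2 2}\,nr\log^2 L$, and hence the claim holds with the absolute constant $C = \tfrac{3}{8\log^2 2} + 6\pi e^{6C_0}$, which depends only on $C_0$ and is therefore absolute. I do not anticipate any real obstacle here; the only point requiring care is tracking the direction of each inequality (lower bound for the integral of $\psi$, upper bound for $\psi(\pi)$) and confirming that the constant produced is genuinely absolute, i.e.\ that it inherits no hidden dependence on $L$, $n$, or $r$ from \Cref{lem:skel_int_lb,lem:skel_pi_lb}.
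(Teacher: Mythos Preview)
Your proposal is correct and follows essentially the same route as the paper: split the integral as $\int_0^r \psi - r\,\psi(\pi)$, apply \Cref{lem:skel_int_lb} and \Cref{lem:skel_pi_lb}, cancel the $nLr/8$ terms to leave $-3nr/8$, and absorb everything into $-Cnr\log^2 L$. Your handling of the absorption step is in fact slightly more careful than the paper's, which tacitly uses $\log^2 L \ge 1$ (needing $L\ge e$) rather than $\log^2 L \ge \log^2 2$ as you do.
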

\begin{proof}
    Following \Cref{lem:skel_int_lb} and \Cref{lem:skel_pi_lb}, we directly get
\begin{align*}
\int_{0}^r(\psi(t) - \psi(\pi)) dt  &\ge \int_{0}^r \psi(t) dt - \paren{\frac{n(L-1)}{8} + 6 \pi n e^{6C_0} \log^2 L} r \\
&\ge  \frac{3\pi n}{8} \log(1 + \frac{L-2}{3\pi}r) -6 \pi n e^{6C_0}\log^2 L r -\frac{3n}{8} r \label{eq:skeleton_lb}\\
&\ge \frac{3\pi n}{8} \log(1 + \frac{L-2}{3\pi}r) - \paren{6 \pi e^{6C_0} +\frac{3}{8}} n r \log^2 L
\end{align*}

\end{proof}

\subsection{Nearly-Matching Upper Bound}
\label{sec:approx_upper_bound}
\begin{lemma}\label{lem:skel_r_ub}
There exist absolute constants $C,C'>0$ and absolute constants $K, K'>0$ such that for any $0 < \veps \leq 1$, 
if $L \geq K\veps^{-3}$ then for any $0 \leq t \leq K' \veps^3$ one has
    \begin{align*}
        \psi(t) - \psi(\pi) \le (1 + \veps)\paren{1 + \frac{18}{1 + (L-3)t/(3\pi)} + \frac{C\log^2(L)}{L}}\frac{n}{8}\frac{L-3}{1+(L-3)t/(3\pi)} 
    \end{align*}
and for any $0 \leq t \leq \pi$ one has
    \begin{align*}
        \psi(t) - \psi(\pi) \le C'n\frac{L-3}{1+(L-3)t/(3\pi)}. 
    \end{align*}
\end{lemma}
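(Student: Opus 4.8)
The plan is to prove the two inequalities separately, starting from the weaker second bound. For the second bound I would write $\psi(t)-\psi(\pi)=\int_t^{\pi}(-\dot\psi(s))\,ds$ (valid since $\psi$ is decreasing, by \Cref{lem:phi_concave}) and combine two estimates: for $t\le 3\pi/(L-3)$ use the trivial $\psi(t)-\psi(\pi)\le \psi(0)-\psi(\pi)\le nL/2$ (here $\psi(0)=nL/2$ and $\psi(\pi)\ge\wh\psi(\pi)=n(L-1)/8$, the latter from the pointwise bound $\varphi^{[\ell]}\le\wh\varphi^{[\ell]}$ of \Cref{lem:phi_l_ub}); for $t> 3\pi/(L-3)$ use $|\dot\psi(s)|\le Cn/s^2$ (\Cref{lem:skel_dot_ub}) to get $\psi(t)-\psi(\pi)\le Cn/t$. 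In each range one checks directly that $C'n\,\tfrac{L-3}{1+(L-3)t/(3\pi)}$ dominates the relevant quantity.

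For the sharp bound I would decompose $\psi(t)-\psi(\pi)=\big[\wh\psi(t)-\wh\psi(\pi)\big]+E$ with $E=\big[\psi(t)-\wh\psi(t)\big]-\big[\psi(\pi)-\wh\psi(\pi)\big]$. The first bracket is exact: from \Cref{eq:wh_psi_expr} one has $\wh\psi(\pi)=n(L-1)/8$, and pulling the factor $(1-t/(3\pi))$ out of the numerator (the same telescoping device as in \Cref{lem:skel_int_lb}) gives $\wh\psi(t)-\wh\psi(\pi)=\tfrac{n}{8}\big(1-\tfrac{t}{3\pi}\big)\cdot\tfrac{3(L+1)+At'+Bt'^2}{(1+(L-3)t')(1+(L-2)t')(1+(L-1)t')}$ with $t'=t/(3\pi)$ and $A,B$ explicit polynomials in $L$. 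Using $(1-t')\le1$, $(1+(L-2)t')(1+(L-1)t')\ge(1+(L-3)t')^2$ (so the denominator is $\ge(1+u)^3$ with $u:=(L-3)t/(3\pi)$), and the identities $(L-k)t'=u+(3-k)t'$, one bounds the $Bt'^2$ term by $\tfrac{L-3}{1+u}+24t'^2$ (the $24t'^2$ being absorbed by the $(1+\veps)$ since $t'\le K'\veps^3$) and the $3(L+1)$ and $At'$ terms each by an absolute-constant multiple of $\tfrac{L-3}{(1+u)^2}\le\tfrac{1}{1+u}\cdot\tfrac{L-3}{1+u}$, giving $\wh\psi(t)-\wh\psi(\pi)\le(1+\veps)\big(1+\tfrac{18}{1+u}\big)\tfrac{n}{8}\tfrac{L-3}{1+u}$.

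It then remains to absorb $E$ into the $(1+\veps)$-slack and the $\tfrac{C\log^2 L}{L}$ term, and this is where the work concentrates. I would split on $u$. If $u\le1$ then $\tfrac{L-3}{1+u}\gtrsim L-3$, so the crude $|E|\le 2\cdot 6\pi n e^{6C_0}\log^2 L$ (from \Cref{lem:phi_lower_bound} and \Cref{lem:skel_pi_lb}, using $\psi\ge\wh\psi$ pointwise) is dominated by $(1+\veps)\tfrac{C\log^2 L}{L}\tfrac{n}{8}\tfrac{L-3}{1+u}\gtrsim Cn\log^2 L$ once $C$ is a large enough absolute constant. If $u>1$ (so $L$ is large, $L\gtrsim 1/(K'\veps^3)$) the additive $\log^2 L$ comparison is hopelessly lossy, and one instead works with $\psi(t)-\psi(\pi)=\int_t^{\pi}(-\dot\psi(s))\,ds$, expands $-\dot\psi(s)=\tfrac{n}{2\pi}\sum_{\ell'=0}^{L-1}\tfrac{\dot\varphi^{[\ell']}(s)}{1-\varphi^{[\ell']}(s)/\pi}\sum_{\ell=0}^{\ell'}\xi_{\ell}(s)$, and estimates the summand using the \emph{sharp} inputs $\dot\varphi^{[\ell']}(s)\le(1+\veps)(1+\ell's/(3\pi))^{-2}$ and $\xi_{\ell'}(s)\le(1+\veps)\wh\xi_{\ell'}(s)$ for $\ell'$ and $s^{-1}$ large enough (both from \Cref{lem:phi_lower_bound}), together with $\sum_{\ell\le\ell'}\xi_\ell(s)\le(1+\veps)\tfrac{1+\ell's/(3\pi)}{4s/(3\pi)}\,\xi_{\ell'}(s)$ (the telescoping sum over cubics again); the sum over $\ell'$ then telescopes and integration recovers exactly $(1+O(\veps))\tfrac{n}{8}\tfrac{L-3}{1+u}$, while the finitely many boundary terms (small $\ell'$, small $\ell$) and the tail $\int_{C'\veps^2}^{\pi}$ where the sharp $\dot\varphi$-estimate is unavailable are bounded crudely and absorbed by choosing $K'$ a small enough absolute constant. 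The main obstacle is precisely this $u>1$ regime: pinning the leading constant to $1+\veps$ for large $L$ requires the pointwise estimates on $\dot\varphi^{[\ell]}$ and, implicitly, the near-cancellation of the $\Theta(n\log L)$-sized errors $\psi(\cdot)-\wh\psi(\cdot)$ between $t$ and $\pi$, neither of which survives the crude $\psi\le\wh\psi+Cn\log^2 L$ bound.
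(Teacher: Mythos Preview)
Your approach is essentially correct and shares the same technical core as the paper's, but the organization differs in a way worth noting.

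For the crude bound, your split on $t\lessgtr 3\pi/(L-3)$ with $\psi(0)=nL/2$ on the left and $\abs{\dot\psi}\le Cn/s^2$ on the right is fine and slightly more elementary than what the paper does; the paper instead obtains the crude bound as a byproduct of its sharp computation, simply replacing the factor $M_t=1+\veps$ by the absolute constant $M_t=C_1$ (valid for all $t$) and carrying the rest of the algebra through.

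For the sharp bound, the paper does \emph{not} decompose $\psi-\wh\psi$ or split on $u\lessgtr1$. It works directly with $\dot\xi_\ell$: writing $\abs{\dot\xi_\ell}=\tfrac{\xi_\ell}{\pi}\sum_{\ell'\ge\ell}\tfrac{\dot\varphi^{[\ell']}}{1-\varphi^{[\ell']}/\pi}$, it inputs the sharp $\dot\varphi^{[\ell']}\le M_t/(1+c\ell't)^2$ and the sharp $\xi_\ell\le c_{1,\ell}\wh\xi_\ell$ from \Cref{lem:phi_lower_bound}, evaluates the inner sum by the integral test, and then sums the resulting $(1+c(\ell-2)t)(1+c(\ell-1)t)(L-\ell)$ over $\ell$ explicitly (the $c_{1,\ell}-1$ corrections produce the $C\log^2(L)/L$ term). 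Integrating this pointwise bound on $\sum_\ell\abs{\dot\xi_\ell}$ from $t$ to a cutoff $t_1=C'\veps^2$ gives the sharp piece (since $M_{t_1}=1+\veps$), and the tail $\psi(t_1)-\psi(\pi)\lesssim n/\veps^2$ is absorbed into the $\veps$-slack precisely because $t\le K'\veps^3$ and $L\ge K\veps^{-3}$.

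Your $u>1$ argument is the same computation after swapping the order of summation (Fubini), so nothing is lost there. Your $\wh\psi$ decomposition for $u\le1$ is a genuine shortcut---it replaces the careful $\sum_\ell\abs{\dot\xi_\ell}$ bookkeeping by an exact formula plus the crude $\psi-\wh\psi\lesssim n\log^2 L$---but since you still need the full integration argument for $u>1$, the shortcut does not save you from doing the paper's work. The paper's single unified argument (no case split on $u$) is cleaner; your route trades that uniformity for an easier $u\le1$ case. Both are correct. One small point: the bound $\sum_{\ell\le\ell'}\xi_\ell\le(1+\veps)\tfrac{1+\ell's'}{4s'}\xi_{\ell'}$ does not follow immediately from $\xi_{\ell'}\le(1+\veps)\wh\xi_{\ell'}$ for large $\ell'$ alone (the small-$\ell$ terms have larger multiplicative errors); you correctly flag that these are handled separately, but in practice they contribute an additive $O(\log^2 L)$ rather than a finite list of terms, exactly as in the paper's $c_{1,\ell}-1$ split.
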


\begin{proof}
	We try to control the DC subtracted skeleton $\psi(t) - \psi(\pi)$ by its derivative $\dot{\psi}(t)$, which would require us to control the derivatives $\dot{\xi}_{\ell}(t)$ and further $\dot{\varphi}^{[\ell]}(t)$. Fix $0 < \veps \le 1/2$. When $L \ge C_0 \veps^{-2}$ for some constant $C_0 > 0$, \Cref{lem:phi_lower_bound} provides sharp bound for $\dot{\varphi}^{[\ell]}(t)$ with
	\begin{align*}
	\dot{\varphi}^{[\ell]}(t) &\le \frac{1 + \veps}{(1 + c\ell t)^2} \qquad t \in [0, C'\eps^2]  \\
	\dot{\varphi}^{[\ell]}(t) &\le \frac{C_1}{(1 + c\ell t)^2} \qquad t \in [0, \pi] 
	\end{align*}
	with absolute constants $C', C_1 > 0$ and $c = 1/(3\pi)$. For notation convenience, define $t_1 = C'\veps^2$ and write
	\begin{equation*}
	M_t = \begin{cases}
	1+\veps & 0 \leq t \leq t_1 \\
	C_1 & \ow. 
	\end{cases}
	\end{equation*}
	We can compactly write the previous two bounds together as
	\begin{equation*}
	\dot{\varphi}^{[\ell]}(t) \le \frac{M_t}{(1 + c\ell t)^2}. 
	\end{equation*}
	This allows us to separate $\psi(t) - \psi(\pi)$ into two components $\psi(t) - \psi(t_1)$ and $\psi(t_1) - \psi(\pi)$, where we get the correct constant $1 + \eps$ in the first component and control the second component by the fact that $\psi$ becomes sharp when $L$ is large, making the difference between $\psi(t_1)$ and $\psi(\pi)$ negligible 
	
	Now, for $\ell \ge 4$, with $c =\frac{1}{3\pi}$, we have 
	\begin{align*}
	\abs{\dot{\xi}_{ \ell }(t)} &= \frac{\xi_{\ell}(t)}{\pi}\sum_{\ell' = \ell}^{L-1}
	\frac{\dot{\varphi}^{[\ell']}}{1 - \varphi^{[\ell']}/\pi} \\
	&\le \frac{\xi_{\ell}(t)}{\pi}\sum_{\ell' = \ell}^{L-1}  \frac{\dot{\varphi}^{[\ell']}}{1 - \frac{t/\pi}{1 + c\ell' t}} \\
	&= \frac{\xi_{\ell}(t)}{\pi}\sum_{\ell' = \ell}^{L-1}  \frac{1 + c\ell' t}{1 + c (\ell' - 3)
		t} \dot{\varphi}^{[\ell']} \\
	&\le \frac{\xi_{\ell}(t)}{\pi}\sum_{\ell' = \ell}^{L-1}  \frac{1 + c\ell' t}{1 + c (\ell' - 3) t} \frac{M_t}{(1+c\ell' t)^2} 
	\end{align*}
	Let $c_{1, \ell} = 1 + e^{6C_0} \frac{\log(1 + \ell)}{\ell}$. \Cref{eq:xi_niceub} and \Cref{eq:wh_xi_expr} provide control for $\xi_{\ell}(t)$ and we have
	\begin{align*}
	\abs{\dot{\xi}_{ \ell }(t)} 
	&\le \frac{c_{1, \ell} }{\pi}\frac{(1+c(\ell-3)t)(1+c(\ell-2)t)(1+c(\ell-1)t)}{(1+c(L-3)t)(1+c(L-2)t)(1+c(L-1)t)}\sum_{\ell' = \ell}^{L-1}  \frac{1 + c\ell' t}{1 + c (\ell' - 3) t} \frac{M_t}{(1+c\ell' t)^2} \\
	&\le \frac{c_{1, \ell}M_t}{\pi}\frac{(1+c(\ell-3)t)(1+c(\ell-2)t)(1+c(\ell-1)t)}{(1+c(L-3)t)(1+c(L-2)t)(1+c(L-1)t)}\int_{\ell' = \ell-1}^{L-1}  \frac{1}{(1 + c (\ell' - 2) t)^2} d\ell'\\
	&\le \frac{c_{1, \ell}M_t}{\pi}\frac{(1+c(\ell-3)t)(1+c(\ell-2)t)(1+c(\ell-1)t)}{(1+c(L-3)t)(1+c(L-2)t)(1+c(L-1)t)} \frac{L-\ell}{(1+c(L-3)t)(1+c(\ell-3)t)}\\
	&\le
	\frac{M_t}{\pi}\frac{(1+c(\ell-2)t)(1+c(\ell-1)t)(L-\ell)}{(1+c(L-3)t)^2(1+c(L-2)t)(1+c(L-1)t)}
	\\ &\qquad+ \frac{C \log(1+\ell)/\ell}{\pi}\frac{L}{(1+c(L-3)t)^2} .
	\end{align*}
	In moving from the fifth to the sixth line, we used that
	$(1 + c(\ell - 3)t)(1 + c(\ell)t) = 1 + c(2\ell - 3)t + c^2(\ell - 3)\ell t^2$ $\ge 1 + c(2\ell
	- 4)t + c^2(\ell^2 - 4\ell + 4) t^2  = (1 + c(\ell - 2)t)^2$ provided $\ell \geq 4$ and
	subsequently the integral test. In the splitting in the last line, we used that $M_t$ is always
	bounded by a (very large) absolute constant, and worst-cased (as this term will be sub-leading
	in $L$).
	
	To control derivatives of $\psi$, we need to control sums of the derivatives above. We will
	derive some further estimates for this purpose. First, we calculate
	\begin{align*}
	&\sum_{\ell = 1}^{L-1} (1+c(\ell-2)t)(1+c(\ell-1)t)(L-\ell) \\
	&\qquad= \sum_{\ell = 1}^{L-1} \paren{(L-\ell) + (L-\ell)(2\ell -3)ct + (L-\ell)(\ell - 1)(\ell - 2)c^2t^2}\\
	&\quad= \frac{L(L-1)}{2} + \frac{L(L-1)(L-\frac{7}{2})}{3}ct + \frac{L(L-1)(L-2)(L-3)}{12}c^2t^2 \\
	&\quad\le \paren{\frac{\frac{L(L-1)}{2} + \frac{L(L-1)(L-\frac{7}{2})}{3}ct}{\frac{L(L-3)}{12}(1+c(L-1)t)(1+c(L-2)t)} + 1}\frac{L(L-3)}{12}(1+c(L-1)t)(1+c(L-2)t)\\
	&\quad\le \paren{\frac{6\frac{L-1}{L-3} + 4 ct(L-1)}{(1+c(L-1)t)(1+c(L-2)t)} +
	1}\frac{L(L-3)}{12}(1+c(L-1)t)(1+c(L-2)t).
	\end{align*}
	If $L \ge 4$, we can simplify a term in the last line of the previous expression as
	\begin{align*}
	\frac{6\frac{L-1}{L-3} + 4 ct(L-1)}{(1+c(L-1)t)(1+c(L-2)t)} &\le \frac{18}{1 + c(L-1)t} ,
	\end{align*}
	and under $L \geq 4$ we also have
	\begin{align*}
	\sum_{\ell = 2}^{L-1} \frac{\log(1 + \ell)}{\ell} &\ltsim \log(L)\int_{\ell =
		1}^{L-1}\frac{1}{\ell} \ltsim\log^2 L.
	\end{align*}
	Applying the upper bound from before and adding some terms to the sum (because all terms are
	nonnegative), we get
	\begin{align}
	\pi\sum_{\ell = 4}^{L-1}\abs*{\dot{\xi}_{ \ell }(t)} \le \paren{1+ \frac{18}{1 +
		c(L-1)t}}\frac{L(L-3)}{12}\frac{M_t}{(1+c(L-3)t)^2} + \frac{C L \log^2 L}{(1+c(L-3)t
		)^2}.\label{eq:xi_dot_sum_ub}
	\end{align}
	From \Cref{lem:xi_l_old_ub}, for $\ell = \{0, 1, 2, 3\}$, we can bound $\xi_{ \ell }(t) \le \frac{1 + \ell t/\pi}{1 + Lt/\pi}$. 
	Using that $\xi_{ \ell }$ is decreasing for all $\ell \geq 0$ and nonnegative, for $t, t' \in [0, \pi]$, $t' \ge t$, 
	we are now able to control the DC subtracted skeleton as
	\begin{align*}
	\psi(t) - \psi(t') &\le \frac{n}{2}\frac{4+(0+1+2+3) t/\pi}{1+Lt/\pi} -\frac{n}{2}\sum_{\ell = 4}^{L-1} \int_{v = t}^{t'} \dot\xi_{ \ell }(v)dv \\
	&\le \frac{2 + 3t/\pi}{1+Lt/\pi} n \\&\quad+ \frac{n}{2\pi}\int_{v = t}^{t'} \paren{ \paren{1+ \frac{18}{1 + c(L-1)\nu}}\frac{M_\nu L(L-3)/12}{(1+c(L-3)\nu)^2} + \frac{C L \log^2 L}{(1+c(L-3)\nu )^2}} dv  \\
	&\le \frac{2 + 3t/\pi}{1 + Lt/\pi} n + \frac{n}{2\pi}\int_{v = t}^{t'}  \frac{L \paren{ (1+ \frac{18}{1 + c(L-1)t})\frac{M_{t'}(L-3)}{12} + C \log^2 L}}{(1+c(L-3)\nu)^2}  dv \\
	&\le \frac{2 + 3t/\pi}{1 + Lt/\pi} n \\&\quad + \frac{n}{2\pi} L \paren{ \paren{1+ \frac{18}{1 + c(L-1)t}}\frac{M_{t'}(L-3)}{12} + C \log^2 L} \int_{v = t}^{\pi} \frac{1}{(1+c(L-3)\nu)^2}  dv. 
	\end{align*}
	From the second to the third line, we use the fact that $M_{\nu}$ is nondecreasing in $\nu$. 
	Thus we have   \begin{align*}
	\psi(t) - \psi(t') &\le \frac{2 + 3t/\pi}{1+Lt/\pi} n + \frac{n}{2\pi}   \frac{L \paren{(1+ \frac{18}{1 + c(L-1)t} ) \frac{M_{t'} (L-3)}{12} + C \log^2(L)}\nu}{1 + c(L-3)\nu} \Big|_{\nu = t}^{\pi} \\
	&\le  \frac{5n}{1+Lt/\pi} + \frac{n}{2}   \frac{L \paren{(1+ \frac{18}{1 + c(L-1)t} ) \frac{M_{t'} (L-3)}{12} + C \log^2(L)}}{(1 + c(L-3)t)(1 + c(L-3)\pi)} \\
	&\le  \frac{5n }{1+Lt/\pi}  + \frac{L-3}{1 + c(L-3)\pi}\frac{M_{t'} n}{24}\frac{L \left(1+ \frac{18}{1 + c(L-1)t} + \frac{12C \log^2(L)}{L-3}\right)}{1+c(L-3)t} \\
	&\le  \frac{n}{8}\frac{L M_{t'}}{1+c(L-3)t} \left(1+ \frac{18}{1 + c(L-1)t} + \frac{12C \log^2 L}{L-3} \right) + \frac{5n}{1+Lt/\pi} \\
	&\le \frac{nM_{t'}}{8}\frac{L-3}{1+c(L-3)t} \paren{1 +  \frac{18}{1 + c(L-1)t} + \frac{C \log^2(L) }{L-3}}\\
	&\le \frac{nM_{t'} }{8}\frac{L-3}{1+c(L-3)t} \paren{1 + \frac{18}{1 + c(L-3)t} + \frac{C \log^2(L) }{L}}. 
	\end{align*}
	In moving from the second to the third line, we simplified/rearranged and used that $C_1 \geq 1$.
	In moving from the fourth to the fifth line, we replace the numerator of $L$ in the leading term with $L - 3 + 3$, then expand and simplify. 
	In particular we have 
	\begin{align*}
	\psi(t_1) - \psi(\pi)  &\le \frac{nC_1 }{8}\frac{1}{c t_1} \paren{19 + \frac{C \log^2(L) }{L}} \\
	&\le \frac{C_1C'(19 + C)}{8c \eps^2}n
	\end{align*}
	and 
	\begin{align*}
	\psi(t) - \psi(t_1)  &\le \frac{n(1 + \eps)}{8}\frac{1}{c t} \paren{1 + \frac{18}{1 + c(L-3)t} + \frac{C \log^2(L) }{L}} \\
	&\le \frac{n(1 + \eps)}{8}\frac{L-3}{1 + c (L-3) t} \paren{1 + \frac{18}{1 + c(L-3)t} + \frac{C \log^2(L) }{L}}. 
	\end{align*}
	This inequality holds for all $t$ because when $t \ge t_1$, the left hand side is negative. 
	 Notice that when $t \le \eps^3/(2C_1C'(19 + C))$ and $L-3 \ge C_1C'(19 + C)/(c\eps^3)$, we would have
	\begin{align*}
	\frac{C_1C'(19 + C)}{8c \eps^2}n &= \eps \frac{n}{8}\frac{L-3}{2c(L-3)\eps^3/(2C_1C'(19 + C))} \\
	&\le  \eps \frac{n}{8}\frac{L-3}{1+c(L-3) \eps^3/(2C_1C'(19 + C))} \\
	&\le \frac{n\eps}{8}\frac{L-3}{1+c(L-3)t}. 
	\end{align*}
	Thus when $L \ge C_1C'(19 + C)/(c\eps^3) + 3$, for $t \le \eps^3/(2C_1C'(19 + C))$, 
	\begin{align*}
	\psi(t) - \psi(\pi) &=\psi(t) - \psi(t_1) + \psi(t_1) - \psi(\pi) \\
	&\le \frac{n(1 + 2\eps) }{8}\frac{L-3}{1+c(L-3)t} \paren{1 + \frac{18}{1 + c(L-3)t} + \frac{C \log^2(L) }{L}}. 
	\end{align*}
	Combining the results and notice that we can absorb the factor of $2$ into constants by defining $\veps' = 2\veps$ would give us the claim. 
	
\end{proof}

\begin{lemma}\label{lem:skel_l1_ub}
There exist absolute constants $C, C' > 0$ and $K, K' > 0$ such that for any $0 < \veps \leq 1$, if $L \geq K \veps^{-3}$, for any $0 \leq a \leq b \leq \pi$, one has
    \begin{align}\label{eq:skel_l1_ub_a_to_b}
        \int_a^{b} (\psi(t) - \psi(\pi)) dt  &\le  Cn \log\paren{\frac{1+(L-3)b/(3\pi)}{1+(L-3)a/(3\pi)}}.
    \end{align}
    And if $r > 0$ satisfies $r \leq K' \veps^3$, one further has
    \begin{align}\label{eq:skel_l1_ub_r_to_b}
        \int_r^{b} (\psi(t) - \psi(\pi)) dt  &\le  
        (1 + \veps)\frac{3\pi n}{8} \log\paren{\frac{1+(L-3)b/(3\pi)}{1+(L-3)r/(3\pi)}}
         + C' n \log(\pi / (K' \veps^3)). 
    \end{align}
\end{lemma}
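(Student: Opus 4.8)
The plan is to deduce both inequalities by integrating the two pointwise upper bounds for $\psi(t)-\psi(\pi)$ supplied by \Cref{lem:skel_r_ub}. For \eqref{eq:skel_l1_ub_a_to_b} I would simply use the global bound $\psi(t)-\psi(\pi)\le C'n\,\frac{L-3}{1+(L-3)t/(3\pi)}$, valid on all of $[0,\pi]$ once $L$ exceeds an absolute constant (implied by $L\ge K\veps^{-3}$ since $\veps\le1$). Substituting $u=1+(L-3)t/(3\pi)$ gives $\int_a^b(\psi(t)-\psi(\pi))\,dt\le 3\pi C'n\log\paren{\frac{1+(L-3)b/(3\pi)}{1+(L-3)a/(3\pi)}}$, and absorbing $3\pi$ into the constant yields \eqref{eq:skel_l1_ub_a_to_b}.

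For \eqref{eq:skel_l1_ub_r_to_b} I would set $t_\star=K'\veps^3$, first shrinking our $K'$ and enlarging our $K$ so that $t_\star$ lies below the threshold in the sharp bound of \Cref{lem:skel_r_ub} and so that $\pi/(K'\veps^3)\ge e$ for every $\veps\in(0,1]$. Write $\int_r^b=\int_r^m+\int_m^b$ with $m=\min\{b,t_\star\}$ (the second integral empty when $b\le t_\star$). On $[r,m]\subseteq[0,t_\star]$ I apply the sharp pointwise bound
\[
\psi(t)-\psi(\pi)\le(1+\veps)\frac{n}{8}\paren{1+\frac{18}{1+(L-3)t/(3\pi)}+\frac{C\log^2 L}{L}}\frac{L-3}{1+(L-3)t/(3\pi)}.
\]
Integrating the leading ``$1$'' against $\frac{L-3}{1+(L-3)t/(3\pi)}$ produces exactly $(1+\veps)\frac{3\pi n}{8}\log\paren{\frac{1+(L-3)m/(3\pi)}{1+(L-3)r/(3\pi)}}$, which is at most the target leading term since $m\le b$. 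The ``$18/(\cdot)$'' summand contributes a total-derivative integral bounded by $54\pi$, hence $O(n)$; the ``$\frac{C\log^2 L}{L}$'' summand contributes at most $\frac{C\log^2 L}{L}\cdot 3\pi\log(1+(L-3)/3)\ltsim \frac{\log^3 L}{L}\,n=O(n)$. On $[t_\star,b]$, when nonempty, I use the global bound once more: $\int_{t_\star}^b\le 3\pi C'n\log\paren{\frac{1+(L-3)b/(3\pi)}{1+(L-3)t_\star/(3\pi)}}$, and since $b\le\pi$ and $1+(L-3)t_\star/(3\pi)\ge(L-3)t_\star/(3\pi)$ the logarithm is at most $\log\paren{\frac{3\pi(1+(L-3)/3)}{(L-3)t_\star}}\le\log(C/t_\star)\ltsim\log(\pi/(K'\veps^3))$, an $O\paren{n\log(\pi/(K'\veps^3))}$ contribution. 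As $\pi/(K'\veps^3)\ge e$, the quantity $n\log(\pi/(K'\veps^3))$ also dominates the $O(n)$ leftovers, so collecting all leftover contributions into a single $C'n\log(\pi/(K'\veps^3))$ finishes \eqref{eq:skel_l1_ub_r_to_b}.

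The step I expect to be the main obstacle is the constant bookkeeping: keeping the leading coefficient exactly $(1+\veps)\frac{3\pi n}{8}$ while sweeping everything else into $C'n\log(\pi/(K'\veps^3))$. The delicate point is the ``$\frac{C\log^2 L}{L}$'' factor in the sharp bound; to keep the leading term clean one applies \Cref{lem:skel_r_ub} with $\veps'=\veps/3$ (so our $K$ is enlarged by a factor $27$) and observes that $L\ge 27K\veps^{-3}$ forces $\frac{C\log^2 L}{L}\le\veps'$ for $\veps$ below an absolute constant, using that $x\mapsto x^{-1}\log^2 x$ is eventually decreasing and $\veps^2\log^2(1/\veps)=o(1)$; for $\veps$ above that constant the crude $O(n)$ estimate already suffices, absorbed into $C'n\log(\pi/(K'\veps^3))$. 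One then checks that $\log(C/(K'\veps^3))$ and $\log(\pi/(K'\veps^3))$ are comparable uniformly in $\veps\in(0,1]$ (their ratio is bounded and tends to $1$ as $\veps\to0$), so the coarse piece on $[t_\star,b]$ is absorbed as well. Beyond this accounting the argument uses only elementary integration and the monotonicity and decay facts already established, so I do not anticipate conceptual difficulty.
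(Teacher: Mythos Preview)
Your proposal is correct and follows essentially the same approach as the paper: integrate the two pointwise bounds of \Cref{lem:skel_r_ub}, split the range at $t_\star=K'\veps^3$, and absorb the $O(n)$ leftovers from the $18/(\cdot)$ and $\frac{C\log^2 L}{L}$ summands into $C'n\log(\pi/(K'\veps^3))$. Your ``main obstacle'' paragraph overcomplicates things slightly: you already observed in the preceding paragraph that the $\frac{C\log^2 L}{L}$ contribution integrates to at most $\frac{\log^3 L}{L}\,n$, which is $O(n)$ uniformly and hence absorbed---no rescaling to $\veps'=\veps/3$ is needed, and the paper does not do it either.
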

\begin{proof}
\Cref{eq:skel_l1_ub_a_to_b} follows directly from \Cref{lem:skel_r_ub} and integration. 
To achieve an upper bound for integral from $r$ to $b$, we cut the integral at $t_1 = K' \veps^3$ and apply bounds from \Cref{lem:skel_r_ub} separately. Specifically, set $b' = \min\set{b, t_1}$, from \Cref{lem:skel_r_ub} and \Cref{eq:skel_l1_ub_a_to_b} we would have 
\begin{align*}
    \int_r^{b} (\psi(t) - \psi(\pi)) dt  &=  \int_r^{b'} (\psi(t) - \psi(\pi)) dt + \int_{b'}^{b} (\psi(t) - \psi(\pi)) dt \\
    &\le (1 + \veps)\paren{1 + \frac{C \log^2(L) }{L}}\frac{3\pi n}{8} \log\paren{\frac{1+(L-3)b'/(3\pi)}{1+(L-3)r/(3\pi)}} \\ 
    &\qquad+ \int_r^b \frac{18n(1+ \veps)}{8} \frac{L-3}{\paren{1 + (L-3)t/(3\pi)}^2}dt + C' n \log\paren{\frac{1+(L-3)b/(3\pi)}{1+(L-3)b'/(3\pi)}} \\
    &\le (1 + \veps)\paren{1 + \frac{C \log^2(L) }{L}}\frac{3\pi n}{8} \log(\frac{1+(L-3)b/(3\pi)}{1+(L-3)r/(3\pi)}) + \frac{9n}{2(3\pi)^2} + C' n \log(b / b') \\
    &\le (1 + \veps)\frac{3\pi n}{8} \log\paren{\frac{1+(L-3)b/(3\pi)}{1+(L-3)r/(3\pi)}} + 2\frac{C \log^2(L) }{L}\frac{3\pi n}{8} \log\paren{1+(L-3)\pi/(3\pi)} \\
    &\qquad + \frac{n}{2\pi^2} + C' n \log(\pi / t_1) \\
    &\le (1 + \veps)\frac{3\pi n}{8} \log\paren{\frac{1+(L-3)b/(3\pi)}{1+(L-3)r/(3\pi)}} + \frac{C \log^3(L) }{L} + C' n \log(\pi / \paren{K' \eps^{-3}}). 
\end{align*}
\Cref{eq:skel_l1_ub_r_to_b} then follows by setting $L \ge K \eps^{-3}$ for some $K > 0$. 
\end{proof}

\subsection{Higher Order Derivatives of \texorpdfstring{$\psi$}{the Skeleton}}
\label{sec:skel_deriv_bounds}
\begin{lemma} \label{lem:skel_dot_ub}%
  There exist absolute constants $C, C'$ such that when $L \ge C$, we have
  for any $r \in [0, \pi]$, 
  \begin{align}
    \max_{t \ge r} \abs*{\dot{\psi}(t)} \le \frac{C' n}{r^2}
    \label{eq:skel_dot_ub_pointwise}
  \end{align} 
  and we can control the integration
  \begin{align}
    \int_{t = 0}^{r} t^3 \abs*{\dot{\psi}(t)} dt &\le C' n r^2 
    \label{eq:skel_dot_ub_l1}
  \end{align}
\end{lemma}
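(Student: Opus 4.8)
The plan is to deduce both estimates from the already-established sharp upper bound on the skeleton itself, \Cref{lem:skel_r_ub}, together with the convexity and monotonicity of $\psi$. Recall from \Cref{lem:phi_concave} that $\psi^\circ$ --- and hence $\psi = \psi^\circ + \psi(\pi)$, which has the same derivative --- is $\mathcal{C}^3$ on $[0,\pi]$, strictly decreasing, and convex. Consequently $\dot\psi < 0$ and $\dot\psi$ is nondecreasing, so $t \mapsto \abs{\dot\psi(t)}$ is nonincreasing on $(0,\pi]$; in particular $\max_{t\ge r}\abs{\dot\psi(t)} = \abs{\dot\psi(r)}$, and it suffices to bound $\abs{\dot\psi(r)}$ pointwise.

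For the pointwise bound \Cref{eq:skel_dot_ub_pointwise}, I would use convexity in the form of the secant--tangent inequality. For $0 < r \le \pi$, the mean value theorem on $[r/2,r]$ yields $\xi \in (r/2,r)$ with $\dot\psi(\xi) = \bigl(\psi(r) - \psi(r/2)\bigr)/(r/2)$, and since $\dot\psi$ is nondecreasing, $\dot\psi(r) \ge \dot\psi(\xi)$, whence
\[
  \abs{\dot\psi(r)} = -\dot\psi(r) \le \frac{2\bigl(\psi(r/2) - \psi(r)\bigr)}{r} \le \frac{2\bigl(\psi(r/2) - \psi(\pi)\bigr)}{r},
\]
the last step using that $\psi$ is decreasing. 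Now invoke the all-angle bound of \Cref{lem:skel_r_ub} (taking $\veps = 1$ there, which makes it valid once $L$ exceeds an absolute constant) at the argument $r/2 \in (0,\pi/2]$: $\psi(r/2) - \psi(\pi) \le C'n\,(L-3)/\bigl(1 + (L-3)r/(6\pi)\bigr)$. Since $u/(1+u/(6\pi)) \le 6\pi$ for every $u \ge 0$, we get $(L-3)/\bigl(1+(L-3)r/(6\pi)\bigr) \le 6\pi/r$, and therefore $\abs{\dot\psi(r)} \le 12\pi C' n / r^2$, which is \Cref{eq:skel_dot_ub_pointwise} with absolute constant $12\pi C'$.

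The integral bound \Cref{eq:skel_dot_ub_l1} is then immediate: applying the pointwise bound just proved at every $t \in (0,r]$ (legitimate since $(0,r] \subseteq (0,\pi]$),
\[
  \int_0^r t^3\,\abs{\dot\psi(t)}\,dt \;\le\; \int_0^r t^3 \cdot \frac{C''n}{t^2}\,dt \;=\; C''n\int_0^r t\,dt \;=\; \frac{C''n r^2}{2},
\]
with $C'' = 12\pi C'$; the integrand is genuinely integrable near $0$ (indeed $t^3\abs{\dot\psi(t)} = O(nt)$ there), so no care at the lower limit is needed.

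The only real content is this reduction; there is essentially no hard step, because the delicate part --- summing the per-layer derivative contributions $\dot\xi_\ell$, controlled via the sharp estimates for $\dot\varphi^{[\ell]}$ in \Cref{lem:phi_lower_bound} and the closed form of $\wh\xi_\ell$ --- is already done inside the proof of \Cref{lem:skel_r_ub} (cf.\ the displayed bound there on $\sum_\ell \abs{\dot\xi_\ell}$). A fully self-contained alternative would be to reprove that sum bound here and integrate $t^3$ against it directly; the main obstacle on that route is precisely the bookkeeping handled in \Cref{lem:skel_r_ub} --- estimating the geometric-type sums $\sum_\ell \ell\,\wh\xi_\ell(t)/(1+\ell t/(3\pi))^2$ and absorbing the $\log^2 L$ and $c_{1,\ell}$ corrections into the hypothesis $L \ge C$ --- so I would favor the convexity argument above, which avoids it altogether.
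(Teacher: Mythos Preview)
Your proof is correct, and it takes a genuinely different route from the paper's. The paper bounds $\abs{\dot\psi(t)}$ directly by summing the per-layer contributions: it reuses the intermediate estimate \eqref{eq:xi_dot_sum_ub} (derived inside the proof of \Cref{lem:skel_r_ub}) for $\sum_{\ell\ge 4}\abs{\dot\xi_\ell}$, handles the four small-$\ell$ terms via \Cref{lem:xi_dot_l_old_ub}, and then simplifies $L(L-3)/((L-3)t)^2 \lesssim 1/t^2$ etc.\ to land at $\abs{\dot\psi(t)}\le Cn/t^2$. In other words, the paper reaches into the \emph{internals} of \Cref{lem:skel_r_ub}.

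Your argument is softer and more modular: you use only the \emph{conclusion} of \Cref{lem:skel_r_ub} together with convexity of $\psi$ (from \Cref{lem:phi_concave}) to convert a value bound into a derivative bound via the secant/MVT inequality $\abs{\dot\psi(r)} \le 2(\psi(r/2)-\psi(\pi))/r$. This is cleaner and avoids the bookkeeping entirely, at the cost of making the dependence on \Cref{lem:skel_r_ub} slightly more indirect (the hard sum is still done there, you just don't have to touch it). The paper's approach, by contrast, isolates the derivative sum explicitly, which pays off later: the same intermediate bound \eqref{eq:xi_dot_sum_ub} and its analogues are reused in \Cref{lem:skel_ddot_ub} and \Cref{lem:skel_dddot_ub} to control $\ddot\psi$ and $\dddot\psi$, where a pure convexity shortcut is no longer available. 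So your route is nicer here but does not obviously extend to the higher derivatives.
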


\begin{proof}
From \eqref{eq:xi_dot_sum_ub} (we control $M_t \leq C$ for an absolute constant $C>0$ in this
context, so that we do not need to deal with the conditions on $\veps$ that appear there) and
\Cref{lem:xi_dot_l_old_ub}, we have 
\begin{align*}
  \abs*{\dot{\psi}(t)} &\le \frac{n}{2} \sum_{\ell = 0}^{3} \abs*{\dot{\xi}_{\ell}(t)} +
  \frac{Cn}{2} \paren{\paren{1+ \frac{18}{1 + (L-1)t/(3\pi)}}\frac{ \frac{L(L-3)}{12}}{(1+(L-3)t/(3\pi))^2} +
  \frac{L \log^2 L}{(1+(L-3)t/(3\pi))^2}} \\
&\le \frac{n}{2}\frac{12L}{1 + Lt/\pi} + \frac{Cn}{2} \paren{\paren{1+ \frac{18}{1 +
(L-1)t/(3\pi)}}\frac{ \frac{L(L-3)}{12}}{(1+(L-3)t/(3\pi))^2} + \frac{L\log^2 L}{(1+(L-3)t/(3\pi))^2}} \\
&\le \frac{6\pi n}{t} + \frac{Cn}{2}\paren{\frac{L(L-3)}{(L-3)^2 t^2} + \frac{L \log^2 L}{(L-3)^2 t^2}} \\
&\le \frac{Cn}{t^2}. 
\end{align*}
This directly get us \Cref{eq:skel_dot_ub_pointwise} and \Cref{eq:skel_dot_ub_l1}.
\end{proof}

\begin{lemma} \label{lem:skel_ddot_ub}
  There exist absolute constants $C, C'$ such that when $L \ge C$, we have for any $r >
  0$ \begin{align}\label{lem:skel_ddot_ub_pointwise}
    \max_{t \ge r} \abs*{\ddot{\psi}(t)} &\le C' \frac{n}{r^3}
  \end{align} 
  and
  \begin{align*}
    \int_{t = 0}^{\pi}t^6\abs*{\ddot{\psi}(t)}dt &\le C'n.
  \end{align*}
\end{lemma}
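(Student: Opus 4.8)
The plan is to reduce everything to the pointwise estimate $\abs{\ddot\psi(t)} \le C'n/t^3$ on $(0,\pi]$, valid once $L$ exceeds an absolute constant. Both assertions then follow immediately: evaluating at $t=r$ gives $\max_{t\ge r}\abs{\ddot\psi(t)} \le C'n/r^3$, and
\[
  \int_0^\pi t^6\abs{\ddot\psi(t)}\,dt \le C'n\int_0^\pi t^3\,dt = \tfrac{\pi^4}{4}C'n .
\]
To get the pointwise bound I would differentiate $\psi = \tfrac{n}{2}\sum_{\ell=0}^{L-1}\xi_\ell$ twice via the logarithmic derivative. Writing $g_\ell(t) = \sum_{\ell'=\ell}^{L-1}\dot\varphi^{[\ell']}(t)/(1-\varphi^{[\ell']}(t)/\pi)$, which up to a factor $\pi$ is the same quantity that appears in the proof of \Cref{lem:skel_r_ub}, one has $\dot\xi_\ell = -\tfrac{1}{\pi}\xi_\ell g_\ell$ and hence
\[
  \ddot\xi_\ell = \tfrac{1}{\pi^2}\xi_\ell g_\ell^2 - \tfrac{1}{\pi}\xi_\ell\dot g_\ell,
  \qquad
  \dot g_\ell = \sum_{\ell'=\ell}^{L-1}\left(\frac{\ddot\varphi^{[\ell']}}{1-\varphi^{[\ell']}/\pi} + \frac{(\dot\varphi^{[\ell']})^2/\pi}{(1-\varphi^{[\ell']}/\pi)^2}\right).
\]
The four indices $\ell\in\{0,1,2,3\}$ are disposed of using the crude per-index estimates for $\xi_\ell$ and its derivatives (the second-derivative analogues of \Cref{lem:xi_l_old_ub} and \Cref{lem:xi_dot_l_old_ub}), whose contribution to $\ddot\psi$ is dominated by $n/t^3$; the substance is in the tail $\ell\ge 4$.

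For the $\xi_\ell g_\ell^2$ term I would write $\xi_\ell g_\ell^2 = \pi^2\dot\xi_\ell^2/\xi_\ell$ and use the lower bound $\xi_\ell\ge\wh\xi_\ell$, which follows termwise from the fluid bound $\varphi^{[\ell']}\le\wh\varphi^{[\ell']}$ of \Cref{lem:phi_l_ub}. Combined with the explicit per-$\ell$ bound on $\abs{\dot\xi_\ell}$ extracted from the proof of \Cref{lem:skel_r_ub} and the closed form \eqref{eq:wh_xi_expr} of $\wh\xi_\ell$, an integral-test summation over $\ell$ gives $\sum_{\ell\ge 4}\xi_\ell g_\ell^2 \lesssim nL^3/(1+(L-3)t/(3\pi))^3 \lesssim n/t^3$. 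For the $\xi_\ell\dot g_\ell$ term, the part of $\dot g_\ell$ built from $\sum_{\ell'}(\dot\varphi^{[\ell']})^2/(1-\varphi^{[\ell']}/\pi)^2$ is controlled directly by the $\dot\varphi^{[\ell]}$ estimates of \Cref{lem:phi_lower_bound} and is lower order. The remaining part needs a new bound on the iterated second derivative, of the form $\abs{\ddot\varphi^{[\ell]}(t)} \lesssim \ell/(1+\ell t/(3\pi))^3$ (matching $\tfrac{d^2}{dt^2}\wh\varphi^{[\ell]}$ up to constants); granting it, the integral test yields $\sum_{\ell'\ge\ell}\abs{\ddot\varphi^{[\ell']}}/(1-\varphi^{[\ell']}/\pi) \lesssim 1/(t^2(1+c(\ell-3)t))$ with $c=1/(3\pi)$, hence $\sum_{\ell\ge 4}\xi_\ell\abs{\dot g_\ell} \lesssim t^{-2}\sum_\ell\wh\xi_\ell/(1+c(\ell-3)t) \lesssim L/(t^2(1+Lt/(3\pi))) \lesssim t^{-3}$, and collecting the pieces gives $\abs{\ddot\psi(t)} \lesssim n/t^3$.

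The main obstacle is precisely this uniform bound on $\ddot\varphi^{[\ell]}$. I would derive it by differentiating the telescoping identity $\dot\varphi^{[\ell]} = \bigl(\prod_{j=0}^{\ell-1}(1-\varphi^{[j]}/\pi)\bigr)\,\sin t/\sin\varphi^{[\ell]}(t)$ proved inside \Cref{lem:phi_lower_bound}, which gives
\[
  \ddot\varphi^{[\ell]} = \dot\varphi^{[\ell]}\left(\cot t - \dot\varphi^{[\ell]}\cot\varphi^{[\ell]} - \frac{1}{\pi}\sum_{j=0}^{\ell-1}\frac{\dot\varphi^{[j]}}{1-\varphi^{[j]}/\pi}\right),
\]
and then estimating the bracket with the $\dot\varphi^{[j]}$ bounds of \Cref{lem:phi_lower_bound} and the $\sin$-ratio control of \Cref{lem:sin_ratio_estimate}; alternatively one can bootstrap in $\ell$ on the chain rule $\ddot\varphi^{[\ell]} = (\ddot\varphi\circ\varphi^{[\ell-1]})(\dot\varphi^{[\ell-1]})^2 + (\dot\varphi\circ\varphi^{[\ell-1]})\ddot\varphi^{[\ell-1]}$, in the style of \Cref{lem:phi_lower_bound}. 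The delicate point is the behaviour near $t=\pi$, where $\cot t$ blows up while $\dot\varphi^{[\ell]}(t)\to 0$ and $\cot\varphi^{[\ell]}(t)\to 0$, the cancellation being what keeps $\ddot\varphi^{[\ell]}$ bounded; making this quantitative uses $\varphi(\pi)=\pi/2$ (so $\varphi^{[\ell]}$ maps $[0,\pi]$ into $[0,\pi/2]$ for $\ell\ge 1$) and the $C^3$ regularity of $\varphi$ from \Cref{lem:phi_concave}, while for small $t$ one exploits the near-cancellation of $\cot t$ against $\dot\varphi^{[\ell]}\cot\varphi^{[\ell]}$, leaving the harmless $O(\ell)$ sum. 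Once this estimate is established, the remainder is routine integral-test bookkeeping.
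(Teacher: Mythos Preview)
Your approach is correct in outline but takes a substantially different (and more laborious) route than the paper. The paper does not derive your sharp estimate $\abs{\ddot\varphi^{[\ell]}(t)}\lesssim \ell/(1+\ell t/(3\pi))^3$; instead it proves the cheaper bound
\[
  \abs{\ddot\varphi^{[\ell]}(t)}\;\le\;\frac{C}{t}\,\dot\varphi^{[\ell]}(t/2),
\]
obtained from the chain-rule identity $\ddot\varphi^{[\ell]}/\dot\varphi^{[\ell]}=\sum_{\ell'=0}^{\ell-1}\dot\varphi^{[\ell']}\cdot(\ddot\varphi/\dot\varphi)\circ\varphi^{[\ell']}$ together with $\abs{\ddot\varphi}\le 4$, $\dot\varphi\ge 1/2$ on $[0,\pi/2]$, and the concavity inequality $\varphi(t)\ge t/2$. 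This, combined with the monotonicity $\xi_\ell(t)\le\xi_\ell(t/2)$ and $1/2\le 1-\varphi^{[\ell']}/\pi\le 1$, collapses the whole second-derivative computation to
\[
  \abs{\ddot\xi_\ell(t)}\;\lesssim\;\tfrac{1}{t}\,\abs{\dot\xi_\ell(t/2)}\;+\;\tfrac{1}{Lt^3},
\]
valid uniformly for all $\ell$, whence $\abs{\ddot\psi(t)}\lesssim t^{-1}\abs{\dot\psi(t/2)}+n/t^3$ and one finishes by quoting \Cref{lem:skel_dot_ub}. This ``shift to $t/2$'' device sidesteps both the delicate endpoint analysis you flag near $t=\pi$ and the integral-test bookkeeping over $\ell$; your route produces a tighter pointwise bound on $\ddot\varphi^{[\ell]}$ near $t=0$, but that extra precision is not needed here.

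One caveat: your disposal of $\ell\in\{0,1,2,3\}$ via ``the second-derivative analogues of \Cref{lem:xi_l_old_ub} and \Cref{lem:xi_dot_l_old_ub}'' does not work if you mean \Cref{lem:xiddot_ub}: that bound gives $\abs{\ddot\xi_0(t)}\lesssim L^2/(1+cLt)$, which is $O(L)$ for $t$ of order one, far too large for $n/t^3$. You must instead run your own decomposition on these indices too, and for $\ell=0$ this requires extracting the $\ell'=0$ term separately (as in the paper's \eqref{eq:xi_dot_separate}), since otherwise $\xi_0 g_0^2$ and $\xi_0\dot g_0$ each diverge at $t=\pi$---the singularity cancels between the two only after you isolate the $1/(1-t/\pi)$ contribution.
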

\begin{proof}
  Following \Cref{lem:skeleton_iphi_ixi_derivatives,lem:xi_dot_l_old_ub,lem:xiddot_ub},
  we have
\begin{align}\label{eq:xi_dot_separate}
   \dot{\xi}_{\ell} 
   = -\frac{\xi_{1}}{\pi}\indicator{\ell = 0} - \frac{\xi_{\ell}}{\pi}\sum_{\ell' =
   \max\set{1, \ell}}^{L-1}\frac{\dot{\varphi}^{[\ell']}}{1 - \varphi^{[\ell']}/\pi}
\end{align}
and
\begin{align*}
  \abs*{\ddot{\xi}_{\ell}} &= -\frac{\xi_{\ell}}{\pi}\sum_{\ell' = \max\{1, \ell\}}^{L-1}
  \frac{\ddot{\varphi}^{[\ell']}}{1 - \varphi^{[\ell']}/\pi} +
  \frac{\xi_{\ell}}{\pi^2}\sum_{\substack{\ell', \ell'' = \max\{1, \ell\}\\ \ell' \ne
  \ell''}}^{L-1} \frac{\dot{\varphi}^{[\ell']}\dot{\varphi}^{[\ell'']}}{(1 - \varphi^{[\ell']}/\pi)(1 - \varphi^{[\ell'']}/\pi)} \\
  &\quad - \frac{2\xi_{1}}{\pi^2}\indicator{\ell = 0}\sum_{\ell' = 1}^{L-1}
  \frac{\dot{\varphi}^{[\ell']}}{1 - \varphi^{[\ell']}/\pi}\\
  &\le \abs*{ \frac{\xi_{\ell}}{\pi}\sum_{\ell' = \max\{1, \ell\}}^{L-1}
    \frac{\ddot{\varphi}^{[\ell']}}{1 - \varphi^{[\ell']}/\pi}} \\
    &\quad+  \frac{\xi_{\ell
      }}{\pi^2}\paren{\sum_{\ell' = \max\{1, \ell\}}^{L-1}\frac{\dot{\varphi}^{[\ell']}}{1 -
    \varphi^{[\ell']}/\pi}}^2 + \abs*{\frac{2\xi_{1}}{\pi^2}\indicator{\ell = 0}\sum_{\ell' = 1}^{L-1}
  \frac{\dot{\varphi}^{[\ell']}}{1 - \varphi^{[\ell']}/\pi}},
\end{align*}
where the diagonal is added to obtain the upper bound on the second term for the inequality.
We compute $\ddot{\varphi}^{[\ell]}(t)$ as 
\begin{align*}
    \ddot{\varphi}^{[\ell]} = \paren{\dot{\varphi}^{[\ell - 1]}}^2 \ddot{\varphi} \circ
    \varphi^{[\ell - 1]} + \ddot{\varphi}^{[\ell-1]} \dot{\varphi} \circ \varphi^{[\ell - 1]} 
\end{align*}
and thus 
\begin{align*}
  \frac{\ddot{\varphi}^{[\ell]}}{\dot{\varphi}^{[\ell]}} &= \dot{\varphi}^{[\ell - 1]}
  \frac{\ddot{\varphi}}{\dot{\varphi}} \circ \varphi^{[\ell - 1]} +
  \frac{\ddot{\varphi}^{[\ell-1]}}{\dot{\varphi}^{[\ell - 1]}} \\
  &= \sum_{\ell' = 0}^{\ell - 1}\dot{\varphi}^{[\ell']} \frac{\ddot{\varphi}}{\dot{\varphi}}
  \circ \varphi^{[\ell']},
\end{align*}
which gives
\begin{align*}
  \abs*{\ddot{\varphi}^{[\ell]}} = \abs*{\dot{\varphi}^{[\ell]} \sum_{\ell' =
  0}^{\ell-1}\dot{\varphi}^{[\ell']}\frac{\ddot{\varphi}}{\dot{\varphi}}\circ \varphi^{[\ell']} }.
\end{align*}
From \Cref{lemm:ddot_phi_prop}, we have $\abs{\ddot{\varphi}} \le c_1 = 4$ on $t \in [0, \pi]$ and
$\dot{\varphi} \ge c_2 = \frac{1}{2}$ on $[0, \frac{\pi}{2}]$.  As when $\ell > 0$, we have
$\varphi^{[\ell]}(t) \le \frac{\pi}{2}$, we separate the case when $\ell = 0$. From
\Cref{lemma:phi_dot_old_upper_bound} we get 
\begin{align*}
	\sum_{\ell' = 0}^{\infty }\dot{\varphi}^{[\ell']}(t) &\le \frac{C}{t}.
\end{align*}
Using the chain rule to get the expression for
$\dot{\varphi}^{[\ell]}$, and concavity of $\varphi$ to get that $\varphi(t) \geq t/2$, and
decreasingness of $\dot{\varphi}$, we have
\begin{align*}
  \abs*{\ddot{\varphi}^{[\ell]}(t)} &\le \abs*{\ddot{\varphi}(t)}\prod_{\ell' = 1}^{\ell
  -1}\dot{\varphi} \circ \varphi^{[\ell']}(t) + \frac{c_1}{c_2} \dot{\varphi}^{[\ell]}(t)
  \sum_{\ell' = 1}^{\ell - 1} \dot{\varphi}^{[\ell']}(t)\\
  &\le c_1 \prod_{\ell' = 1}^{\ell -1}\dot{\varphi} \circ \varphi^{[\ell']}(t) + \frac{c_1}{c_2}
  \dot{\varphi}^{[\ell]}(t)  \sum_{\ell' = 1}^{\ell - 1} \dot{\varphi}^{[\ell']}(t) \\
  &\le c_1 \prod_{\ell' = 0}^{\ell -2}\dot{\varphi} \circ \varphi^{[\ell']} \circ \varphi(t) +
  \frac{C}{t}\dot{\varphi}^{[\ell]}(t) \\
  &\le c_1 \prod_{\ell' = 0}^{\ell -2}\dot{\varphi} \circ \varphi^{[\ell']} \paren{ t/2} +
  \frac{C}{t}\dot{\varphi}^{[\ell]}(t) \\
  &\le 2 c_1 \prod_{\ell' = 0}^{\ell -1}\dot{\varphi} \circ \varphi^{[\ell']} \paren{t/2} +
  \frac{C}{t}\dot{\varphi}^{[\ell]}(t/2) \\
  &\le 8 \dot{\varphi}^{[\ell]}(t/2) + \frac{C}{t}\dot{\varphi}^{[\ell]}(t/2) \\
  &\le \frac{C}{t}\dot{\varphi}^{[\ell]}(t/2).\labelthis \label{eq:ddot_phi_ub_dot_phi}
\end{align*}
From \Cref{lem:phi_concave}, we know $\xi_{\ell}$ is monotonically decreasing, so
$\xi_{\ell}(t) \le \xi_{\ell}(t/2)$. 
Thus (proceeding from our previous bound)
\begin{align*}
    \abs*{ \frac{\xi_{\ell}(t)}{\pi}\sum_{\ell' = \max\{1, \ell\}}^{L-1}
    \frac{\ddot{\varphi}^{[\ell']}(t)}{1 - \varphi^{[\ell']}(t)/\pi}} 
    &\leq
    \frac{C}{t} 
    \abs*{ \frac{\xi_{\ell}(t/2)}{\pi}\sum_{\ell' = \max\{1, \ell\}}^{L-1} 
    \frac{\dot{\varphi}^{[\ell']}(t/2)}{1 - \varphi^{[\ell']}(t)/\pi}}  \\
    &\leq
    \frac{2C}{t} 
    \abs*{ \frac{\xi_{\ell}(t/2)}{\pi}\sum_{\ell' = \max\{1, \ell\}}^{L-1} 
    \frac{\dot{\varphi}^{[\ell']}(t/2)}{1 - \varphi^{[\ell']}(t/2)/\pi}},
\end{align*}
where in the second line we used that $1/2 \leq 1 - \varphi^{(\ell')}(t) / \pi \leq 1$ and the
$t/2$ term is no smaller. 
Similarly, applying \Cref{lemma:phi_dot_old_upper_bound} again, we get
\begin{align*}
    \abs*{\frac{2\xi_{1}(t)}{\pi^2}\indicator{\ell = 0}\sum_{\ell' = 1}^{L-1} 
    \frac{\dot{\varphi}^{[\ell']}(t)}{1 - \varphi^{[\ell']}(t)/\pi}}
    \leq \frac{C}{t} \frac{\xi_{1}(t)}{\pi}\indicator{\ell = 0}
    \leq \frac{C}{t} \frac{\xi_{1}(t/2)}{\pi}\indicator{\ell = 0}
\end{align*}
and
\begin{align*}
    \frac{\xi_{\ell }(t)}{\pi^2}\paren{\sum_{\ell' = \max\{1,
    \ell\}}^{L-1}\frac{\dot{\varphi}^{[\ell']}(t)}{1 - \varphi^{[\ell']}(t)/\pi}}^2 
    &\leq  \xi_{\ell}(t)\paren{\frac{C}{t}\frac{1}{1 + \ell t/(3\pi)}}^2 \\
    &\leq \frac{C \xi_{\ell}(t)}{t^2} \paren{1 + \ell t/(3\pi)}^{-2}
\end{align*}
Combining all these bounds and applying \Cref{lem:xi_l_old_ub}, we have obtained
\begin{align*}
\abs*{\ddot{\xi}_{\ell}(t)} &\le 
\frac{C}{t} \abs*{\dot{\xi}_{\ell}(t/2)} + \frac{C'}{t^2}\frac{1}{1 + Lt/\pi} \\
&\ltsim \frac{1}{t}\abs*{\dot{\xi}_{\ell}(t/2)} + \frac{1}{Lt^3}.
\end{align*}
Note this holds for all $\ell = 0, \cdots, L-1$, so we directly get $\abs{\ddot{\psi}(t)} \le
\frac{C}{t}\abs{\dot{\psi}(t/2)} + \frac{nL}{2}\frac{C'}{Lt^3}$. Thus from 
\Cref{lem:skel_dot_ub}, there exists constant $C, C_1', C_1''$, when $L \ge C$, we have
\begin{align*}
    \max_{t \ge r} \abs*{\ddot{\psi}(t)} &\le \max_{t \ge r} \paren{\frac{C''}{t}
    \abs*{\dot{\psi}(t/2)} + \frac{nL}{2} \frac{C'}{Lt^3}} \\
    &\le \frac{1}{r} \frac{C_1'}{(r/2)^2}n + C'\frac{n}{r^3},
\end{align*}
which provides the bound for $L^\infty$ control. For $L^1$ control, we have
\begin{align*}
    \int_{t = r}^\pi t^6 \abs*{\ddot{\psi}(t)} dt &\le \int_{t = 0}^\pi t^6  \paren{\frac{C}{t}
    \abs*{\dot{\psi}(t)} + \frac{nL}{2} \frac{C'}{Lt^3}} dt \\
    &\le C \int_{t = r}^\pi t^3 \abs*{\dot{\psi}(t/2)} dt + C' n \\
    &\le C n,
\end{align*}
which finishes the proof. 
\end{proof}

\begin{lemma} \label{lem:skel_dddot_ub}
    There exist absolute constants $C, C'$ such that when $L \ge C$ we have for any $r>0$
\begin{align}\label{lem:skel_dddot_ub_pointwise}
\max_{t \ge r} \abs*{\dddot{\psi}(t)} &\le\frac{C' n}{r^4}
\end{align} 
and
\begin{align*}
\int_{t = 0}^{\pi}t^9\abs*{\dddot{\psi}(t)}dt &\le C'n .
\end{align*}
\end{lemma}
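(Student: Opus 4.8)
The plan is to follow the same "reduce-one-order" strategy already used for \Cref{lem:skel_dot_ub} and \Cref{lem:skel_ddot_ub}: express $\dddot{\psi}$ in terms of lower-order quantities, show it is dominated by $t^{-1}\ddot{\psi}(t/2)$ (equivalently $t^{-2}\dot{\psi}(t/4)$) plus a clean $n/t^4$ remainder, and then close the estimate by invoking \Cref{lem:skel_ddot_ub}. First I would differentiate the logarithmic-derivative identity \Cref{eq:xi_dot_separate} twice more (using \Cref{lem:skeleton_iphi_ixi_derivatives}), obtaining an expression for $\dddot{\xi}_{\ell}$ as a sum of $\xi_{\ell}$ times three kinds of terms: (i) a single term $\dddot{\varphi}^{[\ell']}/(1-\varphi^{[\ell']}/\pi)$; (ii) mixed products $\ddot{\varphi}^{[\ell']}\dot{\varphi}^{[\ell'']}/((1-\varphi^{[\ell']}/\pi)(1-\varphi^{[\ell'']}/\pi))$; and (iii) triple products $\dot{\varphi}^{[\ell']}\dot{\varphi}^{[\ell'']}\dot{\varphi}^{[\ell''']}/(\cdots)$, together with the analogous $\indicator{\ell=0}\xi_{1}$ corrections, exactly mirroring how $\ddot{\xi}_\ell$ was decomposed in the proof of \Cref{lem:skel_ddot_ub}.

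Second I would establish the pointwise bound $\abs{\dddot{\varphi}^{[\ell]}(t)}\lesssim t^{-2}\,\dot{\varphi}^{[\ell]}(t/2)$, the third-derivative analogue of \Cref{eq:ddot_phi_ub_dot_phi}. This comes from the chain rule: writing $\dddot{\varphi}^{[\ell]}/\dot{\varphi}^{[\ell]}$ as a sum over $\ell'\le\ell-1$ of terms built from $\dot{\varphi}^{[\ell']}$ and $\dot{\varphi},\ddot{\varphi},\dddot{\varphi}$ evaluated at $\varphi^{[\ell']}$, and then using that $\dddot{\varphi}$ and $\ddot{\varphi}$ are bounded on $[0,\pi]$ while $\dot{\varphi}$ is bounded below near $0$ (this is where a version of \Cref{lemm:ddot_phi_prop}, extended to $\dddot{\varphi}$ via the explicit formula for $\varphi$ and \Cref{lem:phi_concave}, is needed), the summation bound $\sum_{\ell'}\dot{\varphi}^{[\ell']}(t)\le C/t$ from \Cref{lemma:phi_dot_old_upper_bound} (applied twice for the mixed and triple sums, which is what produces the second power of $t^{-1}$), concavity of $\varphi$ giving $\varphi(t)\ge t/2$, and monotonicity of $\dot{\varphi}$ — exactly mimicking the chain of inequalities ending at \Cref{eq:ddot_phi_ub_dot_phi}. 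Feeding these into the three term types, together with the $\abs{\dot{\varphi}^{[\ell']}}/(1-\varphi^{[\ell']}/\pi)\lesssim(1+\ell t)^{-1}$ estimates coming from \Cref{lem:phi_l_ub} and the monotonicity $\xi_\ell(t)\le\xi_\ell(t/2)$ from \Cref{lem:phi_concave}, yields $\abs{\dddot{\xi}_\ell(t)}\lesssim t^{-1}\abs{\ddot{\xi}_\ell(t/2)}+(Lt^4)^{-1}\xi_\ell(t/2)$; summing over $\ell$ — with the pieces $\ell\in\{0,1,2,3\}$ bounded separately via \Cref{lem:xi_dot_l_old_ub} and \Cref{lem:xi_l_old_ub}, as in the proof of \Cref{lem:skel_ddot_ub} — then gives $\abs{\dddot{\psi}(t)}\lesssim t^{-1}\abs{\ddot{\psi}(t/2)}+n/t^4$.

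Third, the two claimed bounds follow at once. For the $L^\infty$ bound, $\max_{t\ge r}\abs{\dddot{\psi}(t)}\lesssim r^{-1}\max_{s\ge r/2}\abs{\ddot{\psi}(s)}+n/r^4\lesssim r^{-1}\cdot n(r/2)^{-3}+n/r^4\lesssim n/r^4$, using the $L^\infty$ part of \Cref{lem:skel_ddot_ub}. For the $L^1$ bound, $\int_0^\pi t^9\abs{\dddot{\psi}(t)}\,dt\lesssim\int_0^\pi t^8\abs{\ddot{\psi}(t/2)}\,dt+n\int_0^\pi t^5\,dt$; the substitution $u=t/2$ turns the first integral into $2^9\int_0^{\pi/2}u^8\abs{\ddot{\psi}(u)}\,du\le 2^9\pi^2\int_0^\pi u^6\abs{\ddot{\psi}(u)}\,du\lesssim n$ by the $L^1$ part of \Cref{lem:skel_ddot_ub} (trading the extra two powers of $u\le\pi$ for a constant), and the second term is $\lesssim n$. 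I expect the main obstacle to be Step~2: the bookkeeping of the third derivative of the deep composition $\varphi^{[\ell]}$ and extracting the \emph{sharp} power $t^{-2}$ in $\abs{\dddot{\varphi}^{[\ell]}}\lesssim t^{-2}\dot{\varphi}^{[\ell]}(t/2)$ — in particular checking that the triple-product sums really contribute only $t^{-2}$ rather than a worse power — since everything downstream is a mechanical repetition of the $\ddot{\psi}$ argument.
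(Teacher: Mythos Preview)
Your plan is correct and matches the paper's proof in all essential ingredients: the expansion of $\dddot{\xi}_\ell$ into the three term types plus the $\indicator{\ell=0}$ corrections, the key chain-rule estimate $\abs{\dddot{\varphi}^{[\ell]}(t)}\lesssim t^{-2}\dot{\varphi}^{[\ell]}(t/2)$ (derived exactly as you describe, using \Cref{lemm:ddot_phi_prop} extended to $\dddot{\varphi}$, the summation bound \Cref{lemma:phi_dot_old_upper_bound}, and $\varphi(t)\ge t/2$), and the final reduction to a lower-order $\psi$ bound.

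The only difference is the packaging of the reduction step. The paper does not pass through $\ddot{\psi}$: after plugging the $\dddot{\varphi}^{[\ell]}$ bound and the summation estimates into the $\dddot{\xi}_\ell$ expansion, every surviving term is of the form $t^{-2}\cdot\xi_\ell(t/2)\sum_{\ell'}\dot{\varphi}^{[\ell']}(t/2)/(1-\varphi^{[\ell']}/\pi)$ plus the $\indicator{\ell=0}\xi_1$ correction, which is recognized directly as $\tfrac{C}{t^2}\abs{\dot{\xi}_\ell(t/2)}$ via \Cref{eq:xi_dot_separate}. Summing gives $\abs{\dddot{\psi}(t)}\le\tfrac{C}{t^2}\abs{\dot{\psi}(t/2)}$, and both conclusions follow from \Cref{lem:skel_dot_ub}. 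Your intermediate inequality $\abs{\dddot{\xi}_\ell(t)}\lesssim t^{-1}\abs{\ddot{\xi}_\ell(t/2)}$ as literally written would require a \emph{lower} bound on $\abs{\ddot{\xi}_\ell}$, which you don't have; what the analysis actually hands you is the $t^{-2}\abs{\dot{\xi}_\ell}$ bound (your parenthetical ``equivalently $t^{-2}\dot{\psi}(t/4)$'' is the right target). Once you reframe Step~2's output that way, Step~3 goes through exactly as the paper does --- and in fact slightly more simply, since the $L^1$ estimate then only needs $\int t^3\abs{\dot{\psi}}\lesssim n$ from \Cref{lem:skel_dot_ub} rather than the $t^6\abs{\ddot{\psi}}$ bound.
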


\begin{proof}
We calculate with the chain rule starting from the representation in
\Cref{lem:skeleton_iphi_ixi_derivatives} (and use the triangle inequality)
\begin{align*}
\abs*{\dddot{\xi}_{\ell}} &\le \abs*{\frac{\xi_{\ell}}{\pi}\sum_{\ell' = \max\{1,
\ell\}}^{L-1} \frac{\dddot{\varphi}^{[\ell']}}{1 - \varphi^{[\ell']}/\pi}} + 3
\abs*{\frac{\xi_{\ell}}{\pi^2}\sum_{\substack{\ell', \ell'' = \max\{1, \ell\} \\ \ell' \ne
\ell''}}^{L-1} \frac{\ddot{\varphi}^{[\ell']}\dot{\varphi}^{[\ell'']}}{(1 - \varphi^{[\ell']}/\pi)(1 - \varphi^{[\ell'']}/\pi)}} \\
& \quad + \abs*{\frac{\xi_{\ell}}{\pi^3}\sum_{\substack{\ell', \ell'', \ell''' = \max\{1, \ell\} \\
\ell' \ne \ell'', \ell' \ne \ell''', \ell'' \ne \ell'''}}^{L-1}
\frac{\dot{\varphi}^{[\ell']}\dot{\varphi}^{[\ell'']}\dot{\varphi}^{[\ell''']}}{(1 -
\varphi^{[\ell']}/\pi)(1 - \varphi^{[\ell'']}/\pi)(1 - \varphi^{[\ell''']}/\pi)}}\\
&\quad + \abs*{\frac{\xi_{1}}{\pi^3}\indicator{\ell = 0}\sum_{\substack{\ell', \ell'' = 1 \\ \ell'
\ne \ell''}}^{L-1} \frac{\dot{\varphi}^{[\ell']}\dot{\varphi}^{[\ell'']}}{(1 -
\varphi^{[\ell']}/\pi)(1 - \varphi^{[\ell'']}/\pi)}} + \abs*{\frac{3\xi_{1}}{\pi^2}\indicator{\ell
= 0}\sum_{\ell' = 1}^{L-1} \frac{\ddot{\varphi}^{[\ell']}}{1 - \varphi^{[\ell']}/\pi}}\\
&\le \abs*{ \frac{\xi_{\ell}}{\pi}\sum_{\ell' = \max\{1, \ell\}}^{L-1}
\frac{\dddot{\varphi}^{[\ell']}}{1 - \varphi^{[\ell']}/\pi}} \\
&\quad+  \frac{3\xi_{\ell }}{\pi^2} \sum_{\ell' = \max\{1,
\ell\}}^{L-1}\frac{\abs*{\ddot{\varphi}^{[\ell']}}}{1 - \varphi^{[\ell']}/\pi}\sum_{\ell'' =
\max\{1, \ell\}}^{L-1}\frac{\dot{\varphi}^{[\ell'']}}{1 - \varphi^{[\ell'']}/\pi} \\&\quad+
\frac{\xi_\ell}{\pi^3}\paren{\sum_{\ell' = \max\{1, \ell\}}^{L-1}\frac{\dot{\varphi}^{[\ell']}}{1
- \varphi^{[\ell']}/\pi}}^3 \\
&\quad + \frac{\xi_{1}}{\pi^3}\indicator{\ell = 0}\sum_{\ell' =
1}^{L-1}\frac{\abs*{\dot{\varphi}^{[\ell']}}}{1 - \varphi^{[\ell']}/\pi}\sum_{\ell'' =
1}^{L-1}\frac{\dot{\varphi}^{[\ell'']}}{1 - \varphi^{[\ell'']}/\pi} +
\abs*{\frac{3\xi_{1}}{\pi^2}\indicator{\ell = 0}\sum_{\ell' = 1}^{L-1}
\frac{\ddot{\varphi}^{[\ell']}}{1 - \varphi^{[\ell']}/\pi}}.
\labelthis\label{eq:dddot_phi_expression}
\end{align*}
Following \Cref{eq:ddot_phi_ub_dot_phi} and \Cref{lemma:phi_dot_old_upper_bound}, we have
\begin{equation*}
  \sum_{\ell' = 1}^{L-1}\abs*{\ddot{\varphi}^{[\ell']}(t)}
  \le \sum_{\ell' = 1}^{L-1}\frac{\abs*{\ddot{\varphi}^{[\ell']}(t)}}{1 - \varphi^{[\ell']}(t)/\pi} 
  \le \frac{2C}{t} \sum_{\ell' = 1}^{L-1} \abs*{\dot{\varphi}(t/2)}
  \le C / t^2, 
  \labelthis \label{eq:ddot_phi_ub_sum}
\end{equation*}
which leaves the main unresolved term in \Cref{eq:dddot_phi_expression} to be $\dddot{\varphi}$. 
On the other hand, we have from the chain rule
\begin{align*}
  \dddot{\varphi}^{[\ell]} 
  = 3 \dot{\varphi}^{[\ell - 1]} \ddot{\varphi}^{[\ell - 1]} (\ddot{\varphi} \circ \varphi^{[\ell
  - 1]}) 
  + \paren{\dot{\varphi}^{[\ell - 1]}}^3 \paren{\dddot{\varphi} \circ \varphi^{[\ell - 1]}} 
  + \dddot{\varphi}^{[\ell - 1]} \paren{\dot{\varphi} \circ \varphi^{[\ell - 1]}}.
\end{align*}
Using the product expression $\ivphid{\ell} = \ivphid{\ell - 1}\dot{\varphi}\circ\ivphi{\ell - 1}$ and the triangle inequality, we have
\begin{align*}
    \abs*{\frac{\dddot{\varphi}^{[\ell]}}{\dot{\varphi}^{[\ell]}}} 
    &\le 3 \abs*{\ddot{\varphi}^{[\ell - 1]}\frac{\ddot{\varphi}}{\dot{\varphi}} \circ
    \varphi^{[\ell - 1]}} 
    + \paren{\dot{\varphi}^{[\ell - 1]}}^2 \abs*{\frac{\dddot{\varphi}}{\dot{\varphi}} \circ \varphi^{[\ell - 1]}} + \abs*{\frac{\dddot{\varphi}^{[\ell - 1]}}{\dot{\varphi}^{[\ell - 1]}}} \\
    &\le \sum_{\ell' = 1}^{\ell - 1} \paren{3 \abs*{\ddot{\varphi}^{[\ell']}\frac{\ddot{\varphi}}{\dot{\varphi}} \circ \varphi^{[\ell']}} + \paren{\dot{\varphi}^{[\ell']}}^2 \abs*{\frac{\dddot{\varphi}}{\dot{\varphi}} \circ \varphi^{[\ell']}}} + \abs*{\frac{\dddot{\varphi}}{\dot{\varphi}}}
\end{align*}
where the second line uses induction.
From \Cref{lemm:ddot_phi_prop,phi_third_der_bound}, we have $\abs{\ddot{\varphi}} \le c_1 = 4$,
$\abs{\dddot{\varphi}} \le c_4$ on $t \in [0, \pi]$ and $\dot{\varphi} \ge c_2 = \frac{1}{2}$,
$\ddot{\varphi} \le -c_3$ on $[0, \frac{\pi}{2}]$, Again, for $\ell > 0$, we have
$\varphi^{[\ell]}(t) \le \frac{\pi}{2}$. Applying \Cref{eq:ddot_phi_ub_sum} and \Cref{lemma:phi_dot_old_upper_bound}, we
get
\begin{align*}
    \abs*{\frac{\dddot{\varphi}^{[\ell]}\paren{t}}{\dot{\varphi}^{[\ell]}(t)}} &\le 3 \frac{c_1}{c_2}
    \sum_{\ell' = 1}^L \abs*{\ddot{\varphi}^{[\ell']}} + \frac{c_4}{c_2}\sum_{\ell' = 1}^L
    \paren{\dot{\varphi}^{[\ell']}(t)}^2 + \frac{c_4}{\dot{\varphi}(t)} \\
    &\le C/t^2 + \frac{c_4}{\dot{\varphi}(t)}.
\end{align*}
Multiplying both side with $\dot{\varphi}^{[\ell]}$, we get the bound
\begin{align*}
    \abs*{\dddot{\varphi}^{[\ell]}(t)} &\le \frac{C}{t^2}\dot{\varphi}^{[\ell]}(t) + c_4
    \prod_{\ell' = 1}^{\ell -1}\dot{\varphi} \circ \varphi^{[\ell']}(t) \\
    &\le \frac{C}{t^2}\dot{\varphi}^{[\ell]}(t) + c_4 \prod_{\ell' = 0}^{\ell -2}\dot{\varphi}
    \circ \varphi^{[\ell']} \circ \varphi(t) \\
    &\le \frac{C}{t^2}\dot{\varphi}^{[\ell]}(t) + 2 c_4 \dot{\varphi}^{[\ell]}(t/2) \\
    &\le \frac{C}{t^2}\dot{\varphi}^{[\ell]}(t/2),
\end{align*}
where the justifications for this argument are very similar to those used in the proof of
\Cref{lemm:ddot_phi_prop}.

Plugging bounds we have here back to \Cref{eq:dddot_phi_expression}. From \Cref{lemma:phi_dot_old_upper_bound} and monotonicity of $\xi_{\ell}$ in \Cref{lem:phi_concave}, we get
\begin{align*}
  \abs*{\dddot{\xi}_{\ell}(t)} 
&\le \abs*{ \frac{\xi_{\ell}(t)}{\pi}\sum_{\ell' = \max\{1,
      \ell\}}^{L-1} \frac{C\dot{\varphi}^{[\ell]}(t/2)/t^2}{1 - \varphi^{[\ell']}(t)/\pi}} \\
      &\quad +  \frac{3\xi_{\ell
    }(t)}{\pi^2} \frac{C}{t^2}\sum_{\ell' = \max\{1, \ell\}}^{L-1}\frac{\dot{\varphi}^{[\ell']}(t)}{1 -
    \varphi^{[\ell']}(t)/\pi} + \frac{\xi_\ell(t)}{\pi^3}\paren{\frac{C}{t}}^2\sum_{\ell' = \max\{1,
  \ell\}}^{L-1}\frac{\dot{\varphi}^{[\ell']}(t)}{1 - \varphi^{[\ell']}(t)/\pi} \\
  &\quad + \frac{\xi_{1}(t)}{\pi^3}\indicator{\ell = 0}\frac{C}{t} \sum_{\ell' = 1}^{L-1}
  \frac{\dot{\varphi}^{[\ell']}(t)}{1 - \varphi^{[\ell']}(t)/\pi} + \abs*{\frac{3\xi_{1}(t)}{\pi^2}
    \indicator{\ell = 0}\sum_{\ell' = 1}^{L-1} \frac{\ddot{\varphi}^{[\ell']}(t)}{1 -
  \varphi^{[\ell']}(t)/\pi}} \\ 
  &\le \frac{C}{t^2}\abs*{\frac{\xi_{\ell}(t)}{\pi} \sum_{\ell' = \max\{1,
      \ell\}}^{L-1} \frac{\dddot{\varphi}^{[\ell']}(t/2)}{1 - \varphi^{[\ell']}(t)/\pi}} + \frac{C\xi_1(t)}{t^2}\indicator{\ell = 0} \\
  &\le \frac{C}{t^2}\abs*{\frac{\xi_{\ell}(t/2)}{\pi} \sum_{\ell' = \max\{1,
      \ell\}}^{L-1} \frac{\dddot{\varphi}^{[\ell']}(t/2)}{1 - \varphi^{[\ell']}(t/2)/\pi}} + \frac{C\xi_1(t/2)}{t^2}\indicator{\ell = 0} \\
  &= \frac{C}{t^2}\abs*{\dot{\xi}_{\ell}(t/2)}
\end{align*}
where from the second to third line we also use the fact that $1/2 \le 1 - \varphi^{[\ell]}(t)/\pi \le 1$ for all $\ell \ge 1$ and the last line follows from the formula of $\dot{\xi}_{\ell}$ in \Cref{eq:xi_dot_separate}. 

From our bounds of $\dot{\psi}(t)$ in \Cref{eq:skel_dot_ub_pointwise}, this leads to 
\begin{align*}
    \max_{t \ge r}\abs*{\dddot{\psi}(t)} &\le \max_{t \ge r} \frac{C}{t^2}\abs*{\dot{\psi}(t/2)} \\
    &\le \frac{C}{r^2} \frac{C'}{(r/2)^2} \\
    &= \frac{C n}{r^4}
\end{align*}
and 
\begin{align*}
    \int_{t = 0}^\pi t^9 \abs*{\dddot{\psi}(t)} dt &\le \int_{t = 0}^\pi t^9 \frac{C}{t^2}\abs*{\dot{\psi}(t/2)} dt \\
    &\le Cn,
\end{align*}
as claimed.

\end{proof}

\subsection{Additional Proofs for Some Bounds}
\begin{lemma}\label{lemma:phi_lb}
  There exists an absolute constant $C_1 > 0$ such that 
	\begin{align*}
    \wh{\varphi}(t) - \varphi(t)
		\le C_1t^3.
	\end{align*}
\end{lemma}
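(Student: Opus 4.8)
The statement to prove is $\wh{\varphi}(t)-\varphi(t)\le C_1 t^3$ for $t\in[0,\pi]$, where $\varphi(t)=\arccos\!\big((1-t/\pi)\cos t+\tfrac1\pi\sin t\big)$ and $\wh{\varphi}(t)=t/(1+t/(3\pi))$. The approach is a fourth‑order Taylor comparison near the origin, combined with the already‑established sign information $\wh{\varphi}\ge\varphi$ (\Cref{lemma:upper_bound}), so that only an upper bound on the (nonnegative) difference is needed. First I would dispose of the region bounded away from $0$: for $t\in[\pi/2,\pi]$ one has $\wh{\varphi}(t)-\varphi(t)\le\wh{\varphi}(\pi)=3\pi/4<\pi$ while $t^3\ge(\pi/2)^3$, so the inequality holds there with an absolute constant. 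It remains to treat $t\in[0,\pi/2]$.

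For the small‑$t$ range, introduce the smooth function $F(t)=\cos\varphi(t)-\cos\wh{\varphi}(t)=(1-t/\pi)\cos t+\tfrac1\pi\sin t-\cos\!\big(\tfrac{t}{1+t/(3\pi)}\big)$ (smoothness on $[0,\pi]$ is clear: the argument of the second cosine lies in $[0,3\pi/4]$, and $t\mapsto t/(1+t/(3\pi))$ is smooth there). The crux is the identity $F(0)=F'(0)=F''(0)=F'''(0)=0$. This is a direct computation: using $\tfrac{d}{dt}\big[(1-t/\pi)\cos t+\tfrac1\pi\sin t\big]=-(1-t/\pi)\sin t$ one finds the first term of $F$ has values $1,0,-1,\tfrac2\pi$ at $t=0$ for derivative orders $0,1,2,3$; and computing $\wh{\varphi}'(0)=1$, $\wh{\varphi}''(0)=-\tfrac{2}{3\pi}$, $\wh{\varphi}'''(0)=\tfrac{2}{3\pi^2}$ and applying the chain rule to $\cos\wh{\varphi}$ gives exactly the same values $1,0,-1,\tfrac2\pi$. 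Hence Taylor's theorem with Lagrange remainder yields $|F(t)|\le\tfrac{t^4}{24}\sup_{[0,\pi]}|F^{(4)}|$, and the supremum is finite by smoothness on a compact interval (its exact value is irrelevant).

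Finally I would convert this estimate on $F=\cos\varphi-\cos\wh{\varphi}$ into a bound on $\wh{\varphi}-\varphi$. For $t\in(0,\pi/2]$ the mean value theorem for $\arccos$ gives $\wh{\varphi}(t)-\varphi(t)=\arccos(\cos\wh{\varphi}(t))-\arccos(\cos\varphi(t))=\tfrac{F(t)}{\sqrt{1-\xi^2}}$ for some $\xi$ between $\cos\varphi(t)$ and $\cos\wh{\varphi}(t)$; then $\arccos\xi$ lies between $\varphi(t)$ and $\wh{\varphi}(t)\le t\le\pi/2$, so $\sqrt{1-\xi^2}=\sin(\arccos\xi)\ge\sin\varphi(t)$ by monotonicity of $\sin$ on $[0,\pi/2]$. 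Using $\varphi(t)\ge t/2$ (concavity of $\varphi$ on $[0,\pi]$ with $\varphi(0)=0$, $\varphi(\pi)=\pi/2$, via \Cref{lem:phi_concave}) together with $\sin x\ge\tfrac2\pi x$ on $[0,\pi/2]$ gives $\sin\varphi(t)\ge t/\pi$, whence $\wh{\varphi}(t)-\varphi(t)\le\pi F(t)/t\le\tfrac{\pi}{24}\big(\sup_{[0,\pi]}|F^{(4)}|\big)\,t^3$ on $(0,\pi/2]$; the case $t=0$ is trivial. Taking $C_1$ to be the larger of this constant and the one from the far region completes the argument.

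\textbf{Main obstacle.} Everything is routine except the triple cancellation $F''(0)=F'''(0)=0$: this is precisely where the constant $3\pi$ in $\wh{\varphi}(t)=t/(1+t/(3\pi))$ is calibrated to match the second‑order local behavior of $\varphi$ at $0$, and it requires careful chain‑rule bookkeeping for the derivatives of $\cos\wh{\varphi}$. The arccos mean value step, the lower bound $\sin\varphi(t)\gtrsim t$, and the far‑region estimate are all elementary.
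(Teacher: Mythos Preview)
Your proof is correct, but it takes a different route from the paper. The paper's argument is much shorter: it simply observes that $\varphi$ and $\wh{\varphi}$ agree at $t=0$ through second order ($\varphi(0)=\wh{\varphi}(0)=0$, $\dot\varphi(0)=\dot{\wh\varphi}(0)=1$, $\ddot\varphi(0)=\ddot{\wh\varphi}(0)=-2/(3\pi)$), and since both are $C^3$ with bounded third derivative on $[0,\pi]$, the Taylor remainder gives $\wh\varphi-\varphi\le C_1 t^3$ directly, with $C_1=\sup_{[0,\pi)}|\dddot{\wh\varphi}-\dddot{\varphi}|$.

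Your approach works one level down, with $F=\cos\varphi-\cos\wh\varphi$, which has an explicit elementary formula. You match $F$ to \emph{third} order at $0$ (one order more than $\varphi$ and $\wh\varphi$ themselves match---the extra cancellation comes precisely from $\sin 0=0$ killing the $\dddot\varphi(0)-\dddot{\wh\varphi}(0)$ term in the chain rule), get $F=O(t^4)$, and then recover $\wh\varphi-\varphi=O(t^3)$ by dividing by $\sin\varphi(t)\gtrsim t$ via the mean value theorem for $\arccos$. The trade-off: the paper's approach is shorter but relies on $\dddot\varphi$ being bounded (which requires differentiating through the $\arccos$ and is handled separately in the paper); your approach is longer but entirely avoids differentiating $\varphi$ itself, since $\cos\varphi$ is given in closed form and $\sup|F^{(4)}|<\infty$ is immediate from smoothness. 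Both are valid; the paper's is more direct once one accepts the $C^3$ regularity of $\varphi$.
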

\begin{proof}
  From \Cref{lem:aevo_properties}, $\varphi$ is 3 times continuously differentiable on $(0,\pi)$,
  and 
\begin{equation*}
    \varphi(0)=0,\quad \dot{\varphi}(0)=1, \quad \ddot{\varphi}(0)=-\frac{2}{3\pi}.
\end{equation*}
It is easy to check that 
\begin{equation*}
    \widehat{\varphi}(0)=0,\quad\dot{\widehat{\varphi}}(0)=1,\quad\ddot{\widehat{\varphi}}(0)=-\frac{2}{3\pi}.
\end{equation*}
Since the Taylor expansions of these two functions around $0$ agree to third order, and both are 3 times continuously differentiable on $(0,\pi)$, we obtain by Lagrange's remainder theorem that for any $t\in[0,\pi)$, 
\begin{equation*}
    \wh{\varphi}(t)-\varphi(t)=\underset{0}{\overset{t}{\int}}\left(\dddot{\wh{\varphi}}(s)-\dddot{\varphi}(s)\right)\frac{s^{2}}{2}\mathrm{d}s\leq C_{1}t^{3}
\end{equation*}
for some finite constant $C_{1}=\underset{t\in[0,\pi)}{\sup}\left|\dddot{\wh{\varphi}}(t)-\dddot{\varphi}(t)\right|$. At $t=\pi$ we have $\widehat{\varphi}(\pi)-\varphi(\pi)=\frac{\pi}{1+\pi/3}-\frac{\pi}{2}\leq0$ hence the same bound holds for $t\in[0,\pi]$.
\end{proof}

\begin{lemma}
	\label{lemm:ddot_phi_prop}
    One has	
    \begin{align*}
      \dot{\varphi}(t) &\ge \frac{1}{2}, \quad  t \in [0, \frac{\pi}{2}] \\
      \abs*{\ddot{\varphi}(t)} & \le 4, \quad  t \in [0, \pi] 
    \end{align*}
\end{lemma}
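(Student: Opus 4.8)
The plan is to derive closed formulas for $\dot\varphi$ and $\ddot\varphi$ in terms of $u(t) := \cos\varphi(t) = (1-t/\pi)\cos t + \tfrac1\pi\sin t$, and then feed in the elementary monotonicity/concavity facts for $\varphi$ recorded earlier (\Cref{lem:aevo_properties,lem:phi_concave,lemma:upper_bound}): $\varphi$ is nondecreasing and concave on $[0,\pi]$ with $\dot\varphi(0)=1$, it maps $[0,\pi]$ into $[0,\tfrac\pi2]$, and $\varphi(t)\le \tfrac{t}{1+t/(3\pi)}\le t$. In particular, concavity together with $\varphi(0)=0$ and $\varphi(\pi)=\tfrac\pi2$ gives $\varphi(t)\ge t/2$ on $[0,\pi]$. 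A direct computation yields $\dot u(t) = -(1-t/\pi)\sin t$ and $\ddot u(t) = \tfrac2\pi\sin t - u(t)$. Differentiating $\cos\varphi = u$ once and twice then gives, for $t\in(0,\pi)$,
\begin{equation*}
\dot\varphi(t) = \frac{(1-t/\pi)\sin t}{\sin\varphi(t)}, \qquad \ddot\varphi(t) = \frac{u(t)\bigl(1-\dot\varphi(t)^2\bigr) - \tfrac2\pi\sin t}{\sin\varphi(t)},
\end{equation*}
and both extend continuously to $[0,\pi]$ (the apparent singularity at $t=0$ is removable since $\varphi(t)\sim t$ there, with limiting values $\dot\varphi(0)=1$, $\ddot\varphi(0)=-\tfrac2{3\pi}$).

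For the lower bound on $\dot\varphi$, fix $t\in(0,\tfrac\pi2]$. Then $0<\varphi(t)\le t\le\tfrac\pi2$, so $0<\sin\varphi(t)\le\sin t$ by monotonicity of $\sin$ on $[0,\tfrac\pi2]$, hence $\dot\varphi(t) = \frac{(1-t/\pi)\sin t}{\sin\varphi(t)} \ge 1 - \tfrac t\pi \ge \tfrac12$. Since $\dot\varphi(0)=1$, the bound $\dot\varphi(t)\ge\tfrac12$ holds on all of $[0,\tfrac\pi2]$ by continuity.

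For the bound on $\ddot\varphi$, concavity gives $\ddot\varphi\le 0$ on $[0,\pi]$, so it suffices to bound $\ddot\varphi$ from below. Since $\varphi(t)\in[0,\tfrac\pi2]$ we have $u(t)=\cos\varphi(t)\in[0,1]$, and since $\varphi$ is concave and nondecreasing with $\dot\varphi(0)=1$ we have $\dot\varphi(t)\in[0,1]$, so $1-\dot\varphi(t)^2\in[0,1]$ and $u(t)(1-\dot\varphi(t)^2)\ge 0$. Thus the numerator of $\ddot\varphi$ is at least $-\tfrac2\pi\sin t$, giving
\begin{equation*}
\ddot\varphi(t) \ge -\frac2\pi\cdot\frac{\sin t}{\sin\varphi(t)}.
\end{equation*}
Writing $\sin t = 2\sin(t/2)\cos(t/2)$ and using $\varphi(t)\ge t/2$ with $t/2,\varphi(t)\in[0,\tfrac\pi2]$ to get $\sin\varphi(t)\ge\sin(t/2)$, we obtain $\frac{\sin t}{\sin\varphi(t)} \le 2\cos(t/2)\le 2$, whence $\ddot\varphi(t)\ge -\tfrac4\pi$. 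Combined with $\ddot\varphi\le 0$ this yields $|\ddot\varphi(t)|\le \tfrac4\pi\le 4$ on $[0,\pi]$, which is in fact stronger than claimed.

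The computation is essentially routine; the only points needing a little care are the removable singularity at $t=0$ in the displayed formulas, and the fact that $\varphi$ maps $[0,\pi]$ into $[0,\tfrac\pi2]$ (equivalently $u\ge 0$ on $[0,\pi]$). The latter is already part of \Cref{lem:aevo_properties}, but if one wanted a self-contained argument it is a one-line consequence of $\tan x\ge x$ on $[0,\tfrac\pi2)$ applied with $x=\pi-t$ for $t\in(\tfrac\pi2,\pi]$ (for $t\le\tfrac\pi2$ nonnegativity of $u$ is immediate). I do not expect any genuine obstacle here.
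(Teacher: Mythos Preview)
Your proof is correct. The overall strategy matches the paper's---compute $\dot\varphi$ and $\ddot\varphi$ explicitly, then feed in the basic facts $\varphi(t)\le t$, $\varphi(t)\ge t/2$, $\dot\varphi\in[0,1]$, $\varphi\in[0,\pi/2]$---but your treatment of $\ddot\varphi$ is cleaner than the paper's. The paper works from a form of $\ddot\varphi$ containing the term $\cos t\sin\varphi-\dot\varphi\sin t\cos\varphi$, bounds this from below by $-\sin(t-\varphi)$, and then uses $t-\varphi(t)\le t^2/\pi$ together with $\sin(t/2)\ge t/\pi$ to arrive at $-\ddot\varphi\le \pi+2/\pi<4$. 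Your rearrangement $\ddot\varphi=\bigl(u(1-\dot\varphi^2)-\tfrac2\pi\sin t\bigr)/\sin\varphi$ makes the sign structure transparent: the first numerator term is nonnegative on the nose, so one lands directly at $\ddot\varphi\ge -\tfrac2\pi\cdot\frac{\sin t}{\sin\varphi(t)}\ge -\tfrac4\pi$, a sharper constant with less work. For the $\dot\varphi$ bound, the paper instead uses concavity to reduce to $\dot\varphi(\pi/2)=\tfrac{1/2}{\sin\varphi(\pi/2)}\ge\tfrac12$; your direct inequality $\dot\varphi(t)\ge 1-t/\pi$ is equally short.
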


\begin{proof}
  We know $\varphi$ is monotonically increasing and concave on $[0, \pi]$, thus for $t \in [0,
  \pi/2]$, 
	\begin{align*}
	\dot{\varphi}(t) &\ge \dot{\varphi}(\frac{\pi}{2}) \\
	&= \frac{1/2}{\sin(\varphi(\pi/2))} \\
	&\ge \frac{1}{2}.
	\end{align*}
	Using \Cref{lem:phi_l_lb}
	we also have for $t \in [0, \pi]$, $\dot{\varphi}(t) \le \dot{\varphi}(0) = 1$, 
	\begin{align*}
	\varphi(t) &\ge \frac{t}{ 1 + t/\pi} \ge \frac{t}{2},%
	\end{align*}
	and the first bound here can be used to obtain
	\begin{equation*}
	t - \varphi(t) \le \frac{t^2/\pi}{1 + t/\pi} \le t^2/\pi  .
	\end{equation*}
	Thus since $\varphi \leq \pi/2$
	\begin{align*}
	    \cos t \sin \varphi(t) - \dot{\varphi}(t) \sin t \cos \varphi(t) & \ge \cos t \sin \varphi(t) - \sin t \cos \varphi(t) \\
	    &\ge -\sin(t -\varphi(t)),
	\end{align*}
  and in particular, using the expression for $\ddot{\varphi}$ from \Cref{lem:aevo_properties}
  \begin{align*}
	-\ddot{\varphi}(t) &= -(1 - \frac{t}{\pi})\frac{\cos t \sin \varphi(t) - \dot{\varphi}(t) \sin t \cos \varphi(t)}{\sin^2 \varphi(t)} + \frac{\sin t}{\pi \sin(\varphi(t))} \\
	&\le (1 - \frac{t}{\pi}) \frac{\sin(t -\varphi(t))}{\sin^2 \varphi(t)}+\frac{2}{\pi} \\
	&\le \frac{t^2/\pi}{\sin^2(t/2)}+\frac{2}{\pi} \\
	&\le \frac{t^2/\pi}{(t/\pi)^2} + \frac{2}{\pi} \\
	&\le 4.
	\end{align*}
\end{proof}

\begin{lemma}\label{phi_third_der_bound}
  There exist constants $c_3, c_4 > 0$ such that $\ddot{\varphi}(t) < -c_3$ for $t \in [0,
  \frac{\pi}{2}]$ and $\abs{\dddot{\varphi}} \le c_4$ for $t \in [0, \pi]$.
\end{lemma}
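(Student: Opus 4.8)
The plan is to handle the two claims separately, each time reducing to continuity of a derivative of $\varphi$ on a compact interval. For the first claim, recall (\Cref{lem:aevo_properties}) that $\varphi$ is three times continuously differentiable on $(0,\pi)$ with $\ddot\varphi(0)=-\tfrac{2}{3\pi}$, so $\ddot\varphi$ is continuous on the compact interval $[0,\pi/2]$; it therefore suffices to prove the \emph{pointwise} negativity $\ddot\varphi(t)<0$ for every $t\in[0,\pi/2]$, after which one takes $c_3:=\tfrac12\bigl(-\max_{t\in[0,\pi/2]}\ddot\varphi(t)\bigr)>0$. At $t=0$ the value $\ddot\varphi(0)=-\tfrac{2}{3\pi}<0$ settles the endpoint, so the real content is negativity on $(0,\pi/2]$.

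To get $\ddot\varphi<0$ on $(0,\pi/2]$, I would work with $g(t)=(1-t/\pi)\cos t+\tfrac1\pi\sin t$, for which $\cos\varphi=g$. Differentiating $\cos\varphi=g$ twice yields $\ddot\varphi=-(\ddot g+g\,\dot\varphi^{\,2})/\sin\varphi$ and $\dot\varphi^{\,2}=\dot g^{\,2}/(1-g^2)$; since $\sin\varphi>0$ and $0<g<1$ (hence $0<1-g^2<1$) on $(0,\pi/2]$, the inequality $\ddot\varphi<0$ is equivalent to $\ddot g(1-g^2)+g\,\dot g^{\,2}>0$. Using $\dot g=-(1-t/\pi)\sin t$, the identity $\ddot g=\tfrac{2}{\pi}\sin t-g$ (immediate from the definition of $g$, since $g+\ddot g=\tfrac2\pi\sin t$), and $g-(1-t/\pi)\cos t=\tfrac1\pi\sin t$, an elementary manipulation collapses the left-hand side to
\[
\frac{2}{\pi}\bigl(\sin t-g\,t\bigr)+\frac{g}{\pi^{2}}\bigl(t^{2}-\sin^{2}t\bigr).
\]
The second summand is nonnegative because $g>0$ and $t\ge\sin t\ge 0$ on $[0,\pi/2]$, and the first is strictly positive because $\sin t-g\,t=(1-t/\pi)(\sin t-t\cos t)$ with $\sin t-t\cos t=\int_{0}^{t}s\sin s\,\mathrm ds>0$ for $t\in(0,\pi/2]$. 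Hence the displayed quantity is positive on $(0,\pi/2]$, giving $\ddot\varphi<0$ there, and the first claim follows by compactness.

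For the second claim, $\dddot\varphi$ is continuous on $(0,\pi)$ and hence bounded on $[\delta,\pi-\delta]$ for any fixed $\delta>0$, so only neighborhoods of the two endpoints need care. Near $t=\pi$, $g(\pi)=0$ lies strictly inside $(-1,1)$ and $g$ is real-analytic, so $\varphi=\arccos g$ is smooth there and $\dddot\varphi$ is bounded. Near $t=0$, I would use $g(t)=1-\tfrac{t^2}{2}+\tfrac{t^3}{3\pi}+O(t^4)$: writing $1-g(t)=t^{2}\widetilde g(t)$ with $\widetilde g$ analytic and $\widetilde g(0)=\tfrac12$, and $\arccos(1-s)=\sqrt{2s}\,h(s)$ with $h$ analytic near $0$ and $h(0)=1$, one gets $\varphi(t)=t\sqrt{2\widetilde g(t)}\;h\!\bigl(t^{2}\widetilde g(t)\bigr)$, which is analytic near $t=0$ (equivalently $\varphi(t)=t-\tfrac{t^2}{3\pi}+O(t^3)$); in particular $\dddot\varphi$ extends continuously to $t=0$ and is bounded near it. Patching the three bounds produces a finite $c_4$ with $|\dddot\varphi|\le c_4$ on $[0,\pi]$. (If \Cref{lem:aevo_properties} is read as already granting $C^{3}$ regularity on the closed interval $[0,\pi]$, this paragraph collapses to: $\dddot\varphi$ is continuous on a compact set, hence bounded.)

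The main obstacle is the first claim, where one needs a genuinely \emph{strict} sign for $\ddot\varphi$ rather than the mere concavity ($\ddot\varphi\le 0$) that is already available; the cleanest route I see is the reduction above to the manifestly-signed expression $\tfrac{2}{\pi}(\sin t-gt)+\tfrac{g}{\pi^2}(t^{2}-\sin^{2}t)$, where the small trick is the factorization $\sin t-gt=(1-t/\pi)(\sin t-t\cos t)$ that makes positivity transparent on all of $(0,\pi/2]$. The endpoint analysis for $\dddot\varphi$ is routine once one invokes analyticity of $\varphi$ near $0$ (or the earlier regularity lemma on the closed interval).
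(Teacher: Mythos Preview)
Your proposal is correct. The route differs from the paper's in a notable way for the first claim.

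The paper's proof is a two-line citation: for $c_3$ it points directly to \Cref{lem:aevo_properties}, whose item 4 states verbatim that $\ddot\varphi<-c<0$ on $[0,\pi/2]$ for an absolute constant $c$; for $c_4$ it invokes smoothness of $\varphi$ on $(0,\pi)$ together with the explicit finite endpoint values $\dddot\varphi(0)=-\tfrac{1}{3\pi^2}$ and $\dddot\varphi(\pi)=\tfrac{2}{\pi}$ (equivalently, the $C^3$ regularity on $[0,\pi]$ recorded in \Cref{lem:phi_concave}). So the paper treats the first claim as already established elsewhere and does no new work.

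You instead give a self-contained proof of the strict negativity: the reduction of $\ddot\varphi<0$ to $\ddot g(1-g^2)+g\dot g^2>0$, the collapse of this quantity to $\tfrac{2}{\pi}(\sin t-gt)+\tfrac{g}{\pi^2}(t^2-\sin^2 t)$, and the factorization $\sin t-gt=(1-t/\pi)(\sin t-t\cos t)$ are all correct (I checked the algebra). This buys you an explicit, elementary argument that does not rely on the unstated proof of \Cref{lem:aevo_properties} item 4 (which is imported from \cite{Buchanan2020-er}), at the cost of considerably more work than the paper expends. For the second claim your argument and the paper's coincide: both reduce to continuity of $\dddot\varphi$ on the closed interval, and your parenthetical remark about $C^3$ regularity on $[0,\pi]$ is exactly how the paper proceeds.
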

\begin{proof}
  The existence of $c_3$ follows from \Cref{lem:aevo_properties} directly. The existence
  of $c_4$ follows from smoothness of $\varphi$ on $(0, \pi)$ and the fact that $\dddot{\varphi}(0) =
  -\frac{1}{3\pi^2}$, $\dddot{\varphi}(\pi) = \frac{2}{\pi}$ both exist. 
\end{proof}

\begin{lemma}
	\label{lemma:phi_dot_old_upper_bound}
	There exists an absolute constant $C>0$ such that for any $0 < t \leq \pi$ and $\ell \in \N_0$, one has
	\begin{align*}
	\sum_{\ell' = \ell}^{\infty }\dot{\varphi}^{[\ell']}(t) &\le \frac{C}{t}\frac{1}{1 + \ell t/(3\pi)}
	\end{align*}
\end{lemma}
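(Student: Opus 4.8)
## Proof Proposal for \texorpdfstring{$\sum_{\ell' \ge \ell} \dot{\varphi}^{[\ell']}(t) \le \frac{C}{t}\frac{1}{1+\ell t/(3\pi)}$}{Lemma}

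\textbf{Proof proposal.} The plan is to combine the quadratic decay estimate for the derivative of the iterated angle evolution function supplied by \Cref{lem:phi_lower_bound} with an elementary integral‑test comparison. First I would record two preliminary facts. (i) By the chain rule $\dot{\varphi}^{[\ell']} = \prod_{j=0}^{\ell'-1} \dot{\varphi}\circ\varphi^{[j]}$, and since $\varphi$ is nondecreasing and concave on $[0,\pi]$ with $\dot{\varphi}(0)=1$ (see the regularity facts used in the proof of \Cref{lemma:phi_lb} and \Cref{lemm:ddot_phi_prop}), every factor lies in $[0,1]$; hence $0\le\dot{\varphi}^{[\ell']}(t)\le 1$ for all $\ell'\in\N_0$ and $t\in[0,\pi]$. (ii) Invoking the second conclusion of \Cref{lem:phi_lower_bound} with $\veps=\tfrac12$ produces absolute constants $N_0\in\N$ and $C_0>0$ such that $\dot{\varphi}^{[\ell']}(t)\le C_0\bigl(1+\ell' t/(3\pi)\bigr)^{-2}$ for every $\ell'\ge N_0$ and every $t\in[0,\pi]$.

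Fix $0<t\le\pi$ and $\ell\in\N_0$, and set $m=\max\{\ell,N_0\}$. I would split $\sum_{\ell'=\ell}^{\infty}\dot{\varphi}^{[\ell']}(t)=\sum_{\ell'=\ell}^{m-1}\dot{\varphi}^{[\ell']}(t)+\sum_{\ell'=m}^{\infty}\dot{\varphi}^{[\ell']}(t)$. The first sum has at most $N_0$ terms, each bounded by $1$ by fact (i), so it is $\le N_0$ (and it is empty when $\ell\ge N_0$). For the second sum, fact (ii) applies to every index and the integral test gives $\sum_{\ell'=m}^{\infty}C_0\bigl(1+\ell' t/(3\pi)\bigr)^{-2}\le C_0\bigl(1+mt/(3\pi)\bigr)^{-2}+C_0\int_{m}^{\infty}\bigl(1+st/(3\pi)\bigr)^{-2}\,ds = C_0\bigl(1+mt/(3\pi)\bigr)^{-2}+\tfrac{3\pi C_0}{t}\bigl(1+mt/(3\pi)\bigr)^{-1}$, where convergence of the tail is also justified by fact (ii).

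It then remains to bound each of the three resulting pieces by a constant multiple of $\tfrac1t\bigl(1+\ell t/(3\pi)\bigr)^{-1}$. Since $m\ge\ell$ we have $\bigl(1+mt/(3\pi)\bigr)^{-1}\le\bigl(1+\ell t/(3\pi)\bigr)^{-1}$, which directly handles the term $\tfrac{3\pi C_0}{t}\bigl(1+mt/(3\pi)\bigr)^{-1}$ and, together with $t\le\pi$, also controls $C_0\bigl(1+mt/(3\pi)\bigr)^{-2}\le\tfrac{\pi C_0}{t}\bigl(1+\ell t/(3\pi)\bigr)^{-1}$; the residual constant $N_0$ (present only when $\ell<N_0$) is handled by $t\le\pi$ and $1+\ell t/(3\pi)\le 1+N_0$, which give $\tfrac1t\bigl(1+\ell t/(3\pi)\bigr)^{-1}\ge\tfrac1{\pi(1+N_0)}$. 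Collecting these yields the claim with $C$ an explicit absolute constant (of size $\lesssim N_0^2+C_0$). There is no serious obstacle here; the only point that warrants care is the notational one that the symbol $L$ in \Cref{lem:phi_lower_bound} denotes a generic iteration count rather than the network depth, so that the quadratic estimate may be applied to each iterate $\ell'\ge N_0$ with $N_0$ absolute, the finitely many smaller iterates being absorbed by the trivial bound $\dot{\varphi}^{[\ell']}\le 1$.
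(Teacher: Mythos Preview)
Your proof is correct and follows essentially the same route as the paper: apply the quadratic decay estimate $\dot{\varphi}^{[\ell']}(t)\le C_0(1+\ell' t/(3\pi))^{-2}$ from \Cref{lem:phi_lower_bound} and sum via the integral test. The only difference is that you are explicit about the threshold $N_0$ below which the quadratic bound has not been established and absorb those finitely many terms via $\dot{\varphi}^{[\ell']}\le 1$, whereas the paper simply asserts the quadratic bound for all $\ell$ (which is harmless, since for the finitely many indices $\ell<N_0$ the right-hand side is bounded below by a positive absolute constant, so one can enlarge $C_0$).
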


\begin{proof}
Using \Cref{lem:phi_lower_bound}, we have
\begin{equation*}
    \dot{\varphi}^{[\ell]}(t) \leq \frac{C}{( 1 + \ell t/ (3\pi))^2}.
\end{equation*}
We can then calculate
\begin{align*}
  \sum_{\ell'=\ell}^{\infty} \dot{\varphi}^{[\ell']}(t) \leq 
  C \sum_{\ell'=\ell}^{\infty} \frac{1}{( 1 + \ell' t/ (3\pi))^2}
  &\leq C \left( \frac{1}{1 + \ell t/(3\pi)} + \int_{\ell' = \ell}^\infty \frac{1}{( 1 + \ell' t / (3\pi) )^2} \diff \ell' \right) \\
  &\leq C \left( \frac{1}{1 + \ell t/(3\pi)} + \frac{3\pi / t}{1 + \ell t/(3\pi)} \right) \\
  &\le \frac{C}{t} \frac{1}{1 + \ell t/(3\pi)},
\end{align*}
as claimed.

\end{proof}

\section{Auxiliary Results}
\label{sec:aux}
Results in this section are reproduced from the literature for self-containedness, and for the
most part are presented without proofs.

\subsection{Certificates Imply Generalization}

\begin{theorem}[{\cite[Theorem B.1]{Buchanan2020-er}}, specialized slightly]
  \label{thm:1}
  Let $\manifold$ be a two curve problem instance. For any $0 < \delta \leq 1/e$, choose $L$ so
  that
  \begin{equation*}
    L \geq C_1 \max \set*{
      C_{\mu} \log^9(1/\delta) 
      \log^{24} \left( C_{\mu}\adim \log(1 / \delta) \right),
      \kappa^2 C_{\lambda}
    },
  \end{equation*}
  let $N \geq L^{10}$, set $n = C_2 L^{99} \log^9 (1/\delta) \log^{18} (L\adim)$, and fix $\tau >
  0$ such that
  \begin{equation*}
    \frac{C_3}{nL^2} \leq \tau \leq \frac{C_4}{nL}.
  \end{equation*}
  Then if there exists a function $g \in \Lmuinfm$ such that
  \begin{equation}
    \norm*{
      \ntkoper_{\mu}^{\mathrm{NTK}}[g] - \prederr_0
    }_{\Lmuinfm}
    \leq C_5 
    \frac{\sqrt{\log(1/\delta) \log(n\adim)}}{L \min \set*{\rho_{\min}^{\qcert},
    \rho_{\min}^{-\qcert}}};
    \enspace
    \norm*{g}_{\Lmuinfm}
    \leq C_6
    \frac{\sqrt{\log(1/\delta) \log(n\adim)}}{n\rho_{\min}^{\qcert}},
    \label{eq:mainthm_cert_condition}
  \end{equation}
  with probability at least $1 - \delta$ over the random initialization of the network and
  the i.i.d.\ sample from $\mu$, the parameters obtained at
  iteration $\floor{L^{39/44} / (n\tau)}$ of gradient descent on the finite sample loss
  $\loss_{\mu^N}$ yield a classifier that separates the two manifolds.

  The constants $C_1, \dots, C_4 > 0$ depend only on the constants $\qcert, C_5, C_6 > 0$,
  the constants $\kappa$, $C_{\lambda}$ are respectively
  the extrinsic curvature constant and the global regularity constant defined in
  \cite[\S 2.1]{Buchanan2020-er}, and the constant
  $C_{\mu}$ is defined as $ \max \set{\rho_{\min}^{q}, \rho_{\min}^{-q}} %
  (1 + \rho_{\max})^{6}  \left( \min\,\set*{\mu(\mplus), \mu(\mminus)}
  \right)^{-11/2}$, where $q = 11+8\qcert$. 
\end{theorem}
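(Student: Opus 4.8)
Since Theorem~\ref{thm:1} is quoted from \cite[Theorem~B.1]{Buchanan2020-er} (and this is acknowledged in the statement), the ``proof'' I would give is a recapitulation of that argument, specialized to the present notation; the plan is to run the ``NTK regime'' program outlined in Section~\ref{sec:two_curves}. At a high level there are three moves: (i) show that for $n$ polynomially large in $L$ the time-varying kernel $\ntk_k^N$ along the gradient-descent trajectory stays uniformly close to the initialization kernel $\ntk^{\mathrm{NTK}}$; (ii) show that for $N \gtrsim L^{10}$ samples the finite-sample error $\prederr_{\vtheta_k}$ is uniformly well-approximated by the nominal error $\prederr_{k} = (\Id - \tau\ntkoper_\mu^{\mathrm{NTK}})^k[\prederr_0]$; and (iii) use the certificate hypothesis \eqref{eq:mainthm_cert_condition} to prove geometric decay of $\norm{\prederr_k}$ to below the classification margin within $\floor{L^{39/44}/(n\tau)}$ iterations, and then transfer this decay to $\prederr_{\vtheta_k}$.

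For step (i) I would first establish, via a covering argument over $\manifold$ (with covering constant $\lesssim \len(\manifold)$) and matrix/scalar concentration, that at initialization the random features and their derivatives concentrate, giving $\norm{\ntk^{\mathrm{NTK}} - \E \ntk^{\mathrm{NTK}}}_{L^\infty(\manifold\times\manifold)} \lesssim n/L$ on a high-probability event; then, along the trajectory, bound the per-step weight movement in terms of $\sqrt{\loss_{\mu^N}(\vtheta_k)}$ and combine with a Lipschitz estimate for $\vtheta \mapsto \wt\nabla f_{\vtheta}$ to control $\norm{\ntk_k^N - \ntk^{\mathrm{NTK}}}_{L^\infty}$. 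This is circular in the familiar way --- the trajectory bound needs the loss small, the loss being small needs the kernel close to $\ntk^{\mathrm{NTK}}$ --- and I expect the resolution, an induction over $k$ that simultaneously maintains (a) a cumulative weight-movement bound, (b) the kernel-deviation bound, and (c) geometric decay of $\norm{\prederr_{\vtheta_k}}$, to be the main obstacle and the most delicate bookkeeping in the proof. Step (ii) is then a separate, one-shot concentration statement: for $N \gtrsim L^{10}$, $\norm{\ntkoper_{\mu^N}^N - \ntkoper_\mu^{\mathrm{NTK}}}_{L^2_\mu \to L^2_\mu}$ and an $L^\infty$ analogue are $\lesssim n/L$ with high probability, after which a discrete Gr\"onwall comparison between the two update equations shows $\prederr_{\vtheta_k}$ tracks $\prederr_k$ up to the stopping time.

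For step (iii) I would diagonalize the positive compact operator $\ntkoper_\mu^{\mathrm{NTK}}$ as $\sum_i \lambda_i v_i v_i\adj$ with $\lambda_1 \ge \lambda_2 \ge \cdots \ge 0$, so $\prederr_k = \sum_i (1-\tau\lambda_i)^k \ip{\prederr_0}{v_i} v_i$, and write $\prederr_0 = \ntkoper_\mu^{\mathrm{NTK}}[g] + e_0$ using the certificate, with $\norm{g} \lesssim 1/n$ and $\norm{e_0} \lesssim 1/L$ (up to the density and log factors of \eqref{eq:mainthm_cert_condition}). Splitting the spectrum at a threshold $\lambda_\star$: the tail $\sum_{\lambda_i < \lambda_\star}(1-\tau\lambda_i)^{2k}\ip{\prederr_0}{v_i}^2 \le 2\lambda_\star^2\norm{g}^2 + 2\norm{e_0}^2$ is below the target for $\lambda_\star$ polynomial in $n$ (using $\ip{\prederr_0}{v_i} = \lambda_i\ip{g}{v_i} + \ip{e_0}{v_i}$), while the head is bounded by $e^{-2k\tau\lambda_\star}\norm{\prederr_0}^2$. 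Since the leading eigenvalue scales like $\lambda_1 \asymp nL$, the step-size window $\tfrac{C''}{nL^2}\le\tau\le\tfrac{c}{nL}$ keeps the iteration stable while making $k\tau$ of order $L^{39/44}/n$; choosing $\lambda_\star \asymp n$ then gives $k\tau\lambda_\star \gtrsim L^{39/44}$, which is exactly what kills the head, and this is what dictates the exponent $39/44$ and the need for the lower bound on $\tau$ (so that $k$ is not so large as to violate the weight-movement budget of step (i)).

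Finally I would upgrade the $L^2_\mu$ decay of $\prederr_{\vtheta_k}$ to an $L^\infty(\manifold)$ statement --- this is where the heavy overparameterization $n \asymp L^{99}\polylog$ enters, either directly through the trajectory control or through the extra regularity carried by $\ntkoper_\mu^{\mathrm{NTK}}[g]$ --- and conclude that $\norm{\prederr_{\vtheta_k}}_{L^\infty(\manifold)} < 1$, whence $\sign(f_{\vtheta_k}) = f_\star$ on all of $\manifold$, i.e.\ the network separates the two manifolds. The failure probability $\delta$ is absorbed by the requirement that $L$ (and hence $n$, $N$) be polynomial in $\log(1/\delta)$, and the density constant $C_\mu = \max\set{\rho_{\min}^{q},\rho_{\min}^{-q}}(1+\rho_{\max})^{6}\min\set{\mu(\mplus),\mu(\mminus)}^{-11/2}$ (with $q = 11 + 8\qcert$) is precisely the worst-case factor accumulated when passing between Lebesgue and $\mu$-weighted norms, lower-bounding $\lambda_\star$, and controlling the sampling step.
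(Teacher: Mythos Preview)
The paper does not prove Theorem~\ref{thm:1}: it appears in Section~\ref{sec:aux} (``Auxiliary Results''), whose preamble explicitly states that ``Results in this section are reproduced from the literature for self-containedness, and are presented without proofs.'' So there is no proof in the paper to compare your proposal against.

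Your sketch is a reasonable high-level recapitulation of the NTK-regime argument from \cite{Buchanan2020-er}: the three-step program (kernel stability along the trajectory, finite-sample to population comparison, certificate $\Rightarrow$ spectral decay) is the right architecture, and you correctly flag the inductive coupling between weight movement and loss decay as the delicate part. Since the paper treats this result as a black box, your proposal is appropriate in spirit; a full verification would require checking the details against \cite[Theorem~B.1]{Buchanan2020-er} directly.
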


\subsection{Concentration of the Initial Random Network and Its Gradients}

\begin{theorem}[{Corollary of \cite[Theorem B.2, Lemma C.11]{Buchanan2020-er}}]
  \label{thm:2}
  Let $\manifold$ be a two curve problem instance. For any $d \geq K\log (n\adim
  \len(\manifold))$, if $n \geq K' d^4 L^5$ then one has on an event of probability at least $1 -
  e^{-cd}$
  \begin{equation*}
    \norm{\ntk - \ntk^{\mathrm{NTK}}}_{L^\infty(\manifold \times \manifold)}
    \leq Cn/L ,%
  \end{equation*}
  where $c, C, K, K' > 0$ are absolute constants. 
\end{theorem}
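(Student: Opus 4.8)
The plan is to obtain the stated estimate as a near-verbatim specialization of the general neural-tangent-kernel concentration bound \cite[Theorem B.2]{Buchanan2020-er} to the two-curve geometry, using \cite[Lemma C.9]{Buchanan2020-er} to pin down the deterministic object being compared against. Recall that $\Theta^{\mathrm{NTK}}(\vx,\vx') = \ip{\wt{\nabla} f_{\vtheta_0}(\vx)}{\wt{\nabla} f_{\vtheta_0}(\vx')}$ is a sum over layers $\ell = 0,\dots,L-1$ of products of forward-feature inner products and backward-Jacobian inner products, all evaluated at the Gaussian (fan-out) initialization $\vtheta_0$ fixed in \Cref{sec:prob_form_extended}. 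First I would invoke \cite[Lemma C.9]{Buchanan2020-er}, which identifies the large-width value of this quantity: the expected feature norms are preserved layer by layer, the expected cosine of the angle between the two forward features evolves through the arc-cosine map $\varphi$, and summing the resulting per-layer contributions reproduces exactly $\tfrac{n}{2}\sum_{\ell=0}^{L-1}\prod_{\ell'=\ell}^{L-1}\bigl(1 - \tfrac{1}{\pi}\varphi^{[\ell']}(\angle(\vx,\vx'))\bigr) = \Theta(\vx,\vx')$ from \eqref{def:ntk}. Hence the deterministic kernel appearing in the statement is precisely the kernel that $\Theta^{\mathrm{NTK}}$ concentrates around.

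Next I would apply \cite[Theorem B.2]{Buchanan2020-er}, which states (for this initialization) that, on an event of probability at least $1 - e^{-cd}$, $\sup_{\vx,\vx'\in\manifold}\abs{\Theta^{\mathrm{NTK}}(\vx,\vx') - \Theta(\vx,\vx')} \le Cn/L$ provided $d \ge K\log(n\adim C_{\manifold})$ and $n \ge K'd^4L^5$, where $C_{\manifold}$ is a covering constant for $\manifold$ (the argument there is a pointwise sub-exponential concentration over the $n$ neurons and $L$ layers, upgraded to a uniform bound by a net over $\manifold\times\manifold$ together with the Lipschitz continuity in $\angle(\vx,\vx')$ of $\Theta$ and---on the good event---of $\Theta^{\mathrm{NTK}}$). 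The only step not already contained in that theorem is the bound $C_{\manifold}\lesssim\len(\manifold)$: since $\manifold = \mplus\cup\mminus$ is a union of two unit-speed closed curves of total arc length $\len(\manifold)$, and a unit-speed spherical curve contracts angular distance ($\angle(\vx,\vx')\le d_{\manifold}(\vx,\vx')$), a uniform $\varepsilon$-net of the arc-length intervals $[0,\len(\manifold_{\sigma})]$ of total size $O(\len(\manifold)/\varepsilon)$ pushes forward to an $\varepsilon$-net of $\manifold$, hence to an $\varepsilon$-net of $\manifold\times\manifold$ of size $O((\len(\manifold)/\varepsilon)^2)$. Substituting $C_{\manifold}\lesssim\len(\manifold)$ and adjusting the absolute constants $c,K,K'$ to absorb the resulting $\log\len(\manifold)$ term produces the hypotheses $d \ge K\log(n\adim\len(\manifold))$, $n\ge K'd^4L^5$ and the conclusion $\norm{\Theta - \Theta^{\mathrm{NTK}}}_{L^\infty(\manifold\times\manifold)} \le Cn/L$ on an event of probability at least $1 - e^{-cd}$, which is the claim.

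The main obstacle I expect is bookkeeping around conventions rather than any new argument. Two points need care: (i) that the Gaussian fan-out initialization used here ($W^{\ell}_{ij}\sim\mathcal{N}(0,2/n)$, $W^{L+1}_i\sim\mathcal{N}(0,1)$) is the one for which \cite[Theorem B.2, Lemma C.9]{Buchanan2020-er} is stated, or else that it is obtained from theirs via the reparameterization dictionary of \cite[\S A.3]{Buchanan2020-er}; and (ii) that the estimate of \cite[Theorem B.2]{Buchanan2020-er} compares $\Theta^{\mathrm{NTK}}$ to the deterministic kernel $\Theta$ of \eqref{def:ntk} \emph{directly}---i.e.\ that both the $O(1/n)$ gap between $\mathbb{E}[\Theta^{\mathrm{NTK}}]$ and $\Theta$ and the random fluctuation around the mean are already folded into the $Cn/L$ bound under $n\ge K'd^4L^5$---so that no residual mean-field correction term survives. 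With those points verified, the only genuinely self-contained ingredient is the elementary covering estimate above, and \Cref{sec:appendix_skel_bounds} supplies whatever regularity of the iterated angle evolution function the Lipschitz continuity of $\Theta$ requires.
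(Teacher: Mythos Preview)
Your proposal is correct and matches the paper's treatment: the paper presents this theorem in \Cref{sec:aux} as reproduced from the literature without proof, noting only (in the proof of \Cref{thm:we_generalize}) that ``in the context of the two curve problem, the covering constant $C_{\manifold}$ appearing in \cite[Theorem B.2]{Buchanan2020-er} is bounded by a constant multiple of $\len(\manifold)$ (this is how we obtain \Cref{thm:2} and some other results in \Cref{sec:aux}).'' Your argument---invoke \cite[Lemma C.9]{Buchanan2020-er} to identify $\Theta$ as the concentration target, apply \cite[Theorem B.2]{Buchanan2020-er}, and substitute $C_{\manifold}\lesssim\len(\manifold)$ via the elementary arc-length covering---is exactly this, spelled out in more detail than the paper itself provides.
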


\begin{lemma}[{\cite[Lemma D.11]{Buchanan2020-er}}] \label{lem:zeta_approx_bound}
  There are absolute constants $K, K'>0$ such that if $d \geq K \log(n\adim \len(\manifold))$ and
  $n \geq K' d^4 L$, then
  \begin{equation*}
    \mathbb{P}\left[\left\Vert f_{\vtheta_0}\right\Vert_{L^\infty}\leq\sqrt{d}\right]\geq1-e^{-cd},
  \end{equation*}
  \begin{equation*}
    \mathbb{P}\left[\left\Vert \zeta_0\right\Vert_{L^\infty}\leq\sqrt{d}\right]\geq1-e^{-cd}.
  \end{equation*}

  Define
  \begin{equation*}
    \zeta(\boldsymbol{x})=-f_{\star}(\boldsymbol{x})+\underset{\mathcal{M}}{\int}f_{\vtheta_0}(\boldsymbol{x}')\mathrm{d}\mu(\boldsymbol{x}').
  \end{equation*}
  Then under the same assumptions
  \begin{equation*}
    \mathbb{P}\left[\left\Vert \zeta_0-\zeta\right\Vert_{L^\infty}\leq\sqrt{\frac{d}{L^{2}}+d^{5/2}\sqrt{\frac{L}{n}}}\right]\geq1-e^{-cd}
  \end{equation*}
  for some numerical constant $c$.
\end{lemma}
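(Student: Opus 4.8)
The plan is to reduce each assertion to a pointwise Gaussian tail estimate and then make it uniform over $\manifold$ by a covering argument, reusing the layerwise concentration of the ReLU feature maps that underlies \Cref{thm:2}. Throughout, condition on the hidden-layer weights $\weight{1},\dots,\weight{L}$; under the fan-out initialization of \Cref{sec:prob_form_extended}, $\weight{L+1}$ has i.i.d.\ $\mathcal N(0,1)$ entries independent of these, so for each fixed $\vx$ the output $f_{\vtheta_0}(\vx)=\ip{\weight{L+1}}{\feature{L}{\vx}{\vtheta_0}}$ is conditionally $\mathcal N\bigl(0,\norm{\feature{L}{\vx}{\vtheta_0}}_2^2\bigr)$. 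The feature norm concentrates around $\norm{\vx}_2^2=1$: since each $\weight{\ell}$ has entries $\mathcal N(0,2/n)$ and $\mathbb{E}[\max(g,0)^2]=\tfrac12$ for $g\sim\mathcal N(0,1)$, a Bernstein-type martingale argument across the $L$ layers keeps $\norm{\feature{\ell}{\vx}{\vtheta_0}}_2^2$ within a factor $1+o(1)$ of $1$ for all $\ell\le L$ on an event of probability at least $1-e^{-cd}$ once $n\gtrsim d^4 L$ --- this is exactly the feature-map control developed en route to \Cref{thm:2}. Intersecting with the conditional Gaussian tail gives, for each fixed $\vx$, $\mathbb{P}\bigl[\abs{f_{\vtheta_0}(\vx)}>\sqrt d\bigr]\le e^{-cd}$.

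To pass to the supremum, take a net of $\manifold$ that is polynomially fine in the extrinsic (angle) metric; its logarithmic cardinality is $O\bigl(\log(n\adim\len(\manifold))\bigr)$ once the resolution is chosen as a small polynomial in $1/(n\adim\len(\manifold))$, and one can union-bound the pointwise estimate over it. On the standard event that $\norm{\weight{\ell}}_{\mathrm{op}}\lesssim 1$ for $\ell\le L$ and $\norm{\weight{L+1}}_2\lesssim\sqrt n$ (probability $\ge1-e^{-cn}$), the uniform feature-map estimates behind \Cref{thm:2} furnish a $\poly(n,L)$ modulus of continuity for $\vx\mapsto f_{\vtheta_0}(\vx)$ on $\manifold$ as a function of the angle, so the discretization error is negligible and the union-bound cost $O(\log(n\adim\len(\manifold)))$ is at most $cd$ under the hypothesis $d\ge K\log(n\adim\len(\manifold))$. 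This yields $\norm{f_{\vtheta_0}}_{L^\infty}\le\sqrt d$ with probability $\ge1-e^{-cd}$, and hence $\norm{\zeta_0}_{L^\infty}\le\norm{f_{\vtheta_0}}_{L^\infty}+1\le\sqrt d$ after adjusting $K$ so that $\sqrt d$ exceeds a constant.

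For the last bound, observe $\zeta_0(\vx)-\zeta(\vx)=f_{\vtheta_0}(\vx)-\int_{\manifold}f_{\vtheta_0}(\vx')\diff\mu(\vx')$, so it suffices to control $\sup_{\vx,\vx'\in\manifold}\abs{f_{\vtheta_0}(\vx)-f_{\vtheta_0}(\vx')}$. Conditioning again on the hidden weights, $f_{\vtheta_0}(\vx)-f_{\vtheta_0}(\vx')$ is centered Gaussian with variance $\norm{\feature{L}{\vx}{\vtheta_0}-\feature{L}{\vx'}{\vtheta_0}}_2^2=2\bigl(1-\ip{\feature{L}{\vx}{\vtheta_0}}{\feature{L}{\vx'}{\vtheta_0}}\bigr)$ up to the feature-norm fluctuations above. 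The infinite-width value of the feature inner product is $\cos\ivphi{L}(\angle(\vx,\vx'))$, and the fluid bound $\ivphi{L}(t)\le 3\pi/L$ of \Cref{lem:phi_l_ub} forces the deterministic part of this variance to be $O(1/L^2)$ --- the ``depth collapse'' of the random features --- while the finite-width deviation of the feature inner product from its deterministic limit, obtained from the same per-layer $O(1/\sqrt n)$ fluctuations aggregated over $L$ layers (with a $\poly(d)$ high-probability factor), contributes $O\bigl(d^{5/2}\sqrt{L/n}\bigr)$. Thus the conditional variance is $O\bigl(1/L^2+d^{5/2}\sqrt{L/n}\bigr)$, the Gaussian tail gives $\abs{f_{\vtheta_0}(\vx)-f_{\vtheta_0}(\vx')}\lesssim\sqrt{d/L^2+d^{5/2}\sqrt{L/n}}$ pointwise with probability $\ge1-e^{-cd}$, and the net-plus-modulus argument of the previous paragraph makes this uniform over pairs; integrating over $\vx'$ and taking $\sup_\vx$ closes the estimate. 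The main obstacle is this last step: pinning down the sharp $O(1/L)$ collapse rate for the \emph{random} feature maps uniformly over $\manifold$, which means promoting the deterministic bound of \Cref{lem:phi_l_ub} through the finite-width recursion and carefully tracking how the layerwise $1/\sqrt n$ errors accumulate into the $d^{5/2}\sqrt{L/n}$ term while keeping all estimates uniform in the angle.
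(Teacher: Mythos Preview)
The paper does not prove this lemma: it appears in \Cref{sec:aux} (``Auxiliary Results''), which explicitly states that results there are ``reproduced from the literature for self-containedness, and are presented without proofs,'' and the statement is attributed verbatim to \cite[Lemma~D.11]{Buchanan2020-er}. So there is no in-paper argument to compare against.

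That said, your sketch is the natural and essentially correct route, and almost certainly coincides with the argument in the cited source. The three ingredients you identify---conditioning on the hidden weights so that $f_{\vtheta_0}(\vx)$ is Gaussian with variance $\norm{\feature{L}{\vx}{\vtheta_0}}_2^2$, controlling feature norms and inner products layerwise to within $O(d^{a}\sqrt{L/n})$ of their deterministic limits, and passing to a uniform bound via a net whose log-cardinality is $O(\log(n\adim\len(\manifold)))$ plus a $\poly(n,L)$ Lipschitz modulus---are exactly the machinery that \cite{Buchanan2020-er} develops for its NTK concentration result (your \Cref{thm:2}), and the lemma is a by-product of that same toolkit. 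For the third inequality, your reduction to the oscillation $\sup_{\vx,\vx'}\abs{f_{\vtheta_0}(\vx)-f_{\vtheta_0}(\vx')}$ and the identification of the conditional variance as $2\bigl(1-\ip{\feature{L}{\vx}{\vtheta_0}}{\feature{L}{\vx'}{\vtheta_0}}\bigr)$ up to norm fluctuations is right, and the depth-collapse bound $\ivphi{L}(t)\le 3\pi/L$ from \Cref{lem:phi_l_ub} does give the $1/L^2$ deterministic piece. The only place where your sketch is genuinely schematic is the accounting that produces the precise $d^{5/2}\sqrt{L/n}$ exponent on the finite-width term: this comes from tracking how the per-layer $O(\sqrt{d/n})$ fluctuations in the inner-product recursion accumulate through the contractive map $\cos\circ\varphi\circ\acos$, with additional $\poly(d)$ factors entering from the uniform (net) control, and getting the exact polynomial requires the detailed induction carried out in the reference. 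You have correctly flagged this as the main technical step.
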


\subsection{Basic Estimates for the Infinite-Width Neural Tangent Kernel}

\begin{lemma}[{\cite[Lemma E.5]{Buchanan2020-er}}]
  \label{lem:aevo_properties}
  One has
  \begin{enumerate}
    \item $\varphi \in C^\infty(0, \pi)$, and $\dot{\varphi}$, $\ddot{\varphi}$, and
      $\dddot{\varphi}$ extend to continuous functions on $[0, \pi]$;
    \item $\varphi(0) = 0$ and $\varphi(\pi) = \pi/2$; $\dot{\varphi}(0) = 1$,
      $\ddot{\varphi}(0) = -2/(3\pi)$, and $\dddot{\varphi}(0) = -1/(3\pi^2)$; and
      $\dot{\varphi}(\pi) = \ddot{\varphi}(\pi) = 0$;
    \item $\varphi$ is concave and strictly increasing on $[0, \pi]$ (strictly concave
      in the interior);
    \item $\ddot{\varphi} < -c < 0$ for an absolute constant $c > 0$ on $[0, \pi/2]$;
    \item $0 < \dot{\varphi} < 1$  and $0 > \ddot{\varphi} \geq -C$ on $(0, \pi)$ for some
      absolute constant $C > 0$;
    \item $\nu(1 - C_1 \nu) \leq \varphi(\nu) \leq \nu(1 - c_1 \nu)$ on $[0, \pi]$ for some
      absolute constants $C_1, c_1 > 0$.
  \end{enumerate}
\end{lemma}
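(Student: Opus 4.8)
The plan is to write $\varphi = \arccos \circ g$, where $g(t) = (1-t/\pi)\cos t + \tfrac1\pi\sin t$ is the degree-one arc-cosine dual kernel evaluated at $\cos t$, and to deduce all six items from the elementary identities $\dot g(t) = -(1-t/\pi)\sin t$ and $\ddot g(t) = \tfrac1\pi\sin t - (1-t/\pi)\cos t = \tfrac2\pi\sin t - g(t)$. Since $\dot g < 0$ on $(0,\pi)$, $g$ decreases strictly from $g(0)=1$ to $g(\pi)=0$, so $g\in(-1,1)$ on the open interval; hence $\varphi=\arccos g$ is $C^\infty(0,\pi)$ and strictly increasing, which is most of item 3 together with $\dot\varphi>0$ in item 5. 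For the endpoint data (items 1 and 2) I would Taylor-expand: $1-g(t) = \tfrac{t^2}{2}\bigl(1 - \tfrac{2t}{3\pi} + O(t^2)\bigr)$, and substituting into $\arccos(1-u) = \sqrt{2u}\,(1 + u/12 + O(u^2))$ — which, after factoring out $\sqrt{2u}$, is analytic in $u$ — shows $\varphi$ extends analytically through $t=0$ with $\varphi(0)=0$, $\dot\varphi(0)=1$, $\ddot\varphi(0)=-2/(3\pi)$, and (carrying one further term) $\dddot\varphi(0)=-1/(3\pi^2)$; symmetrically, writing $s=\pi-t$ one computes $g(\pi-s)=\tfrac1\pi(\sin s - s\cos s)=\tfrac{s^3}{3\pi}+O(s^5)$ and $\arccos$ is analytic near $0$, so $\varphi(t)=\tfrac\pi2-\tfrac{(\pi-t)^3}{3\pi}+O((\pi-t)^5)$, giving $\varphi(\pi)=\pi/2$, $\dot\varphi(\pi)=\ddot\varphi(\pi)=0$, and continuity of $\dot\varphi,\ddot\varphi$ on $[0,\pi]$.

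The core of the proof is concavity together with the quantitative derivative bounds (the rest of items 3, 4, 5). Differentiating $\cos\varphi = g$ once gives $\dot\varphi = -\dot g/\sqrt{1-g^2}$, and again gives $\ddot\varphi = -F/(1-g^2)^{3/2}$ with $F(t) := \ddot g(t)\,(1-g(t)^2) + g(t)\,\dot g(t)^2$; hence $\ddot\varphi<0$ on $(0,\pi)$ — strict concavity — is exactly the claim $F>0$ there. Plugging in $\dot g = -(1-t/\pi)\sin t$ and $\ddot g = \tfrac2\pi\sin t - g$ and carrying out the $\cos^2+\sin^2$ cancellations, $F$ reduces to $\tfrac1\pi\bigl[\,2(\sin t - t\cos t) + \tfrac1\pi Q_1(t) + \tfrac1{\pi^2}Q_2(t)\,\bigr]$ with explicit trigonometric polynomials $Q_1,Q_2$; the leading term $2(\sin t - t\cos t)$ is nonnegative on $[0,\pi]$ (its derivative is $2t\sin t\ge0$ and it vanishes at $0$), and one checks $F(t) = \tfrac{2}{3\pi}t^3 + O(t^4)$ near $0$ and $F(t) = \tfrac2\pi(\pi-t) + O((\pi-t)^3)$ near $\pi$, so $F>0$ near both endpoints. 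Granting $F>0$ on $(0,\pi)$, everything else follows quickly: $\ddot\varphi<0$ there, so integrating $\ddot\varphi\le0$ from $0$ (where $\dot\varphi=1$) gives $0<\dot\varphi<1$ on $(0,\pi)$ (item 5, first clause, and item 3); since $-\ddot\varphi = F/\sin^3\varphi$ extends continuously to $[0,\pi]$ by item 1, $\ddot\varphi$ is continuous on a compact interval and $\le0$, hence $-C\le\ddot\varphi\le0$ for an absolute $C$ (item 5); and on $[0,\pi/2]$ we have $\ddot\varphi<0$ at every point (it equals $-2/(3\pi)$ at $0$ and $-F/\sin^3\varphi<0$ in the interior), so by continuity $\ddot\varphi$ attains a strictly negative maximum $-c$ there (item 4).

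Item 6 is then pure integration: $\nu - \varphi(\nu) = \int_0^\nu (1-\dot\varphi(t))\,dt$, and $1-\dot\varphi(t) = -\int_0^t\ddot\varphi(s)\,ds \in [ct, Ct]$ on $[0,\pi/2]$ by items 4 and 5, so $\tfrac c2\nu^2 \le \nu-\varphi(\nu)\le \tfrac C2\nu^2$ for $\nu\le\pi/2$; for $\nu\in(\pi/2,\pi]$ one bounds $\nu-\varphi(\nu)$ above by a constant (via $1-\dot\varphi\le C$) and below by the positive constant $\tfrac\pi2-\varphi(\tfrac\pi2)$ (using that $\nu\mapsto\nu-\varphi(\nu)$ is nondecreasing, since $1-\dot\varphi\ge0$), and $\nu^2\le\pi^2$ then folds these into $\nu(1-C_1\nu)\le\varphi(\nu)\le\nu(1-c_1\nu)$ for suitable absolute $c_1,C_1$.

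The step I expect to be the main obstacle is the uniform positivity $F(t)>0$ on all of $(0,\pi)$: the two endpoints are clean from the Taylor expansions, but closing the interior gap means controlling the not-so-small $O(1/\pi)$ and $O(1/\pi^2)$ corrections against the leading term $2(\sin t - t\cos t)$ without descending into a brute-force case split. I would attack it by a monotonicity/secant-line analysis of the reduced expression — e.g. showing that an appropriate ratio such as $\pi F(t)/(\sin t - t\cos t)$ has a single interior critical point and is bounded below there — in the same elementary spirit as the comparison arguments already used in \Cref{sec:appendix_skel_bounds} (cf.\ the proof of \Cref{lemma:upper_bound}).
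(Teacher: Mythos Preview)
The paper does not prove this lemma: it appears in \Cref{sec:aux}, whose results are explicitly ``reproduced from the literature for self-containedness, and \dots\ presented without proofs.'' There is therefore no in-paper argument to compare against.

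On its own merits, your plan is sound. The decomposition $\varphi = \arccos\circ g$ with $\dot g = -(1-t/\pi)\sin t$ and $\ddot g = \tfrac{2}{\pi}\sin t - g$ is correct; the endpoint Taylor expansions check out and give precisely the values in item~2 (in particular $\varphi(t) = t - t^2/(3\pi) - t^3/(18\pi^2) + O(t^4)$ yields $\dddot\varphi(0) = -1/(3\pi^2)$, and $\varphi(t) = \tfrac{\pi}{2} - \tfrac{(\pi-t)^3}{3\pi} + O((\pi-t)^5)$ gives $\dot\varphi(\pi)=\ddot\varphi(\pi)=0$). Your reduction of strict concavity to the positivity of $F = \ddot g(1-g^2) + g\dot g^2$ is correct, as are the endpoint expansions $F(t)=\tfrac{2}{3\pi}t^3+O(t^4)$ and $F(t)=\tfrac{2}{\pi}(\pi-t)+O((\pi-t)^3)$; and items 4--6 do follow from items 1--3 by compactness and integration exactly as you describe.

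The one undischarged obligation is the one you flag: interior positivity of $F$. One caution on your proposed attack: since $\dot g^2 - (1-g^2) = \sin^2\varphi\,(\dot\varphi^2 - 1)$, any argument that routes through $\dot\varphi<1$ is circular (that inequality is a \emph{consequence} of concavity). So the positivity of $F$ genuinely needs a direct trigonometric estimate on the explicit expression $F = \tfrac{2}{\pi}\sin t\,(1-g^2) + g\bigl[(1-t/\pi)^2 - 1 + \tfrac{1}{\pi}(1-t/\pi)\sin 2t + \tfrac{1}{\pi^2}\sin^2 t\bigr]$, in the spirit you suggest; this is doable but does require some care.
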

\begin{proof}
  Combine the results in {\cite[Lemma E.5]{Buchanan2020-er}} with \Cref{phi_third_der_bound} to
  obtain the conclusion.
\end{proof}

\begin{lemma}[Corollaries of {\Cref{lem:aevo_properties}, stated in \cite[Lemma
  C.10]{Buchanan2020-er}}]\label{lem:phi_concave}%
  One has:
  \begin{enumerate}
    \item The function $\varphi$ is smooth on $(0, \pi)$, and (at least) $C^3$ on $[0, \pi]$. 
    \item For each $\ell = 0, 1, \cdots, L$, the functions $\varphi^{[\ell]}$ are nonnegative,
      strictly increasing, and concave (positive and strictly concave on $(0, \pi)$). 
    \item If $0 \le \ell < L$, the functions $\xi_{\ell}$ are nonnegative, strictly decreasing,
      and convex (positive and strictly convex on  $(0, \pi)$). 
    \item The function $\psi$ is smooth on $(0, \pi)$, $C^3$ on $[0, \pi]$, and is nonnegative,
      strictly decreasing, and convex.
  \end{enumerate}
\end{lemma}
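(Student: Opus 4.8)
The plan is to deduce all four assertions from the properties of the angle evolution function $\varphi$ collected in \Cref{lem:aevo_properties}, propagating them upward through the compositional structure: from $\varphi$ to its iterates $\varphi^{[\ell]}$, then to the $\xi_\ell$, then to $\psi$.

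\emph{Regularity (items 1 and 4).} By \Cref{lem:aevo_properties}, $\varphi$ is $C^\infty$ on $(0,\pi)$, and its first three derivatives have finite limits at $0$ and $\pi$ (the values $\dot\varphi(0),\ddot\varphi(0),\dddot\varphi(0)$ and $\dot\varphi(\pi),\ddot\varphi(\pi),\dddot\varphi(\pi)$ are listed there), so $\varphi\in C^3([0,\pi])$; moreover $\varphi(0)=0$, $\varphi(\pi)=\pi/2$ together with strict monotonicity give $\varphi([0,\pi])=[0,\pi/2]\subseteq[0,\pi]$, so the iterates $\varphi^{[\ell]}$ are well-defined self-maps of $[0,\pi]$ and, by the chain rule, each is $C^3$ on $[0,\pi]$ and $C^\infty$ on $(0,\pi)$. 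Since $\xi_\ell=\prod_{\ell'=\ell}^{L-1}(1-\varphi^{[\ell']}/\pi)$ and $\psi=\tfrac n2\sum_{\ell=0}^{L-1}\xi_\ell$ are finite products and sums of such functions, they inherit the same regularity, giving the smoothness claims in items 1 and 4.

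\emph{The iterated angle function (item 2), by induction on $\ell$.} For $\ell=0$, $\varphi^{[0]}=\Id$ is nonnegative, strictly increasing, and concave on $[0,\pi]$, the positivity and (degenerate) strict-concavity statements on $(0,\pi)$ holding trivially. For $\ell\ge 1$ write $\varphi^{[\ell]}=\varphi\circ\varphi^{[\ell-1]}$: nonnegativity and strict monotonicity are immediate from the corresponding properties of $\varphi$ (\Cref{lem:aevo_properties}, items 2--3) and of $\varphi^{[\ell-1]}$, and positivity on $(0,\pi)$ holds because $\varphi$ carries positive values to positive values; concavity follows from the standard fact that the composition of a concave nondecreasing function with a concave function is concave; strict concavity on $(0,\pi)$ uses in addition that $\varphi$ is strictly increasing and strictly concave there and that $\varphi^{[\ell-1]}$ maps $(0,\pi)$ into $(0,\pi)$ (indeed into $(0,\pi/2)$ once $\ell-1\ge 1$).

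\emph{Convexity of $\xi_\ell$ and $\psi$ (items 3 and 4).} This is the main point. Set $g_{\ell'}=1-\varphi^{[\ell']}/\pi$. By item 2 just proved and the bound $\varphi^{[\ell']}\le\pi$ on $[0,\pi]$ (in fact $\le\pi/2$ for $\ell'\ge1$), each $g_{\ell'}$ is nonnegative on $[0,\pi]$ (positive on $[0,\pi)$, hence on $(0,\pi)$), strictly decreasing, and convex, and is strictly convex on $(0,\pi)$ whenever $\ell'\ge1$. The obstacle — and the only step needing care — is the elementary but slightly less obvious fact that a finite product of nonnegative, nonincreasing, convex functions on an interval is again convex, and strictly convex at an interior point $\bar t$ provided all factors are positive at $\bar t$ and at least one is strictly convex there. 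I would prove this for two factors $f,g$ and induct: for $t_1<t_2$, $\lambda\in[0,1]$, $\bar t=\lambda t_1+(1-\lambda)t_2$, the identity
\begin{align*}
&\lambda f(t_1)g(t_1)+(1-\lambda)f(t_2)g(t_2)-\bigl(\lambda f(t_1)+(1-\lambda)f(t_2)\bigr)\bigl(\lambda g(t_1)+(1-\lambda)g(t_2)\bigr) \\
&\qquad=\lambda(1-\lambda)\bigl(f(t_1)-f(t_2)\bigr)\bigl(g(t_1)-g(t_2)\bigr)\ge 0
\end{align*}
(the sign following from $f,g$ both nonincreasing), combined with $f(\bar t)g(\bar t)\le\bigl(\lambda f(t_1)+(1-\lambda)f(t_2)\bigr)\bigl(\lambda g(t_1)+(1-\lambda)g(t_2)\bigr)$ from convexity and nonnegativity, yields convexity of $fg$; the strict statement comes from the last inequality once one factor is strictly convex at $\bar t$ and the other is positive there. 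Applying this to $\xi_\ell$ (for $\ell<L$ with $L\ge2$, the relevant regime, at least one factor $g_{\ell'}$, $\ell'\ge\max\{\ell,1\}$, is strictly convex on $(0,\pi)$ and all factors are positive there) shows $\xi_\ell$ is convex on $[0,\pi]$ and strictly convex on $(0,\pi)$; nonnegativity and strict decrease of $\xi_\ell$ follow since it is a product of nonnegative, strictly decreasing functions that remain positive on $[0,\pi)$. Finally $\psi=\tfrac n2\sum_{\ell=0}^{L-1}\xi_\ell$ is a nonnegative linear combination of functions that are nonnegative, strictly decreasing, and strictly convex on $(0,\pi)$, hence has all three properties; together with the regularity established above this completes item 4.
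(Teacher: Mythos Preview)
The paper does not give its own proof of this lemma---it is quoted from \cite{Buchanan2020-er} and ``presented without proof'' in \Cref{sec:aux}---so there is no in-paper argument to compare against. Your derivation is correct and follows the natural compositional route: properties of $\varphi$ propagate to $\varphi^{[\ell]}$ by induction, then to $\xi_\ell$ via the product-of-convex-decreasing-nonnegative-functions lemma you prove, then to $\psi$ by linearity. Two minor remarks: (i) the value $\dddot\varphi(\pi)$ is not actually listed in \Cref{lem:aevo_properties} (only $\dot\varphi(\pi)$ and $\ddot\varphi(\pi)$ are), though the paper asserts its existence separately in \Cref{phi_third_der_bound}; (ii) as you note, the strict concavity and strict convexity claims degenerate at $\ell=0$ (identity) and $L=1$ (single linear factor), so the parenthetical ``strictly concave/convex on $(0,\pi)$'' in the statement should be read modulo these edge cases. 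Neither affects the substance of your argument.
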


\begin{lemma}[{\cite[Lemma C.13]{Buchanan2020-er}}]%
\label{lem:phi_l_lb}
    If $\ell \in \N_0$, the iterated angle evolution function satisfies the estimate
    \begin{equation*}
        \varphi^{[\ell]}(t) \ge \frac{t}{1 + \ell t/\pi},
    \end{equation*}
\end{lemma}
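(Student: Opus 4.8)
The plan is to prove the estimate by induction on $\ell$, reducing everything to the single-step comparison $\varphi(t)\ge t/(1+t/\pi)$ on $[0,\pi]$. First I would record that the auxiliary map $\widetilde{\varphi}(t)=t/(1+t/\pi)$ iterates explicitly to $\widetilde{\varphi}^{[\ell]}(t)=t/(1+\ell t/\pi)$ (a routine Möbius-composition computation), which is exactly the right-hand side of the claimed bound; so it suffices to show $\varphi^{[\ell]}\ge\widetilde{\varphi}^{[\ell]}$ pointwise on $[0,\pi]$. The base case $\ell=0$ is the equality $\varphi^{[0]}=\Id=\widetilde{\varphi}^{[0]}$, and $\ell=1$ is the single-step bound addressed below. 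For the inductive step, assuming $\varphi^{[\ell-1]}\ge\widetilde{\varphi}^{[\ell-1]}$: since $\varphi$ is strictly increasing on $[0,\pi]$ (\Cref{lem:aevo_properties}) and $\widetilde{\varphi}^{[\ell-1]}(t)=t/(1+(\ell-1)t/\pi)\le t\le\pi$ lies in the domain of $\varphi$, monotonicity gives $\varphi^{[\ell]}(t)=\varphi(\varphi^{[\ell-1]}(t))\ge\varphi(\widetilde{\varphi}^{[\ell-1]}(t))\ge\widetilde{\varphi}(\widetilde{\varphi}^{[\ell-1]}(t))=\widetilde{\varphi}^{[\ell]}(t)$, where the last inequality is the single-step bound applied at the point $\widetilde{\varphi}^{[\ell-1]}(t)$.

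It remains to establish $\varphi(t)\ge t/(1+t/\pi)$ for $t\in[0,\pi]$. Since $\cos$ is strictly decreasing on $[0,\pi]$ and both $\varphi(t)$ and $t/(1+t/\pi)$ lie in $[0,\pi/2]$, using the closed form $\cos\varphi(t)=(1-t/\pi)\cos t+(1/\pi)\sin t$ this is equivalent to the one-variable inequality
\[
G(t)\;:=\;\cos\!\left(\frac{t}{1+t/\pi}\right)-\Bigl(1-\tfrac{t}{\pi}\Bigr)\cos t-\tfrac{1}{\pi}\sin t\;\ge\;0,\qquad t\in[0,\pi].
\]
One checks $G(0)=0$ and $G(\pi)=0$, so — unlike the analogous upper bound in \Cref{lemma:upper_bound} — one cannot merely show $G$ is monotone. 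Instead I would analyze $G'(t)=(1-t/\pi)\sin t-\sin\!\bigl(t/(1+t/\pi)\bigr)/(1+t/\pi)^2$, which satisfies $G'(0)=0$, expands to $\tfrac{2}{\pi}t^2+o(t^2)>0$ near $0$, and has $G'(\pi)=-\tfrac14<0$; showing that $G'$ has exactly one sign change on $(0,\pi)$, from $+$ to $-$, forces $G$ to rise then fall and hence $G\ge0$ on $[0,\pi]$. Equivalently, one may show that $t\mapsto 1/\varphi(t)-1/t$ is non-decreasing on $(0,\pi]$ (its limit at $0^+$ is $1/(3\pi)$ and its value at $\pi$ is $1/\pi$, so non-decreasingness yields $1/\varphi(t)-1/t\le1/\pi$), which via the explicit derivative $\dot\varphi(t)=(1-t/\pi)\sin t/\sin\varphi(t)$ reduces to $\varphi(t)^2\sin\varphi(t)\ge t^2(1-t/\pi)\sin t$; this last inequality can be attacked using concavity of $\varphi$, the bound $\varphi(t)\le t$, and $\sin\varphi(t)\ge(2/\pi)\varphi(t)$ (from concavity of $\sin$ and $\varphi(t)\le\pi/2$), splitting $[0,\pi/2]$ from $[\pi/2,\pi]$ to avoid losses near the origin.

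The main obstacle is precisely this single-step comparison: it is sharp at $t=\pi$, where both sides equal $\pi/2$, so the argument must be delicate near the right endpoint, and naive monotonicity or convexity of $G$ genuinely fails ($G$ is convex near $0$ but vanishes at both endpoints). The remaining pieces — the iterated formula for $\widetilde{\varphi}$ and the induction-plus-monotonicity step — are routine. In practice the cleanest route is to invoke \cite[Lemma C.13]{Buchanan2020-er}, which is exactly this statement, so that the present lemma follows immediately from prior work together with the elementary reduction above.
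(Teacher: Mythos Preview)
The paper does not prove this lemma at all: it appears in \Cref{sec:aux} (``Auxiliary Results''), which explicitly states that results there are ``reproduced from the literature for self-containedness, and are presented without proofs.'' Your closing recommendation --- to simply invoke \cite[Lemma C.13]{Buchanan2020-er} --- is therefore exactly what the paper does, and is the correct resolution.

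Your additional sketch (induction on $\ell$ via monotonicity of $\varphi$, reducing to the single-step bound $\varphi(t)\ge t/(1+t/\pi)$, with the M\"obius iteration formula for the comparison function) is structurally sound and mirrors the paper's argument for the companion upper bound in \Cref{lem:phi_l_ub}. However, the single-step inequality itself is left incomplete: you correctly identify that $G$ vanishes at both endpoints so monotonicity of $G$ cannot work, and you propose two reformulations (a single sign change of $G'$, or monotonicity of $1/\varphi(t)-1/t$ reducing to $\varphi^2\sin\varphi\ge t^2(1-t/\pi)\sin t$), but neither is actually verified --- both are asserted as things one ``would'' show. Since the inequality is sharp at $t=0$ and $t=\pi$, this step does require care, and your outline does not close the gap. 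This is fine given that the citation suffices, but if you intended the sketch as a self-contained alternative, the single-step comparison is where the real content lies and would need to be finished.
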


\begin{lemma}[{\cite[Lemma C.17]{Buchanan2020-er}}]%
\label{lem:iterphi_partic_vals}
    One has for every $\ell \in \{0, 1, \cdots, L\}$
    \begin{align*}
      \varphi^{[\ell]}(0) = 0; 
      \quad \dot{\varphi}^{[\ell]}(0) = 1; 
      \quad \ddot{\varphi}^{[\ell]}(0) = -\frac{2\ell}{3\pi}, 
    \end{align*}
    and for $\ell \in [L]$, 
    \begin{align*}
      \dot{\varphi}^{[\ell]}(\pi) = \ddot{\varphi}^{[\ell]}(\pi) = 0.
    \end{align*}
    Finally, we have $\dot{\varphi}^{[0]}(\pi) = 1$ and $\ddot{\varphi}^{[0]}(\pi) = 0$. 
\end{lemma}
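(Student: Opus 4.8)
The plan is to prove each identity by a short induction on $\ell$, using the chain rule together with the endpoint data for $\varphi,\dot\varphi,\ddot\varphi$ recorded in \Cref{lem:aevo_properties} (namely $\varphi(0)=0$, $\dot\varphi(0)=1$, $\ddot\varphi(0)=-2/(3\pi)$, $\varphi(\pi)=\pi/2$, and $\dot\varphi(\pi)=\ddot\varphi(\pi)=0$). The two structural facts I would invoke are the product representation of the first derivative of an iterate,
\begin{equation*}
  \dot\varphi^{[\ell]}(t)=\prod_{j=0}^{\ell-1}\dot\varphi\bigl(\varphi^{[j]}(t)\bigr),
\end{equation*}
which follows from the chain rule and is already used in this appendix (e.g.\ in the proof of \Cref{lem:phi_lower_bound}), and the recursion for the second derivative,
\begin{equation*}
  \ddot\varphi^{[\ell]}=\bigl(\dot\varphi^{[\ell-1]}\bigr)^2\bigl(\ddot\varphi\circ\varphi^{[\ell-1]}\bigr)+\ddot\varphi^{[\ell-1]}\bigl(\dot\varphi\circ\varphi^{[\ell-1]}\bigr),
\end{equation*}
which is written out in the proof of \Cref{lem:skel_ddot_ub}. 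In both cases the base case is $\varphi^{[0]}=\Id$, so $\varphi^{[0]}(t)=t$, $\dot\varphi^{[0]}\equiv 1$, and $\ddot\varphi^{[0]}\equiv 0$.

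First I would handle the evaluations at $0$. From $\varphi(0)=0$, induction immediately gives $\varphi^{[\ell]}(0)=0$. Substituting $t=0$ into the product representation and using $\dot\varphi(0)=1$, every factor equals $1$, so $\dot\varphi^{[\ell]}(0)=1$. Substituting $t=0$ into the second-derivative recursion and using the two facts just obtained together with $\ddot\varphi(0)=-2/(3\pi)$ yields $\ddot\varphi^{[\ell]}(0)=\ddot\varphi^{[\ell-1]}(0)-2/(3\pi)$; telescoping from $\ddot\varphi^{[0]}(0)=0$ gives $\ddot\varphi^{[\ell]}(0)=-2\ell/(3\pi)$.

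Next I would treat the evaluations at $\pi$, separating $\ell=0$ from $\ell\ge 1$. For $\ell=0$ the claims $\dot\varphi^{[0]}(\pi)=1$ and $\ddot\varphi^{[0]}(\pi)=0$ are immediate from $\varphi^{[0]}=\Id$. For $\ell\ge 1$, the $j=0$ factor in the product representation is $\dot\varphi\bigl(\varphi^{[0]}(\pi)\bigr)=\dot\varphi(\pi)=0$, so $\dot\varphi^{[\ell]}(\pi)=0$. For the second derivative I would induct on $\ell\ge 1$: the base case $\ell=1$ gives $\ddot\varphi^{[1]}(\pi)=\ddot\varphi(\pi)=0$ (using $\dot\varphi^{[0]}(\pi)=1$, $\varphi^{[0]}(\pi)=\pi$, $\ddot\varphi^{[0]}(\pi)=0$), and for $\ell\ge 2$ both terms on the right of the recursion at $t=\pi$ vanish --- the first because $\dot\varphi^{[\ell-1]}(\pi)=0$, the second because $\ddot\varphi^{[\ell-1]}(\pi)=0$ by the inductive hypothesis --- hence $\ddot\varphi^{[\ell]}(\pi)=0$.

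I do not expect a genuine obstacle: the whole argument is a mechanical application of the chain rule to data already available in \Cref{lem:aevo_properties}, and the differentiation formulas are all established in the preceding lemmas. The only point requiring care is the degenerate behaviour at the right endpoint, where $\dot\varphi(\pi)=0$ collapses the iterated derivatives and one must keep the $\ell=0$ case (in which no such factor appears) distinct from the $\ell\ge 1$ case.
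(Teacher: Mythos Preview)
Your proposal is correct and is the natural chain-rule induction one would expect. Note, however, that the paper does not actually supply a proof of this lemma: it appears in \Cref{sec:aux} (``Auxiliary Results''), which explicitly states that its contents are reproduced from the literature without proofs, and the lemma is cited from \cite[Lemma C.17]{Buchanan2020-er}. So there is no in-paper proof to compare against; your argument would serve perfectly well as a self-contained proof and matches the standard approach one would find in the cited reference.
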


\begin{lemma}[{\cite[Lemma C.18]{Buchanan2020-er}}]
  \label{lem:skeleton_iphi_ixi_derivatives}
  For first and second derivatives of $\ixi{\ell}$, one has
  \begin{equation*}
    \ixid{\ell} = -\pi\inv \sum_{\ell' = \ell}^{L-1} \ivphid{\ell'} 
    \prod^{L-1}_{
      \substack{
        \ell'' = \ell \\
        \ell'' \neq \ell'
      }
    } \bigl( 1 - \pi\inv \ivphi{\ell''} \bigr),
  \end{equation*}
  and
  \begin{align*} & 
    \ixidd{\ell} \\ & 
    = \frac{-1}{\pi} \sum_{\ell' = \ell}^{L-1}\left[
      \ivphidd{\ell'} \prod^{L-1}_{
      \substack{
        \ell'' = \ell \\
        \ell'' \neq \ell'
      }
    } 
    \bigl( 1
      - \pi\inv \ivphi{\ell''} \bigr)
      - \pi\inv \ivphid{\ell'} 
      \sum_{\substack{\ell'' = \ell \\ \ell'' \neq \ell'}}^{L-1} \ivphid{\ell''} \prod^{L-1}_{
      \substack{
        \ell''' = \ell \\
        \ell''' \neq \ell', \ell''' \neq \ell''
      }
    } \bigl( 1 - \pi\inv \ivphi{\ell'''} 
      \bigr)
    \right],
  \end{align*}
  where empty sums are interpreted as zero, and empty products as $1$.  In particular, one
  calculates
  \begin{equation*}
    \ixi{\ell}(0) = 1; \qquad
    \ixid{\ell}(0) = -\frac{L - \ell}{\pi}; \qquad
    \ixidd{\ell}(0) 
    = \frac{(L - \ell)(L - \ell - 1)}{\pi^{2}} +\frac{L(L-1) - \ell(\ell - 1)}{3\pi^2},
  \end{equation*}
  and
  \begin{equation*}
    \ixi{0}(\pi) = 0;
    \qquad \ixid{\ell}(\pi) = -\frac{1}{\pi} \ixi{1}(\pi) \indicator{\ell = 0};
    \qquad \ixidd{\ell}(\pi) = 0.
  \end{equation*}
\end{lemma}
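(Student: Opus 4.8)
The plan is to obtain the two derivative formulas by a direct application of the product (Leibniz) rule to the definition $\ixi{\ell}(t) = \prod_{\ell'=\ell}^{L-1}\paren{1 - \pi\inv \ivphi{\ell'}(t)}$, and then to read off the particular values by substituting the known base values of $\ivphi{\ell}$, $\ivphid{\ell}$, $\ivphidd{\ell}$ at $0$ and $\pi$ from \Cref{lem:iterphi_partic_vals}. I deliberately avoid the logarithmic-derivative shortcut $\ixid{\ell}/\ixi{\ell} = \ldots$, since $\ixi{\ell}$ may vanish on $[0,\pi]$ (indeed $\ixi{0}(\pi)=0$); the Leibniz-rule expressions are valid everywhere once we use that each $\ivphi{\ell'}$ is (at least) $C^2$ on $[0,\pi]$ by \Cref{lem:phi_concave}.

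First I would differentiate once: $\frac{d}{dt}\prod_{\ell'=\ell}^{L-1}\paren{1-\pi\inv\ivphi{\ell'}} = \sum_{\ell'=\ell}^{L-1}\paren{-\pi\inv\ivphid{\ell'}}\prod_{\ell''\ne\ell'}\paren{1-\pi\inv\ivphi{\ell''}}$, which is exactly the claimed formula for $\ixid{\ell}$. Differentiating this sum again, each summand splits into (i) a term where the factor $-\pi\inv\ivphid{\ell'}$ becomes $-\pi\inv\ivphidd{\ell'}$, with the product left intact, and (ii) a term where the product $\prod_{\ell''\ne\ell'}\paren{1-\pi\inv\ivphi{\ell''}}$ is itself differentiated by the Leibniz rule, producing the inner sum $-\pi\inv\sum_{\ell''\ne\ell'}\ivphid{\ell''}\prod_{\ell'''\ne\ell',\ell''}\paren{1-\pi\inv\ivphi{\ell'''}}$. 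Collecting (i) and (ii) gives precisely the stated expression for $\ixidd{\ell}$; the empty-sum/empty-product conventions cover the boundary cases $\ell=L-1$ and $\ell=L$ automatically.

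For the values at $0$: by \Cref{lem:iterphi_partic_vals}, $\ivphi{\ell}(0)=0$, $\ivphid{\ell}(0)=1$, $\ivphidd{\ell}(0)=-2\ell/(3\pi)$, so $\ixi{\ell}(0)=\prod_{\ell'=\ell}^{L-1}1=1$ and $\ixid{\ell}(0)=-\pi\inv\sum_{\ell'=\ell}^{L-1}1=-(L-\ell)/\pi$. In $\ixidd{\ell}(0)$ the ``second-derivative'' part evaluates to $\frac{2}{3\pi^2}\sum_{\ell'=\ell}^{L-1}\ell' = \frac{L(L-1)-\ell(\ell-1)}{3\pi^2}$ after summing the arithmetic progression $\sum_{\ell'=\ell}^{L-1}\ell'=\tfrac12\paren{L(L-1)-\ell(\ell-1)}$, while the ``cross'' part evaluates to $\pi^{-2}\sum_{\ell'\ne\ell''}1 = (L-\ell)(L-\ell-1)/\pi^2$; adding the two yields the claimed value. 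For the values at $\pi$: \Cref{lem:iterphi_partic_vals} gives $\ivphid{\ell}(\pi)=\ivphidd{\ell}(\pi)=0$ for $\ell\ge1$, $\ivphid{0}(\pi)=1$, $\ivphidd{0}(\pi)=0$, and since $\ivphi{0}(\pi)=\pi$ the $\ell'=0$ factor $1-\pi\inv\ivphi{0}(\pi)$ vanishes, so $\ixi{0}(\pi)=0$. In $\ixid{\ell}(\pi)$ every summand with $\ell'\ge1$ dies because $\ivphid{\ell'}(\pi)=0$; for $\ell=0$ only the $\ell'=0$ summand survives, equal to $-\pi\inv\cdot1\cdot\prod_{\ell''=1}^{L-1}\paren{1-\pi\inv\ivphi{\ell''}(\pi)}=-\pi\inv\ixi{1}(\pi)$, giving $\ixid{\ell}(\pi)=-\pi\inv\ixi{1}(\pi)\indicator{\ell=0}$. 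Finally, in $\ixidd{\ell}(\pi)$ the ``second-derivative'' part vanishes because $\ivphidd{\ell'}(\pi)=0$ for all relevant $\ell'$ (including $\ell'=0$), and the ``cross'' part vanishes because it requires two \emph{distinct} indices $\ell'\ne\ell''$ with $\ivphid{\ell'}(\pi)$ and $\ivphid{\ell''}(\pi)$ both nonzero, impossible since only index $0$ has nonzero first derivative at $\pi$; hence $\ixidd{\ell}(\pi)=0$.

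The argument is entirely mechanical; the only point demanding a little care is the bookkeeping of the cross terms in $\ixidd{\ell}$ --- one must index the double sum $\sum_{\ell'\ne\ell''}$ correctly so that at $t=0$ it counts exactly $(L-\ell)(L-\ell-1)$ ordered pairs, and at $t=\pi$ it is forced to be empty. I expect no genuine obstacle beyond this indexing vigilance.
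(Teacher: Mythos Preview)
Your proof is correct. The paper does not actually prove this lemma---it is reproduced from \cite{Buchanan2020-er} in the auxiliary-results section and explicitly ``presented without proof''---but your approach (direct Leibniz rule on the product, followed by substituting the values from \Cref{lem:iterphi_partic_vals}) is the natural one and all of your bookkeeping checks out, including the careful handling of the cross term at $t=\pi$ and the ordered-pair count at $t=0$.
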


\begin{lemma}[{\cite[Lemma C.20]{Buchanan2020-er}}]\label{lem:xi_l_old_ub}
    For all $\ell \in \{0, 1, \dots, L-1\}$, one has
    \begin{equation*}
        \xi_{\ell}(t) \le \frac{1 + \ell t/\pi}{1 + Lt/\pi}
    \end{equation*}
\end{lemma}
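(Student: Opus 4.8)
The statement follows by combining the sharp lower bound on the iterated angle evolution function from \Cref{lem:phi_l_lb} with a telescoping product and one elementary comparison. First I would recall that $\xi_\ell(t) = \prod_{\ell' = \ell}^{L-1}\bigl(1 - \varphi^{[\ell']}(t)/\pi\bigr)$, and that by \Cref{lem:phi_l_lb} one has $\varphi^{[\ell']}(t) \ge t/(1 + \ell' t/\pi)$ for every $t \in [0,\pi]$ and every $\ell' \in \N_0$. Hence each factor satisfies
\begin{equation*}
  0 \le 1 - \frac{\varphi^{[\ell']}(t)}{\pi} \le 1 - \frac{t/\pi}{1 + \ell' t/\pi} = \frac{1 + (\ell' - 1)t/\pi}{1 + \ell' t/\pi},
\end{equation*}
where nonnegativity of the middle expression uses $\varphi^{[\ell']} \le \pi$ on $[0,\pi]$ (indeed $\varphi^{[\ell']}\le \pi/2$ for $\ell'\ge 1$ by \Cref{lem:phi_concave}), and the upper-bounding quantity on the right is itself nonnegative on this range.

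Next, since every factor and every bounding quantity is nonnegative, I can multiply these factor-wise inequalities over $\ell' = \ell, \dots, L-1$; the right-hand side telescopes, giving
\begin{equation*}
  \xi_\ell(t) \le \prod_{\ell' = \ell}^{L-1} \frac{1 + (\ell' - 1)t/\pi}{1 + \ell' t/\pi} = \frac{1 + (\ell - 1)t/\pi}{1 + (L-1)t/\pi}.
\end{equation*}
It then remains to pass from the index pair $(\ell - 1, L-1)$ to $(\ell, L)$. This is a one-line comparison: clearing the (positive) denominators, the inequality $\frac{1 + (\ell - 1)t/\pi}{1 + (L-1)t/\pi} \le \frac{1 + \ell t/\pi}{1 + Lt/\pi}$ is equivalent to $\bigl(1 + (\ell-1)t/\pi\bigr)\bigl(1 + Lt/\pi\bigr) \le \bigl(1 + \ell t/\pi\bigr)\bigl(1 + (L-1)t/\pi\bigr)$, whose difference (left minus right), after cancellation of the constant and the degree-one terms in $t$, equals $(\ell - L)t^2/\pi^2 \le 0$ since $\ell \le L$ and $t \ge 0$. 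Chaining this with the previous display yields the claimed bound.

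\textbf{Main obstacle.} There is essentially no obstacle: the argument is a direct computation once \Cref{lem:phi_l_lb} is in hand. The only points that merit a sentence of care are (i) confirming that every factor $1 - \varphi^{[\ell']}(t)/\pi$ is nonnegative, so that multiplying the factor-wise inequalities is legitimate, and (ii) checking the final $(\ell-1,L-1)\to(\ell,L)$ comparison; both are dispatched above. (If one preferred not to cite \Cref{lem:phi_l_lb} as a black box, its conclusion $\varphi^{[\ell']}(t)\ge t/(1+\ell' t/\pi)$ follows by induction on $\ell'$ from the single-step estimate $\varphi(t)\ge t/(1+t/\pi)$ in \Cref{lem:aevo_properties}; but since \Cref{lem:phi_l_lb} is already available we simply invoke it.)
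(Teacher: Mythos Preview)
Your proof is correct. The paper itself does not provide a proof of this lemma: it is listed in \Cref{sec:aux} among auxiliary results ``reproduced from the literature for self-containedness, and \dots\ presented without proofs,'' with attribution to \cite[Lemma C.20]{Buchanan2020-er}. Your argument---apply the lower bound of \Cref{lem:phi_l_lb} factorwise, telescope the product to obtain $(1+(\ell-1)t/\pi)/(1+(L-1)t/\pi)$, then use the elementary cross-multiplication comparison to pass to $(1+\ell t/\pi)/(1+Lt/\pi)$---is exactly the natural route and is fully rigorous as written.
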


\begin{lemma}[{\cite[Lemma C.21]{Buchanan2020-er}}]\label{lem:xi_dot_l_old_ub}%
  One has
  \begin{equation*}
    \abs{\dot{\xi}_{\ell}(t)} \le 3 \frac{L -\ell}{1 + Lt/\pi} .
  \end{equation*}
\end{lemma}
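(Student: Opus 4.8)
The plan is to start from the closed form for $\dot{\xi}_\ell$ recorded in \Cref{lem:skeleton_iphi_ixi_derivatives}, namely $-\pi\dot{\xi}_\ell(t)=\sum_{\ell'=\ell}^{L-1}\dot{\varphi}^{[\ell']}(t)\prod_{\ell''=\ell,\,\ell''\neq\ell'}^{L-1}\bigl(1-\pi^{-1}\varphi^{[\ell'']}(t)\bigr)$. Every summand here is nonnegative, since $\dot{\varphi}^{[\ell']}\geq 0$ and each factor $1-\pi^{-1}\varphi^{[\ell'']}$ lies in $[0,1]$ on $[0,\pi]$ (indeed $\geq 1/2$ once $\ell''\geq 1$, because $\varphi^{[\ell'']}\leq\varphi(\pi)=\pi/2$ there, by \Cref{lem:aevo_properties}, \Cref{lem:phi_concave}). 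Hence $|\dot{\xi}_\ell(t)|$ equals that sum, and it suffices to bound each of the $L-\ell$ summands by roughly $\tfrac{3}{1+Lt/\pi}$ uniformly in $\ell'$, then add them up.

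For a single summand with $\ell'\geq 1$ I would write the leave‑one‑out product as $\xi_\ell(t)/(1-\pi^{-1}\varphi^{[\ell']}(t))\leq 2\,\xi_\ell(t)$ and combine this with $\xi_\ell(t)\leq\frac{1+\ell t/\pi}{1+Lt/\pi}$ from \Cref{lem:xi_l_old_ub}; the single $\ell'=0$ term (which occurs only when $\ell=0$) equals $\prod_{\ell''=1}^{L-1}(1-\pi^{-1}\varphi^{[\ell'']}(t))$ and is handled directly. The naive version of this only yields $|\dot{\xi}_\ell(t)|\lesssim (L-\ell)\tfrac{1+\ell t/\pi}{1+Lt/\pi}$, which lacks decay once $\ell t$ is large, so the real work is to make the spurious $1+\ell t/\pi$ cancel. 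For that I would not discard factors so freely: keep the tail factors and estimate them from above using the fluid lower bound $\varphi^{[\ell'']}(t)\geq\frac{t}{1+\ell'' t/\pi}$ of \Cref{lem:phi_l_lb}, so that the retained part of the leave‑one‑out product telescopes, $\prod(1-\pi^{-1}\varphi^{[\ell'']}(t))\leq\prod\frac{1+(\ell''-1)t/\pi}{1+\ell'' t/\pi}$; similarly expand $\dot{\varphi}^{[\ell']}=\prod_{m<\ell'}\dot\varphi(\varphi^{[m]}(t))$ (each factor $\leq 1$ by \Cref{lem:aevo_properties}) and pair $\dot\varphi(\varphi^{[m]}(t))$ with $(1-\pi^{-1}\varphi^{[m]}(t))$. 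Balancing the growth in $\ell'$ from the ``head'' against the decay from the ``tail'' leaves a per‑summand bound of the form $\frac{C}{1+Lt/\pi}$, and one then checks the absolute constant comes out at $3$. A convenient shortcut for part of the parameter range is to use \Cref{lem:phi_concave}: $\xi_\ell$ is nonnegative, decreasing and convex, so $-\dot\xi_\ell$ is decreasing, and $\dot{\xi}_\ell(0)=-(L-\ell)/\pi$ (\Cref{lem:skeleton_iphi_ixi_derivatives}) already gives $|\dot\xi_\ell(t)|\leq (L-\ell)/\pi\leq 3\frac{L-\ell}{1+Lt/\pi}$ whenever $1+Lt/\pi$ is at most an absolute constant, reducing everything to the large‑$Lt$ regime where the fluid estimates bite hardest.

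The main obstacle I expect is precisely this tension between the $(L-\ell)$ count of summands and the $\frac{1}{1+Lt/\pi}$ decay: the crude bounds ($\dot\varphi^{[\ell']}\leq 1$, all factors $\leq 1$, $\xi_\ell\leq 1$) each capture one feature but not both, and \Cref{lem:xi_l_old_ub} alone leaves an unwanted $1+\ell t/\pi$ in the numerator. Making the two cancel demands careful bookkeeping of how $\dot{\varphi}^{[\ell']}(t)$ and the tail of the leave‑one‑out product decay jointly in $\ell'$ and $t$, using the sharp fluid bounds of \Cref{lem:phi_l_lb} (and, if needed, \Cref{lem:phi_l_ub}); once that estimate is in place the summation over $\ell'$ is routine.
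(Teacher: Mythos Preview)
The paper itself does not prove this lemma: it is quoted from \cite[Lemma~C.21]{Buchanan2020-er} in \Cref{sec:aux}, whose preamble states that the results there are ``presented without proofs.'' So there is no in-paper argument to compare against; you are reconstructing a proof from scratch.

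Your diagnosis of the obstacle is right, but the proposed resolution does not close it. Telescoping the leave-one-out product via \Cref{lem:phi_l_lb} together with $\dot\varphi^{[\ell']}\le 1$ gives the per-summand bound
\[
\dot\varphi^{[\ell']}(t)\prod_{\ell''\ne\ell'}\bigl(1-\pi^{-1}\varphi^{[\ell'']}\bigr)\;\le\;\frac{(1+(\ell-1)t/\pi)\,(1+\ell' t/\pi)}{(1+(\ell'-1)t/\pi)\,(1+(L-1)t/\pi)},
\]
and summing still leaves $\pi|\dot\xi_\ell|\lesssim (L-\ell)\frac{1+\ell t/\pi}{1+Lt/\pi}$. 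The ``pairing'' of $\dot\varphi(\varphi^{[m]})$ with $(1-\varphi^{[m]}/\pi)$ cannot repair this: for $s\le\pi/2$ one has $\dot\varphi(s)=\tfrac{(1-s/\pi)\sin s}{\sin\varphi(s)}\ge 1-s/\pi$ (since $\varphi(s)\le s$ and $\sin$ is increasing there), so $\dot\varphi(\varphi^{[m]})$ is \emph{not} dominated by the telescoping ratio $\frac{1+(m-1)t/\pi}{1+mt/\pi}$, and the paired product does not contract faster than the unpaired one. What is genuinely needed is a \emph{quadratic} decay bound on $\dot\varphi^{[\ell']}$ itself, e.g.\ $\dot\varphi^{[\ell']}(t)\le C_1/(1+\ell' t/(3\pi))^2$ as in \Cref{lem:phi_lower_bound}: dropping the head factors, bounding the tail by $\xi_{\ell'+1}(t)\le\frac{1+(\ell'+1)t/\pi}{1+Lt/\pi}$ (\Cref{lem:xi_l_old_ub}), and using $\frac{1+(\ell'+1)t/\pi}{(1+\ell' t/(3\pi))^2}\le\frac{3}{1+\ell' t/(3\pi)}$ gives summand$_{\ell'}\lesssim\frac{1}{(1+\ell' t/(3\pi))(1+Lt/\pi)}$, which after summing produces $C\,\frac{L-\ell}{1+Lt/\pi}$. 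This is a stronger ingredient than \Cref{lem:phi_l_lb,lem:phi_l_ub}, and even with it the constant inherited from $C_1$ is far from $3$; your assertion ``one then checks the absolute constant comes out at $3$'' is not supported by the sketch.

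For what it is worth, the only places this lemma is invoked in the present paper (\Cref{lem:skel_dot_ub,lem:skel_r_ub}) use it for $\ell\in\{0,1,2,3\}$, where $1+\ell t/\pi\le 4$ and your naive bound already gives the right shape with a small absolute constant.
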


\begin{lemma}[{\cite[Lemma C.23]{Buchanan2020-er}}]
  \label{lem:xiddot_ub}
  There are absolute constants $c, C > 0$ such that for all $\ell \in \set{0, \dots, L-1}$, one
  has
  \begin{equation*}
    \abs*{
      \ixidd{\ell}
    }
    \leq
    C \frac{L(L-\ell)(1 + \ell \nu / \pi)}{(1 + c L \nu)^2}
    + C \frac{(L - \ell)^2}{(1 + cL\nu)(1 + c\ell\nu)}.
  \end{equation*}
\end{lemma}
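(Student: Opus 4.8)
The plan is to prove the bound on $\ixidd{\ell}$ by differentiating the product $\ixi{\ell} = \prod_{\ell' = \ell}^{L-1}(1 - \ivphi{\ell'}/\pi)$ twice via the logarithmic-derivative identity. Writing $G_\ell(\nu) = \frac{1}{\pi}\sum_{\ell'=\ell}^{L-1}\frac{\ivphid{\ell'}}{1-\ivphi{\ell'}/\pi}$, so that $\ixid{\ell} = -\ixi{\ell}\,G_\ell$ (with the $\ell'=0$ summand split off when $\ell=0$ to avoid the factor $1-t/\pi$ vanishing at $t=\pi$, exactly as in \eqref{eq:xi_dot_separate}), differentiating again gives $\ixidd{\ell} = \ixi{\ell}(G_\ell^2 - \dot{G}_\ell)$ with $\dot{G}_\ell = \frac1\pi\sum_{\ell'}\frac{\ivphidd{\ell'}}{1-\ivphi{\ell'}/\pi} + \frac1{\pi^2}\sum_{\ell'}\frac{(\ivphid{\ell'})^2}{(1-\ivphi{\ell'}/\pi)^2}$. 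Since $\ivphi{\ell'}(\nu) \le \pi/2$ for $\ell' \ge 1$ (by \Cref{lem:phi_concave}), every denominator $1-\ivphi{\ell'}/\pi$ lies in $[\tfrac12,1]$ and can be absorbed into absolute constants. Thus $\abs{\ixidd{\ell}} \lesssim \ixi{\ell}\,G_\ell^2 + \ixi{\ell}\sum_{\ell'=\ell}^{L-1}\abs{\ivphidd{\ell'}}$, where I have also folded $\sum(\ivphid{\ell'})^2 \le \bigl(\sum \ivphid{\ell'}\bigr)^2$ into the $G_\ell^2$ term. The problem reduces to three ingredients: a sharp bound on $\ixi{\ell}$, on $G_\ell$, and on $\sum_{\ell'=\ell}^{L-1}\abs{\ivphidd{\ell'}}$.

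For the first ingredient I use \Cref{lem:xi_l_old_ub}, $\ixi{\ell}(\nu) \le \frac{1+\ell\nu/\pi}{1+L\nu/\pi}$, which already supplies the factors $(1+\ell\nu/\pi)$ and $(1+L\nu/\pi)^{-1}$ appearing in the target. For $G_\ell$, the summand is $\le 2\ivphid{\ell'}$, and I combine the trivial bound $\sum_{\ell'=\ell}^{L-1}\ivphid{\ell'} \le L-\ell$ with the integral-test bound $\sum_{\ell'=\ell}^{\infty}\ivphid{\ell'}(\nu) \le \frac{C}{\nu(1+\ell\nu/(3\pi))}$ of \Cref{lemma:phi_dot_old_upper_bound}, both of which rest on the decay estimate $\ivphid{\ell'}(\nu) \le C(1+c\ell'\nu)^{-2}$ of \Cref{lem:phi_lower_bound}; together these yield $G_\ell(\nu) \lesssim \frac{L-\ell}{(1+cL\nu)(1+c\ell\nu)}$, the two denominator factors being exactly the two regimes $L\nu \le 1$ and $L\nu > 1$. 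For the second-derivative sum I bound $\ivphidd{\ell'}$ through the chain-rule identity $\ivphidd{\ell}/\ivphid{\ell} = \sum_{k=0}^{\ell-1}\ivphid{k}\,(\ddot\varphi/\dot\varphi)\circ\ivphi{k}$, together with $\abs{\ddot\varphi}\le 4$ and $\dot\varphi\ge\tfrac12$ on $[0,\pi/2]$ (\Cref{lemm:ddot_phi_prop}) and $\sum_{k<\ell}\ivphid{k} \le C\min\{\ell,1/\nu\}$, which gives $\abs{\ivphidd{\ell'}(\nu)} \lesssim \min\{\ell',1/\nu\}\,(1+c\ell'\nu)^{-2}$; a further integral-test estimate then bounds $\ixi{\ell}\sum_{\ell'=\ell}^{L-1}\abs{\ivphidd{\ell'}(\nu)}$ by a constant multiple of $\frac{L(L-\ell)(1+\ell\nu/\pi)}{(1+cL\nu)^2}$.

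Assembling: the first ingredient times the square of the second gives $\ixi{\ell}G_\ell^2 \lesssim \frac{1+\ell\nu/\pi}{1+L\nu/\pi}\cdot\frac{(L-\ell)^2}{(1+cL\nu)^2(1+c\ell\nu)^2} \lesssim \frac{(L-\ell)^2}{(1+cL\nu)(1+c\ell\nu)}$ after the crude simplifications $(1+\ell\nu/\pi)(1+c\ell\nu)^{-2}\lesssim (1+c\ell\nu)^{-1}$ and $(1+L\nu/\pi)^{-1}(1+cL\nu)^{-2}\lesssim (1+cL\nu)^{-1}$ (which merely require $c$ to be chosen small); this is the second term of the claimed bound. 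The first ingredient times the third gives the first term. Consistency of the single constant $c$ across the argument follows since it only needs to be small enough to replace every $(1+\cdot\,\nu/\pi)$ or $(1+\cdot\,\nu/(3\pi))$ factor coming from the cited lemmas by $(1+c\,\cdot\,\nu)$.

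The \textbf{main obstacle} is controlling $\sum_{\ell'=\ell}^{L-1}\abs{\ivphidd{\ell'}(\nu)}$ so that it is simultaneously sharp at $\nu=0$ — where $\ivphidd{\ell'}(0) = -2\ell'/(3\pi)$ by \Cref{lem:iterphi_partic_vals}, so the sum is of order $(L-\ell)(L+\ell)$, matching the leading $L(L-\ell)$ of the first term — and correct for large $\nu$, where only the $L-\ell$ surviving summands contribute (each of size $\lesssim \nu^{-1}(1+c\ell\nu)^{-2}$) and the factor $(1+\ell\nu/\pi)$ is genuinely needed to absorb them; this forces a case split on $L\nu\le 1$ versus $L\nu>1$ and a careful check that the first term of the target still dominates in the worst case $\ell \approx L$. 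Everything else is a routine integral-test computation of the kind already performed in this section (cf.\ the proofs of \Cref{lemma:phi_dot_old_upper_bound} and \Cref{lem:skel_ddot_ub}).
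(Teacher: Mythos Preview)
The paper does not actually prove this lemma: it is stated in \Cref{sec:aux} (``Auxiliary Results''), whose preamble says the results there ``are reproduced from the literature for self-containedness, and are presented without proofs.'' The lemma is cited verbatim from \cite[Lemma C.23]{Buchanan2020-er}, so there is no in-paper proof to compare against.

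That said, your plan is sound and is essentially the same machinery the paper deploys in the proof of \Cref{lem:skel_ddot_ub}, just pushed through with $\ell$-dependence retained rather than summed away. The logarithmic-derivative decomposition $\ixidd{\ell}=\ixi{\ell}(G_\ell^2-\dot G_\ell)$ is exactly the formula in \Cref{lem:skeleton_iphi_ixi_derivatives}; your three ingredients (the bound on $\ixi{\ell}$ from \Cref{lem:xi_l_old_ub}, the integral-test estimate $\sum_{\ell'\ge\ell}\ivphid{\ell'}\lesssim (L-\ell)/[(1+cL\nu)(1+c\ell\nu)]$, and the chain-rule bound $|\ivphidd{\ell'}|\lesssim\min\{\ell',1/\nu\}(1+c\ell'\nu)^{-2}$) are all correct and match the estimates already established or cited in the paper. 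The one place that needs care is the $\ell'=0$ summand when $\ell=0$ (since $1-\nu/\pi$ vanishes at $\nu=\pi$), but you correctly flag this and the paper's own split \eqref{eq:xi_dot_separate} handles it; the resulting extra terms are lower order. Your identification of the $\sum|\ivphidd{\ell'}|$ estimate as the only nontrivial step is accurate, and the case analysis you outline ($L\nu\lessgtr 1$) is the right way to close it.
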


\end{document}